\newtheorem{lemma}{Lemma}
\newtheorem*{lemma**}{Lemma\theoremnum}
\newenvironment{lemma*}[1][]{%
  % https://tex.stackexchange.com/a/53091/5764
  \edef\theoremnum{\if\relax\detokenize{#1}\relax\else~#1\fi}% Store theorem number
  \begin{lemma**}
}{%
  \end{lemma**}
}
\numberwithin{lemma}{subsection} % important bit
\newtheorem{theorem}{Theorem}
\newtheorem*{theorem**}{Theorem\theoremnum}
\newenvironment{theorem*}[1][]{%
  % https://tex.stackexchange.com/a/53091/5764
  \edef\theoremnum{\if\relax\detokenize{#1}\relax\else~#1\fi}% Store theorem number
  \begin{theorem**}
}{%
  \end{theorem**}
} 
\numberwithin{theorem}{subsection} % important bit
\newtheorem{defn}{Definition}
\newtheorem{corollary}{Corollary}[lemma]
\newtheorem*{corollary**}{Corollary\theoremnum}
\newenvironment{corollary*}[1][]{%
  % https://tex.stackexchange.com/a/53091/5764
  \edef\theoremnum{\if\relax\detokenize{#1}\relax\else~#1\fi}% Store theorem number
  \begin{corollary**}
}{%
  \end{corollary**}
}
\numberwithin{corollary}{subsection} % important bit
\newcommand{\cmark}{\ding{51}}%
\newcommand{\xmark}{\ding{55}}%
\definecolor{Gray}{gray}{0.9}
\DeclareMathOperator{\micro}{$\micro$}
\DeclareMathOperator*{\argmax}{arg\,max}
\DeclareMathOperator*{\argmin}{arg\,min}
\DeclareMathOperator{\Tr}{Tr}
\def\thickhrulefill{\leavevmode \leaders \hrule height 1ex \hfill \kern \z@}
\def\@makechapterhead#1{%
  \vspace*{120\p@}%120
  {\parindent \z@ \centering \reset@font
        \thickhrulefill\quad
        \scshape \@chapapp{} \thechapter
        \quad \thickhrulefill
        \par\nobreak
				\vspace*{10\p@}%
        \interlinepenalty\@M
        \hrule
        \vspace*{10\p@}%
        \Huge \bfseries #1\par\nobreak
        \par
        \vspace*{10\p@}%
        \hrule
    %\vskip 40\p@
    \vskip 100\p@
  }}
\def\@makeschapterhead#1{%
  %\vspace*{50\p@}%
  \vspace*{10\p@}%
  {\parindent \z@ \centering \reset@font
        \thickhrulefill
        \par\nobreak
        \vspace*{10\p@}%
        \interlinepenalty\@M
        \hrule
				\scshape
        \vspace*{10\p@}%
        \Huge \bfseries #1\par\nobreak
        \par
        \vspace*{10\p@}%
        \hrule
    %\vskip 40\p@
    \vskip 100\p@
  }}
\newcommand{\quotes}[1]{``#1''}
\newcommand*\diff{\mathop{}\!\mathrm{d}}
\algnewcommand{\Inputs}[1]{%
  \State \textbf{Inputs:}
  \Statex \hspace*{\algorithmicindent}\parbox[t]{.8\linewidth}{\raggedright #1}
}
\algnewcommand{\Initialize}[1]{%
  \State \textbf{Initialize:}
  \Statex \hspace*{\algorithmicindent}\parbox[t]{.8\linewidth}{\raggedright #1}
}
\def\BState{\State\hskip-\ALG@thistlm}
\newcommand{\multiline}[1]{%
  \begin{tabularx}{\dimexpr\linewidth-\ALG@thistlm}[t]{@{}X@{}}
    #1
  \end{tabularx}
}
\itshape\MakeUppercase{\chaptername\ \thechapter. \chaptertitle}][][] %\chaptertitle
	\headrule	\setfoot{}{\usepage}{}
\pgfplotsset{compat=1.17}
\begin{document}

\frontmatter

\begin{titlepage}

\begin{center}

	{\LARGE Nunzio Alexandro Letizia}\\
	\vspace{3.5mm}
	
	\begin{spacing}{2}
	{ \Huge \textbf{\textsc{Deep Learning Models for}}}\\
	{ \Huge \textbf{\textsc{Physical Layer Communications}}}\\
	\end{spacing}
	\vspace{3.5mm}
	
	{\LARGE \textsc{DOCTORAL THESIS}} \\

submitted in fulfilment of the requirements for the degree of \\
Doktor der Technischen Wissenschaften\\
	-------------------------------------- \\
Alpen-Adria-Universit{\"a}t Klagenfurt\\
Fakult{\"a}t f{\"u}r Technische Wissenschaften\\
	%\large{submitted in fulfilment of the requirements for the degree of}\\
	%\large{Doktorin Technischen Wissenschaften (Dr. techn.)}\\
	%-------------------------------------- \\
  %\large{Alpen-Adria-Universit{\"a}t Klagenfurt}\\
  %\large{Fakult{\"a}t f{\"u}r Technische Wissenschaften}\\
  
\vspace{3mm}
\end{center}
	\begin{center}
			\includegraphics[width=0.37\textwidth]{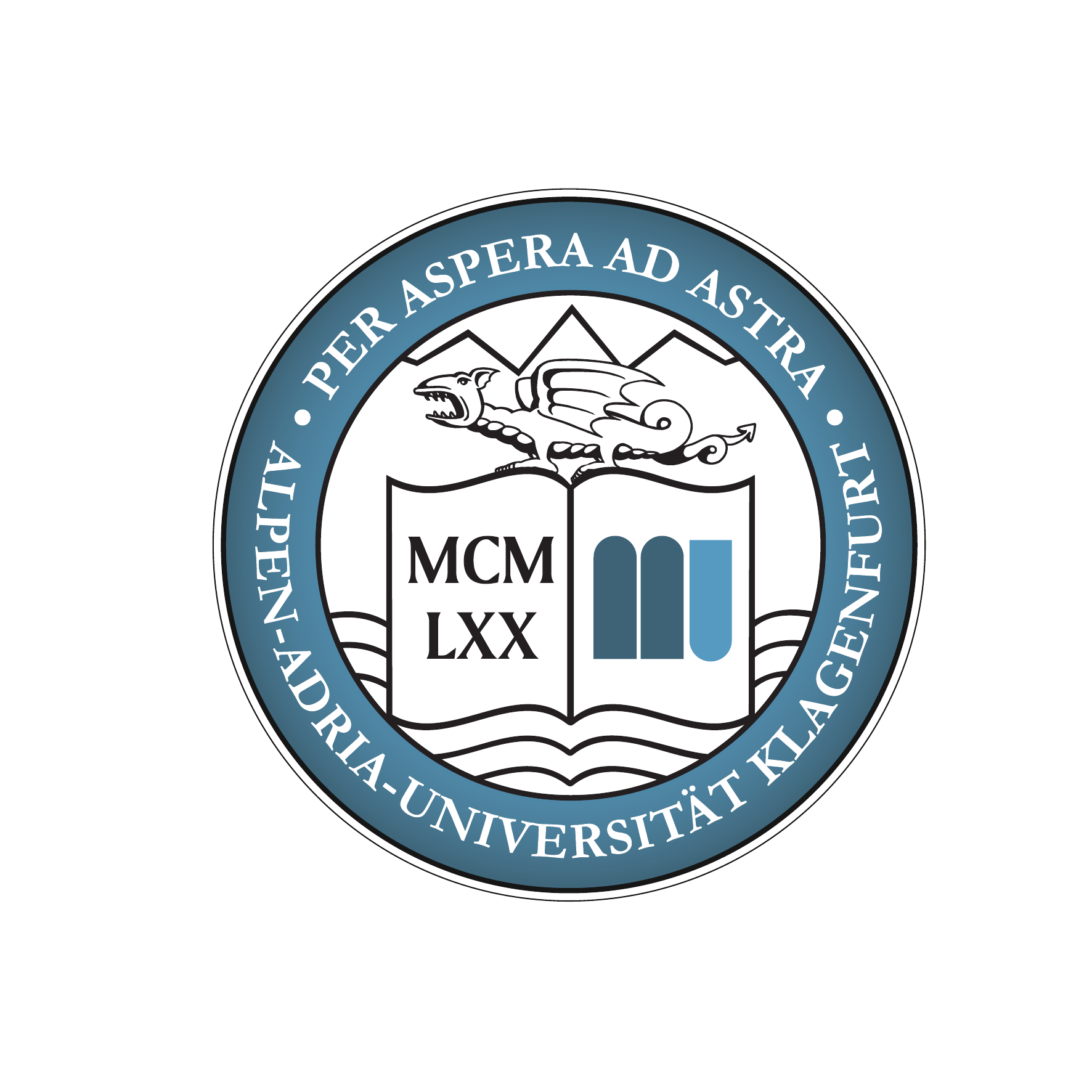}			
	\end{center}

\begin{multicols}{2}
    \noindent \large{\textbf{Supervisor}}\\
    \noindent Univ.--Prof. Dr. Andrea M. Tonello\\
    \noindent \normalsize{Alpen-Adria-Universit{\"a}t Klagenfurt}\\
    \noindent \normalsize{Institut f{\"u}r Vernetzte und Eingebettete Systeme}\\

    \noindent \large{\textbf{Co-supervisor}}\\
    \noindent Univ.--Prof. Dr. Bernhard Rinner\\
    \noindent \normalsize{Alpen-Adria-Universit{\"a}t Klagenfurt}\\
    \noindent \normalsize{Institut f{\"u}r Vernetzte und Eingebettete Systeme}\\
\end{multicols}
\vspace{1mm}

\begin{multicols}{2}
    \noindent \large{\textbf{Evaluator}}\\
    \noindent Prof. Dr.-Ing. Stephan ten Brink\\
    \noindent \normalsize{Universit{\"a}t Stuttgart}\\
    \noindent \normalsize{Institute of Telecommunications}\\
    
    \noindent \large{\textbf{Evaluator}}\\
    \noindent Dr. Alex Dytso\\
    \noindent \normalsize{Qualcomm Engineering}\\
    \noindent \normalsize{Princeton University}\\

\end{multicols}

\vspace{6mm}
\centering\large{Klagenfurt, October 2024}

\end{titlepage}

\pagestyle{plain} % The header is empty; the footer contains the page number.

\newpage\null\thispagestyle{empty}\newpage
\newpage

\section*{Affidavit}
\label{sec:affidavit}

I hereby declare in lieu of an oath that
\begin{itemize}
	\item the submitted academic paper is entirely my own work and that no auxiliary materials have been used other than those indicated,
	\item I have fully disclosed all assistance received from third parties during the process of writing the thesis, including any significant advice from supervisors,
	\item any contents taken from the works of third parties or my own works that have been included either literally or in spirit have been appropriately marked and the respective source of the information has been clearly identified with precise bibliographical references (e.g. in footnotes),
     \item I have fully and truthfully declared the use of generative models (Artificial Intelligence, e.g. ChatGPT, Grammarly Go, Midjourney) including the product version,
	\item to date, I have not submitted this paper to an examining authority either in Austria or abroad and that
	\item when passing on copies of the academic thesis (e.g. in printed or digital form), I will ensure that each copy is fully consistent with the submitted digital version.
\end{itemize}
I am aware that a declaration contrary to the facts will have legal consequences.

\vspace{2cm}

Nunzio Alexandro Letizia m.p.

\vspace{2cm}

Klagenfurt, July 2024

\newpage
\section*{Dedication}
\label{sec:dedication}

This thesis is dedicated to:

\begin{itemize}
    \item Anna M., I am profoundly grateful for her endless patience, support, and belief in me;
    \item my family, whose unwavering encouragement has always been unconditional;
    \item friends and colleagues, being part of a group of wonderful people has given me extra motivation and lots of smiles. To future PhD students, try to exchange and share ideas, do not isolate yourself.
\end{itemize}

\newpage\null\thispagestyle{empty}\newpage
\newpage

\section*{Abstract}
\label{sec:abstract}
% 150 - 300 words long
\begin{comment}
% Abstract 183w
The increased availability of data and computing resources has enabled researchers to successfully adopt machine learning (ML) techniques and make significant contributions in several engineering areas. ML and in particular deep learning (DL) algorithms have shown to perform better in tasks where a physical bottom-up description of the phenomenon is lacking and/or is mathematically intractable. Indeed, they take advantage of the observations of natural phenomena to automatically acquire knowledge and learn internal relations.
Despite the historical model-based mindset, communications engineering recently started shifting the focus towards top-down data-driven learning models, especially in domains such as channel modeling and physical layer design, where in most of the cases no general optimal strategies are known.

In this thesis, we aim at solving some fundamental open challenges in physical layer communications exploiting new DL paradigms. In particular, we mathematically formulate, under ML terms, classic problems such as channel capacity and optimal coding-decoding schemes, for any arbitrary communication medium. 
We design and develop the architecture, algorithm and code necessary to train the equivalent DL model, and finally, we  propose novel solutions to long-standing problems in the field. 
\end{comment}

% Abstract 302w
The recent surge in data availability and computing resources has empowered researchers to harness the potential of machine learning (ML) techniques, leading to significant advancements across various engineering fields. ML, particularly deep learning (DL) algorithms, excel in tackling problems where traditional physical modeling proves inadequate or mathematically intractable. In fact, DL leverages real-world observations of phenomena to automatically acquire knowledge and uncover inherent relationships within the data.

For the past decades, communication engineering has thrived on a foundation of meticulously crafted physical and mathematical models, which have played a pivotal role in groundbreaking achievements, forming the very backbone of our modern technological landscape. From the efficient transmission of information across the globe to the development of reliable communication protocols, these models have been instrumental in shaping the way we connect and share information. 
However, a recent paradigm shift is underway, with a growing focus on top-down, data-driven learning models. This shift is particularly evident in areas like channel modeling and physical layer design, where finding universally optimal strategies remains an open challenge.

This thesis aims to address some of the most crucial unsolved challenges within physical layer communications by leveraging innovative DL paradigms. We mathematically reformulate classic problems, such as channel capacity and optimal coding-decoding schemes, using the ML language, and we attempt to solve them for any arbitrary communication medium.
The envisioned methodology involves designing and developing the architecture, algorithms, and necessary code to train the corresponding DL model. By leveraging powerful tools, we propose novel solutions to long-standing problems in the field. For instance, one specific challenge we tackle is the construction of capacity-achieving input distributions in noisy channels. Our approach has the potential to significantly improve channel coding compared to traditional methods, effectively leading to a reduction in bit error rate for data transmission.

We believe this research holds significant promise for the development of next-generation communication systems and can potentially influence the design principles in future 6G standards and beyond.

%\newpage\null\thispagestyle{empty}\newpage
%\include{acknow}

\tableofcontents
\listoffigures
\listoftables
\section*{Abbreviations}
\label{sec:abbreviations}
List of abbreviations used in the thesis:

\begin{multicols}{2}

	\fontsize{7}{7}\selectfont
	\begin{itemize}
		\setlength\itemsep{0em}
  		\item ACG - Average Channel Gain
		\item AE - Autoencoder
		\item APIN - APeriodic Impulsive Noise
            \item AWGN - Additive White Gaussian Noise
		\item BER - Bit Error Rate
            \item BLER - Block Error Rate
		\item BPSK - Binary Phase Shift Keying
            \item CAE - Contractive Autoencoder
            \item CB - Coherence Bandwidth
            \item CBGN - Colored BackGround Noise
		\item CDF - Cumulative Distribution Function
            \item CNN - Convolutional Neural Network
            \item CODINE - Copula Density Neural Estimation
            \item CORTICAL - Cooperative Channel Capacity Learning
		\item CTF - Channel Transfer Function
            \item DAE - Denoising Autoencoder
            \item DCGAN - Deep Convolutional Generative Adversarial Network
		\item DFT - Discrete Fourier Transform
            \item DIME - Discriminative Mutual Information Estimation
            \item DL - Deep Learning
            \item E-FMS - Embedded Flight Management System
            \item ELBO - Evidence Lower Bound
            \item FID - Frèchet Inception Distance
            \item FVBN - Fully Visible Belief Network
		\item GAN - Generative Adversarial Network
            \item GLOW - Generative Flow
            \item GP - Gaussian Process
            \item GPT - Generative Pre-trained Transformer
            \item HD - Hellinger Distance
            \item HIF - High Impedance Fault
            \item HVAE - Hierarchical Variational Autoencoder
            \item IC - Individual Channel
            \item IE - Impedance Entanglement
            \item IM - Impedance Modulation
            \item IS - Inception Score
            \item JS - Jensen-Shannon
            \item KID - Kernel Inception Distance
            \item KL - Kullback-Leibler
            \item KURT - Kurtosis
            \item LIF - Low Impedance Fault
            \item LTI - Linear Time-Invariant
            \item LVM - Latent Variable Model
            \item MAP - Maximum A-Posteriori
  		\item MaxL - Maximum Likelihood
            \item MCMC - Markov Chain Monte Carlo
            \item MDP - Markov Decision Process
            \item MHVAE - Markovian Hierarchical Variational Autoencoder
		\item MI - Mutual Information
		\item MIMO - Multiple Input Multiple Output
            \item MIND - Mutual Information Neural Decoder
  		\item ML - Machine Learning
            \item MLP - Multi Layer Perceptron
            \item MMD - Maximum Mean Discrepancy
            \item NBN - Narrow-Band Noise
            \item NICE - Non-linear Independent Components Estimation
		\item NN - Neural Network
            \item NVP - Non-Volume Preserving
		\item OFDM - Orthogonal Frequency Division Multiplexing
		\item PCA - Principal Component Analysis
		\item PDF - Probability Density Function
		\item PINS - Periodic Impulsive Noise Synchronous to mainS
		\item PINAS - Periodic Impulsive Noise Asynchronous to mainS
		\item PLC - Power Line Communication
		\item PLN - Power Line Network	
		\item PSD - Power Spectral Density
            \item PSK - Phase-Shift Keying
            \item QAM - Quadrature Amplitude Modulation
            \item ReLU - Rectified Linear Unit
            \item RGB - Red Green Blue
            \item RKHS - Reproducing Kernel Hilbert Space
            \item RL - Reinforcement Learning
            \item RMS-DS - Root Mean Square Delay Spread
            \item RST - Recursive Smooth Trajectory
		\item RX - Receiver
            \item SGD - Stochastic Gradient Descent
            \item SGN - Segmented Generative Network
		\item SISO - Single Input Single Output
            \item SKW - Skewness
		\item SNR - Signal to Noise Ratio
		\item SOM - Self-Organizing Map
            \item STD - Standard Deviation
            \item STFT - Short Time Fourier Transform
            \item TV - Total Variation
		\item TX - Transmitter
            \item UAV - Unmanned Aerial Vehicle
            \item VAE - Variational Autoencoder
            \item VM - Voltage Modulation
	\end{itemize}
\end{multicols}
\normalsize

\mainmatter
%\clearpage

\chapter{Introduction}
%\addcontentsline{toc}{chapter}{Introduction} 
%\thispagestyle{empty}
\label{sec:intro}

% scientific contributions section
% start with high-level sota DL for communications with references to future sections in this thesis, and then for each chapter (in bullet), explain contribution with references

% outline of the thesis as it is now more or less

\section{Motivation}
Communications engineering is the field studying the design, implementation, and optimization of systems that transmit information from one point to another. 
The exchange of information can occur over wired connections (e.g., twisted pairs, fiber optics), wireless channels (e.g., cellular networks), or a combination of both. 

Communication systems have revolutionized the way we interact and access information. This field has played a pivotal role in driving economic growth, fostering global collaboration and shaping social landscapes. However, the ever-increasing demand for data transmission poses significant challenges. The relentless growth of data traffic requires constant innovation to overcome limitations in signal degradation, bandwidth, and interference.

The physical layer is the first and lowest layer of the Open Systems Interconnection (OSI) model. It represents the heart of a communication system as it is responsible for the raw transmission of bits over a physical medium. This layer handles the encoding and decoding of data, the modulation and demodulation of signals, and it deals with several challenges posed by the medium such as attenuation, noise, and interference.  
A robust physical layer is essential for ensuring reliable data transmission, maximizing bandwidth, and supporting the increasing demands of modern communication systems.

Innovation in the physical layer design necessitates to find solutions to complex problems such as channel impairments, where real-world channels are far from ideal models and the transmitted signals suffer from fading, path loss, multi-path propagation, and various sources of noise. Another relevant problem is represented by bandwidth limitations, since the available spectrum for wireless communication is a finite resource. As demand grows, engineers must find alternative and creative ways to maximize data rates within constrained bandwidth allocations.
Interference also constitutes a serious challenge in crowded wireless environments, as signals from various sources can overlap and corrupt each other, making it difficult to isolate the desired transmission.
Lastly, power and complexity constraints set strict hardware requirements, especially in mobile and battery-powered devices. Therefore, achieving high-performance communication under multiple limitations requires careful optimization and innovative techniques.

Traditional approaches to physical layer design rely on complex mathematical models and hand-crafted signal processing algorithms. While effective, these methods are often sub-optimal and can struggle to fully adapt to the dynamic and unpredictable nature of communication channels. Deep learning (DL) offers a valid and compelling alternative. Its ability to learn complex patterns and relationships directly from data makes it a powerful tool for tackling physical layer challenges.

The traditional design approach was firstly proposed by Shannon \cite{Shannon1948}. He suggested to represent any communication system as a chain of mathematically modeled blocks, namely, the transmitter, the channel, and the receiver. 
The transmitter comprises several interconnected blocks, a source coder, a channel coder and a signal modulator. The channel is usually described by a transfer function and an additive noise term. The receiver, instead, serves as the counterpart to the transmitter and is often broken down into stages mirroring those of the transmitter, i.e., signal demodulation, channel decoding, and source decoding.

\begin{figure}
	\centering
	\includegraphics[scale=0.6]{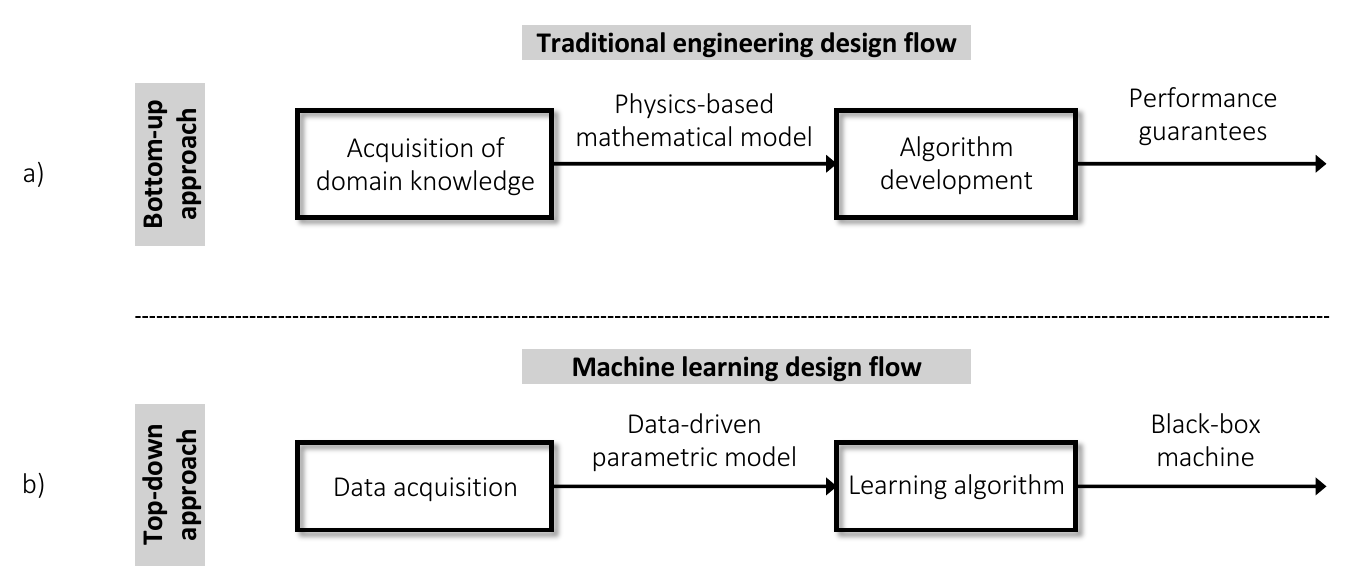}
	\caption{Different engineering design approaches: a) bottom-up and b) top-down.}
	\label{fig:intro_approach}
\end{figure}

Three generations of scientists and engineers grew up with this mathematical mindset which offered excellent tools to acquire domain knowledge and use it to create models for each constituent block, thereby rendering the overall behavior tractable. Such a framework intrinsically has the advantage that each component can be individually studied and optimized. We would refer to this approach as physical and \textit{bottom-up} and it is represented in Fig.~\ref{fig:intro_approach}a.

From an epistemological point of view, the mathematical theory of communications is based on knowledge coming from \textit{a-priori} hypotheses and intuitions. On the contrary, \textit{a-posteriori} knowledge is created by what is known and observed from experience, therefore generated via an empirical analysis and a \textit{top-down} approach. In his \textit{Critique of Pure Reason} \cite{Kant}, Kant differentiates between two types of knowledge: knowledge of the structure of time and space and their relationships, is a-priori knowledge; knowledge acquired from experience is a-posteriori knowledge; most of our knowledge comes from the process of learning and observing phenomena, and without the a-priori one it would be impossible to reach the true knowledge. 
In this respect, the broad field of machine learning (ML) \cite{Bishop2006,DLBook} can be considered an implementation by humans of techniques in machines to acquire knowledge from a-posteriori observations of realizations of natural phenomena, essentially by translating the complexity of real-world data into the complexity of the model, see Fig.~\ref{fig:intro_approach}b. 

ML is bringing new lymph in the domain of communication systems modeling, design, optimization, and management. It provides a paradigm shift: rather than only concentrating on a physical bottom-up description of the communication scheme, ML aims to learn and capture information from a collection of data, to derive the input-output relations of the observed system. Such learning strategy can be effectively adopted to cover multitude applications across all three fundamental protocol stacks: the physical layer as we introduced, but also the MAC layer and the network layer \cite{Simeone2018}.

\begin{figure}
	\centering
	\includegraphics[scale=0.65]{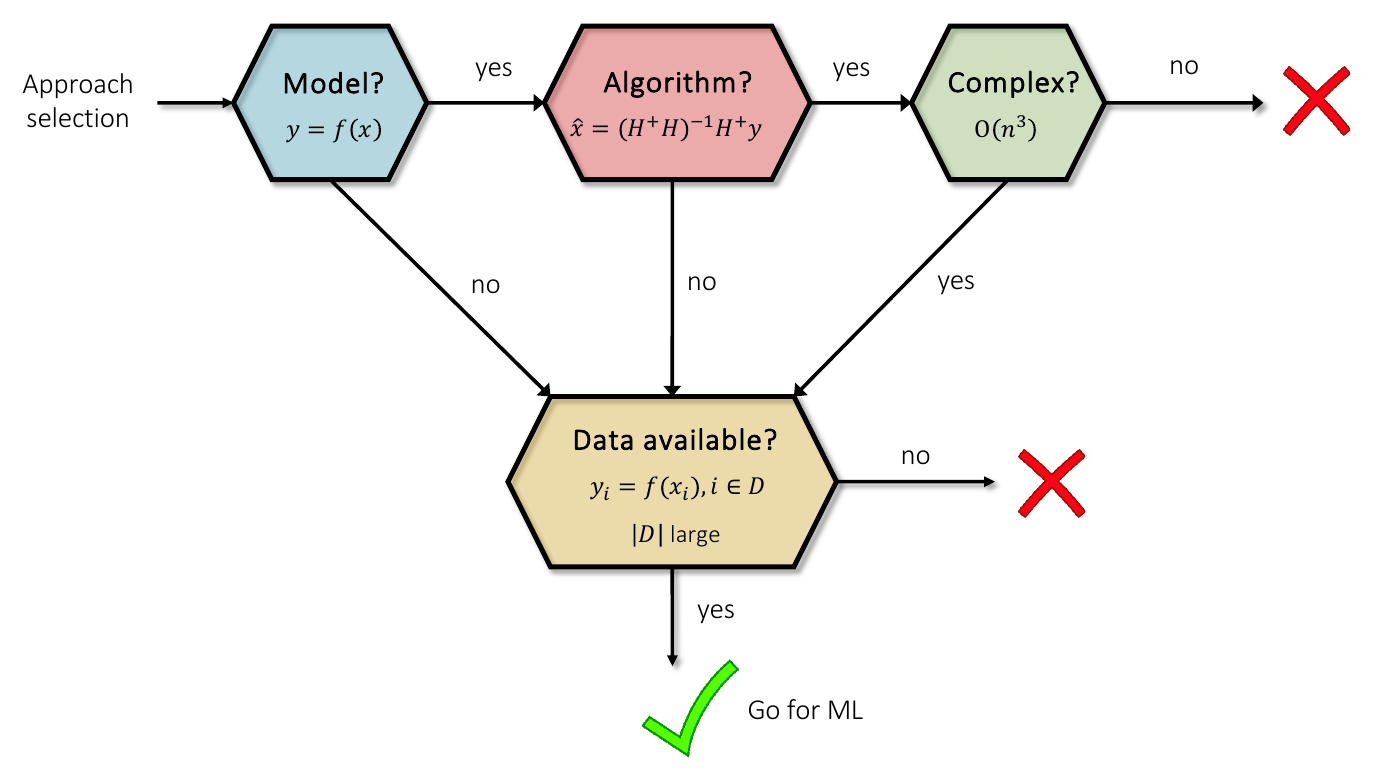}
	\caption{Flowchart illustrating when to use ML for communication engineering design.}
	\label{fig:intro_when_ml}
\end{figure}

Given the current shift but also hype around ML, it is important to remark that a top-down data-driven approach should not always be pursued regardless the specifications of the problem. In fact, common sense guidelines \cite{Simeone2018} explaining \textit{when to use ML} tools for communications, and more in general, engineering tasks (see Fig.~\ref{fig:intro_when_ml}), suggest to adopt ML when a physics-based model (e.g., channel model) is lacking or the existing algorithms are too complex to be implemented (e.g., decoding strategy). Moreover, a sufficiently large training dataset shall be collected while the phenomenon being learned (e.g., noise) is stationary during the observation period. 

Finally, ML techniques can complement traditional approaches. For instance, ML can be integrated with analytical models or simulation-based methods to enhance performance and study limitations. In fact, the integration of ML and DL techniques with standard approaches in communication engineering offers a promising avenue for addressing the complexities and challenges of modern communication systems. By leveraging the complementary strengths of both methodologies, we could enhance system performance, adaptability, and reliability across various applications. 

\section{Scientific contributions}
Physical layer design encompasses multiple tasks where a data-driven approach can be pursued. 
At the transmitter side, DL has recently been applied to tackle tasks such as source and channel coding \cite{8461983}, signal encoding \cite{Dorner2018}, analog-to-digital (ADC) conversion and digital-to-analog conversion (DAC) \cite{8807322}. For more details, we refer to Sec.~\ref{sec:autoencoders_related} and Sec.~\ref{sec:cortical_intro}.

At the receiver side, channel detection and decoding represents, perhaps, the most immediate DL application since it can be treated as a standard classification task \cite{Nachmani2018}. Other tasks which benefit from data-driven models are channel estimation and equalization \cite{8353153}, modulation classification \cite{Oshea2017}, localization \cite{8482358}, interference and noise mitigation \cite{9802083}. For more details, we refer to Sec.~\ref{sec:mind_introduction} and Sec.~\ref{sec:autoencoders_related}.

Characterization and modeling of the communication medium are fundamental prerequisites for effective physical layer design. In fact, modulation schemes, coding techniques, and equalization algorithms are derived from a deep understanding of the medium, which can be aided by DL techniques \cite{8644250}. Equally important is the ability to synthesize close-to-real channel samples \cite{8792076} to effectively simulate, test and optimize the communication chain. In this respect, autoencoders have been proposed as fundamental tool for the end-to-end design of the whole chain \cite{Oshea2017}. We refer to Sec.~\ref{sec:gan_ch_synthesis} and Sec.~\ref{sec:autoencoders_related} for an in-depth analysis. 

While several interesting initial results have been proposed, there remains a vast landscape of unexplored possibilities, presenting a fertile ground for future research and innovation in the field.

This dissertation describes a journey towards optimal channel coding and decoding schemes via novel DL formulations. During this journey, we also borrow concepts and tools from probability and statistical learning theory, variational inference, signal processing, and information theory.

Concretely, we present contributions in five main research areas highly interconnected, which naturally cluster the chapters of this thesis into parts.
\begin{enumerate}
    \item \textbf{Density Estimation}. The first part utilizes the concept of copula to analyze data dependence and learn the underlying probability density function (PDF).
In this context, we will introduce:
\begin{itemize}
    \item a segmented methodology to implicitly model any copula and sample from it \cite{Letizia2020};
    \item a variational approach to explicitly model any copula density function and subsequently any PDF \cite{letizia2022copula}.
\end{itemize}
    \item \textbf{Physical Layer Design}. The second part focuses on analyzing the building blocks of a communication system and combining them into optimal autoencoding techniques.  
In this regard, our contributions are:
\begin{itemize}
    \item deep generative models able to synthesize samples from any real-world medium for which a dataset is available \cite{ML_PLC, Letizia2019a};
    \item a neural decoder that uses the mutual information (MI) as decoding criterion and sets apart from the standard classification approach \cite{tonello2022mind};
    \item the first autoencoder that considers the channel capacity as the optimal coding criterion \cite{Letizia2021}.
\end{itemize}
    \item \textbf{Mutual Information and Channel Capacity}. The third part addresses the complex and well-known channel capacity problem from a learning point of view. In particular, we propose:
\begin{itemize}
    \item a novel family of MI neural estimators based on the variational representation of the $f$-divergence \cite{f-DIME};
    \item the first cooperative framework that acts as a neural capacity estimator, referred to as CORTICAL \cite{CORTICAL};
    \item the code for training and validating CORTICAL on a set of non-Shannon representative scenarios \cite{CORTICAL_github}.
\end{itemize}
\item \textbf{Power Line Communication (PLC)}. The fourth part validates the proposed theoretical contributions in the context of PLC. In summary, we design and train:
\begin{itemize}
    \item the first deep generative networks capable of modeling the PLC channel and noise \cite{RighiniLetizia2019, Letizia2019a};
    \item a CORTICAL framework for a channel corrupted by additive Nakagami-$m$ noise under an average power constraint \cite{LetiziaIsplc2021}.
\end{itemize}
\item \textbf{Interpolation}. 
While the core of the thesis explores the theoretical foundations and applications of DL within communications systems, the final part shifts focus and studies the problem of determining valid fitting functions given a set of data points. Interpolation techniques are indeed crucial not only for data analysis and visualization but they also play a direct role in tasks such as channel modeling and signal processing. In this regard, our contribution is:
\begin{itemize}
    \item a novel recursive approach which uses polynomial and rational functions as interpolants \cite{LetiziaRobotics, 9525383}.
    \end{itemize}
\end{enumerate}

\section{Thesis outline}
The objective of this thesis is to reformulate and solve classical problems within physical layer communications, using techniques that arise in the emerging field of DL. 

In Ch.~\ref{sec:fundamentals}, we provide the basic notation and terminology used in this thesis and briefly review the current state-of-the-art statistical learning models. In particular, we distinguish between supervised and unsupervised learning approaches, as they are essential to properly categorize the type of problem in hand. We discuss about different estimation techniques and statistical quantities to measure the distances between probability distributions. We also briefly summarize over DL architectures exploited in this research and introduce generative models concepts, focusing on generative adversarial networks. 

Ch.~\ref{sec:copulas} studies the fundamental properties that dependent random variables share in terms of their joint distribution. For this purpose, the concept of copula is introduced and utilized inside a DL framework. We offer a methodology to generate new data via copula sampling, referred to as segmented generative networks, and we also propose a novel approach to estimate the PDF thanks to a copula-based discriminative formulation, referred to as CODINE.

Motivated by a lack of models and the existence of annotated datasets, In Ch.~\ref{sec:medium}, we present a full DL solution to the long-standing problem of medium modeling. In particular, we utilize generative models to learn the channel and noise distributions and generate new samples with the same statistical characteristics. The envisioned methodology is extremely general as it can be applied to any stationary communication channel. 

With both the generative and discriminative approaches at hand, Ch.~\ref{sec:decoder} reformulates the decoding problem in terms of discriminative learning and proposes a neural architecture for optimal decoding in an unknown channel. The learning process is capable of providing at convergence an estimate of the a-posteriori PDF, and thus of the MI of the input-output channel pair. We utilize the latter as decoding metric. Our top-down approach, referred to as MIND, is also motivated in this case due to either a lack of model for the classical maximum a-posteriori estimation or due to a complex decoding algorithm.

In Ch.~\ref{sec:autoencoders}, we extend the DL formulation to include also the transmitter, hence, we discuss autoencoders for optimal end-to-end communication. We review the main concepts and comment on the limitations of the existing literature, especially highlighting the role of the MI during training. We address the problem of designing capacity-achieving codes with autoencoders, showing evidence of the importance of incorporating a capacity term in the loss function. 
The autoencoder represents a promising framework to study optimal coding/decoding schemes even in the presence of a physical channel model, as for most channels, no optimal strategies are known.

The increasing relevance and presence of the MI in this research suggested us to invest resources in studying its estimation. Ch.~\ref{sec:mi_estimators} firstly summarizes the state-of-the-art on the MI estimation problem, focusing on the latest neural approaches. We then proposed a new family of estimators based on the $f$-divergence, referred to as $f$-DIME, with desired properties and excellent performance. We also developed a novel sampling strategy to correctly train unbiased neural estimators. 

Ch.~\ref{sec:cortical} represents the main contribution of this thesis. It presents a novel DL cooperative framework, referred to as CORTICAL, to estimate the channel capacity and build the capacity-achieving distribution of any discrete-time continuous memoryless vector channel. CORTICAL consists of two cooperative networks: a generator with the objective of learning to sample from the capacity-achieving input distribution, and a discriminator with the objective to learn to distinguish between paired and unpaired channel input-output samples. The latter utilizes $f$-DIME to estimate the MI. 

In Ch.~\ref{sec:plc}, the theoretical contributions of this research are applied to the power line communication medium, whose complex and variable nature offer interesting challenges and benefit from a DL perspective. We train generative models with real measured data, we apply the neural decoding strategy for the impedance modulation problem, we estimate the MI of channel input-output pairs affected by a specific type of noise present in power lines, for which we also study optimal coding schemes. Lastly, an application to anomaly detection is offered.

While the whole thesis can be related to neural distribution interpolation techniques for communication data and signals, we dedicate Ch.~\ref{sec:data interpolation} to provide insights about data and signals interpolation techniques via polynomials. 

In Ch.~\ref{sec:conclusion} we conclude the thesis and we summarize our findings, offering possible future research directions.

\section{Publications}
\label{sec:relatedpub}
We report below the list of publications where the thesis results have been documented. 

\noindent \textbf{Journals}:
\begin{itemize}
	\item A. M. Tonello, N. A. Letizia, D. Righini and F. Marcuzzi, "Machine Learning Tips and Tricks for Power Line Communications," \textit{IEEE Access}, vol. 7, pp. 82434-82452, June 2019.
     \item N. A. Letizia and A. M. Tonello, "Capacity-Driven Autoencoders for Communications," \textit{IEEE Open Journal of the Communications Society}, vol. 2, pp. 1366-1378, June 2021.
    \item N. A. Letizia, B. Salamat and A. M. Tonello, "A Novel Recursive Smooth Trajectory Generation Method for Unmanned Vehicles," \textit{IEEE Transactions on Robotics}, vol. 37, no. 5, pp. 1792-1805, Oct. 2021.
    \item N. A. Letizia and A. M. Tonello, "Segmented Generative Networks: Data Generation in the Uniform Probability Space," \textit{IEEE Transactions on Neural Networks and Learning Systems}, vol. 33, no. 3, pp. 1338-1347, March 2022.
    \item A. M. Tonello and N. A. Letizia, "MIND: Maximum Mutual Information Based Neural Decoder," \textit{IEEE Communications Letters}, vol. 26, no. 12, pp. 2954-2958, Dec. 2022.
    \item N. A. Letizia, A. M. Tonello and H. Vincent Poor, "Cooperative Channel Capacity Learning," \textit{IEEE Communications Letters}, vol. 27, no. 8, pp. 1984-1988, Aug. 2023.
    \item B. Salamat, N. A. Letizia and A. M. Tonello, "Control Based Motion Planning Exploiting Calculus of Variations and Rational Functions: A Formal Approach," \textit{IEEE Access}, vol. 9, pp. 121716-121727, 2021.
    \item N. A. Letizia and A. M. Tonello, "Copula Density Neural Estimation," \textit{ArXiV} 2211.15353.
    \item N. A. Letizia and A. M. Tonello, "Discriminative Mutual Information Estimators for Channel Capacity Learning", \textit{ArXiV} 2107.03084.
\end{itemize}
\textbf{Conferences}:
\begin{itemize}
\item D. Righini, N. A. Letizia and A. M. Tonello, "Synthetic Power Line Communications Channel Generation with Autoencoders and GANs," \textit{in Proceedings of the IEEE International Conference on Communications, Control, and Computing Technologies for Smart Grids (SmartGridComm)}, Beijing, China, 2019, pp. 1-6.
\item N. A. Letizia, B. Salamat and A. M. Tonello, "A New Recursive Framework for Trajectory Generation of UAVs," \textit{in Proceedings of the IEEE Aerospace Conference}, Big Sky, MT, USA, 2020, pp. 1-8.
\item N. A. Letizia, A. M. Tonello and D. Righini, "Learning to Synthesize Noise: The Multiple Conductor Power Line Case," \textit{in Proceedings of the IEEE International Symposium on Power Line Communications and its Applications (ISPLC)}, Malaga, Spain, 2020, pp. 1-6.
\item N. A. Letizia and A. M. Tonello, "Supervised Fault Detection in Energy Grids Measuring Electrical Quantities in the PLC Band," \textit{in Proceedings of the IEEE International Symposium on Power Line Communications and its Applications (ISPLC)}, Malaga, Spain, 2020, pp. 1-5.
\item F. Marcuzzi, A. M. Tonello and N. A. Letizia, "Discovering Routing Anomalies in Large PLC Metering Deployments from Field Data," \textit{in Proceedings of the IEEE International Symposium on Power Line Communications and its Applications (ISPLC)}, Malaga, Spain, 2020, pp. 1-6.
\item A. M. Tonello, N. A. Letizia and M. De Piante, "Learning the Impedance Entanglement for Wireline Data Communication," \textit{in Proceedings of the International Balkan Conference on Communications and Networking (BalkanCom)}, Novi Sad, Serbia, 2021, pp. 96-100. 
\item N. A. Letizia and A. M. Tonello, "Capacity Learning for Communication Systems over Power Lines," \textit{in Proceedings of the IEEE International Symposium on Power Line Communications and its Applications (ISPLC)}, Aachen, Germany, 2021, pp. 55-60.
\item N. A. Letizia and A. M. Tonello, "Discriminative Mutual Information Estimation for the Design of Channel Capacity Driven Autoencoders," \textit{in Proceedings of the International Balkan Conference on Communications and Networking (BalkanCom)}, Sarajevo, Bosnia and Herzegovina, 2022, pp. 41-45.
\item N. A. Letizia, N. Novello and A. M. Tonello, "Mutual Information Estimation via $f$-Divergence and Data Derangements", \textit{in Advances in Neural Information Processing Systems (NeurIPS)}, Vancouver, Canada, 2024, pp. 105114-105150.
\end{itemize}

\thispagestyle{empty}
\pagestyle{main}
\part{Part 1}

\chapter{Deep Learning Fundamentals} % 
\chaptermark{Deep Learning Fundamentals}
\label{sec:fundamentals}
Tom Mitchell's ML preface \cite{Mitchell1997} defines ML as:
\quotes{A computer program is said to learn from experience $E$ with respect to some class of tasks $T$ and performance measure $P$, if its performance at tasks in $T$, as measured by $P$, improves with experience $E$}.
The experience is primarily described by the amount and quality of data used for the learning process. 
According to different interpretations
of the experience, it is possible to divide the learning approach into supervised, unsupervised and reinforcement learning. 

In the following, we recall different learning strategies and statistical tools we use throughout the thesis, keeping in mind that, in communication theory, signals are studied as stochastic processes. 

\section{Learning theory concepts}
\sectionmark{Learning theory concepts}
\label{sec:distances}
This section introduces key learning concepts that are essential for understanding subsequent chapters.
\subsection{Supervised learning}
Let $(\mathbf{x}_i,\mathbf{y}_i)\sim p_{XY}(\mathbf{x},\mathbf{y})$, $i=1,\dots, N$, be samples collected into a training set $\mathcal{D}$ belonging to the joint probability density function  (PDF) $p_{XY}(\mathbf{x},\mathbf{y})$. Probabilistic supervised learning predicts $\mathbf{y}$ from $\mathbf{x}$ by estimating $p_{Y|X}(\mathbf{y}|\mathbf{x})$ under a \textit{discriminative model} or by estimating the joint distribution $p_{XY}(\mathbf{x},\mathbf{y})$ under a \textit{generative model}. 

A \textit{regression} problem comprises a continuous output $\mathbf{y}$, meanwhile a discrete target is associated to a \textit{classification} problem.
The standard way to formulate the learning process is to define a cost function $C$, namely a performance measure that evaluates the quality of the prediction $\mathbf{\hat{y}}$. In most applications, we can rely only on the observed dataset $\mathcal{D}$ and derive an empirical sample distribution since we do not have knowledge of the true joint distribution $p_{XY}(\mathbf{x},\mathbf{y})$. In particular, the training objective minimizes
\begin{equation}
\label{eq:general_cost}
C(\mathbf{\hat{y}}) = \mathbb{E}_{(\mathbf{x},\mathbf{y}) \sim \mathcal{D}}[\delta(\mathbf{y},\mathbf{\hat{y}})]
\end{equation}
where $\delta$ is a measure of distance between the desired target $\mathbf{y}$ and the prediction $\mathbf{\hat{y}}$.

\subsection{Unsupervised learning}
Let $\mathbf{x}_i\sim p_X(\mathbf{x})$, $i=1,\dots, N$, be samples collected into a training set $\mathcal{D}$ belonging to the PDF $p_X(\mathbf{x})$.
Unsupervised learning aims at finding useful properties of the structure of a dataset $\mathcal{D}$, ideally inferring the true unknown distribution $p_X(\mathbf{x})$.

Several different tasks are solved using unsupervised learning, for instance: clustering, which divides the data into cluster of similar samples; feature extraction, which transforms data in a different latent space easier to handle and interpret; density estimation and generation/synthesis of new samples. The latter objective consists of learning, from data in $\mathcal{D}$, the distribution $p_X(\mathbf{x})$ and producing new unseen samples from it.

Unsupervised learning tasks require the introduction of an hidden variable $\mathbf{z}_i$ for each sample $\mathbf{x}_i$, leading to the selection of different models under a probabilistic approach. In the \textit{discriminative models}, the latent code $\mathbf{z}_i$ is extracted from $\mathbf{x}_i$ by defining a probabilistic mapping $p_{Z|X}(\mathbf{z|x};\theta)$ parameterized by $\theta$. \textit{Autoencoders} encode $\mathbf{x}_i$ into a latent variable $\mathbf{z}_i$ so that recovering $\mathbf{x}_i$ from $\mathbf{z}_i$ is possible through a decoder. The encoder models the posterior distribution $p_{Z|X}(\mathbf{z|x};\theta)$, while the decoder models the likelihood $p_{X|Z}(\mathbf{x|z};\theta)$.
Lastly, in the \textit{generative models}, an hidden variable $\mathbf{z}_i$ generates the observation $\mathbf{x}_i$. After a specification of a parameterized family $p_Z(\mathbf{z}|\theta)$, the distribution of the observation can be rewritten as $p_X(\mathbf{x|\theta})=\sum_z p_Z(\mathbf{z|\theta})p_{X|Z}(\mathbf{x|z;\theta})$.
Fig. \ref{img:taxonomy} schematically
summarizes the discussed learning models.

\begin{figure}
\centering
	\includegraphics[width=\textwidth]{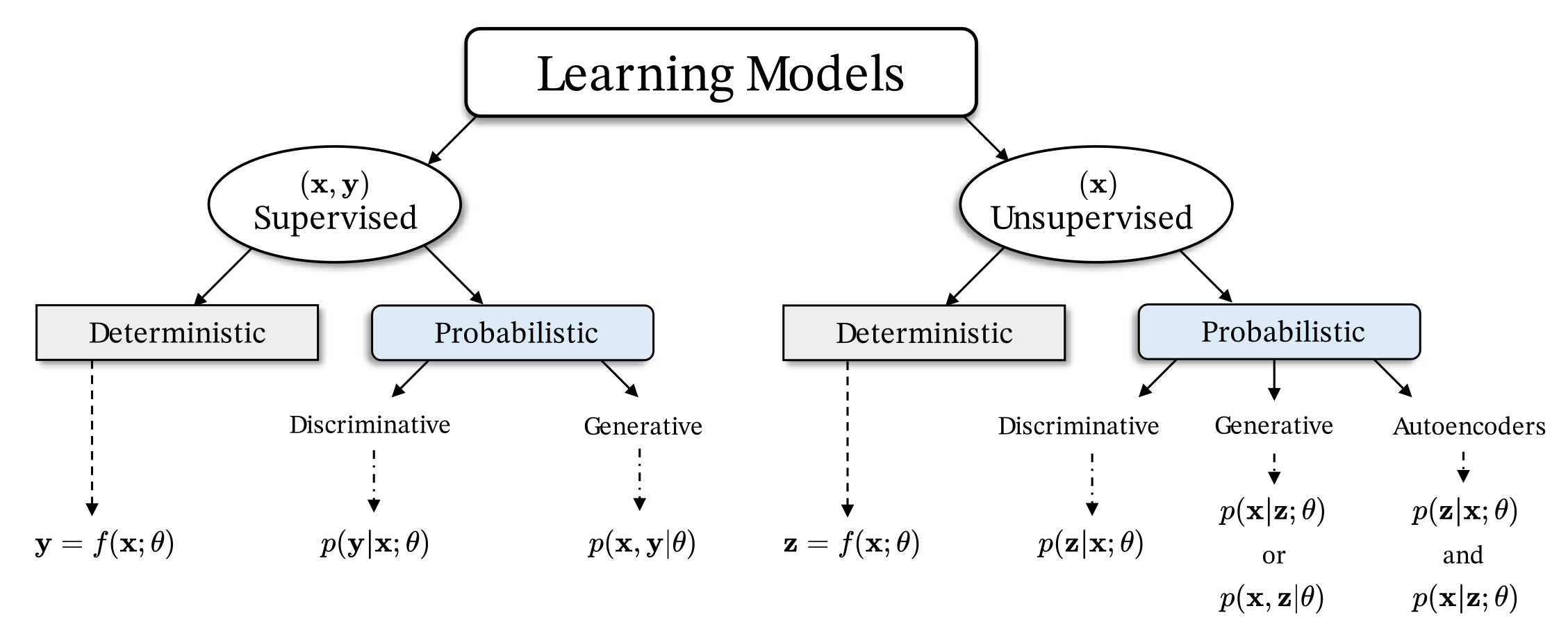}
	\caption{Taxonomy of learning models. Deterministic models extract either fixed relationships between input and output (supervised) or patterns of the input (unsupervised). Probabilistic discriminative models use the input to predict either the output (supervised) or the hidden variable causing the input (unsupervised). Probabilistic generative models learn the statistical relationship between either input and output (unsupervised) or input and the hidden variable (supervised). Autoencoders model how to encode the input into the hidden variable, as well as how to decode from the hidden variable to the input.}
	\label{img:taxonomy}
\end{figure}

\subsection{Reinforcement learning}
Reinforcement learning (RL) addresses the problem of an \textit{agent} learning to act in a dynamic \textit{environment} by finding the best sequence of actions that maximizes a \textit{reward} function. The basic idea is that the agent explores the interactive environment. According to the observation experience it gets, it changes his actions in order to receive higher rewards. 

Basic RL can be modeled as a Markov decision process (MDP). Let $S_t$ be the observation (or state) provided to the agent at time $t$. The agent reacts by selecting an action $A_t$ to obtain from the environment the updated reward $R_{t+1}$, the discount $\gamma_{t+1}$, and the next state $S_{t+1}$.
In particular, the agent-environment interaction is formalized by a tuple $\langle \mathcal{S}, \mathcal{A}, T,r,\gamma\rangle$ where $\mathcal{S}$ is a finite set of states, $\mathcal{A}$ is a finite set of actions, $T(s,a,s')=P[S_{t+1}=s'|S_t = s, A_t = a]$ is the transition probability from state $s$ to state $s'$ under the action $a$, $r(s,a) = \mathbb{E}[R_{t+1}|S_t = s, A_t = a]$ is the reward function, and $\gamma \in [0,1]$ is a discount factor.
To find out which actions are good, the agent builds a \textit{policy}, i.e, a map $\pi : \mathcal{S} \times \mathcal{A} \to [0,1]$ that defines the probability of taking an action $a$ when the state is $s$. If we denote with $G_t = \sum_{k=0}^{\infty}{\biggl(\prod_{i=1}^{k}{\gamma_{t+i}}\biggr) R_{t+k+1}}$ the discount return, then the goal of the agent is to maximize the expected discount return, i.e., \textit{value} $q^{\pi}(s,a) = \mathbb{E}_{\pi}[G_t|S_t = s, A_t = a]$, by finding a good policy $\pi(s,a)$. 

RL algorithms can be categorized as \cite{SuttonRL,ArulkumaranDRL}: a) policy based methods, when the agent, given the observation as input, optimizes the policy $\pi$ without using a value function $q$; b) value based methods, when the agent, given the observation and the action as inputs, learns a value function $q$; c) actor critic methods, where a \textit{critic} measures how good the action taken is (value-based), and an \textit{actor} controls the behaviour of the agent (policy-based).
Despite several applications of reinforcement learning for physical layer communications (see Ch. 9 of \cite{Eldar2022}), the thesis mostly focuses on the first two learning approaches.

\subsection{Maximum likelihood estimation}
Maximum likelihood (MaxL) estimation is a statistical method commonly used in DL for estimating the parameters of a PDF $p_{\text{model}}(\mathbf{x};\theta)$ that best explains the observed data $p_{X}(\mathbf{x})$, assuming the parameters are fixed but unknown.
Most of generative models work with the MaxL principle \cite{Goodfellow2016}; given a probability distribution parameterized by $\theta$, the estimator for $p_X(\mathbf{x})$ is defined as
\begin{equation}
\label{eq:MaxL}
    \theta_{\text{MaxL}} = \argmax_{\theta}p_{\text{model}}(\mathbf{x};\theta), \; \mathbf{x}\sim p_{X}(\mathbf{x}).
\end{equation}
Given $N$ data points $\mathbf{x}_i\sim p_X(\mathbf{x})$, $i=1,\dots, N$, we seek to maximize
\begin{equation}
    \theta_{\text{MaxL}} = \argmax_{\theta}\prod_{i=1}^{N}{p_{\text{model}}(\mathbf{x}_i;\theta)},
\end{equation}
which can be conveniently rewritten as
\begin{equation}
\label{eq:LogL}
    \theta_{\text{MaxL}} = \argmax_{\theta}\sum_{i=1}^{N}{\log p_{\text{model}}(\mathbf{x}_i;\theta)},
\end{equation}
or alternatively
\begin{equation}
\label{eq:CE_MaxL}
    \theta_{\text{MaxL}} = \argmin_{\theta} \mathbb{E}_{\mathbf{x}\sim p_X(\mathbf{x})}\bigl[-\log p_{\text{model}}(\mathbf{x};\theta)\bigr].
\end{equation}
The last expression is equivalent to a cross-entropy minimization over the parameter $\theta$. Notice that \eqref{eq:CE_MaxL} often appears in terms of Kullback-Leibler (KL) divergence
\begin{equation}
\theta_{\text{MaxL}} = \argmin_{\theta} D_{\text{KL}}(p_{X}(\mathbf{x})||p_{\text{model}}(\mathbf{x};\theta)) = \argmin_{\theta} \mathbb{E}_{\mathbf{x}\sim p_X(\mathbf{x})}\biggl[\log \biggl(\frac{p_{X}(\mathbf{x})}{p_{\text{model}}(\mathbf{x};\theta)}\biggr)\biggr].
\label{MaxLKL}
\end{equation}
Practically, we are interested in a parameterization of $p_{\text{model}}$ which is expressive enough to fully capture data patterns and which allows iterative optimization. Artificial neural networks (NNs) represent a viable solution as explain in Sec. \ref{sec:tips-tricks}.

Similarly, MaxL can be applied to estimate the conditional probability $p_{Y|X}(\mathbf{y|x})$
\begin{equation}
\label{eq:CE_CMaxL}
    \theta_{\text{MaxL}} = \argmin_{\theta} \mathbb{E}_{(\mathbf{x},\mathbf{y})\sim p_{Y|X}(\mathbf{y|x})}\bigl[-\log p_{\text{model}}(\mathbf{y|x};\theta)\bigr].
\end{equation}
In communications, the estimation of the transition probability or likelihood $p_{Y|X}$ is extremely relevant as it is used in the maximum a-posteriori (MAP) decoding strategy. Indeed, we are typically interested in estimating the transmitted input $\mathbf{x}$ given the observation of the output $\mathbf{y}$. Formally, we need to solve
\begin{equation}
    \mathbf{\hat{x}} = \argmax_{\mathbf{x}} p_{X|Y}(\mathbf{x|y}),
\end{equation}
which using Bayes' rule and the logarithmic trick reads as
\begin{equation}
    \mathbf{\hat{x}} = \argmin_{\mathbf{x}} -\log p_{Y|X}(\mathbf{y|x}) - \log p_{X}(\mathbf{x}).
\end{equation}
A famous example is described by the scalar additive white Gaussian noise (AWGN) channel of variance $\sigma_N^2$
\begin{equation}
    p_{Y|X}(y|x)= \frac{1}{\sqrt{2\pi}\sigma_N}\exp\biggl[{-\frac{1}{2}\biggl(\frac{y-x}{\sigma_N}\biggr)^2\biggr]}
\end{equation}
and uniform discrete input source with alphabet dimension $M$, for which it is known that the minimal Euclidean distance represents the optimal decoding criterion \cite{Proakis2001}
\begin{equation}
    \hat{x} = \argmin_{x_i \in \{x_1, \dots, x_{M}\}} |y-x_i|^2.
\end{equation}
It is thus prerogative of any decoding algorithm to estimate the channel model. We leave further details to Ch.~\ref{sec:medium} and Ch.~\ref{sec:decoder}.

\subsection{Statistical distances}
Statistical distances quantify the difference or similarity between two probability distributions. 
They are widely used in various fields such as statistics, machine learning, data analysis, and information theory. Different statistical distances capture different aspects of the distributions and are suitable for different tasks. Arguably the most common statistical distance is the KL divergence.

Let $P$ and $Q$ be absolutely continuous measures w.r.t. $\diff x $ and assume they possess
densities $p$ and $q$, then the KL-divergence is defined as
\begin{equation}
D_{\text{KL}}(P||Q) = \int_{\mathcal{X}}{p(x)\log\biggl(\frac{p(x)}{q(x)}\biggr)\diff x},
\end{equation}
where $\mathcal{X}$ is a compact domain.
The KL divergence measures how much the distribution $P$ differs from a reference distribution $Q$ \cite{KL1951}. The divergence equals zero if and only if $P=Q$ as measures, it is not symmetric and it is generally not upper bounded.
A fundamental particular case of KL divergence is represented by the mutual information (MI) between two random variables $X$ and $Y$. It quantifies the statistical dependence between $X$ and $Y$ by measuring the amount of information obtained about one variable via the observation of the other. The MI is symmetric and it is defined as
\begin{equation}
\label{eq:fundamentals_MI}
    I(X;Y) = \mathbb{E}_{(\mathbf{x},\mathbf{y})\sim p_{XY}(\mathbf{x},\mathbf{y})}\biggl[\log \frac{p_{XY}(\mathbf{x},\mathbf{y})}{p_{X}(\mathbf{x})p_{Y}(\mathbf{y})}\biggr],
\end{equation}
which rewrites in terms of KL divergence as
\begin{equation}
    I(X;Y) = D_{\text{KL}}(p_{XY}(\mathbf{x},\mathbf{y})||p_{X}(\mathbf{x})p_{Y}(\mathbf{y})).
\end{equation}

The KL divergence can be extended to a more general class of divergences referred to as $f$-divergence, where $f$ is a convex lower semicontinuous function $f:\mathbb{R}_+ \to \mathbb{R}$ satisfying $f(1)=0$. The $f$-divergence between $P$ and $Q$ is defined as
\begin{equation}
\label{eq:f-divergence}
D_f(P||Q) = \int_{\mathcal{X}}{q(\mathbf{x})f\biggl(\frac{p(\mathbf{x})}{q(\mathbf{x})}\biggr)\diff \mathbf{x}}.
\end{equation}
It is immediate to notice that the KL divergence directly follows from the $f$-divergence using the generator function $f(u)=u\log u$. Other popular statistical measures based on \eqref{eq:f-divergence} are:
\begin{itemize}
    \item the Jensen-Shannon (JS) divergence, defined in terms of KL divergence as
\begin{equation}
    D_{\text{JS}}(P||Q) = \frac{1}{2}D_{\text{KL}}\biggl(P\biggl|\biggl|\frac{P+Q}{2}\biggr)+\frac{1}{2}D_{\text{KL}}\biggl(Q\biggl|\biggl|\frac{P+Q}{2}\biggr);
\end{equation}
\item the squared Hellinger distance (HD), defined as
\begin{equation}
\label{eq:HD-distance}
H^2(P,Q) :=  D_{\text{HD}}(P||Q) = \frac{1}{2} \int_{\mathcal{X}}{\biggl(\sqrt{p(\mathbf{x})}-\sqrt{q(\mathbf{x})}\biggr)^2\diff \mathbf{x}},
\end{equation}
with $0\leq H(P,Q) \leq 1$, and generator $f(u) = \frac{1}{2}(\sqrt{u}-1)^2$;
\item the total variation (TV) distance, which can be rewritten in terms of $f$-divergence as
\begin{equation}
\label{eq:TV-distance}
V(P,Q) =  D_{\text{TV}}(P||Q) = \frac{1}{2} \int_{\mathcal{X}}{|p(\mathbf{x})-q(\mathbf{x})|\diff \mathbf{x}},
\end{equation}
when $f(u)=\frac{1}{2}|u-1|$. 
\end{itemize}
It is also known that $V(P,Q)\leq \sqrt{1-\exp(-D_{\text{KL}}(P||Q))}$ and $H^2(P,Q)\leq V(P,Q) \leq \sqrt{2}H(P,Q)$.
The challenge that unites statistical distances consists in their explicit calculation. One of the objective of Ch.~\ref{sec:mi_estimators} is to shed some light on how to exploit NNs to estimate such distances.

\section{Machine learning tools}
\label{sec:tips-tricks}
In this section, we briefly introduce ML and DL tools utilized to demonstrate the main results of the thesis. In particular, we focus our attention to NNs since they can handle diverse type of data, including numerical data, text, images, and sequences. NNs are designed and trained for specific tasks by adjusting their architecture, activation functions, and other parameters, making them versatile for solving various problems.

\subsection{Neural networks}
\label{subsec:NN}
Neural networks are among the most popular tools in the ML community as they are known being universal function approximators \cite{Hornik1989}, they can be implemented in parallel on concurrent architectures and most importantly, they can be trained by backpropagation \cite{Rumelhart1985}.

A feedforward NN with $L$ layers maps a given input $\mathbf{x}_0 \in \mathbb{R}^{D_0}$ to an output $\mathbf{x}_L \in \mathbb{R}^{D_L}$ by implementing a function $F(\mathbf{x}_0;\mathbf{\theta})$ where $\mathbf{\theta}$ represents the parameters of the NN. To do so, the input is processed through $L$ iterative steps
\begin{equation}
\mathbf{x}_l = f_l(\mathbf{x}_{l-1};\theta_l), \; l=1,\dots, L
\end{equation}
where $f_l(\mathbf{x}_{l-1};\theta_l)$ maps the input of the $l$-th layer to its output. The most used layer is the fully-connected one, whose mapping is expressed as
\begin{equation}
f_l(\mathbf{x}_{l-1};\theta_l) = \sigma(\mathbf{W}_l\cdot \mathbf{x}_{l-1}+\mathbf{b}_l)
\end{equation}
where $\sigma(\cdot)$ is the activation function while $\mathbf{W}_l$ and $\mathbf{b}_l$ are the parameters, weights and the biases, respectively.
According to the specific application, several different types of layers and activation functions can be defined. Fig. \ref{img:fundamentals_NN} shows a general fully-connected architecture.

\begin{figure}
%strip if you want 2 columns
\centering
\includegraphics[scale = 0.33]{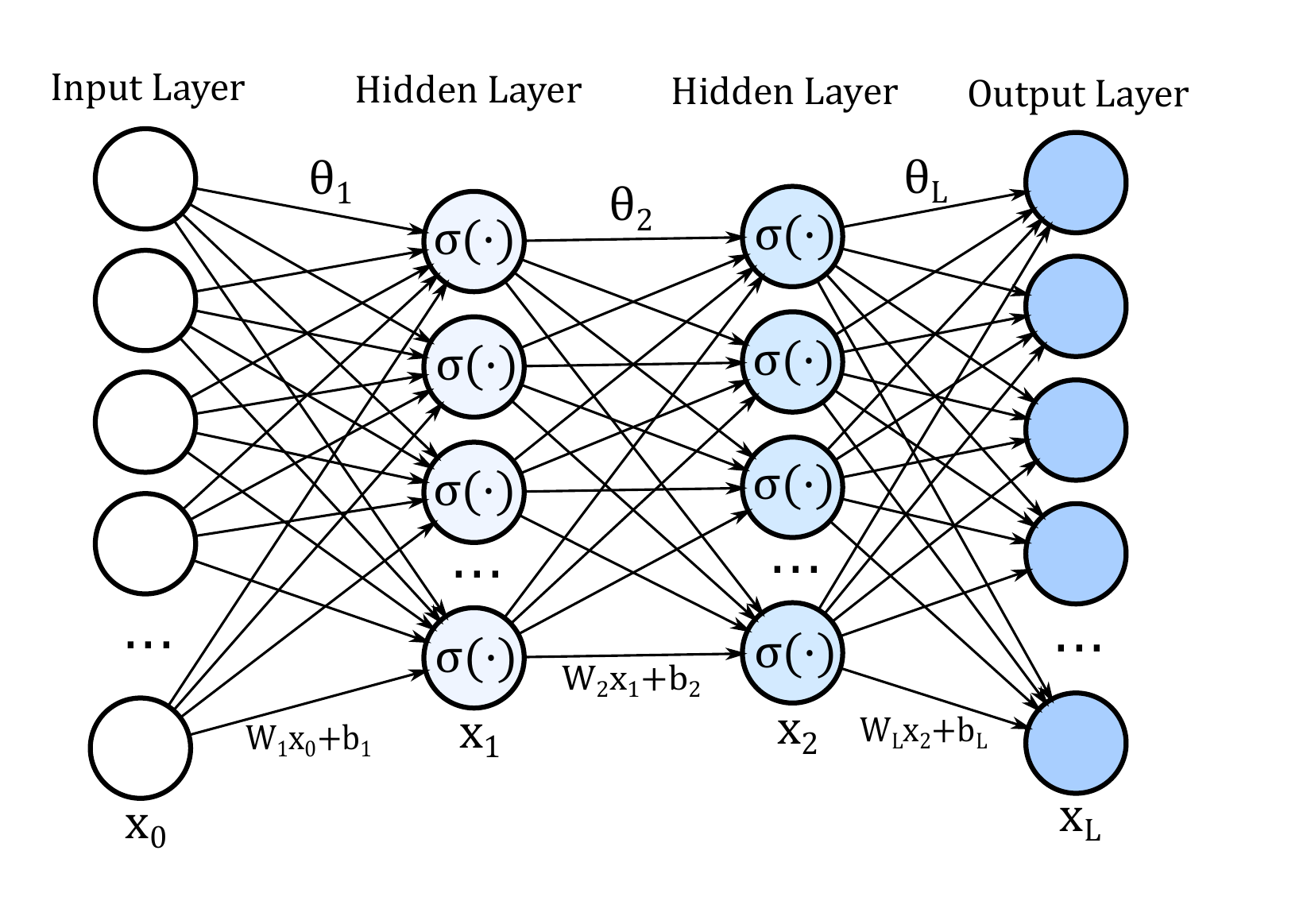}
\caption{Architecture of a fully connected neural network with two hidden layers.}
\label{img:fundamentals_NN}
\end{figure}

Defined a metric $\delta$ and a cost function $C$, the easiest and most classical algorithm to find the feasible set of parameters $\mathbf{\theta}$ is the gradient descent method which iteratively updates $\mathbf{\theta}$ as $\mathbf{\theta}_t = \mathbf{\theta}_{t-1}-\eta\nabla C(\mathbf{\theta}_{t-1})$ where $\eta$ is the learning rate. Its popular variants are stochastic gradient descent (SGD) and adaptive learning rates (Adam) \cite{AdamKingma}.
Common choices for the cost function are the mean squared error and categorical cross-entropy, for which $\delta$ in \eqref{eq:general_cost} takes the form $\delta(\mathbf{y},\mathbf{\hat{y}}) = ||\mathbf{y}-\mathbf{\hat{y}}||^2$ and $\delta(\mathbf{y},\mathbf{\hat{y}}) = -\mathbf{y}\cdot \log{\mathbf{\hat{y}}}$, respectively.

\subsection{Convolutional neural networks}
Convolutional neural networks (CNNs) (Fig. \ref{img:fundamentals_CNN}) are able to capture the spatial and temporal dependencies of data, and for this reason they find application in image and document recognition \cite{LeCun1998}, medical image analysis \cite{Li2014}, natural language processing \cite{Collobert2008}, and more in general pattern recognition. CNNs are multi layer perceptrons (MLP)  with a regularization approach since they consist of multiple convolutional layers to ensure the translation invariance characteristics. In particular, given an input data matrix $\mathbf{I}_i$, the feature map $\mathbf{F}_j$ is obtained as
\begin{equation}
\mathbf{F}_j = \sigma\biggl(\sum_{i=1}^{C}{\mathbf{I}_i \ast \mathbf{K}_{i,j}+\mathbf{B}_{j}}\biggr)
\end{equation}
namely, through the superposition of $C$ layers, e.g., $C=3$ for RGB images, each comprising a convolution between the input matrix $\mathbf{I}_{i}$ and a kernel matrix $\mathbf{K}_{i,j}$, plus an additive bias term $\mathbf{B}_j$, and a final application of non-linear activation function $\sigma(\cdot)$, typically, a \textit{sigmoid}, \textit{tanh}, or \textit{ReLU}. Each set of kernel matrices represents a filter that extracts local features. To control the problem of overfitting, the dimension of data and features to be extracted is reduced by pooling layers. Finally, fully-connected layers are used to extract semantic information from features.

\begin{figure}
\centering
\includegraphics[scale = 0.40]{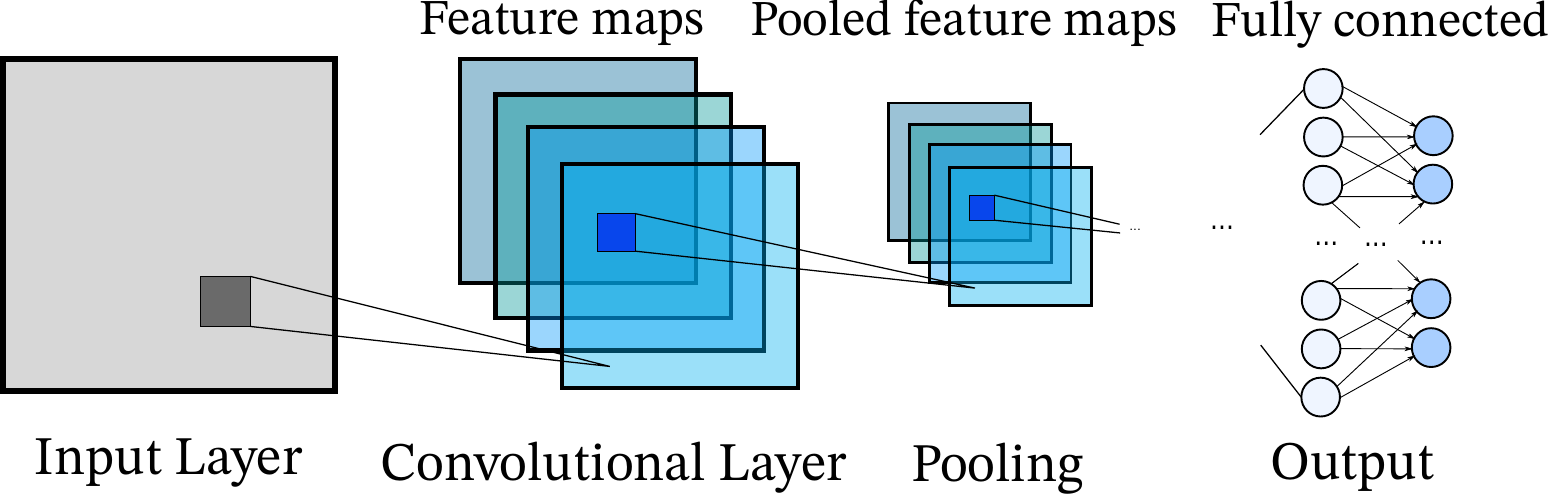}
\caption{Structure of a convolutional neural network with one convolutional layer.}
\label{img:fundamentals_CNN}
\end{figure}

\subsection{Autoencoders}
An autoencoder (AE) is a particular type of NN consisting of an encoding block which tries to learn a latent representation $\mathbf{z}$, typically in a lower-dimensional space, of the input variable $\mathbf{x}$ and a decoding block which reconstructs $\mathbf{x}$ at the output using the information inside the code $\mathbf{z}$ (Fig. \ref{img:fundamentals_autoencoder-architecture}).

A classical learning formulation for \textit{deterministic} AEs requires to solve the following optimization problem
\begin{equation}
\label{eq:AutoCost}
\theta_{\text{opt}} = \argmin_{\theta} \delta(\mathbf{x},G(F(\mathbf{x};\theta_1);\theta_2)),
\end{equation}
where $\theta = (\theta_1,\theta_2)$ are the parameters of the NN, $\delta$ is a measure of distance, while $F$ and $G$ stand for the encoder and decoder function, respectively.
When $F$ and $G$ are linear functions, \eqref{eq:AutoCost} collapse to the principal component analysis (PCA) algorithm. Given a $N \times D$ matrix $\mathbf{x}$ where each row is a sample $\mathbf{x}_i \in \mathbb{R}^D$, PCA represents the encoder as $F(\mathbf{x};\theta) = \mathbf{W}^T\mathbf{x}$ and the decoder as $G(\mathbf{z};\theta) = \mathbf{W}\mathbf{z}$ where $\mathbf{W}$ is the unknown parameter, a $D\times M$ matrix where $M$ is the dimension of the latent space. If $\delta$ is the classic quadratic loss function, then \eqref{eq:AutoCost} can be rewritten as
\begin{equation}
\label{eq:PCA}
\mathbf{W}_{\text{opt}} = \argmin_{\mathbf{W}} \sum_{i=1}^{N}{||\mathbf{x}_i-\mathbf{WW}^T \mathbf{x}_i||_2^2}.
\end{equation}
Let $\Sigma$ be the sample covariance matrix of $\mathbf{x}$, then $\mathbf{W}$ is given by the $M$ principal eigenvectors of $\Sigma$. The resulting transformation $\mathbf{z}=\mathbf{W}^T\mathbf{x}$ is a matrix whose row elements are mutually uncorrelated. PCA is broadly used as a dimensionality reduction and feature extractor tool.

\begin{figure}[t]
\centering
	\includegraphics[scale=0.3]{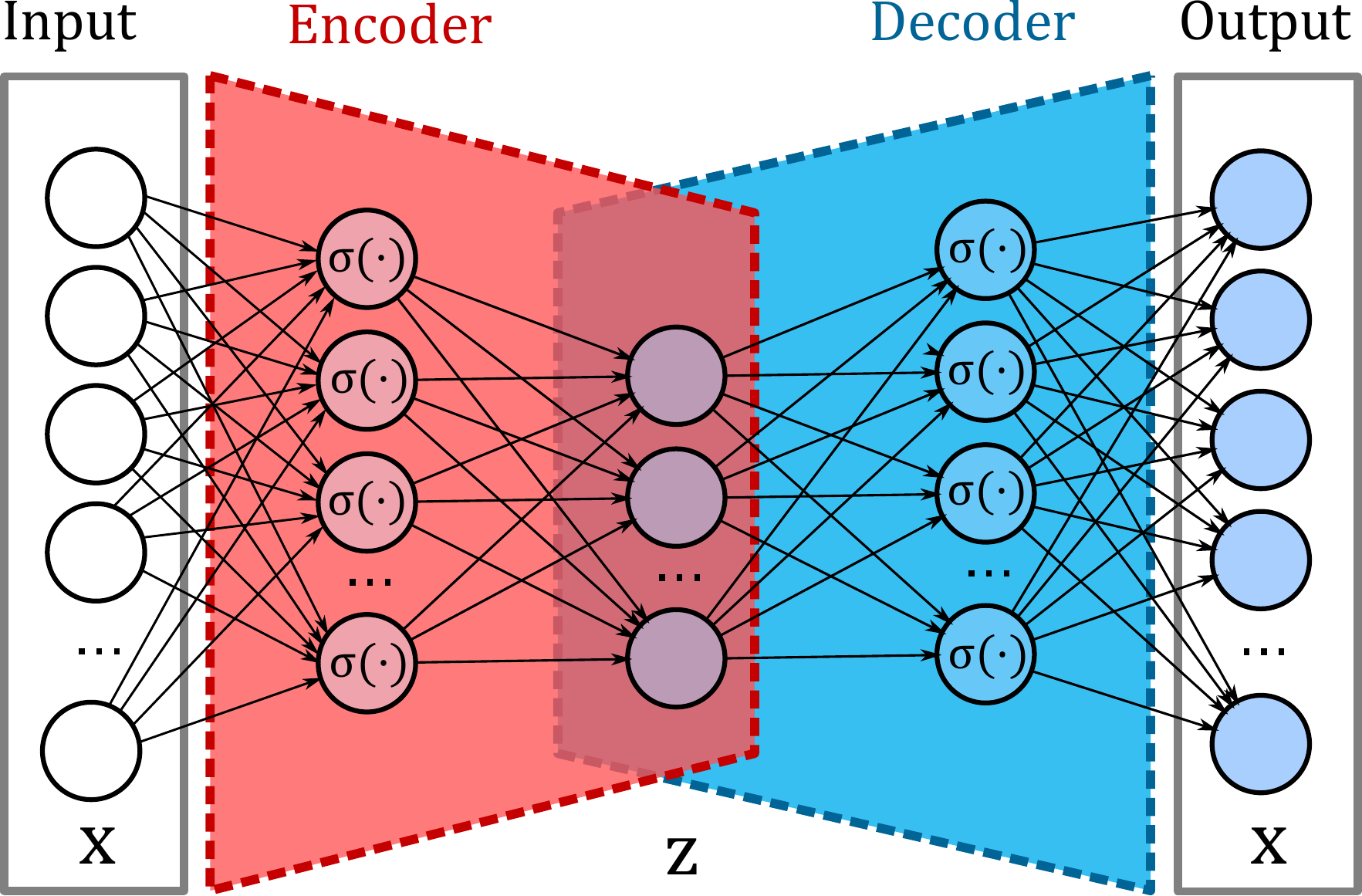}
	\caption{Architecture of an AE.}
	\label{img:fundamentals_autoencoder-architecture}
\end{figure}

Correlation is an indicator of linear dependence, but in most of cases we are interested in representations where features have a different form of dependence and the ability to identify and extract non-linear dependencies is the main reason for adopting AEs. 
An interesting type of AE for communications purposes is the denoising autoencoder (DAE) \cite{Vincent2008}. The idea is to train a network in order to minimize the following denoising criterion:
\begin{equation}
\label{DAE}
\mathcal{L}_{\text{DAE}} = \mathbb{E}_{\mathbf{x}\sim \mathcal{D}}[\delta(\mathbf{x},G(F(\mathbf{\tilde{x}})))],
\end{equation}
where $\mathbf{\tilde{x}}$ is a stochastic corruption of $\mathbf{x}$. When we train a DAE using the quadratic loss and a corruption Gaussian noise $\mathbf{\tilde{x}} = \mathbf{x}+\epsilon$ with $\epsilon \sim \mathcal{N}(0,\sigma I)$, the work in \cite{Alain2014} proved that the AE recovers properties of the training density $p_X(\mathbf{x})$.
This is somehow remarkable in a communication framework because it asserts that corrupting the transmitted signal with some form of noise can be beneficial in the reconstruction's phase. 

They also showed that the DAE with a small corruption of variance $\sigma^2$ is similar to a contractive autoencoder (CAE) with penalty coefficient $\lambda=\sigma^2$. The CAE \cite{Rifai2011} is a particular form of regularized AE which is trained in order to minimize the following reconstruction criterion:
\begin{equation}
\label{eq:CAE}
\mathcal{L}_{\text{CAE}} = \mathbb{E}_{\mathbf{x}\sim \mathcal{D}}\biggl[\delta(\mathbf{x},G(F(\mathbf{x})))+\lambda \biggl| \frac{\partial F(\mathbf{x})}{\partial \mathbf{x}} \biggr|_F^2 \biggr],
\end{equation}
where $|\mathbf{A}|_F$ is the Frobenius norm. The idea behind CAE is that the regularization term attempts to make $F(\cdot)$ or $G(F(\cdot))$ as simple as possible, but at the same time the reconstruction error must be small.

Despite the fact that typically AEs find a low-dimensional representation of the input vector $\mathbf{x}$ at some intermediate level, when redundancy is desired or a design property, sparse AEs learn an hidden representation of the data in a way such data $\mathbf{z}$ is a $k$-sparse vector. Sparsity is mathematically described in terms of pseudo-norm $l_0$ which, due to its non-differentiability and non-convexity, results intractable. For this reason sparsity is often relaxed and described in terms of norm $l_1$. In this way, sparse AEs are trained in order to minimize the following reconstruction criterion:
\begin{equation}
\label{eq:SAE}
\mathcal{L}_{\text{SAE}} = \mathbb{E}_{\mathbf{x}\sim \mathcal{D}}[\delta(\mathbf{x},G(F(\mathbf{x})))+\lambda | F(\mathbf{x})|_1].
\end{equation}
AEs can be used inside a \textit{probabilistic} framework that aims at modeling the underlying distributions. In this context, variational autoencoders (VAEs) have been introduced in \cite{Kingma2013} as generative probabilistic models based on variational inference.

\subsection{Generative models}
\sectionmark{Generative models}
\label{sec:generative_networks}
Variational autoencoders \cite{Kingma2013} are a particular class of generative models based on variational inference. Let $\mathbf{z}$ be the latent variable of the observed value $\mathbf{x}$ for a parameter $\theta$, then $p_{\theta}(\mathbf{z|x})$ represents the intractable true posterior which can be approximated by a tractable one, $q_{\phi}(\mathbf{z|x})$, for a parameter $\phi$. A probabilistic encoder produces $q_{\phi}(\mathbf{z|x})$ while a probabilistic decoder produces $p_{\theta}(\mathbf{x|z})$. The idea is to maximize a variational lower bound $\mathcal{L}$, often referred to as evidence lower bound (ELBO), on the marginal log-likelihood (evidence)
\begin{equation}
\log p_{\theta}(\mathbf{x}_i) = D_{\text{KL}}(q_{\phi}(\mathbf{z|x}_i)||p_{\theta}(\mathbf{z|x}_i))+\mathcal{L}(\theta,\phi;\mathbf{x}_i)
\label{eq:VALower1}
\end{equation}
where
\begin{equation}
\mathcal{L}(\theta,\phi;\mathbf{x}_i) =  -D_{\text{KL}}(q_{\phi}(\mathbf{z|x}_i)||p_{\theta}(\mathbf{z}))  +\mathbb{E}_{q_{\phi}(\mathbf{z|x}_i)}\biggl[\log p_{\theta}(\mathbf{x}_i|\mathbf{z})\biggr].
\label{eq:VALower2}
\end{equation}
Rather than outputting the code $\mathbf{z}$, the encoder outputs parameters describing a distribution for each dimension of the latent space. In the case where the prior is assumed to be Gaussian, $\mathbf{z}$ will consist of mean and variance. Tuning in the latent space and processing the new latent samples through the decoder is a way to generate new data. 
An hierarchical variational autoencoder (HVAE) is an extended version of a VAE, encompassing multiple hierarchies of latent variables. Therein, latent variables are considered to be generated from higher-level, more abstract latent variables themselves, forming a hierarchical structure. In a Markovian HVAE (MHVAE) with $T$ hierarchical latents, the generative process follows a Markov chain, thus it can be interpreted as stacking VAEs on top of each other (see Fig. \ref{img:fundamentals_MHVAE}). The joint and posterior distributions rewrite as
\begin{align}
    \label{eq:MHVAE}
    p_{\theta}(\mathbf{x},\mathbf{z}_{1:T}) & = p_{Z_T}(\mathbf{z}_T)p_{\theta}(\mathbf{x|z}_{1})\prod_{t=2}^{T}{p_{\theta}(\mathbf{z}_{t-1}|\mathbf{z}_{t})} \\ 
    q_{\phi}(\mathbf{z}_{1:T}|\mathbf{x}) & = q_{\phi}(\mathbf{z}_1|\mathbf{x})\prod_{t=2}^{T}{q_{\phi}(\mathbf{z}_{t}|\mathbf{z}_{t-1})}.
\end{align}

\begin{figure}
\centering
	\includegraphics[scale=0.4]{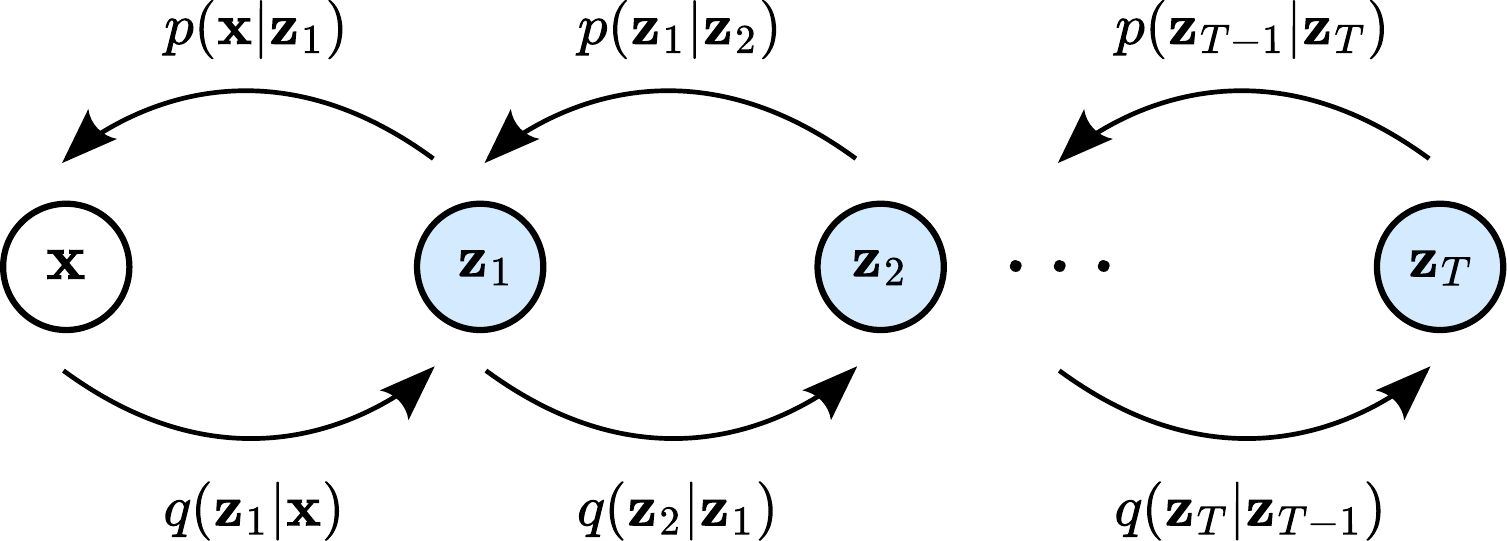}
	\caption{Graph of a Markovian hierarchical variational autoencoder with $T$ hierarchical latents. Each latent $\mathbf{z}_t$ is generated only from the previous latent $\mathbf{z}_{t+1}$.}
	\label{img:fundamentals_MHVAE}
\end{figure}

Probabilistic diffusion models \cite{Ho2020} have been gaining increasing attention and importance in the field of generative modeling. They can be thought as a particular case of a MHVAE when:
\begin{itemize}
    \item the dimension of the latent $\mathbf{z}_t$, $\forall t=1,\dots,T$, equals the dimension of the data $\mathbf{x}$;
    \item the encoder transition is a simple Gaussian model $q(\mathbf{z}_{t}|\mathbf{z}_{t-1}) = \mathcal{N}(\sqrt{\alpha_t}\mathbf{z}_{t-1}, (1-\alpha_t)\mathbf{I})$, where $\alpha_t$ is a learnable coefficient; 
    \item the distribution of the latent $p_{Z_T}(\mathbf{z}_{T})$ at the timestep $T$ is a multivariate normal distribution. 
\end{itemize}
Under such restrictions, it is possible to show that the task of diffusion models consists of learning how to predict the original ground truth sample from an arbitrarily noisy version of it \cite{CalvinLuo2022}.

Flow-based generative models map the data via a non-linear deterministic transformation into a latent space of independent variables where the probability density results tractable \cite{Dinh2014}. The framework lies behind the change of variable rule
\begin{equation}
p_{X}(\mathbf{x}) = p_{Z}(g^{-1}(\mathbf{x}))\cdot \biggl| \det \frac{\partial g^{-1}(\mathbf{x})}{\partial \mathbf{x}} \biggr|
\label{NICE}
\end{equation}
when both the determinant of the Jacobian and $g^{-1}$ are easy to compute; in that case it is straightforward to directly sample from $p_{X}(\mathbf{x})$ since $\mathbf{x} = g(\mathbf{z})$.
NICE, Real NVP and GLOW \cite{Kingma2018} belong to flow-based generative models which are able to provide a good latent-variable inference and excellent log-likelihood evaluation.

The Gaussian Process Latent Variable Model (GP-LVM) \cite{GPLVM2005} aims at inferring both the latent code $\mathbf{z}$ and the mapping function $\mathbf{f}$ that lead to the dataset $\mathbf{x}$. The prior distribution over $\mathbf{z}$ is set as Gaussian, while $\mathbf{f}(\cdot)$ is described as a Gaussian Process (GP) $\mathbf{f} \sim \mathcal{GP}(\mathbf{0},\mathbf{K})$ where $K(\cdot,\cdot)$ is the covariance function, commonly referred to as squared exponential kernel.
This approach ensures a smooth mapping from the latent to the sample space while providing a closed form expression to approximate the true posterior distribution $p_{Z|X}(\mathbf{z}|\mathbf{x})$ and to find a variational lower bound for a robust training procedure \cite{GPLVM2010}.

The methods presented so far have been mostly and successfully applied to images and they all share the ability to generate new samples in parallel. 
The synthesis of fully visible belief networks (FVBNs) \cite{Frey1995,Frey1998} and autoregressive models \cite{Gregor2014} is difficult to parallelize, therefore, due to their sequential nature, they are relatively slow. Indeed, their core idea is to factorize the joint probability distribution of $D$ dimensional inputs $\mathbf{x}$ into products of one-dimensional conditional distributions:
\begin{equation}
 p_{\text{model}}(\mathbf{x}) = \prod_{j=1}^{D}{p_{\text{model}}(x_{j}|x_{1},\dots, x_{j-1}}).
\label{eq:ChainRule}
\end{equation}
Generation is done by generating one dimension at a time leading to good quality of the samples (like WaveNet for human speech \cite{Oord2016}) since they directly optimize the likelihood. 
However, the recent success of the \textit{transformer} architecture \cite{Vaswani2017} shows that parallelization is possible, and together with a self-attention mechanism to effectively model long-range dependencies, transformers have demonstrated remarkable performance in generating human-like text. The pre-training and fine-tuning paradigm used in transformer-based generative models, often referred to as generative pre-trained transformer (GPT), enables them to be highly adaptable to a variety of tasks. 

Boltzmann machines \cite{Hinton06afast} and generative stochastic network \cite{AlainB2015} rely on estimating the transition operator of a Markov Chain $p(x_{j}|x_{j-1})$ but are now less used since Markov chains fail to scale in high dimensional spaces.

One of the most successful data generation techniques is represented by generative adversarial networks (GANs) \cite{Goodfellow2014}.
Considering the importance of GANs to this thesis, we have dedicated a separate section to delve into their functioning and principles.

\section{Generative adversarial networks}
\label{sec:gans}
The main idea behind GANs is to train a pair of networks in competition with each other: a generator network $G$ that captures the observed data distribution $p_{X}(\mathbf{x})$ and a discriminator network $D$ that distinguishes if a sample is an original coming from real data rather than a fake coming from data generated by the generator $G$ (see Fig. \ref{img:fundamentals_GANs}). 
The training procedure for $G$ is to maximize the probability of $D$ making a mistake. GANs can be thought as a minimax two-player game which will end when a Nash equilibrium point is reached. 

In detail, the generator takes as input a noise vector $\mathbf{z}$ with distribution $p_{\text{noise}}(\mathbf{z})$ and maps it to the data space via a non-linear transformation $\hat{\mathbf{x}} = G(\mathbf{z})$. 
The vanilla value function $V(G,D)$ used to train GANs reads as follows
\begin{equation}
V(G,D) = \mathbb{E}_{\mathbf{x} \sim p_{X}(\mathbf{x})}[\log D(\mathbf{x})] + \mathbb{E}_{\mathbf{z} \sim p_{\text{noise}}(\mathbf{z})}[\log(1-D(G(\mathbf{z})))],
\label{eq:fundamentals_GANs}
\end{equation}
where the objective for the generator is to minimize $V$ while the discriminator maximizes $V$.
It can be proved that such optimization strategy, i.e.,
\begin{equation}
\label{eq:min_max_GAN}
    G^* = \argmin_G \max_D V(G,D),
\end{equation}
enables the generator to implicitly learn the true data distribution $p_{X}(\mathbf{x})$. Indeed, if $p_{\hat{X}}(\mathbf{x})$ is the distribution of the output $\hat{\mathbf{x}}$, it is true that $p_{\hat{X}} = p_{X}$ at the equilibrium.

\begin{figure}
\centering
\includegraphics[scale = 0.4]{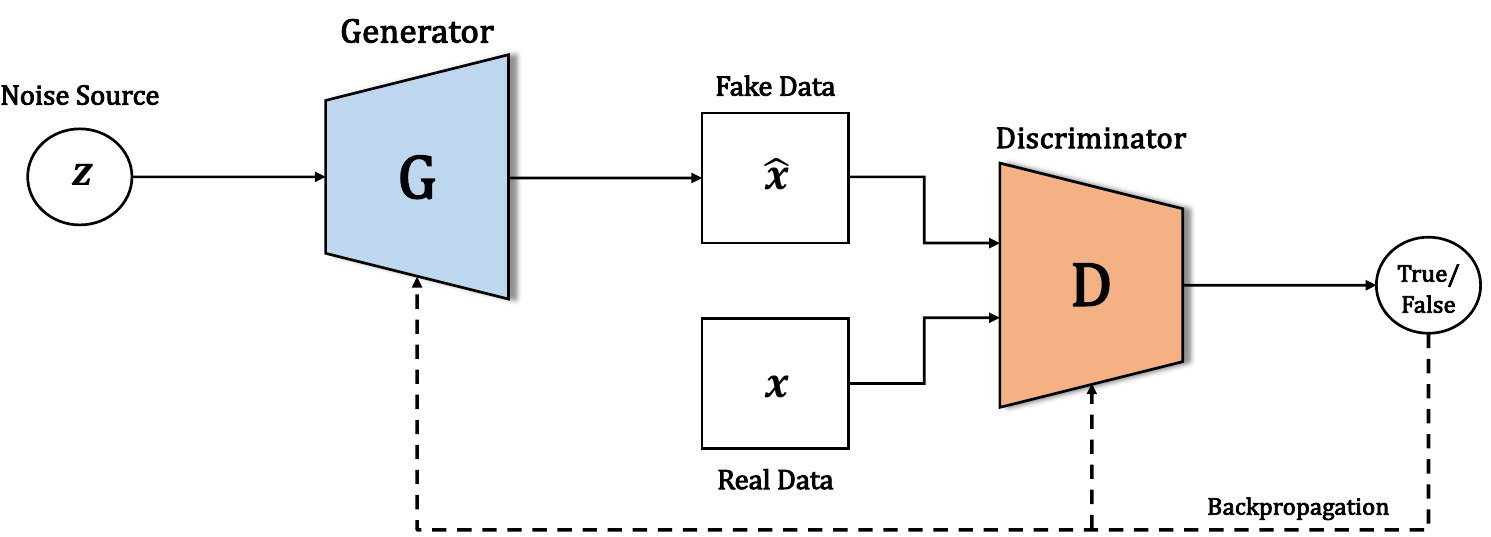}
\caption{GAN framework in which generator and discriminator are learned during the training process.}
\label{img:fundamentals_GANs}
\end{figure}

Several architectures such as the deep convolutional generative adversarial network (DCGAN) \cite{Radford2016} and the StyleGAN \cite{Karras2019} have been proposed in the literature to stabilize the learning process as it is well-known that GANs suffer from problems such as mode collapse and training instability. The latter architecture is amongst the most recent evolution of GANs as it produces state-of-the-art results in synthesizing high-resolution images. The authors achieve improved performance by modifying the generator's architecture (a mapping network for the latent representation and a synthesis one with different resolution levels) for a better understanding of the output. 
However, it is also true that minor efforts have been carried out in generating structured non-images data with GANs.  In Sec.~\ref{sec:medium}, we attempted to study the GAN-based synthesis of time-domain signals.

An alternative method for enhancing GANs involves gaining a deeper comprehension of their training process. Indeed, GANs represent a unique paradigm in ML, where the traditional notion of a cost function is replaced by a value function and a game-theoretic framework. This distinctive characteristic sets GANs apart from many other optimization problems in the field. An extremely useful interpretation of GANs is offered in \cite{Nowozin2016}, where it was shown that the discriminator in GANs acts as an auxiliary NN responsible for the estimation of the variational $f$-divergence, essentially the $\max$ operator in \eqref{eq:min_max_GAN}. The generator, instead, is a neural sampler trained to minimize such divergence, thus, any $f$-divergence can be used for training generative neural samplers, allowing for flexible divergence choices to better match the desired distribution. 

The upcoming chapters will revisit and build upon the latest concept along with the explanations provided thus far.
\chapter{Copula for Statistical Analysis and Data Generation} % 
\chaptermark{Copula}
\label{sec:copulas}

To comprehend how generative models operate and why they are so successful in estimating and reproducing data relationships, it is crucial to grasp the fundamental concept of data dependence. Understanding how variables in a dataset are interrelated is paramount to appreciating the power of generative models. 

In this context, the concept of copula emerges as a key foundation. Copulas provide a mathematical framework to characterize the nuanced dependencies between variables, enabling us to model and generate complex data distributions with precision and insight. 

This chapter is divided in two parts. The first one formulates  the data generation problem via copulas and describes how to implicitly sample from them. We achieve our objective in two well-defined separate steps.
In the second part, we finally exploit the envisioned procedure to explicitly estimate the copula density function, and consequently, the PDF of the collected data.

The results presented in this chapter are documented in \cite{Letizia2020,letizia2022copula}.

\subsection*{\textbf{Definitions and notation}}
\label{subsec:notation}
$\mathbf{X}$ denotes a multivariate random variable of dimension $d$ whose components are $X_{i}$ with $i=1,\dots, d$, while $\mathbf{x}^{(j)}$ for $j=1,\dots, n$ denotes the $j$-th realization of $\mathbf{X}$ among $n$ independent observations. $\mathbf{x}_i$ denotes a column vector of $n$ realizations of $X_{i}$. Furthermore, $x_{i}^{(j)}$ denotes the $i$-th entry (out of $d$) of the $j$-th collected sample (out of $n$ observations). In a compact notation, $\mathbf{x}=[\mathbf{x}_1,\mathbf{x}_2,\dots,\mathbf{x}_d]$ denotes a $n\times d$ matrix, sometimes referred to as the training data. 
$\Sigma_x$ and $p_{\mathbf{X}}(\mathbf{x})$ denote the sample covariance matrix and probability density function of $\mathbf{X}$, respectively. $F_{\mathbf{X}}(\mathbf{x}) = P(X_1\leq x_1, \dots, X_d \leq x_d)$ and $F^{-1}_{\mathbf{X}}(\mathbf{x})$ denote the cumulative distribution function and quantile function, respectively. 
The expected value of $\mathbf{X}$ is denoted with the expectation operator $\mathbb{E}_{\mathbf{x}\sim p_{\mathbf{X}}(\mathbf{x})}[\mathbf{X}]$. 

\section{Segmented generative networks}
\sectionmark{SGN}
\label{sec:sgn}
Recent advancements in generative networks (see Ch. \ref{sec:generative_networks}) have shown that it is possible to produce real, world-like data using deep neural networks. 

Some implicit probabilistic models that follow a stochastic procedure to directly generate data, such as GANs, have been introduced to overcome the intractability of the posterior distribution. However, the ability to model data requires a deep knowledge and understanding of its statistical dependence — which can be preserved and studied in appropriate latent spaces. 

In this section, we present a segmented generation process through linear and non-linear manipulations in a same-dimension latent space where data is projected to. Inspired by the known stochastic method to generate correlated data, we develop a segmented approach for the generation of dependent data, exploiting the concept of copula. The generation process is split into two frames, one embedding the covariance or copula information in the uniform probability space, and
the other embedding the marginal distribution information in the
sample domain. 
The proposed network structure, referred to as a
segmented generative network (SGN), also provides an empirical method to sample directly from implicit copulas.
To show its generality, we evaluate the presented approach in three application scenarios: a toy example, handwritten digits and face image generation.

\subsection{Proposed approach}
\label{subsec:sgn_proposal_approach}
The statistical dependence between the components of a multivariate random variable $\mathbf{X}$ is described by the joint probability distribution $p_{\mathbf{X}}(x_1, x_2, \dots, x_d)$. The correlation, instead, measures how the components are related on average and it is expressed in terms of the expectation $\mathbb{E}[\mathbf{X}\cdot \mathbf{X}^T]$ or the covariance $\mathbb{E}[(\mathbf{X}-m_X)\cdot (\mathbf{X}-m_X)^T]$, where $m_X$ denotes the expected value of $\mathbf{X}$. The correlation is often associated with the idea of \textit{linear} dependence.

Let $\mathbf{x} = [\mathbf{x}_1, \mathbf{x}_2, \dots, \mathbf{x}_d]$ be a set of realizations of $\mathbf{X}$, also referred to as the collected multivariate data. We would like to generate new unseen samples $\mathbf{\hat{x}} = [\mathbf{\hat{x}}_1, \mathbf{\hat{x}}_2, \dots, \mathbf{\hat{x}}_d]$ similar, in some way as we will discuss, to $\mathbf{x}$. In the following, we propose two different approaches: 
\begin{itemize}
\item The first one revisits the known stochastic generation process of correlated data (data that exhibits the same correlation as the collected one) and highlights the need for a segmentation and domain adaptation step. Such method will be referred to as segmented generative network targeting and modeling the correlation of data (SGN-C). 
\item The second one, instead, studies the stochastic generation process of dependent data (data that exhibits the same joint distribution as the collected one) by applying the same domain adaptation of SGN-C but integrating the concept of copula in order to segment the generation process. Moreover, it provides a direct method to sample from copulas. Such approach will be referred to as segmented generative network targeting and modeling the statistical dependence of data (SGN-D). 
\end{itemize}

\subsubsection{Transform sampling}
\label{subsec:sgn_transform_sampling}
The training data has been generated by some fixed unknown or difficult to construct probability distribution $p_{\mathbf{X}}(\mathbf{x}) = p_{\mathbf{X}}(x_1, x_2, \dots, x_d)$ with cumulative distribution function $F_{\mathbf{X}}(\mathbf{x}) = P(X_1\leq x_1, \dots, X_d \leq x_d)$. However, it is plausible to assume that the marginal density of each $X_i$ is known or can be easily derived as $p_{X_i}(x_i)$ with cumulative $F_{X_i}(x_i)$. Hence, the data can be mapped into a latent space with the same dimension using the inverse transform sampling method. In particular, let $U_i$ be a uniform random variable, then
\begin{equation}
X_i = F^{-1}_{X_i}(U_i)
\label{eq:sgn_InverseTransform}
\end{equation}
is a random variable with cumulative distribution $F_{X_i}$.
To project the data $\mathbf{x}$ into the latent space, it is enough to compute the transformation $u_i = F_{X_i}(x_i) \text{  }\forall i=1,\dots,d$.
This first step can be interpreted as a simple encoder which tries to represent the data in a domain space where linear manipulations are easier to be implemented.
One way to go back is to train a neural network which, given $\mathbf{u}$ as input and $\mathbf{x}$ as output, finds the inverse mapping between the two spaces, the latent and the sample ones. This inverse transformation back to the sample space can be easily interpreted as a decoder. Fig. \ref{fig:sgn_OurFramework} describes the SGN framework.

The autoencoder described so far has no generative properties since it only replicates the dataset $\mathbf{x}$. The way to generate new samples is to build a new encoded set $\mathbf{\hat{u}}$ and feed it into the already trained inverse network. 

In general, $X_i$ and $X_j$ with $i\neq j \leq d$ are dependent random variables (let us consider for example the intensity of two consecutive pixels in an image or the amplitude of a waveform like the sound). A first order of approximation is the linear dependence: the idea is to choose the new encoded set $\mathbf{\hat{u}}$ as a set of $d$ correlated uniform variables — correlation quantified by the sample covariance matrix $\Sigma_u$ of the encoded initial set $\mathbf{u}$.

\subsubsection{Correlated uniforms (SGN-C)}
\label{subsec:sgn_correlated_uniforms}

Let $\Sigma_x$ and $\Sigma_u$ be the sample covariance matrices of the dataset $\mathbf{x}$ and the latent uniform code $\mathbf{u}$, respectively. If $\mathbf{\hat{x}}$ has covariance matrix $\Sigma_{\hat{x}}$ equal to $\Sigma_x$, then we define $\mathbf{\hat{x}}$ \textit{similar} to $\mathbf{x}$. 
Denoting with $\mathbf{F^{-1}}$ the non-linear inverse cumulative distribution function, then if $\mathbf{\hat{u}}$ is similar to $\mathbf{u}$, it follows that $\mathbf{\hat{x}} = \mathbf{F^{-1}(\hat{u})}$ is similar to $\mathbf{x}$. 

To generate $d$ correlated uniform distributed random variables, we use the NORTA (Normal to anything) method \cite{NORTA}, in the following denoted as covariance method. 
The first step requires the generation of $d$ correlated Gaussian distributed random variables. In particular, given a mean vector $\mu$ and a covariance matrix $\Sigma$, to generate a sample $Y\sim \mathcal{N}(\mu,\Sigma)$ from the multivariate normal distribution, we need to consider first a vector $\mathbf{z}$ of uncorrelated Gaussian random variables, then find a matrix $\mathbf{C}$, square root of $\Sigma$ such as $\mathbf{C}\cdot \mathbf{C}^T = \Sigma$ (for example using the Cholesky decomposition). It follows that $\mathbf{y}=\mu + \mathbf{C}\cdot \mathbf{z}$ is a vector of $d$ Gaussian random variables $\{Y_i\}_{i=1}^d$ with the desired properties. Applying the probability integral transform to each entry, leads to $d$ correlated uniform random variables 
\begin{equation}
\hat{U}_i = \Phi(Y_i)
\label{eq:sgn_ProbabilityIntegralTransform}
\end{equation}
where $\Phi$ is the cumulative distribution function of the standard normal distribution.
If we wanted to impose $\Sigma_u$ as the covariance matrix of $\mathbf{\hat{u}}$, we could transform the latent code $\mathbf{u}$ into a new latent code $\mathbf{n}$ in the Gaussian space, compute the covariance matrix $\Sigma_n$ and use it to sample from the multivariate normal distribution $\mathcal{N}(\mu,\Sigma_n)$. Instead, if we only wanted to impose the correlation matrix $\mathbf{R_u}$, we would rather just sample from the multivariate normal distribution $\mathcal{N}(\mu,\mathbf{R_u})$ to get a good practical approximation as described in \cite{Falk1999}.

The generation of the encoded set can be easily implemented in parallel, which means that we do not need to wait for any information from previous samples. Once the encoded set $\mathbf{\hat{u}}$ is built, the transformation $\mathbf{F^{-1}}(\mathbf{\hat{u}})$ gives $\mathbf{\hat{x}}$ as a new generated sample, exploiting only the linear information inside the dataset. Notice that exploiting an artificial neural network for the inverse transform is not mandatory: one could use the inverse cumulative distribution function (quantile function, see \eqref{eq:sgn_InverseTransform}) implemented by step functions or kernel smoothing functions.
Sec. \ref{subsec:sgn_evaluation_of_results} will show some visual results.

\begin{figure}
% strip if you want 2 columns
\centering
\includegraphics[scale = 0.4]{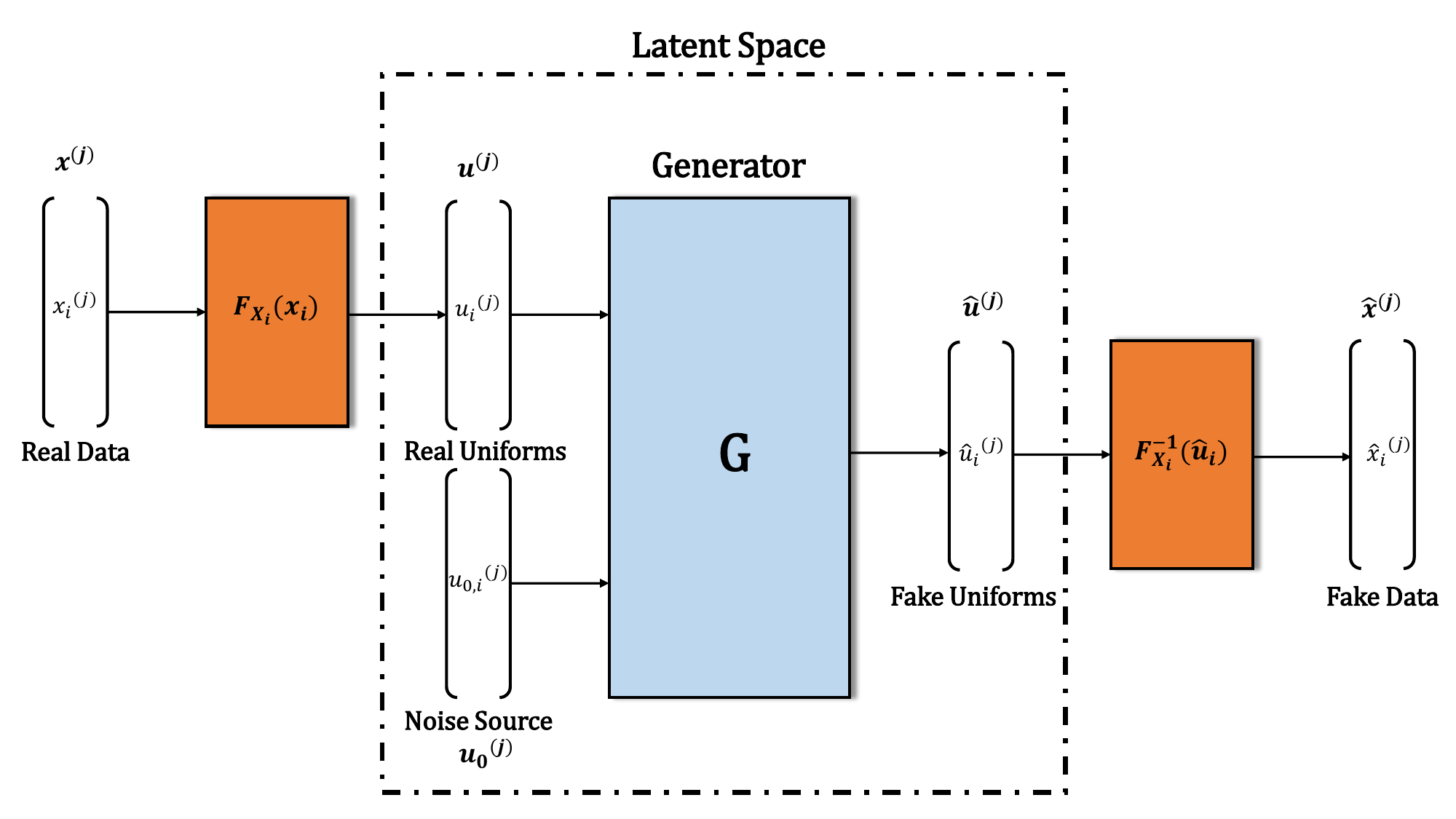}
\caption{Transform sampling and SGN approach where original data are projected into a latent space, the new codes are generated and back-projected again.}
\label{fig:sgn_OurFramework}
\end{figure}

\subsubsection{Dependent uniforms (SGN-D)}
\label{subsec:sgn_dependent_uniforms}

Working only with linear dependence (correlation) is not sufficient to fully reproduce the relationship between data. Thus, it is necessary to build a new encoded set $\mathbf{\hat{u}}$ of $d$ dependent uniform variables.  

When we discussed the procedure to build correlated uniforms, we were looking for a vector $\mathbf{\hat{u}}$ whose covariance matrix $\Sigma_{\hat{u}}$ was equal to the prescribed one $\Sigma_u$ and we built it by using the SGN-C algorithm described in Sec. \ref{subsec:sgn_correlated_uniforms}. Another approach could have been the following: given a family of functions $S_{\theta}$ which takes uncorrelated uniform variables $\mathbf{u_0}$ as input and transforms them into correlated ones $\mathbf{\hat{u}}$, one could solve the optimization problem
\begin{equation}
\theta_{\min} = \argmin_{\theta} \delta(\Sigma_u, \Sigma_{S_{\theta}(\mathbf{u_0})}),
\label{eq:sgn_DeltaSigma}
\end{equation}
where $\delta$ is a measure of distance between the sample covariance matrices of $\mathbf{u}$ and $\mathbf{\hat{u}}$, respectively.

In the same way, given the latent codes $\mathbf{u}$ with probability density function $p_{\mathbf{u}}(\mathbf{u})$, the main idea to generate dependent uniform random variables is to train a neural network $G_{\theta}$ which takes independent uniform variables $\mathbf{u_0}$ as input and maps them into new dependent ones, $\mathbf{\hat{u}}$, with distribution $q_{\mathbf{\hat{u}}}(\mathbf{\hat{u}})$. This is again a generative model whose objective function is
\begin{equation}
\theta_{\min} = \argmin_{\theta} \delta(p_{\mathbf{u}}(\mathbf{u}), q_{\mathbf{\hat{u}}}(G_{\theta}(\mathbf{u_0}))
\label{eq:sgn_DeltaP}
\end{equation}
where now $\delta$ is a measure of \textit{discrepancy} between the real distribution $p_{\mathbf{u}}$ and the generated one $q_{\mathbf{\hat{u}}}$.

Before introducing the approach chosen for solving problem \eqref{eq:sgn_DeltaP}, we focus on the particular properties that $p_{\mathbf{u}}$ and $q_{\mathbf{\hat{u}}}$ have, recalling the concept of \textit{copula}.

Let $(U_1,U_2,\dots,U_d)$ be uniform random variables, then their joint cumulative distribution function $F_{\mathbf{U}}(\mathbf{u}) = P(U_1\leq u_1, \dots, U_d \leq u_d)$ is a copula $C:[0,1]^d \rightarrow [0,1]$ (see \cite{Nelsen2006} for an analytic description). Copulas are a useful tool to construct multivariate distributions and analyze data dependence. Indeed, Sklar's theorem \cite{Sklar} states that if $F_{\mathbf{X}}$ is a $d$-dimensional cumulative distribution function with continuous marginals $F_{X_1},\dots,F_{X_d}$, then $F_{\mathbf{X}}$ has a unique copula representation
\begin{equation}
F_{\mathbf{X}}(x_1,\dots,x_d) = C(F_{X_1}(x_1),\dots,F_{X_d}(x_d)).
\label{eq:sgn_SklarF}
\end{equation}
Moreover, when the multivariate distribution has a probability density function $f_{\mathbf{X}}$, it holds that
\begin{equation}
f_{\mathbf{X}}(x_1,\dots,x_d) = c(F_{X_1}(x_1),\dots,F_{X_d}(x_d))\cdot \prod_{i=1}^{d}{f_{X_i}(x_i)},
\label{eq:sgn_Sklarf}
\end{equation}
where $c$ is the density of the copula. This last relationship is rather interesting because it affirms that the dependence internal structure of $f_{\mathbf{X}}$ can be recovered using the density of the marginals $f_{X_i}$ and the density $c$ of the copula. Under this perspective, problem \eqref{eq:sgn_DeltaP} can be reformulated as
\begin{equation}
\theta_{\min} = \argmin_{\theta} \delta(c_{\mathbf{u}}(\mathbf{u}), c_{\mathbf{\hat{u}}}(G_{\theta}(\mathbf{u_0})),
\label{eq:sgn_DeltaC}
\end{equation}
where $c_{\mathbf{u}}$ and $c_{\mathbf{\hat{u}}}$ are the densities of the copulas related to $\mathbf{u}$ and $\mathbf{\hat{u}}$, respectively. The objective is to reach the equality $c_{\mathbf{u}}=c_{\mathbf{\hat{u}}}$ and to sample a new encoded set $\mathbf{\hat{u}}$ from $c_{\mathbf{u}}$, preserving the entire hidden dependence in $\mathbf{u}$ by construction.
Due to the fact that we are working with a finite set of samples, we should build the empirical copula of the encoded given set $\mathbf{u}$, with expression
\begin{equation}
C_n(\mathbf{u}) = \frac{1}{n}\sum_{j=1}^{n}{\mathbbm{1}_{\{U_{1}^{(j)} < u_1, \dots, U_{d}^{(j)} < u_d\}}}
\end{equation}
where $n$ is the number of observations, $U_{i}^{(j)}$ denotes the $j$-th realization of the $i$-th random variable, with $i=1,\dots,d$, and $\mathbbm{1}_A$ is the indicator function. When its dimensionality increases, this is not feasible anymore and one way to proceed is to choose a parametric family of multivariate copulas, like the multivariate Gaussian copula with correlation matrix $\Sigma$ (equivalent to NORTA \cite{Bedford2016}) or the multivariate Student's t-copula with $\nu$ degrees of freedom and correlation matrix $\Sigma$, which is more suitable for data containing phenomena of extreme value dependence \cite{t-copula}. 
Archimedean copulas are a particular class of copulas that admit a closed formula and allow modeling dependence varying one parameter, they have the following representation
\begin{equation}
C(u_1,\dots, u_d; \theta) = \psi^{[-1]}(\psi(u_1,\theta)+\dots+\psi(u_d,\theta); \theta)
\end{equation}
where $\psi$ is a continuous, strictly decreasing and convex generator function with pseudo-inverse $\psi^{[-1]}$ \cite{Nelsen2006} and $\theta$ is a parameter. 
Sec. \ref{subsec:sgn_evaluation_of_results} compares some results using different copulas in specific applications.
An interesting way to overcome the lack of parametric multivariate copulas is to take advantage of the huge number of parametric families of bivariate copulas through the concept of vine copulas \cite{VineCopula, Bedford2016}. The idea is to model the copula density as the product of pairs of conditional copula bivariate densities, under a tree or vine decomposition which leads to tractable and flexible probabilistic models. A recent attempt to generate data using vine copulas \cite{VCAE} exploited autoencoders and their ability to find lower dimensional representation.

We have argued about the parameters of the function $\delta$ that we are trying to minimize, but we never mentioned so far the type of distance/discrepancy that $\delta$ has to mime. Since we are elevating our approach to a general one which does not focalize on low dimension of data or specific type of distributions, we propose two different approaches. The first one is the maximum mean discrepancy (MMD) metric.

Let $\mathbf{x}=\{\mathbf{x}^{(1)},\dots,\mathbf{x}^{(l)}\}$ and $\mathbf{y}=\{\mathbf{y}^{(1)},\dots,\mathbf{y}^{(m)}\}$ be observations taken independently from $p = c_{\mathbf{u}}$ and $q = c_{\mathbf{\hat{u}}}$, respectively. Let $(\chi,d)$ be a nonempty compact metric space in which $p$ and $q$ are defined. Then, the maximum mean discrepancy is defined as
\begin{equation}
\text{MMD}(\mathcal{F},p,q) := \sup_{f\in \mathcal{F}}(\mathbb{E}_{\mathbf{x}\sim p}[f(\mathbf{x})]-\mathbb{E}_{\mathbf{y}\sim q}[f(\mathbf{y})]),
\label{eq:sgn_MMD}
\end{equation}
where $\mathcal{F}$ is a class of functions $f:\chi \rightarrow \mathbb{R}$. 
Since $p=q$ if and only if $\mathbb{E}_{\mathbf{x}\sim p}[f(\mathbf{x})]=\mathbb{E}_{\mathbf{y}\sim q}[f(\mathbf{y})]$ $\forall f\in \mathcal{F}$, MMD is a metric that measures the disparity between $p$ and $q$ (see \cite{FortetMMD}).

When $\mathcal{F}$ is a \textit{reproducing kernel Hilbert space} (RKHS), $f$ can be replaced by a kernel $k\in \mathcal{H}$ (i.e. Gaussian or Laplace kernels). In this case, Gretton et al. \cite{Gretton2012} showed that 
\begin{equation}
\text{MMD}^2(\mathcal{H},p,q) = \mathbb{E}_{\mathbf{x,x'}\sim p}[k(\mathbf{x,x'})]-2\mathbb{E}_{\mathbf{x}\sim p,\mathbf{y}\sim q}[k(\mathbf{x,y})]  +\mathbb{E}_{\mathbf{y,y'}\sim q}[k(\mathbf{y,y'})],
\label{eq:sgn_MMDK}
\end{equation}
where $\mathbf{x'}$ is an independent copy of $\mathbf{x}$ with the same distribution, and $\mathbf{y'}$ is an independent copy of $\mathbf{y}$.
For practical implementation, an unbiased empirical estimate is given by
\begin{align}
\text{MMD}_u^2(\mathcal{H},\mathbf{x},\mathbf{y}) = & \frac{1}{l(l-1)}\sum_{i\neq i'}{k(\mathbf{x}^{(i)},\mathbf{x}^{(i')})} \nonumber \\ 
& + \frac{1}{m(m-1)}\sum_{j\neq j'}{k(\mathbf{y}^{(j)},\mathbf{y}^{(j')})} \nonumber \\ 
& - \frac{2}{lm}\sum_{i=1}^{l}{\sum_{j=1}^{m}{k(\mathbf{x}^{(i)},\mathbf{y}^{(j)})}}.
\label{eq:sgn_MMDKe}
\end{align}

Finally, we can define the type of discrepancy $\delta$ as the $\text{MMD}_u^2$ estimator, in particular
\begin{equation}
\delta(c_{\mathbf{u}}(\mathbf{u}), c_{\mathbf{\hat{u}}}(G_{\theta}(\mathbf{u_0})) = \text{MMD}_u^2(\mathcal{H},\mathbf{u},G_{\theta}(\mathbf{u_0}))
\label{eq:sgn_DeltaMMD}
\end{equation}
and proceed with its minimization by the exploitation of the chain rule and the gradient descent method as described in \cite{MMDnets}. We found this methodology not effective for embedding the copula dependence structure. Therefore, we decided to use it during the evaluation phase (see. Sec. \ref{subsec:sgn_evaluation_of_results}) and, instead, adopt a GAN framework to reproduce the copula dependence.

The core idea is to identify the copula with a first generator $G_u$. The generator $G_u$ receives an equal-dimensional independent uniform noise source $\mathbf{z}$ as input and internally creates the copula dependence structure by opposing a discriminator $D_u$ which tries to distinguish between real and fake dependent uniform samples, $\mathbf{u}$ and $\mathbf{\hat{u}}$, respectively. 
The corresponding value function reads as follows
\begin{equation}
V(G_u,D_u) =  \mathbb{E}_{\mathbf{u} \sim c_{\mathbf{u}}(\mathbf{u})}[\log D_u(\mathbf{u})]  + \mathbb{E}_{\mathbf{z} \sim \mathcal{U}(0,1)}[\log(1-D_u(G_u(\mathbf{z})))].
\label{eq:sgn_V1}
\end{equation}

At the same time, in order to strengthen the copula generation process,  another GAN $(G_x, D_x)$ mimes the relationship (the quantile function $F_{X_i}^{-1}$ for $i=1,\dots,d$) between the latent code $\mathbf{u}$ and the sample space $\mathbf{x}$. To do so, it takes the generated uniforms $\mathbf{\hat{u}}$ and statistically transforms them into new samples $\mathbf{\hat{x}}$, checking again the statistical significance with another discriminator $D_x$. The second value function is defined as
\begin{equation}
V(G_x,D_x) = \mathbb{E}_{\mathbf{x} \sim p_{\mathbf{x}}(\mathbf{x})}[\log D_x(\mathbf{x})]
 + \mathbb{E}_{\mathbf{\hat{u}} \sim c_{\mathbf{\hat{u}}}}[\log(1-D_x(G_x(\mathbf{\hat{u}})))].
\label{eq:sgn_V2}
\end{equation}
We denote the full generation process as segmented generative networks modeling the dependence (SGN-D). Succinctly, the first generator $G_u$ embeds the copula density structure $c_{\mathbf{u}}(\mathbf{u})$ and produces dependent uniform samples $\mathbf{\hat{u}}$. The second generator $G_x$ embeds the marginals structure (the quantile $F_{X_i}^{-1}$ for $i=1,\dots,d$) and statistically maps $\mathbf{\hat{u}}$ into new samples $\mathbf{\hat{x}}$.
Fig. \ref{fig:sgn_UXGAN} graphically summarizes the entire followed methodology. 

\begin{figure}
% strip if you want 2 columns
\centering
\includegraphics[scale = 0.5]{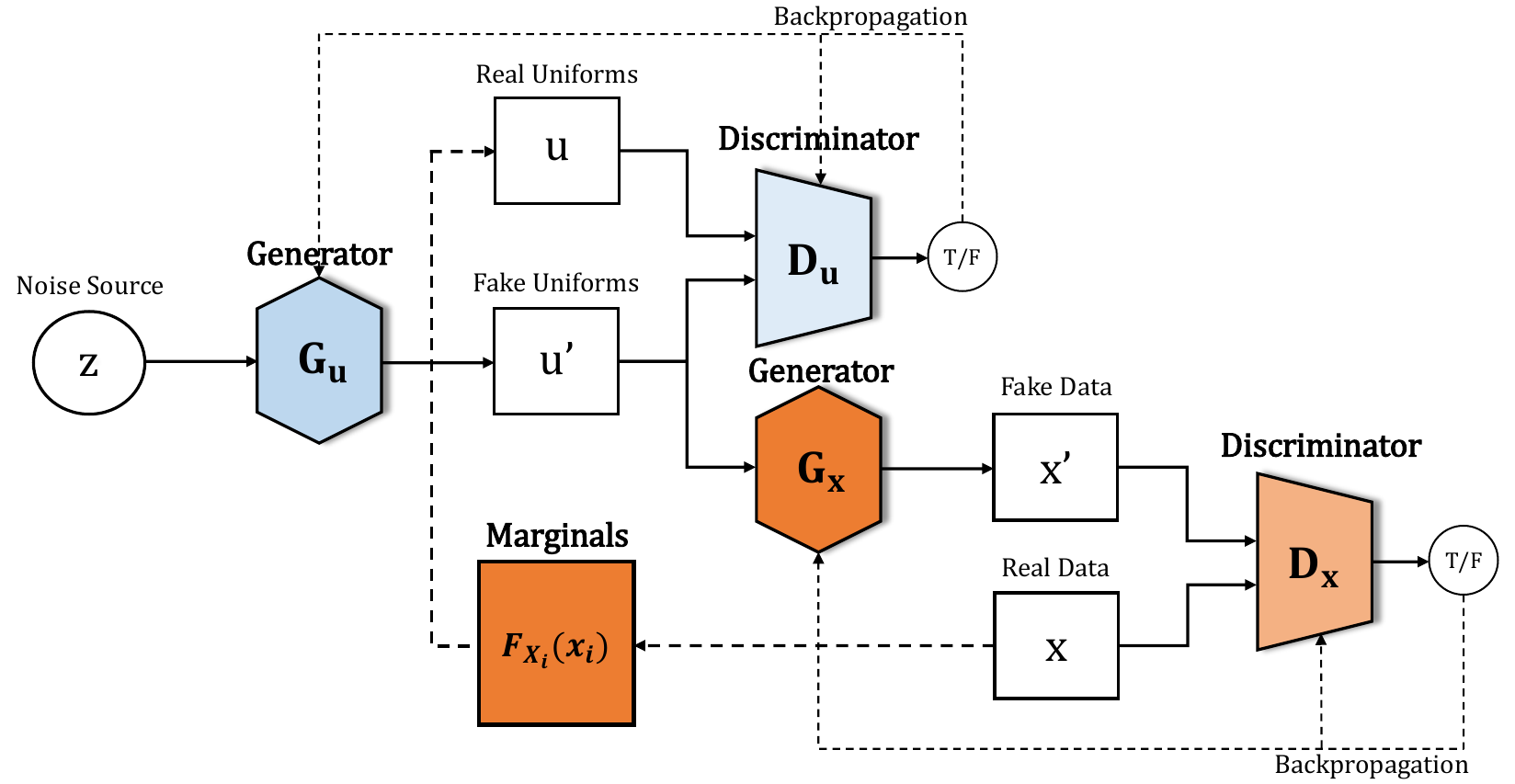}
\caption{Segmented generative network modeling the dependence: a first network builds the dependent uniforms and a second network converts them into new data.}
\label{fig:sgn_UXGAN}
\end{figure}

To identify the marginal cumulative distribution function $F_{X_i}$, for $i=1,\dots, d$, it is sufficient to \textit{cyclically} repeat the aforementioned segmentation process by concatenating another GAN which takes as input the last generated samples $\mathbf{\hat{x}}$ and projects them into a new encoded set $\mathbf{\tilde{u}}$.  

Next section presents some graphical and numerical results, comparing the different methodologies.

\subsection{Evaluation of results}
\label{subsec:sgn_evaluation_of_results}
This section discusses and compares the SGN-C and SGN-D approaches to generate new samples in three different case studies. We consider a $2D$ toy dataset and two higher dimensional datasets, MNIST \cite{MNIST} and CelebA \cite{CelebA}. For each of them we qualitatively evaluate the generation performance of the SGN-C approach (e.g. covariance, Gaussian and $t$-copula) and of the SGN-D approach (with GANs). Finally, for the last dataset scenario, we compare some quantitative results using three different metrics. 

\subsubsection{Qualitative evaluation}
We used Keras with TensorFlow \cite{TensorFlow} as backend to implement the proposed model. The code has been tested on a Windows-based operating system provided with Python 3.6, TensorFlow 1.13.1, Intel core i7-3820 CPU and one GPU GTX1080. 
Two different types of neural networks architectures have been deployed according to the specific dataset under analysis. To allow the reproducibility of the work presented, Tab. \ref{tab:combined_toy} and \ref{tab:combined_images} report all the implementation aspects. The details of the network and the chosen parameters for the first toy example are reported in Tab. \ref{tab:combined_toy}. For the two dataset containing images, the established DCGAN \cite{Radford2016} architecture has been used as foundation for the proposed SGN-D structure with all details reported in Tab. \ref{tab:combined_images}. To overcome numerical issues in the cases of the images, the definition interval of the uniform distribution is transformed from $[0,1]$ to $[-1,1]$.

\begin{table}
	\scriptsize % text dimension
	\centering
	\caption{SGN-D based on GAN architecture for synthetic toy model.}
	\begin{tabular}{ p{5cm}|p{2cm}|p{1.5cm}} 
		\toprule
		\textbf{Operation} & \textbf{Feature maps}  		& \textbf{Activation}  \\
		\midrule
		\textbf{Generator $G_u$} & &  \\ 
		$G_u(\mathbf{z}):\mathbf{z} \sim \mathcal{U}(0,1)$ & 2\\ 
		Fully connected & $500$ & ReLU \\
		Dropout & $0.5$ &  \\
		Fully connected & $2$ & Sigmoid \\  \hline
		\textbf{Generator $G_x$} &&   \\ 
		$G_x(\mathbf{u}):\mathbf{u} \sim c_u$ & 2\\ 
		Fully connected & $500$ & ReLU \\
		Dropout & $0.5$ &  \\
		Fully connected & $2$ & Sigmoid \\  \hline
		
		\textbf{Discriminator $D_u$} & &  \\ 
		$D_u(\mathbf{u}):\mathbf{u} \sim c_u$ & $2$\\ 
		Fully connected & $500$ & LeakyReLU \\ 
		Dropout & $0.4$ &  \\ 
		Fully connected & $10$ & LeakyReLU \\ 
		Fully connected & $1$ & Sigmoid \\  \hline
		\textbf{Discriminator $D_x$} & &  \\ 
		$D_x(\mathbf{x}):\mathbf{x} \sim p_x(\mathbf{x})$ & $2$ \\ 
		Fully connected & $500$ & LeakyReLU \\ 
		Dropout & $0.4$ &  \\ 
		Fully connected & $10$ & LeakyReLU \\ 
		Fully connected & $1$ & Sigmoid \\  \hline   \hline

		Number of generators & \multicolumn{2}{c}{2} \\
		Batch size & \multicolumn{2}{c}{64} \\
		Number of iterations & \multicolumn{2}{c}{50000} \\ 
		Leaky ReLU slope &  \multicolumn{2}{c}{0.2} \\ 
		Learning rate &  \multicolumn{2}{c}{0.0002}  \\ 
		Optimizer &  \multicolumn{2}{c}{Adam ($\beta_1$ = 0.5, $\beta_2$ = 0.9999)}  \\ \hline
		%\bottomrule	
	\end{tabular}
	
	\label{tab:combined_toy}
	%\noindent\makebox[\linewidth]{\rule{0.85\paperwidth}{0.4pt}} %linea
\end{table}

\begin{table}
	\scriptsize % text dimension
	\centering
	\caption{SGN-D based on DCGAN architecture for images.}
	\begin{tabular}{ p{5cm}|p{2cm}|p{1.5cm}} 
		\toprule
		\textbf{Operation} & \textbf{Feature maps}  		& \textbf{Activation}  \\
		\midrule
		\textbf{Generator $G_u$} & &  \\ 
		$G_u(\mathbf{z}):\mathbf{z} \sim \mathcal{U}(-1,1)$ & 1024\\ 
		Fully connected & $4096$ & ReLU \\ 
		Reshape and upsampling & $(8,8,64)$ &  \\ 
		Convolution and BatchNorm & $64$ & ReLU \\ 
		Upsampling & &  \\ 
		Convolution and BatchNorm & $32$ & ReLU \\ 
		Convolution & $3$ & Tanh \\  \hline
		\textbf{Generator $G_x$} &&   \\ 
		$G_x(\mathbf{u}):\mathbf{u} \sim c_u $ & $(32,32,3)$ \\ 
		Convolution and BatchNorm & $128$ & ReLU \\ 
		Upsampling & &  \\ 
		Convolution and BatchNorm & $64$ & ReLU \\ 
		Convolution & $3$ & Tanh \\  \hline
		
		\textbf{Discriminator $D_u$} & &  \\ 
		$D_u(\mathbf{u}):\mathbf{u} \sim c_u$ & $(32,32,3)$\\ 
		Convolution & $32$ & LeakyReLU \\ 
		Dropout & $0.25$ &  \\ 
		Convolution and BatchNorm & $64$ & LeakyReLU \\ 
		Dropout & $0.25$ &  \\ 
		Convolution and BatchNorm & $128$ & LeakyReLU \\ 
		Dropout & $0.25$ &  \\ 
		Convolution and BatchNorm & $256$ & LeakyReLU \\ 
		Dropout & $0.25$ &  \\ 
		Flatten and dense & $1$ & Sigmoid \\   \hline
		\textbf{Discriminator $D_x$} & &  \\ 
		$D_x(\mathbf{x}):\mathbf{x} \sim p_x(\mathbf{x})$ & $(32,32,3)$ \\ 
		Convolution & $32$ & LeakyReLU \\ 
		Dropout & $0.25$ &  \\ 
		Convolution and BatchNorm & $64$ & LeakyReLU \\ 
		Dropout & $0.25$ &  \\ 
		Convolution and BatchNorm & $128$ & LeakyReLU \\ 
		Dropout & $0.25$ &  \\ 
		Convolution and BatchNorm & $256$ & LeakyReLU \\ 
		Dropout & $0.25$ &  \\ 
		Flatten and dense & $1$ & Sigmoid \\   \hline  \hline

		Number of generators & \multicolumn{2}{c}{2} \\
		Batch size & \multicolumn{2}{c}{64} \\
		Number of iterations & \multicolumn{2}{c}{50000} \\ 
		Leaky ReLU slope &  \multicolumn{2}{c}{0.2} \\ 
		Learning rate &  \multicolumn{2}{c}{0.0002}  \\ 
		Optimizer &  \multicolumn{2}{c}{Adam ($\beta_1$ = 0.5, $\beta_2$ = 0.9999)}  \\ \hline
		%\bottomrule	
	\end{tabular}
	
	\label{tab:combined_images}
	%\noindent\makebox[\linewidth]{\rule{0.85\paperwidth}{0.4pt}} %linea
\end{table}

\subsubsection{1) 2D toy database}
Consider a set of $2$ random variables whose statistics is computed from a collection of $2000$ observations. Thus, given $\mathbf{x} = [\mathbf{x}_1, \mathbf{x}_2]$, we wish to generate a new sample $\mathbf{\hat{x}} = [\mathbf{\hat{x}}_1, \mathbf{\hat{x}}_2]$. In order to impose a non-linear statistical dependence structure, we build $\mathbf{x}$ as follows
\begin{equation}
\label{eq:sgn_toy2D}
\mathbf{x} = [\sin(t), t\cos(t)] + \mathbf{n},
\end{equation}
where $t\sim \mathcal{N}(0,1)$ and $\mathbf{n}\sim \mathcal{N}(\mathbf{0},\sigma^2 \mathbb{I})$ with $\sigma = 0.01$. 

Proceeding as explained in Sec. \ref{subsec:sgn_correlated_uniforms} leads to a set of correlated uniforms and correlated samples, $\mathbf{\hat{u}}$ and $ \mathbf{\hat{x}}$, respectively. Despite having the same covariance matrix and the same marginals, the linear transformation is not capable to account for the full dependence structure as shown in the top row of Fig. \ref{fig:sgn_Case1}.
What is missing is indeed the copula density component in \eqref{eq:sgn_Sklarf}. Since in general there is no closed-form expression for the copula density  $c(\mathbf{u})$, the multivariate Gaussian copula and the Student's $t$-copula are possible choices that contain information regarding correlation coefficients, thus linear properties. Nevertheless, both of them are not enough to cover or at least approximate the dependence in $\mathbf{x}$.
Therefore, we considered both Clayton \cite{ClaytonCop} and Frank Archimedean copulas. From the central row of Fig. \ref{fig:sgn_Case1}, we can immediately understand that there exists a set of parameters $\theta$ which approximates the dependence around the mean values but is not able to do the same around the tails.

In the low-dimensional space, i.d. $2D$, it is still feasible to calculate the empirical copula for a certain bins resolution. In such a case, sampling from copula results in a trivial task and provides good quality of the generated samples (bottom left corner of Fig. \ref{fig:sgn_Case1}). Whenever the space dimension increases, such approach cannot be easily followed anymore due to numerical problems, reason why a SGN-D based methodology, which implicitly estimates the distribution, can be exploited.
Bottom sub-plots with orange samples of Fig. \ref{fig:sgn_Case1} show the generated samples under the SGN-D framework for both $G_u$ (back-projected with quantiles in sample domain) and $G_x$ output samples. 

\begin{figure}
% strip if you want 2 columns
\centering
\includegraphics[scale = 0.6]{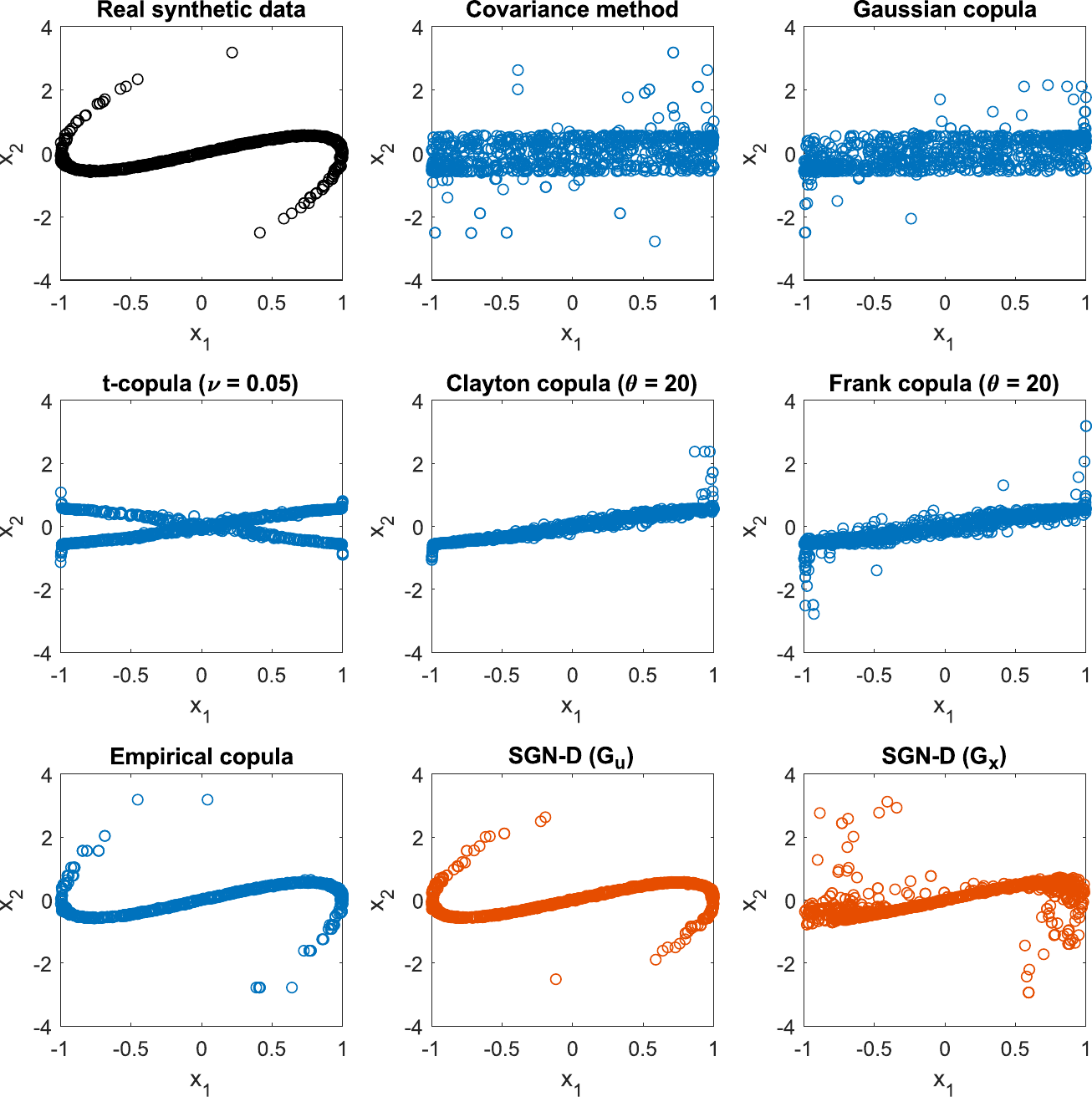}
\caption{Comparison of $2D$ samples generated using SGN-C, Archimedean and empirical copulas (blue samples), and SGN-D (orange samples) approaches.}
\label{fig:sgn_Case1}
\end{figure}

\subsubsection{2) Handwritten digit database}
Consider now a set $28\times 28$ pixels representing images of the digit $4$. This results in $784$ dependent random variables whose statistics is computed from a collection of $2000$ observations. Thus, given $\mathbf{x} = [\mathbf{x}_1, \mathbf{x}_2, \dots, \mathbf{x}_{784}]$, we wish to generate a new sample $\mathbf{\hat{x}} = [\mathbf{\hat{x}}_1, \mathbf{\hat{x}}_2,\dots, \mathbf{\hat{x}}_{784}]$. The non-linear dependence structure is an intrinsic property of images so this example fits the purpose.

Again, following the steps presented in Sec. \ref{subsec:sgn_dependent_uniforms} for both SGN-C and SGN-D methods leads to a set of correlated uniforms and correlated samples, $\mathbf{\hat{u}}$ and $ \mathbf{\hat{x}}$, respectively. The covariance method is not enough to obtain a smooth detailed picture (see the generated digits in Fig. \ref{fig:sgn_Case2}b), nevertheless it is able to capture and reproduce in most of the cases the essence of the picture, i.e., the digit $4$. 
On the other hand, it is interesting to notice that the digits generated by the generator $G_x$ (Fig. \ref{fig:sgn_Case2}f) cannot be distinguished from the real, while the digits marginally back-projected from dependent uniforms coming from generator $G_u$ (Fig. \ref{fig:sgn_Case2}e) are rather blurry. Such effect is due to the discrete nature of the handwritten digit dataset distribution. Indeed, the joint distribution is mostly non zero around small spheres centered in $0$ and $1$, therefore the approximation of the marginals $F_{X_i}(x_i)$ results poor in the intermediate values (due to the steepness of the cumulative function). At the same time, the output of $G_u$ is a set of dependent uniforms with values uniformly distributed from $0$ to $1$ which have to be transformed into the sample space, using the poorly estimated $1D$ inverse cumulative distribution function. The second GAN $G_x$ solves this numerical issue by mapping the data from the uniform space into the sample one through highly non-linear smooth transformations. 

\begin{figure}
% strip if you want 2 columns
\centering
\includegraphics[scale = 0.6]{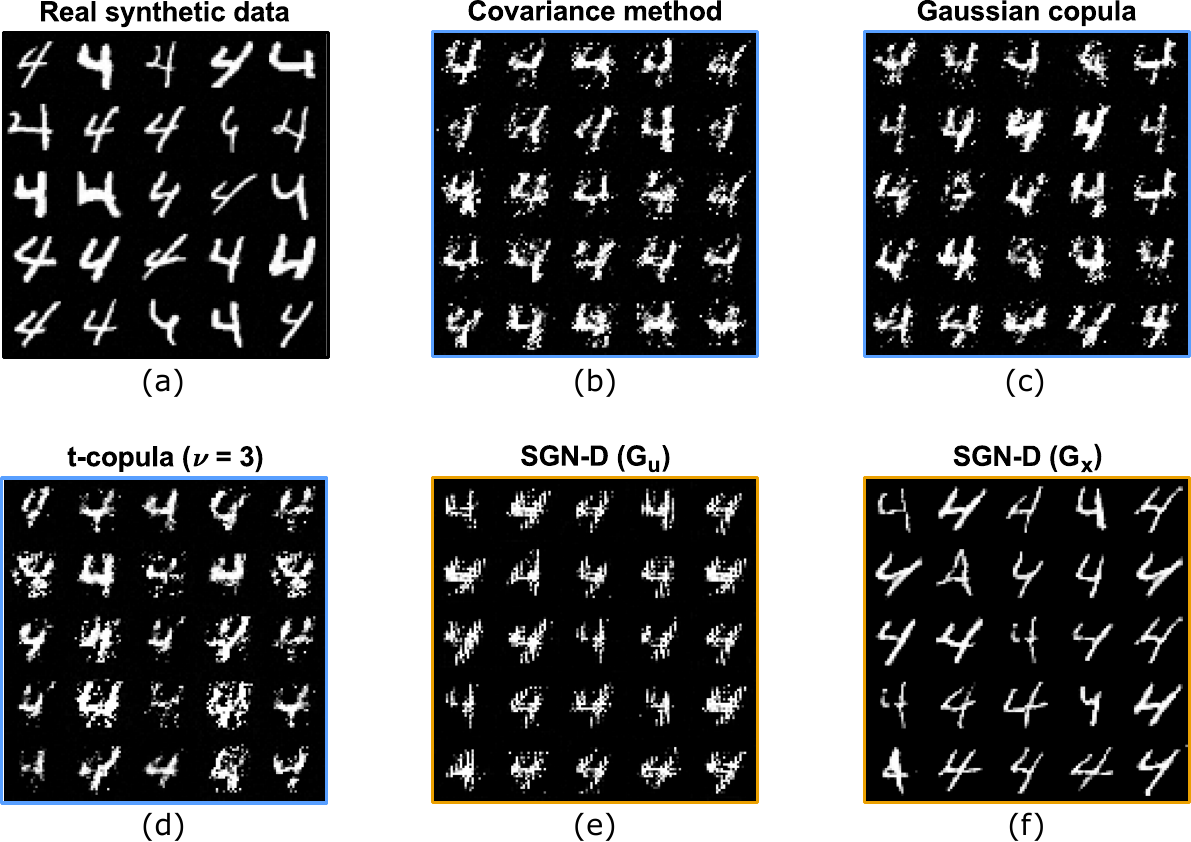}
\caption{Comparison of high-dimensional samples (digits) generated using SGN-C (b)-(d) and SGN-D (e)-(f) approaches.}
\label{fig:sgn_Case2}
\end{figure}

\subsubsection{3) Celebrity faces database}
As last example, we propose a set of (cropped) $32\times 32$ color images representing faces of celebrities covering some pose variations and including different backgrounds. The motivation resides in the intrinsic continuous property of the distribution since all the colors are feasible, yielding to robust cumulative marginals. 
The CelebA dataset \cite{CelebA} contains more than $200K$ faces. To be coherent with previous examples and to focus more on the correct identification of the block components charged to generate both correlated and dependent uniforms rather than to their quality, we considered only the first $20000$ samples of the dataset.
Fig. \ref{fig:sgn_Case3} illustrates the results.

The methods involving covariance and parametric copulas perform poorly in terms of quality of the details but capture the relevant information of the image and replicate it in new blurry faces. On the contrary, GANs introduce higher non-linear dependence, thus harmonic details, the more the network trains itself. 

The most interesting part is that, conversely to the MNIST case where the dependent uniforms were correctly generated but erroneously back-projected to sample domain due to poor quantiles, this time the estimated marginals do not have steep gradients. Therefore, the projected generated samples are smooth as depicted in Fig. \ref{fig:sgn_Case3}e. Moreover, these samples are extremely similar to the output of the generator $G_x$ (Fig. \ref{fig:sgn_Case3}f) accordingly to the interpretation that $G_x$ mimes the quantile functions $F_{X_i}^{-1}(x_i)\; \forall i \in \{1,\dots,1024\}$. Fig. \ref{fig:sgn_Case3_u} illustrates an example of data representation (a) and data generation (b) in the uniform space.

\begin{figure}
% strip if you want 2 columns
\centering
\includegraphics[scale = 0.6]{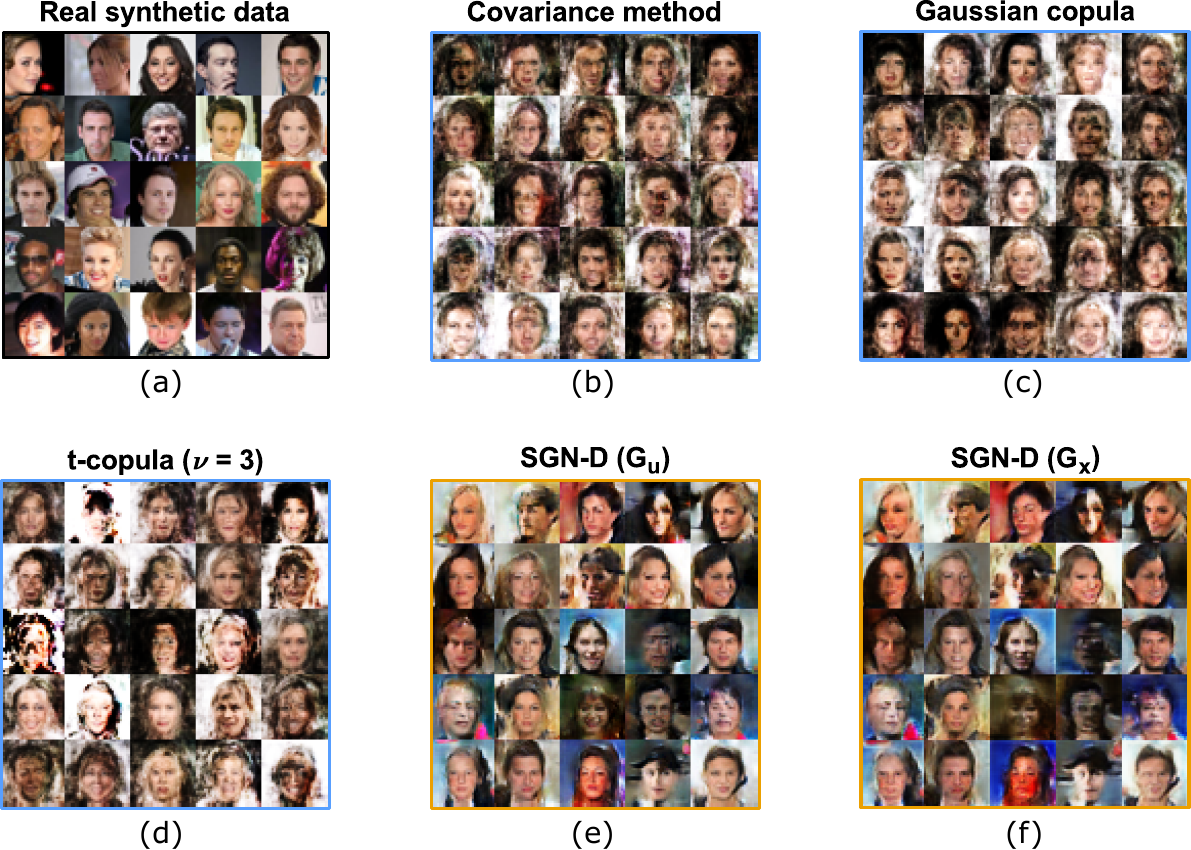}
\caption{Comparison of high-dimensional samples (faces) generated using SGN-C (b)-(d) and SGN-D (e)-(f) approaches.}
\label{fig:sgn_Case3}
\end{figure}

\begin{figure}
% strip if you want 2 columns
\centering
\includegraphics[scale = 0.8]{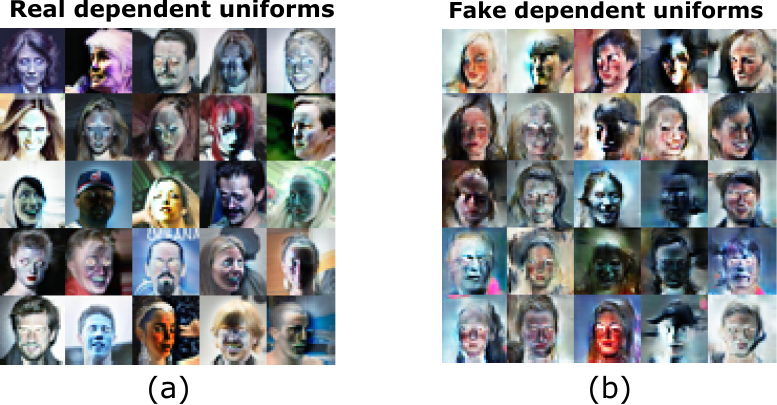}
\caption{Dependent uniforms obtained from transform sampling of original data (a) and obtained as output of $G_u$ (b).}
\label{fig:sgn_Case3_u}
\end{figure}

\subsubsection{Quantitative evaluation}
While several measures have been introduced so far to assess the performance of a specific generative model, there is no consensus as to which measure is the most appropriate \cite{ProsCons}. Nevertheless, lately, measures that deal with embedding layers and feature space have found extensive use. For the purpose of this section, we decided to adopt three of them, in particular, the Inception Score (IS) \cite{InceptionScore}, the Frèchet Inception Distance (FID) \cite{FID}, and the Kernel Inception Distance (KID) \cite{KID} metrics.

The IS computes the average KL divergence between the conditional distribution of the images label $p(y|\mathbf{x})$ and the marginal distribution $p(y)=\mathbb{E}_{\mathbf{x}}[p(y|\mathbf{x})]$, on the pre-trained Inception Net \cite{InceptionV3}. It is defined as
$\exp(\mathbb{E}_{\mathbf{x}}[D_{\text{KL}}(p(y|\mathbf{x})||p(y))])$ and assumes high values for a low entropy of $p(y|\mathbf{x})$, achieved when samples are easily classifiable, and for a high entropy of $p(y)$, to favor diversity.

FID compares the statistics of generated samples to real ones by computing the Frèchet distance between two multivariate Gaussian distributions. Indeed, both data are projected into a feature space (Inception representations) wherein a Gaussian distribution fits them.  

\begin{equation}
\text{FID}(r,g) = ||\mu_r - \mu_g||_2^2 + \text{Tr}\biggl(\Sigma_r + \Sigma_g - 2(\Sigma_r \Sigma_g)^{\frac{1}{2}}\biggr),
\end{equation}
where $X_r \sim \mathcal{N}(\mu_r, \Sigma_r)$ and $X_g \sim \mathcal{N}(\mu_g, \Sigma_g)$ are outputs of a pool layer in the Inception Net \cite{InceptionV3} for real and generated samples, respectively. Low FID values correspond to better similarity in distribution.

However, the Gaussianity of the Inception representations is often not guaranteed. As a result of the ReLU activations, the representations are not negative with some components equal to zero \cite{KID}. To overcome such limitation, we decided to use the KID metric \cite{KID}. 
KID is the squared MMD (Sec.\ref{subsec:sgn_dependent_uniforms}) between the Inception representations. In particular, let $\phi(\cdot)$ be the function mapping the real ($\mathbf{x}_r$) and generated ($\mathbf{x}_g$) samples into the Inception representation, then
\begin{equation}
\text{KID}(r,g) = \text{MMD}^2(\mathcal{H},\phi(\mathbf{x}_r),\phi(\mathbf{x}_g)).
\end{equation}
We used the polynomial kernel $k(\mathbf{x},\mathbf{y})=(\frac{1}{d} \mathbf{x}\cdot \mathbf{y}^T +1)^3$, where $d$ is the representation dimension. Compared to FID, KID has the advantage of being independent from the distribution of the latent representation.

We evaluated the scores only for the generated samples in the CelebA dataset scenario. The Inception network is a deep CNN  pre-trained on the ImageNet dataset. Hence, datasets that are too semantically different from ImageNet would lead to poor inception scores.
Since we are not interested in getting the best performance out of our architecture, but rather in a fair comparison between the different methodologies, we look at relative results. In particular, Tab. \ref{tab:InceptionScores} reports the scores for the different methodologies adopted. 

\begin{table}[]
\centering
\begin{tabular}{l|c|c|c|c|c|c|}
\cline{2-7}
                                          & \multicolumn{6}{c|}{\textbf{CelebA Dataset ($32\times 32$)}} \\ \hline

\multicolumn{1}{|l|}{\textbf{Space}}     & \multicolumn{3}{c|}{\textbf{Uniform}}       & \multicolumn{3}{c|}{\textbf{Sample}}      \\ \hline \hline                                      
                                          
\multicolumn{1}{|l|}{\textbf{Method}}     & \textbf{IS}  & \textbf{FID} & \textbf{KID} & \textbf{IS} & \textbf{FID} & \textbf{KID}     \\ \hline \hline

\multicolumn{1}{|l|}{Real synthetic data} & $3.71$ & $25.6$     & $0.00$ &  $2.77$                    & $18.0$ & $0.00$ \\ \hline \hline
\multicolumn{1}{|l|}{Covariance}         & $3.43$ & $210$     & $0.19$ & $2.13$                    & $136$ & $0.11$                        \\ \hline
\multicolumn{1}{|l|}{Gaussian copula}    & $3.30$ & $208$     & $0.19$ & $2.16$                    & $136$ & $0.11$                        \\ \hline
\multicolumn{1}{|l|}{t-copula}            & $3.50$ & $195$     & $0.16$ & $2.13$                    & $125$ & $0.10$ \\ \hline \hline
\multicolumn{1}{|l|}{SGN-D ($G_u$)}         & $3.27$ & $76.6$     & $0.03$ &  &  &  \\ \hline
\multicolumn{1}{|l|}{SGN-D ($G_x$)}         &  &      &  & $2.61$                    & $70.9$ & $0.03$ \\ \hline 
\end{tabular}
\caption{Performance of SGN-C and SGN-D using the, IS, FID and KID measures on the CelebA dataset.}
\label{tab:InceptionScores}
\end{table}

The IS, FID and KID scores are consistent with human visual perceptions. Indeed, as depicted from visual intuition, there is almost no difference between the scores obtained from covariance and parametric copulas methods, while there is a significant gap between the achieved scores with the linear (SGN-C) and non-linear dependence (SGN-D) approaches.

\subsection{Summary}
This section of the chapter has firstly discussed some known procedures to generate correlated variables from a sample set, highlighting the need of a domain transformation (from the sample to the uniform variables domain). The same domain adaptation has been exploited for the generation of statistically dependent variables, recalling the concept of copula. This mathematical tool enables the partitioning and segmentation of the dependence structure generation into two well-defined steps, an initial step which creates the data dependence between uniform random variables (copula) followed by a second step which projects the uniform random variables back into the sample domain (inverse transform sampling), leading to new data. 
The former case has been analyzed through the aid of different copula structures while the latter case through an estimation of the marginal cumulative distribution (and its inverse), where the more samples are available, the better the approximation is. Such segmentation totally disregards the semantics of the input data and, in principle, can be applied to any type of data/signal. 

This procedure has led to the design of a segmented generative network architecture, based on GANs, successfully implemented as proved by several qualitative and quantitative results. This goes in the direction of explainable machine learning, i.e., a full comprehension of the design of a neural network through a mathematical segmentation of the problem.

In the following section, we describe how to apply such segmentation steps to explicitly estimate the PDF of the collected data.

\section{Copula density neural estimation}
\sectionmark{CODINE}
\label{sec:codine}
A natural way to discover data properties is to study the underlying PDF. Parametric and non-parametric models \cite{Silverman86} are viable solutions for density estimation problems that deal with low-dimensional data. The former are typically used when a prior knowledge on the data structure (e.g. distribution family) is available. The latter, instead, are more flexible since they do not require any specification of the distribution's parameters. Practically, the majority of methods from both classes fail in estimating high-dimensional densities. Hence, some recent works leveraged deep NNs as density estimators \cite{PixelRNN,dinh2017density}.
Although significant efforts have been made to scale NN architectures in order to improve their modeling capabilities, most of tasks translate into conditional distribution estimations. 
Instead, generative models attempt to learn the a-priori distribution to synthesize new data out of it. Deep generative models such as GANs \cite{Goodfellow2014}, VAEs \cite{Kingma2013} and diffusion models \cite{Ho2020}, tend to either implicitly estimate the underlying PDF or explicitly estimate a variational lower bound, providing the designer with no simple access to the investigated PDF. 

In this section, we propose to work with pseudo-observations, a projection of the collected observations into the uniform probability space via transform sampling, as described in Sec. \ref{subsec:sgn_proposal_approach}. The probability density estimation becomes a copula density estimation problem, which is formulated and solved using DL techniques. The envisioned copula density neural estimation method is referred to as CODINE. We further present self-consistency tests and metrics that can be used to assess the quality of the estimator. We prove and exploit the fact that the MI can be rewritten in terms of copula PDFs. Finally, we apply CODINE in the context of data generation.

\subsection{Variational approach for the copula density estimation}
\label{subsec:codine}
Let us assume that the collected $n$ data observations $\{\mathbf{x}^{(1)},\dots,\mathbf{x}^{(n)}\}$ are sampled from $p_{X}(\mathbf{x}) = p_{X}(x_1, x_2, \dots, x_d)$ with CDF $F_{X}(\mathbf{x}) = P(X_1\leq x_1, \dots, X_d \leq x_d)$.
Under similar assumptions as in Sec. \ref{subsec:sgn_transform_sampling}, 
the inverse transform sampling method can be used to map the data into the uniform probability space. In fact, if $U_i$ is a uniform random variable, then
$X_i = F^{-1}_{X_i}(U_i)$ is a random variable with CDF $F_{X_i}$.
Therefore, if the CDF is invertible, the transformation $u_i = F_{X_i}(x_i) \; \forall i=1,\dots,d$ projects the data $\mathbf{x}$ into the uniform probability space with finite distribution's support $u_i \in [0,1]$. The obtained transformed observations are typically called pseudo-observations.
In principle, the transform sampling method is extremely beneficial: it offers a statistical normalization, thus a pre-processing operation that constitutes the first step of any DL pipeline.

To characterize the nature of the transformed data in the uniform probability space, we use the concept of copula introduced in Sec. \ref{subsec:sgn_dependent_uniforms}.
We here recall that, when the multivariate distribution is described in terms of the PDF $p_{X}$, it holds that
\begin{equation}
\label{eq:Sklarf2}
p_{X}(x_1,\dots,x_d) = c_{U}(F_{X_1}(x_1),\dots,F_{X_d}(x_d))\cdot \prod_{i=1}^{d}{p_{X_i}(x_i)},
\end{equation}
where $c_{U}$ is the density of the copula.

The relation in \eqref{eq:Sklarf2} is the fundamental building block of this section. It separates the dependence internal structure of $p_{X}$ into two distinct components: the product of all the marginals $p_{X_i}$ and the density of the copula $c_{U}$. By nature, the former accounts only for the marginal information, thus, the statistics of each univariate variable. The latter, instead, accounts only for the joint dependence of data.

Considering the fact that building the marginals is usually a straightforward task, the estimation of the empirical joint density $\hat{p}_{X}(\mathbf{x})$ of the observations $\{\mathbf{x}^{(1)},\dots,\mathbf{x}^{(n)}\}$ passes through the estimation of the empirical copula density $\hat{c}_{U}(\mathbf{u})$ of the pseudo-observations $\{\mathbf{u}^{(1)},\dots,\mathbf{u}^{(n)}\}$. 

We propose to use deep NNs to model dependencies in high-dimensional data, and in particular to estimate the copula PDF. The proposed framework relies on the following simple idea: we can measure the statistical distance  between the pseudo-observations and uniform i.i.d. realizations using NN  parameterization. Surprisingly, by maximizing a variational lower bound on a divergence measure, we get for free the copula density neural estimator.

\subsubsection{Variational formulation}
\label{subsec:codine_f-div}
Given the definition of $f$-divergence introduced in \eqref{eq:f-divergence}, one might be interested in studying the particular case when the two densities $p$ and $q$ correspond to $c_U$ and $\pi_U$, respectively, where $\pi_U$ describes a multivariate uniform distribution on $[0,1]^d$. In such situation, it is possible to express the copula density function via the variational representation of the $f$-divergence. The following Theorem formulates an optimization problem whose solution yields to the desired copula density.
 
\begin{theorem}
\label{theorem:codine_theorem1}
Let $\mathbf{u} \sim c_U(\mathbf{u})$ be $d$-dimensional samples drawn from the copula density $c_U$. Let $f^*$ be the Fenchel conjugate of $f:\mathbb{R}_+ \to \mathbb{R}$, a convex lower semicontinuous function that satisfies $f(1)=0$ and has derivative $f^{\prime}$. If $\pi_U(\mathbf{u})$ is a multivariate uniform distribution on the unit cube $[0,1]^d$ and $\mathcal{J}_{f}(T)$ is a value function defined as 
\begin{equation}
\mathcal{J}_{f}(T) = \mathbb{E}_{\mathbf{u} \sim c_{U}(\mathbf{u})}\biggl[T\bigl(\mathbf{u}\bigr)\biggr] -\mathbb{E}_{\mathbf{u} \sim \pi_{U}(\mathbf{u})}\biggl[f^*\biggl(T\bigl(\mathbf{u}\bigr)\biggr)\biggr],
\label{eq:codine_discriminator_function_f}
\end{equation}
then
\begin{equation}
\label{eq:codine_optimal_ratio_T}
c_U(\mathbf{u}) = \bigl(f^{*}\bigr)^{\prime} \bigl(\hat{T}(\mathbf{u})\bigr), 
\end{equation}
where
\begin{equation}
\hat{T}(\mathbf{u}) = \arg \max_T \mathcal{J}_f(T),
\end{equation}
\end{theorem}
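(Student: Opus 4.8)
The plan is to recognize the value function $\mathcal{J}_f$ as the Fenchel--Young dual characterization of the $f$-divergence between the copula density $c_U$ and the reference uniform density $\pi_U$, and then to extract the copula from the first-order optimality condition. First I would observe that since $\pi_U(\mathbf{u})=1$ on the unit cube $[0,1]^d$, the $f$-divergence in \eqref{eq:f-divergence} specializes to $D_f(c_U\|\pi_U)=\int_{[0,1]^d}\pi_U(\mathbf{u})\, f\bigl(c_U(\mathbf{u})/\pi_U(\mathbf{u})\bigr)\diff\mathbf{u}=\int_{[0,1]^d} f\bigl(c_U(\mathbf{u})\bigr)\diff\mathbf{u}$, so the density ratio appearing inside $f$ is exactly the copula density we wish to recover.

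Next I would invoke convex duality. Because $f$ is convex and lower semicontinuous with $f(1)=0$, it equals its biconjugate, so $f(r)=\sup_{t}\bigl(t\,r-f^*(t)\bigr)$ for every $r$ in the domain. Substituting $r=c_U(\mathbf{u})/\pi_U(\mathbf{u})$ pointwise and replacing the scalar $t$ by a function $T(\mathbf{u})$ yields the lower bound
\begin{equation}
D_f(c_U\|\pi_U) \ge \sup_T\biggl\{\mathbb{E}_{\mathbf{u}\sim c_U}\bigl[T(\mathbf{u})\bigr]-\mathbb{E}_{\mathbf{u}\sim\pi_U}\bigl[f^*\bigl(T(\mathbf{u})\bigr)\bigr]\biggr\}=\sup_T \mathcal{J}_f(T),
\end{equation}
which is precisely the value function of \eqref{eq:codine_discriminator_function_f}.

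The key step is to identify the maximizer. Optimizing $\mathcal{J}_f$ reduces, since the expectations integrate a pointwise expression against $\pi_U$, to maximizing $t\,c_U(\mathbf{u})/\pi_U(\mathbf{u})-f^*(t)$ over $t$ for each fixed $\mathbf{u}$. Setting the derivative in $t$ to zero gives the stationarity condition $c_U(\mathbf{u})/\pi_U(\mathbf{u})=(f^*)'\bigl(\hat{T}(\mathbf{u})\bigr)$; equivalently, using the derivative duality $(f^*)'=(f')^{-1}$, the optimal discriminator is $\hat{T}(\mathbf{u})=f'\bigl(c_U(\mathbf{u})\bigr)$. Since $\pi_U\equiv 1$ on the cube, this collapses to $c_U(\mathbf{u})=(f^*)'\bigl(\hat{T}(\mathbf{u})\bigr)$, which is the claimed identity \eqref{eq:codine_optimal_ratio_T}.

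The main obstacle I anticipate is rigor in the interchange of supremum and integral: one must argue that maximizing over a sufficiently rich class of functions $T$ (ideally all measurable maps $[0,1]^d\to\mathrm{dom}(f^*)$) actually attains the pointwise supremum, so that the variational lower bound is tight and the stationary point is a genuine global maximizer. This is where convexity of $f^*$ (hence concavity of $\mathcal{J}_f$ in $T$) and a measurable-selection argument are needed, together with checking that the optimizing value lies in the interior of the domain of $f^*$ so that $(f^*)'$ is well defined at $\hat{T}(\mathbf{u})$.
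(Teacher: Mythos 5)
Your proposal is correct and follows essentially the same route as the paper's proof: specialize the $f$-divergence to $D_f(c_U\|\pi_U)=\int_{[0,1]^d} f(c_U(\mathbf{u}))\,\diff\mathbf{u}$ using $\pi_U\equiv 1$, invoke the Fenchel-dual variational lower bound (which the paper simply cites from Lemma 1 of Nguyen et al.\ rather than re-deriving from the biconjugate as you do), identify the maximizer $\hat{T}(\mathbf{u})=f'(c_U(\mathbf{u}))$, and invert via $(f^*)'=(f')^{-1}$ to obtain $c_U(\mathbf{u})=(f^*)'(\hat{T}(\mathbf{u}))$. The only difference is that you make the pointwise-optimization and tightness argument explicit where the paper defers it to the cited lemma.
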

 
\begin{proof}
From the hypothesis, the $f$-divergence between $c_U$ and $\pi_U$ reads as follows
\begin{equation}
\small
D_f(c_U||\pi_U) = \int_{\mathbb{R}^d}{\pi_U(\mathbf{u})f\biggl(\frac{c_U(\mathbf{u})}{\pi_U(\mathbf{u})}\biggr)\diff \mathbf{u}} = \int_{[0,1]^d}{f\bigl(c_U(\mathbf{u})\bigr)\diff \mathbf{u}}.
\end{equation}
Moreover, from Lemma 1 of \cite{Nguyen2010}, $D_f$ can be expressed in terms of its lower bound via Fenchel convex duality
\begin{equation}
\small
\label{eq:codine_f_bound}
D_f(c_U||\pi_U) \geq \sup_{T\in \mathbb{R}} \biggl\{ \mathbb{E}_{\mathbf{u} \sim c_U(\mathbf{u})} \bigl[T(\mathbf{u})\bigr]-\mathbb{E}_{\mathbf{u}\sim \pi_U(\mathbf{u})}\bigl[f^*\bigl(T(\mathbf{u})\bigr)\bigr]\biggr\},
\end{equation}
where $T: [0,1]^d \to \mathbb{R}$ and $f^*$ is the Fenchel conjugate of $f$.
Since the equality in \eqref{eq:codine_f_bound} is attained for $T(\mathbf{u})$ as
\begin{equation}
\hat{T}(\mathbf{u}) = f^{\prime} \bigl(c_U(\mathbf{u})\bigr),
\end{equation}
it is sufficient to find the function $\hat{T}(\mathbf{u})$ that maximizes the variational lower bound $\mathcal{J}_{f}(T)$.
Finally, by Fenchel duality it is also true that
$c_U(\mathbf{u}) = \bigl(f^{*}\bigr)^{\prime} \bigl(\hat{T}(\mathbf{u})\bigr)$.
\end{proof}

Notice that the density of the copula can be derived with the same approach also by working in the sample domain. Indeed, when $p$ and $q$ correspond to the joint and the product of the marginals, respectively, the following corollary holds.

\begin{corollary}
\label{cor:codine_cor1}
Let $\mathbf{x} \sim p_X(\mathbf{x})$ be $d$-dimensional samples drawn from the joint density $p_X$. Let $f^*$ be the Fenchel conjugate of $f:\mathbb{R}_+ \to \mathbb{R}$, a convex lower semicontinuous function that satisfies $f(1)=0$ and has derivative $f^{\prime}$. If $\pi_X(\mathbf{x})$ is the product of the marginals $p_{X_i}(x_i)$ and $\mathcal{J}_{f}(T)$ is a value function defined as 
\begin{equation}
\label{eq:codine_discriminative_joint_marg}
\mathcal{J}_{f,x}(T) = \mathbb{E}_{\mathbf{x} \sim p_{X}(\mathbf{x})}\biggl[T\bigl(\mathbf{x}\bigr)\biggr] -\mathbb{E}_{\mathbf{x} \sim \pi_{X}(\mathbf{x})}\biggl[f^*\biggl(T\bigl(\mathbf{x})\bigr)\biggr)\biggr],
\end{equation}
then
\begin{equation}
c_U(\mathbf{u}) = \bigl(f^{*}\bigr)^{\prime} \bigl(\hat{T}(\mathbf{F}_X^{-1}(\mathbf{u}))\bigr)
\end{equation}
is the copula density, where
\begin{equation}
\mathbf{F}_X^{-1}(\mathbf{u}) := [F_{X_i}^{-1}(u_i),\dots,F_{X_d}^{-1}(u_d)]
\end{equation}
and
\begin{equation}
\hat{T}(\mathbf{x}) = \arg \max_T \mathcal{J}_{f,x}(T),
\end{equation}
\end{corollary}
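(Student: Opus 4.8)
The plan is to reuse the variational machinery established in Theorem~\ref{theorem:codine_theorem1} essentially verbatim, but now applied to the pair $(p_X,\pi_X)$ rather than $(c_U,\pi_U)$, and then to convert the resulting density ratio into a copula density via the Sklar factorization \eqref{eq:Sklarf2}. No fundamentally new estimate is required; the corollary is really a change-of-domain statement dressed as a second theorem.

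First I would invoke the Fenchel/Nguyen lower bound (Lemma~1 of \cite{Nguyen2010}) for the $f$-divergence between the joint $p_X$ and the product of marginals $\pi_X$,
$$D_f(p_X || \pi_X) \geq \sup_T \left\{ \mathbb{E}_{\mathbf{x}\sim p_X}[T(\mathbf{x})] - \mathbb{E}_{\mathbf{x}\sim \pi_X}[f^*(T(\mathbf{x}))] \right\} = \sup_T \mathcal{J}_{f,x}(T),$$
with equality attained at $\hat{T}(\mathbf{x}) = f'(p_X(\mathbf{x})/\pi_X(\mathbf{x}))$. By Fenchel duality this yields, pointwise, $(f^*)'(\hat{T}(\mathbf{x})) = p_X(\mathbf{x})/\pi_X(\mathbf{x})$, exactly as in the theorem, except that the recovered ratio is now the joint over the product of marginals.

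The key step is to recognize that this particular ratio is precisely the copula density evaluated at the pseudo-observations. By the Sklar PDF identity \eqref{eq:Sklarf2} we have $p_X(\mathbf{x}) = c_U(F_{X_1}(x_1),\dots,F_{X_d}(x_d)) \prod_i p_{X_i}(x_i)$, while $\pi_X(\mathbf{x}) = \prod_i p_{X_i}(x_i)$ by definition; the marginal factors cancel and $p_X(\mathbf{x})/\pi_X(\mathbf{x}) = c_U(F_{X_1}(x_1),\dots,F_{X_d}(x_d))$. Writing $\mathbf{u} = (F_{X_1}(x_1),\dots,F_{X_d}(x_d))$, equivalently $\mathbf{x} = \mathbf{F}_X^{-1}(\mathbf{u})$, substitution gives $(f^*)'(\hat{T}(\mathbf{F}_X^{-1}(\mathbf{u}))) = c_U(\mathbf{u})$, which is the claim.

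The main obstacle to watch is ensuring the change of variables $\mathbf{x} \leftrightarrow \mathbf{u}$ is well-defined and that Sklar's factorization applies: this requires the marginal CDFs $F_{X_i}$ to be continuous, so that the copula is unique, and invertible on the support, which is exactly the regularity already assumed when the pseudo-observations are introduced in Sec.~\ref{subsec:codine}. Beyond this caveat the argument is a direct transcription of the theorem's proof, so I would keep it short and simply point the reader back to Theorem~\ref{theorem:codine_theorem1} for the duality details rather than re-deriving the bound.
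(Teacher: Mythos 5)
Your proposal is correct and follows essentially the same route as the paper: the paper's proof likewise combines the Nguyen/Fenchel variational bound with the Sklar cancellation $p_X(\mathbf{x})/\pi_X(\mathbf{x}) = c_U(F_{X_1}(x_1),\dots,F_{X_d}(x_d))$, phrased as a change of variables showing $D_f(p_X\|\pi_X)=\int_{[0,1]^d} f(c_U(\mathbf{u}))\,\diff\mathbf{u}$ before invoking Theorem~\ref{theorem:codine_theorem1}. Your version is in fact slightly more explicit than the paper's, since you track the optimizer $\hat{T}$ pointwise and state the continuity/invertibility assumptions on the marginal CDFs that the paper leaves implicit.
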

 
\begin{proof}
From the hypothesis, the $f$-divergence between $p_X$ and $\pi_X$ reads as follows
\begin{align}
D_f(p_X||\pi_X) & = \int_{\mathbb{R}^d}{\prod_{i}{p_{X_i}(x_i)}f\biggl(\frac{p_X(\mathbf{x})}{\prod_{i}{p_{X_i}(x_i)}}\biggr)\diff \mathbf{x}} \nonumber \\
& = \int_{[0,1]^d}{f\bigl(c_U(\mathbf{u})\bigr)\diff \mathbf{u}},
\end{align}
where $c$ is the density of the copula obtained as in \eqref{eq:Sklarf2}. The thesis then follows immediately from Theorem \ref{theorem:codine_theorem1} 1.
\end{proof}

A great advantage of the formulation in \eqref{eq:codine_discriminator_function_f} comes from the second expectation term. Conversely to the variational discriminative formulation in \eqref{eq:codine_discriminative_joint_marg} that tests jointly with marginals samples, the comparison in \eqref{eq:codine_discriminator_function_f} is made between samples from the joint copula structure and independent uniforms. The latter can be easily generated without the need of any scrambler that factorizes $p_X$ into the product of the marginal PDFs. On the other hand, \eqref{eq:codine_discriminator_function_f} needs samples from the copula, thus, needs an estimate of the marginals of $X$ to apply transform sampling. In the following, we use the formulation in \eqref{eq:codine_discriminator_function_f} which inherently possesses another desired property when using NNs.

\subsubsection{Parametric implementation}
\label{subsec:codine_implementation}
To proceed, we propose to parametrize $T(\mathbf{u})$ with a deep NN $T_{\theta}$ of parameters $\theta$ and maximize $\mathcal{J}_{f}(T)$ with gradient ascent and back-propagation $\hat{\theta} = \arg \max_{\theta} \mathcal{J}_f(T_{\theta})$.
Since at convergence the network outputs a transformation of the copula density evaluated at the input $\mathbf{u}$, the final layer possesses a unique neuron with activation function that depends on the generator $f$ (see the code \cite{CODINE_github} for more details).
The resulting estimator of the copula density reads as follows
\begin{equation}
\hat{c}_U(\mathbf{u}) = \bigl(f^{*}\bigr)^{\prime} \bigl(T_{\hat{\theta}}(\mathbf{u})\bigr),
\end{equation}
and its training procedure enjoys two normalization properties. 

The former consists in a natural normalization of the input data in the interval $[0,1)$ via transform sampling that facilitates the training convergence and helps producing improved dependence measures \cite{Poczos2012}. The latter normalization property is perhaps at the core of the proposed methodology. The typical problem in creating neural density estimators is to enforce the network to return densities that integrate to one
\begin{equation}
\label{eq:codine_density_test}
\int_{\mathbb{R}^d}{p_X(\mathbf{x};\theta)\diff \mathbf{x}} = 1
\end{equation}
Energy-based models have been proposed to tackle such constraint, but they often produce intractable densities (due to the normalization factor, see \cite{Papamakarios2015}). Normalizing flows \cite{Rezende2015} provide exact likelihoods but they are limited in representation. In contrast, the discriminative formulation of \eqref{eq:codine_discriminator_function_f} produces a copula density neural estimator that naturally favors a solution of \eqref{eq:codine_density_test}, without any architectural modification or regularization term.

\subsubsection{Evaluation measures}
\label{subsec:codine_metrics}
\begin{figure}[t]
	\centering
	\includegraphics[scale=0.3]{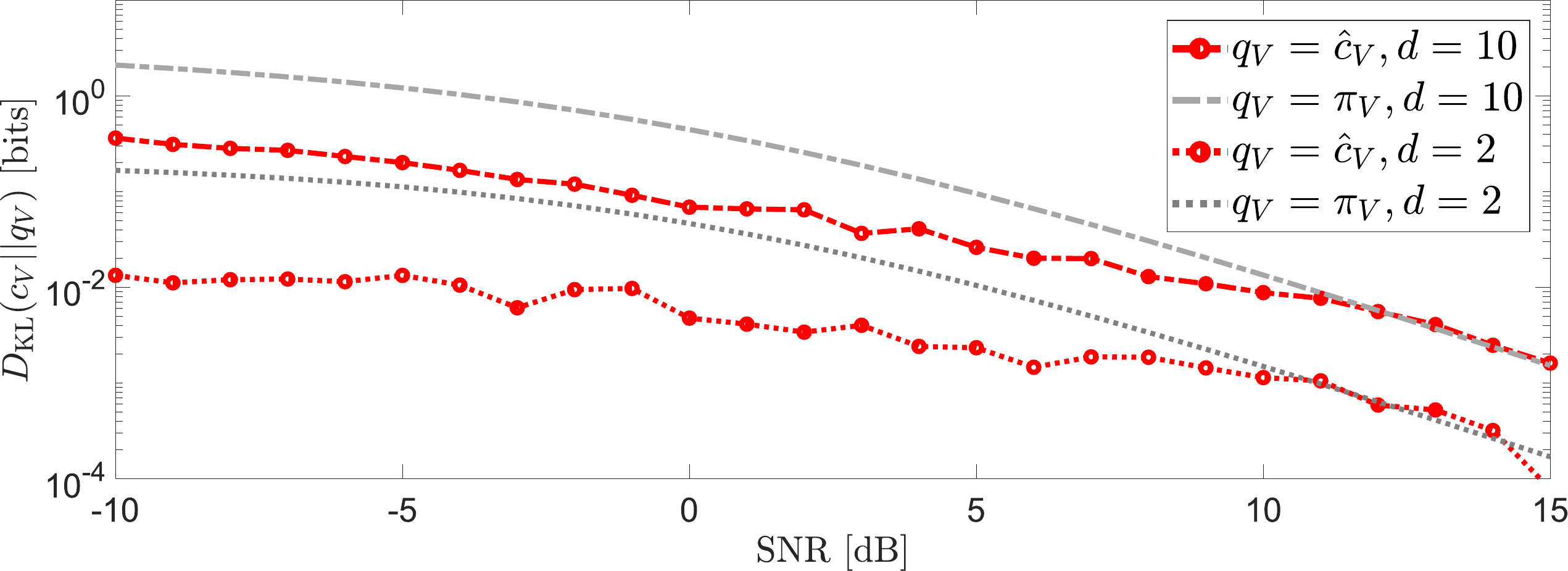}
	\caption{Approximation quality of $c_V$. Comparison between CODINE method $\hat{c}_V$ and the flat copula density $\pi_V$ for different values of the signal-to-noise ratio (SNR) and for different dimensionality $d$ of the input.}
	\label{fig:codine_q_c}
\end{figure}
To assess the quality of the copula density estimator $\hat{c}_U(\mathbf{u})$, we propose the following set of self-consistency tests over the basic property illustrated in \eqref{eq:codine_density_test}. In particular,
\begin{enumerate}
\item if $\hat{c}_U(\mathbf{u})$ is a well-defined density and $\hat{c}_U(\mathbf{u})=c_U(\mathbf{u})$, then the following relation must hold
\begin{equation}
\mathbb{E}_{\mathbf{u} \sim \pi_{U}(\mathbf{u})}\bigl[\hat{c}_U(\mathbf{u})\bigr]=1,
\end{equation}
\item in general, for any $n$-th order moment, if $\hat{c}_U(\mathbf{u})$ is a well-defined density and $\hat{c}_U(\mathbf{u})=c_U(\mathbf{u})$, then
\begin{equation}
\mathbb{E}_{\mathbf{u} \sim \pi_{U}(\mathbf{u})}\bigl[\mathbf{u}^n\cdot \hat{c}_U(\mathbf{u})\bigr]=\mathbb{E}_{\mathbf{u} \sim c_{U}(\mathbf{u})}\bigl[\mathbf{u}^n\bigr].
\end{equation}
\end{enumerate}
The first test verifies that the copula density integrates to one while the second set of tests extends the first test to the moments of any order. Similarly, joint consistency tests can be defined, e.g., the Spearman rank correlation $\rho_{X,Y}$ between pairs of variables can be rewritten in terms of their joint copula density $\hat{c}_{UV}$ and it reads as follows
\begin{equation}
\rho_{X,Y} = 12 \cdot\mathbb{E}_{(\mathbf{u},\mathbf{v}) \sim \pi_{U}(\mathbf{u})\pi_{V}(\mathbf{v})}\bigl[\mathbf{u} \mathbf{v}\cdot \hat{c}_{UV}(\mathbf{u},\mathbf{v})\bigr]-3.
\end{equation}
\begin{figure}[t]
	\centering
	\includegraphics[scale=0.37]{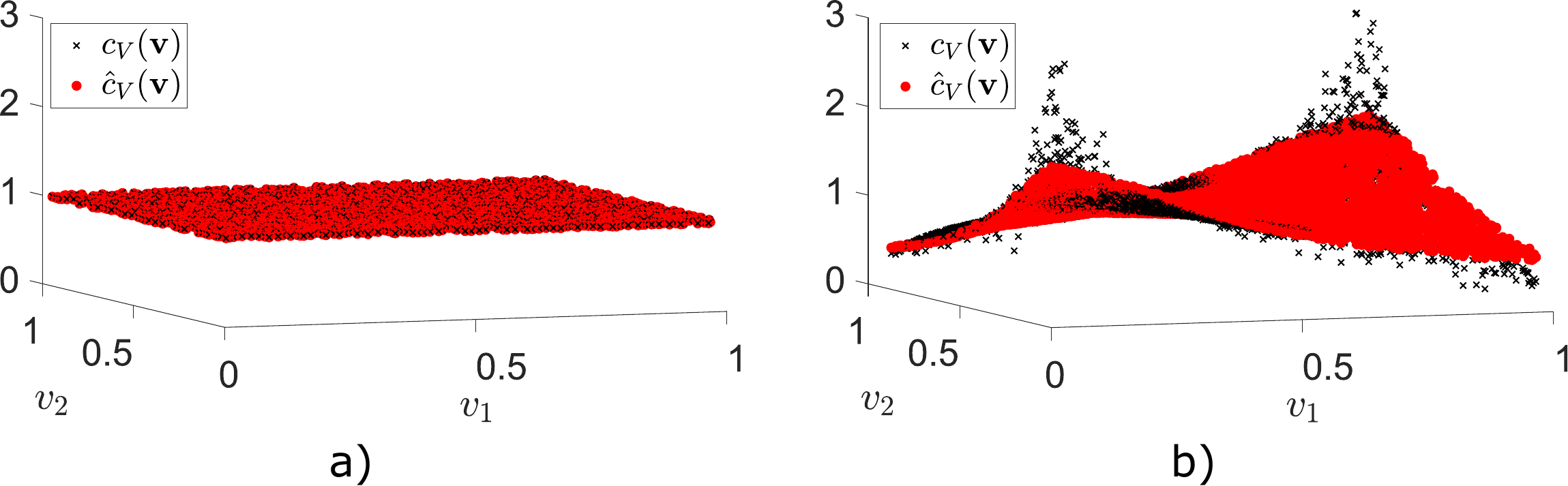}
	\caption{Ground-truth and estimated copula density (SNR=$0$ dB) at the channel output ($d=2$) using the GAN $f$ generator for: a) uncorrelated noise $\rho=0$. b) correlated noise with coefficient $\rho=0.5$.}
	\label{fig:codine_copula_approx}
\end{figure}
When the copula density is known, it is possible to assess the quality of the copula density neural estimator $\hat{c}_{U}(\mathbf{u})$ by computing the KL divergence between the true and the estimated copulas
\begin{equation}
Q_c = D_{\text{KL}}(c_{U}||\hat{c}_{U}) = \mathbb{E}_{\mathbf{u}\sim c_{U}(\mathbf{u})}\biggl[\log\frac{c_{U}(\mathbf{u})}{\hat{c}_{U}(\mathbf{u})}\biggr].
\end{equation}

Once the dependence structure is characterized via a valid copula density $\hat{c}_{U}(\mathbf{u})$, a multiplication with the estimated marginal components $\hat{p}_{X_i}(x_i)$, $\forall i=\{1,\dots,d\}$ yields the estimate of the joint PDF $\hat{p}_{X}(\mathbf{x})$. In general, it is rather simple to build one-dimensional marginal density estimates $\hat{p}_{X_i}(x_i)$, e.g., using histograms or kernel functions. 

Now, as a first example to validate the density estimator, we consider the transmission of $d$-dimensional Gaussian samples over an additive white Gaussian channel (AWGN). Given the  AWGN model $Y=X+N$, where $X \sim \mathcal{N}(0,\mathbb{I})$ and $N \sim \mathcal{N}(0,\Sigma_N)$, it is simple to obtain closed-form expressions for the probability densities involved. 
Indeed, the copula joint density function reads as follows
\begin{equation}
\small
c_{UV}(\mathbf{u},\mathbf{v}) = \sqrt{\frac{\det(\tilde{\Sigma}_N+\mathbb{I})}{\det(\Sigma_N)}}\frac{\exp\bigl(-\frac{1}{2}\bigl(\mathbf{F}_Y^{-1}(\mathbf{v})-\mathbf{F}_X^{-1}(\mathbf{u})\bigr)^T\Sigma_N^{-1}\bigl(\mathbf{F}_Y^{-1}(\mathbf{v})-\mathbf{F}_X^{-1}(\mathbf{u})\bigr)\bigr)}{\exp\bigl(-\frac{1}{2}\bigl(\mathbf{F}_Y^{-1}(\mathbf{v})\bigr)^T\bigl(\tilde{\Sigma}_N+\mathbb{I}\bigr)^{-1}\bigl(\mathbf{F}_Y^{-1}(\mathbf{v})\bigr)\bigr)},
\end{equation}
whereas, the copula density of the output $Y$ assumes form as
\begin{equation}
\small
\label{eq:codine_copula_gaussian_y}
c_{V}(\mathbf{v}) = \sqrt{\frac{\det(\tilde{\Sigma}_N+\mathbb{I})}{\det(\Sigma_N+\mathbb{I})}}{\exp\biggl(-\frac{1}{2}\bigl(\mathbf{F}_Y^{-1}(\mathbf{v})\bigr)^T\bigl(\bigl(\Sigma_N+\mathbb{I}\bigr)^{-1}-\bigl(\tilde{\Sigma}_N+\mathbb{I}\bigr)^{-1}\bigr)\bigl(\mathbf{F}_Y^{-1}(\mathbf{v})\bigr)\biggr)},
\end{equation}
 where $\mathbf{F}_X(\mathbf{x})$ is an operator that element-wise applies transform sampling (via Gaussian cumulative distributions) to the components of $\mathbf{x}$ such that $(\mathbf{u},\mathbf{v}) = (\mathbf{F}_X(\mathbf{x}),\mathbf{F}_Y(\mathbf{y}))$ and $\tilde{\Sigma}_N = \Sigma_N \odot \mathbb{I}$, where $\mathbf{A} \odot \mathbf{B}$ denotes the Hadamard product between matrices $\mathbf{A}$ and $\mathbf{B}$. 

In Fig. \ref{fig:codine_q_c} we illustrate the KL divergence (in bits) between the ground-truth and the neural estimator obtained using the GAN generator $f(u)$ reported in Tab. \ref{tab:codine_generators}. To work with non-uniform copula structures, we study the case of non-diagonal noise covariance matrix $\Sigma_N$. In particular, we impose a tridiagonal covariance matrix such that $\Sigma_N=\sigma_N^2 R$ where $R_{i,i} = 1$ with $i=1,\dots,d$, and $R_{i,i+1}=\rho$, with $i=1,\dots,d-1$ and $\rho=0.5$. Moreover, Fig. \ref{fig:codine_q_c} depicts the quality of the approximation for different values of the signal-to-noise ratio (SNR), defined as the reciprocal of the noise power $\sigma_N^2$, and for different dimensions $d$. To provide a numerical comparison, we also report the KL divergence $D_{\text{KL}}(c_{V}||\pi_{V})$ between the ground-truth and the flat copula density $\pi_V = 1$. It can be shown that when $c_V$ is Gaussian, we obtain
\begin{equation}
D_{\text{KL}}(c_{V}||\pi_{V}) = \frac{1}{2}\log\biggl(\frac{\det(\tilde{\Sigma}_N+\mathbb{I})}{\det(\Sigma_N+\mathbb{I})}\biggr),
\end{equation}
where the proof uses the theorem on the expectation of quadratic forms, which states that
\begin{equation}
    \mathbb{E}_{\mathbf{x} \sim p_X(\mathbf{x})}[X^T\mathbf{A}X] = \mu^T\mathbf{A}\mu + \Tr(\mathbf{A}\Sigma_X).
\end{equation}

Notice that in Fig. \ref{fig:codine_q_c} we use the same simple NN architecture for both $d=2$ and $d=10$. Nonetheless, CODINE can approximate multidimensional densities even without any further hyper-parameter search. 
Fig. \ref{fig:codine_copula_approx}a reports a comparison between ground-truth and estimated copula densities at $0$ dB in the case of independent components ($\rho=0$) and correlated components ($\rho=0.5$). It is worth mentioning that when there is independence between components, the copula density is everywhere unitary $c_V(\mathbf{v})$$=$$1$. Hence, independence tests can be derived based on the structure of the estimated copula via CODINE, but we leave it for future discussions.

\begin{table}[t!]
\centering
\begin{tabular}{l|l|l}
Name & Generator $f(u)$                   & Conjugate $f^*(t)$  \\ \hline
GAN  & $u\log u -(u+1)\log (u+1)+\log(4)$ & $-\log (1-\exp(t))$ \\ \hline
KL   & $u\log u$                          & $\exp(t-1)$         \\ \hline
HD   & $(\sqrt{u}-1)^2$                   & $t/(1-t)$     \\ \hline
\end{tabular}
\caption{List of generator and conjugate functions used in the experiments.}
\label{tab:codine_generators}
\end{table} 

\subsection{Applications}
\label{subsec:codine_applications}

\subsubsection{Mutual information estimation}
Given two random variables, $X$ and $Y$, the MI $I(X;Y)$ quantifies the statistical dependence between $X$ and $Y$. It measures the amount of information obtained about one variable via the observation of the other and it can be rewritten also in terms of KL divergence as
${I}(X;Y) = D_{\text{KL}}(p_{XY}||p_Xp_Y)$.
From Sklar's theorem, it is simple to show that the MI can be computed using only copula densities as follows
\begin{equation}
\label{eq:codine_mutual_information_copula}
{I}(X;Y) = \mathbb{E}_{(\mathbf{u},\mathbf{v})\sim c_{UV}(\mathbf{u},\mathbf{v})}\biggl[\log\frac{c_{UV}(\mathbf{u},\mathbf{v})}{c_U(\mathbf{u}) c_V(\mathbf{v})}\biggr].
\end{equation}
Therefore, \eqref{eq:codine_mutual_information_copula} requires three separate copula densities estimators, each of which obtained as explained in Sec. \ref{subsec:codine}. Alternatively, one could learn the copulas density ratio via maximization of the variational lower bound on the MI (see Ch. \ref{sec:mi_estimators}). Using again Fenchel duality, the KL divergence
\begin{equation}
D_{\text{KL}}(c_{UV}||c_Uc_V) = \int_{[0,1]^{2d}}{c_{UV}(\mathbf{u},\mathbf{v})\log\biggl(\frac{c_{UV}(\mathbf{u},\mathbf{v})}{c_U(\mathbf{u})c_V(\mathbf{v})}\biggr)\diff \mathbf{u} \diff \mathbf{v}}
\end{equation}
corresponds to the supremum over $T$ of
\begin{equation}
\label{eq:codine_DIME_copula}
\mathcal{J}_{\text{KL}}(T) = \mathbb{E}_{c_{UV}}\biggl[T\bigl(\mathbf{u},\mathbf{v}\bigr)\biggr] - \mathbb{E}_{c_{U}c_{V}}\biggl[\exp{\bigl(T \bigl(\mathbf{u},\mathbf{v }\bigr)-1 \bigr)}\biggr].
\end{equation}
When $X$ is a univariate random variable, its copula density $c_U$ is unitary. Notice that \eqref{eq:codine_DIME_copula} can be seen as a special case of the more general \eqref{eq:codine_discriminator_function_f} when $f$ is the generator of the KL divergence and the second expectation is not computed over independent uniforms with distribution $\pi_U$ but over samples from the product of copula densities $c_U\cdot c_V$. 
We estimate the MI between $X$ and $Y$ in the AWGN model using \eqref{eq:codine_DIME_copula} and the generators described in Tab. \ref{tab:codine_generators}. Fig. \ref{fig:codine_i_x_y}a and Fig. \ref{fig:codine_i_x_y}b show the estimated MI for $d=1$ and $d=5$, respectively, and compare it with the closed-form capacity formula $I(X;Y) = d/2\log_2(1+\text{SNR})$. 

\begin{figure}
	\centering
	\includegraphics[scale=0.35]{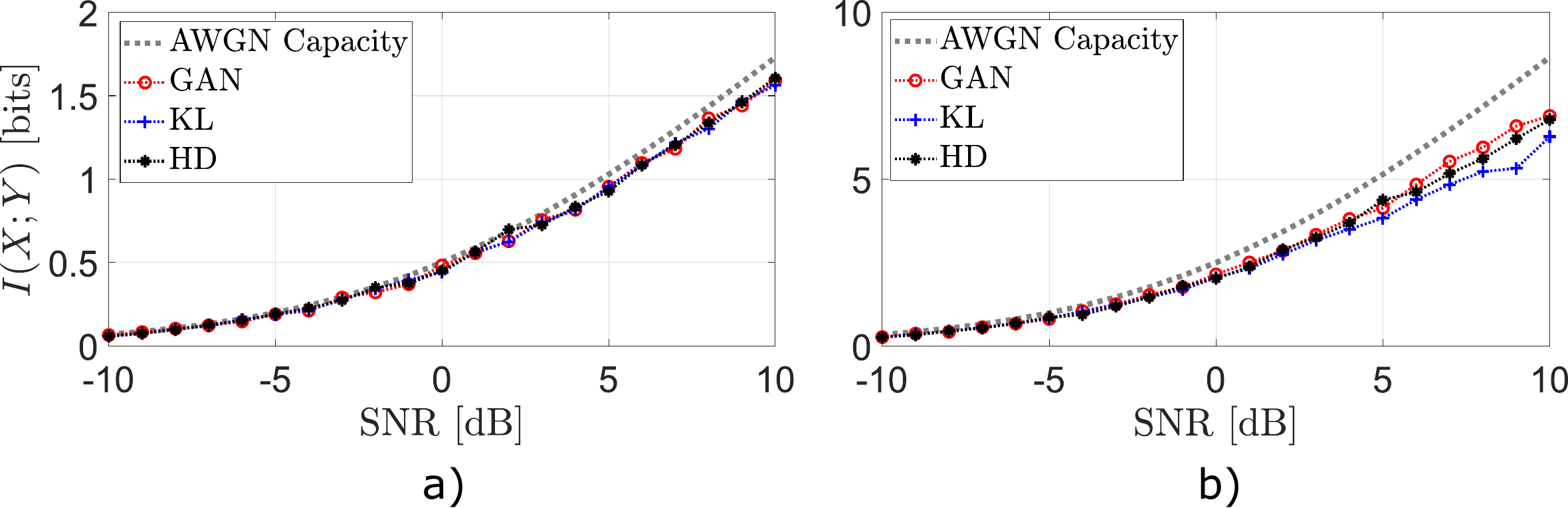}
	\caption{Estimated mutual information $I(X;Y)$ via joint copula $c_{UV}$ with different generators $f$ for: a) $d=1$. b) $d=5$.}
	\label{fig:codine_i_x_y}
\end{figure}

\subsubsection{Data generation}
\begin{figure}[b]
	\centering
	\includegraphics[scale=0.38]{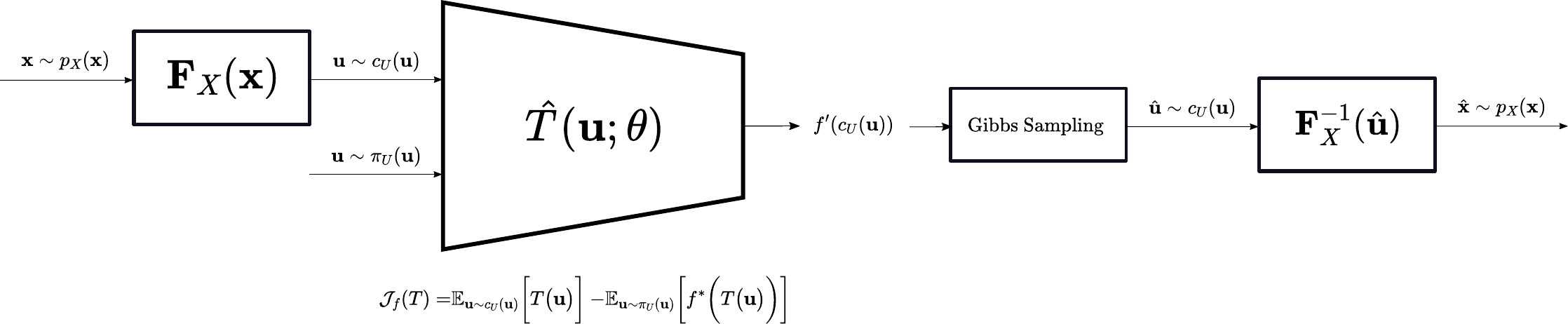}
	\caption{CODINE generation strategy.}
	\label{fig:codine_generation}
\end{figure}
As a second application example, we generate $n$ new pseudo-observations $\{\hat{\mathbf{u}}^{(1)},\dots,\hat{\mathbf{u}}^{(n)}\}$ from $\hat{c}_U$ by deploying a Markov chain Monte Carlo (MCMC) algorithm. Validating the quality of the generated data provides an alternative path for assessing the copula estimate itself.
We propose to use Gibbs sampling to extract valid uniform realizations of the copula estimate. In particular, we start with an initial guess $\hat{\mathbf{u}}^{(0)}$ and produce next samples $\hat{\mathbf{u}}^{(i+1)}$ by sampling each component from univariate conditional densities $\hat{c}_U(u_j^{(i+1)}|u_1^{(i+1)},\dots,u_{j-1}^{(i+1)},u_{j+1}^{(i)},\dots,u_{d}^{(i)})$ for $j=1,\dots,d$.
It is clear that the generated data in the sample domain is obtained via inverse transform sampling through the estimated quantile functions $\hat{F}_{X_i}^{-1}$.
The proposed generation scheme is illustrated in Fig. \ref{fig:codine_generation}.

Consider a bi-dimensional random variable whose realizations have form $\mathbf{x} = [\mathbf{x}_1, \mathbf{x}_2]$ and for which we want to generate new samples $\mathbf{\hat{x}} = [\mathbf{\hat{x}}_1, \mathbf{\hat{x}}_2]$. To force a non-linear statistical dependence structure, we define the same toy example $\mathbf{x}$ as in \eqref{eq:sgn_toy2D}.
We use CODINE to estimate its copula density and sample from it via Gibbs sampling. Fig. \ref{fig:codine_toy} compares the copula density estimate obtained via kernel density estimation (Fig. \ref{fig:codine_toy}a) with the estimate obtained using CODINE (Fig. \ref{fig:codine_toy}b). It also shows the generated samples in the uniform (Fig. \ref{fig:codine_toy}d) and in the sample domain (Fig. \ref{fig:codine_toy}f). 
It is plausible that the Gibbs sampling mechanism produced some discrepancies between $\mathbf{x}$ and $\hat{\mathbf{x}}$.

\begin{figure}[t]
	\centering
	\includegraphics[scale=0.35]{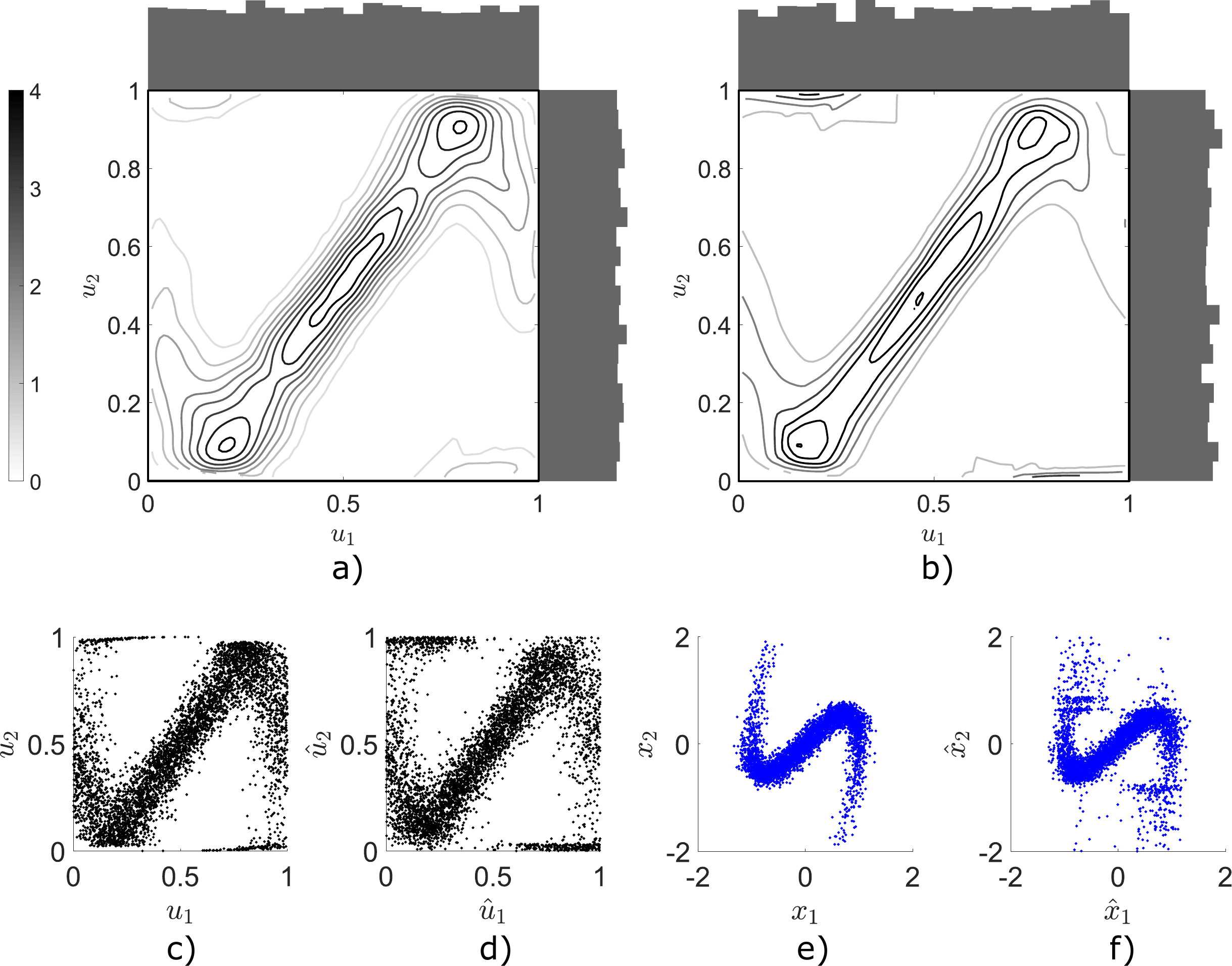}
	\caption{Toy example contour plot and marginal densities of a) ground-truth copula density $c_u$ obtained using kernel density estimation and b) copula density neural estimate $\hat{c}_u$. c) Pseudo-observations. d) Data generated in the uniform probability space via Gibbs sampling. e) Observations. f) Data generated in the sample domain via inverse transform sampling.}
	\label{fig:codine_toy}
\end{figure}

As a more complex example, we report the generation of the MNIST handwritten digits of size $28 \times 28$. In particular, we study the copula density of the latent space obtained from an AE. 
With such experiment, we firstly prove that the approach works even for data of dimension $d = 25$ and we secondly want to emphasize the fact that the approach is potentially scalable to higher dimensions. 
The idea is to train a CNN AE (see Fig. \ref{fig:codine_architecture}) to reconstruct digits. During training, the AE learns latent representations (via an encoder mapping) that can be analyzed and synthesized with CODINE. 

\begin{figure}[t]
	\centering
	\includegraphics[scale=0.4]{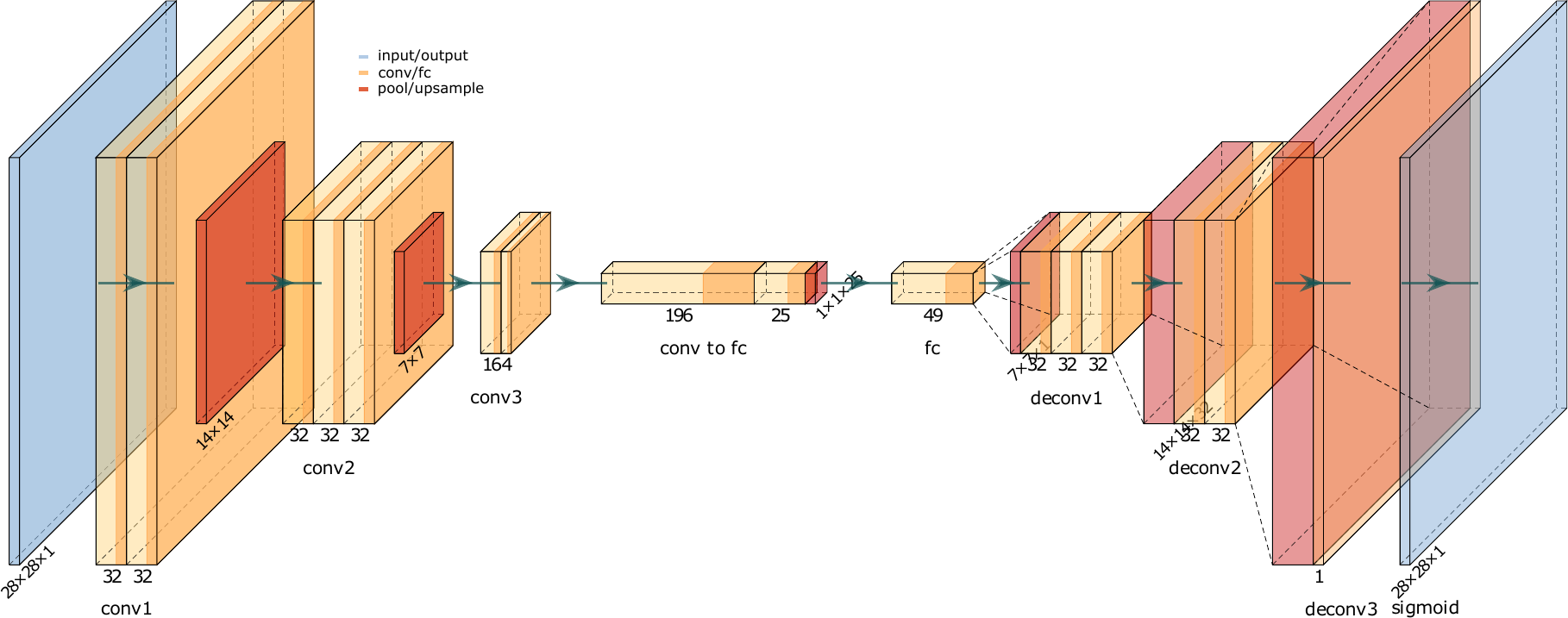}
	\caption{Autoencoder architecture used to learn latent vectors ($d=25$). The autoencoder is trained with binary cross-entropy}
	\label{fig:codine_architecture}
\end{figure}

The analysis block (Fig. \ref{fig:codine_generation} comprises a first marginal block to project the latent representations into the uniform probability space. Then, a CODINE block learns the copula density which is used by Gibbs sampling to generate new uniform latent representations. An inverse transform sampling block maps data from the uniform to the sample space.
Once new latent samples are generated, it is possible to feed them into the pre-trained decoder and obtain new digits, as illustrated in Fig. \ref{fig:codine_digits}.

\begin{figure}[b]
	\centering
	\includegraphics[scale=0.15]{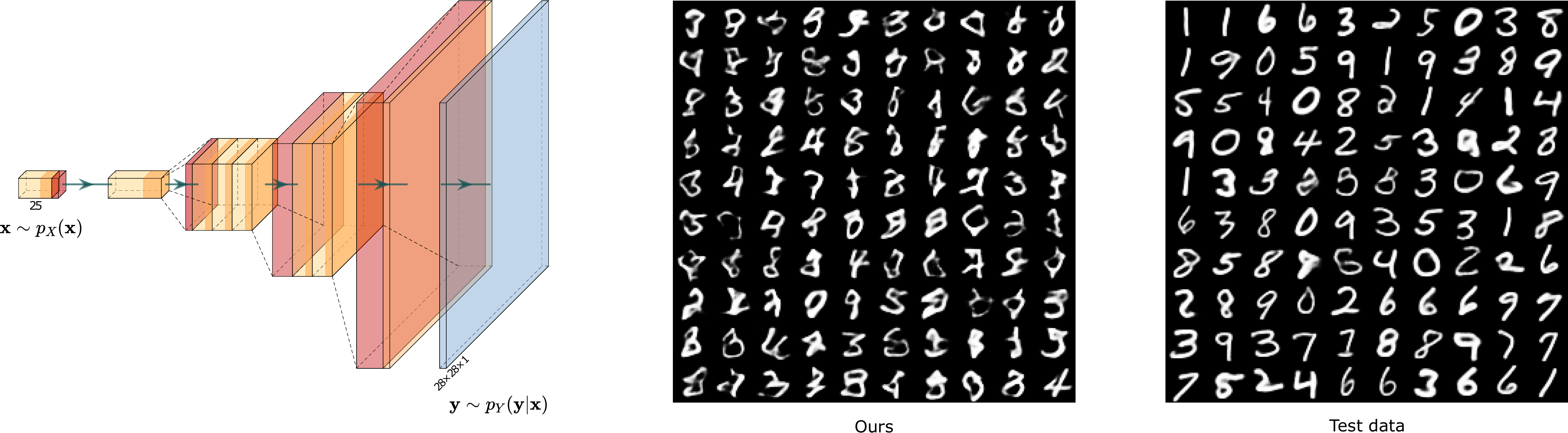}
	\caption{100 randomly selected digits obtained using CODINE to generate the latent vector fed in the decoder. Comparison with test data.}
	\label{fig:codine_digits}
\end{figure}

Although the CODINE block $T(\mathbf{u})$ is a simple shallow NN, the generated digits visually resemble the original ones, meaning that our approach managed to, at least partially, estimate the density of the uniform latent representations.

\subsubsection{An idea to drive the generation with the copula estimate}
MCMC methods require an increasing amount of time to sample from high-dimensional distributions. Thus, we study if it is possible to devise a neural sampling mechanism which exploit the copula density guidance. In particular, we exploit the MMD measure defined in \eqref{eq:sgn_MMD}. Indeed, let $(\chi,d)$ be a nonempty compact metric space in which two copula densities, $c_U(\mathbf{u})$ and $c_V(\mathbf{v})$, are defined. Then, the MMD reads as
\begin{equation}
\text{MMD}(\mathcal{G},c_U,c_V) := \sup_{g\in \mathcal{G}}\bigl\{\mathbb{E}_{\mathbf{u}\sim c_U(\mathbf{u})}[g(\mathbf{u})]-\mathbb{E}_{\mathbf{v}\sim c_V(\mathbf{v})}[g(\mathbf{v})]\bigr\},
\label{eq:codine_MMD}
\end{equation}
where $\mathcal{G}$ is a class of functions $g:\chi \rightarrow \mathbb{R}$. 
Since $c_U=c_V$ if and only if $\mathbb{E}_{\mathbf{u}\sim c_U(\mathbf{u})}[g(\mathbf{u})]=\mathbb{E}_{\mathbf{v}\sim c_V(\mathbf{v})}[g(\mathbf{v})]$ $\forall g\in \mathcal{G}$, the MMD measures the disparity between $c_U$ and $c_V$. If $g(\mathbf{u})=c_U(\mathbf{u})$ is a valid function, we can define a plausible loss function based on the MMD metric as follows
\begin{equation}
\min_{\theta_G} \mathbb{E}_{\mathbf{u}\sim c_U(\mathbf{u})}[c_U(\mathbf{u})]-\mathbb{E}_{\mathbf{v}\sim \pi_V(\mathbf{v})}[c_U(G(\mathbf{v};\theta_G))].
\label{eq:codine_MMD-metric}
\end{equation}
Thus, given $n$ pseudo-observations $\{\mathbf{u}^{(1)},\dots,\mathbf{u}^{(n)}\}$ for which we have built and estimated the underlined $c_U(\mathbf{u})$, it is possible to design a NN architecture, the generator $G$, which maps independent uniforms with distribution $\pi_V$ into uniforms with distribution $c_{V,\theta_G}$. The guidance provided by $c_U(\mathbf{u})$ helps minimizing the discrepancy between the two copulas when the optimization is performed over $\theta_G$. The optimal generator resulting from the solution of \eqref{eq:codine_MMD-metric} synthesizes new pseudo-observations $\hat{\mathbf{u}} = G(\mathbf{v};\theta_G^*)$.

To verify if $c_U$ is a properly defined function, it is useful to notice that the Wasserstein metric, and in particular the Kantorovich Rubinstein duality, links with the MMD in \eqref{eq:codine_MMD} for a class of functions $\mathcal{G}$ that are $K$-Lipschitz continuous
\begin{equation}
\label{eq:codine_WGAN}
W(c_U,c_V) = \frac{1}{K}\sup_{||h||_L\leq K}\biggl[ \mathbb{E}_{\mathbf{u}\sim c_U(\mathbf{u})}[h(\mathbf{u})]-\mathbb{E}_{\mathbf{v}\sim c_V(\mathbf{v})}[h(\mathbf{v}))\biggr].
\end{equation}
Under such conditions, \eqref{eq:codine_MMD-metric} can be interpreted as the generator loss function of a Wasserstein-GAN \cite{WGAN} where the optimum discriminator $h$ is supposed to be known and corresponds to the learnt copula density $c_U$. The proposed idea lies in between two established approaches. The first one, from generative moment matching networks \cite{GMMs}, assumes $\mathcal{G}$ as the reproducing kernel Hilbert space where $g$ is a kernel $k \in \mathcal{H}$ and the supremum in \eqref{eq:codine_MMD} is thus attained. Such MMD-based approach only optimizes over the generator's parameters but does not produce expressive generators, mainly because of the restriction imposed by the kernel structure. The second, instead, requires to learn both the generator and the discriminator, the latter in order to reach the supremum in \eqref{eq:codine_WGAN}. However, enforcing the Lipschitz constraint is not trivial and the alternation between generator and discriminator training suffers from the usual instability and slow convergence problems of GANs. Even if the copula-based approach does not claim optimality, it possesses two desirable properties: compared to kernel-based methods, it uses a more powerful and appropriate discriminator, the copula density itself. Moreover, the fact that $c_U$ is obtained from a prior analysis renders the generator learning process uncoupled from the discriminator's one.

The last concept serves as a foundational seed idea, inviting future exploration and offering ample room for subsequent considerations and applications.

\subsection{Summary}
\label{subsec:codine_conclusions}
This section presented CODINE, a copula density neural estimator. It works by maximizing a variational lower bound on the $f$-divergence between two distributions defined on the uniform probability space, namely, the distribution of the pseudo-observations, i.e., the copula, and the distribution of independent uniforms. CODINE also provides alternative approaches for measuring statistical dependence such as the MI, and for data generation.

\part{Part 2}

\chapter{Medium Modeling via Generative Adversarial Networks} % Modeling PHY layer with Machine Learning techniques
\chaptermark{Medium Modeling with GANs}
\label{sec:medium}
Channel modeling plays a crucial role in the development of reliable communication technologies. The performance of any communication system is strongly dependent on the physical and statistical properties of its communication medium. However, different infrastructures and application scenarios pose challenges such as extreme channel variability, including noise, interference, fading, and distortion, rendering the development of comprehensive mathematical bottom-up channel models impossible. 

In this chapter, we present a general purpose data-driven solution for modeling any communication medium. In particular, we use GANs to implicitly model the empirical channel statistical distribution obtained from real measurement campaigns. We propose a time-frequency strategy to model the noise and extend it to noise affecting multiple conductors. We also provide tips and tricks on how to properly train such complex NNs.

The results presented in this chapter are documented in \cite{RighiniLetizia2019,Letizia2019a}.

\section{Generative adversarial network based channel synthesis}
\sectionmark{GAN-based channel synthesis}
\label{sec:gan_ch_synthesis}
We define as synthetic channel model a phenomenological method that emulates the statistics of the communication channel. Such method does not include the medium physical knowledge to define the model but only the channel transfer functions (CTFs) distribution. Specifically, we shortly describe a tool that is able to generate synthetic data that follow the observed statistics, totally abstracting from the physical interpretation of the medium.

GANs (see Sec. \ref{sec:gans}) have already found some use and applications in communication theory for problems such as stochastic channel modeling and information encoding \cite{OsheaGAN} and for end-to-end system design where the channel effects on the input signals are modeled via conditional GANs \cite{Ye2018}. Nonetheless,
GANs have been mostly successfully applied on image synthesis. Images have two fundamental properties that make them suitable for GAN implementation: 
\begin{itemize}
\item CNNs enable an ad hoc learning scheme, reducing overfitting and mode-collapsing problems;
\item A qualitative assessment based on samples could be in principle enough since visual examination of samples by humans is one of the most common and intuitive ways to evaluate GANs.
\end{itemize} 
Less work has been carried out when generating structured non-images data. Indeed, problems like global convergence of the training and modeling high-dimensional complex distributions are still open questions. 
Among these problems, mode collapse is perhaps the most critical one. This is a well know issue of GAN architectures and it is characterized by a partial sampling of the data distribution. MGANs \cite{MGAN} are
a possible architecture to reduce the probability to run into this problem employing multiple generators instead of using a single one as in the original GAN (see Fig. \ref{fig:medium_MGAN}).

\begin{figure}[t]
  \centering
	\includegraphics[scale=0.3]{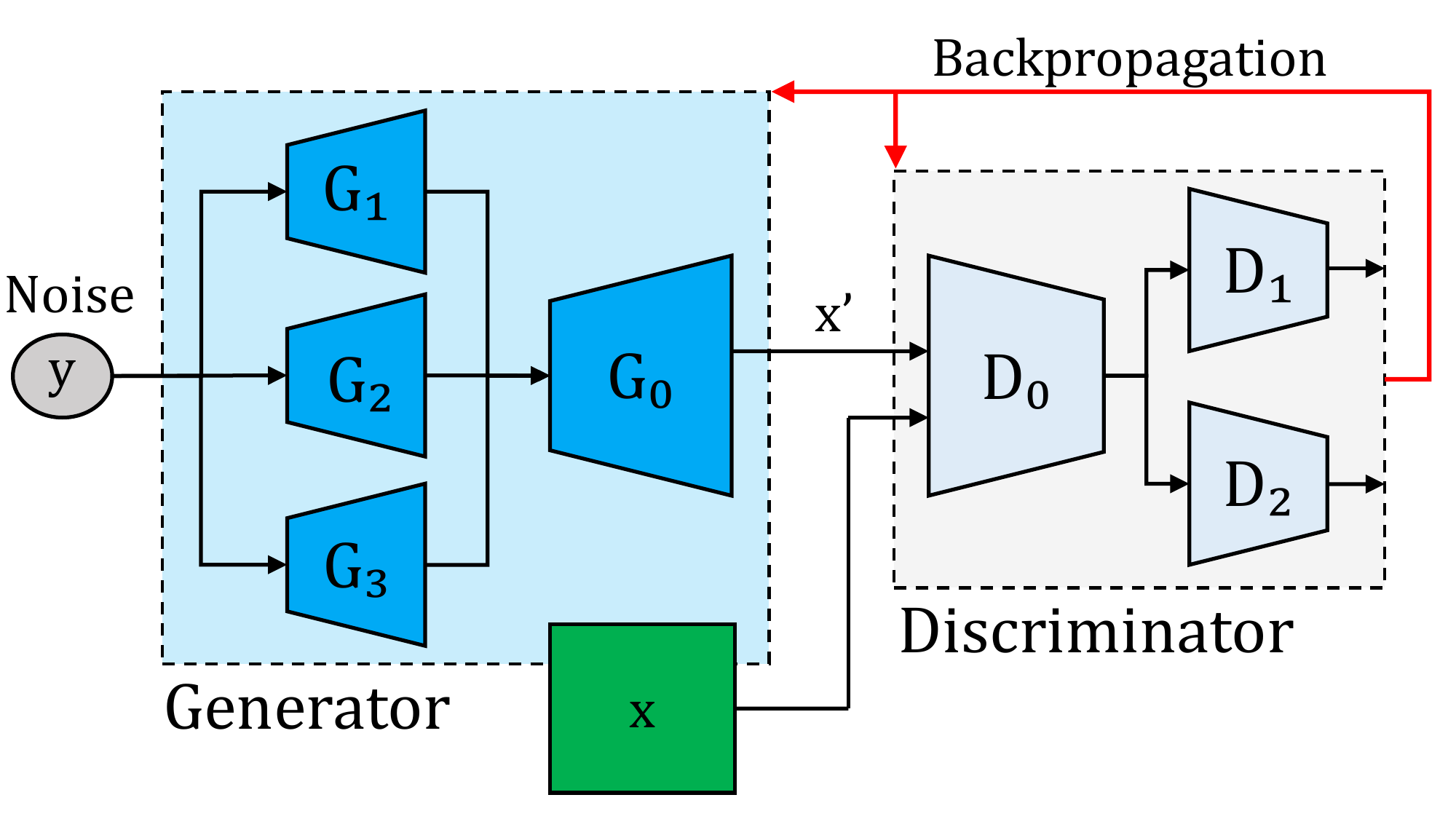}
	\caption{MGAN architecture to generate the new channel latent realizations.}
	\label{fig:medium_MGAN}
\end{figure}%

Complex channels such as the optical fiber or the power line ones demand precise characterization over different frequency bands, making vanilla GANs practically unusable for training synthetic models. In fact, CTFs are realizations of high-dimensional random processes for which no ad hoc architectures have been developed.

We thus propose a more general approach for generating synthetic channels, based on feature characterization and generation.

Let $\mathbf{x} \sim \mathcal{D}$ be the input data which we would like to learn the statistics $p_{X}(\mathbf{x})$. If an a-priori knowledge of the complexity of the distribution $p_{X}(\mathbf{x})$ is provided, the idea is to target a lower-dimensional distribution $p_{Z}(\mathbf{z})$. Suppose there exists a function $F: X\to Z$ which maps $\mathbf{x}$ into $\mathbf{z}$, then a deterministic relation between the two distributions exists, so that with the change of variable rule we obtain $p_{X}$ from $p_{Z}$ as follows
\begin{equation}
p_{X}(\mathbf{x}) = p_{Z}(F(\mathbf{x}))\cdot \biggl| \det \frac{\partial F(\mathbf{x})}{\partial \mathbf{x}}\biggr|.
\label{eq:medium_NICE}
\end{equation}
We propose to model only the distribution $p_{Z}(\mathbf{z})$ of the features $\mathbf{z}$ by identifying the transform function $F(\cdot)$ as the encoder block of an already trained AE (e.g. a probabilistic VAE). Then, after generating new features $\mathbf{z}_g$ with MGANs, the decoder block $G(\cdot)$ takes $\mathbf{z}_g$ as input and produces new samples $\mathbf{x}_g$. 
We forced the features space to be energy-constrained. To keep trace of this limit, a good approach is to add a regularization term in the cost function. In detail, the proposed cost function for the AE training is
\begin{equation}
\label{PowerAE}
\mathcal{L}_{\text{AE}} = \mathbb{E}_{\mathbf{x}\sim \mathcal{D}}[\delta(\mathbf{x},G(F(\mathbf{x})))+\lambda | \mathbf{A}\cdot F(\mathbf{x})+\mathbf{b}|^2_2],
\end{equation}
where $\delta$ is the cross entropy, $\mathbf{A}$ and $\mathbf{b}$ are constant parameters of a linear transformation, a weight matrix and a bias vector, respectively. A comparison between real and generated latent samples is offered in Fig. \ref{fig:medium_Features}. Architectural details are reported in Tab. \ref{tab:autoencoder_nn} and \ref{tab:gan_nn}.

\begin{figure}
  \centering
	\includegraphics[scale=0.6]{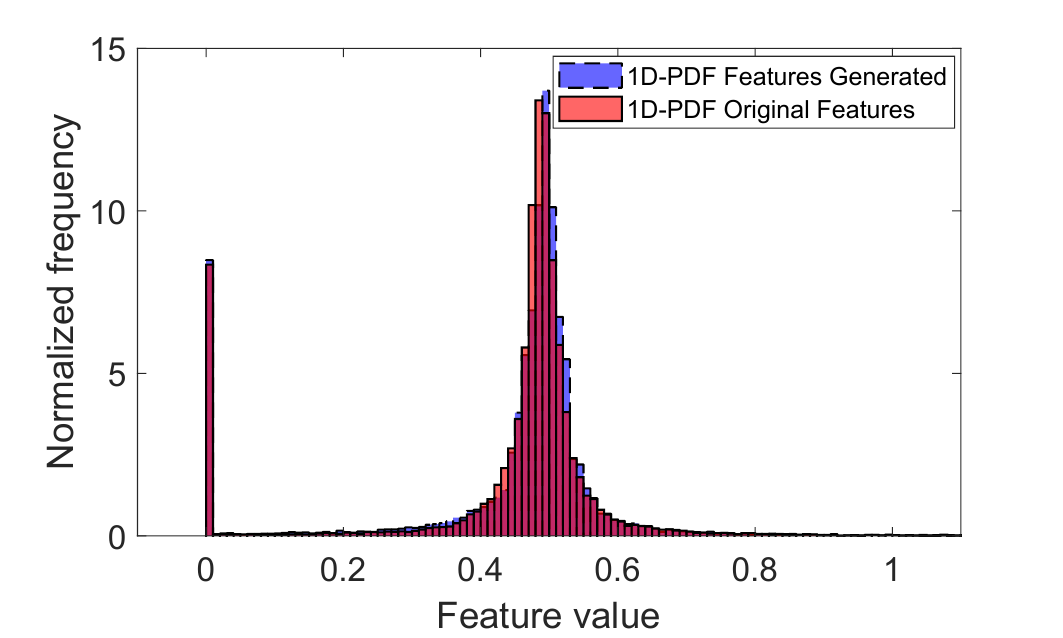}
	\caption{Cumulative 1D-PDF of the extracted features from the AE and of the generated ones using MGANs.}
	\label{fig:medium_Features}
\end{figure}

\begin{table}
	\scriptsize % text dimension
	\centering
	\caption{AE architecture for channel feature generation.}
	\begin{tabular}{ p{5cm}|p{3cm}|p{3cm}} 
		\toprule
		Operation & Feature maps  		& Activation  \\
		\midrule
		\textbf{Encoder} & &\\
		Fully connected & 100 & eLU \\ 
		Dropout &0.5&  \\ 
		Fully connected &80& ReLU  \\ 
		Dropout &0.5&  \\ 
		Fully connected & 50 & ReLU  \\  \hline
		\textbf{Features} & &\\
		Fully connected & 25 & ReLU \\  \hline
		\textbf{Decoder} & 50	&   \\
		Dropout &0.5&   \\
		Fully connected & 80 & ReLU  \\  
		Dropout &0.5&   \\
		Fully connected & 100 & Tanh  \\ \hline
		Batch size &  \multicolumn{2}{c}{128}  \\ 
		Number of iterations &  \multicolumn{2}{c}{100000}  \\ 
		Learning rate &  \multicolumn{2}{c}{0.0002}   \\ 
		Regularization constant &  \multicolumn{2}{c}{$\lambda$ = 0.0002}  \\ 
		Optimizer &  \multicolumn{2}{c}{Adam ($\beta_1$ = 0.9, $\beta_2$ = 0.999)}  \\
		Weight matrix &  \multicolumn{2}{c}{$\mathbf{A}=\mathbb{I}$} \\
		Bias vector &  \multicolumn{2}{c}{$\mathbf{b}$ = -0.5} \\ \hline
		%\bottomrule	
	\end{tabular}
	
	\label{tab:autoencoder_nn}
	%\noindent\makebox[\linewidth]{\rule{0.85\paperwidth}{0.4pt}} %linea
\end{table}

\begin{table}
	\scriptsize % text dimension
	\centering
	\caption{MGAN architecture.}
	\begin{tabular}{ p{3cm}|p{1.5cm}|p{1.5cm}} 
		\toprule
		Operation & Feature maps  		& Activation  \\
		\midrule
		\textbf{Generators} & &  \\ 
		$G(\mathbf{y}):\mathbf{y} \sim \mathcal{U}(-1,1)$ & 16\\ 
		Dropout &0.5& \\ 
		Fully connected &32& ReLU \\ 
		Dropout &0.5&  \\ 
		Fully connected &45& ReLU \\ 
		Dropout &0.5&  \\ 
		Fully connected & 60 & ReLU \\  \hline
		\textbf{Common Generator $G_0$} & &  \\ 
		Fully connected & 70 & ReLU \\ 
		Fully connected & 30 & ReLU \\ 
		Fully connected & 25 & Sigmoid \\  \hline
		\textbf{Common Discriminator $D_0$} &&   \\ 
		Fully connected & 25 & Leaky ReLU  \\
		Dropout &0.5& \\
		Fully connected & 18 & Leaky ReLU \\  
		Dropout &0.5&  \\
		Fully connected & 16 & Leaky ReLU \\  \hline
		\textbf{Classifier} & &\\ 
		Fully connected & 3 & Softmax \\  \hline
		\textbf{Discriminator} & &  \\ 
		Fully connected & 1 & Sigmoid \\ \hline
		Number of generators & \multicolumn{2}{c}{3} \\
		Batch size & \multicolumn{2}{c}{128} \\
		Number of iterations & \multicolumn{2}{c}{150000} \\ 
		Leaky ReLU slope &  \multicolumn{2}{c}{0.2} \\ 
		Learning rate &  \multicolumn{2}{c}{0.0002}  \\ 
		Regularization constant &  \multicolumn{2}{c}{$\beta$ = 0.5} \\ 
		Optimizer &  \multicolumn{2}{c}{Adam ($\beta_1$ = 0.5, $\beta_2$ = 0.9999)}  \\ \hline
		%\bottomrule	
	\end{tabular}
	
	\label{tab:gan_nn}
\end{table}

To verify the goodness of the proposed methodology (schematically described in Algorithm \ref{AE&GAN}), consolidated metrics for real and generated data are compared in Ch. \ref{sec:plc}.

\begin{algorithm}[b]
\caption{AE \& GAN}\label{AE&GAN}
\begin{algorithmic}[1]
\BState Training VAE with input/output $\mathbf{x}$
\State \hspace{3mm}$\mathbf{z} \gets \text{encoder}(\mathbf{x})$
\State \hspace{3mm}Save decoder block
\BState Training MGAN with the original features $\mathbf{z}$ 
\State \hspace{3mm}Sample uniform $\mathbf{y}\sim \mathcal{U}(-1,1)$
\State \hspace{3mm}$\mathbf{z}_g \gets \text{generator}(\mathbf{y})$
\BState $\mathbf{x}_g \gets \text{decoder}(\mathbf{z}_g)$
\end{algorithmic}
\end{algorithm}

Notice that the proposed strategy is extremely similar with the most recent and successful text-to-image architectures such as the latent diffusion model Stable Diffusion \cite{SD2021}. In fact, instead of working directly in the complicated pixel domain, it is convenient to compresses the image $\mathbf{x}$ into a significantly smaller latent variable $\mathbf{z}$, rendering in this way the model faster and more efficient.

\section{Spectrograms generation for noise modeling}
\sectionmark{GAN-based noise synthesis}
\label{sec:medium_channelsynthesis}
Modeling and reproducing noise patterns play an important role in the development of enhanced communication algorithms. 
ML techniques can also be exploited to model complex noise distributions and synthetically reproduce unseen traces. 
Traditional methods, however, do not provide an ensemble characterization of the noise and its
time-variant nature, resulting in difficult parametrization. 
To overcome these limitations, a top-down modeling approach that exclusively depends on the measurements can be followed. Inspired by the SpecGAN method illustrated
in \cite{donahue2018adversarial}, we propose a versatile approach to generate noise in communications, which will be denoted also as synthetic noise. 

The core idea consists of transforming noise measurements into spectrograms which are used to train a DCGAN to generate new spectrograms with the same statistical distribution. Then, the Griffin-Lim algorithm \cite{GriffinLim} converts the synthesized spectrograms into new noise traces.

The scalability of the approach we illustrate in the following allows to incorporate the mutual dependence of multi-conductor noise traces and replicate them.
Finally, the presented method is evaluated through qualitative and quantitative metrics on power line noise measurements: the generated noise traces are perceived indistinguishable from the measured ones, and at the same time, their statistical properties are preserved as proven by the numerical results discussed in Ch. \ref{sec:plc}. 

\subsection{Spectrogram representation}
Time-variant signals are often represented through spectrograms. When dealing with the Fourier transform, time localization gets lost, therefore transforms such as the Short Time Fourier Transform (STFT) or wavelet transform can incorporate information about time localization, at expenses of a less precise frequency localization. The magnitude squared of the STFT is defined as the spectrogram and it gives information on how energy varies over time and frequency.
To analyze and generate noise traces, we divided the measured noise into several traces of length $N$ samples and for each vector we found its spectrogram representation. In particular, let $x(n)$ and $X_w(m,f)$ be the noise sequence and its STFT, respectively. Let $w(n)$ be the analysis window of length $L$, then from the definition of STFT
\begin{equation}
X_w(m,f)=\mathcal{F}[x_w(m,n)]=\sum_{n=0}^{N-1}{x_w(m,n)e^{-j2\pi f n}}
\end{equation}
with
\begin{equation}
x_w(m,n) = w(n-mH)\cdot x(n),
\end{equation}
the spectrogram has expression
\begin{equation}
S(m,f) = |X_w(m,f)|^2,
\end{equation}
where $m$ represents the index for the frame in time, $H$ the stride, and $f$ represents the index for the frequency bin. 

\begin{figure}
	\centering
	\includegraphics[scale=0.35]{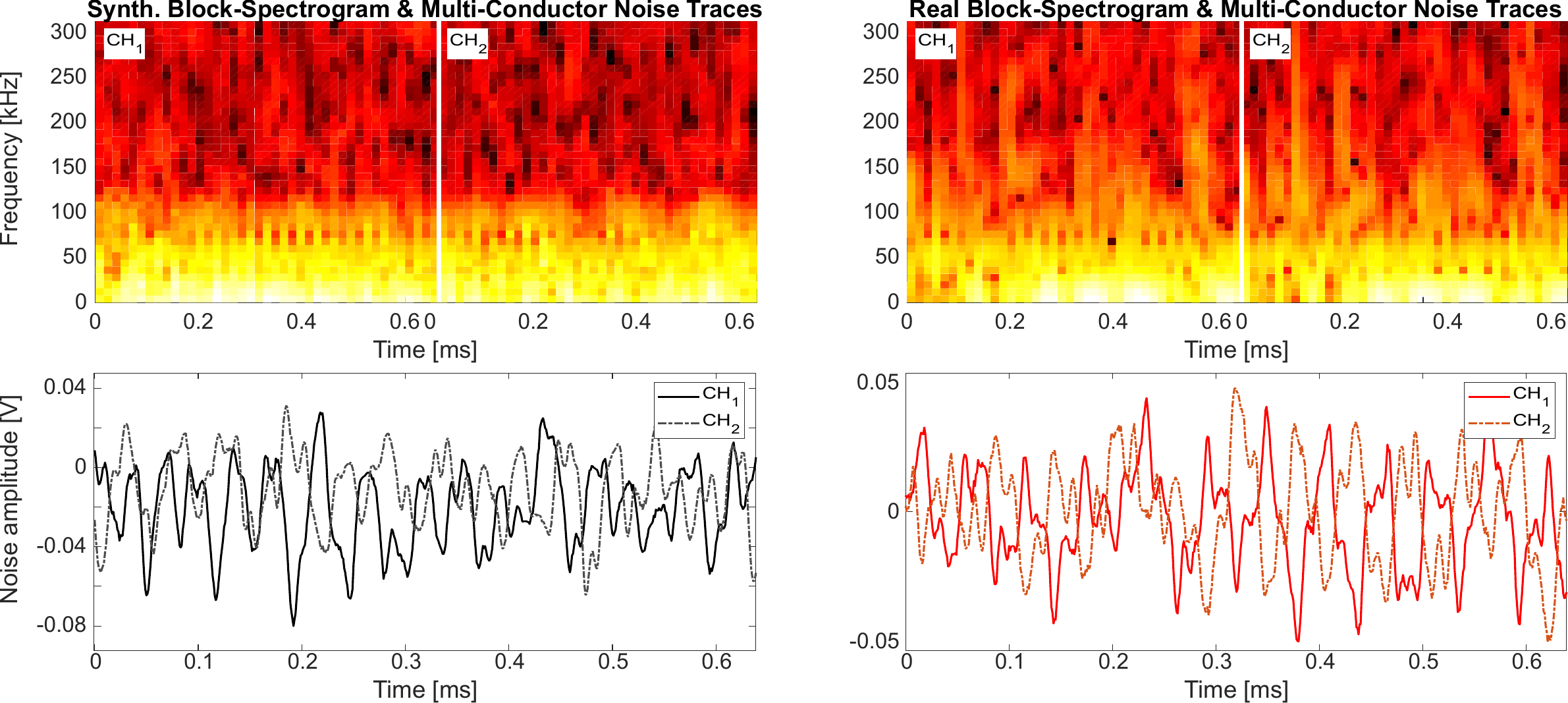}
	\caption{Block-spectrogram of randomly picked  multi-conductor generated (synthetic) and measured (real) noise traces.}
	\label{fig:medium_Block-Spectrogram}
\end{figure}

\subsection{DCGAN}
The statistics of the noise, i.e. spatial and time dependence, resides in the collected dataset. However, the spectrogram $S(m,f)$ offers a visual representation of the measured noise, consequentially  the statistical information is nicely transferred into images. In order to generate new noise with the same distribution of the observed one, new spectrograms $\hat{S}(m,f)$ need to be synthesized. For such purpose, 
we use the DCGAN architecture for image generation \cite{Radford2016}.

\subsection{Griffin-Lim algorithm}
The spectrogram offers an immediate visual representation of time-variant signals. Nevertheless, it only carries magnitude information losing the phase information of the original signal. Therefore, it is not possible to perfectly reconstruct the original samples $x(n)$ from the spectrogram $S(m,f)$ without using phase information. A way to approximate $x(n)$ is the Griffin-Lim algorithm (LSEE-MSTFT) proposed in \cite{GriffinLim}. The idea of LSEE-MSTFT is to iteratively build an estimate of $x(n)$ whose spectrogram converges (in norm $l_2$) to the given one $S(m,f)$. To do so, starting from an initial estimate $x^0(n)$, at each $i$-th step the STFT of the estimation has expression
\begin{equation}
\hat{X}_w^i(m,f)=\sqrt{S(m,f)}\cdot e^{j\angle X_w^i(m,f)}
\end{equation}
and the next estimate $x^{i+1}(n)$ reads as follows
\begin{equation}
x^{i+1}(n)=\frac{\sum_{m=0}^{L-1}{w(n-mH)\cdot \hat{x}_w^i(m,n)}}{\sum_{m=0}^{L-1}{w^2(n-mH)}}.
\end{equation}
By proceeding in such way, $x^{i+1}(n)$ converges to a discrete-time real signal whose spectrogram is $S(m,f)$. When the LSEE-MSTFT algorithm receives as input the generated spectrograms $\hat{S}(m,f)$, it outputs a vector $\hat{x}(n)=x^{i+1}(n)$ corresponding to a generated noise measurement.

\begin{algorithm}
	\caption{Generation of Multi-Conductor Noise}
	\label{alg:noise_gen}
	\begin{algorithmic}[1]
		\Inputs{$p$ multi-conductor noise measurements of $M$ conductors;}
		\Initialize{Parameters for the STFT, DCGAN, LSEE-MSTFT algorithm.}
		\For{$i=1$ to $p$}
		\For{$j=1$ to $M$}
		\State Compute the spectrogram $S_j^{(i)}(m,f)$;
		\EndFor

		\State Create the block-spectrogram $BS^{(i)}(m,f)$;
		\EndFor
		\State Train the DCGAN with the block-spectrogram dataset;
		\State Generate a set of $k$ new block-spectrograms $\hat{BS}(m,f)$;
		\For{$i=1$ to $k$}
		\State Extract the single generated spectrograms $\hat{S}_j^{(i)}(m,f)$;
		\For{$j=1$ to $M$}
		\State Generate the multi-conductor noise vector $\hat{x}^{(i)}_j(n)$ 
		\State using the LSEE-MSTFT algorithm;
		\EndFor
		\EndFor
	\end{algorithmic}
\end{algorithm}

\subsection{Multi-conductor noise generation}
Herein, we extend the methodology presented in the previous paragraph to learn and synthesize the multiple conductor noise statistics.

Let $x_j(n)$ be the synchronized noise measurements collected for $M$ conductors with $j=1,\dots,M$, and let $S_j(m,f)$ be the respective spectrogram. An interesting approach to generate the cross-statistical dependence between the noise in multi-conductors is to concatenate the $M$ spectrograms into a block-spectrogram image $BS(m,f)$, which contains the mutual time-frequency information. To generate new multi-conductor noise measurements, it is enough to generate new block-spectrogram images $\hat{BS}(m,f)$, using for example the DCGAN architecture when $M$ is small, or the Progressive and StyleGANs \cite{karras2018progressive, Karras2019} to deal with higher dimensions in number of samples and number of conductors. Each generated spectrogram $\hat{S}_j(m,f)$ inside $\hat{BS}(m,f)$ takes into account its own statistics but also the cross-statistics, thus the mutual dependence with the other spectrograms. The LSEE-MSTFT algorithm applied for each $\hat{S}_j(m,f)$ returns $M$ generated noise vectors $\hat{x}_j(n)$ with the desired statistics.
The steps aforementioned are summarized in the Alg.~\ref{alg:noise_gen} for multi-conductor noise generation. 

\subsection{Parameter details}
We consider and generate sequences of noise vectors (see Ch. \ref{sec:plc}) of length $N=1024$ samples, corresponding to $1.024$ ms of data when sampled at $f_s=1$ MHz. 
To compute the STFT we filtered the discrete-time signal $x(n)$ with the Blackman-Harris window $w(n)$ of length $L=65$. We choose a stride (hop) $H$ of $15$, resulting in around $75\%$ frame overlap, with $64$ frames and $65$ frequency bins, according to the formulas
\begin{align*}
\text{freq. bins} &= L \nonumber \\
\text{frames} &= 1+\left\lfloor\frac{N-L}{H}\right \rfloor 
\end{align*}
where $\lfloor \cdot \rfloor$ denotes the floor function. We trim the top Nyquist frequency from each STFT $X_w(m,f)$ in order to deal with spectrograms $S(m,f)$ of square dimension $64\times 64$. We replace the Nyquist bin during resynthesis with the mean of the dataset.
After taking the base $10$ logarithm of the spectrogram $S(m,f)$ to better align with human perception, we scaled it to be between $-1$ and $1$ to match the $\text{tanh}$ output non-linearity of the generator network and we kept trace of the de-normalization parameters.
Once the noise dataset has been converted into its image representation, we generated new spectrograms using the DCGAN. The training details for the DCGAN model are reported in Tab.~\ref{tab:DCGAN_parameters}. 
We stopped the training procedure when the statistical metrics of the generated noise, obtained with LSEE-MSTFT, were significative. The number of iteration steps in the estimation process of the LSEE-MSTFT was set to $16$.

We implemented the multi-conductor generation approach in the case of noise measured in $2$ conductors. For each synchronized noise trace the respective spectrogram was built and placed together in the block-spectrogram image of dimension $64\times 128$. We exploited the same DCGAN architecture with the only exception of a first layer with $4096\times 2$ neurons and the first feature maps of dimension $(8,8\times 2,64)$.

\begin{table}
	\centering
	\caption{DCGAN architecture for spectrogram generation.}
	\begin{tabular}{ p{5cm}|p{3cm}|p{3cm}} 
		\toprule
		Operation & Feature maps  		& Activation  \\
		\midrule
		\textbf{Generator $G$} &&   \\ 
		$G(\mathbf{y}):\mathbf{y} \sim \mathcal{U}(-1,1)$ & 100\\ 
		Fully connected & $4096$ & ReLU \\ 
		Reshape and upsampling & $(8,8,64)$ &  \\ 
		Convolution and BatchNorm & $64$ & ReLU \\ 
		Upsampling & &  \\ 
		Convolution and BatchNorm & $32$ & ReLU \\ 
		Upsampling & &  \\ 
		Convolution and BatchNorm & $16$ & ReLU \\ 
		Convolution & $3$ & Tanh \\  \hline
		
		\textbf{Discriminator $D$} & &  \\ 
		$D(\mathbf{x}):\mathbf{x} \sim p_x(\mathbf{x})$ & $(64,64,1)$ \\ 
		Convolution & $32$ & LeakyReLU \\ 
		Dropout & $0.25$ &  \\ 
		Convolution and BatchNorm & $64$ & LeakyReLU \\ 
		Dropout & $0.25$ &  \\ 
		Convolution and BatchNorm & $128$ & LeakyReLU \\ 
		Dropout & $0.25$ &  \\ 
		Convolution and BatchNorm & $256$ & LeakyReLU \\ 
		Dropout & $0.25$ &  \\ 
		Flatten and dense & $1$ & Sigmoid \\   \hline
		
		Batch size & \multicolumn{2}{c}{32} \\
		Number of iterations & \multicolumn{2}{c}{50000} \\ 
		Leaky ReLU slope &  \multicolumn{2}{c}{0.2} \\ 
		Learning rate &  \multicolumn{2}{c}{0.0002}  \\ 
		Optimizer &  \multicolumn{2}{c}{Adam ($\beta_1$ = 0.5, $\beta_2$ = 0.9999)}  \\ \hline
		%\bottomrule	
	\end{tabular}
	\label{tab:DCGAN_parameters}
\end{table}

\section{Further ideas and summary}
\sectionmark{StyleGAN-based medium synthesis}
\label{sec:stylegan_synthesis}
In principle, as we proposed to translate the set of measured channels into images, it is possible to adopt the StyleGAN architecture \cite{Karras2019} to significantly improve the channel generation process. Indeed, any CTF can be represented via a time-frequency decomposition: static channels will possess the same values across time, thus images divided into horizontal stripes. Time-variant channels, instead, are easily mapped into images where vertical portions represent a specific spectrum for a given time window (see Fig. \ref{fig:medium_timeinv} and \ref{fig:medium_timevar}).

\begin{figure}[t]
  \centering
	\includegraphics[scale=0.45]{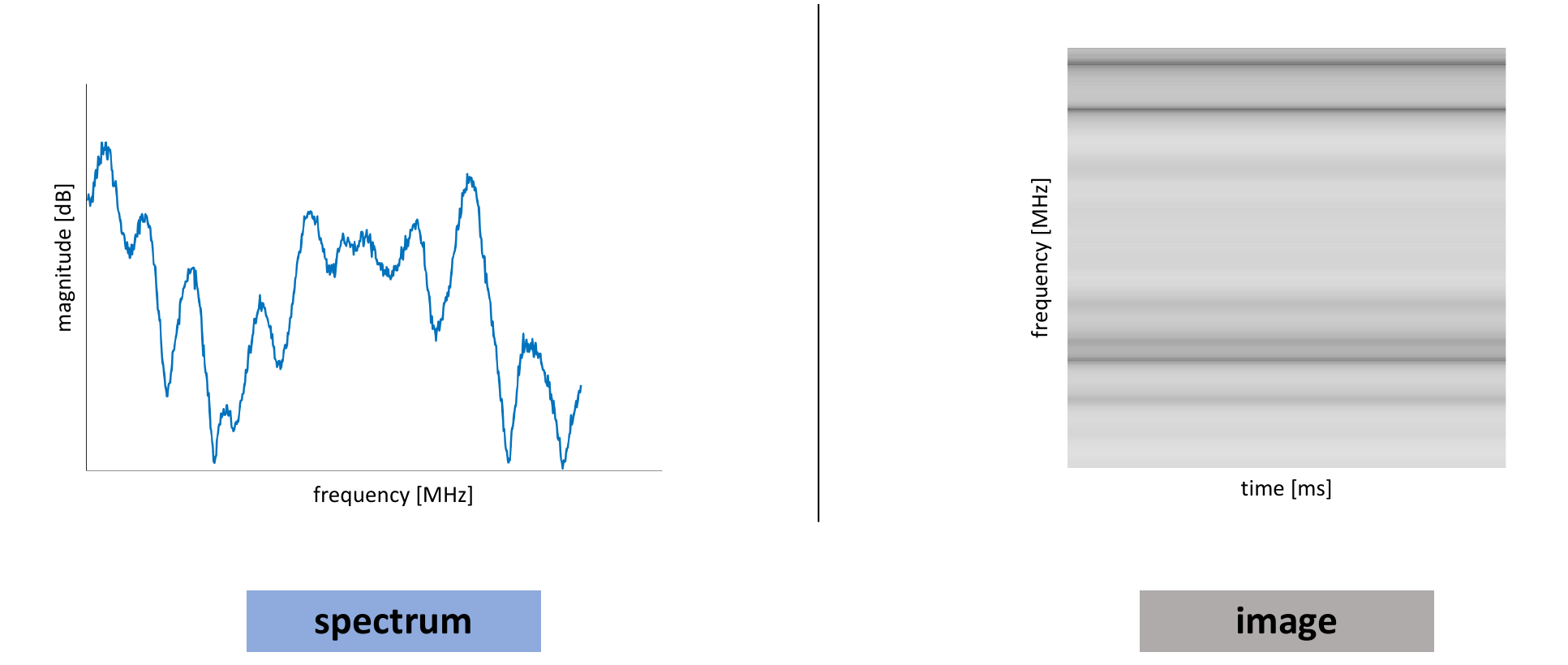}
 	\caption{Time-invariant transformation}
	\label{fig:medium_timeinv}
\end{figure}%
\begin{figure}[t]
  \centering
	\includegraphics[scale=0.45]{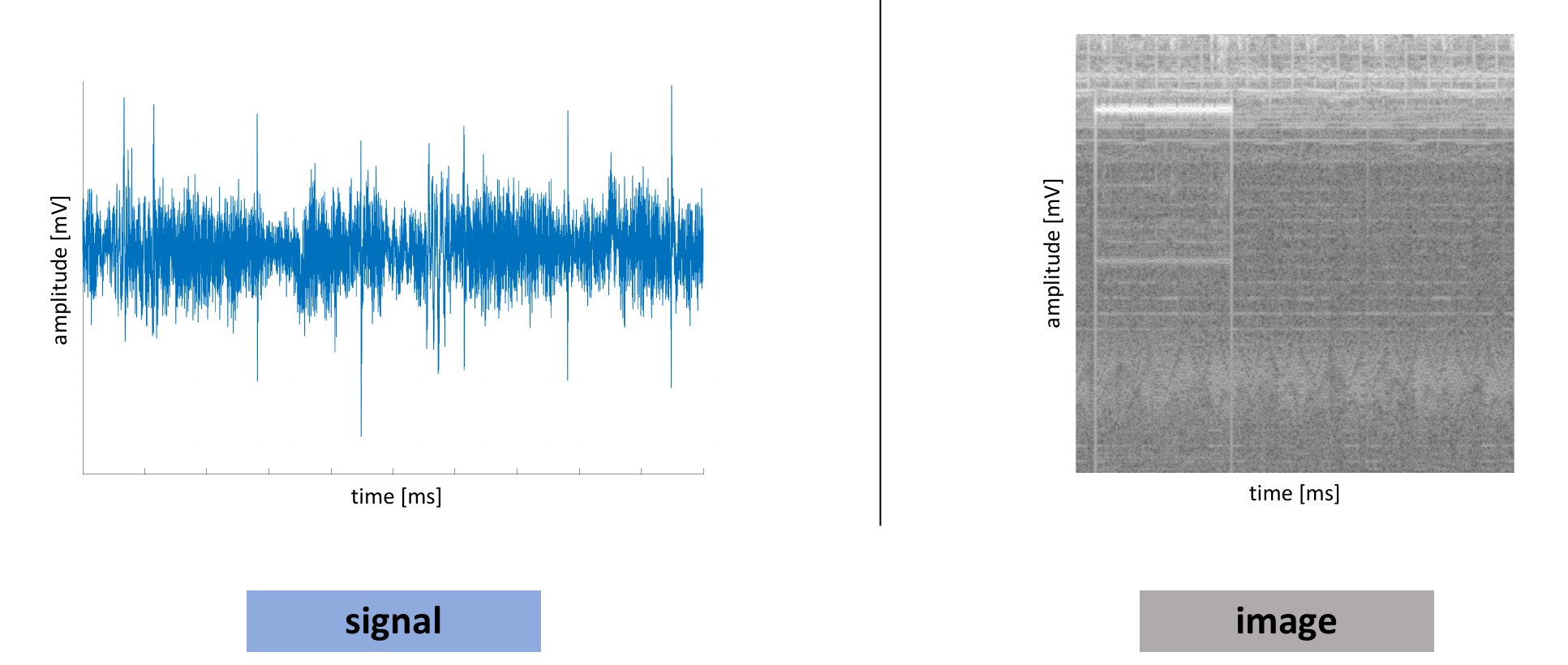}
 	\caption{Time-variant transformation}
	\label{fig:medium_timevar}
\end{figure}%

This chapter has proposed a methodology to model and generate the channel and noise in communication systems via DL techniques. The methodology has been segmented in three phases: a) a pre-processing part where the collected channel or noise traces are represented either as latent features or as in the time-frequency domain as spectrograms, respectively; b) a second generative phase in which GANs produce either new latent features or spectrograms with the same statistics of the original ones; c) a third last phase where either the generated features are decoded into channel samples or the spectrograms are converted to time-domain noise traces using the Griffin-Lim algorithm. 
The generation of multi-conductor noise has been presented as an extension of the single-conductor one using the concept of block-spectrogram, which takes into account the noise spatial cross-dependence.

\chapter{Learning Approaches and Architectures for Neural Decoding} % Neural decoder
\chaptermark{Neural decoders}
\label{sec:decoder}

Physical layer design encompasses the tasks
of designing the channel coder, the modulator and the
associated algorithms at the receiver side, which includes
synchronization, channel estimation, detection/equalization,
interference/noise mitigation, channel decoding.

In this chapter, we consider the detection/decoding problem, where we aim at developing an optimal neural architecture.
The definition of the optimal criterion is a fundamental step. We propose to use the instantaneous mutual information (MI) of the channel input-output signal pair, which yields to the minimization of the a-posteriori information of the transmitted codeword given the communication channel output observation. 
Since the computation of the a-posteriori information is a formidable task, and for the majority of channels it is unknown, we propose a novel neural estimator based on a discriminative formulation. This leads to the derivation of the mutual information neural decoder (MIND). The developed neural architecture is capable not only to solve the decoding problem in unknown channels, but also to return an estimate of the average MI achieved with the coding scheme, as well as the decoding error probability. Several numerical results are reported and compared with maximum a-posteriori and maximum likelihood decoding strategies. 

The results presented in this chapter are documented in \cite{tonello2022mind}.

\section{Introduction}
\label{sec:mind_introduction}
In digital communication systems, data detection and decoding are fundamental tasks implemented at the receiver. The maximum a-posteriori (MAP) decoding approach is the optimal one to minimize the symbol or sequence error probability \cite{Proakis2001,Bahl1974}. If the channel model is known, the MAP decoder can be realized according to known algorithms, among which the BCJR algorithm \cite{Bahl1974}. For instance, for a linear time invariant (LTI) channel with additive Gaussian noise, the decoder comprises a first stage where the channel impulse response is estimated. Then, a max-log-MAP sequence estimator algorithm is implemented. The decoding metric is essentially related to the Euclidean distance between the received and hypothesized transmitted message filtered by the channel impulse response \cite{Proakis2001}. The task becomes more intriguing for channels that cannot be easily modeled or that are even unknown and for which only data samples are available. In such a case, a learning strategy turns out to be an attractive solution and it can be enabled by recent advances in machine learning for communications \cite{Oshea2017, Nachmani2018, Dorner2018}.

In the following, we derive a neural architecture for optimal decoding in an unknown channel. We propose to exploit the MI of the input-output channel pair as decoding metric. The computation of the MI, unknown in many instances, is per-se a challenging task. Therefore, the MI has to be learned, which can be done in principle with neural architectures \cite{Mine2018, Wunder2019, Song2020, f-DIME}. 

However, they are not directly deployable since in the problem at hand the decoder has a more specific task: it has to learn and minimize the a-posteriori information
\begin{equation}
    i(\mathbf{x|y}) = -\log_2{(p_{X|Y}(\mathbf{x}|\mathbf{y}))}
\end{equation}
of the transmitted codeword $\mathbf{x}$ given the observed/received signal vector $\mathbf{y}$. The direct estimation of the a-posteriori information becomes then the objective. Such an estimation is possible by directly learning the conditional PDF $p_{X|Y}(\mathbf{x}|\mathbf{y})$ with a new estimator that exploits a discriminative model referred to as mutual information neural decoder (MIND). MIND allows not only to perform the decoding task, but also to return an estimate of the achieved rate, the MI, and of the decoding error probability. Finally, in general, the same principle can be used to solve supervised \cite{pmlr-v235-novello24a} and weakly-supervised \cite{novello2024label} classification problems.

\section{Maximal mutual information neural decoder}
\sectionmark{Maximal mutual information neural decoder}
\label{sec:mind}

\subsection{Mutual information decoding principle}
\label{subsec:mind_mid}

The considered communication system comprises an encoder, a channel, and a decoder. The encoder maps a message of $k$ bits into one of the $M=2^k$ encoded messages of $n$ (not necessarily binary) symbols. The uncoded message is denoted with the vector $\mathbf{b}$, while the coded message with the vector $\mathbf{x}$ belonging to the set $\mathcal{A}_x$. Thus, the code rate is $R=k/n$ bits per channel use. The channel conveys the coded message to the receiver. Without loss of generality, we can write that the received message is $\mathbf{y}=H(\mathbf{x},\mathbf{h},\mathbf{n})$, where the CTF $H$ implicitly models the channel internal state $\mathbf{h}$ including stochastic variables, e.g. noise, $\mathbf{n}$. For instance, for a linear time-invariant (LTI) channel with additive noise, we obtain $\mathbf{y}=\mathbf{h}*\mathbf{x}+\mathbf{n}$, where $*$ denotes convolution.

The aim of the decoder is to retrieve the transmitted message so that to minimize the decoding error, or equivalently to maximize the received information for each transmitted message. Indeed, if we consider the MI of the pair $(\mathbf{x},\mathbf{y})$, with joint PDF $p_{XY}(\mathbf{x},\mathbf{y})$, and marginals $p_X(\mathbf{x})$ and $p_Y(\mathbf{y})$, this can be written as

\begin{equation}
i(\mathbf{x};\mathbf{y})=i(\mathbf{x})-i(\mathbf{x}|\mathbf{y})
\end{equation}
where $i(\mathbf{x}) = -\log_2 p_{X}(\mathbf{x})$ is the information of the transmitted message and $i(\mathbf{x}|\mathbf{y})$ is the a-posteriori information (residual uncertainty observing $\mathbf{y}$) \cite{Gallager1968}. It follows that for each transmitted message, the instantaneous mutual information is maximized when the a-posteriori information is minimized. The a-posteriori information is given by
\begin{equation}
\label{eq:MIND_PI}
i(\mathbf{x}|\mathbf{y})=-\log_2 p_{X|Y}(\mathbf{x}|\mathbf{y}) = -\log_2{\frac{p_{XY}(\mathbf{x},\mathbf{y})}{p_{Y}(\mathbf{y})}}
\end{equation}
so that the decoding criterion becomes
\begin{equation}
\mathbf{\hat{x}}=\argmin_{\mathbf{x}\in \mathcal{A}_x}{i(\mathbf{x}|\mathbf{y})} = \argmax_{\mathbf{x}\in \mathcal{A}_x}{\log_2 p_{X|Y}(\mathbf{x}|\mathbf{y})}
\end{equation}
which corresponds to the log-MAP decoding principle \cite{Proakis2001,Bahl1974}.

To compute the a-posteriori information, we need to know the a-posteriori probability $p_{X|Y}(\mathbf{x}|\mathbf{y})$. If the CTF can be modeled, the decoder is realized according to known approaches. In fact, the a-posteriori probability is explicitly calculated as $p_{X|Y}(\mathbf{x}|\mathbf{y})=p_{Y|X}(\mathbf{y}|\mathbf{x})p_{X}(\mathbf{x})/p_Y(\mathbf{y})$, i.e., from the conditional channel PDF, the coded message a-priori probability and the channel output PDF. However, for unknown channels and sources, we have to resort to a learning strategy as explained in the following subsection.

\subsection{Discriminative formulation of MIND}
\label{subsec:mind_ndec}

In the following, we show that it is possible to design a parametric discriminator whose output leads to the estimation of the a-posteriori information. The optimal discriminator is found by maximizing an appropriately defined value function.

Before describing the proposed solution, it should be noted that the adversarial training of GANs (see Sec. \ref{sec:gans}) pushes the discriminator output $D(\mathbf{a})$ towards the optimum value $D^*(\mathbf{a}) = p_{data}(\mathbf{a})/(p_{data}(\mathbf{a})+p_{gen}(\mathbf{a}))$, 
where $p_{data}(\mathbf{a})$ is the real data PDF and $p_{gen}(\mathbf{a})$ is the PDF of the data generated by $G$.
Thus, the output of the optimal discriminator $D^*(\mathbf{a})$ can be used to estimate the PDF ratio 
$p_{gen}(\mathbf{a})/p_{data}(\mathbf{a}) = (1-D^*(\mathbf{a}))/D^*(\mathbf{a}).$
 
Now, the idea is to design a discriminator that estimates the probability density ratio $p_{XY}(\mathbf{x},\mathbf{y})/p_{Y}(\mathbf{y})$ and consequently the a-posteriori information as shown in \eqref{eq:MIND_PI}. 
The following Lemma provides an a-posteriori information estimator that exploits a discriminator trained to maximize a certain value function. At the equilibrium, a transformation of the discriminator output yields the searched density ratio. 

\begin{lemma}
\label{lemma:MIND_Lemma1}
Let $\mathbf{x}\sim p_X(\mathbf{x})$ and $\mathbf{y}\sim p_Y(\mathbf{y})$ be the channel input and output vectors, respectively. Let $H(\cdot)$ be the channel transfer function, in general stochastic, such that $\mathbf{y} = H(\mathbf{x}, \mathbf{h}, \mathbf{n} )$, with $\mathbf{h}$ and $\mathbf{n}$ being internal state and noise variables, respectively. 
Let the discriminator $D(\mathbf{x},\mathbf{y})$ be a scalar function of $(\mathbf{x},\mathbf{y})$.
If $\mathcal{J}_{MIND}(D)$ is the value function defined as
\begin{align}
\label{eq:MIND_discriminator_function}
\mathcal{J}_{MIND}(D) &= \; \mathbb{E}_{(\mathbf{x},\mathbf{y}) \sim p_{U}(\mathbf{x})p_{Y}(\mathbf{y})}\biggl[|\mathcal{T}_x|\log \biggl(D\bigl(\mathbf{x},\mathbf{y}\bigr)\biggr)\biggr] \nonumber \\ 
& + \mathbb{E}_{(\mathbf{x},\mathbf{y}) \sim p_{XY}(\mathbf{x},\mathbf{y})}\biggl[\log \biggl(1-D\bigl(\mathbf{x},\mathbf{y}\bigr)\biggr)\biggr],
\end{align}
where $p_U(\mathbf{x})=1/|\mathcal{T}_x|$ describes a multivariate uniform distribution over the support $\mathcal{T}_x$ of $p_X(\mathbf{x})$ having Lebesgue measure $|\mathcal{T}_x|$, then the optimal discriminator output is
\begin{equation}
\label{eq:MIND_optimal_discriminator_1}
D^*(\mathbf{x},\mathbf{y}) = \arg \max_D \mathcal{J}_{MIND}(D) = \frac{p_{Y}(\mathbf{y})}{p_{Y}(\mathbf{y}) + p_{XY}(\mathbf{x},\mathbf{y})},
\end{equation}
and the a-posteriori information is computed as
\begin{equation}
\label{eq:MIND_i-DMIE}
i(\mathbf{x}|\mathbf{y}) = -\log_2 \biggl(\frac{1-D^*(\mathbf{x},\mathbf{y})}{D^*(\mathbf{x},\mathbf{y})}\biggr).
\end{equation}
\end{lemma}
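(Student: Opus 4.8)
The plan is to follow a pointwise (calculus-of-variations) optimization of the value function, in the spirit of the original GAN analysis but accounting for the asymmetric weighting in \eqref{eq:MIND_discriminator_function}. First I would merge the two expectations into a single double integral over $(\mathbf{x},\mathbf{y})$. The key simplification is that on the support $\mathcal{T}_x$ we have $p_U(\mathbf{x}) = 1/|\mathcal{T}_x|$, so the prefactor $|\mathcal{T}_x|$ cancels $p_U(\mathbf{x})$ and the first expectation reduces to
\[
\mathbb{E}_{p_U(\mathbf{x})p_Y(\mathbf{y})}\bigl[|\mathcal{T}_x|\log D(\mathbf{x},\mathbf{y})\bigr] = \int_{\mathcal{T}_x}\!\!\int p_Y(\mathbf{y})\log D(\mathbf{x},\mathbf{y})\diff\mathbf{y}\,\diff\mathbf{x}.
\]
Since $p_{XY}$ vanishes off $\mathcal{T}_x$, the second expectation lives on the same domain, and the value function becomes
\[
\mathcal{J}_{MIND}(D) = \int_{\mathcal{T}_x}\!\!\int \Bigl[p_Y(\mathbf{y})\log D(\mathbf{x},\mathbf{y}) + p_{XY}(\mathbf{x},\mathbf{y})\log\bigl(1-D(\mathbf{x},\mathbf{y})\bigr)\Bigr]\diff\mathbf{y}\,\diff\mathbf{x}.
\]

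Next I would maximize the integrand pointwise. For fixed $(\mathbf{x},\mathbf{y})$ it has the form $a\log D + b\log(1-D)$ with $a=p_Y(\mathbf{y})$ and $b=p_{XY}(\mathbf{x},\mathbf{y})$, both nonnegative; this is strictly concave in $D\in(0,1)$ because its second derivative $-a/D^2 - b/(1-D)^2$ is negative. Setting the derivative $a/D - b/(1-D)$ to zero yields the unique maximizer $D=a/(a+b)$, i.e. $D^*(\mathbf{x},\mathbf{y}) = p_Y(\mathbf{y})/\bigl(p_Y(\mathbf{y})+p_{XY}(\mathbf{x},\mathbf{y})\bigr)$, which is \eqref{eq:MIND_optimal_discriminator_1}. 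Because the maximization decouples over points and the integrand is concave at each point, the pointwise optimizer is the global maximizer of $\mathcal{J}_{MIND}$.

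Finally I would invert this identity to recover the a-posteriori information. Direct substitution gives
\[
\frac{1-D^*(\mathbf{x},\mathbf{y})}{D^*(\mathbf{x},\mathbf{y})} = \frac{p_{XY}(\mathbf{x},\mathbf{y})}{p_Y(\mathbf{y})} = p_{X|Y}(\mathbf{x}|\mathbf{y}),
\]
so that $-\log_2\bigl((1-D^*)/D^*\bigr) = -\log_2 p_{X|Y}(\mathbf{x}|\mathbf{y}) = i(\mathbf{x}|\mathbf{y})$ by the definition \eqref{eq:MIND_PI}, establishing \eqref{eq:MIND_i-DMIE}.

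The computation itself is routine; the two points deserving care are the cancellation $|\mathcal{T}_x|\,p_U(\mathbf{x})=1$ on $\mathcal{T}_x$ --- this is precisely what makes the uniform reference measure reproduce the marginal $p_Y(\mathbf{y})$ rather than the joint $p_{XY}$ in the optimal discriminator --- and the validity of exchanging maximization with integration, which holds in the non-parametric limit where $D$ ranges over an unconstrained function class and where $D^*$ stays strictly inside $(0,1)$ wherever $p_Y$ and $p_{XY}$ are positive. I expect no genuine obstacle beyond this support bookkeeping; the only caveat outside the lemma's scope is the usual parametric gap, namely that a finite neural network $D_\theta$ may not attain $D^*$ exactly.
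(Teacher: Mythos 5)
Your proposal is correct and follows essentially the same route as the paper's proof: both write $\mathcal{J}_{MIND}$ as a single Lebesgue integral, use the cancellation $|\mathcal{T}_x|\,p_U(\mathbf{x})=1$ on the support, maximize the integrand pointwise in $D$ (checking concavity via the second derivative), and then invert $(1-D^*)/D^* = p_{XY}/p_Y = p_{X|Y}$ to obtain \eqref{eq:MIND_i-DMIE}. Your added remarks on support bookkeeping and the validity of pointwise maximization are slightly more explicit than the paper's, but the argument is the same.
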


\begin{proof}
From the hypothesis of the Lemma, the value function can be rewritten as
\begin{align}
\label{eq:MIND_Lebesgue1}
\mathcal{J}_{MIND}(D) &= \int_{\mathcal{T}_x} \int_{\mathcal{T}_y}\biggl[|\mathcal{T}_x|p_{U}(\mathbf{x})p_Y(\mathbf{y}) \log \biggl(D(\mathbf{x},\mathbf{y})\biggr) \nonumber \\ 
&+ p_{XY}(\mathbf{x},\mathbf{y}) \log \biggl(1-D(\mathbf{x},\mathbf{y})\biggr)\biggr] \diff \mathbf{x} \diff \mathbf{y}
\end{align}
where $\mathcal{T}_y$ is the support of $p_Y(\mathbf{y})$.
To maximize $\mathcal{J}_{MIND}(D)$, a necessary and sufficient condition requires to take the derivative of the integrand with respect to $D$ and equating it to $0$, yielding the following equation in $D$
\begin{equation}
\frac{|\mathcal{T}_x|p_{U}(\mathbf{x})p_Y(\mathbf{y})}{D(\mathbf{x},\mathbf{y})} -\frac{p_{XY}(\mathbf{x},\mathbf{y})}{1-D(\mathbf{x},\mathbf{y})} =0.
\end{equation}
The solution of the above equation yields the optimum discriminator $D^*(\mathbf{x},\mathbf{y})$ since: $p_U(\mathbf{x})=\frac{1}{|\mathcal{T}_x|}$ and $\mathcal{J}_{MIND}(D^*)$ is a maximum being the second derivative w.r.t. $D$ a non-positive function.

Finally, at the equilibrium, 
\begin{equation}
p_{X|Y}(\mathbf{x}|\mathbf{y})=\frac{p_{XY}(\mathbf{x},\mathbf{y})}{p_{Y}(\mathbf{y})} = \frac{1-D^*(\mathbf{x},\mathbf{y})}{D^*(\mathbf{x},\mathbf{y})},
\end{equation}
therefore the thesis follows.
\end{proof}

\subsection{Parametric implementation}
\label{subsec:mind_implementation}
The practical implementation of MIND is realized through a neural network-based learning approach. That is, the discriminator $D(\mathbf{x},\mathbf{y})$ with input $(\mathbf{x},\mathbf{y})$ is parameterized by a neural network. In a first phase, training of the discriminator is performed via optimization of \eqref{eq:MIND_discriminator_function}.
Training consists of transmitting repeatedly all coded messages such that at equilibrium the a-posteriori information $i(\mathbf{x}|\mathbf{y})$ is estimated. Then, in the testing phase, decoding can take place for each received new message $\mathbf{y}$ by minimizing the a-posteriori information. 
While the testing phase can be of moderate complexity depending on the neural network dimension, the training phase is more complex since no assumption on the channel and source statistics is made. In the following, we propose two different architectures and implementations of Lemma \ref{lemma:MIND_Lemma1}.

\subsubsection{Unsupervised approach}
The value function in \eqref{eq:MIND_discriminator_function}, as it stands, requires the discriminator to be a scalar parameterized function which takes as input both the transmitted and received vector of samples $(\mathbf{x},\mathbf{y})$. The resulting architecture is presented in Fig. \ref{fig:MIND_unsupervised}. Such formulation is general and can be applied to any coding scheme. In fact, to learn the parameters of $D$, it is sufficient during training to alternate the input joint samples $(\mathbf{x},\mathbf{y})$ with marginal samples $(\mathbf{u},\mathbf{y})$, where $\mathbf{u}$ are realizations of a multivariate uniform distribution with independent components, defined over the support $\mathcal{T}_x$ of $p_X(\mathbf{x})$. Then, during the testing phase, decoding is accomplished by finding $\mathbf{x}$ that minimizes \eqref{eq:MIND_i-DMIE}, for all possible coded messages $\mathbf{x}$. This  method can be thought as an unsupervised learning approach since no known labels are exploited in the loss function for the training process. Hence, the objective is to implicitly estimate the density ratio in \eqref{eq:MIND_optimal_discriminator_1}.
Since in practical cases, the coded message $\mathbf{x}$ has discrete alphabet, another architecture is proposed below.

\begin{figure}
	\centering
	\includegraphics[scale=0.63]{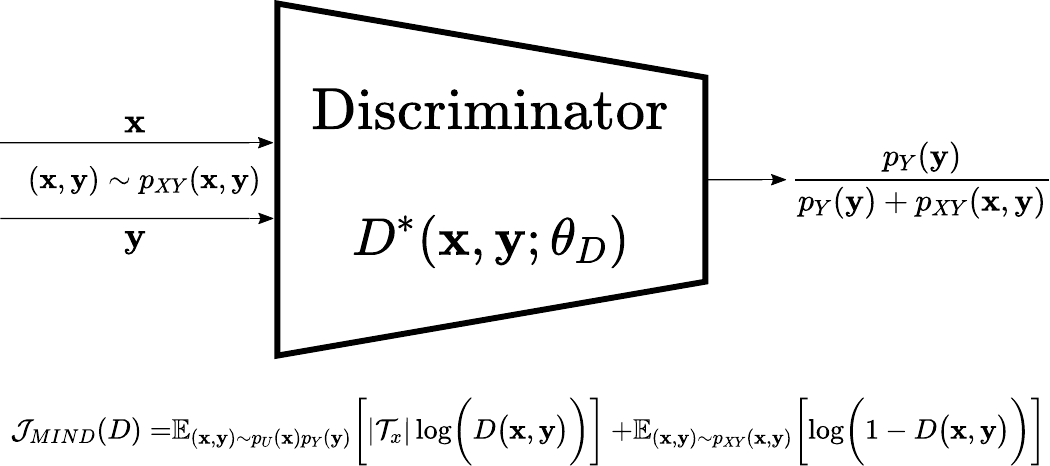}
	\caption{Unsupervised discriminator architecture and relative value function. Input is fed with paired samples from the joint distribution while the output consists of a single probability ratio value.}
	\label{fig:MIND_unsupervised}
\end{figure} 

the single output network, which has to represent a transformation of a probability mass function, tends to collapse the information from previous layers in a unique node and empirically can be verified that it can lead to training problems; lastly, the knowledge of the true label (the transmitted sample) does not explicitly appear in the loss function, thus, it does not guide the training towards optimality. In summary, learning the joint distribution $p_{XY}(\mathbf{x},\mathbf{y})$ may be not convenient for our purpose. All these empirical and heuristic considerations made us rethink about the architecture to use, as explained next. 

\subsubsection{Supervised approach}
It is possible to deploy a discriminator architecture that receives as input only the channel output $\mathbf{y}$. Indeed, if the channel input has discrete nature, e.g., $\mathbf{x}$ belongs to the alphabet $\mathcal{A}_x = \{\mathbf{x}_1, \mathbf{x}_2, \dots, \mathbf{x}_{M}\}$ with PDF 
\begin{equation}
\label{eq:MIND_px}
        p_X(\mathbf{x})=\sum_{\mathbf{x}_i \in \mathcal{A}_x}{P_X(\mathbf{x}_i)\delta(\mathbf{x}-\mathbf{x}_i)},
\end{equation}
then also the multivariate uniform PDF in Lemma \ref{lemma:MIND_Lemma1} reads as
\begin{equation}
\label{eq:MIND_pu}
       p_U(\mathbf{x})=\sum_{\mathbf{x}_i \in \mathcal{A}_x}{P_U(\mathbf{x}_i)\delta(\mathbf{x}-\mathbf{x}_i)}, 
\end{equation}
with 
\begin{equation}
    \label{eq:MIND_1/M}
    P_U(\mathbf{x}_i)=1/M.
\end{equation}
Thus, the value function in \eqref{eq:MIND_discriminator_function}, or its Lebesgue integral expression in \eqref{eq:MIND_Lebesgue1}, can be decomposed as a sum of independent elements for every input codeword $\mathbf{x}_i$, for $i\in \{1,\dots,M\}$, as follows
\begin{equation}
\label{eq:MIND_j_i}
\mathcal{J}_{MIND}(D) = \sum_{i = 1}^{M}{\mathcal{J}_i(D)},
\end{equation}
where 
\begin{align}
\label{eq:MIND_discriminator_function_2}
\mathcal{J}_{i}(D) &= \; \mathbb{E}_{\mathbf{y} \sim p_{Y}(\mathbf{y})}\biggl[\log \biggl(D\bigl(\mathbf{x}_i,\mathbf{y}\bigr)\biggr)\biggr] \\ \nonumber
&+P_{X}(\mathbf{x}_i)\mathbb{E}_{\mathbf{y} \sim p_{Y|X}(\mathbf{y}|\mathbf{x}_i)}\biggl[\log \biggl(1-D\bigl(\mathbf{x}_i,\mathbf{y}\bigr)\biggr)\biggr].
\end{align}
By writing \eqref{eq:MIND_discriminator_function_2} with Lebesgue integrals and following the proof of Lemma \ref{lemma:MIND_Lemma1}, it is simple to verify that the discriminator which maximizes \eqref{eq:MIND_j_i} has to maximize all terms $\mathcal{J}_{i}(D)$, and this happens for 
\begin{equation}
\label{eq:MIND_metric}
D^*(\mathbf{x}_i,\mathbf{y}) = \frac{p_{Y}(\mathbf{y})}{p_{Y}(\mathbf{y}) + P_{X}(\mathbf{x}_i)p_{Y|X}(\mathbf{y}|\mathbf{x}_i)} = \frac{1}{1 + P_{X|Y}(\mathbf{x}_i|\mathbf{y})},
\end{equation}
where $P_{X|Y}(\mathbf{x}_i|\mathbf{y})$ is the PMF of $X|Y$,
having PDF
\begin{equation}
    p_{X|Y}(\mathbf{x}|\mathbf{y})=\sum_{\mathbf{x}_i \in \mathcal{A}_x}{P_{X|Y}(\mathbf{x}_i|\mathbf{y})\delta(\mathbf{x}-\mathbf{x}_i)}.
\end{equation}
More details are offered in Sec. \ref{sec:mind_appendix}.

Now, it is more convenient from an implementation perspective to define the $M$-dimensional vectors
\begin{equation}
\bar{\mathbf{D}}(\mathbf{y}) := 
\begin{bmatrix}
   D(\mathbf{x}_1,\mathbf{y})  \\
   D(\mathbf{x}_2,\mathbf{y}) \\
   \vdots \\
    D(\mathbf{x}_M,\mathbf{y})
\end{bmatrix}
, 
\bar{\mathbf{1}} := 
\begin{bmatrix}
   1  \\
   1 \\
   \vdots \\
    1
\end{bmatrix}
, 
\bar{\mathbf{1}}({\mathbf{x}_i}) := 
\begin{bmatrix}
   0  \\
   \vdots \\
    0 \\
    \underbrace{1}_\text{i-th position} \\ 
    0 \\
    \vdots \\
    0
\end{bmatrix}.
\end{equation}
In fact, the expectations in \eqref{eq:MIND_discriminator_function_2} can be rewritten as
\begin{align}
\label{eq:MIND_disc_vect_1}
    &\mathcal{J}_{MIND}(D) =   \mathbb{E}_{\mathbf{y} \sim p_{Y}(\mathbf{y})}\biggl[\log \bigl(\bar{\mathbf{D}}(\mathbf{y}) \bigr)^T\cdot \bar{\mathbf{1}}_M\biggr] \\ \nonumber 
    &+\sum_{\mathbf{x}_i\in \mathcal{A}_x}{P_X(\mathbf{x}_i) \mathbb{E}_{\mathbf{y} \sim p_{Y|X}(\mathbf{y}|\mathbf{x}_i)}\biggl[\log \bigl(1-\bar{\mathbf{D}}(\mathbf{y})\bigr)^T\cdot \bar{\mathbf{1}}_M({\mathbf{x}_i})\biggr]},
\end{align}
where $\bar{\mathbf{1}}_M$ is a vector of $M$ unitary elements and $\bar{\mathbf{1}}_M({\mathbf{x}_i})$ is the one-hot code of $\mathbf{x}$.
Finally, the value function of the supervised architecture (see Fig. \ref{fig:MIND_supervised}) assumes the following vector form 
\begin{align}
\label{eq:MIND_new_discriminator_function}
 &\mathcal{J}_{MIND}(D) =   \mathbb{E}_{\mathbf{y} \sim p_{Y}(\mathbf{y})}\biggl[\log \bigl(\bar{\mathbf{D}}(\mathbf{y}) \bigr)^T\cdot \bar{\mathbf{1}}_M\biggr] \\ \nonumber
 &+\mathbb{E}_{\mathbf{x} \sim p_{X}(\mathbf{x})} \mathbb{E}_{\mathbf{y} \sim p_{Y|X}(\mathbf{y}|\mathbf{x})}\biggl[\log \bigl(1-\bar{\mathbf{D}}(\mathbf{y})\bigr)^T\cdot \bar{\mathbf{1}}_M(\mathbf{x})\biggr].
\end{align}
The new expression \eqref{eq:MIND_new_discriminator_function} possesses the label information in the scalar product with the one-hot positional code and therefore can be treated as a supervised learning problem. 
\begin{figure}
	\centering
	\includegraphics[scale=0.63]{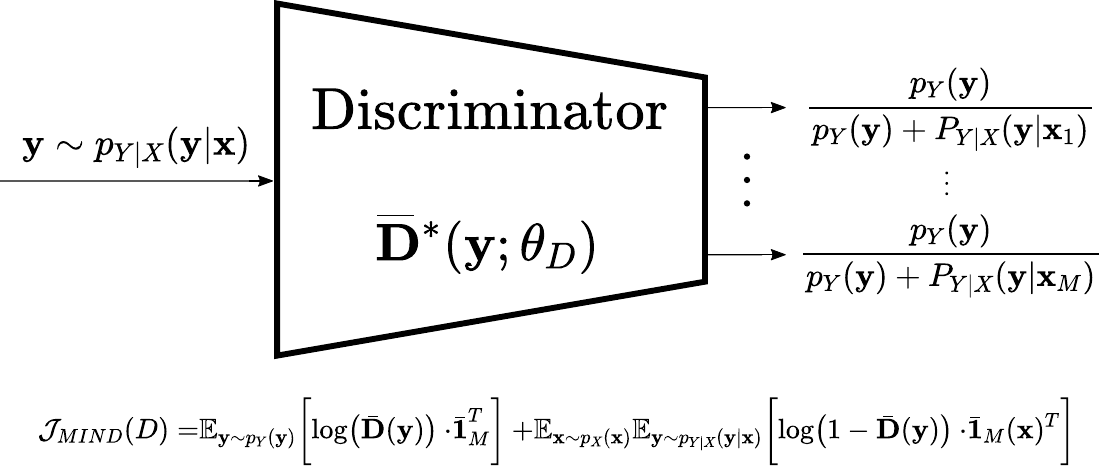}
	\caption{Supervised discriminator architecture and relative value function. Input is fed with the received channel samples while the output consists of multiple probability ratio values.}
	\label{fig:MIND_supervised}
\end{figure} 

By comparing the $M$ discriminator outputs $D^*_i = D^*(\mathbf{x}_i,\mathbf{y})$ in \eqref{eq:MIND_metric}, the final decoding stage implements
\begin{equation}
\mathbf{\hat{x}}_i = \argmax_{i \in \{1,...M\}} \log_2 \frac{1-D^*_i}{D^*_i} = \argmin_{\mathbf{x}_i \in \mathcal{A}_x} i(\mathbf{x}_i|\mathbf{y}),
\end{equation}
thus, MIND aims at finding the codeword $\mathbf{x}_i$ which minimizes the residual uncertainty $i(\mathbf{x}_i|\mathbf{y})$ after the observation of $\mathbf{y}$.

\subsection{Estimation of achieved information rate and decoding error probability}
\label{subsec:mind_estimation}
Dealing with an unknown channel, a relevant question is the estimation of the information rate achieved with the used coding scheme. MIND can be exploited for such a goal. In fact, the normalized average mutual information (in bits per channel use) is given by \cite{Gallager1968}
\begin{equation}
    I_n(X;Y) = \frac{1}{n}(H(X)-H(X|Y))
\end{equation}
whose computation requires the entropy of the source $H(X)$ and the conditional entropy $H(X|Y)$:
\begin{equation}
\label{eq:MIND_source_entropy}
H(X) = \mathbb{E}_{\mathbf{x} \sim p_X(\mathbf{x})}[i(\mathbf{x})] = -\sum_{\mathbf{x}_i \in \mathcal{A}_x}{P_X(\mathbf{x}_i)\log_2 P_X(\mathbf{x}_i)}
\end{equation}
\begin{equation}
\label{eq:MIND_conditional_entropy}
H(X|Y) = -\mathbb{E}_{\mathbf{y} \sim p_Y(y)}\biggl[\sum_{\mathbf{x}_i \in \mathcal{A}_x}{P_{X|Y}(\mathbf{x}_i|\mathbf{y})\log_2 P_{X|Y}(\mathbf{x}_i|\mathbf{y})}\biggr].
\end{equation}
The discriminator in MIND, for a given coding scheme, returns an estimate of the a-posteriori probability mass function values $P_{X|Y}(\mathbf{x}_i|\mathbf{y})=(1-D_i)/D_i \; \forall i=\{1,\dots,M\}$, therefore \eqref{eq:MIND_conditional_entropy} can be directly computed. About \eqref{eq:MIND_source_entropy}, it is worth mentioning that the source distribution can be obtained using Monte Carlo integration from the a-posteriori probability obtained with MIND by averaging over $N$ realizations $\mathbf{y}_j$ of $Y$ as follows 
\begin{equation}
\label{eq:MIND_monte-carlo_source}
P_X(\mathbf{x}_i) \stackrel{N\to \infty}{=} \frac{1}{N}\sum_{j=1}^{N}{P_{X|Y}(\mathbf{x}_i|\mathbf{y}_j)}.
\end{equation}
Similarly,
\begin{equation}
H(X|Y) \stackrel{N\to \infty}{=} -\frac{1}{N} \sum_{j=1}^{N}{\sum_{\mathbf{x}_i \in \mathcal{A}_x}{P_{X|Y}(\mathbf{x}_i|\mathbf{y}_j)\log_2 P_{X|Y}(\mathbf{x}_i|\mathbf{y}_j)}}.
\end{equation}

Finally, the average mutual information (in bits) estimator $I_N$ reads as
\begin{align}
I_N(X;Y) & = \;  \frac{1}{N} \sum_{j=1}^{N}{\sum_{\mathbf{x}_i \in \mathcal{A}_x}{p_{X|Y}(\mathbf{x}_i|\mathbf{y}_j)\log_2 p_{X|Y}(\mathbf{x}_i|\mathbf{y}_j)}}  \nonumber \\ 
    & -\frac{1}{N}\sum_{\mathbf{x}_i \in \mathcal{A}_x}{\sum_{j=1}^{N}{p_{X|Y}(x|\mathbf{y}_j)}\log_2 \biggl(\frac{1}{N} \sum_{j=1}^{N}{p_{X|Y}(x|\mathbf{y}_j)}\biggr)},
\end{align}
and it depends only on the a-posteriori probability estimate provided by the output of the discriminator.

MIND can also be used to estimate the decoding error probability $P_e$. In fact, 
\begin{equation}
    P_e = 1-P[\mathbf{x}=\hat{\mathbf{x}}] = 1-\mathbb{E}_{\mathbf{y} \sim p_Y(y)}[P_{X|Y}(\mathbf{x}=\mathbf{\hat{x}}|\mathbf{y})],
\end{equation}
where the probability $P_{X|Y}(\mathbf{x}=\mathbf{\hat{x}}|\mathbf{y})$ is the instantaneous probability of correct decoding obtained from the decision criterion $\max_{\mathbf{x}_i} P_{X|Y}(\mathbf{x}_i|\mathbf{y})$. Hence,
\begin{equation}
    P_e \stackrel{N\to \infty}{=} 1-\frac{1}{N} \sum_{j=1}^{N}{\max_{\mathbf{x}_i \in \mathcal{A}_x} P_{X|Y}(\mathbf{x}_i|\mathbf{y}_j)}.
\end{equation}

We now compare MIND to other well-known decoding criteria in scenarios for which an analytic solution is available. Furthermore, thanks to the network ability to estimate the a-posteriori probability, the average input-output mutual information is also reported.

\section{Numerical results}
\label{sec:mind_results}
The assessment of the proposed decoding approach is done by considering the following three representative scenarios: a) Uncoded transmission of symbols that are produced by a non-uniform source over an additive Gaussian noise channel; b) Uncoded transmission in a non-linear channel with additive Gaussian noise; c) Short block coded transmission in an additive Middleton noise channel.

Details about the architecture and hyper parameters of the neural networks are reported in the GitHub repository \cite{MIND_github}.

\subsection{Non-uniform source}
To show the ability of the decoder to learn and exploit data dependence at the source, we firstly consider 4-PAM transmission with symbols $\mathbf{x}_i \in \{-3,-1,1,3\}$ and mass function generated by a non-uniform source $p(\mathbf{x}) = [(1-P)/2, P/2, (1-P)/2, P/2]$, with $P=0.05$. It should be noted that data dependence at the source can be either created with a coding stage from a source that emits i.i.d. symbols, or intrinsically by the source of data traffic having certain statistics, as in this case. In Fig. \ref{fig:MIND_non-uniform_source}a the symbol error rate (SER) is shown. MIND is compared with the optimal MAP Genie decoder that knows the a-priori distribution of the transmitted data symbols $p_X(\mathbf{x})$ and with the MaxL decoder that only knows the Gaussian nature of the channel and assumes a source with uniform distribution. Moreover, the estimated $P_e$ is also shown. MIND and the MAP Genie decoder perform identically. In Fig. \ref{fig:MIND_non-uniform_source}b, the source entropy, the conditional entropy and the average achieved MI, as estimated by MIND, are reported and compared with the ground truth numerically obtained. It is worth noticing that the average MI increases with the SNR, or alternatively, the residual uncertainty introduced by the channel decreases. In addition, the estimated source entropy tends to stabilize around its real value given by $-P\log_2(P/2)-(1-P)\log_2((1-P)/2)$.

\begin{figure}
\centering
  \includegraphics[scale=0.25]{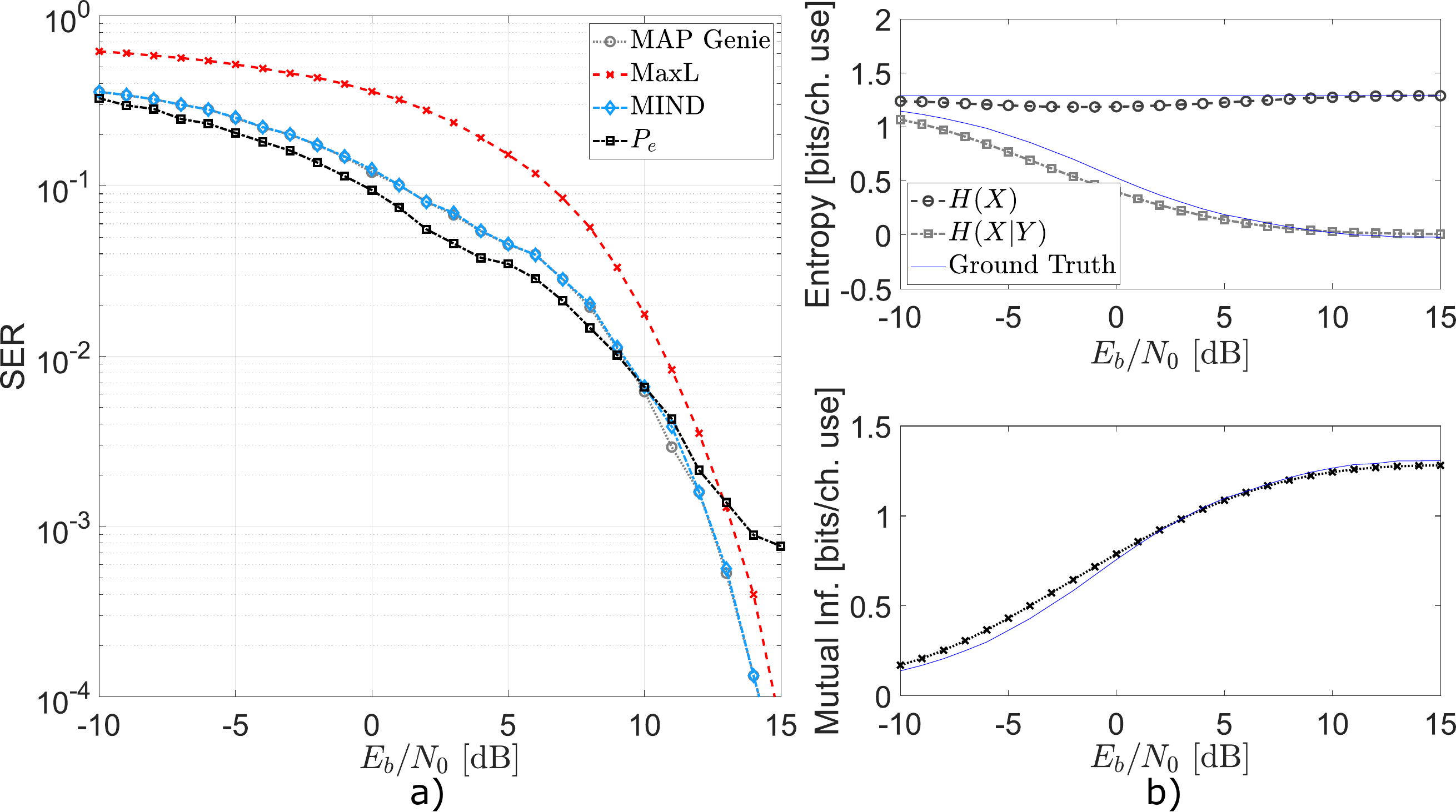}
  \caption{a) Symbol error rate for a 4-PAM modulation with non-uniform source distribution over an AWGN channel. Comparison among the optimal MAP decoder, MaxL decoder, MIND decoder and the estimated probability of error provided by MIND. b) Estimated source and conditional entropy (top right) and estimated average mutual information (bottom right) using MIND.}
	\label{fig:MIND_non-uniform_source}
 \end{figure}

\subsection{Non-linear channel}
As a second example, we consider 4-PAM uncoded transmission with uniform source distribution over a non-linear channel with additive Gaussian noise. The objective of this experiment is to show the ability of MIND to discover such a channel non-linearity. In particular, the channel introduces a non-linearity (for instance because of the presence of non-linear amplifiers) modeled as $y_k=\text{sign}(x_k)\sqrt{|x_k|}+n_k$, where $k$ denotes the $k$-th time instant.
Fig.\ref{fig:MIND_non-linear_channel}a demonstrates how the MIND decoder manages to implicitly learn the non-linear channel model during the training phase and effectively use such information during decoding. A comparison with the MaxL decoder with and without perfect channel state information (CSI) knowledge is also conducted. Results show that MIND exhibits performance identical to the MAP Genie. Fig. \ref{fig:MIND_non-linear_channel}b illustrates both the behaviour of the entropies and the average mutual information as function of the SNR.

\begin{figure}
\centering
   \includegraphics[scale=0.25]{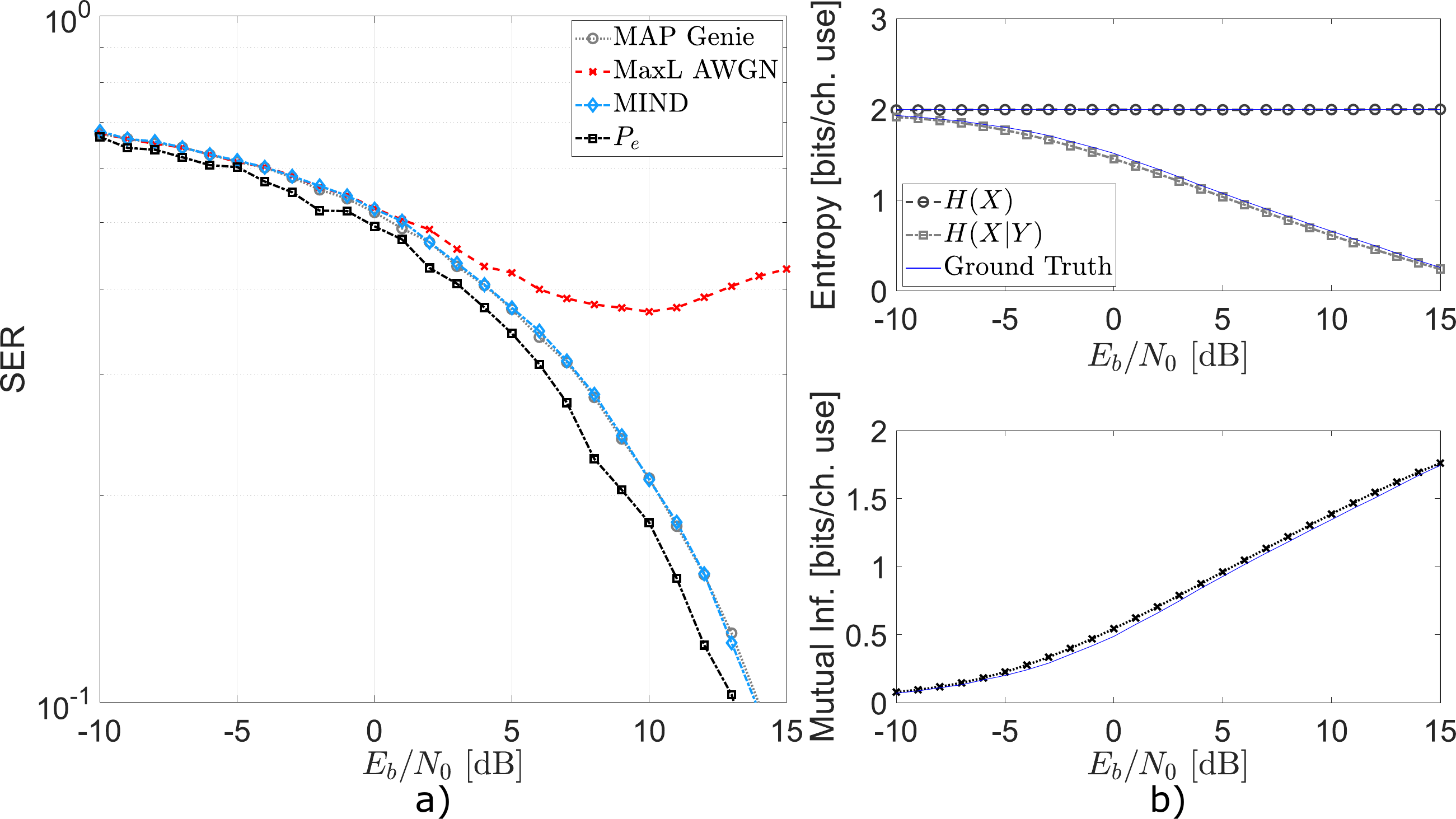}
  \caption{a) Symbol error rate for a 4-PAM modulation with uniform source distribution over a non-linear channel affected by AWGN. Comparison among MaxL decoder with no CSI, MaxL decoder with perfect CSI, MIND decoder and the estimated probability of error. b) Estimated source and conditional entropy (top right) and estimated average mutual information (bottom right) using MIND.}
  \label{fig:MIND_non-linear_channel}
\end{figure}

\subsection{Additive non-gaussian noise channel}
Lastly, and perhaps as most interesting example, we consider a short block coded transmission over an additive non-gaussian channel. The aim is to assess the ability of MIND to learn and exploit the presence of non-gaussian noise. In particular, we suppose that the noise follows a truncated Middleton distribution \cite{Middleton1977}, also called Bernoulli-Gaussian noise model, so that at any given time instant $k$ it is obtained as $n_k = (1-\epsilon_k)n_{1,k}+\epsilon_k n_{2,k}$ where $n_{1,k} \sim \mathcal{N}(0,\sigma_b^2)$ is a zero-mean Gaussian random variable with variance $\sigma_b^2$ and $n_{2,k} \sim \mathcal{N}(0,B\sigma_b^2)$ is also a zero-mean Gaussian random variable but with variance $B$ times larger. Instead, $\epsilon_k$ is a Bernoulli random variable with probability of success $P$. The PDF of the noise samples is then given by
$p_{N}(\mathbf{n}_k) = (1-P)\mathcal{N}(0,\sigma_b^2)+P\mathcal{N}(0,B\sigma_b^2).$
Assuming that the noise model is known and that BPSK symbols are transmitted, two decoding strategies can be devised. The first, denoted with MaxL Middleton, uses maximum likelihood decoding with the known conditional PDF $p(\mathbf{y}|\mathbf{x})=p_{N}(\mathbf{y}-\mathbf{x})$. The second strategy is a genie decoder that knows the outcome of the Bernoulli event for every time instant. That is, it knows whether a received sample is hit by Gaussian noise with variance $\sigma_b^2$ or $B\sigma_b^2$. A third decoding strategy is offered by MIND that learns the channel statistics.
We distinguish among three different types of codes: a) a binary repetition code with length $5$; b) a $(7,4)$ Hamming code; c) a rate $1/2$ convolutional code with memory $2$ and block-length $18$. For each of them, the bit error rate (BER) obtained with the genie, the MaxL Middleton and the MIND decoders is reported. The parameter $B$ was set to $5$ in all the experiments involving Middleton noise, wherein we also set $P=0.05$. 

\begin{figure*}[b]
	\centering
	\includegraphics[scale=0.2]{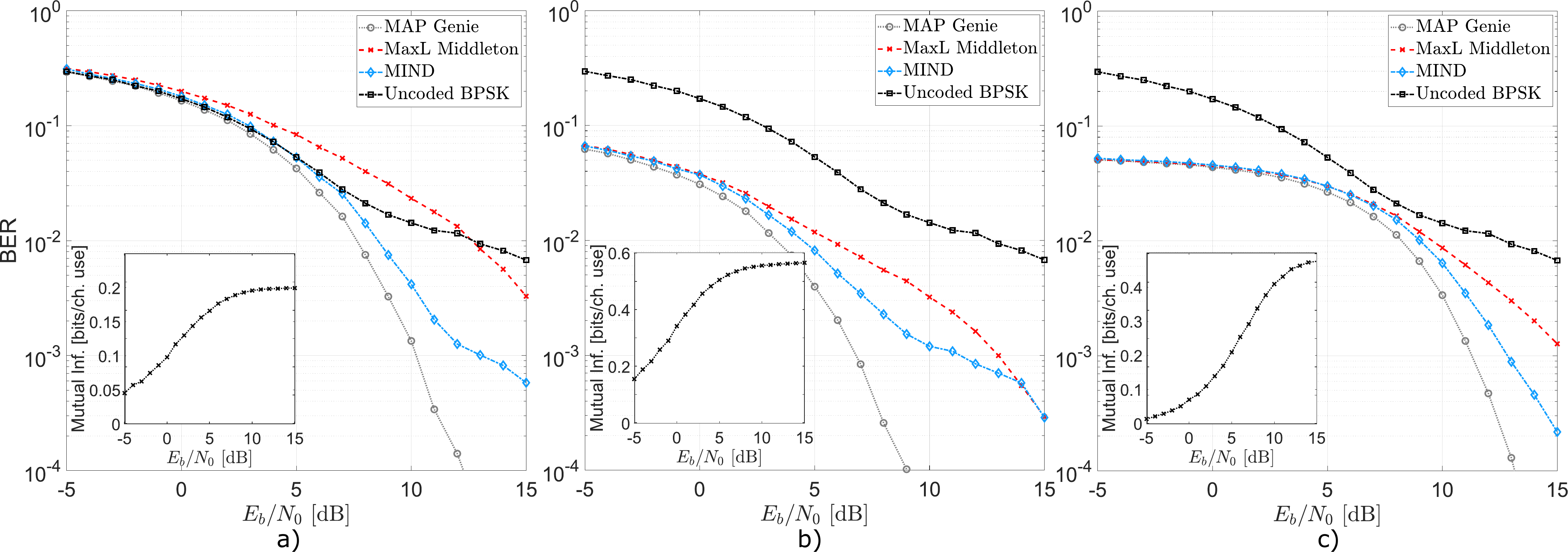}
	\caption{Bit error rate and mutual information of a short block coded transmission in an additive Middleton noise channel: a) Repetition code; b) Hamming code; c) Convolutional code.}
	\label{fig:MIND_middleton}
\end{figure*} 

Fig.~\ref{fig:MIND_middleton} shows the gain provided by the neural-based decoder scheme over the classical maximum likelihood one that exploits the Middleton distribution. With MIND, the BER performance gets closer to the genie decoder. Fig.\ref{fig:MIND_middleton} also reports an estimate of the average mutual information provided by MIND.

\section{Summary}
\label{sec:MIND_conclusions}
In this chapter, MIND, a neural decoder that uses the mutual information as decoding criterion, has been proposed. Two specific architectures have been described and they are capable of learning the a-posteriori information $i(\mathbf{x}|\mathbf{y})$ of the codeword $\mathbf{x}$ given the channel output observation $\mathbf{y}$ in unknown channels. Additionally, MIND allows the estimation of the achieved information rate with the used coding scheme as well as the decoding error probability. Several numerical results obtained in illustrative channels show that MIND can achieve the performance of the genie MAP decoder that perfectly knows the channel model and source distribution. It outperforms the conventional MaxL decoder that assumes the presence of Gaussian noise. 

\newpage
\section{Mathematical derivations}
\label{sec:mind_appendix}
\subsection{From unsupervised to supervised}
Here we provide extra details on the derivation of the supervised loss function from the supervised one. In particular, given the loss
\begin{equation}
\small
\mathcal{J}_{MIND}(D) =  \mathbb{E}_{(\mathbf{x},\mathbf{y}) \sim p_{U}(\mathbf{x})p_{Y}(\mathbf{y})}\biggl[|\mathcal{T}_x|\log \biggl(D\bigl(\mathbf{x},\mathbf{y}\bigr)\biggr)\biggr] + \mathbb{E}_{(\mathbf{x},\mathbf{y}) \sim p_{XY}(\mathbf{x},\mathbf{y})}\biggl[\log \biggl(1-D\bigl(\mathbf{x},\mathbf{y}\bigr)\biggr)\biggr],
\end{equation}
its Lebesgue integral form has expression
\begin{equation}
    \small
    \mathcal{J}_{MIND}(D) = \int_{\mathcal{T}_x} \int_{\mathcal{T}_y}\biggl[p_{U}(\mathbf{x})p_Y(\mathbf{y})|\mathcal{T}_x| \log \biggl(D(\mathbf{x},\mathbf{y})\biggr) + p_{XY}(\mathbf{x},\mathbf{y}) \log \biggl(1-D(\mathbf{x},\mathbf{y})\biggr)\biggr] \diff \mathbf{x} \diff \mathbf{y}.
\end{equation}

Assuming the channel input has discrete nature, 
then, from \eqref{eq:MIND_px}, the Lebesgue integral can be rewritten as
\begin{align}
    \mathcal{J}_{MIND}(D) &= \int_{\mathcal{T}_x} \int_{\mathcal{T}_y}{\sum_{x_i\in \mathcal{A}_x}{P_U(\mathbf{x}_i)\delta(\mathbf{x}-\mathbf{x}_i)}|\mathcal{T}_x|p_Y(\mathbf{y}) \log \biggl(D(\mathbf{x},\mathbf{y})\biggr)} \diff \mathbf{x} \diff \mathbf{y} \nonumber \\ 
    & +\int_{\mathcal{T}_x} \int_{\mathcal{T}_y}{\sum_{\mathbf{x}_i\in \mathcal{A}_x}{P_X(\mathbf{x}_i)\delta(\mathbf{x}-\mathbf{x}_i)} p_{Y|X}(\mathbf{y}|\mathbf{x}) \log \biggl(1-D(\mathbf{x},\mathbf{y})\biggr)} \diff \mathbf{x} \diff \mathbf{y}
\end{align}
where we used \eqref{eq:MIND_pu}.
By exploiting the sifting property of the delta function
\begin{align}
    \mathcal{J}_{MIND}(D) &=  {\sum_{\mathbf{x}_i\in \mathcal{A}_x}{P_U(\mathbf{x}_i)}{\int_{\mathcal{T}_y}|\mathcal{T}_x|p_Y(\mathbf{y}) \log \biggl(D(\mathbf{x}_i,\mathbf{y})\biggr)} \diff \mathbf{y}}\nonumber \\
    & + {\sum_{\mathbf{x}_i\in \mathcal{A}_x}{P_X(\mathbf{x}_i)\int_{\mathcal{T}_y} p_{Y|X}(\mathbf{y}|\mathbf{x}_i) \log \biggl(1-D(\mathbf{x}_i,\mathbf{y})\biggr)} \diff \mathbf{y}}.
\end{align}
From the hypothesis of the MIND loss function in \eqref{eq:MIND_1/M}, 
\begin{align}
    \mathcal{J}_{MIND}(D) &=  {\sum_{\mathbf{x}_i\in \mathcal{A}_x}{\frac{1}{M}\int_{\mathcal{T}_y}|\mathcal{T}_x|p_Y(\mathbf{y}) \log \biggl(D(\mathbf{x}_i,\mathbf{y})\biggr)} \diff \mathbf{y}}\nonumber \\ 
    & + {\sum_{\mathbf{x}_i\in \mathcal{A}_x}{P_X(\mathbf{x}_i)\int_{\mathcal{T}_y} p_{Y|X}(\mathbf{y}|\mathbf{x}_i) \log \biggl(1-D(\mathbf{x}_i,\mathbf{y})\biggr)} \diff \mathbf{y}},
\end{align}
 and since $|\mathcal{T}_x|=M$, it can be simplified into
\begin{align}
    \mathcal{J}_{MIND}(D) =  & \sum_{\mathbf{x}_i\in \mathcal{A}_x}{\biggl[\int_{\mathcal{T}_y}p_Y(\mathbf{y}) \log \biggl(D(\mathbf{x}_i,\mathbf{y})\biggr)\diff \mathbf{y}} \nonumber \\
     & + P_X(\mathbf{x}_i)\int_{\mathcal{T}_y} p_{Y|X}(\mathbf{y}|\mathbf{x}_i) \log \biggl(1-D(\mathbf{x}_i,\mathbf{y})\biggr) \diff \mathbf{y}\biggr],
\end{align}
or alternatively
\begin{equation}
\small
    \mathcal{J}_{MIND}(D) =  \sum_{\mathbf{x}_i\in \mathcal{A}_x}{\biggl[\int_{\mathcal{T}_y}p_Y(\mathbf{y}) \log \biggl(D(\mathbf{x}_i,\mathbf{y})\biggr)\diff \mathbf{y} + \int_{\mathcal{T}_y} P_{XY}(\mathbf{x}_i,\mathbf{y}) \log \biggl(1-D(\mathbf{x}_i,\mathbf{y})\biggr) \diff \mathbf{y}\biggr]},
\end{equation}
where $P_{XY}(\mathbf{x}_i,\mathbf{y}):=P_X(\mathbf{x}_i)\cdot p_{Y|X}(\mathbf{y}|\mathbf{x}_i)$. The above relation is equivalent to
\begin{equation}
\small
    \mathcal{J}_{MIND}(D) =  \sum_{\mathbf{x}_i\in \mathcal{A}_x}{\biggl[\int_{\mathcal{T}_y}p_Y(\mathbf{y}) \log \biggl(D(\mathbf{x}_i,\mathbf{y})\biggr) + P_{XY}(\mathbf{x}_i,\mathbf{y}) \log \biggl(1-D(\mathbf{x}_i,\mathbf{y})\biggr) \diff \mathbf{y}\biggr]}.
\end{equation}

Finally, the MIND loss function can be rewritten with expectations as
\begin{align}
    \mathcal{J}_{MIND}(D) = \sum_{\mathbf{x}_i\in \mathcal{A}_x}\mathcal{J}_{i}(D) =  & \sum_{\mathbf{x}_i\in \mathcal{A}_x}{\mathbb{E}_{\mathbf{y} \sim p_{Y}(\mathbf{y})}\biggl[\log \biggl(D\bigl(\mathbf{x}_i,\mathbf{y}\bigr)\biggr)\biggr]} \nonumber \\ 
    & + P_{X}(\mathbf{x}_i)\mathbb{E}_{\mathbf{y} \sim p_{Y|X}(\mathbf{y}|\mathbf{x}_i)}\biggl[\log \biggl(1-D\bigl(\mathbf{x}_i,\mathbf{y}\bigr)\biggr)\biggr],
\end{align}
or more conveniently as in \eqref{eq:MIND_new_discriminator_function}.

\subsection{Considerations on the expectation theorem}
In the following we present some other useful results that have either been used in the mathematical derivation or can be used to provide alternative proofs to the main results.

For simplicity, we now consider only the one-dimensional case. Let $p_U(u)$ be a uniform probability density function in $[0,1]$, and let $K(\cdot)$ be a function that maps $x$ into $u$. Then
\begin{equation}
   \int_{\mathcal{T}_x}{p_U(K(a))  \log \bigl(D(a)\bigr)} \diff a = \int_{[0,1]}{p_U(b)  \log \bigl(D(K^{-1}(b))\bigr)} \frac{1}{k(K^{-1}(b))} \diff b
\end{equation}
where $k(\cdot)$ is the derivative of $K(\cdot)$. Notice that $p_U(K(x))=1$, but for notation convenience we leave the entire expression. In particular, a valid choice consists of $K(x)=F_X(x)$ with $F_X(\cdot)$ being the cumulative distribution function of $x$. Hence
\begin{equation}
   \int_{\mathcal{T}_x}{p_U(F_X(a))  \log \bigl(D(a)\bigr)} \diff a = \int_{[0,1]}{p_U(b)  \log \bigl(D(F_{X}^{-1}(b))\bigr)} \frac{1}{p_X(F_{X}^{-1}(b))} \diff b
\end{equation}
which rewritten with an expectation reads as
\begin{align}
\label{eq:MIND_w_den}
   \int_{\mathcal{T}_x}{p_U(F_X(a))  \log \bigl(D(a)\bigr)} \diff a = & \; \mathbb{E}_{u \sim p_{U}(u)}\biggl[\log \biggl( \frac{D(F_{X}^{-1}(u))}{\exp{(p_X(F_{X}^{-1}(u))})} \biggr)\biggr] \nonumber \\
   = & \; \mathbb{E}_{x \sim p_{X}(x)}\biggl[\log \biggl( \frac{D(x)}{\exp{(p_X(x)})} \biggr)\biggr].
\end{align}
Notice that the LHS of \eqref{eq:MIND_w_den} can be interpreted (as done in Lemma \ref{lemma:MIND_Lemma1}) as an integral over a uniform distribution with support $\mathcal{T}_x$, i.e.,
\begin{equation}
    \int_{\mathcal{T}_x}{p_U(F_X(a))  \log \bigl(D(a)\bigr)} \diff a = \int_{\mathcal{T}_x}{\log \bigl(D(a)\bigr)} \diff a = \int_{\mathcal{T}_x}{p_{U_{\mathcal{T}}}(a)  |\mathcal{T}_x|\log \bigl(D(a)\bigr)} \diff a. 
\end{equation}
Without considering the term at the denominator, instead,
\begin{equation}
   \int_{[0,1]}{p_U(b)  \log \bigl(D(F_{X}^{-1}(b))\bigr)} \diff b = \int_{\mathcal{T}_x}{p_U(F_X(a))  \log \bigl(D(a)\bigr)p_X(a)} \diff a,
\end{equation}
and using again expectations
\begin{equation}
\small
\label{eq:MIND_wo_den}
   \int_{\mathcal{T}_x}{p_U(F_X(a))  \log \bigl(D(a)\bigr)p_X(a)} \diff a = \mathbb{E}_{u \sim p_{U}(u)}\biggl[\log \biggl( D(F_{X}^{-1}(u)) \biggr)\biggr] = \mathbb{E}_{x \sim p_{X}(x)}\biggl[\log \biggl( D(x) \biggr)\biggr] .
\end{equation}
For the purposes of MIND, we would like to work with the expression in \eqref{eq:MIND_wo_den} in order to really extract the a-posteriori $p_{X|Y}(x|y)$ from $D$. However, from an implementation perspective, the denominator term inside the expectation operator in \eqref{eq:MIND_w_den} is needed, thus, an estimate of the density $p_X(x)$ is also required, which renders the sampling mechanism challenging. Instead, it is simpler to estimate (with Monte Carlo) the expectation in \eqref{eq:MIND_wo_den} but the term $p_X(x)$ will appear inside the integral, leading to a discriminator having as output the term $p_{XY}(x,y)/p(x)p(y)$.
\chapter{End-to-end Optimal Communications Schemes with Autoencoders} % Modeling PHY layer with Machine Learning techniques
\chaptermark{Autoencoders for communications}
\label{sec:autoencoders}
The autoencoder (AE) concept has fostered the reinterpretation and the design of modern communication systems. It consists of an encoder, a channel, and a decoder block which modify their internal neural structure in an end-to-end learning fashion.

However, the current approach to train an AE relies on the use of the cross-entropy loss function. This approach can be prone to overfitting issues and often fails to learn an optimal system and signal representation (code). In addition, less is known about the AE ability to design channel capacity-approaching codes, i.e., codes that maximize the input-output MI under a certain power constraint. The task being even more formidable for an unknown channel for which the capacity is unknown and therefore it has to be learnt. 

In this chapter, we address the challenge of designing capacity-approaching codes by incorporating the presence of the communication channel into a novel loss function for the AE training. In particular, we exploit the MI between the transmitted and received signals as a regularization term in the cross-entropy loss function, with the aim of controlling the amount of information stored. By jointly maximizing the MI and minimizing the cross-entropy, we propose a theoretical approach that a) computes an estimate of the channel capacity and b) constructs an optimal coded signal approaching it. 
Theoretical considerations are made on the choice of the cost function and the ability of the proposed architecture to mitigate the overfitting problem.
Simulation results offer an initial evidence of the potentiality of the proposed method.

The results presented in this chapter are documented in \cite{Letizia2021}.

\section{Introduction}
\sectionmark{Introduction}
\label{sec:autoencoders_related}

Communication systems have reached a high degree of performance, meeting demanding requirements in numerous application fields due to the ability to cope with real-world effects exploiting various accomplished physical system models. Reliable transmission in a communication medium has been investigated in the milestone work of C. Shannon \cite{Shannon1948} who suggested to represent the communication system as a chain of fundamental blocks, i.d., the transmitter, the channel and the receiver. Each block is mathematically modeled in a bottom-up fashion, so that the full system results mathematically tractable. 

On the contrary, ML algorithms take advantage of the ability to work with and develop top-down models. In particular, the communication chain has been reinterpreted as an AE-based system \cite{Oshea2017}.
The AE can be trained end-to-end such that the block-error rate (BLER) of the full system is minimized. 
This idea pioneered a number of related works aimed at showing the potentiality of deep learning methods applied to wireless communications \cite{Dorner2018, TurboAE, Alberge2019, OFDM_AE}. In \cite{Dorner2018}, communication over-the-air has been proved possible without the need of any conventional signal processing block, achieving competitive bit error rates w.r.t. to classical approaches. Turbo AEs \cite{TurboAE} reached state-of-the-art performance with capacity-approaching codes at a low SNR. These methods rely on the a-priori channel knowledge (most of the time a Gaussian noise intermediate layer is assumed) and they fail to scale when the channel is unknown. To model the intermediate layers representing the channel, one approach is GANs as explained in Sec. \ref{sec:gan_ch_synthesis}. In this way, the generator implicitly learns the channel distribution $p_Y(y|x)$, resulting in a differentiable network which can be jointly trained in the AE model \cite{OsheaGAN, RighiniLetizia2019, Letizia2019a}. A recent work in \cite{Alberge2019} considered an AWGN channel with additive radar interference and demonstrated the AEs ability to produce optimal constellations in regions where no optimal solutions were available, outperforming standard configurations. 

None of the aforementioned methods explicitly considered the information rate in the cost function. In this direction, the work in \cite{Hoydis2019} included the information rate in the formulation and leveraged AEs to jointly perform geometric and probabilistic signal constellation shaping. Labeling schemes for the learned constellations have been discussed in \cite{Cammerer2020}, where the authors introduced the bit-wise AE. If the channel model is not available, the encoder can be independently trained to learn and maximize the MI between the input and output channel samples, as presented in \cite{Wunder2019}. But therein the decoder is independently designed from the encoder and it does not necessarily grant error-free decoding. Indeed, the decoding stage may not have enough capacity to learn the demapping scheme nor converge during training, especially for large networks. Therefore, the encoder and decoder learning process shall be done \textit{jointly}. In addition, the cost function used to train the AE shall be appropriately chosen. With this goal in mind, let us firstly look into the historical developments and progresses made in the ML field, strictly related to the challenge considered in this section.

The AE was firstly introduced as a non-linear principle component analysis method, exploiting neural networks \cite{Kramer1991}. Indeed, the original network contained an internal bottleneck layer which forced the AE to develop a compact and efficient representation of the input data, in an unsupervised manner.
Several extensions have been further investigated, such as the denoising autoencoder (DAE) \cite{Vincent2008}, trained to reconstruct corrupted input data, the contractive autoencoder (CAE) \cite{Rifai2011}, which attempts to find a simple encoding and decoding function, and the $k$-sparse AE \cite{Makhzani2013} in which only $k$ intermediate nodes are kept active.

However, all of them are particular forms of \textit{regularized} AEs. Regularization is often introduced as a penalty term in the cost function and it discourages complex and extremely detailed models that would poorly generalize on unseen data. In this context, the information bottleneck Lagrangian \cite{Tishby1999} was used as a regularization term to study the sufficiency (fidelity) and minimality (complexity) of the internal representation \cite{Alemi2017,Soatto2018}. 

In the context of communication systems design, regularization in the loss function has not been introduced yet. In addition, the decoding task is usually performed as a classification task. In ML applications, classification is usually carried out by exploiting a final softmax layer together with the categorical cross-entropy loss function. The softmax layer provides a probabilistic interpretation of each possible bits sequence so that the cross-entropy measures the dissimilarity between the reference and the predicted sequence of bits distribution, $p(s)$ and $q(\hat{s})$, respectively.
Nevertheless, training a classifier via cross-entropy suffers from the following problems: firstly, it does not guarantee any optimal latent representation. Secondly, it is prone to overfitting issues, especially in the case of large networks, thus, long codes design. 
Lastly, in the particular case of AEs for communications, the fundamental trade-off between the rate of transmission and reliability, namely, the channel capacity $C$, is not explicitly considered in the learning phase. 

These observations made us rethinking the problem by formulating the two following questions:
\begin{itemize}
\item[a)] Given a power constraint, is it possible to design capacity-approaching codes exploiting the principle of AEs?
\item[b)] Given a power constraint, is it possible to estimate channel capacity with the use of an AE?
\end{itemize}
The two questions are inter-related and the answer of the first one provides an answer to the second one in a constructive way, since if such a code is obtained, then the distribution of the input signal that maximizes the mutual information is also determined, and consequentially the channel capacity can also be obtained.

Inspired by the information bottleneck method \cite{Tishby1999} and by the notion of channel capacity, a novel loss function for AEs in communications is proposed in this chapter. The amount of information stored in the latent representation is controlled by a regularization term estimated using the recently introduced mutual information neural estimator (MINE) \cite{Mine2018}, enabling the theoretical design of nearly optimal codes. 
The influence of the channel appears in the \textit{end-to-end learning} phase in terms of mutual information. 

More specifically, the contributions to this topic are the following:
\begin{itemize}
\item A new loss function is proposed. It enables a new signal constellation shaping method.
\item Channel coding is obtained by \textit{jointly} minimizing the cross-entropy between the input and decoded message, and maximizing the mutual information between the transmitted and received signals.
\item A regularization term $\beta$ controls the amount of information stored in the symbols for a fixed message alphabet dimension $M$ and a fixed rate $R<C$, playing as a trade-off parameter between error-free decoding ability and maximal information transfer via coding. The NN architecture is referred to as \textit{rate-driven autoencoder}.
\item In addition, the label smoothing regularization technique is used during the AE learning process. An entropy description of the predicted messages is discussed and illustrated. 
\item By including the mutual information, we propose a new theoretical iterative scheme to built capacity-approaching codes of length $n$ and rate $R$ and consequently estimate channel capacity. This yields a scheme referred to as \textit{capacity-driven autoencoder}.
\item With the notion of explainable ML in mind, the rationale for the proposed metric and methodology is discussed in more fundamental terms following a) the concept of confidence penalty, b) the information bottleneck method \cite{Tishby1999} and c) by discussing the cross-entropy decomposition.  
\end{itemize} 

% Problems of cross-entropy, overfitting, no explicit channel, information bottleneck, lack of representiviness, only remembers the labels, problem becomes more evident for larger deep networks. --> need to include the probabilistic component --> mutual information. 

\section{Autoencoders for physical layer communications}
\sectionmark{Autoencoders base}
\label{sec:autoencoders_base}
The communication chain can be divided into three fundamental blocks: the transmitter, the channel, and the receiver. The transmitter attempts to communicate a message $s\in \mathcal{M} = \{1,2,\dots,M\}$. To do so, it transmits $n$ complex baseband symbols $\mathbf{x}\in \mathbb{C}^{n}$ at a rate $R=(\log_2 M)/n$ (bits per channel use) over the channel, under a power constraint. In general, the channel modifies $\mathbf{x}$ into a distorted and noisy version $\mathbf{y}$. The receiver takes as input $\mathbf{y}$ and produces an estimate $\hat{s}$ of the original message $s$. From an analytic point of view, the transmitter applies a transformation $f:\mathcal{M} \to \mathbb{C}^{n}$, $\mathbf{x}=f(s)$ where $f$ is referred to as the \textit{encoder}. The channel is described in probabilistic terms by the conditional transition probability density function $p_Y(y|x)$. The receiver, instead, applies an inverse transformation $g: \mathbb{C}^{n} \to \mathcal{M}$, $\hat{s}= g(\mathbf{y})$ where $g$ is referred to as the \textit{decoder}. Such communication scheme can be interpreted as an AE which learns internal robust representations $\mathbf{x}$ of the messages $s$ in order to reconstruct $s$ from the perturbed channel output samples $\mathbf{y}$ \cite{Oshea2017}. 

The AE is a deep NN trained end-to-end using stochastic gradient descent (SGD). The encoder block $f(s;\theta_E)$ maps $s$ into $\mathbf{x}$ and consists of an embedding layer followed by a feedforward NN with parameters $\theta_E$ and a normalization layer to fulfill a given power constraint. The channel is identified with a set of layers; a canonical example is the AWGN channel, a Gaussian noise layer which generates $y_i = x_i+w_i$ with $w_i\sim \mathcal{CN}(0,\sigma^2), i=1,\dots,n$. The decoder block $g(\mathbf{y};\theta_D)$ maps the received channel samples $\mathbf{y}$ into the estimate $\hat{s}$ by building the empirical probability mass function $p_{\hat{S}|Y}(\hat{s}|y;\theta_D)$. It consists of a feedforward NN, with parameters $\theta_D$, followed by a softmax layer which outputs a probability vector of dimension $M$ that assigns a probability to each of the possible $M$ messages. The encoder and decoder parameters $(\theta_E, \theta_D)$ are jointly optimized during the training process with the objective to minimize the categorical cross-entropy loss function
\begin{equation}
\label{eq:AE_CE}
\mathcal{L}(\theta_E, \theta_D) = \mathbb{E}_{(s,y)\sim p_{\hat{S}Y}(\hat{s},y)}[-\log(p_{\hat{S}|Y}(\hat{s}|y;\theta_D))],
\end{equation}
where $\mathbf{y}$ explicitly depends on the encoding block $\mathbf{x} = f(s;\theta_E)$, and thus, on the parameters $\theta_E$, while the decoder $g(\mathbf{y};\theta_D)$ calculates the probability mass function $p_{\hat{S}|Y}(\hat{s}|y;\theta_D)$.
The performance of the AE-based system is typically measured in terms of BER or BLER
\begin{equation}
\label{eq:AE_BLER}
P_e =  {P[\hat{s}\neq s]}.
\end{equation}

\section{Rate-driven autoencoders}
\sectionmark{Rate-driven autoencoders}
\label{sec:cacao}
The cross-entropy loss function does not guarantee any optimality in the code design and it is often prone to overfitting issues \cite{Tishby2017, Soatto2018}. In addition and most importantly, optimal system performance is measured in terms of achievable rates, thus, in terms of MI $I(X;Y)$ between the transmitted $\mathbf{x}$ and the received signals $\mathbf{y}$, as defined in \eqref{eq:fundamentals_MI}.
In communications, the trade-off between the rate of transmission and reliability is expressed in terms of channel capacity. For a memory-less channel, the capacity is defined as
\begin{equation}
\label{eq:AE_capacity}
C = \max_{p_X(x)} I(X;Y),
\end{equation}
where $p_X(x)$ is the input signal probability density function. Finding the channel capacity $C$ is at least as complicated as evaluating the MI. As a direct consequence, building capacity-approaching codes is a formidable task. 

Given a certain power constraint and rate $R$, the AE-based system, that is trained to minimize the cross-entropy loss function, is able, if large enough, to automatically build nearly zero-error codes. Nevertheless, there exists a code at the same rate that exhibits better performance and may even exist a higher rate error-free code (asymptotically). Therefore, the AE does not provide a capacity-achieving code. In other words, conventional autoencoding approaches, through cross-entropy minimization, allow to obtain excellent decoding schemes. Nevertheless, no guarantee to find an optimal encoding scheme is given, especially in deep NNs where problems such as vanishing and exploding gradients occur \cite{279181}. 
Hence, the starting point to design capacity-approaching codes is to redefine the loss function used by the AE.
In detail, we propose to include the MI quantity as a regularization term. The proposed loss function reads as follows

\begin{equation}
\label{eq:AE_CE_MI}
\hat{\mathcal{L}}(\theta_E, \theta_D) = \mathbb{E}_{(s,y)\sim p_{\hat{S}Y}(\hat{s},y)}[-\log(p_{\hat{S}|Y}(\hat{s}|y;\theta_D))]-\beta I(X;Y).
\end{equation}

The loss function in \eqref{eq:AE_CE_MI} forces the AE to jointly modify the network parameters $(\theta_E,\theta_D)$. The decoder reconstructs the original message $s$ with lowest possible error probability $P_e$, while the encoder finds the optimal input signal distribution $p_X(x)$ which maximizes $I(X;Y)$, for a given rate $R$ and code length $n$ and for a certain power constraint. We will denote such type of trained AE as \textit{rate-driven} AE.
It should be noted that such a NN architecture does not necessarily provide an optimal code capacity-wise, since we set a target rate which does not correspond to channel capacity. To solve this second objective, in Sec. \ref{sec:autoencoders_capacity} we will describe a theoretical methodology leading to a new scheme that we name \textit{capacity-driven} autoencoder.

To compute the MI $I(X;Y)$, we can exploit recent results such as MINE \cite{Mine2018}, as discussed below.

\subsection{Mutual information estimation}
\label{subsec:autoencoders_mutual_information_estimation}
The difficulty in computing $I(X;Y)$ resides in its dependence on the joint PDF $p_{XY}(x,y)$, which is usually unknown. Common approaches to estimate the MI rely on binning, density and kernel estimation \cite{Moon1995}, $k$-nearest neighbours \cite{Kraskov2004}, $f$-divergence functionals \cite{Nguyen2010}, and variational lower bounds (VLBs) \cite{Poole2019}. 

Recently, the MINE estimator \cite{Mine2018} proposed a NN-based method to estimate the MI based on the Donsker-Varadhan dual representation \cite{Donsker1983} of the KL divergence, in particular 
\begin{equation}
D_{\text{KL}}(p||q) = \sup_{T:\Omega \to \mathbb{R}} \mathbb{E}_{x\sim p(x)}[T(x)]-\log(\mathbb{E}_{x\sim q(x)}[e^{T(x)}]),
\end{equation}
where the supremum is taken over all functions $T$ such that the expectations are finite. Indeed, by parametrizing a family of functions $T_{\theta} : \mathcal{X}\times \mathcal{Y} \to \mathbb{R}$ with a deep NN with parameters $\theta \in \Theta$, the following bound \cite{Mine2018} holds
\begin{equation}
\label{eq:AE_lower_bound}
I(X;Y)\geq I_\theta(X;Y),
\end{equation}
where $I_{\theta}(X;Y)$ is the neural information measure  defined as
\begin{equation}
\label{eq:AE_MINE}
I_\theta(X;Y) = \sup_{\theta \in \Theta} \mathbb{E}_{(x,y)\sim p_{XY}(x,y)}[T_{\theta}(x,y)] -\log(\mathbb{E}_{(x,y)\sim p_X(x) p_Y(y)}[e^{T_{\theta}(x,y)}]).
\end{equation}
The neural information $I_\theta(X;Y)$ in \eqref{eq:AE_MINE} can be maximized using back-propagation and gradient ascent, leading to a tighter bound in \eqref{eq:AE_lower_bound}. To avoid biased gradients, the authors in \cite{Mine2018} suggested to replace the expectation in the denominator (coming from the derivative of the logarithm) with an exponential moving average. The consistency property of MINE guarantees the convergence of the estimator to the true MI value.

Estimating the MI $I(X;Y)$ is not enough to build capacity-approaching codes for a generic channel. A maximization over all possible input distribution $p_X(x)$ is also required. Therefore, to learn an optimal scheme, at each iteration the encoder needs both the cross-entropy gradient for the decoding phase and the MI gradient, from MINE, for the optimal input signal distribution. 
The proposed loss function in \eqref{eq:AE_CE_MI} (see also Fig. \ref{fig:AE_CE_MI}) shows such double role. In this way, the AE trained with the new loss function intrinsically designs codes for which the MI $I(X;Y)$ is known and maximal by construction, under the aforementioned constraints of power, rate $R$ and code-length $n$.

\begin{figure}
	\centering
	\includegraphics[scale=0.5]{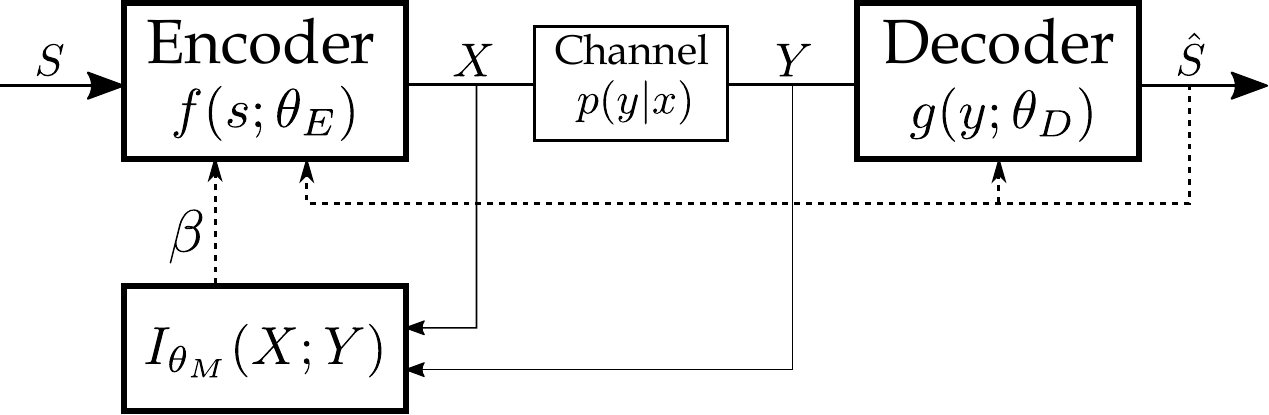}
	\caption{Rate-driven autoencoder with the mutual information estimator block. The channel samples and the decoded message are used during the training process. The former allows to built an optimal encoding scheme exploiting the mutual information block, the latter, instead, allows to measure the decoding error through the cross-entropy loss function.}
	\label{fig:AE_CE_MI}
\end{figure} 

The rationale behind the proposed method is formally discussed in the next section.

\subsection{Mutual information regularization}
\label{sec:autoencoders_MI_reg}
The AE is a classifier network since the decoding block performs a classification task while attempting to recover the sequence of transmitted bits. The training of large AE networks may suffer from known NN degradation issues such as overfitting and vanishing gradients. In the following, we discuss in detail both problems motivating why the addition of the MI regularization term to the cross-entropy loss function, as in \eqref{eq:AE_CE_MI}, and the adoption of the label smoothing technique, offer better performance. Indeed, the cross-entropy can be minimized even for random labels as shown in \cite{Zhang2017}, leading to several overfitting issues. 
To combat overconfident predictions, the MI term plays the role of both an entropy penalty and an information bottleneck regularizer. Furthermore, its gradient directly influences the encoder parameter, mitigating the vanishing gradient problem.
  
\subsubsection{Overfitting mitigation}
\label{subsec:autoencoders_overfitting}
The objective of the AE is twofold: the encoder searches for a latent representation of the transmitted signal that is robust to channel and noise distortions while the decoder learns how to successfully recover the distorted transmitted signal. The latter, in particular, attempts to find a signal representation that provides robust performance in a more general domain than the one given by the observed/training dataset. This is rendered possible by the stochastic description of the channel. Nevertheless, fitting the empirical distribution often leads to overfitting, an undesired effect that appears when the network places most of the probability mass on a subset of the possible output classes. In other words, the decoder block produces a peaky leptokurtic conditional output distribution $p_{\hat{S}|Y}(\hat{s}|y;\theta_D)$ which may improve the training loss at the expenses of the generalization ability of the network. To avoid leptokurtic distributions, a strategy relies on entropy regularization, which penalizes confident output distributions by encouraging more entropic ones. In \cite{Pereyra2017}, the authors introduced the confidence penalty regularization term as an entropy term in the supervised learning setting:
\begin{equation}
H(p_{\hat{S}|Y}(\hat{s}|y;\theta_D)) = \sum_{i}{p_{\hat{S}|Y}(\hat{s_i}|y;\theta_D)\log(p_{\hat{S}|Y}(\hat{s_i}|y;\theta_D))},
\label{eq:AE_confidence}
\end{equation}
where $y$ and $\hat{s}$ are the input and output of the classifier, respectively. However, in the AE set up, $y$ is the output of the channel layer and is dependent itself to the training process. Indeed, the overfitting issue may occur while producing a peaky latent space distribution that does not maximize the entropy of $X$ and consequently, taking in consideration the channel layers, of $Y$. To combat such leptokurtic behavior in the encoding distribution while considering the channel dependence in the output, we propose to penalize it by exploiting the MI between the transmitted and received signals:
\begin{equation}
I(X;Y) = H(X) + H(Y) - H(X,Y),
\end{equation}
where $H(X,Y)$ is the joint entropy measure.
The objective of this penalty regularization term is to promote more entropic signal input-output distributions while reducing the joint entropy, thus, placing the joint probability mass in peaky clusters.  

The MI penalty regularizer directly influences the encoder network parameters during the learning phase. However, as mentioned before while discussing the confidence penalty in \eqref{eq:AE_confidence}, peakiness may appear also in the target output distribution $p_{\hat{S}|Y}(\hat{s}|y;\theta_D)$. To further improve the decoder generalization, we propose also to adopt the label smoothing technique \cite{Szegedy2015b}, a type of entropy regularizer particularly effective when dealing with hard target distributions such as the one-hot vectors of the AE scheme. The idea of label smoothing is to replace the target output distribution (also referred to as label distribution) $p_{\hat{S}|Y}(\hat{s}|y)=\delta_{\hat{S},S}$ with a mixture of the original ground-truth and the chosen distribution $u(\hat{s})$:
\begin{equation}
\hat{p}_{\hat{S}|Y}(\hat{s}|y)=(1-\epsilon)\delta_{\hat{S},S}+\epsilon u(\hat{s}),
\end{equation}
where $u(\hat{s})$ is chosen to be as the uniform distribution $u(\hat{s}) = 1/M$, and $\epsilon$ is a positive number. Label smoothing forces more entropic output messages $\hat{S}$ distributions, improving the decoder generalization ability. The cross-entropy loss function using label smoothing reads as follows
\begin{equation}
\hat{\mathcal{L}}(\theta_E, \theta_D) = \mathbb{E}_{(s,y)\sim \hat{p}_{\hat{S}|Y}(\hat{s}|y)\cdot p_{Y}(y|x)}[-\log(p_{\hat{S}|Y}(\hat{s}|y;\theta_D))].
\end{equation}

In Sec. \ref{subsec:autoencoders_results_overfitting}, we demonstrate how the combination of both the MI regularizer and the label smoothing technique leads to more entropic predicted output distributions.  

Another important benefit of the proposed MI penalty term consists in the mitigation of the vanishing gradient problem.

\subsubsection{Vanishing gradient mitigation}
\label{subsec:autoencoders_vanishing}
It is well known that adding layers with certain activation functions to the NN may result in small gradients of the loss function, inhibiting an effective update of the parameters of the first layers. In the case of large AE networks, the encoder block may suffer from such vanishing gradient problem, especially if the decoder block is itself a deep NN and the channel block is modeled by a multi-layer channel generator obtained via a GAN based training scheme \cite{OsheaGAN}.
Indeed, the generator in a GAN framework is typically a deep network that implicitly estimates the conditional channel distribution $p_Y(y|x)$. For this reason, the back-propagated gradient responsible for the update of the encoder parameters $\theta_E$ may be negligible when only cross-entropy is used as training cost function. Hence, a regularizer whose gradient influences directly the parameters of the encoder block can better guide the network in identifying the latent space. In particular, if the regularization term is the MI, then the encoder also attempts to find the optimal channel input distribution, tackling the achievement of channel capacity, which is exactly the aim of optimal communication schemes.

In detail, given the loss function in \eqref{eq:AE_CE_MI}
\begin{equation}
\hat{\mathcal{L}}(\theta_E, \theta_D) = \mathcal{L}(\theta_E, \theta_D) - \beta I_{\theta_E}(X;Y),
\end{equation}
and given a probabilistic channel generative model $y=h(x; \theta_h)$,
at each training iteration the gradient back-propagated from the decoder network $g(y;\theta_D)$ to the encoder network $f(s;\theta_E)$ can be computed as
\begin{equation}
\nabla_{\theta_E}\hat{\mathcal{L}}(\theta_E, \theta_D) = \nabla_{\theta_E} \mathcal{L}(\theta_E, \theta_D) - \beta \nabla_{\theta_E}I_{\theta_E}(X;Y)
\end{equation}
and using the chain rule
\begin{align}
\label{eq:AE_vanished}
\nabla_{\theta_E}\hat{\mathcal{L}}(\theta_E, \theta_D) = \; & \frac{\partial \mathcal{L}}{\partial g} \cdot \frac{\partial g}{\partial h} \cdot \frac{\partial h}{\partial f} \cdot \nabla_{\theta_E} f(s;\theta_E) \nonumber \\
& - \beta \frac{\partial I}{\partial h} \cdot \frac{\partial h}{\partial f} \cdot \nabla_{\theta_E} f(s;\theta_E) \nonumber \\ 
& - \beta \frac{\partial I}{\partial f} \cdot  \nabla_{\theta_E} f(s;\theta_E).
\end{align}
From the relations in \eqref{eq:AE_vanished}, it is clear that the regularization term with strength $\beta$ can in principle generate a more energetic gradient. Indeed, the first term in the RHS comes from the cross-entropy gradient and it depends on the decoder ($\partial g / \partial h$), while the second and third terms in the RHS contain the gradient of the MI regularizer and do not depend on the decoder.

\subsubsection{Information bottleneck method}
\label{subsec:autoencoders_IB}
In \cite{Soatto2018}, the authors proved how a deep NN can just memorize the dataset (in its weights) to minimize the cross-entropy, yielding to poor generalization. Hence, the authors proposed an information bottleneck (IB) regularization term to prevent overfitting, similarly to the IB Lagrangian, originally presented in \cite{Tishby1999}. Indeed, the IB method optimally compresses the input random variable by eliminating the irrelevant features which do not contribute to the prediction of the output random variable. 

From an AE-based communication systems point of view, let $S \rightarrow X \rightarrow Y$ be a prediction Markov chain, where $S$ represents the message to be sent, $X$ the compressed symbols and $Y$ the received symbols. The IB method solves
\begin{equation}
\label{eq:AE_IB}
\mathcal{L}(p(x|s)) = I(S;X)-\beta I(X;Y),
\end{equation}
where the positive Lagrange multiplier $\beta$ plays as a trade-off parameter between the complexity of the encoding scheme (rate) and the amount of relevant information preserved in it. 

The communication chain adds an extra Markov chain constraint $Y\rightarrow \hat{S}$, where $\hat{S}$ represents the decoded message. Therefore, in order to deal with the full AE chain, we decide to substitute the first term of the RHS in \eqref{eq:AE_IB} with the cross-entropy loss function, as presented in \eqref{eq:AE_CE_MI}. However, the Lagrange multiplier (or regularization parameter in ML terms) operates now as a trade-off parameter between the complexity to reconstruct the original message and the amount of information preserved in its compressed version.

\subsubsection{Cross-entropy decomposition}
\label{subsec:autoencoders_decomposition}
To further motivate the choice for the new loss function with the MI as the regularization term, let us consider the following decomposition of the cross-entropy loss function.

\begin{lemma}\emph{(See \cite{Hoydis2019})}
\label{lemma:autoencoders_lemma1}
Let $s \in \mathcal{M}$ be the transmitted message and let $(x,y)$ be samples drawn from the joint distribution $p_{XY}(x,y)$. If $x = f(s;\theta_E)$ is an invertible function representing the encoder and if $p_{\hat{S}|Y}(\hat{s}|y;\theta_D) = g(y;\theta_D) $ is the decoder block, then the cross-entropy function $\mathcal{L}(\theta_E,\theta_D)$ admits the following decomposition
\begin{equation}
\mathcal{L}(\theta_E,\theta_D) = H(S)-I_{\theta_E}(X;Y)+\mathbb{E}_{y\sim p_Y(y)}[D_{\text{KL}}(p_{X|Y}(x|y)||p_{\hat{S}|Y}(x|y;\theta_D))].
\end{equation}
\end{lemma}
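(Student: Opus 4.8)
The plan is to start directly from the definition of the categorical cross-entropy and rewrite the joint expectation $\mathbb{E}_{(s,y)\sim p_{SY}}$ as an iterated expectation, first over the channel output $y$ and then over the transmitted message conditioned on $y$. This gives
\begin{equation}
\mathcal{L}(\theta_E,\theta_D) = \mathbb{E}_{y\sim p_Y(y)}\,\mathbb{E}_{s\sim p_{S|Y}(s|y)}\bigl[-\log p_{\hat{S}|Y}(s|y;\theta_D)\bigr].
\end{equation}

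First I would introduce the true posterior $p_{S|Y}(s|y)$ inside the logarithm by adding and subtracting $\log p_{S|Y}(s|y)$, which splits the expression into two interpretable pieces:
\begin{align}
\mathcal{L}(\theta_E,\theta_D) &= \mathbb{E}_{y}\mathbb{E}_{s|y}\bigl[-\log p_{S|Y}(s|y)\bigr] \nonumber \\
&\quad + \mathbb{E}_{y}\mathbb{E}_{s|y}\biggl[\log \frac{p_{S|Y}(s|y)}{p_{\hat{S}|Y}(s|y;\theta_D)}\biggr].
\end{align}
The first term on the right-hand side is, by definition, the conditional entropy $H(S|Y)$, while the second term is exactly $\mathbb{E}_{y\sim p_Y(y)}\bigl[D_{\text{KL}}(p_{S|Y}(s|y)\|p_{\hat{S}|Y}(s|y;\theta_D))\bigr]$.

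Next I would apply the standard identity $H(S|Y) = H(S) - I(S;Y)$ and then invoke the invertibility hypothesis on the encoder. Since $x = f(s;\theta_E)$ is a deterministic bijection between the message alphabet $\mathcal{M}$ and the constellation, the pairs $(S,Y)$ and $(X,Y)$ carry identical statistical information; hence $I(S;Y) = I_{\theta_E}(X;Y)$, and through the change of variable induced by $f$ the posterior $p_{S|Y}(s|y)$ coincides with $p_{X|Y}(x|y)$ evaluated at $x = f(s;\theta_E)$ (and likewise for the decoder output). Substituting these equalities into the previous display yields the claimed decomposition.

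The step I expect to be the main obstacle is justifying the change of variable inside the $D_{\text{KL}}$ term rigorously: one must verify that relabeling $s \mapsto x = f(s;\theta_E)$ leaves both the averaging measure $p_Y$ and the pointwise ratio of the two posteriors invariant, which is precisely where the \emph{bijectivity} of $f$ is essential. A non-injective encoder would collapse distinct messages onto the same symbol and break the equality $I(S;Y) = I_{\theta_E}(X;Y)$, so the invertibility assumption cannot be dropped. Once this identification is handled, the remaining manipulations are routine entropy and expectation bookkeeping.
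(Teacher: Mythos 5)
Your proof is correct, and it reaches the decomposition by a genuinely different (though closely related) route than the paper. The paper adds and subtracts $\log p_X(x)$ — the \emph{marginal} of the coded symbols — inside the expectation: the term $-\sum_{x,y} p_{XY}(x,y)\log p_X(x)$ is immediately $H(X)=H(S)$ by encoder invertibility, and the leftover $\sum_{x,y} p_{XY}(x,y)\log\bigl(p_X(x)/p_{\hat{S}|Y}(\hat{s}|y;\theta_D)\bigr)$ is then split a second time (multiplying and dividing by $p_Y(y)$ and $p_{XY}(x,y)$) into $-I_{\theta_E}(X;Y)$ plus the expected KL term. You instead add and subtract the true \emph{posterior} $\log p_{S|Y}(s|y)$, which produces $H(S|Y)$ plus the expected KL in a single step, and then invoke the chain rule $H(S|Y)=H(S)-I(S;Y)$ together with bijectivity of $f$ to transport $I(S;Y)$ into $I_{\theta_E}(X;Y)$ and $p_{S|Y}$ into $p_{X|Y}$. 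The two proofs thus use the invertibility hypothesis at different points — the paper to identify $H(X)$ with $H(S)$, you to relabel the MI and the posteriors from $S$ to $X$ — and both uses are legitimate. Your route is the canonical ``cross-entropy equals conditional entropy plus expected KL'' decomposition, so every intermediate quantity has a clear information-theoretic meaning; the paper's route works in the variable $x$ from the outset, so the MI and KL terms appear directly in the form stated in the lemma without a final change-of-variable step. The obstacle you flagged is real but easily dispatched: since $\mathcal{M}$ is finite and $f(\cdot\,;\theta_E)$ is a bijection onto the constellation, the relabeling $s\mapsto x=f(s;\theta_E)$ is a permutation of a finite sum, which leaves both the averaging measure $p_Y$ and the pointwise ratio of the two posteriors unchanged, exactly as you anticipated.
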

We report a proof of the Lemma \ref{lemma:autoencoders_lemma1}, which was stated in \cite{Hoydis2019}, but herein the proof is complete and slightly different.
\begin{proof}
The cross-entropy loss function can be rewritten as follows
\begin{align}
& \mathcal{L}(\theta_E,\theta_D) = \mathbb{E}_{(x,y)\sim p_{XY}(x,y)}[-\log(p_{\hat{S}|Y}(\hat{s}|y;\theta_D))] \\ \nonumber
&= -\sum_{x,y}{p_{XY}(x,y)\log(p_{\hat{S}|Y}(\hat{s}|y;\theta_D))} \\ \nonumber
&= -\sum_{x,y}{p_{XY}(x,y)\log(p_X(x))} + \\
&+ \sum_{x,y}{p_{XY}(x,y)\log\biggl(\frac{p_X(x)}{p_{\hat{S}|Y}(\hat{s}|y;\theta_D)}\biggr)} \nonumber
\end{align}
Using the encoder hypothesis, the first term in the last expression corresponds to the source entropy $H(S)$. Therefore,
\begin{align*}
&\mathcal{L}(\theta_E,\theta_D) = H(S) + \sum_{x,y}{p_{XY}(x,y)\log\biggl(\frac{p_X(x)\cdot p_Y(y)}{p_{XY}(x,y)}\biggr)}+\\ \nonumber
&+ \sum_{x,y}{p_{XY}(x,y)\log\biggl(\frac{p_{XY}(x,y)}{p_{\hat{S}|Y}(\hat{s}|y;\theta_D)\cdot p_Y(y)}\biggr)} \\ \nonumber
&= H(S)-I(X;Y)+\sum_{x,y}{p_{XY}(x,y)\log\biggl(\frac{p_{X|Y}(x|y)}{p_{\hat{S}|Y}(\hat{s}|y;\theta_D)}\biggr)} \\ \nonumber
&= H(S)-I_{\theta_E}(X;Y)+\mathbb{E}_{y}[D_{\text{KL}}(p_{X|Y}(x|y)|| p_{\hat{S}|Y}(x|y;\theta_D))] \qedhere \nonumber
\end{align*} 
\end{proof}

The cross-entropy decomposition in Lemma \ref{lemma:autoencoders_lemma1} can be read in the following way: the first two terms are responsible for the conditional entropy of the received symbols. In the particular case of a uniform source, only the MI between the transmitted and received symbols is controlled by the encoding function during the training process. On the contrary, the last term measures the error in computing the divergence between the true posterior distribution and the decoder-approximated one. As discussed before, the network could minimize the cross-entropy just by minimizing the KL-divergence, concentrating itself only on the label information (decoding) rather than on the symbol distribution (coding). To avoid this, we propose \eqref{eq:AE_CE_MI} where the parameter $\beta$ helps in balancing the two different contributions as follows
\begin{align}
\label{eq:AE_CE_MI_decomp1}
\mathcal{L}(\theta_E,\theta_M, \theta_D) &= H(S)-I_{\theta_E}(X;Y)-\beta I_{\theta_E, \theta_M}(X;Y)+ \nonumber \\ 
&+\mathbb{E}_{y\sim p_Y(y)}[D_{\text{KL}}(p_{X|Y}(x|y)||p_{\hat{S}|Y}(x|y;\theta_D))].
\end{align}
Moreover, if the MI estimator is consistent, \eqref{eq:AE_CE_MI_decomp1} is equal to
\begin{align}
\label{eq:AE_CE_MI_decomp2}
\mathcal{L}(\theta_E, \theta_D) &= H(S)-(\beta+1)I_{\theta_E}(X;Y)+ \nonumber \\ 
&+\mathbb{E}_{y\sim p_Y(y)}[D_{\text{KL}}(p_{X|Y}(x|y)||p_{\hat{S}|Y}(x|y;\theta_D))].
\end{align}
It is immediate to notice that for $\beta<-1$, the network gets in conflict since it would try to minimize both the mutual information and the KL-divergence. Therefore, optimal values for $\beta$ lie on the semi-line $\beta>-1$.

\section{Capacity-driven autoencoders}
\label{sec:autoencoders_capacity}
Interestingly, the MI block can be exploited to obtain an estimate of the channel capacity. The AE-based system is subject to a power constraint coming from the transmitter hardware and it generally works at a fixed rate $R$ and channel uses $n$. For $R$ and $n$ fixed, the scheme discussed in Fig. \ref{fig:AE_CE_MI} optimally designs the coded signal distribution and provides an estimate $I_{\theta_M}(X;Y)$ of the MI $I(X;Y)$ which approaches $R$. However, a question remains still open: \textit{is the achieved rate with the designed code actually channel capacity}?

To find the channel capacity $C$ and determine the optimal signal distribution $p_X(x)$, a broader search on the coding rate needs to be conducted, relaxing both the constraints on $R$ and $n$. The flexibility on $R$ and $n$ requires to use different AEs. In the following, we denote with $AE(k,n)$ a rate-driven AE-based system that transmits $n$ complex symbols at a rate $R=k/n$, where $k=\log_2(M)$ and $M$ is the number of possible messages. The proposed methodology can be segmented in two phases: 
\begin{enumerate}
\item Training of a rate-driven AE $AE(k,n)$ for a fixed coding rate $R$ and channel uses $n$, enabled via the loss function in \eqref{eq:AE_CE_MI};
\item Adaptation of the coding rate $R$ to build capacity approaching codes $\mathbf{x}\sim p_X(x)$ and consequently find the channel capacity $C$.
\end{enumerate}
We remark that the capacity $C$ is the maximum data rate $R$ that can be conveyed through the channel at an arbitrarily small error probability. Therefore, the proposed algorithm makes an initial guess rate $R_0$ and smoothly increases it by playing on both $k$ and $n$. 

The basic idea is to iteratively train at the $i$-th iteration a pair of rate-driven AEs $AE^{(i)}(k_i,n_j), AE^{(i+1)}(k_{i+1},n_j)$ and evaluate both the MIs $I^{i}_{\theta_M}(X;Y), I^{(i+1)}_{\theta_M}(X;Y)$, at a fixed power constraint. The first AE works at a rate $R_i=k_i/n_j$, while the second one at $R_{i+1}=k_{i+1}/n_j$, with $R_{i+1}>R_i$. If the ratio 
\begin{equation}
\biggl|\frac{I^{(i+1)}_{\theta_M}(X;Y)-I^{(i)}_{\theta_M}(X;Y)}{I^{(i)}_{\theta_M}(X;Y)}\biggr|<\epsilon,
\end{equation}
where $\epsilon$ is an input positive parameter, the code is reaching the capacity limit for the fixed power. If the rate $R_{i+1}$ is not achievable (it does not exist a nearly error-free decoding scheme), a longer code $n_{j+1}$ is required. The algorithm in Tab.\ref{alg:autoencoders_1} describes the pseudocode that implements the channel capacity estimation and capacity-approaching code using as a building block the rate-driven AE.

\begin{algorithm}
\caption{Capacity Learning with Capacity-driven Autoencoders}
\label{alg:autoencoders_1}
\begin{algorithmic}[1]
\Inputs{$L$ SNR increasing values, $\epsilon$ threshold.}
\Initialize{$R_0 = k_0/n_0$ initial rate, $i=0, j = 0$.}
\For{$l=1$ to $L$}
	\State Train $AE^{(0)}(k_0,n_0)$;
	\State Compute $I^{(0)}_{\theta_M}(X;Y)$;
	\While{$\Delta>\epsilon$}
		\State $k_{i+1} = (R_{i}\cdot n_j) +1$;
		\State $R_{i+1}= k_{i+1}/n_j$;
		\State Train $AE^{(i+1)}(k_{i+1},n_j)$;
		\If{$R_{i+1}$ is not achievable}
			\State $n_{j+1}=n_j+1$;
			\State $j= j +1$;
		\Else
			\State Compute $I^{(i+1)}_{\theta_M}(X;Y)$;
			\State Evaluate $\Delta = \biggl|\frac{I^{(i+1)}_{\theta_M}(X;Y)-I^{(i)}_{\theta_M}(X;Y)}{I^{(i)}_{\theta_M}(X;Y)}\biggr|$;
		\EndIf
		\State $i=i+1$;
	\EndWhile
	\State $C_l = I^{(i)}_{\theta_M}(X;Y)$ estimated capacity.
\EndFor
\end{algorithmic}
\end{algorithm}

\subsubsection{Important remarks}
\label{subsec:autoencoders_remarks}
The proposed capacity-driven AE offers a constructive learning methodology to design a coding scheme that approaches capacity and to know what such a capacity is, even for channels that are unknown or for which a closed form expression for capacity does not exist. Indeed, training involves numerical procedures which may introduce some challenges. Firstly, the AE is a NN and it is well known that its performance depends on the training procedure, architecture design and hyper-parameters tuning. Secondly, the MINE block converges to the true MI mostly for a low number of samples. In practice, when $n$ is larger than $4$, the estimation often produces unreliable results, therefore, a further investigation on stable numerical estimators via NNs needs to be conducted. Lastly, the AE fails to scale with high code dimension. Indeed, for large values of $n$, the network could get stuck in local minima or, in the worst scenario, could not provide nearly zero-error codes due to limited resources or not large enough networks. The proposed approach transcends such limitations, although they have to be taken into account in the implementation phase.
In addition, the work follows the direction of explainable ML, in which the learning process is motivated by an information-theoretic approach.
Possible improvements are in defining an even tighter bound in \eqref{eq:AE_lower_bound} and in adopting different network structures (convolutional or recurrent NNs). 

It should be noted that the approach works also for non-linear channels where optimal codes have to be designed under an average power constraint and not for a given operating SNR which is appropriate for linear channels with additive noise.

\section{Numerical results}
\label{sec:autoencoders_results}
In this section, we present results obtained with the rate-driven AEs. They demonstrate an improvement in the decoding schemes (measured in terms of BLER) and show the achieved rates with respect to capacity in channels for which a closed form capacity formulation is known, such as the AWGN channel, and unknown, such as additive uniform noise and Rayleigh fading ones.

The following schemes consider an average power constraint at the transmitter side $\mathbb{E}[|\mathbf{x}|^2] = 1 $, implemented through a batch-normalization layer. Training of the end-to-end AE is performed w.r.t. to the loss function in \eqref{eq:AE_CE_MI}, implemented via a double minimization process since also the MINE block needs to be trained:
\begin{equation}
\label{eq:AE_CE_MI_final}
\min_{\theta_E,\theta_D} \min_{\theta_M} \mathbb{E}_{(s,y)\sim p_{\hat{S}Y}(\hat{s},y)}[-\log(p_{\hat{S}|Y}(\hat{s}|y;\theta_D))]-\beta I_{\theta_M}(X;Y).
\end{equation}

Furthermore, the training procedure was conducted with the same number of iterations for different values of the regularization parameter $\beta$, at a fixed value of $E_b/N_0 = 7$ dB. Unless otherwise specified, we included label smoothing with $\epsilon=0.2$ during the AEs training process.
We used Keras with TensorFlow \cite{TensorFlow} as backend to implement the proposed rate-driven AE. The code has been tested on a Windows-based operating system provided with Python 3.6, TensorFlow 1.13.1, Intel core i7-3820 CPU. 
To allow reproducible results and for clarity, the code is rendered publicly available \cite{CACAO_github}. 

\subsection{Coding-decoding capability}
\begin{figure}
	\centering
	\includegraphics[scale=0.34]{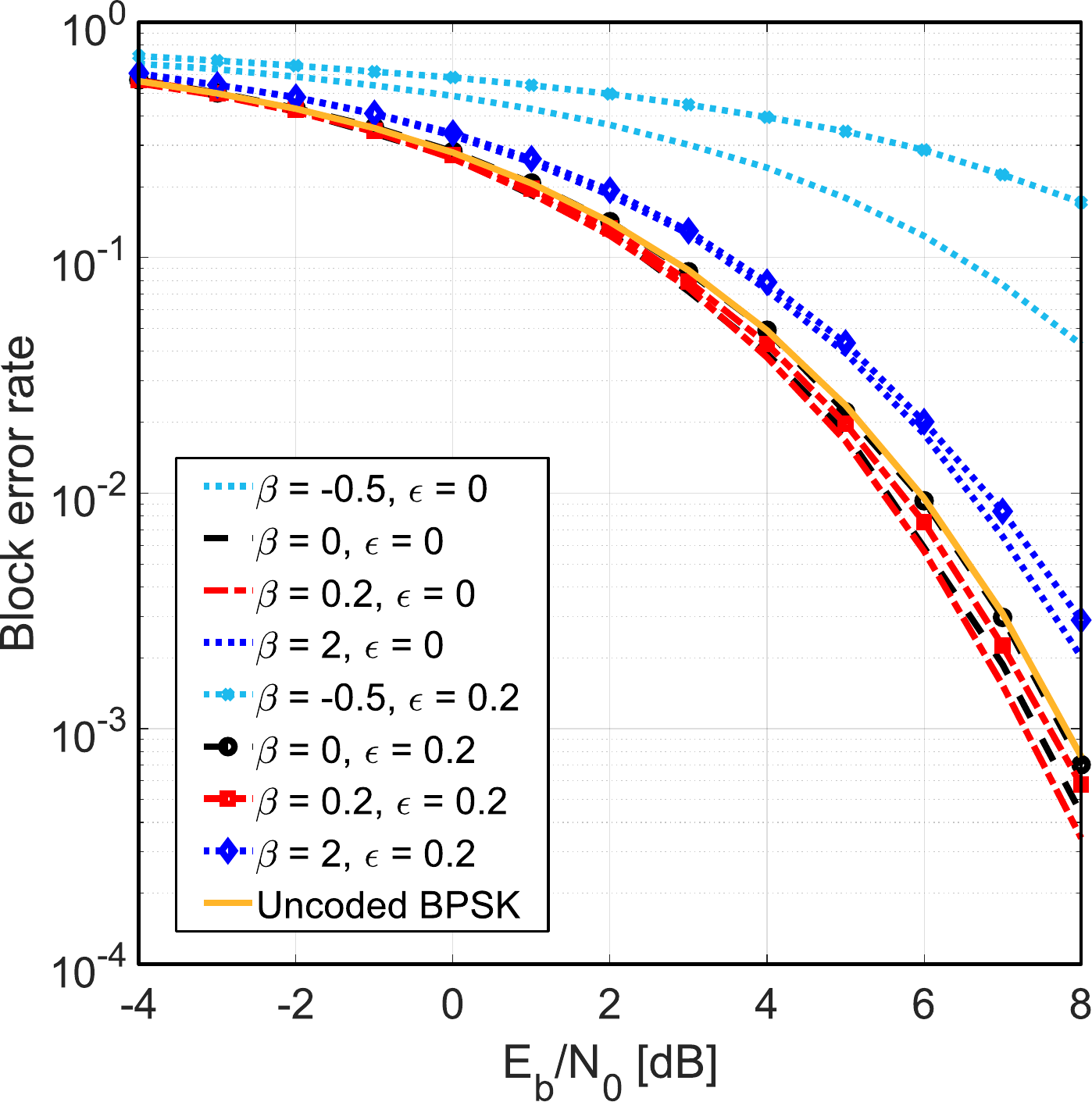}
	\caption{BLER of the rate-driven AE($4,2$) for different values of the regularization $\beta$ and label smoothing parameter $\epsilon$, for an average power constraint, $k=4$ and $n=2$.}
	\label{fig:AE_Ber_4_4}
\end{figure}

As first experiment, we consider a rate-driven AE($4,2$) with rate $R=2$. The advantage of using a MI estimator block during the end-to-end training phase is expected to be more pronounced from $n>1$. To demonstrate the effective influence on the performance of the mutual information term controlled by $\beta$ in \eqref{eq:AE_CE_MI_final}, we investigate $4$ different representative values of the regularization parameter for both cases with and without label smoothing. Fig. \ref{fig:AE_Ber_4_4} illustrates the obtained BLER after the same number of training iterations when the regularization term is added in the cost function. We notice that the lowest BLER is achieved for $\beta=0.2$, therefore as expected, the MI contributes in finding better encoding schemes. Despite the small gain, the result highlights that better BLER can be obtained using the same number of iterations. As shown in \eqref{eq:AE_CE_MI_decomp2}, negative values of $\beta$ tend to force the network to just memorize the dataset, while large positive values create an unbalance. We remark that $\beta=0$ and $\epsilon=0$ corresponds to the classic AE approach proposed in \cite{Oshea2017}. Fig. \ref{fig:AE_Ber_4_4} also illustrates a slightly worse BLER when trained with label smoothing. However, we highlight the fact that label smoothing renders the network more robust to overfitting which does not necessarily reflect into higher accuracy of the model \cite{Muller2019}.
To identify optimal values of $\beta$, a possible approach can try to find the value of $\beta$ for which the two gradients (cross-entropy and mutual information) are equal in magnitude. In the following, we assume $\beta=0.2$.

\begin{figure}
	\centering
	\includegraphics[scale=0.35]{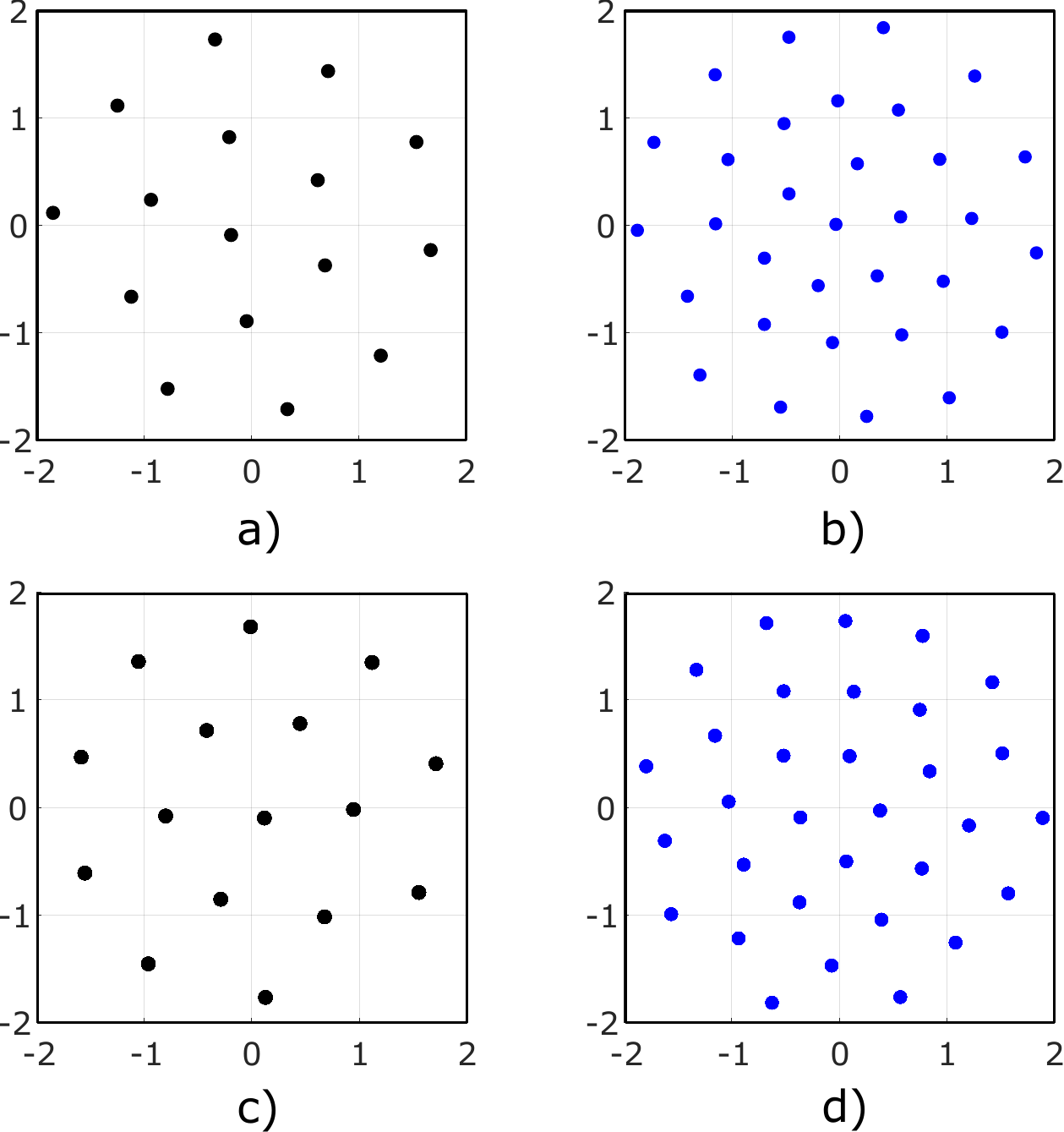}
	\caption{Constellation designed by the encoder during the end-to-end training with $\beta=0.2$ and $\epsilon=0.2$ and parameters $(k,n)$: a) ($4,1$), b) ($5,1$), c) 2 dimensional t-SNE of AE($4,2$) and d)  2 dimensional t-SNE of AE($5,2$).}
	\label{fig:AE_constellation}
\end{figure}

To assess the methodology even with higher dimension $M$ of the input alphabet, we illustrate the optimal constellation schemes when the number of possible messages is $M=16$ and $M=32$. Moreover, two cases are studied, when we transmit one complex symbol ($n=1$) and two dependent complex symbols ($n=2$) over the channel. Fig. \ref{fig:AE_constellation}a and Fig. \ref{fig:AE_constellation}b show the learned hexagonal spiral grid constellations when only one symbol is transmitted for an alphabet dimension of $M=16$ and $M=32$. Fig. \ref{fig:AE_constellation}c and Fig. \ref{fig:AE_constellation}d show, instead, an optimal projection of the coded signals in a 2D space through the learned two-dimensional t-distributed stochastic neighbor embedding
(t-SNE) \cite{vandermaaten2008visualizing}. We notice that the two pairs of constellations are similar, and therefore, even for codes of length $n=2$, the MI pushes the AE to learn the optimal signal constellation. 
As mentioned in the remarks subsection, the advantage is expected to be more pronounced the larger $n$ is. However, there is a practical limitation to obtain constructive results for higher values of $n$ since the MINE estimator is not stable. This can be solved with novel NN architectures and training algorithms, which is one of the challenges we discuss in Sec. \ref{sec:fDIME_autoencoders}.

\subsection{Entropy regularization}
\label{subsec:autoencoders_results_overfitting}
In this section, we evaluate the advantage given by the MI and label smoothing regularization techniques over the solely cross-entropy training method. We analyze the ability to generalize its applicability and the ability to mitigate the overfitting problem of the trained AE. In particular, we propose to study the distribution of the softmax output layer in the simple $4$-QAM AE($2,1$). The softmax output layer attempts to predict the output distribution $p_{\hat{S}|Y}(\hat{s}|y;\theta_D)$ and therefore provides insights about the generalization status of the network, where a smoother platikurtik output distribution is often synonym of generalization beyond the observed data \cite{Pereyra2017} while a peaky output distribution usually stands for poor generalization.

For visualization purposes, we report the analysis of the maximal output distribution $q_{Y}(y) = \max_{\hat{S}} p_{\hat{S}|Y}(\hat{s}|y)$. We consider, as an example, the behavior of the probability density function $q$ for a value of $E_b/N_0=7$ dB in three different training scenarios: only cross-entropy ($\epsilon = 0, \beta=0$), cross-entropy with label smoothing ($\epsilon = 0.2, \beta=0$), cross-entropy with mutual information and label smoothing regularizers ($\epsilon = 0.2, \beta=0.2$). From Fig. \ref{fig:AE_histograms}, we can observe that label smoothing forces a less confident prediction compared to the common cross-entropy. Interestingly, the entropic effect of the MI regularizer, which acts primarily on the encoder block, is present also at the AE output distribution even for a simple autoencoding scheme as the $4$-QAM. Indeed, for $100$ bins, the estimated entropy of $q_Y(y)$ without MI regularization is around $3.66$ Nat, while the entropy of the output with MI is around $3.72$ Nat. 

\begin{figure}
	\centering
	\includegraphics[scale=0.365]{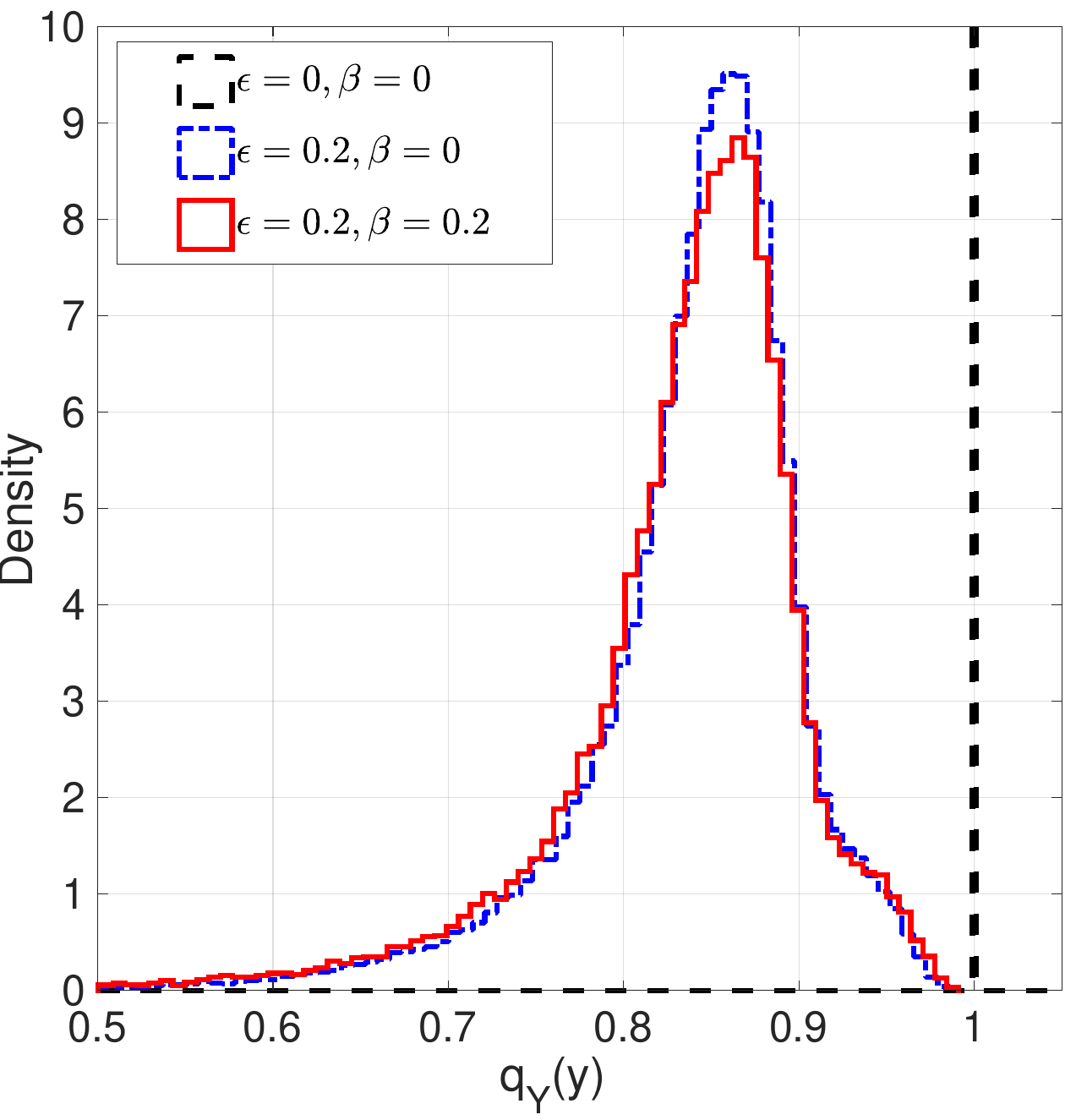}
	\caption{Distribution of the maximal value of the softmax layer of AE($2,1$) for three different scenarios: only cross-entropy training, cross-entropy with label smoothing, cross-entropy with MI and label smoothing.}
	\label{fig:AE_histograms}
\end{figure}

We also report the entropy of $q_Y(y)$ varying the energy per bit to noise ratio $E_b/N_0$ for both cases with and without MI regularization. In both cases, label smoothing for a more stable estimation is used. The network should be capable to predict correct outputs outside the training region, thus, a high entropy is desired. As depicted in Fig. \ref{fig:AE_entropy}, the entropy of the maximal output when regularized with both MI and label smoothing is greater than the entropy of the maximal output when only label smoothing is used, for positive values of $E_b/N_0$. This is coherent with the intuition provided in Sec.\ref{subsec:autoencoders_overfitting}, i.e., the MI term helps in preventing overfitting. An unexpected result comes from the evolution of the estimated entropy varying the energy per bit to noise ratio: the entropy appears to have a local maximum around $0$ dB when the noise power is comparable to the signal's one. Therein, the network is essentially uniformly guessing the predicted message, without any confidence. When the noise power decreases, the network confidently places probability masses following the density in Fig. \ref{fig:AE_histograms}. However, for values of $E_b/N_0 >10$ dB, the entropies of the predictions $q_Y(y)$ start increasing again, and the rate-driven AE maintains a better generalization ability. The point of minimal entropy can be thought as a transition point from continuous (in amplitude) received signals, towards discrete valued signals. These results stimulate further analysis of entropy trends in future research work.  
\begin{figure}
	\centering
	\includegraphics[scale=0.4]{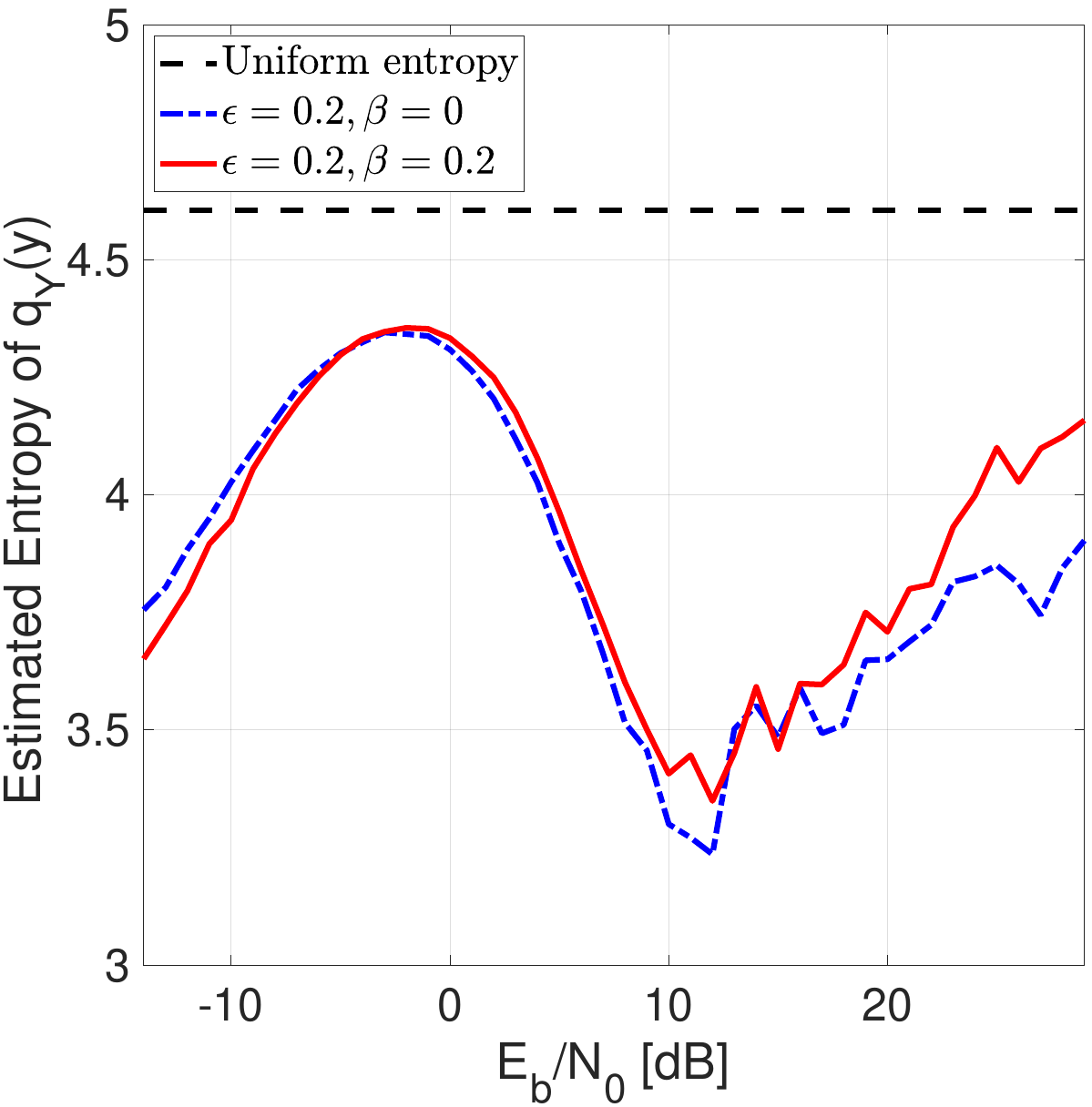}
	\caption{Estimated entropy of $q_Y(y)$ for different values of the energy per bit to noise ratio in the $4$-QAM Gaussian AE($2,1$).}
	\label{fig:AE_entropy}
\end{figure}

\subsection{Capacity-approaching codes over different channels}
The MI block inside the AE can be exploited to design capacity-approaching codes, as discussed in Sec. \ref{sec:autoencoders_capacity}. To show the potentiality of the method, we analyze the achieved rate, e.g. the MI, in three different scenarios. The first one considers the transmission over an AWGN channel, for which we know the exact closed form capacity. The second and third ones, instead, consider the transmission over an additive uniform noise channel and over a Rayleigh fading channel, for which we do not know the capacity in closed form. However, we expect the estimated MI to be a tight lower bound for the real channel capacity, especially at low SNRs.

\subsubsection{AWGN channel}
Let us consider a discrete memory-less channel with input-output relation given by (assuming complex signals)
\begin{equation}
\label{eq:AE_additive_noise}
Y_i=X_i+N_i,
\end{equation}
where the noise samples $N_i\sim \mathcal{CN}(0,\sigma^2)$ are i.i.d. and independent of $X_i$. It is well known that with a power constraint on the input signal $\mathbb{E}[|X_i|^2] \leq P$, the channel capacity is achieved by $X_i\sim \mathcal{CN}(0,P)$ and is equal to
\begin{equation}
C = \log_2(1+\text{SNR}) \; \; \text{[bits/ channel use]}.
\end{equation}

\begin{figure}
	\centering
	\includegraphics[scale=0.34]{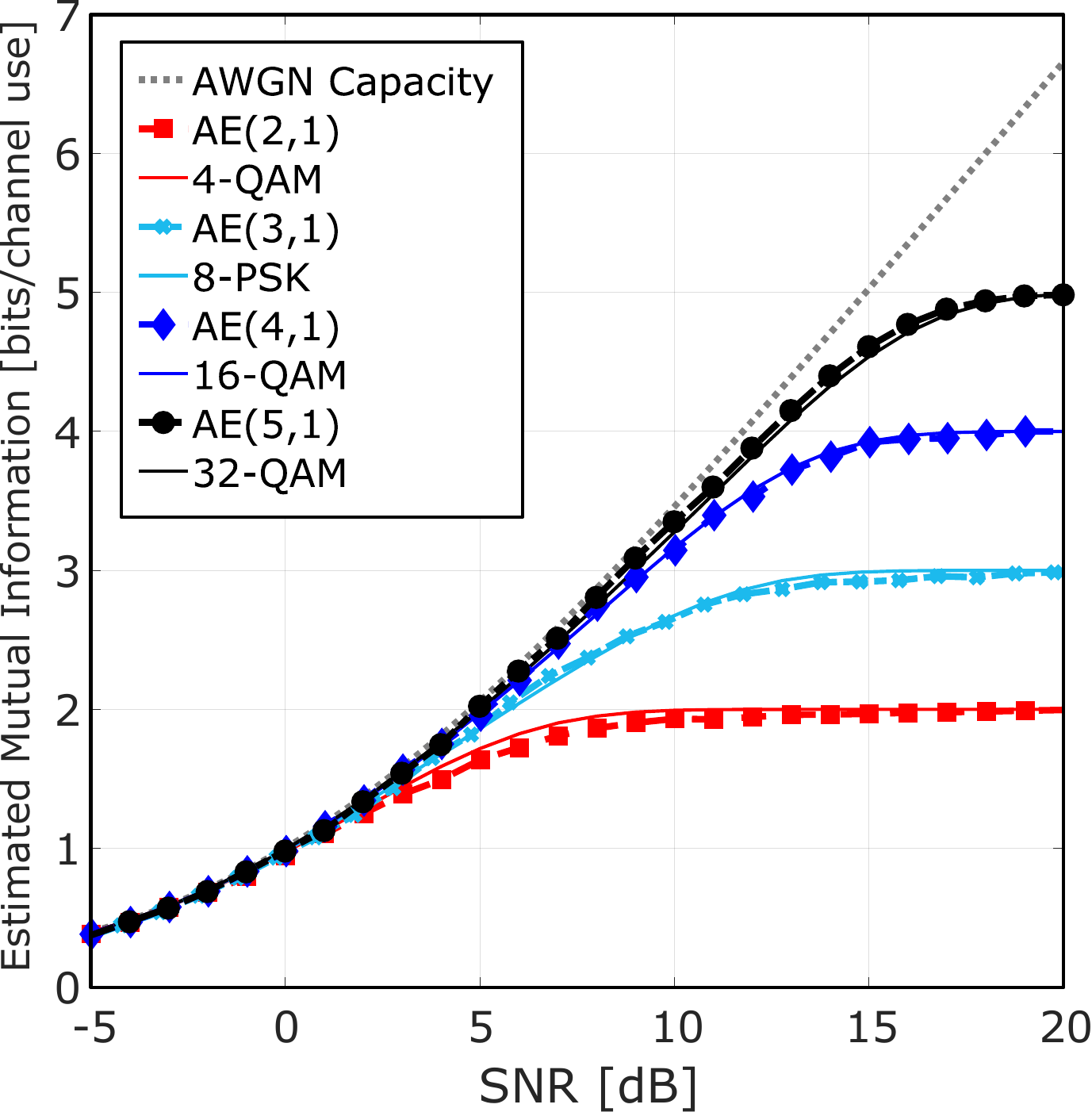}
	\caption{Estimated MI achieved with $\beta=0.2$ and $\epsilon=0.2$ for different dimension of the alphabet $M$ with code length $n=1$ over an AWGN channel.}
	\label{fig:AE_gaussian_capacity}
\end{figure}
The rate-driven AE attempts to maximize the MI during the training progress by modifying, at each iteration, the input distribution $p_X(x)$. Thus, given the input parameters, it produces optimal codes for which the estimation of the mutual information is provided by MINE. Fig. \ref{fig:AE_gaussian_capacity} illustrates the achieved and estimated MI when $\beta=0.2$ and $\epsilon=0.2$ for different values of the alphabet cardinality $M$. A comparison is finally made with established $M$-QAM schemes. We remark that for discrete-input signals with distribution $p_X(x)$, the MI is given by
\begin{equation}
I(X;Y) = \sum_x p_X(x)\cdot \mathbb{E}_{p_Y(y|x)}\biggl[\log \frac{p_Y(y|x)}{p_Y(y)}\biggr],
\end{equation}
and in particular with uniformly distributed symbols (only
geometric shaping), $p_X(x)=1/M$. It is found that the AE constructively provides a good estimate of MI. In addition, for the case $M=32$, the conventional QAM signal constellation is not optimal, since the AE($5,1$) performs geometric signal shaping and finds a constellation that can offer higher information rate as it is visible in the range $13-16$ dB. Lastly, if we code over two channel uses, i.e., $n=2$, an improvement in MI can be attained. This is shown in Fig. \ref{fig:AE_MI_all} where a comparison between channels affected by AWGN, uniform noise and Rayleigh fading is reported.

\subsubsection{Additive uniform noise channel}
\begin{figure}
	\centering
	\includegraphics[scale=0.34]{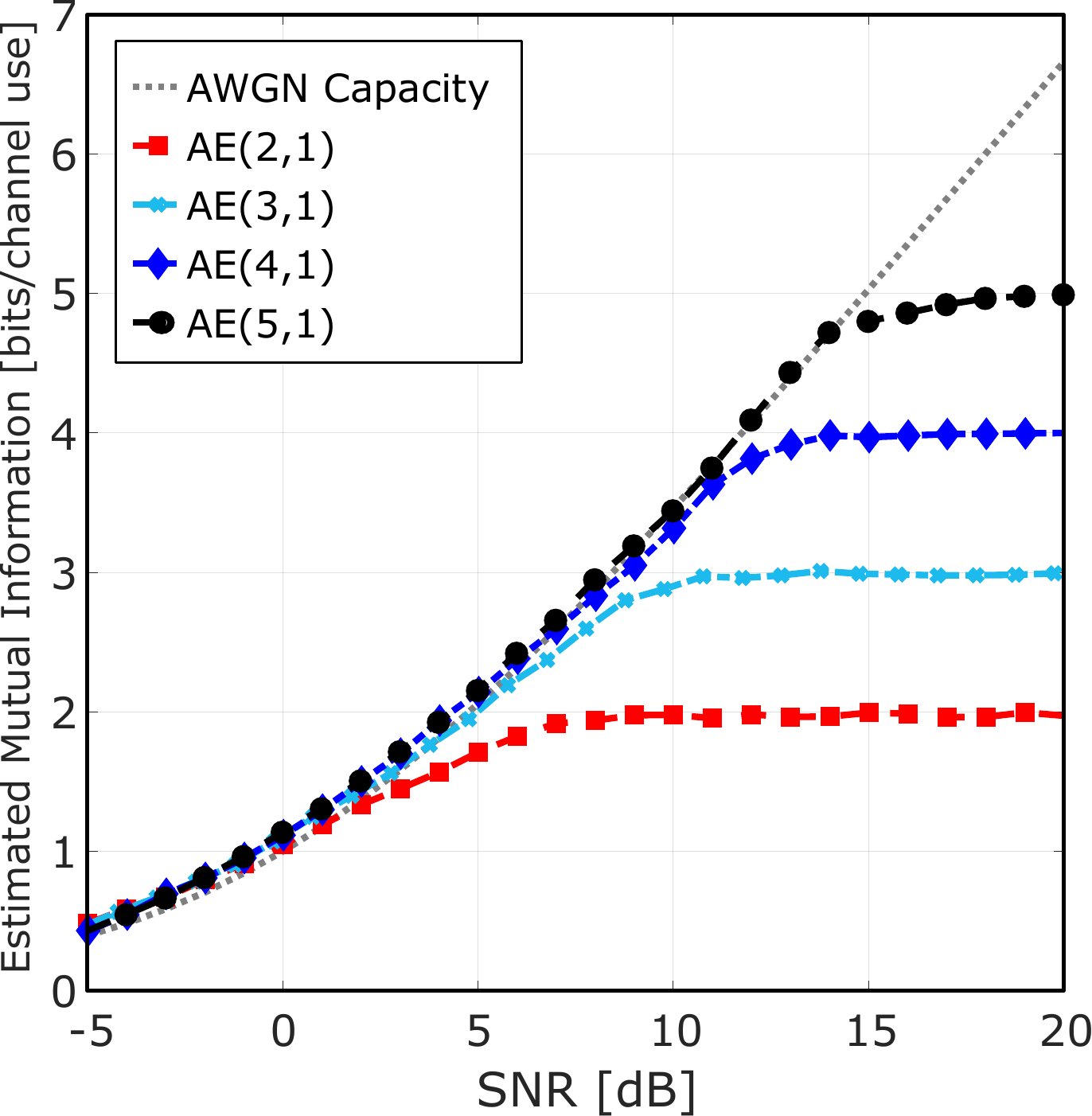}
	\caption{Estimated MI achieved with $\beta=0.2$ and $\epsilon=0.2$ for different dimension of the alphabet $M$ with code $n=1$ over an additive uniform noise channel.}
	\label{fig:AE_additive_capacity}
\end{figure}

\begin{figure}
	\centering
	\includegraphics[scale=0.20]{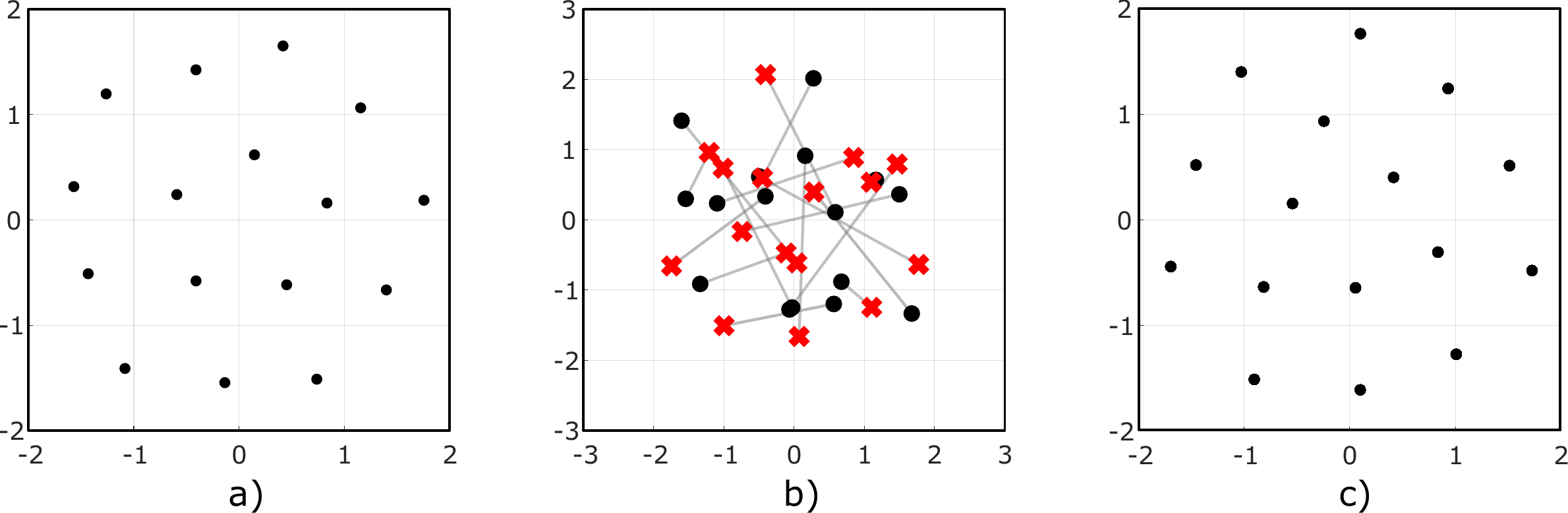}
	\caption{Constellation designed by the encoder during the end-to-end training with $\beta=0.2$ and $\epsilon=0.2$, uniform noise layer and parameters $(k,n)$: a) ($4,1$), b) pairs of transmitted coded symbols AE($4,2$), c) 2 dimensional t-SNE of AE($4,2$).}
	\label{fig:AE_constellation_unif}
\end{figure}

No closed form capacity expression is known when the noise $N$ has uniform distribution $N\sim \mathcal{U}(-\frac{\Delta}{2},\frac{\Delta}{2})$ under an average power constraint. However, Shannon proved that the AWGN capacity is the lowest among all additive noise channels of the form \eqref{eq:AE_additive_noise}. Consistently, as depicted in Fig. \ref{fig:AE_additive_capacity}, it is rather interesting to notice that the estimated MI for the uniform noise channel is higher than the AWGN capacity for low SNRs until it saturates to the coding rate $R$. 
Moreover, differently from the AWGN coding signal set, Fig. \ref{fig:AE_MI_all}b also considers the complex signals generated by the encoder over two channel uses. As expected, the MI achieved by the code produced with the AE($4,2$) is higher than with AE($2,1$), consistently with the idea that $n>1$ introduces a temporal dependence in the code allowing to improve the decoding phase. In addition, Fig. \ref{fig:AE_constellation_unif}a illustrates the constellation produced by the rate-driven AE($4,1$) in the uniform noise case. Fig. \ref{fig:AE_constellation_unif}b shows how the transmitted coded symbols (transmitted complex coefficients) vary for different channel uses while Fig. \ref{fig:AE_constellation_unif}c, instead, displays the learned two-dimensional t-SNE constellation of the code produced by the AE($4,2$). 

\subsubsection{Rayleigh channel}
\begin{figure}
	\centering
	\includegraphics[scale=0.34]{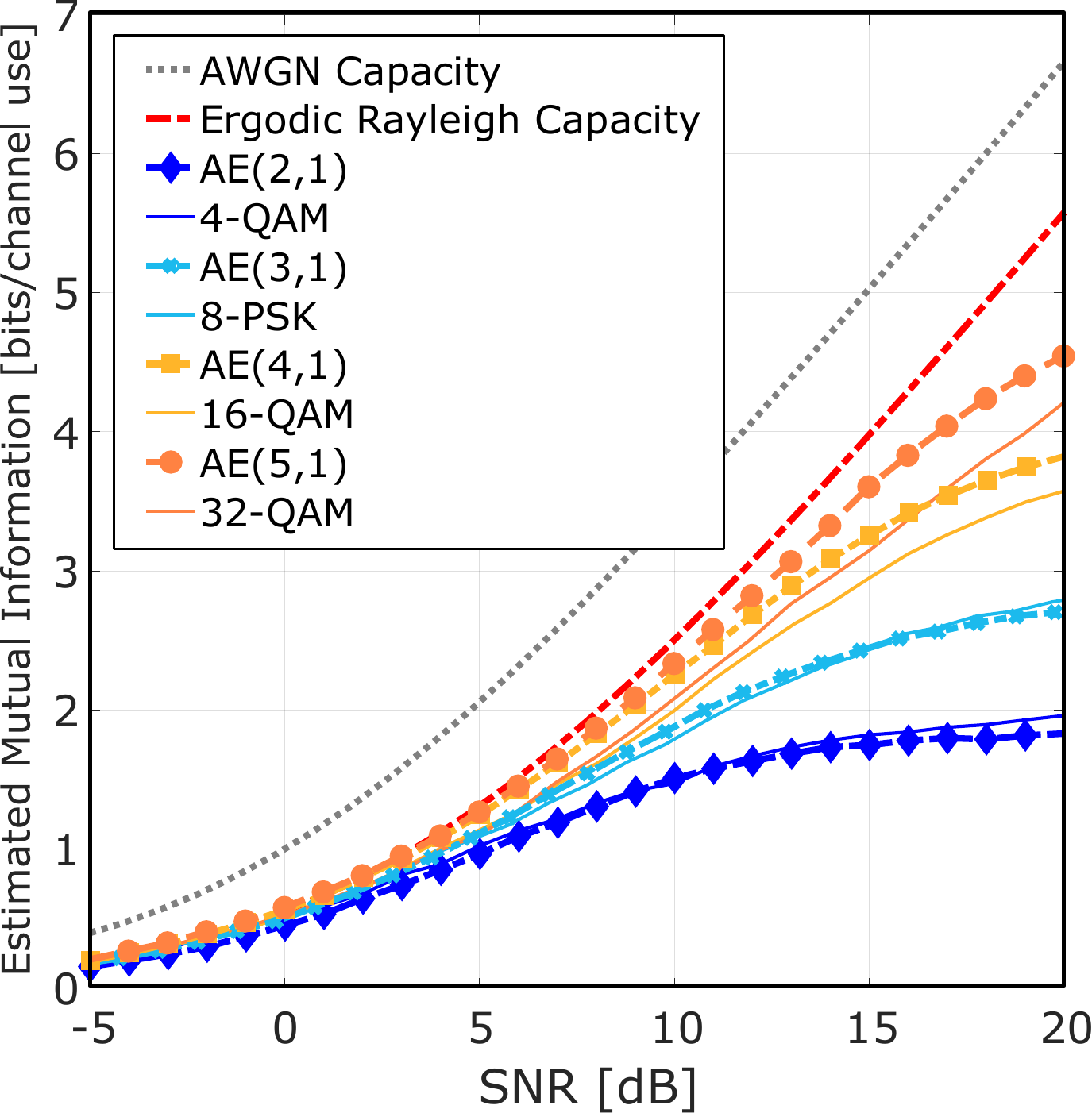}
	\caption{Estimated MI achieved with $\beta=0.2$ and $\epsilon=0.2$ for different dimension of the alphabet $M$ with code length $n=1$ over a Rayleigh channel.}
	\label{fig:AE_rayleigh_capacity}
\end{figure}
As final experiment, we introduce fading in the communication channel, in particular we consider a Rayleigh fading channel of the form
\begin{equation}
\label{eq:AE_rayleigh_noise}
Y_i=h_iX_i+N_i,
\end{equation}
where $N_i\sim \mathcal{CN}(0,\sigma^2)$ and $h_i$ is a random variable whose amplitude $\alpha$ belongs to the Rayleigh distribution $p_R(r)$ and is independent of the signal and noise. The ergodic capacity is given by
\begin{equation}
C = \mathbb{E}_{\alpha\sim p_R(r)}\biggl[\log_2(1+\alpha ^2 \cdot \text{SNR})\biggr].
\end{equation}
Fig. \ref{fig:AE_rayleigh_capacity} shows the estimated MI attained by the AE over a Rayleigh channel with several alphabet dimensions $M$ and compares it with the conventional $M$-QAM schemes. In all the cases, the achieved MI is upper bounded by the ergodic Rayleigh capacity. Similarly to the uniform case, it is curious to notice that the achieved information rate with the rate-driven AE is in some cases higher than the one obtained with the $M$-QAM schemes.
In particular, with $M=32$ the AE($5,1$) exceeds by $0.5$ bit/channel uses at SNR$=15$ dB the $32$-QAM scheme. Lastly, Fig. \ref{fig:AE_MI_all}c highlights the advantage of coding over two channel uses, especially in the range $5-15$ dB.

\begin{figure}
	\centering
	\includegraphics[scale=0.21]{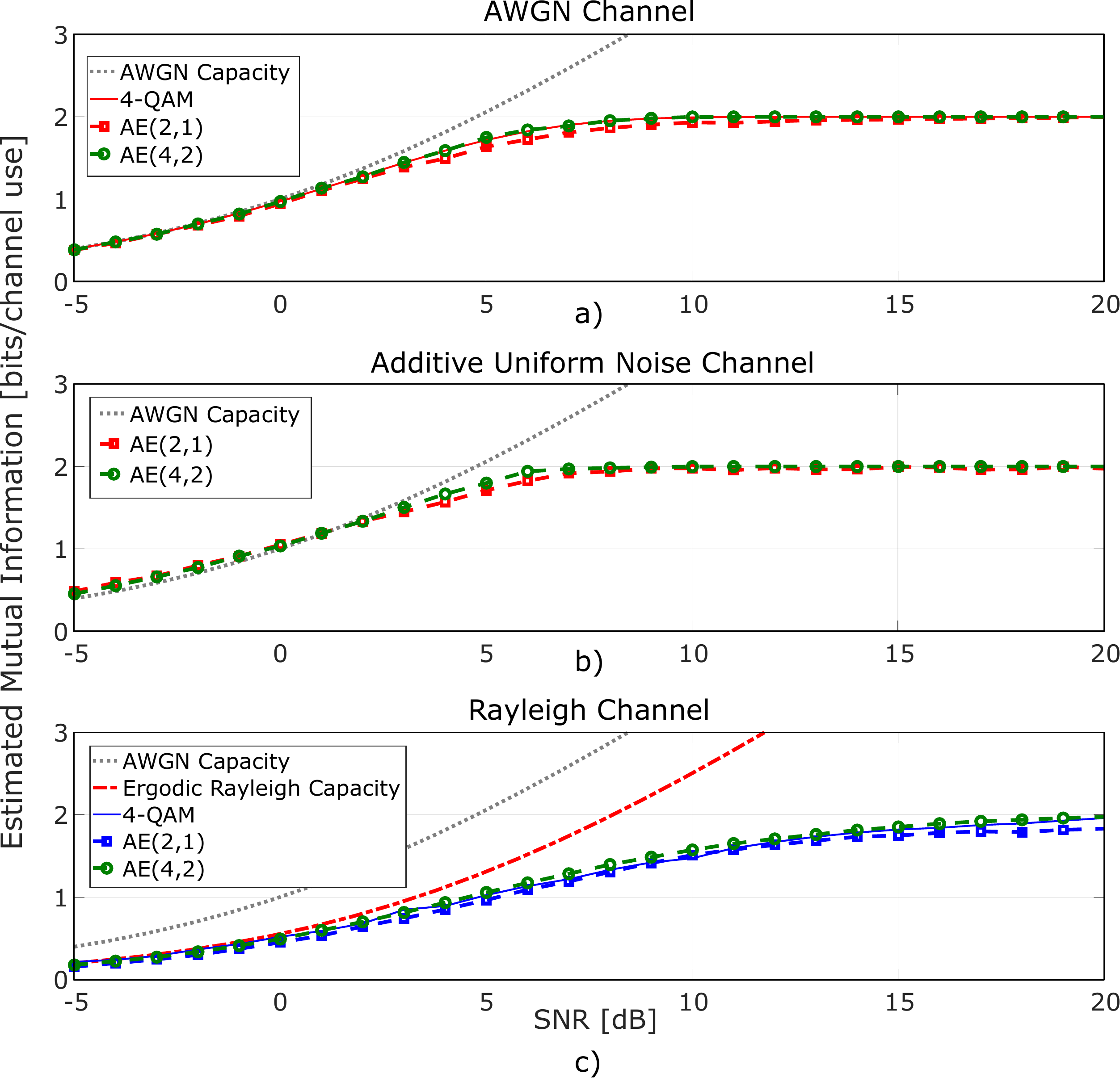}
	\caption{Estimated MI achieved with $\beta=0.2$, $\epsilon=0.2$ and rate $R=2$ with code length $n=1,2$ over a) AWGN channel, b) Additive uniform noise channel, c) Rayleigh channel.}
	\label{fig:AE_MI_all}
\end{figure}

\section{Summary}
\label{sec:autoencoders_conclusions}
This chapter has firstly discussed the AE-based communication system, highlighting the limits of the current cross-entropy loss function used for the training process. A regularization term that accounts for the MI between the transmitted and the received signals has been introduced to design optimal coding-decoding schemes for fixed rate $R$, code-length $n$ and given power constraint. The rationale behind the MI choice has been motivated exploiting the confidence penalty entropy regularization approach, the information bottleneck principle and the fundamental concept of channel capacity. In addition, an adaptation of the coding rate $R$ allowed us to build a capacity learning algorithm enabled by the novel loss function in a scheme named capacity-driven AE. Remarkably, the presented methodology does not make use of any theoretical a-priori knowledge of the communication channel and therefore opens the door to several future studies on intractable channel models, an example of which is the power line communication channel.

\part{Part 3}

\chapter{Variational Approaches for Mutual Information Estimation} % Neural decoder
\chaptermark{Mutual information neural estimation}
\label{sec:mi_estimators}

Estimating MI accurately is pivotal across diverse applications, from ML to communications and biology, enabling us to gain insights into the inner mechanisms of complex systems. Yet, dealing with high-dimensional data presents a formidable challenge, due to its size and the presence of intricate relationships. 

Recently proposed neural methods employing VLBs on the MI have gained prominence. However, these approaches suffer from either high bias or high variance, as the sample size and the structure of the loss function directly influence the training process. In this chapter, we present a novel class of discriminative mutual information estimators (DIME) based on the variational representation of the $f$-divergence. We investigate the impact of the permutation function used to obtain the marginal training samples and present a novel architectural solution based on derangements. The proposed estimator is flexible since it exhibits an excellent bias/variance trade-off. The comparison with state-of-the-art neural estimators, through extensive experimentation within established reference scenarios, shows that our approach offers higher accuracy and lower complexity.

The results presented in this chapter are documented in \cite{LetiziaNIPS, f-DIME, letizia2022Balkan}.

\section{Introduction}
\sectionmark{Introduction}
\label{sec:mi_related}
The MI between two multivariate random variables, $X$ and $Y$, is a fundamental quantity in statistics, representation learning, information theory, communication engineering and biology \cite{goldfeld2021sliced, tschannen2019mutual, guo2005mutual, pluim2003mutual}. 

Unfortunately, computing $I(X;Y)$ is challenging since the joint PDF $p_{XY}(\mathbf{x},\mathbf{y})$ and the marginals $p_X(\mathbf{x}),p_Y(\mathbf{y})$ are usually unknown, especially when dealing with high-dimensional data.

Traditional approaches for the MI estimation rely on binning, density and kernel estimation \cite{Moon1995} and $k$-nearest neighbors \cite{Kraskov2004}. Nevertheless, they do not scale to problems involving high-dimensional data as it is the case in modern ML applications. Hence, deep NNs have recently been leveraged to maximize VLBs on the MI \cite{Poole2019,Nguyen2010, Mine2018}. The expressive power of NNs has shown promising results in this direction although less is known about the effectiveness of such estimators \cite{Song2020}, especially since they suffer from either high bias or high variance. 

The variational approach usually exploits an energy-based variational family of functions to provide a lower bound on the KL divergence. As an example, the Donsker-Varadhan dual representation of the KL divergence \cite{Poole2019,Donsker1983} produces an estimate of the MI using the bound optimized by the mutual neural information estimator (MINE) \cite{Mine2018}
\begin{equation}
\label{eq:MI_MINE}
I_{MINE}(X;Y) = \sup_{\theta \in \Theta} \mathbb{E}_{(\mathbf{x},\mathbf{y})\sim p_{XY}(\mathbf{x},\mathbf{y})}[T_{\theta}(\mathbf{x},\mathbf{y})]  -\log(\mathbb{E}_{(\mathbf{x},\mathbf{y})\sim p_X(\mathbf{x}) p_Y(\mathbf{y})}[e^{T_{\theta}(\mathbf{x},\mathbf{y})}]),
\end{equation}
where $\theta \in \Theta$ parameterizes a family of functions $T_{\theta} : \mathcal{X}\times \mathcal{Y} \to \mathbb{R}$ through the use of a deep NN. However, the logarithm before the expectation in the second term renders MINE a biased estimator. To avoid biased gradients, the authors in \cite{Mine2018} suggested to replace the partition function $\mathbb{E}_{p_X p_Y}[e^{T_{\theta}}]$ with an exponential moving average over mini-data-batches.

Another VLB is based on the KL divergence dual representation introduced in \cite{Nguyen2010} (also referred to as $f$-MINE in \cite{Mine2018})
\begin{equation}
\label{eq:MI_NWJ}
I_{NWJ}(X;Y) = \sup_{\theta \in \Theta} \mathbb{E}_{(\mathbf{x},\mathbf{y})\sim p_{XY}(\mathbf{x},\mathbf{y})}[T_{\theta}(\mathbf{x},\mathbf{y})]  -\mathbb{E}_{(\mathbf{x},\mathbf{y})\sim p_X(\mathbf{x}) p_Y(\mathbf{y})}[e^{T_{\theta}(\mathbf{x},\mathbf{y})-1}].
\end{equation}
Although MINE provides a tighter bound $I_{MINE}\geq I_{NWJ}$, the NWJ estimator is unbiased. 

Both MINE and NWJ suffer from high-variance estimations and to combat such a limitation, the SMILE estimator was introduced in \cite{Song2020}. It is defined as
\begin{equation}
\small
\label{eq:MI_SMILE}
I_{SMILE}(X;Y) = \sup_{\theta \in \Theta} \mathbb{E}_{(\mathbf{x},\mathbf{y})\sim p_{XY}(\mathbf{x},\mathbf{y})}[T_{\theta}(\mathbf{x},\mathbf{y})] 
 -\log(\mathbb{E}_{(\mathbf{x},\mathbf{y})\sim p_X(\mathbf{x}) p_Y(\mathbf{y})}[\text{clip}(e^{T_{\theta}(\mathbf{x},\mathbf{y})},e^{-\tau},e^{\tau})]),
\end{equation}
where $\text{clip}(v,l,u) = \max(\min(v,u),l)$ and it helps to obtain smoother partition functions estimates. SMILE is equivalent to MINE in the limit $\tau \to +\infty$. 

The MI estimator based on contrastive predictive coding (CPC) \cite{NCE2018} is defined as 
\begin{equation}
\label{eq:MI_NCE}
I_{CPC}(X;Y) = \mathbb{E}_{(\mathbf{x},\mathbf{y})\sim p_{XY,N}(\mathbf{x},\mathbf{y})}\biggl[ \frac{1}{N} \sum_{i=1}^{N}{ \log\biggl( \frac{e^{T_{\theta}(\mathbf{x}_i,\mathbf{y}_i)}}{\frac{1}{N} \sum_{j=1}^{N}{e^{T_{\theta}(\mathbf{x}_i,\mathbf{y}_j)}}}\biggr)}  \biggr],
\end{equation}
where $N$ is the batch size and $p_{XY,N}$ denotes the joint distribution of $N$ i.i.d. random variables sampled from $p_{XY}$. CPC provides low variance estimates but it is upper bounded by $\log N$, resulting in a biased estimator. Such upper bound, typical of contrastive learning objectives, has been recently analyzed in the context of skew-divergence estimators \cite{RenyiCL}.

Another estimator based on a classification task is the neural joint entropy estimator (NJEE) proposed in \cite{shalev2022neural}.
Let $X_m$ be the $m$-th component of $X$, with $m\leq d$ and $N$ the batch size. $X^k$ is the vector containing the first $k$ components of $X$. Let $\hat{H}_N(X_1)$ be the estimated marginal entropy of the first components in $X$ and let $G_{\theta_m}(X_m|X^{m-1})$ be a neural network classifier, where the input is $X^{m-1}$ and the label used is $X_m$. If $\text{CE}(\cdot)$ is the cross-entropy function, then the MI estimator based on NJEE is defined as
\begin{equation}
    I_{NJEE}(X;Y) = \hat{H}_N(X_1) + \sum_{m=2}^{d} \text{CE}(G_{\theta_m}(X_m|X^{m-1})) - \sum_{m=1}^{d} \text{CE}(G_{\theta_m}(X_m|Y, X^{m-1})),
\end{equation}
where the first two terms of the RHS constitutes the NJEE estimator.

\section{Variational $f$-divergence and data derangements}
\sectionmark{f-DIME}
\label{sec:mi_f-DIME}

Inspired by the $f$-GAN training objective \cite{Nowozin2016}, in the following, we present a class of discriminative MI estimators based on the $f$-divergence measure. Conversely to what has been proposed so far in the literature, where $f$ is always constrained to be the generator of the KL divergence, we allow for any choice of $f$. Different $f$ functions will have different impact on the training and optimization sides, while on the estimation side, the partition function does not need to be computed, leading to low variance estimators.

\subsection{Contribution}
\label{subsec:mi_contribution}
Discriminative approaches \cite{raina2003classification, tonello2022mind} compare samples from both the joint and marginal distributions to directly compute the density ratio (or the log-density ratio) 
\begin{equation}
R(\mathbf{x},\mathbf{y}) = \frac{p_{XY}(\mathbf{x},\mathbf{y})}{p_X(\mathbf{x}) p_Y(\mathbf{y})}
\label{eq:MI_density_ratio_1}.
\end{equation}
Generative approaches \cite{Barber2003, song2019generative}, instead, aim at modeling the individual densities. 

We focus on discriminative MI estimation since it can in principle enjoy some of the properties of implicit generative models, which are able of directly generating data that belongs to the same distribution of the input data without any explicit density estimate. In fact, we recall that the GAN adversarial training pushes the discriminator $D(\mathbf{x})$ towards the optimum value
\begin{equation}
\label{eq:MI_gan_density_ratio}
\hat{D}(\mathbf{x}) = \frac{p_{data}(\mathbf{x})}{p_{data}(\mathbf{x})+p_{gen}(\mathbf{x})} = \frac{1}{1+\frac{p_{gen}(\mathbf{x})}{p_{data}(\mathbf{x})}}.
\end{equation}
Therefore, the output of the optimum discriminator is itself a function of the density ratio $p_{gen}/p_{data}$, where $p_{gen}$ and $p_{data}$ are the distributions of the generated and the collected data, respectively.
We generalize the observation of \eqref{eq:MI_gan_density_ratio} and we propose a family of MI estimators based on the VLB of the $f$-divergence \cite{Poole2019, sason2016f}. In particular, we argue that the maximization of any $f$-divergence VLB can lead to a MI estimator with excellent bias/variance trade-off.

Since we typically have access only to joint data points $(\mathbf{x},\mathbf{y}) \sim p_{XY}(\mathbf{x},\mathbf{y})$, another relevant practical aspect is the sampling strategy to obtain data from the product of marginals $p_{X}(\mathbf{x})p_Y(\mathbf{y})$, for instance via a shuffling mechanism along $N$ realizations of $Y$. We analyze the impact that the permutation has on the learning and training process and we propose a derangement training strategy that achieves high performance requiring $\Omega(N)$ operations.
Simulation results demonstrate that the proposed approach exhibits improved estimations in a multitude of scenarios.

In brief, we can summarize our contributions over the state-of-the-art as follows:
\begin{itemize}
    \item For any $f$-divergence, we derive a training value function whose maximization leads to a given MI estimator.
    \item We compare different $f$-divergences and comment on the resulting estimator properties and performance.
    \item We study the impact of data derangement for the learning model and propose a novel derangement training strategy that overcomes the upper bound on the MI estimation \cite{McAllester2019}, contrarily to what happens when using a random permutation strategy.
    \item We unify the main discriminative estimators into a publicly available code which can be used to reproduce all the results of this chapter.
\end{itemize}

\subsection{$f$-divergence mutual information estimation}
\label{subsec:mi_f-DIME}
The calculation of the MI via a discriminative approach requires the density ratio \eqref{eq:MI_density_ratio_1}.
From \eqref{eq:MI_gan_density_ratio}, we observe that $I(X;Y)$ can be estimated using the optimum GAN discriminator $\hat{D}$ when $p_{data}\equiv  p_{X}p_Y$ and $p_{gen} \equiv p_{XY}$.

More in general, the authors in \cite{Nowozin2016} extended the variational divergence estimation framework presented in \cite{Nguyen2010} and showed that any $f$-divergence can be used to train GANs. Inspired by such idea, we now argue that also discriminative MI estimators enjoy similar properties if the variational representation of $f$-divergence functionals $D_f(P||Q)$ is adopted.
The following theorem introduces $f$-DIME, a class of discriminative mutual information estimators (DIME) based on the variational representation of the $f$-divergence.
 
\begin{theorem}
\label{theorem:MI_theorem1}
Let $(X,Y) \sim p_{XY}(\mathbf{x},\mathbf{y})$ be a pair of multivariate random variables. Let $\sigma(\cdot)$ be a permutation function such that  $p_{\sigma(Y)}(\sigma(\mathbf{y})|\mathbf{x}) = p_{Y}(\mathbf{y})$. Let $f^*$ be the Fenchel conjugate of $f:\mathbb{R}_+ \to \mathbb{R}$, a convex lower semicontinuous function that satisfies $f(1)=0$ with derivative $f^{\prime}$.
If $\mathcal{J}_{f}(T)$ is a value function defined as 
\begin{equation}
\mathcal{J}_{f}(T) =  \mathbb{E}_{(\mathbf{x},\mathbf{y}) \sim p_{XY}(\mathbf{x},\mathbf{y})}\biggl[T\bigl(\mathbf{x},\mathbf{y}\bigr)-f^*\biggl(T\bigl(\mathbf{x},\sigma(\mathbf{y})\bigr)\biggr)\biggr],
\label{eq:MI_discriminator_function_f}
\end{equation}
then
\begin{equation}
\label{eq:MI_optimal_ratio_T}
\hat{T}(\mathbf{x},\mathbf{y}) =\arg \max_T \mathcal{J}_f(T) = f^{\prime} \biggl(\frac{p_{XY}(\mathbf{x},\mathbf{y})}{p_X(\mathbf{x})p_Y(\mathbf{y})}\biggr),
\end{equation}
and
\begin{equation}
\label{eq:MI_f-DIME}
I(X;Y) = I_{fDIME}(X;Y) =  \mathbb{E}_{(\mathbf{x},\mathbf{y}) \sim p_{XY}(\mathbf{x},\mathbf{y})}\biggl[ \log \biggl(\bigl(f^{*}\bigr)^{\prime}\bigl(\hat{T}(\mathbf{x},\mathbf{y})\bigr) \biggr) \biggr].
\end{equation}
\end{theorem}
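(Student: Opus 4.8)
The plan is to establish the theorem in two stages, mirroring the structure already used in the proof of Theorem~\ref{theorem:codine_theorem1}: first, identify the maximizer $\hat{T}$ of the value function $\mathcal{J}_f(T)$ via the variational (Fenchel) lower bound on the $f$-divergence; second, substitute $\hat{T}$ into the functional appearing in \eqref{eq:MI_f-DIME} and verify that it recovers exactly $I(X;Y)$.

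First I would rewrite the value function \eqref{eq:MI_discriminator_function_f} using the permutation hypothesis. The key observation is that, since $\sigma$ is a permutation for which $p_{\sigma(Y)}(\sigma(\mathbf{y})|\mathbf{x}) = p_Y(\mathbf{y})$, the pair $(\mathbf{x},\sigma(\mathbf{y}))$ is distributed according to the product of marginals $p_X(\mathbf{x})p_Y(\mathbf{y})$. Hence the second expectation, though written over the joint law of $(\mathbf{x},\mathbf{y})$, is in fact an expectation over $p_X p_Y$ of $f^*(T(\mathbf{x},\mathbf{y}))$. This lets me recast $\mathcal{J}_f(T)$ as
\begin{equation}
\mathcal{J}_f(T) = \mathbb{E}_{p_{XY}}\bigl[T(\mathbf{x},\mathbf{y})\bigr] - \mathbb{E}_{p_X p_Y}\bigl[f^*(T(\mathbf{x},\mathbf{y}))\bigr],
\end{equation}
which is precisely the variational lower bound on $D_f(p_{XY}||p_X p_Y)$ from Lemma~1 of \cite{Nguyen2010}, invoked here with $P = p_{XY}$ and $Q = p_X p_Y$. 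I would then cite that the supremum is attained at $\hat{T}(\mathbf{x},\mathbf{y}) = f'\bigl(p_{XY}(\mathbf{x},\mathbf{y})/(p_X(\mathbf{x})p_Y(\mathbf{y}))\bigr)$, which is exactly \eqref{eq:MI_optimal_ratio_T}; this is the same Fenchel-duality argument already used in the excerpt, so I may assume it.

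The second stage is the crucial identity for the estimator. By the standard Fenchel-conjugate relation $(f^*)'(f'(u)) = u$ (valid wherever $f$ is differentiable and strictly convex at $u$), applying $(f^*)'$ to the optimal discriminator inverts $f'$ and returns the density ratio itself:
\begin{equation}
\bigl(f^*\bigr)'\bigl(\hat{T}(\mathbf{x},\mathbf{y})\bigr) = \frac{p_{XY}(\mathbf{x},\mathbf{y})}{p_X(\mathbf{x})p_Y(\mathbf{y})}.
\end{equation}
Substituting into \eqref{eq:MI_f-DIME} gives $\mathbb{E}_{p_{XY}}\bigl[\log\bigl(p_{XY}/(p_X p_Y)\bigr)\bigr]$, which by definition \eqref{eq:fundamentals_MI} equals $I(X;Y)$, completing the proof. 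I expect the main obstacle to be the careful justification of the marginal-sampling step and the invertibility of $f'$: one must verify that the chosen $f$ is strictly convex (so that $f'$ is invertible and $(f^*)' = (f')^{-1}$ holds pointwise) and that the density ratio lies in the domain where this inversion is valid, since for degenerate generators the conjugate relation can fail on the boundary. The permutation argument itself is routine once the conditional-independence condition $p_{\sigma(Y)}(\sigma(\mathbf{y})|\mathbf{x}) = p_Y(\mathbf{y})$ is taken as given, but it is the hinge that converts the operationally accessible $\mathcal{J}_f$ into a genuine $f$-divergence bound.
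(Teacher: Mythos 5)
Your proposal is correct and follows essentially the same route as the paper's own proof: both recast $\mathcal{J}_f(T)$ via the permutation hypothesis as the Fenchel variational lower bound on $D_f(p_{XY}\|p_Xp_Y)$ from Lemma~1 of \cite{Nguyen2010}, identify $\hat{T} = f'\bigl(p_{XY}/(p_Xp_Y)\bigr)$ as the maximizer, and then use the conjugate relation $(f^*)' = (f')^{-1}$ to recover the density ratio inside the logarithm, yielding $I(X;Y)$. Your added remarks on strict convexity of $f$ and the domain of validity of the inversion are a sensible refinement that the paper leaves implicit, but they do not change the argument.
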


Theorem \ref{theorem:MI_theorem1} shows that any value function $\mathcal{J}_f$ of the form in \eqref{eq:MI_discriminator_function_f}, seen as the dual representation of a given $f$-divergence $D_f$, can be maximized to estimate the MI via \eqref{eq:MI_f-DIME}. It is interesting to notice that the proposed class of estimators does not need any evaluation of the partition term.
 
We propose to parametrize $T(\mathbf{x},\mathbf{y})$ with a deep NN $T_{\theta}$ of parameters $\theta$ and solve with gradient ascent and back-propagation to obtain
\begin{equation}
\hat{\theta} = \arg \max_{\theta} \mathcal{J}_f(T_{\theta}).
\end{equation}
By doing so, it is possible to guarantee that, at every training iteration $n$, the convergence of the $f$-DIME estimator $\hat{I}_{n,fDIME}(X;Y)$ is controlled by the convergence of $T$ towards the tight bound $\hat{T}$ while maximizing $\mathcal{J}_f(T)$, as stated in the following lemma.
\begin{lemma}
\label{lemma:MI_convergence}
Let the discriminator $T(\cdot)$ be with enough capacity, i.e., in the non parametric limit. Consider the problem
\begin{equation}
\hat{T} =  \; \arg \max_T \mathcal{J}_{f}(T)
\label{eq:MI_Lemma4_problem}
\end{equation}
where $\mathcal{J}_{f}(T)$ is defined as in \eqref{eq:MI_discriminator_function_f},
and the update rule based on the gradient descent method
\begin{equation}
T^{(n+1)} = T^{(n)} + \mu \nabla \mathcal{J}_{f}(T^{(n)}).
\end{equation}
If the gradient descent method converges to the global optimum $\hat{T}$, the mutual information estimator defined in \eqref{eq:MI_f-DIME}
converges to the real value of the mutual information $I(X;Y)$.
\end{lemma}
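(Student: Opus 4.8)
The plan is to prove Lemma \ref{lemma:MI_convergence} by combining the fixed-point characterization of the optimal discriminator from Theorem \ref{theorem:MI_theorem1} with a continuity argument that transfers convergence at the level of the discriminator $T$ to convergence at the level of the estimator $\hat{I}_{n,fDIME}(X;Y)$. The key structural fact, already established in Theorem \ref{theorem:MI_theorem1}, is the two-part chain: (i) the unique maximizer of $\mathcal{J}_f(T)$ is $\hat{T}(\mathbf{x},\mathbf{y}) = f^{\prime}\bigl(R(\mathbf{x},\mathbf{y})\bigr)$ where $R$ is the density ratio in \eqref{eq:MI_density_ratio_1}, and (ii) the estimator formula \eqref{eq:MI_f-DIME} recovers the exact mutual information when evaluated at $\hat{T}$, because $\bigl(f^*\bigr)^{\prime}\bigl(\hat{T}\bigr) = \bigl(f^*\bigr)^{\prime}\bigl(f^{\prime}(R)\bigr) = R$ by the Fenchel duality identity $\bigl(f^*\bigr)^{\prime} = \bigl(f^{\prime}\bigr)^{-1}$, so that $\mathbb{E}_{p_{XY}}[\log R] = I(X;Y)$.

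First I would invoke the hypothesis that the discriminator is in the non-parametric limit, i.e. $T$ ranges over a sufficiently rich function class so that the supremum in \eqref{eq:MI_Lemma4_problem} is attained at the genuine optimizer $\hat{T}$ rather than at a constrained parametric approximation. Under this assumption, the stated premise that gradient descent with step size $\mu$ converges to the global optimum means precisely that the iterates satisfy $T^{(n)} \to \hat{T}$ as $n \to \infty$ in the relevant topology (pointwise, or in $L^1(p_{XY})$, which is what the subsequent expectation requires). I would then define the estimator at iteration $n$ as
\begin{equation}
\hat{I}_{n,fDIME}(X;Y) = \mathbb{E}_{(\mathbf{x},\mathbf{y}) \sim p_{XY}(\mathbf{x},\mathbf{y})}\Bigl[\log\bigl(\bigl(f^*\bigr)^{\prime}\bigl(T^{(n)}(\mathbf{x},\mathbf{y})\bigr)\bigr)\Bigr],
\end{equation}
and the goal becomes showing $\hat{I}_{n,fDIME} \to I(X;Y)$.

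The central step is a continuity argument: the map $T \mapsto \mathbb{E}_{p_{XY}}\bigl[\log\bigl(\bigl(f^*\bigr)^{\prime}(T)\bigr)\bigr]$ is continuous along the convergent sequence $T^{(n)} \to \hat{T}$. I would establish this by noting that $\bigl(f^*\bigr)^{\prime}$ is continuous on the interior of the domain of $f^*$ (a standard consequence of the convexity and lower semicontinuity of $f$, which guarantee $f^*$ is convex and differentiable almost everywhere, with continuous derivative where it exists), and that $\log$ is continuous on $\mathbb{R}_+$. Composing, $\log\bigl(\bigl(f^*\bigr)^{\prime}(\cdot)\bigr)$ is continuous, so $\log\bigl(\bigl(f^*\bigr)^{\prime}(T^{(n)})\bigr) \to \log\bigl(\bigl(f^*\bigr)^{\prime}(\hat{T})\bigr)$ pointwise. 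To pass the limit through the expectation I would appeal to a dominated-convergence or uniform-integrability argument, after which
\begin{equation}
\lim_{n\to\infty} \hat{I}_{n,fDIME}(X;Y) = \mathbb{E}_{p_{XY}}\Bigl[\log\bigl(\bigl(f^*\bigr)^{\prime}(\hat{T})\bigr)\Bigr] = \mathbb{E}_{p_{XY}}[\log R] = I(X;Y),
\end{equation}
which is the claim.

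The main obstacle I anticipate is justifying the interchange of limit and expectation, i.e. securing an integrable dominating function for the sequence $\log\bigl(\bigl(f^*\bigr)^{\prime}(T^{(n)})\bigr)$. Because $\log$ blows up near zero and $\bigl(f^*\bigr)^{\prime}$ may approach the boundary of its range, the integrand is not automatically bounded, so this requires either a regularity assumption on the convergence (e.g. $L^1$ convergence of the composed quantity, or boundedness of the density ratio $R$ away from $0$ and $\infty$) or an explicit tail control specific to the chosen generator $f$. A secondary subtlety is that the premise only asserts convergence of the gradient-descent iterates to \emph{the} global optimum; I would note that Theorem \ref{theorem:MI_theorem1} guarantees this optimum is $\hat{T}$ (up to $p_{XY}$-null sets), so the limiting discriminator is indeed the one that yields the exact mutual information, closing the argument.
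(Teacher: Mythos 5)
Your proposal is correct in outline, but it takes a genuinely different route from the paper's own proof. You argue by continuity: since $\bigl(f^{*}\bigr)^{\prime}$ and $\log$ are continuous, the convergence $T^{(n)} \to \hat{T}$ gives pointwise convergence of the integrand, and a dominated-convergence (or uniform-integrability) argument then carries the limit through the expectation. The paper instead performs a perturbation analysis: writing $\delta^{(n)} = \hat{T} - T^{(n)}$ and $R^{(n)} = \bigl(f^{*}\bigr)^{\prime}\bigl(T^{(n)}\bigr)$ for the density ratio estimated at iteration $n$, it Taylor-expands $\bigl(f^{*}\bigr)^{\prime}$ to first order around $\hat{T}$, so that $\hat{R} - R^{(n)} \simeq \delta^{(n)}\bigl(f^{*}\bigr)^{\prime\prime}\bigl(\hat{T}-\delta^{(n)}\bigr)$, and then expands the logarithm to obtain
\begin{equation}
\hat{i}_{n,fDIME}(X;Y) \simeq i(X;Y) - \delta^{(n)}\cdot \frac{\mathrm{d}}{\mathrm{d}T}\log\bigl(\bigl(f^{*}\bigr)^{\prime}(T)\bigr)\biggr|_{T=\hat{T}},
\end{equation}
where $i(X;Y)$ denotes the instantaneous mutual information $\log\bigl(p_{XY}/(p_X p_Y)\bigr)$; convergence of the estimator then follows by letting $\delta^{(n)}\to 0$ and taking expectations. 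What each approach buys is different. Yours is the more standard measure-theoretic argument and correctly isolates the one genuine technical gap — the need for an integrable dominating function (or a boundedness assumption on the density ratio) to justify exchanging limit and expectation — a gap which the paper silently elides when it asserts that the limit \emph{holds also for the expected values}. The paper's expansion, on the other hand, yields an explicit leading-order expression for the estimation error as a function of $\delta^{(n)}$ and of the generator $f$; this is precisely what the surrounding text exploits to discuss the numerical bias of different $f$-DIME estimators when training has not fully converged, an insight your continuity argument does not provide. In short, your proof establishes the lemma's statement equally well, but it delivers a qualitative limit where the paper delivers a quantitative error term.
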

The proof of Lemma \ref{lemma:MI_convergence}, which is described in Sec. \ref{sec:mi_appendix}, provides some theoretical grounding for the behaviour of MI estimators when the training does not converge to the optimal density ratio. Moreover, it also offers insights about the impact of different functions $f$ on the numerical bias.

It is important to remark the difference between the classical VLBs estimators that follow a discriminative approach and the DIME-like estimators. They both achieve the goal through a discriminator network that outputs a function of the density ratio. However, the former models exploit the variational representation of the MI (or the KL) and, at the equilibrium, use the discriminator output directly in one of the value functions reported in Sec. \ref{sec:mi_related}. The latter, instead, use the variational representation of \textit{any} $f$-divergence to extract the density ratio estimate directly from the discriminator output.

In the upcoming subsections, we analyze the variance of $f$-DIME and we propose a training strategy for the implementation of Theorem \ref{theorem:MI_theorem1}.
In our experiments, we consider the cases when $f$ is the generator of: a) the KL divergence; b) the GAN divergence; c) the Hellinger distance squared. We report in Sec.~\ref{subsec:mi_appendix_DIME_examples} the value functions used for training and the mathematical expressions of the resulting DIME estimators.

\subsection{Variance analysis}
\label{subsec:mi_variance}
In this subsection, we assume that the ground truth density ratio $\hat{R}(\mathbf{x},\mathbf{y})$ exists and corresponds to the density ratio in \eqref{eq:MI_density_ratio_1}. We also assume that the optimum discriminator $\hat{T}(\mathbf{x},\mathbf{y})$ is known and already obtained (e.g. via a NN parametrization). 

We define $p^M_{XY}(\mathbf{x},\mathbf{y})$ and $p^N_{X}(\mathbf{x})p^N_Y(\mathbf{y})$ as the empirical distributions corresponding to $M$ i.i.d. samples from the true joint distribution $p_{XY}$ and to $N$ i.i.d. samples from the product of marginals $p_Xp_Y$, respectively. The randomness of the sampling procedure and the batch sizes $M,N$ influence the variance of variational MI estimators. In the following, we prove that under the previous assumptions, $f$-DIME exhibits better performance in terms of variance w.r.t. some variational estimators with a discriminative approach, e.g., MINE and NWJ.

The partition function estimation $\mathbb{E}_{p^N_X p^N_Y}[\hat{R}]$ represents the major issue when dealing with variational MI estimators. Indeed, they comprise the evaluation of two terms (using the given density ratio), and the partition function is the one responsible for the variance growth. The authors in \cite{Song2020} characterized the variance of both MINE and NWJ estimators, in particular, they proved that the variance scales exponentially with the ground-truth MI $\forall M \in \mathbb{N}$
\begin{align}
\label{eq:MI_exponentially_increasing_variance}
\text{Var}_{p_{XY},p_Xp_Y}\bigl[ I^{M,N}_{NWJ}\bigr] \geq & \frac{e^{I(X;Y)}-1}{N} \nonumber \\
\lim_{N \to \infty} N\text{Var}_{p_{XY},p_Xp_Y}\bigl[ I^{M,N}_{MINE}\bigr] \geq & e^{I(X;Y)}-1,
\end{align}
where
\begin{align}
I^{M,N}_{NWJ}:= \mathbb{E}_{p^M_{XY}}[\log \hat{R} +1] - \mathbb{E}_{p^N_Xp^N_Y}[\hat{R}]   \nonumber \\
 I^{M,N}_{MINE}:= \mathbb{E}_{p^M_{XY}}[\log \hat{R}] - \log \mathbb{E}_{p^N_Xp^N_Y}[\hat{R}].   
\end{align}
To reduce the impact of the partition function on the variance, the authors of \cite{Song2020} also proposed to clip the density ratio between $e^{-\tau}$ and $e^{\tau}$ leading to an estimator (SMILE) with bounded partition variance. However, also the variance of the log-density ratio $\mathbb{E}_{p^M_{XY}}[\log \hat{R}]$ influences the variance of the variational estimators, since it is clear that
\begin{equation}
\text{Var}_{p_{XY},p_Xp_Y}\bigl[ I^{M,N}_{VLB}\bigr] \geq \text{Var}_{p_{XY}}\bigl[ \mathbb{E}_{p^M_{XY}}[\log \hat{R}] \bigr],
\label{eq:MI_lower_bound_variance}
\end{equation}
a result that holds for any type of MI estimator based on a VLB.

The great advantage of $f$-DIME is to avoid the partition function estimation step, significantly reducing the variance of the estimator. Under the same initial assumptions, from \eqref{eq:MI_lower_bound_variance} we can immediately conclude that
\begin{equation}
\text{Var}_{p_{XY}}\bigl[I^{M}_{fDIME}\bigr] \leq \text{Var}_{p_{XY},p_Xp_Y}\bigl[ I^{M,N}_{VLB}\bigr],
\end{equation}
where
\begin{equation}
\label{eq:MI_f_dime_estimator}
I^{M}_{fDIME}:= \mathbb{E}_{p^M_{XY}}[\log \hat{R}]    
\end{equation}
is the Monte Carlo implementation of $f$-DIME. Hence, the $f$-DIME class of models has lower variance than any VLB based estimator (MINE, NWJ, SMILE, etc.). 

The following Lemma provides an upper bound on the variance of the $f$-DIME estimator. Notice that such result holds for any type of value function $\mathcal{J}_{f}$, so it is not restrictive to the KL divergence.

\begin{lemma}
\label{lemma:MI_Lemma2}
Let $\hat{R} = p_{XY}(\mathbf{x},\mathbf{y})/ (p_{X}(\mathbf{x}) p_Y(\mathbf{y}))$ be the density ratio and assume $\text{Var}_{p_{XY}}[\log \hat{R}]$ exists. Let $p^M_{XY}$ be the empirical distribution of $M$ i.i.d. samples from $p_{XY}$ and let $\mathbb{E}_{p^M_{XY}}$ denote the sample average over $p^M_{XY}$. Then, it follows that

\begin{equation}
\text{Var}_{p_{XY}}\bigl[\mathbb{E}_{p^M_{XY}} [\log \hat{R}] \bigr] \leq \frac{ 4H^2(p_{XY},p_Xp_Y)\biggl|\biggl|\frac{p_{XY}}{p_Xp_Y}\biggr|\biggr|_{\infty}-I^2(X;Y)}{M}
\end{equation}
where $H^2$ is the Hellinger distance squared.
\end{lemma}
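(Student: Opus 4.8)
The plan is to exploit the i.i.d.\ structure of the sample average and then reduce the whole statement to a single pointwise analytic inequality. First, since $\mathbb{E}_{p^M_{XY}}[\log \hat{R}] = \frac{1}{M}\sum_{i=1}^M \log \hat{R}(\mathbf{x}_i,\mathbf{y}_i)$ with the $M$ pairs drawn independently from $p_{XY}$, the variance of the sample mean factorizes as $\text{Var}_{p_{XY}}\bigl[\mathbb{E}_{p^M_{XY}}[\log\hat R]\bigr] = \frac{1}{M}\text{Var}_{p_{XY}}[\log \hat R]$. Hence it suffices to bound the single-sample variance by $4 H^2(p_{XY},p_Xp_Y)\,\|r\|_\infty - I^2(X;Y)$, where I write $r = \hat R = p_{XY}/(p_Xp_Y)$ for brevity.

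Second, I would write the variance as the difference between the second moment and the squared mean, $\text{Var}_{p_{XY}}[\log\hat R] = \mathbb{E}_{p_{XY}}[(\log \hat R)^2] - (\mathbb{E}_{p_{XY}}[\log\hat R])^2$, and use the identity $\mathbb{E}_{p_{XY}}[\log\hat R] = I(X;Y)$ from \eqref{eq:fundamentals_MI} to replace the second term exactly by $I^2(X;Y)$. The problem then collapses to establishing the second-moment bound $\mathbb{E}_{p_{XY}}[(\log \hat R)^2] \le 4 H^2(p_{XY}, p_X p_Y)\,\|r\|_\infty$, so the $-I^2$ contribution is free and the entire difficulty concentrates in the second moment.

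Third, for the second moment I would change measure using $p_{XY} = r\, p_X p_Y$, so that $\mathbb{E}_{p_{XY}}[(\log r)^2] = \int p_X p_Y\, r (\log r)^2$, and recognize from \eqref{eq:HD-distance} that $4H^2 = 2\int p_X p_Y(\sqrt r - 1)^2$ after factoring $\sqrt{p_{XY}} - \sqrt{p_Xp_Y} = \sqrt{p_Xp_Y}\,(\sqrt r - 1)$. The key technical step is the pointwise inequality $r(\log r)^2 \le 4\max(1,r)(\sqrt r - 1)^2$ valid for all $r>0$; substituting $t=\sqrt r$ this splits into $\log t \le t-1$ for $t\ge 1$ and $1 - t + t\log t \ge 0$ for $t\le 1$, both of which follow from elementary monotonicity arguments on the relevant auxiliary functions. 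Integrating against $p_X p_Y$ and bounding $\max(1,r) \le \|r\|_\infty$, which is legitimate since $\int p_X p_Y\, r = 1$ forces $\|r\|_\infty \ge 1$, yields a second-moment bound of the claimed form $c\,H^2\,\|r\|_\infty$, from which the statement follows.

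The main obstacle I anticipate is precisely this third step: isolating the correct pointwise inequality and tracking the multiplicative constant carefully. The small-$r$ region makes $(\log r)^2/(\sqrt r - 1)^2$ unbounded, and only the weighting factor $r$ coming from the change of measure, together with the boundedness $r \le \|r\|_\infty$, tames it. The split into the two regimes $r \ge 1$ and $r \le 1$, the use of $\|r\|_\infty \ge 1$, and the reconciliation of the Hellinger normalization in \eqref{eq:HD-distance} with the final constant are the delicate bookkeeping points on which the sharp statement rests.
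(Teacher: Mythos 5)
Your proof follows the same skeleton as the paper's: the i.i.d.\ factorization $\text{Var}_{p_{XY}}\bigl[\mathbb{E}_{p^M_{XY}}[\log\hat R]\bigr] = \text{Var}_{p_{XY}}[\log\hat R]/M$, the decomposition of the single-sample variance into second moment minus $I^2(X;Y)$, and a Hellinger-type bound on the second moment. The genuine difference is in the last step: the paper simply invokes Lemma~8.3 of \cite{Ghosal2000}, whereas you prove the second-moment bound from scratch through the change of measure $p_{XY} = \hat R\,p_Xp_Y$ and the pointwise inequality $r(\log r)^2 \le 4\max(1,r)(\sqrt r - 1)^2$. That inequality is true, your two-regime reduction (to $\log t \le t-1$ for $t\ge 1$ and $1-t+t\log t\ge 0$ for $t\le 1$) is correct, and so is the observation that $\|\hat R\|_\infty\ge 1$; this part is a clean, self-contained replacement for the citation.

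The gap is in the constant, i.e., exactly the ``delicate bookkeeping'' you deferred. Combining your pointwise bound with your own (correct) identity $\int p_Xp_Y\bigl(\sqrt{\hat R}-1\bigr)^2\diff\mathbf{x}\diff\mathbf{y} = 2H^2(p_{XY},p_Xp_Y)$ under the paper's normalization \eqref{eq:HD-distance} yields
\begin{equation*}
\mathbb{E}_{p_{XY}}\bigl[(\log\hat R)^2\bigr] \le 4\,\|\hat R\|_\infty\int p_Xp_Y\bigl(\sqrt{\hat R}-1\bigr)^2\diff\mathbf{x}\diff\mathbf{y} = 8\,H^2(p_{XY},p_Xp_Y)\,\|\hat R\|_\infty,
\end{equation*}
which is twice the $4H^2\|\hat R\|_\infty$ you need, so ``from which the statement follows'' does not hold. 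Moreover the factor of $2$ cannot be recovered: a Taylor expansion at $r=1$ forces the constant $4$ in the pointwise inequality, and the loss is not an artifact of the pointwise method. Indeed, take $p_{XY}=(1+\epsilon f)\,p_Xp_Y$ with $f(\mathbf{x},\mathbf{y})=g(\mathbf{x})h(\mathbf{y})$ bounded and $\int g\,p_X\diff\mathbf{x}=\int h\,p_Y\diff\mathbf{y}=0$ (so the marginals are preserved); then $\mathbb{E}_{p_{XY}}[(\log\hat R)^2]=\epsilon^2\int f^2p_Xp_Y+O(\epsilon^3)$ while $4H^2\|\hat R\|_\infty = \tfrac{1}{2}\epsilon^2\int f^2p_Xp_Y+O(\epsilon^3)$ and $I^2(X;Y)=O(\epsilon^4)$, so the bound with the printed constant is violated. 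What your derivation actually exposes is that the lemma's constant rests on a normalization mismatch: \cite{Ghosal2000} defines the squared Hellinger distance without the factor $\tfrac{1}{2}$ appearing in \eqref{eq:HD-distance}, so their $4h^2\|p/q\|_\infty$ reads $8H^2\|\hat R\|_\infty$ in the paper's notation. In short, your argument is a correct, self-contained proof of the inequality with constant $8$, but it does not --- and cannot --- establish the statement with constant $4$ as printed.
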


Lemma \ref{lemma:MI_finite_variance_gaus} characterizes the variance of the estimator in \eqref{eq:MI_f_dime_estimator} when $X$ and $Y$ are correlated Gaussian random variables. We found out that the variance is finite and we use this result to verify in the experiments that the variance of $f$-DIME does not diverge for high values of MI.

\begin{lemma}
\label{lemma:MI_finite_variance_gaus}
Let $\hat{R}$ be the optimal density ratio and let  $X\sim \mathcal{N}(0,\sigma_X^2)$ and $N\sim \mathcal{N}(0,\sigma_N^2)$ be uncorrelated scalar Gaussian random variables such that $Y=X+N$. Assume $\text{Var}_{p_{XY}}[\log \hat{R}]$ exists. Let $p^M_{XY}$ be the empirical distribution of $M$ i.i.d. samples from $p_{XY}$ and let $\mathbb{E}_{p^M_{XY}}$ denote the sample average over $p^M_{XY}$. Then, it holds that
\begin{equation}
\label{eq:MI_variance_f_dime}
\text{Var}_{p_{XY}}\bigl[\mathbb{E}_{p^M_{XY}} [\log \hat{R}] \bigr] = \frac{ 1-e^{-2I(X;Y)}}{M}.
\end{equation}
\end{lemma}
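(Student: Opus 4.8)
The plan is to specialize the general variance identity to the scalar Gaussian channel $Y = X + N$ and exploit the fact that, in the Gaussian case, the log-density ratio $\log \hat{R}(\mathbf{x},\mathbf{y})$ is a quadratic form in the jointly Gaussian pair $(X,Y)$. First I would write down the optimal density ratio explicitly. Since $X \sim \mathcal{N}(0,\sigma_X^2)$, $N\sim \mathcal{N}(0,\sigma_N^2)$ and $Y=X+N$, the pair $(X,Y)$ is jointly Gaussian with zero mean, $\text{Var}(Y)=\sigma_X^2+\sigma_N^2$, and correlation coefficient $\rho$ satisfying $\rho^2 = \sigma_X^2/(\sigma_X^2+\sigma_N^2)$. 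The mutual information is the standard $I(X;Y) = -\tfrac12\log(1-\rho^2)$, so $e^{-2I(X;Y)} = 1-\rho^2$, which is already the quantity appearing on the RHS of \eqref{eq:MI_variance_f_dime}. Thus the target reduces to showing $\text{Var}_{p_{XY}}[\log\hat{R}] = \rho^2$ (the factor $1/M$ coming immediately from i.i.d. averaging, since $\text{Var}[\mathbb{E}_{p^M_{XY}}[\cdot]] = \tfrac{1}{M}\text{Var}_{p_{XY}}[\cdot]$).

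Next I would compute $\log\hat{R}$ in closed form. Writing the bivariate Gaussian density ratio, one gets
\begin{equation}
\log \hat{R}(x,y) = -\frac12\log(1-\rho^2) - \frac{\rho}{1-\rho^2}\biggl(\frac{\rho(x^2+y^2)}{2\sigma_X\sigma_Y} - \frac{xy}{\sigma_X\sigma_Y}\biggr),
\end{equation}
i.e. a constant plus a quadratic form in the standardized variables $\tilde{x}=x/\sigma_X$, $\tilde{y}=y/\sigma_Y$. The constant does not affect the variance, so the problem collapses to computing the variance of a quadratic form in a standardized jointly Gaussian vector with correlation $\rho$. I would then invoke the theorem on the expectation of quadratic forms already used in the excerpt (the identity $\mathbb{E}[X^T\mathbf{A}X]=\mu^T\mathbf{A}\mu+\Tr(\mathbf{A}\Sigma_X)$) together with the analogous second-moment / variance formula for Gaussian quadratic forms, $\text{Var}[\mathbf{z}^T\mathbf{A}\mathbf{z}] = 2\Tr\bigl((\mathbf{A}\Sigma)^2\bigr)$ for zero-mean $\mathbf{z}\sim\mathcal{N}(0,\Sigma)$. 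Identifying the appropriate matrix $\mathbf{A}$ from the quadratic form above and the covariance $\Sigma = \bigl[\begin{smallmatrix}1&\rho\\\rho&1\end{smallmatrix}\bigr]$ of the standardized pair, the trace computation should yield exactly $\rho^2$.

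The main obstacle I anticipate is the bookkeeping in the quadratic-form variance computation: correctly assembling $\mathbf{A}$ from the coefficients of $x^2$, $y^2$, and $xy$, and then carrying out $2\Tr((\mathbf{A}\Sigma)^2)$ without sign or factor errors, since the cross term and the diagonal terms partially cancel. A cleaner route that I would try in parallel, to cross-check, is to note that $\log\hat{R} = i(X;Y) - $ (constant) where the \emph{information density} $i(X;Y)$ for Gaussians is known to have variance $\rho^2$; one can verify this directly by the substitution $Y = \rho\, \tilde{X} + \sqrt{1-\rho^2}\,W$ with $\tilde{X},W$ independent standard normals, reducing everything to moments of products of independent $\mathcal{N}(0,1)$ variables (so only $\mathbb{E}[Z^2]=1$, $\mathbb{E}[Z^4]=3$ are needed). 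Either way, once $\text{Var}_{p_{XY}}[\log\hat{R}]=\rho^2=1-e^{-2I(X;Y)}$ is established, dividing by $M$ gives \eqref{eq:MI_variance_f_dime}, completing the proof.
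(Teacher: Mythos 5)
Your proposal is correct, and its primary route differs from the paper's in a worthwhile way. The paper writes the ratio in conditional/marginal form, $\hat{R} = p_N(y-x)/p_Y(y)$, so that $\log\hat{R} = \log(\sigma_Y/\sigma_N) + \tfrac{y^2}{2\sigma_Y^2} - \tfrac{(y-x)^2}{2\sigma_N^2}$, and then computes $\mathrm{Var}[\log\hat{R}]$ by brute-force expansion of the second moment, invoking the Gaussian kurtosis ($\mathbb{E}[Z^4]=3$) and the cross-moment of the correlated pair $(Y, Y-X)$; this is exactly the "cross-check" route you sketch at the end via $Y=\rho\tilde{X}+\sqrt{1-\rho^2}\,W$. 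Your main route instead works with the joint/product-of-marginals form of the bivariate Gaussian ratio and dispatches the variance with the quadratic-form identity $\mathrm{Var}[\mathbf{z}^T\mathbf{A}\mathbf{z}] = 2\Tr\bigl((\mathbf{A}\Sigma)^2\bigr)$. This is cleaner and less error-prone than the paper's bookkeeping: with standardized variables one gets $\mathbf{A}\Sigma = \bigl(\begin{smallmatrix}0 & \rho/2\\ \rho/2 & 0\end{smallmatrix}\bigr)$, hence $2\Tr((\mathbf{A}\Sigma)^2)=\rho^2$ in one line, and as a bonus $\Tr(\mathbf{A}\Sigma)=0$ confirms that the mean of the quadratic part vanishes, recovering $I(X;Y)=-\tfrac12\log(1-\rho^2)$ without further computation. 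One small slip in your displayed formula for $\log\hat{R}$: the $x^2$ and $y^2$ terms should carry denominators $\sigma_X^2$ and $\sigma_Y^2$ respectively (not $\sigma_X\sigma_Y$ for both); since you immediately pass to the standardized variables $\tilde{x}=x/\sigma_X$, $\tilde{y}=y/\sigma_Y$, this typo does not affect the argument. The final step, $\mathrm{Var}\bigl[\mathbb{E}_{p^M_{XY}}[\log\hat{R}]\bigr]=\tfrac{1}{M}\mathrm{Var}_{p_{XY}}[\log\hat{R}]$, matches the paper exactly.
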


\subsection{Derangement strategy}
\label{subsec:mi_derangements}
The discriminative approach essentially compares expectations over both joint $(\mathbf{x},\mathbf{y}) \sim p_{XY}$ and marginal $(\mathbf{x},\mathbf{y}) \sim p_{X}p_Y$ data points. 
Practically, we have access only to $N$ realizations of the joint distribution $p_{XY}$ and to obtain $N$ marginal samples of $p_Xp_Y$ from $p_{XY}$ a shuffling mechanism for the realizations of $Y$ is typically deployed. A general result in \cite{McAllester2019} shows that failing to sample from the correct marginal distribution would lead to an upper bounded MI estimator.

We study the structure that the permutation law $\sigma(\cdot)$ in Theorem \ref{theorem:MI_theorem1} needs to have when numerically implemented. In particular, we now prove that a naive permutation over the realizations of $Y$ results in an incorrect VLB of the $f$-divergence, causing the MI estimator to be bounded by $\log(N)$, where $N$ is the batch size. To solve this issue, we propose a derangement strategy.  

Let the data points $(\mathbf{x},\mathbf{y}) \sim p_{XY}$ be $N$ pairs $(\mathbf{x}_i, \mathbf{y}_i)$, $\forall i\in \{1,\dots,N\}$. The naive permutation of $\mathbf{y}$, denoted as $\pi(\mathbf{y})$, leads to $N$ new random pairs $(\mathbf{x}_i, \mathbf{y}_j)$, $\forall i$ and $j \in \{ 1, \cdots , N\}$. The idea is that a random naive permutation may lead to at least one pair $(\mathbf{x}_k, \mathbf{y}_k)$, with $k \in \{1,\dots,N\}$, which is actually a sample from the joint distribution. 
Viceversa, the derangement of $\mathbf{y}$, denoted as $\sigma(\mathbf{y})$, leads to $N$ new random pairs $(\mathbf{x}_i, \mathbf{y}_j)$ such that $i \neq j, \forall i$ and $j \in \{ 1, \cdots , N\}$. Such pairs $(\mathbf{x}_i, \mathbf{y}_j), i \neq j$ can effectively be considered samples from $p_X(\mathbf{x})p_Y(\mathbf{y})$.
An example using these definitions is provided in Sec. \ref{subsec:mi_derangement_considerations}.

The following lemma analyzes the relationship between the Monte Carlo approximations of the VLBs of the $f$-divergence $\mathcal{J}_{f}$ in Theorem \ref{theorem:MI_theorem1} using $\pi(\cdot)$ and $\sigma(\cdot)$ as permutation laws.

\begin{lemma}
\label{lemma:MI_lemma4}
Let $(\mathbf{x}_i,\mathbf{y}_i)$, $\forall i\in \{1,\dots,N\}$, be $N$ data points. Let $\mathcal{J}_{f}(T)$ be the value function in \eqref{eq:MI_discriminator_function_f}. Let $\mathcal{J}_{f}^{\pi}(T)$ and $\mathcal{J}_{f}^{\sigma}(T)$ be numerical implementations of $\mathcal{J}_{f}(T)$ using a random permutation and a random derangement of $\mathbf{y}$, respectively. Denote with $K$ the number of points $\mathbf{y}_k$, with $k \in \{1,\dots, N\}$, in the same position after the permutation (i.e., the fixed points). Then
\begin{equation}
\mathcal{J}_{f}^{\pi}(T) \leq \frac{N-K}{N} \mathcal{J}_{f}^{\sigma}(T).
\label{eq:MI_perm_vs_derang}
\end{equation}
\end{lemma}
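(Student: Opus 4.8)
The plan is to compute the numerical value functions $\mathcal{J}_f^{\pi}(T)$ and $\mathcal{J}_f^{\sigma}(T)$ explicitly as finite sample averages and compare them term by term. First I would write out the value function in \eqref{eq:MI_discriminator_function_f} as a Monte Carlo average over the $N$ data points $(\mathbf{x}_i,\mathbf{y}_i)$. The first expectation, $\mathbb{E}_{p_{XY}}[T(\mathbf{x},\mathbf{y})]$, is estimated identically in both $\mathcal{J}_f^{\pi}$ and $\mathcal{J}_f^{\sigma}$ as $\frac{1}{N}\sum_{i=1}^N T(\mathbf{x}_i,\mathbf{y}_i)$, since it does not involve any shuffling of $\mathbf{y}$. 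The difference between the two value functions therefore lies entirely in the second term, which estimates $\mathbb{E}_{p_{XY}}[f^*(T(\mathbf{x},\sigma(\mathbf{y})))]$.

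The key observation is how the fixed points of the permutation interact with the conjugate term. Under the random permutation $\pi$, the second term reads $\frac{1}{N}\sum_{i=1}^N f^*(T(\mathbf{x}_i,\mathbf{y}_{\pi(i)}))$. Among these $N$ summands, exactly $K$ correspond to fixed points $\pi(k)=k$, contributing $f^*(T(\mathbf{x}_k,\mathbf{y}_k))$ — i.e. the conjugate evaluated at genuine joint samples — while the remaining $N-K$ summands are evaluated at true marginal (deranged) pairs. Under the derangement $\sigma$, all $N$ summands are of the marginal type, giving $\frac{1}{N}\sum_{i=1}^N f^*(T(\mathbf{x}_i,\mathbf{y}_{\sigma(i)}))$. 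So I would split the $\pi$-sum into its fixed-point block and its non-fixed-point block, and treat the non-fixed block as an $(N-K)$-term average that structurally matches $(N-K)/N$ times the full $\sigma$-average.

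The step I expect to require the most care is the inequality itself: since the value function $\mathcal{J}_f$ is being \emph{subtracted} in the conjugate term, and the goal is an upper bound, I need to argue about the sign and magnitude of the fixed-point contributions. The natural route is to use the nonnegativity property of the relevant terms (the conjugate $f^*$ evaluated at the optimal or near-optimal discriminator, and the fact that the density ratio at joint samples exceeds that at marginal samples), so that dropping or bounding the $K$ fixed-point contributions moves the estimate in the claimed direction. Concretely, I would show that the first term of $\mathcal{J}_f^{\pi}$ can be scaled by $(N-K)/N$ and matched against the non-fixed part, while the fixed-point summands in the conjugate term, being nonnegative, only decrease $\mathcal{J}_f^{\pi}$ relative to $\frac{N-K}{N}\mathcal{J}_f^{\sigma}$. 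The main obstacle is being precise about which quantities are provably nonnegative without assuming the discriminator is already at its optimum $\hat{T}$; I would handle this by invoking the structure of $f^*$ and $f$ (convexity, $f(1)=0$) established in Theorem \ref{theorem:MI_theorem1}, and, if needed, restrict the comparison to the regime where $T$ reflects the density ratio ordering, so that evaluations at joint pairs dominate those at marginal pairs and the $K$-term correction yields exactly the factor $(N-K)/N$.
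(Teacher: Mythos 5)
Your decomposition is the right one and matches the paper's: split the permuted conjugate sum into the $K$ fixed-point terms and the $N-K$ genuinely deranged terms, and recognize that the deranged block, together with the corresponding $(N-K)$-share of the joint term, is exactly $\frac{N-K}{N}\mathcal{J}_{f}^{\sigma}(T)$. After that matching, what is left over is precisely
\begin{equation}
\mathcal{J}_{f}^{\pi}(T) - \frac{N-K}{N}\mathcal{J}_{f}^{\sigma}(T) \;=\; \frac{1}{N}\sum_{i \in F}\Bigl[T(\mathbf{x}_i,\mathbf{y}_i) - f^{*}\bigl(T(\mathbf{x}_i,\mathbf{y}_i)\bigr)\Bigr],
\end{equation}
where $F$ is the set of fixed points, and the whole lemma reduces to showing this quantity is nonpositive \emph{for arbitrary} $T$.

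This is where your proposal has a genuine gap. The justifications you offer — nonnegativity of $f^*$ at an optimal or near-optimal discriminator, the density-ratio ordering between joint and marginal pairs, or restricting to a ``regime'' where $T$ reflects that ordering — are the wrong tools. Nonnegativity of $f^*$ is neither true in general (for the Hellinger case $f^*(t)=t/(1-t)$ is negative for $t<0$) nor sufficient: even where $f^*\geq 0$, dropping the $-f^*$ fixed-point terms still leaves the (typically positive) $T(\mathbf{x}_i,\mathbf{y}_i)$ contributions in the first sum unmatched. And conditioning on near-optimality of $T$ would prove a strictly weaker statement than the lemma, which must hold at every training iteration, not just at convergence. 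The fact that closes the proof is a one-line consequence of the Fenchel conjugate definition together with $f(1)=0$: for every $u\in\mathbb{R}$,
\begin{equation}
f^{*}(u) \;=\; \sup_{t}\,\{ut - f(t)\} \;\geq\; u\cdot 1 - f(1) \;=\; u,
\end{equation}
so $T - f^{*}(T) \leq 0$ pointwise, with no assumption whatsoever on the discriminator. Each fixed-point summand is therefore nonpositive, which yields $\mathcal{J}_{f}^{\pi}(T) \leq \frac{N-K}{N}\mathcal{J}_{f}^{\sigma}(T)$ directly. You cite the correct hypotheses (convexity and $f(1)=0$) but never deploy them in this form, and without this step your argument either fails or proves less than claimed.
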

Lemma \ref{lemma:MI_lemma4} practically asserts that the value function $\mathcal{J}_{f}^{\pi}(T)$ evaluated via a naive permutation of the data is not a valid VLB of the $f$-divergence, and thus, there is no guarantee on the optimality of the discriminator's output. 
An interesting mathematical connection can be obtained when studying $\mathcal{J}_{f}^{\pi}(T)$ as a sort of variational skew-divergence estimator \cite{RenyiCL}.

The following theorem states that in the case of the KL divergence, the maximum of $\mathcal{J}_{f}^{\pi}(D)$ is attained for a  value of the discriminator that is not exactly the density ratio (as it should be from \eqref{eq:MI_optimal_ratio_KL}, see Sec. \ref{subsec:mi_appendix_DIME_examples}).

\begin{theorem}
\label{theorem:MI_permutationsBound}
Let the discriminator $D(\cdot)$ be with enough capacity. Let $N$ be the batch size and $f$ be the generator of the KL divergence. Let $\mathcal{J}_{KL}^{\pi}(D)$ be defined as
\begin{equation}
\mathcal{J}_{KL}^{\pi}(D) =  \mathbb{E}_{(\mathbf{x},\mathbf{y}) \sim p_{XY}(\mathbf{x},\mathbf{y})}\biggl[\log\biggl(D\bigl(\mathbf{x},\mathbf{y}\bigr)\biggr)-f^*\biggl(\log\biggl(D\bigl(\mathbf{x},\pi(\mathbf{y})\bigr)\biggr)\biggr)\biggr].
\label{eq:MI_discriminator_function_perm_KL}
\end{equation}

Denote with $K$ the number of indices in the same position after the permutation (i.e., the fixed points), and with $R(\mathbf{x},\mathbf{y})$ the density ratio in \eqref{eq:MI_density_ratio_1}.
Then,
\begin{equation}
\label{eq:MI_optimal_ratio_perm_KL}
\hat{D}(\mathbf{x},\mathbf{y}) =\arg \max_D \mathcal{J}_{KL}^{\pi}(D) = \frac{NR(\mathbf{x},\mathbf{y})}{KR(\mathbf{x},\mathbf{y})+N-K}.
\end{equation}
\end{theorem}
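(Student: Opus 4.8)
The plan is to reduce the optimization $\max_D \mathcal{J}_{KL}^{\pi}(D)$ to a pointwise (calculus-of-variations) problem, exactly as in the proof of Theorem~\ref{theorem:MI_theorem1}, after first rewriting the permuted expectation as a mixture of a joint and a marginal expectation. First I would substitute the Fenchel conjugate of the KL generator, $f^*(t) = e^{t}-1$, so that $f^*(\log D) = D-1$ and the value function in \eqref{eq:MI_discriminator_function_perm_KL} becomes logarithmic in $D$ in its first term and affine in $D$ in its second term. This makes the subsequent pointwise maximization elementary.

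The conceptual heart of the argument is to characterize the distribution of the permuted pairs $(\mathbf{x},\pi(\mathbf{y}))$. Conditioning on the event that the random permutation $\pi$ has exactly $K$ fixed points, the $N$ pairs $(\mathbf{x}_i,\mathbf{y}_{\pi(i)})$ split into two groups: the $K$ pairs with $\pi(i)=i$ are genuine draws from $p_{XY}$, whereas the remaining $N-K$ pairs have $\pi(i)\neq i$ and therefore associate an $\mathbf{x}_i$ with an independently sampled $\mathbf{y}_j$, so they are draws from $p_X p_Y$. Hence the batch average defining the second term converges to the mixture
\begin{equation}
\mathbb{E}_{\pi}\bigl[f^*(\log D(\mathbf{x},\pi(\mathbf{y})))\bigr] = \frac{K}{N}\,\mathbb{E}_{p_{XY}}\bigl[D-1\bigr] + \frac{N-K}{N}\,\mathbb{E}_{p_X p_Y}\bigl[D-1\bigr].
\end{equation}
This is precisely the step that distinguishes a naive permutation $\pi$ from a true derangement $\sigma$ (for which $K=0$), and it explains why the resulting object is a biased surrogate of the $f$-divergence VLB, consistent with Lemma~\ref{lemma:MI_lemma4}.

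With the mixture in hand, I would collect all three contributions into a single integral over $(\mathbf{x},\mathbf{y})$,
\begin{equation}
\mathcal{J}_{KL}^{\pi}(D) = \int \Bigl[ p_{XY}\log D - \Bigl(\tfrac{K}{N} p_{XY} + \tfrac{N-K}{N} p_X p_Y\Bigr)(D-1)\Bigr]\,\diff \mathbf{x}\,\diff \mathbf{y},
\end{equation}
and maximize the integrand pointwise. Writing $a = p_{XY}$ and $b = \tfrac{K}{N}p_{XY} + \tfrac{N-K}{N}p_X p_Y > 0$, the stationarity condition $a/D - b = 0$ gives $\hat D = a/b$; dividing numerator and denominator by $p_X p_Y$ and using $R = p_{XY}/(p_X p_Y)$ yields the claimed
\begin{equation}
\hat D(\mathbf{x},\mathbf{y}) = \frac{N R(\mathbf{x},\mathbf{y})}{K R(\mathbf{x},\mathbf{y}) + N - K}.
\end{equation}
Concavity of $a\log D - b(D-1)$ in $D$ (second derivative $-a/D^2 < 0$) confirms this critical point is the global maximum, and the enough-capacity assumption allows the pointwise optimizer to be realized by the discriminator.

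I expect the main obstacle to be making the mixture decomposition rigorous rather than heuristic: one must argue that, in the non-parametric / large-sample regime, averaging over the $N$ permuted indices coincides with expectation under the two-component mixture weighted by the fixed-point fraction $K/N$. Care is needed because the $N-K$ deranged pairs are only asymptotically independent marginal samples and because $K$ is itself random under a uniform $\pi$; conditioning on $K$, as the statement does, removes the latter difficulty, after which the former reduces to the standard observation that $(\mathbf{x}_i,\mathbf{y}_j)$ with $i\neq j$ has law $p_X p_Y$. The remaining calculus is routine, and the sanity checks $K=0 \Rightarrow \hat D = R$ and $K=N \Rightarrow \hat D = 1$ confirm the formula.
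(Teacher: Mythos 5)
Your proposal is correct and follows essentially the same route as the paper's proof: you decompose the permuted expectation into a mixture of a joint-distribution term (the $K$ fixed points, weight $K/N$) and a product-of-marginals term (the $N-K$ deranged pairs, weight $(N-K)/N$), rewrite the value function as a single Lebesgue integral, and maximize the integrand pointwise, checking concavity via the second derivative. The only difference is presentational — the paper passes explicitly through the Monte Carlo sum and back to integrals, whereas you phrase the same step as conditioning on the fixed-point count — so the two arguments coincide.
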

Although Theorem \ref{theorem:MI_permutationsBound} is stated for the KL divergence, it can be easily extended to any $f$-divergence using Theorem \ref{theorem:MI_theorem1}. Notice that if the number of indices in the same position $K$ is equal to $0$, we fall back into the derangement strategy and we retrieve the density ratio as output. 

When we parametrize $D$ with a NN, we perform multiple training iterations and so we have multiple batches of dimension $N$. This turns into an average analysis on $K$. We report in the Supplementary material section the proof that, on average, $K$ is equal to $1$.
\begin{lemma}[see \cite{alon2016probabilistic}]
\label{lemma:MI_M=1}
The average number of fixed points in a random permutation $\pi(\cdot)$ is equal to 1.
\end{lemma}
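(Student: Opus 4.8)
The statement to prove is Lemma~\ref{lemma:MI_M=1}: the average number of fixed points in a random permutation $\pi(\cdot)$ of $N$ elements is equal to $1$. The plan is to use the standard technique of linearity of expectation applied to indicator random variables, which bypasses the need to count derangements or manipulate the full distribution of fixed points.

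First I would set up indicator variables. For a permutation $\pi$ chosen uniformly at random from the symmetric group $S_N$, let $X_i$ denote the indicator of the event that index $i$ is a fixed point, i.e., $X_i = \mathbbm{1}_{\{\pi(i)=i\}}$ for each $i\in\{1,\dots,N\}$. The total number of fixed points is then $K = \sum_{i=1}^{N} X_i$. The key observation is that, by symmetry of the uniform distribution over $S_N$, each index is equally likely to be mapped to any of the $N$ positions, so $\mathbb{E}[X_i] = P(\pi(i)=i) = 1/N$. This probability count follows because there are $(N-1)!$ permutations fixing a given index $i$ out of $N!$ total permutations.

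Then I would invoke linearity of expectation, which holds regardless of the dependence between the $X_i$:
\begin{equation}
\mathbb{E}[K] = \mathbb{E}\biggl[\sum_{i=1}^{N} X_i\biggr] = \sum_{i=1}^{N} \mathbb{E}[X_i] = \sum_{i=1}^{N}\frac{1}{N} = 1.
\end{equation}
This immediately yields the claim that the expected number of fixed points is $1$, independent of $N$.

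There is essentially no hard obstacle here, since the argument is a textbook application of linearity of expectation; the only subtlety worth flagging is that the indicator events $\{\pi(i)=i\}$ are \emph{not} independent, so one cannot naively factor a joint distribution, but linearity of expectation sidesteps this entirely because it never requires independence. I would note this point explicitly to reassure the reader, and then cite \cite{alon2016probabilistic} as the reference for the combinatorial fact, consistent with the statement of the lemma. This result is what justifies setting $K=1$ on average in Theorem~\ref{theorem:MI_permutationsBound}, collapsing the optimal discriminator expression toward the true density ratio in the permutation-based training analysis.
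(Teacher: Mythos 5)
Your proof is correct and follows essentially the same route as the paper's: define indicator variables $X_i = \mathbbm{1}_{\{\pi(i)=i\}}$, note $\mathbb{E}[X_i] = 1/N$ by symmetry, and apply linearity of expectation to conclude the expected number of fixed points is $1$. Your additional remark that the indicators are dependent but linearity does not require independence is a fine clarification, though the paper's proof omits it.
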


From the previous results, it follows immediately that the estimator obtained using a naive permutation strategy is biased and upper bounded by a function of the batch size $N$.  
\begin{corollary}[Permutation bound]
\label{corollary:MI_permutationsUpperBound}
Let KL-DIME be the estimator obtained via iterative optimization of $\mathcal{J}_{KL}^{\pi}(D)$, using a batch of size $N$ every training step. Then,
\begin{equation}
    I_{KL-DIME}^{\pi} := \mathbb{E}_{(\mathbf{x},\mathbf{y}) \sim p_{XY}(\mathbf{x},\mathbf{y})}\biggl[ \log \biggl(\hat{D}(\mathbf{x},\mathbf{y})\biggr) \biggr] < \log(N).
\end{equation}
\end{corollary}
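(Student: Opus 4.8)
The plan is to combine directly the two immediately preceding results, Theorem~\ref{theorem:MI_permutationsBound} and Lemma~\ref{lemma:MI_M=1}, and then close with an elementary pointwise analysis of the optimal discriminator. First I would recall that, by Theorem~\ref{theorem:MI_permutationsBound}, for a batch with $K$ fixed points the maximizer of $\mathcal{J}_{KL}^{\pi}(D)$ is
\begin{equation}
\hat{D}(\mathbf{x},\mathbf{y}) = \frac{N R(\mathbf{x},\mathbf{y})}{K R(\mathbf{x},\mathbf{y})+N-K},
\end{equation}
with $R(\mathbf{x},\mathbf{y})$ the density ratio of \eqref{eq:MI_density_ratio_1}. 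Since the estimator is trained over many mini-batches, each drawn with an independent random permutation of $\mathbf{y}$, the relevant quantity is the average number of fixed points, which by Lemma~\ref{lemma:MI_M=1} equals $K=1$. Substituting $K=1$ yields the effective optimal discriminator $\hat{D}=N R/(R+N-1)$.

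Next I would study $\hat{D}$ as a function of $R\in[0,\infty)$. Writing
\begin{equation}
\log \hat{D}(\mathbf{x},\mathbf{y}) = \log N + \log\frac{R(\mathbf{x},\mathbf{y})}{R(\mathbf{x},\mathbf{y})+N-1},
\end{equation}
the second term is strictly negative whenever $N\geq 2$, because $R+N-1>R\geq 0$ forces the ratio $R/(R+N-1)$ to lie in $[0,1)$ for every finite $R$. Equivalently, $\hat{D}$ is monotonically increasing in $R$ with supremum $\lim_{R\to\infty}\hat{D}=N$, so that $\hat{D}<N$ holds pointwise for all finite density ratios.

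Finally I would take the expectation under $p_{XY}$ of the displayed identity, obtaining
\begin{equation}
I_{KL-DIME}^{\pi} = \log N + \mathbb{E}_{(\mathbf{x},\mathbf{y})\sim p_{XY}(\mathbf{x},\mathbf{y})}\biggl[\log\frac{R(\mathbf{x},\mathbf{y})}{R(\mathbf{x},\mathbf{y})+N-1}\biggr] < \log N,
\end{equation}
where the strict inequality follows since the integrand is strictly negative on a set of positive $p_{XY}$-measure; it suffices that $p_{XY}$ is absolutely continuous with respect to $p_X p_Y$, so that $R<\infty$ almost everywhere.

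The main obstacle is not the final chain of inequalities, which is elementary, but the justification of the substitution $K=1$: strictly speaking $\hat{D}$ depends on the realized number of fixed points in each batch, and one must argue that averaging the per-batch optimizers over the random permutations used throughout training is faithfully captured by the mean value $K=1$ of Lemma~\ref{lemma:MI_M=1}. I would make this precise by observing that the conclusion is in fact robust to the exact fixed-point count: the bound $\hat{D}<N$ holds for every admissible $1\leq K\leq N$ when $N\geq 2$, since $(K-1)R+N-K>0$ for $K<N$ and $\hat{D}=1$ when $K=N$. Hence $I_{KL-DIME}^{\pi}<\log N$ does not hinge on the precise average, and Lemma~\ref{lemma:MI_M=1} only serves to identify the explicit effective discriminator. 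I would also remark that the case $N=1$ is degenerate, giving $\hat{D}=1$ and $I_{KL-DIME}^{\pi}=\log N=0$, so the strict inequality is understood for $N\geq 2$.
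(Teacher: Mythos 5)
Your proposal is correct and follows the same skeleton as the paper's proof: both invoke Theorem~\ref{theorem:MI_permutationsBound} for the closed form $\hat{D}(\mathbf{x},\mathbf{y}) = NR/(KR+N-K)$, invoke Lemma~\ref{lemma:MI_M=1} to set $K=1$, and conclude $\hat{D}<N$ pointwise, hence the $\log N$ bound on the expectation. The difference is in execution, and yours is tighter on two counts. The paper argues via the two limits $N\to\infty$ (giving $\hat{D}\to R$) and $R\to\infty$ (giving $\hat{D}\to N/K$), and from the supremum $N/K=N$ at $K=1$ concludes $\hat{D}<N$; it leaves implicit both the passage from this pointwise bound to the strict inequality on the expectation and the justification for substituting the \emph{average} fixed-point count into a per-batch optimizer. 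Your decomposition $\log\hat{D} = \log N + \log\bigl(R/(R+N-1)\bigr)$ makes the pointwise strict negativity explicit and pins down the condition (absolute continuity of $p_{XY}$ with respect to $p_Xp_Y$, so $R<\infty$ almost everywhere) under which strictness survives the expectation. More importantly, your observation that $(K-1)R+N-K>0$ for every admissible $1\le K<N$, together with $\hat{D}=1$ at $K=N$, shows the conclusion holds for \emph{every} realized fixed-point count when $N\ge 2$, so the result does not hinge on the averaging step at all — this closes the one genuinely loose point in the paper's argument. The $N\ge 2$ caveat you flag is also correct and is not stated in the paper.
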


We report in Fig. \ref{fig:MI_derangementVsPermutation} an example of the difference between the derangement and permutation strategies. The estimate attained by using the permutation mechanism, showed in Fig. \ref{fig:MI_permutations}, demonstrates Theorem \ref{theorem:MI_permutationsBound} and Corollary \ref{corollary:MI_permutationsUpperBound}, as the upper bound corresponding to $\log(N)$ (with $N=128$) is clearly visible. 

\begin{figure}
\centering
\begin{subfigure}{0.5\textwidth}
  \centering
  \includegraphics[scale=0.4]{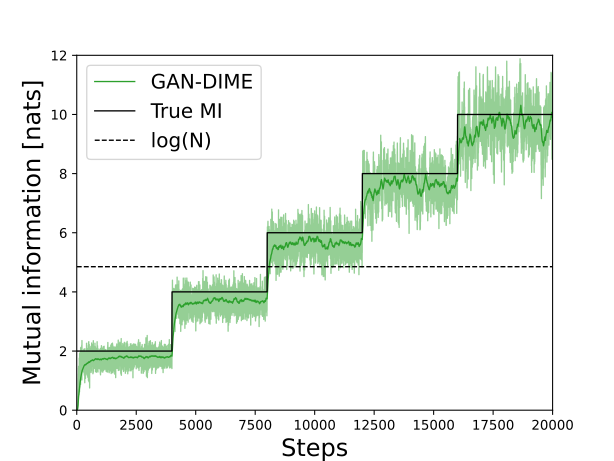}
  \caption{Derangement strategy.}
  \label{fig:MI_derangement}
\end{subfigure}%
\begin{subfigure}{0.5\textwidth}
  \centering
  \includegraphics[scale=0.4]{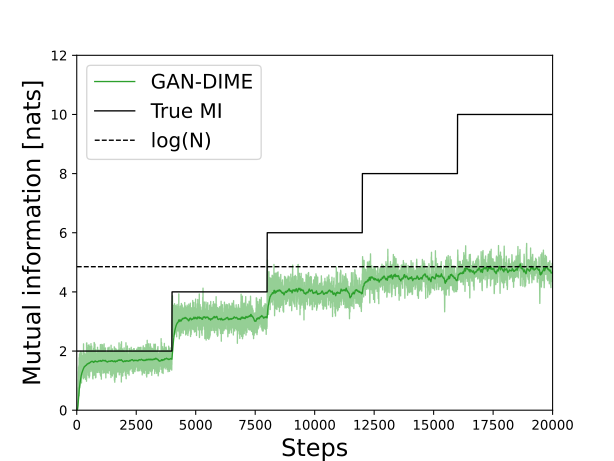}
  \caption{Permutation strategy.}
  \label{fig:MI_permutations}
\end{subfigure}
\caption{MI estimate obtained with derangement and permutation training procedures, for data dimension $d=20$ and batch size $N=128$.}
\label{fig:MI_derangementVsPermutation}
\end{figure}

\section{Experimental results}
\sectionmark{Results}
\label{sec:mi_results}
In this section, we first describe the architectures of the proposed estimators. Then, we outline the data used to estimate the MI, comment on the performance of the discussed estimators in different scenarios, also analyzing their computational complexity. Finally, we report in Sec. \ref{subsec:mi_appendix_consistency_tests} the self-consistency tests \cite{Song2020} over image datasets.

\subsection{Architectures}
\label{subsec:mi_architectures}
To demonstrate the behavior of the state-of-the-art MI estimators, we consider multiple NN \textit{architectures}. The word architecture needs to be intended in a wide-sense, meaning that it represents the NN architecture and its training strategy.
In particular, additionally to the architectures \textbf{joint} \cite{Mine2018} and \textbf{separable} \cite{oord2018representation}, we propose the architecture \textbf{deranged}.
The \textbf{joint} architecture concatenates the samples $\mathbf{x}$ and $\mathbf{y}$ as input of a single NN. Each training step requires $N$ realizations $(\mathbf{x}_i, \mathbf{y}_i)$ drawn from $p_{XY}(\mathbf{x},\mathbf{y})$, for $i \in \left\{ 1, \dotsc, N \right\}$ and $N(N-1)$ samples $(\mathbf{x}_i, \mathbf{y}_j), \forall i,j \in \left\{ 1, \dotsc, N \right\}$, with $ i \neq j$.\\
The \textbf{separable} architecture comprises two neural networks, the former fed in with $N$ realizations of $X$, the latter with $N$ realizations of $Y$. The inner product between the outputs of the two networks is exploited to obtain the MI estimate.

The proposed \textbf{deranged} architecture feeds a neural network with the concatenation of the samples $\mathbf{x}$ and $\mathbf{y}$, similarly to the \textit{joint} architecture. However, the \textit{deranged} one obtains the samples of $p_X(\mathbf{x})p_Y(\mathbf{y})$ by performing a derangement of the realizations $\mathbf{y}$ in the batch sampled from $p_{XY}(\mathbf{x},\mathbf{y})$.
Such diverse training strategy solves the main problem of the \textit{joint} architecture: the difficult scalability to large batch sizes. For large values of $N$, the complexity of the \textit{joint} architecture is $\Omega(N^2)$, while the complexity of the \textit{deranged} one is $\Omega(N)$.

NJEE utilizes a specific architecture, in the following referred to as \textbf{ad hoc}, comprising $2d-1$ neural networks, where $d$ is the dimension of $X$. $I_{NJEE}$ training procedure is supervised: the input of each neural network does not include the $\mathbf{y}$ samples.
All the implementation details are reported in Sec.  \ref{subsec:mi_appendix_experiment_details}.

\subsection{Multivariate linear and non-linear Gaussians}
\label{subsec:mi_stairs}
We benchmark the proposed class of MI estimators on two settings utilized in previous papers \cite{Poole2019, Song2020}.
In the first setting (called \textbf{Gaussian}), a $20$-dimensional Gaussian distribution is sampled to obtain $\mathbf{x}$ and $\mathbf{n}$ samples, independently. Then, $\mathbf{y}$ is obtained as linear combination of $\mathbf{x}$ and $\mathbf{n}$: $\mathbf{y} = \rho \, \mathbf{x} + \sqrt{1-\rho^2} \, \mathbf{n}$, where $\rho$ is the correlation coefficient.
In the second setting (referred to as \textbf{cubic}), the non-linear transformation $\mathbf{y} \mapsto \mathbf{y}^3$ is applied to the Gaussian samples. The true MI follows a staircase shape, where each step is a multiple of $2$ $nats$. Each NN is trained for 4k iterations for each stair step, with a batch size of $64$ samples ($N=64$). The values $d=20$ and $N=64$ are used in the literature to compare MI neural estimators. 
\begin{figure}
	\centering
	\includegraphics[scale=0.24]{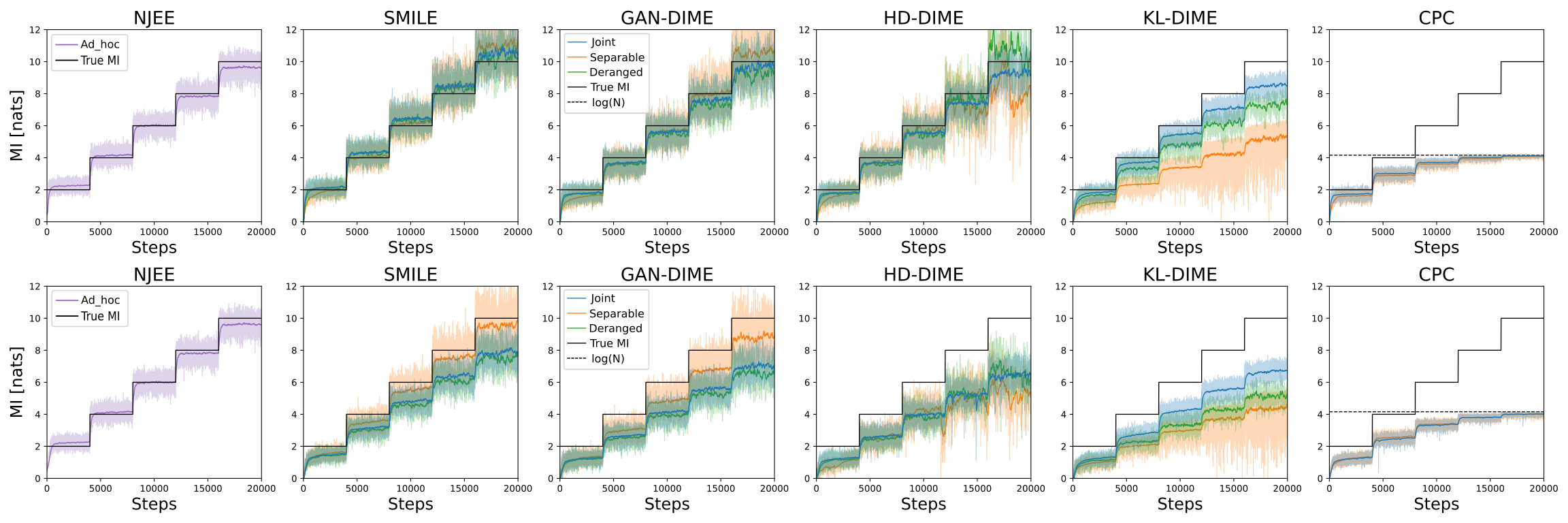}
	\caption{Staircase MI estimation comparison for $d=20$ and $N=64$. The \textit{Gaussian} case is reported in the top row, while the \textit{cubic} case is shown in the bottom row.}
	\label{fig:MI_stairs}
\end{figure} 
The tested estimators are: $I_{NJEE}$, $I_{SMILE}$ ($\tau=1$), $I_{GAN-DIME}$, $I_{HD-DIME}$, $I_{KL-DIME}$, and $I_{CPC}$, as illustrated in Fig. \ref{fig:MI_stairs}. The performance of $I_{MINE}$, $I_{NWJ}$, and $I_{SMILE} (\tau=\infty)$ is reported in Sec. \ref{subsec:mi_appendix_experiment_details}, since these algorithms exhibit lower performance compared to both SMILE and $f$-DIME.
In fact, all the $f$-DIME estimators have lower variance compared to $I_{MINE}$, $I_{NWJ}$, and $I_{SMILE} (\tau = \infty)$, which are characterized by an exponentially increasing variance (see (\ref{eq:MI_exponentially_increasing_variance})).
In particular, all the analyzed estimators belonging to the $f$-DIME class achieve significantly low bias and variance when the true MI is small. Interestingly, for high target MI, different $f$-divergences lead to dissimilar estimation properties.
For large MI, $I_{KL-DIME}$ is characterized by a low variance, at the expense of a high bias and a slow rise time.
Contrarily, $I_{HD-DIME}$ attains a lower bias at the cost of slightly higher variance w.r.t. $I_{KL-DIME}$. 
Diversely, $I_{GAN-DIME}$ achieves the lowest bias, and a variance comparable to $I_{HD-DIME}$.

The MI estimates obtained with $I_{SMILE}$ and $I_{GAN-DIME}$ appear to possess similar behavior, although the value functions of SMILE and GAN-DIME are structurally different.
The reason why $I_{SMILE}$ is almost equivalent to $I_{GAN-DIME}$ resides in their training strategy, since they both minimize the same $f$-divergence.
Looking at the implementation code of SMILE \cite{SMILE_github}, in fact, the network's training is guided by the gradient computed using the Jensen-Shannon (JS) divergence (a linear transformation of the GAN divergence). 

Given the trained network, the clipped objective function proposed in \cite{Song2020} is only used to compute the MI estimate, since when (\ref{eq:MI_SMILE}) is used to train the network, the MI estimate diverges (see Fig. \ref{fig:MI_SMILE_no_trick} in Sec. \ref{subsec:mi_appendix_experiment_details}). 
However, with the proposed class of $f$-DIME estimators we show that during the estimation phase the partition function (clipped in \cite{Song2020}) is not necessary to obtain the MI estimate.

$I_{NJEE}$ obtains an estimate for $d=20$ and $N=64$ that has slightly higher bias than $I_{GAN-DIME}$ for large MI values and slightly higher variance than $I_{KL-DIME}$.
$I_{CPC}$ is characterized by high bias and low variance. A schematic comparison between all the MI estimators is reported in Tab. \ref{tab:MI_summary_estimators} in Sec. \ref{subsec:mi_appendix_experiment_details}.

When $N$ and $d$ vary, the class of $f$-DIME estimators proves its robustness (i.e., maintains low bias and variance), as represented in Fig. \ref{fig:MI_stairs_d5_bs64} and \ref{fig:MI_stairs_d20_bs1024}. Differently, the behavior of $I_{CPC}$ strongly depends on $N$. At the same time, $I_{NJEE}$ achieves higher bias when $N$ increases and, even more severely, when $d$ decreases (see Fig. \ref{fig:MI_stairs_d5_bs64}). 

Additional results describing all estimators' behavior when $d$ and $N$ vary are reported and described in Sec.  \ref{subsec:mi_appendix_experiment_details}.

\begin{figure}
	\centering
	\includegraphics[scale=0.24]{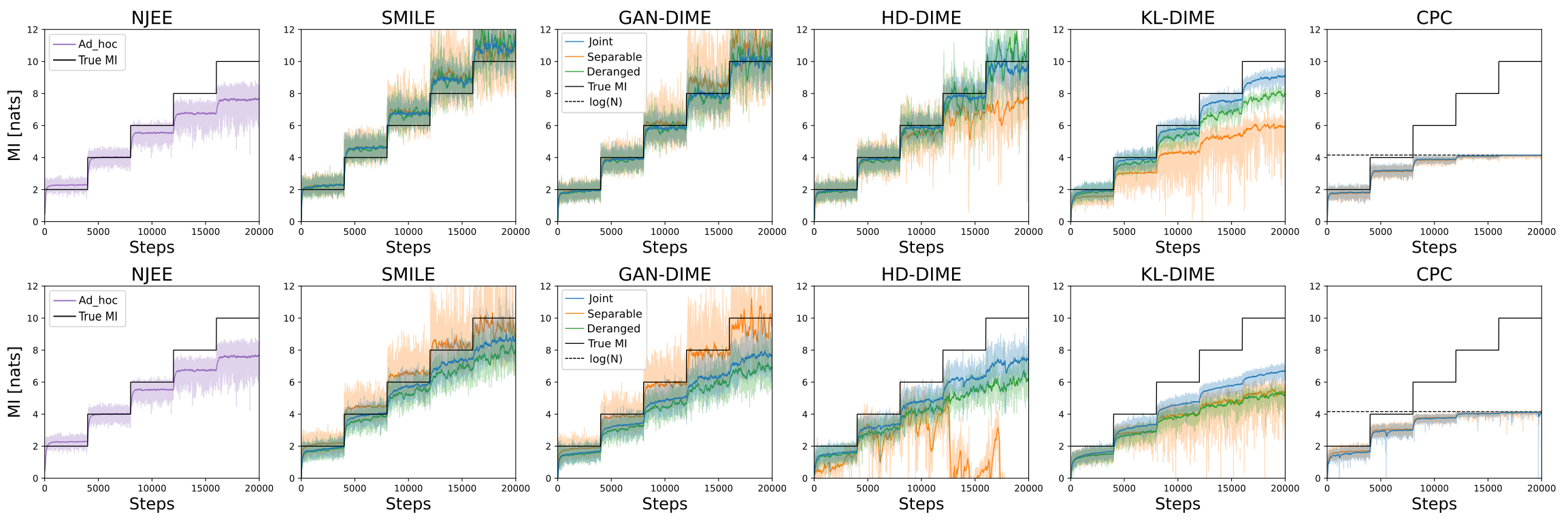}
	\caption{Staircase MI estimation comparison for $d=5$ and $N=64$. The \textit{Gaussian} case is reported in the top row, while the \textit{cubic} case is shown in the bottom row.}
	\label{fig:MI_stairs_d5_bs64}
\end{figure} 
 \begin{figure}
	\centering
	\includegraphics[scale=0.24]{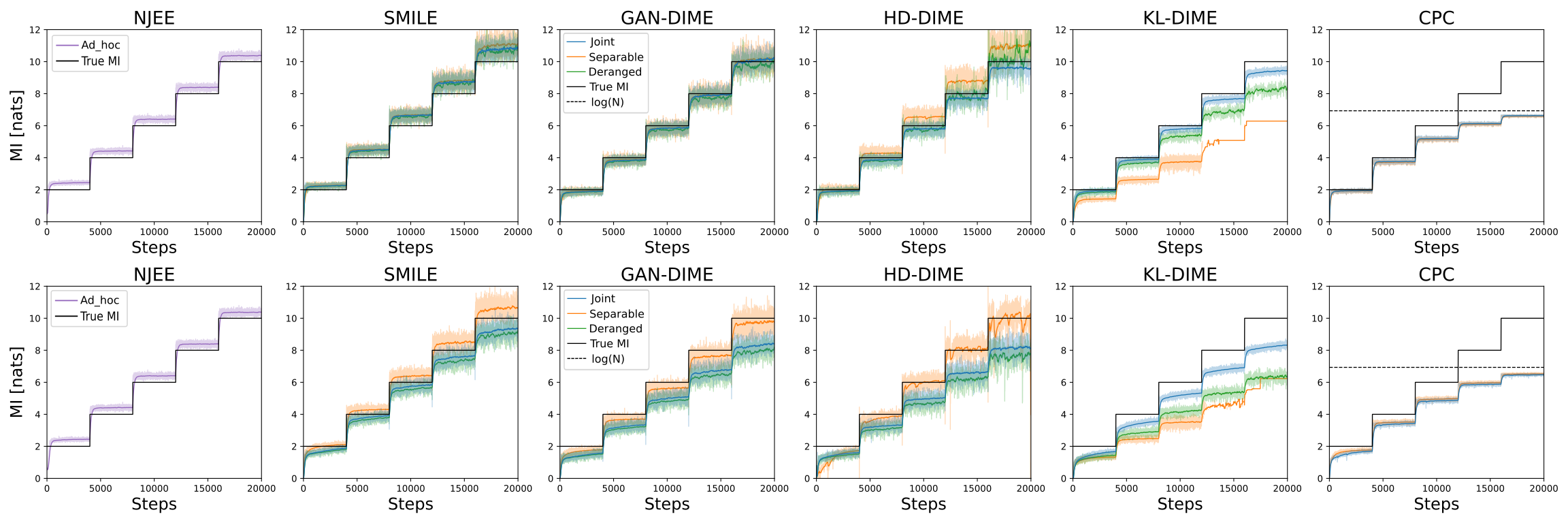}
	\caption{Staircase MI estimation comparison for $d=20$ and $N=1024$. The \textit{Gaussian} case is reported in the top row, while the \textit{cubic} case is shown in the bottom row.}
	\label{fig:MI_stairs_d20_bs1024}
\end{figure} 

\subsection{Complex Gaussian and non-Gaussian distributions}
 We test our estimators over additional complex Gaussian data transformations (half-cube, asinh, and swiss roll mappings) and non-Gaussian distributions (uniform and student distributions) as suggested in \cite{czyz2023beyond}. The half-cube mapping is used to lengthen the tails of the Gaussian distributions. The inverse hyperbolic sine (asinh) mapping shortens the tails of the Gaussian distributions. These two transformations are applied to the same scenario of the Gaussian and cubic already present in our paper. The swiss roll mapping increases the dimensionality of the data distribution (from two to three dimensions) and it is usually used to test dimensionality reduction techniques. It considers two Gaussian random variables that are transformed into uniform random variables via the probability integral transform. The swiss roll mapping is applied to the $X$ uniform random variable. The stairs plots are obtained by varying the correlation between the initial Gaussian distributions. The uniform case estimates the MI of the summation of two uniform random variables $U(0,1)$ and $U(-\epsilon, \epsilon)$, where we vary the parameter $\epsilon$, modifying the true MI. The student scenario analyzes the case of a multivariate student distribution with dispersion matrix chosen to be the identity matrix and degrees of freedom $df$. In this scenario, we vary $df$, implying a variation of the target MI. 

\begin{figure}
	\centering
	\includegraphics[scale=0.2]{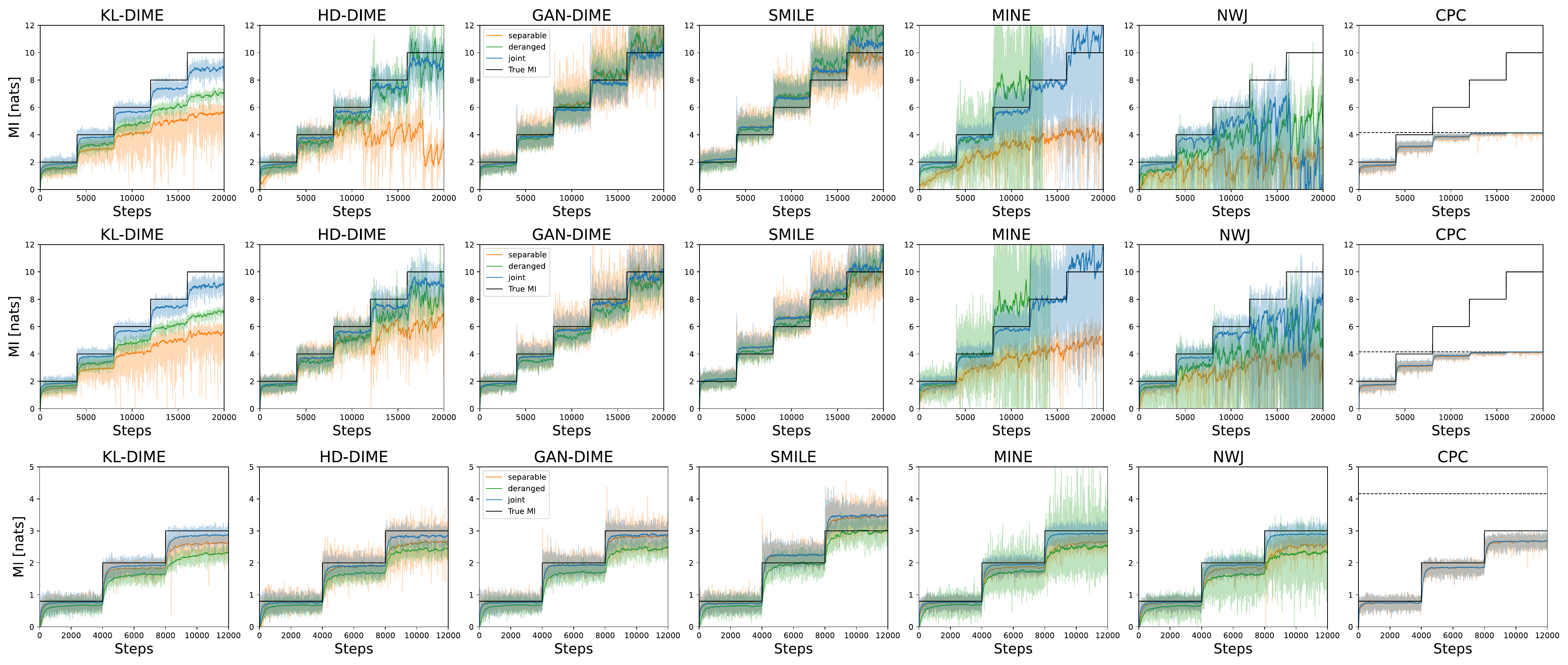}
	\caption{Staircase MI estimation comparison for $d=5$ and $N=64$. Top: Half-cube scenario. Middle: Asinh scenario. Bottom: Swiss roll scenario.}
	\label{fig:MI_stairs_d5_bs64_beyond}
\end{figure} 
 \begin{figure}
	\centering
	\includegraphics[scale=0.2]{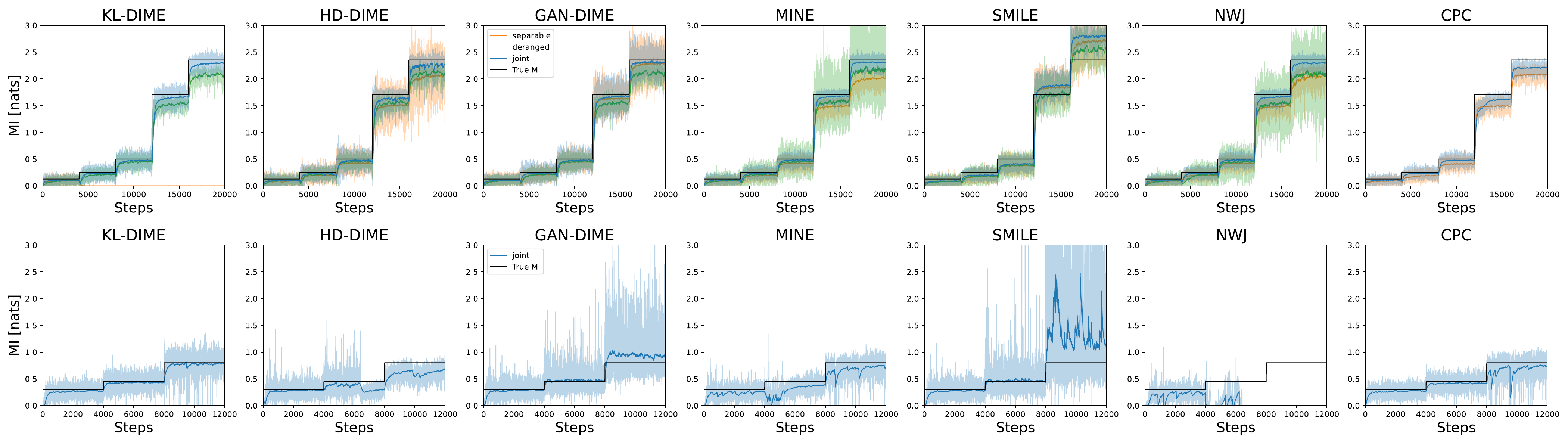}
	\caption{Staircase MI estimation comparison for $d=1$ and $N=64$. Top row: Uniform scenario. Bottom row: Student scenario}
	\label{fig:MI_stairs_d5_bs64_beyond2}
\end{figure} 

Figs. \ref{fig:MI_stairs_d5_bs64_beyond} and \ref{fig:MI_stairs_d5_bs64_beyond2} demonstrate the efficacy of $f$-DIME in all the analyzed scenarios.

\subsubsection{Computational Time Analysis}
\label{subsubsec:mi_time_analysis}
\begin{figure}
\centering
\begin{subfigure}{.5\textwidth}
  \centering
  \includegraphics[scale=0.4]{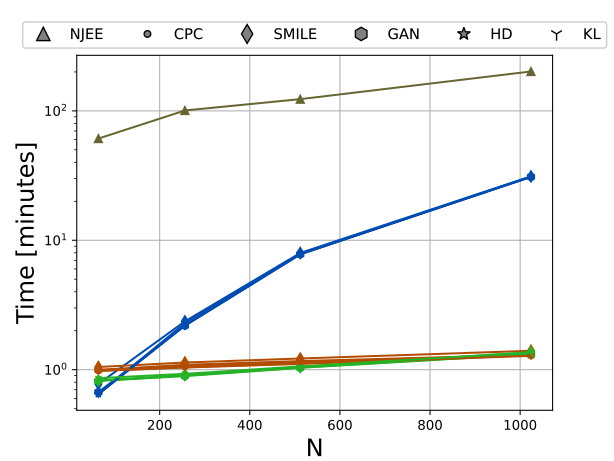}
  \caption{Multivariate Gaussian distribution size \\ fixed to $20$. Batch size varying from $64$ to $1024$.}
  \label{fig:MI_timeVaryingBS}
\end{subfigure}%
\begin{subfigure}{.5\textwidth}
  \centering
  \includegraphics[scale=0.4]{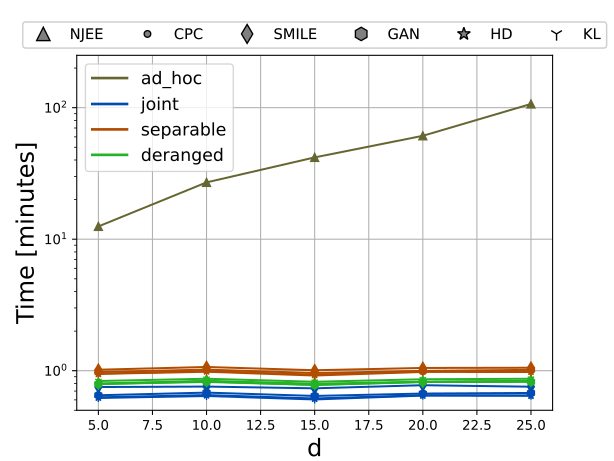}
  \caption{Multivariate Gaussian distribution size \\ varying from $5$ to $25$. Batch size fixed to $64$.}
  \label{fig:MI_timeVaryingD}
\end{subfigure}
\caption{Time requirements comparison to complete the 5-step staircase MI.}
\label{fig:MI_computationalTimeAnalysisMain}
\end{figure}

A fundamental characteristic of each algorithm is the computational time. 
The time requirements to complete the 5-step staircase MI when varying the multivariate Gaussian distribution dimension $d$ and the batch size $N$ are reported in Fig. \ref{fig:MI_computationalTimeAnalysisMain}.
The difference among the estimators computational time is not significant, when comparing the same architectures.
However, as discussed in Sec. \ref{subsec:mi_architectures}, the \textit{deranged} strategy is significantly faster than the \textit{joint} one as $N$ increases. 
The fact that the \textit{separable} architecture uses two neural networks implies that when $N$ is significantly large, the \textit{deranged} implementation is much faster than the \textit{separable} one, as well as more stable to the training and distribution parameters, as shown in Sec. \ref{subsec:mi_appendix_experiment_details}. 

$I_{NJEE}$ is evaluated with its own architecture, which is the most computationally demanding, because it trains a number of neural networks equal to $2d-1$. Thus, $I_{NJEE}$ can be utilized only in cases where the time availability is orders of magnitudes higher than the other approaches considered.
When $d$ is large, the training of $I_{NJEE}$ fails due to memory requirement problems. For example, our hardware platform (described Sec. \ref{subsec:mi_appendix_experiment_details}) does not allow the usage of $d>30$.

\section{Improved autoencoding schemes}
\sectionmark{f-DIME autoencoders}
\label{sec:fDIME_autoencoders}
In this section, we quickly discuss how to adapt the findings of $f$-DIME into the rate-driven AE architecture proposed in Sec. \ref{sec:cacao}. In particular, we show how the choice of the MI estimator impacts the training of the AE.  

\subsection{$\gamma$-DIME}
\label{subsec:mi_gammaDIME}
Only a subset of $\mathcal{J}_f$ in \eqref{eq:MI_discriminator_function_f} can be used inside the rate-driven autoencoder loss function described in Sec. \ref{sec:cacao}. Indeed, a maximization of the MI is required and only the generator $f$ of the KL divergence is suitable for the MI maximization, since in such case
\begin{equation}
\max_{p_X(x)} D_f = \max_{p_X(x)} D_{KL} = \max_{p_X(x)} I(X;Y).
\end{equation}
The generator function of the KL divergence is $f(u) = u\log u$ with conjugate $f^*(t) = e^{t-1}$, as shown when described KL-DIME estimator $I_{KL-DIME}$. Nevertheless, as shown in \cite{LSGAN}, the value function plays a fundamental role during the training process of a discriminator. Consequently, we used the approach presented for the derivation of $f$-DIME to obtain a family of lower bounds specifically on the MI that may result in more robust training processes and estimators according to specific case studies.

\begin{lemma}
\label{lemma:MI_LemmaKL}
Let $X\sim p_X(x)$ and $Y\sim p_{Y}(y|x)$ be the channel input and output, respectively. Let $D(\cdot)$ be a positive function. If $\mathcal{J}_{\gamma}(D)$, $\gamma>0$, is a value function defined as 
\begin{equation}
\label{eq:MI_value_function_f}
\mathcal{J}_{\gamma}(D) = \gamma \cdot \mathbb{E}_{(x,y) \sim p_{XY}(x,y)}\biggl[\log \biggl(D\bigl(x,y\bigr)\biggr)\biggr] + \mathbb{E}_{(x,y) \sim p_{X}(x)p_{Y}(y)}\biggl[- D^{\gamma}\bigl(x,y\bigr)\biggr],
\end{equation}
then
\begin{equation}
I(X;Y) \geq  \mathcal{J}_{\gamma}(D^*)+1,
\end{equation}
\begin{equation}
\label{eq:MI_gammaDIME}
I_{\gamma DIME} = \gamma \cdot \mathbb{E}_{(x,y) \sim p_{XY}(x,y)}\bigl[\log D^*(x,y) \bigr],
\end{equation}
where
\begin{equation}
\label{eq:MI_optimal_discriminator_gamma}
D^*(x,y) = \biggl(\frac{p_{XY}(x,y)}{p_{X}(x)\cdot p_Y(y)}\biggr)^{1/\gamma} = \arg \max_D \mathcal{J}_{\gamma}(D).
\end{equation}
\end{lemma}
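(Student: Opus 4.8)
The plan is to follow the same variational-duality machinery used for Theorem~\ref{theorem:codine_theorem1} and Theorem~\ref{theorem:MI_theorem1}, but specialized to the generator $f(u)=u\log u$ of the KL divergence and with the free scaling parameter $\gamma$ absorbed into a reparametrization of the discriminator. First I would recognize that the value function in \eqref{eq:MI_value_function_f} is exactly $\mathcal{J}_f$ from \eqref{eq:MI_discriminator_function_f} written under the substitution $T(x,y)=\gamma\log D(x,y)$, where $f$ is the KL generator. Since $f(u)=u\log u$ has Fenchel conjugate $f^*(t)=e^{t-1}$, the second expectation term $\mathbb{E}_{p_Xp_Y}[f^*(T)]=\mathbb{E}_{p_Xp_Y}[e^{\gamma\log D-1}]$; to keep the tidy algebraic form of \eqref{eq:MI_value_function_f} I would instead verify directly rather than invoke the conjugate, since the factor $1$ is handled by the additive constant in $I(X;Y)\geq\mathcal{J}_\gamma(D^*)+1$.

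The core computation is a pointwise maximization of the integrand. I would write $\mathcal{J}_\gamma(D)$ as a double integral over $p_{XY}$ and $p_Xp_Y$, collect the integrand as a function of the scalar $D(x,y)$ at each fixed $(x,y)$, differentiate with respect to $D$, and set the derivative to zero. The stationarity condition reads $\gamma\, p_{XY}/D - \gamma\, D^{\gamma-1} p_X p_Y = 0$, whose solution is precisely $D^*(x,y)=\bigl(p_{XY}/(p_Xp_Y)\bigr)^{1/\gamma}$, matching \eqref{eq:MI_optimal_discriminator_gamma}. A second-derivative check confirms this is a maximum, since the integrand is concave in $D$ over the positive reals (the $-D^\gamma$ term dominates the $\log$ term's curvature). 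Substituting $D^*$ back and noting that $\mathbb{E}_{p_Xp_Y}[(D^*)^\gamma]=\mathbb{E}_{p_Xp_Y}[p_{XY}/(p_Xp_Y)]=1$, I obtain $\mathcal{J}_\gamma(D^*)=\gamma\,\mathbb{E}_{p_{XY}}[\log D^*]-1 = I(X;Y)-1$, which simultaneously yields the estimator \eqref{eq:MI_gammaDIME} and the bound with equality at the optimum. The inequality $I(X;Y)\geq\mathcal{J}_\gamma(D)+1$ for arbitrary $D$ then follows from the fact that $D^*$ maximizes $\mathcal{J}_\gamma$, so $\mathcal{J}_\gamma(D)\leq\mathcal{J}_\gamma(D^*)=I(X;Y)-1$.

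The main obstacle I anticipate is not the optimization itself, which is routine, but justifying that $\gamma\,\mathbb{E}_{p_{XY}}[\log D^*]$ genuinely equals the KL divergence $I(X;Y)$: one must confirm that the $1/\gamma$ exponent in $D^*$ cancels exactly against the external factor $\gamma$ so that $\gamma\log D^*=\log\bigl(p_{XY}/(p_Xp_Y)\bigr)$ pointwise, after which the expectation over $p_{XY}$ reproduces the definition in \eqref{eq:fundamentals_MI}. I would also take care to state the regularity assumptions implicitly carried over from the earlier theorems, namely that the discriminator has enough capacity (nonparametric limit) so the pointwise optimum is attainable, and that $D$ ranges over positive functions so the logarithm and fractional powers are well defined. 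With these in place the proof is a direct specialization, and I would present it compactly in the style of the proof of Theorem~\ref{theorem:codine_theorem1}.
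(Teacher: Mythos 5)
Your proof is correct in substance but follows a genuinely different route from the paper's. The paper derives the lemma top-down from the $f$-divergence duality machinery: it introduces the scaled generator $f(u)=\frac{u}{\gamma}\log u$, computes its Fenchel conjugate $f^*(t)=e^{\gamma t-1}/\gamma$, plugs both into the general lower bound \eqref{eq:MI_f_bound}, reads off the optimal $T^*(x)=\frac{1}{\gamma}\log(p/q)+\frac{1}{\gamma}$ from \eqref{eq:MI_optimal_ratio_T}, and then performs the change of variable $T=\log D + 1/\gamma$ to arrive at $\mathcal{J}_\gamma$, $D^*$, and the additive constant $+1$ all at once. You instead take the value function $\mathcal{J}_\gamma$ as given and optimize it pointwise: differentiate the integrand in $D$, solve the stationarity condition to get $D^*=(p_{XY}/(p_Xp_Y))^{1/\gamma}$, substitute back, and use $\mathbb{E}_{p_Xp_Y}[(D^*)^\gamma]=1$ to conclude $\mathcal{J}_\gamma(D^*)=I(X;Y)-1$. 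Your route is more elementary and self-contained (it mirrors the paper's own proofs of Lemma~\ref{lemma:MIND_Lemma1} and Theorem~\ref{theorem:CORTICAL_Theorem1}); the paper's route buys the inequality for arbitrary $D$ for free from Fenchel duality, without any second-order analysis, and makes explicit that $\gamma$-DIME is a member of the $f$-DIME family generated by a scaled KL generator.

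Two small points need repair. First, your claim that the integrand $\gamma p_{XY}\log D - p_Xp_Y D^\gamma$ is concave in $D$ over all of $\mathbb{R}_+$ is false for $0<\gamma<1$: the term $-D^\gamma$ is then convex, and the second derivative $\gamma D^{-2}\bigl((1-\gamma)p_Xp_Y D^\gamma - p_{XY}\bigr)$ becomes positive for large $D$. The conclusion still holds because the stationary point is unique and the integrand tends to $-\infty$ at both $D\to 0^+$ and $D\to\infty$ (equivalently, the second derivative evaluated at $D^*$ equals $-\gamma^2 p_{XY}(D^*)^{-2}<0$), so you should argue global optimality that way rather than by concavity. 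Second, the substitution you mention, $T=\gamma\log D$ with the unscaled KL generator, does not reproduce \eqref{eq:MI_value_function_f} exactly: it yields $-e^{-1}\mathbb{E}[D^\gamma]$ as the second term. The exact correspondence is $T=\gamma\log D+1$ (or the paper's $T=\log D+1/\gamma$ with the scaled generator); since you abandon that route anyway, this does not affect your argument, but the word ``exactly'' is wrong as written.
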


\begin{proof}
Consider a scaled generator $f(u) = \frac{u}{\gamma}\log u$ and for simplicity of notation, denote $p_{XY}$ and $p_Xp_Y$ with $p$ and $q$, respectively. Then
\begin{equation}
D_{KL}(p||q) = \gamma \int_{x}{q(x) \frac{p(x)}{\gamma q(x)}\log\biggl(\frac{p(x)}{q(x)}\biggr) \diff x},
\end{equation}
with conjugate $f^*(t)$, with $t \in \mathbb{R}$ given by
\begin{equation}
f^*(t) = \frac{e^{\gamma t -1}}{\gamma}.
\end{equation}
Substituting in \eqref{eq:MI_f_bound} yields to
\begin{equation}
D_{KL}(p||q) \geq \gamma \sup_{T\in \mathbb{R}} \biggl\{ \mathbb{E}_{x \sim p(x)} \bigl[T(x)\bigr]-\frac{1}{\gamma}\mathbb{E}_{x\sim q(x)}\bigl[e^{\gamma T(x)-1} \bigr]\biggr\}.
\end{equation}
Using \eqref{eq:MI_optimal_ratio_T} it is easy to verify that the optimal value of $T$ is the log-likelihood ratio rather than the density ratio itself. Indeed,
\begin{equation}
T^*(x) = \frac{1}{\gamma}\log\biggl(\frac{p(x)}{q(x)}\biggr) + \frac{1}{\gamma}.
\end{equation}
Finally, with the change of variable $T(x)=\log(D(x))+1/{\gamma}$, the optimal discriminator has form
\begin{equation}
D^*(x) = \biggl(\frac{p(x)}{q(x)}\biggr)^{1/\gamma}
\end{equation}
and
\begin{equation}
D_{KL}(p||q) \geq \sup_{D\in \mathbb{R}_+} \biggl\{ \gamma \mathbb{E}_{x \sim p(x)} \bigl[\log \bigl( D(x) \bigr) \bigr] -\mathbb{E}_{x\sim q(x)}\bigl[D^{\gamma}(x) \bigr]\biggr\} + 1.
\end{equation}
Denoting the two terms inside the supremum with $J_{\gamma}(D)$ and replacing $p$ and $q$ with the joint and marginal distributions, concludes the proof.
\end{proof}

\begin{figure}
\centering
  	\includegraphics[scale = 0.3]{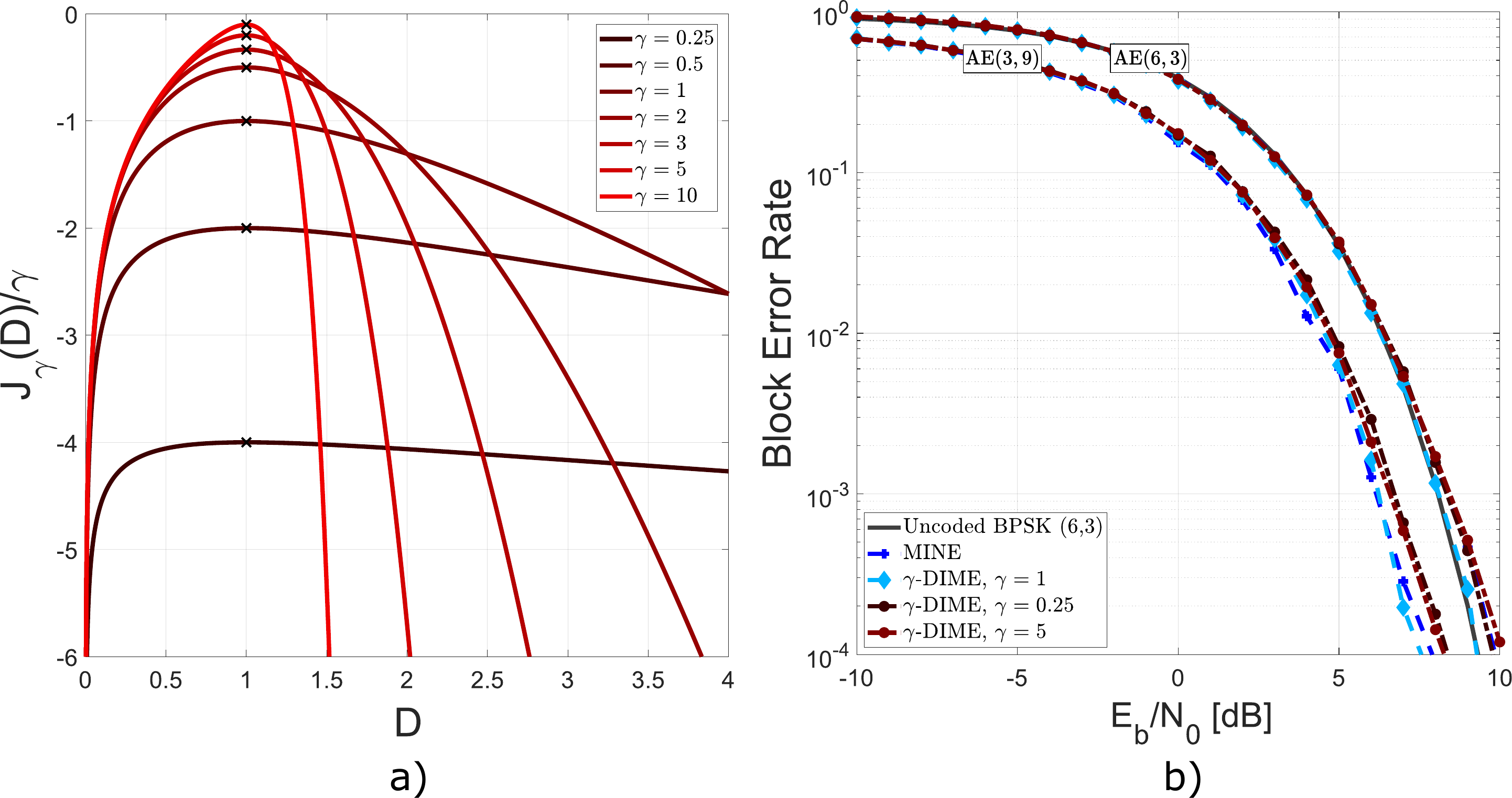}
  	\caption{a) Behaviour of the functional $J_{\gamma}$ varying the parameter $\gamma$. Concavity of the neighbourhood of the maximal value depends on $\gamma$. b) Comparison of the BLER obtained by the AE($3,9$) and AE($6,3$) with an AWGN intermediate layer, using different estimators with $\beta = 0.2$, $\epsilon = 0.2$.}
  	\label{fig:MI_gammaDIME}
\end{figure}

Fig.~\ref{fig:MI_gammaDIME}a illustrates how the value function in \eqref{eq:MI_value_function_f} (divided by $\gamma$) behaves according to the choice of the positive parameter $\gamma$. The position of the maximum depends on the density ratio as described in \eqref{eq:MI_optimal_discriminator_gamma} ($R = 1$ in the figure). The role of $\gamma$ is to modify the concavity of the value function to improve the convergence rate of the gradient ascent algorithm. Moreover, when $D$ is parameterized by a neural network, the choice of the learning rate and activation functions may be related with $\gamma$. Indeed, notice that $\gamma$-DIME can also be derived from d-DIME using the substitution $D(x,y) \rightarrow D^{\gamma}(x,y)$.

\subsection{Results}
\label{subsec:mi_gammaDIME_results}
In the experimental results, we consider the transmission of $M$ messages at rate $R$ over an AWGN channel, for which we know the channel capacity under an
average power constraint. The choice of AWGN channel shall not be seen as a limitation since the formulation transcends the channel characteristics. Moreover, it provides a reference curve for the validation of the estimators and it allows comparisons across several application domains.

We describe two scenarios: a) the transmission at a rate $R>1$ and short code-length $n$ and b) the transmission at a low rate with larger code-length. For both scenarios, we train a rate-driven AE and compare its performance for different estimators in terms of BLER and accuracy of the mutual information estimation. 

For all the experiments and as a proof of concept, we use simple encoder, decoder and neural MI estimator architectures. In particular, the encoder is a shallow neural network with $M$ neurons in the input layer, $M$ in the hidden one and $2n$ neurons at the output while the decoder is its complementary.  
The discriminative estimator possesses instead a fixed architecture, a
two layer multilayer perceptron NN of $200$
neurons in each layer and LeakyReLU with a negative slope coefficient of $0.2$ as activation function. The only
difference in the estimators resides in the final layer where we use a linear dense layer for MINE, while a softplus layer for the DIME based estimators.
The training of the AE was conducted at a fixed $E_b/N_0$ ratio of $7$ dB. We minimized the loss function alternating encoder and decoder weights update with the discriminator update. The number of iterations was set to $10$k with a learning rate of $0.01$ for both the components. For the implementation details, we refer to the GitHub code \cite{CACAO_github}.

The first scenario describes a digital transmission scheme with input alphabet size $M=64$ over $3$ channel uses. We denote such system as AE($6,3$) since $k = \log_2(M)$. The communication rate is $R=2$. Fig.~\ref{fig:MI_gammaDIME}b shows the AE performance in terms of BLER for different estimators. From the curves we can infer that the $\gamma$-DIME estimator with $\gamma=1$ provides improved performance in terms of BLER compared to both MINE and other values of $\gamma$. The result of Fig. \ref{fig:MI_MI_6_6}a is insightful: the $\gamma$-DIME estimators provide stable estimations also at high SNRs where it is expected that the MI saturates to the information rate $R=2$. Thus, the DIME based estimators are more accurate compared to the divergent MINE (see Theorem 2 of \cite{Song2020}). It is also interesting to notice that the concavity of $J_{\gamma}$ (shown in Fig.~\ref{fig:MI_gammaDIME}a) influences the MI estimation as it appears in the results. Indeed, for low values of $\gamma$, fixed learning rate and training iterations, the estimators are trained by moving on the flat curves of Fig.~\ref{fig:MI_gammaDIME}a, which is related to using a small learning rate to upgrade the gradients. For completeness and to show the advantage of the proposed estimators under fading, we report in Fig. \ref{fig:MI_MI_6_6}b the achieved estimated information rate with Rayleigh fading. Also in this case, MINE diverges for high SNR values while the DIME based estimators are much more accurate and tend to saturate to the rate $R=2$.  In addition, $\gamma =2$ seems to perform better for low and middle SNR values, which is consistent with the idea that different scenarios may require different parameters and therefore $\gamma$-DIME provides more flexibility for model learning.

\begin{figure}
\centering
  	\includegraphics[scale=0.2]{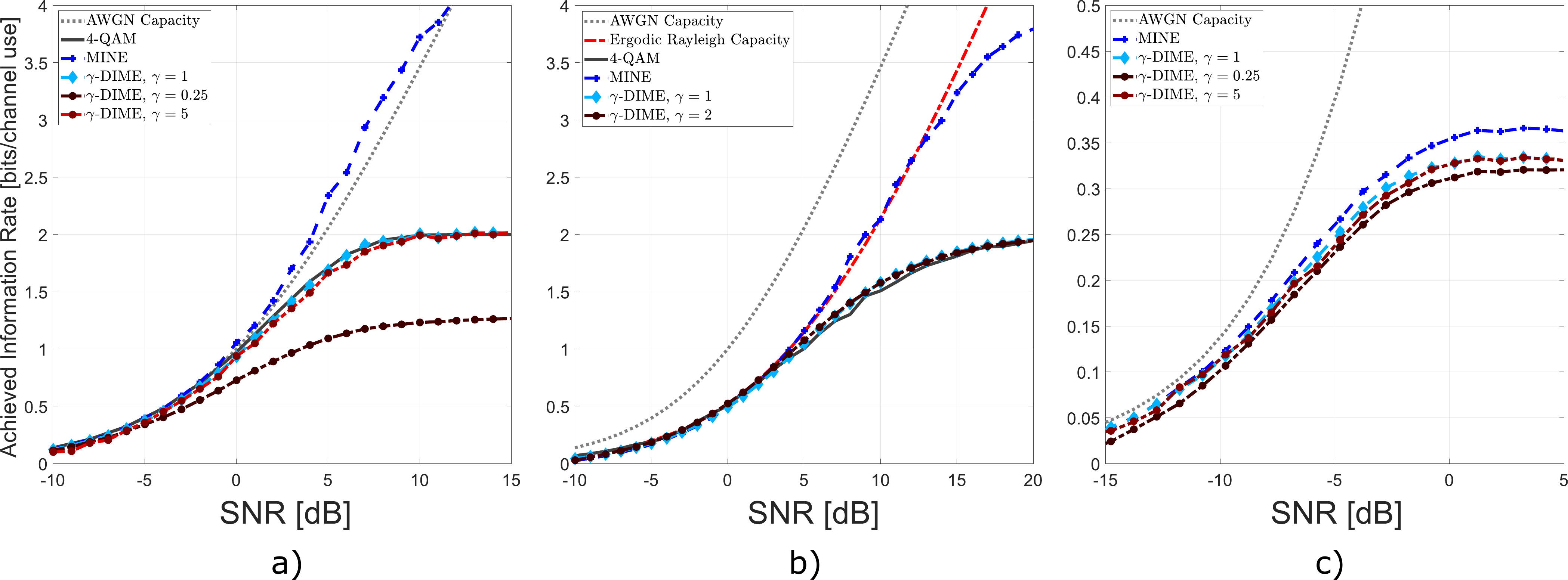}
  	\caption{Comparison of different MI estimators for the AE($6,3$) with $\beta = 0.2$, $\epsilon = 0.2$ on a) the AWGN channel; b) on the Rayleigh fading channel. c) Comparison of the achieved information rate using different MI estimators for the AE($3,9$) with $\beta = 0.2$, $\epsilon = 0.2$ on the AWGN channel.}
  	\label{fig:MI_MI_6_6}
\end{figure}

The second scenario describes a digital transmission scheme with input alphabet size $M=8$ over $9$ channel uses. We denote such system as AE($3,9$). The communication rate is $R=1/3$. Fig.~\ref{fig:MI_gammaDIME}b shows that MINE and $\gamma$-DIME with $\gamma=1$ have similar performance in terms of BLER under the same training iterations. However, Fig.~\ref{fig:MI_MI_6_6}c highlights the fact that even for a low alphabet dimension and for relatively small $n$, the DIME-based estimators are more accurate than MINE since they tend to saturate to $R=1/3$ while MINE overestimates the MI. We can conclude that despite MINE being a tighter bound on the MI compared to $\gamma$-DIME, the latter and in particular the estimator with $\gamma=1$ performs much better than MINE in terms of MI estimation. We leave comments about the architecture to use for future studies, although we remark that it is one of the main aspects in modern ML applications and therefore we expect significant improvements. 

\section{Summary}
\label{sec:mi_conclusions}
In this chapter, we presented $f$-DIME, a class of discriminative MI estimators based on the variational representation of the $f$-divergence. We proved that any valid choice of the function $f$ leads to a low-variance MI estimator which can be parametrized by a NN. We also proposed a derangement training strategy that efficiently samples from the product of marginal distributions. 
The performance of $f$-DIME is evaluated using three functions $f$, and it is compared with state-of-the-art estimators. Results demonstrate excellent bias/variance trade-off for different data dimensions and different training parameters.

When $f$ is the function generator of the KL divergence, we obtained as a special case $\gamma$-DIME, a parametric estimator that can be used both for MI estimation and design of optimal channel coding. We integrated it as the MI regularization term inside the AE loss
function. A comparison in terms of estimation accuracy and
BLER was done with MINE. Results show the advantage of adopting $\gamma$-DIME over MINE for the AE end-to-end training, especially when the aim is to estimate the achieved information rate. 

In the next chapter, we extend this idea to create a framework that builds capacity-achieving distributions and that estimates the channel capacity for any discrete-time continuous memoryless channel.

\newpage
\section{Supplementary material}
\sectionmark{Proofs}
\label{sec:mi_appendix}

\subsection{DIME estimators}
\label{subsec:mi_appendix_DIME_examples}
In this section, we provide a concrete list of DIME estimators obtained using three different $f$-divergences. In particular, we maximize the value function defined in \eqref{eq:MI_discriminator_function_f}
\begin{equation}
    \mathcal{J}_{f}(T) =  \mathbb{E}_{(\mathbf{x},\mathbf{y}) \sim p_{XY}(\mathbf{x},\mathbf{y})}\biggl[T\bigl(\mathbf{x},\mathbf{y}\bigr)-f^*\biggl(T\bigl(\mathbf{x},\sigma(\mathbf{y})\bigr)\biggr)\biggr],
\end{equation}
over $T$ or its transformation. By doing that, and using \eqref{eq:MI_f-DIME}, 
\begin{equation}
    I(X;Y) = I_{fDIME}(X;Y) =  \mathbb{E}_{(\mathbf{x},\mathbf{y}) \sim p_{XY}(\mathbf{x},\mathbf{y})}\biggl[ \log \biggl(\bigl(f^{*}\bigr)^{\prime}\bigl(\hat{T}(\mathbf{x},\mathbf{y})\bigr) \biggr) \biggr],
\end{equation}
we obtain a list of three different MI estimators.
The list is used for both commenting on the impact of the $f$ function, referred to as the generator function, and for comparing the estimators discussed in Sec. \ref{sec:mi_related}. 

We consider the cases when $f$ is the generator of: 
\begin{itemize}
    \item[{a)}] the KL divergence;
    \item[{b)}] the GAN divergence; 
    \item[{c)}] the Hellinger distance squared.
\end{itemize}
We report below the derived value functions and the mathematical expressions of the proposed estimators. 

\subsubsection{KL divergence}
The variational representation of the KL divergence \cite{Nguyen2010} leads to the NWJ estimator in \eqref{eq:MI_NWJ} when $f(u) = u\log(u)$. However, since we are interested in extracting the density ratio, we apply the transformation $T(\mathbf{x})=\log(D(\mathbf{x}))$. In this way, the lower bound introduced in \eqref{eq:MI_discriminator_function_f} reads as follows
\begin{equation}
\mathcal{J}_{KL}(D) = \mathbb{E}_{(\mathbf{x},\mathbf{y}) \sim p_{XY}(\mathbf{x},\mathbf{y})}\biggl[\log\bigl(D\bigl(\mathbf{x},\mathbf{y}\bigr)\bigr)\biggr] -\mathbb{E}_{(\mathbf{x},\mathbf{y}) \sim p_{X}(\mathbf{x})p_{Y}(\mathbf{y})}\biggl[D\bigl(\mathbf{x},\mathbf{y}\bigr)\biggr]+1,
\end{equation}
which has to be maximized over positive discriminators $D(\cdot)$.
As remarked before, we do not use $\mathcal{J}_{KL}$ during the estimation, rather we define the KL-DIME estimator as
\begin{equation}
\label{eq:MI_KL-DIME}
I_{KL-DIME}(X;Y) :=  \mathbb{E}_{(\mathbf{x},\mathbf{y}) \sim p_{XY}(\mathbf{x},\mathbf{y})}\biggl[ \log \biggl(\hat{D}(\mathbf{x},\mathbf{y})\biggr) \biggr],
\end{equation}
due to the fact that
\begin{equation}
\label{eq:MI_optimal_ratio_KL}
\hat{D}(\mathbf{x},\mathbf{y}) =\arg \max_D \mathcal{J}_{KL}(D) = \frac{p_{XY}(\mathbf{x},\mathbf{y})}{p_X(\mathbf{x})p_Y(\mathbf{y})}.
\end{equation}

\subsubsection{GAN divergence}
Following a similar approach, it is possible to define $f(u) = u\log u-(u+1)\log(u+1)+\log4$ and $T(\mathbf{x})=\log(1-D(\mathbf{x}))$. We derive from Theorem \ref{theorem:MI_theorem1} the GAN-DIME estimator obtained via maximization of
\begin{equation}
\mathcal{J}_{GAN}(D) = \mathbb{E}_{(\mathbf{x},\mathbf{y}) \sim p_{XY}(\mathbf{x},\mathbf{y})}\biggl[\log\bigl(1-D\bigl(\mathbf{x},\mathbf{y}\bigr)\bigr)\biggr] +\mathbb{E}_{(\mathbf{x},\mathbf{y}) \sim p_{X}(\mathbf{x})p_{Y}(\mathbf{y})}\biggl[\log\bigl(D\bigl(\mathbf{x},\mathbf{y}\bigr)\bigr)\biggr]+\log(4).
\end{equation}
In fact, at the equilibrium we recover \eqref{eq:MI_gan_density_ratio}, hence
\begin{equation}
\label{eq:MI_GAN-DIME}
I_{GAN-DIME}(X;Y) :=  \mathbb{E}_{(\mathbf{x},\mathbf{y}) \sim p_{XY}(\mathbf{x},\mathbf{y})}\biggl[ \log \biggl(\frac{1-\hat{D}(\mathbf{x},\mathbf{y})}{\hat{D}(\mathbf{x},\mathbf{y})}\biggr) \biggr].
\end{equation}

\subsubsection{Hellinger distance}
The last example we consider is the generator of the Hellinger distance squared $f(u)=(\sqrt{u}-1)^2$ with the change of variable $T(\mathbf{x})=1-D(\mathbf{x})$. After simple manipulations, we obtain the associated value function as
\begin{equation}
\mathcal{J}_{HD}(D) = 2-\mathbb{E}_{(\mathbf{x},\mathbf{y}) \sim p_{XY}(\mathbf{x},\mathbf{y})}\biggl[D\bigl(\mathbf{x},\mathbf{y}\bigr)\biggr] -\mathbb{E}_{(\mathbf{x},\mathbf{y}) \sim p_{X}(\mathbf{x})p_{Y}(\mathbf{y})}\biggl[\frac{1}{D(\mathbf{x},\mathbf{y})}\biggr],
\end{equation}
which is maximized for
\begin{equation}
\label{eq:MI_optimal_ratio_HD}
\hat{D}(\mathbf{x},\mathbf{y}) =\arg \max_D \mathcal{J}_{HD}(D) = \sqrt{\frac{p_X(\mathbf{x})p_Y(\mathbf{y})}{p_{XY}(\mathbf{x},\mathbf{y})}},
\end{equation}
leading to the HD-DIME estimator
\begin{equation}
\label{eq:MI_HD-DIME}
I_{HD-DIME}(X;Y) :=  \mathbb{E}_{(\mathbf{x},\mathbf{y}) \sim p_{XY}(\mathbf{x},\mathbf{y})}\biggl[ \log \biggl(\frac{1}{\hat{D}^2(\mathbf{x},\mathbf{y})}\biggr) \biggr].
\end{equation}

Given that these estimators comprise only one expectation over the joint samples, their variance has different properties compared to the variational ones requiring the partition term such as MINE and NWJ.

\subsection{Proofs of lemmas and theorems}
\label{subsec:mi_omitted_proofs}
\begin{theorem}[Related to Theorem~\ref{theorem:MI_theorem1}]
Let $(X,Y) \sim p_{XY}(\mathbf{x},\mathbf{y})$ be a pair of random variables. Let $\sigma(\cdot)$ be a permutation function such that  $p_{\sigma(Y)}(\sigma(\mathbf{y})|\mathbf{x}) = p_{Y}(\mathbf{y})$. Let $f^*$ be the Fenchel conjugate of $f:\mathbb{R}_+ \to \mathbb{R}$, a convex lower semicontinuous function that satisfies $f(1)=0$ with derivative $f^{\prime}$.
If $\mathcal{J}_{f}(T)$ is a value function defined as 
\begin{equation}
\mathcal{J}_{f}(T) =  \mathbb{E}_{(\mathbf{x},\mathbf{y}) \sim p_{XY}(\mathbf{x},\mathbf{y})}\biggl[T\bigl(\mathbf{x},\mathbf{y}\bigr)-f^*\biggl(T\bigl(\mathbf{x},\sigma(\mathbf{y})\bigr)\biggr)\biggr],
\end{equation}
then
\begin{equation}
\hat{T}(\mathbf{x},\mathbf{y}) =\arg \max_T \mathcal{J}_f(T) = f^{\prime} \biggl(\frac{p_{XY}(\mathbf{x},\mathbf{y})}{p_X(\mathbf{x})p_Y(\mathbf{y})}\biggr),
\end{equation}
and
\begin{equation}
I(X;Y) = I_{fDIME}(X;Y) =  \mathbb{E}_{(\mathbf{x},\mathbf{y}) \sim p_{XY}(\mathbf{x},\mathbf{y})}\biggl[ \log \biggl(\bigl(f^{*}\bigr)^{\prime}\bigl(\hat{T}(\mathbf{x},\mathbf{y})\bigr) \biggr) \biggr].
\end{equation}
\end{theorem}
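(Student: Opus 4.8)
The plan is to establish the three claims of the theorem in sequence, leveraging the Fenchel convex duality machinery that underpins the $f$-divergence variational representation. First I would recall the key inequality from Lemma 1 of \cite{Nguyen2010}: for any convex lower semicontinuous $f$ with $f(1)=0$, the $f$-divergence admits the lower bound
\begin{equation}
D_f(p_{XY}||p_Xp_Y) \geq \sup_{T} \biggl\{ \mathbb{E}_{(\mathbf{x},\mathbf{y}) \sim p_{XY}}\bigl[T(\mathbf{x},\mathbf{y})\bigr]-\mathbb{E}_{(\mathbf{x},\mathbf{y})\sim p_Xp_Y}\bigl[f^*\bigl(T(\mathbf{x},\mathbf{y})\bigr)\bigr]\biggr\},
\end{equation}
with equality attained at $\hat{T}(\mathbf{x},\mathbf{y}) = f^{\prime}\bigl(p_{XY}(\mathbf{x},\mathbf{y})/(p_X(\mathbf{x})p_Y(\mathbf{y}))\bigr)$. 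The crucial preliminary observation is that the hypothesis $p_{\sigma(Y)}(\sigma(\mathbf{y})|\mathbf{x}) = p_Y(\mathbf{y})$ guarantees that sampling $(\mathbf{x},\sigma(\mathbf{y}))$ with $(\mathbf{x},\mathbf{y})\sim p_{XY}$ is equivalent to sampling from the product of marginals $p_X(\mathbf{x})p_Y(\mathbf{y})$. Consequently the second expectation in $\mathcal{J}_f(T)$, written over the joint with the permuted second argument, coincides exactly with the marginal expectation appearing in the duality bound, so that $\mathcal{J}_f(T)$ is precisely the functional whose supremum equals $D_f$.

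Having identified $\mathcal{J}_f$ with the dual functional, the first conclusion follows immediately: the maximizer is $\hat{T}(\mathbf{x},\mathbf{y}) = f^{\prime}\bigl(R(\mathbf{x},\mathbf{y})\bigr)$ where $R$ is the density ratio of \eqref{eq:MI_density_ratio_1}. For this I would verify the pointwise optimality by differentiating the integrand of the dual expression with respect to the value $T(\mathbf{x},\mathbf{y})$ at each point and using the defining property of the Fenchel conjugate, namely that $\bigl(f^*\bigr)^{\prime}$ inverts $f^{\prime}$; setting the derivative to zero recovers $T = f^{\prime}(R)$ exactly, mirroring the argument structure already used in the proof of Theorem~\ref{theorem:codine_theorem1}.

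The final and most substantive step is to recover the mutual information from the optimal discriminator. Here I would apply the inverse relation $\bigl(f^*\bigr)^{\prime}\bigl(f^{\prime}(u)\bigr) = u$, which is the standard consequence of Fenchel duality for a strictly convex $f$, to obtain $\bigl(f^*\bigr)^{\prime}\bigl(\hat{T}(\mathbf{x},\mathbf{y})\bigr) = R(\mathbf{x},\mathbf{y}) = p_{XY}(\mathbf{x},\mathbf{y})/(p_X(\mathbf{x})p_Y(\mathbf{y}))$. Substituting this into the estimator expression gives
\begin{equation}
I_{fDIME}(X;Y) = \mathbb{E}_{(\mathbf{x},\mathbf{y}) \sim p_{XY}}\biggl[\log\biggl(\frac{p_{XY}(\mathbf{x},\mathbf{y})}{p_X(\mathbf{x})p_Y(\mathbf{y})}\biggr)\biggr],
\end{equation}
which is exactly $I(X;Y)$ by the definition in \eqref{eq:fundamentals_MI}. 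The main obstacle I anticipate is the rigorous justification of the inversion identity $\bigl(f^*\bigr)^{\prime}\circ f^{\prime} = \mathrm{id}$, since this requires $f$ to be differentiable and strictly convex on the relevant range so that $f^{\prime}$ is invertible with inverse $\bigl(f^*\bigr)^{\prime}$; care is needed to ensure the density ratio lands in the interior of the domain where these regularity conditions hold. A secondary subtlety is confirming that the permutation/derangement hypothesis genuinely delivers independent marginal samples rather than merely uncorrelated ones, but this is precisely what the stated condition $p_{\sigma(Y)}(\sigma(\mathbf{y})|\mathbf{x}) = p_Y(\mathbf{y})$ encodes, so the argument rests on it directly.
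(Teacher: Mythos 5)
Your proposal is correct and follows essentially the same route as the paper's proof: use the permutation hypothesis to identify the second expectation in $\mathcal{J}_f(T)$ with an expectation over $p_X(\mathbf{x})p_Y(\mathbf{y})$, invoke Lemma 1 of Nguyen et al.\ for the Fenchel-duality lower bound and its maximizer $\hat{T}=f^{\prime}(p_{XY}/(p_Xp_Y))$, and then apply the inversion $(f^{*})^{\prime}\circ f^{\prime}=\mathrm{id}$ to recover the density ratio and hence $I(X;Y)$ as the expectation of its logarithm under $p_{XY}$. Your added remarks on strict convexity/differentiability needed for the inversion identity are a reasonable refinement of the regularity conditions the paper leaves implicit, but they do not change the argument.
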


\begin{proof} 
From the hypothesis, the value function can be rewritten as
\begin{equation}
\mathcal{J}_{f}(T) =  \mathbb{E}_{(\mathbf{x},\mathbf{y}) \sim p_{XY}(\mathbf{x},\mathbf{y})}\biggl[T\bigl(\mathbf{x},\mathbf{y}\bigr)\biggr]  -\mathbb{E}_{(\mathbf{x},\mathbf{y}) \sim p_{X}(\mathbf{x})p_{Y}(\mathbf{y})}\biggl[f^*\biggl(T\bigl(\mathbf{x},\mathbf{y}\bigr)\biggr)\biggr].
\end{equation}
The thesis follows immediately from Lemma 1 of \cite{Nguyen2010}. Indeed, the $f$-divergence $D_f$ can be expressed in terms of its lower bound via Fenchel convex duality
\begin{equation}
\label{eq:MI_f_bound}
D_f(P||Q) \geq \sup_{T\in \mathbb{R}} \biggl\{ \mathbb{E}_{x \sim p(x)} \bigl[T(x)\bigr]-\mathbb{E}_{x\sim q(x)}\bigl[f^*\bigl(T(x)\bigr)\bigr]\biggr\},
\end{equation}
where $T: \mathcal{X} \to \mathbb{R}$ and $f^*$ is the Fenchel conjugate of $f$ defined as
\begin{equation}
f^*(t) := \sup_{u\in \mathbb{R}} \{ ut -f(u)\}.
\end{equation}
Therein, it was shown that the bound in \eqref{eq:MI_f_bound} is tight for optimal values of $T(x)$ and it takes the following form
\begin{equation}
\hat{T}(x) = f^{\prime} \biggl(\frac{p(x)}{q(x)}\biggr),
\end{equation}
where $f^{\prime}$ is the derivative of $f$.

The mutual information $I(X;Y)$ admits the KL divergence representation 
\begin{equation}
I(X;Y) = D_{KL}(p_{XY}||p_X p_Y),
\end{equation}
and since the inverse of the derivative of $f$ is the derivative of the conjugate $f^*$, the density ratio can be rewritten in terms of the optimum discriminator $\hat{T}$
\begin{equation}
\bigl(f^{\prime}\bigr)^{-1}\bigl(\hat{T}(\mathbf{x},\mathbf{y})\bigr) = \bigl(f^*\bigr)^{\prime}\bigl(\hat{T}(\mathbf{x},\mathbf{y})\bigr) = \frac{p_{XY}(\mathbf{x},\mathbf{y})}{p_X(\mathbf{x})p_Y(\mathbf{y})}.
\end{equation}
$f$-DIME finally reads as follows
\begin{equation}
I_{fDIME}(X;Y) = \mathbb{E}_{(\mathbf{x},\mathbf{y}) \sim p_{XY}(\mathbf{x},\mathbf{y})}\biggl[ \log \biggl(\bigl(f^*\bigr)^{\prime}\bigl(\hat{T}(\mathbf{x},\mathbf{y})\bigr) \biggr) \biggr].
\end{equation}
\end{proof}

\begin{lemma}[Related to Lemma~\ref{lemma:MI_convergence}]
Let the discriminator $T(\cdot)$ be with enough capacity, i.e., in the non parametric limit. Consider the problem
\begin{equation}
\hat{T} =  \; \arg \max_T \mathcal{J}_{f}(T)
%\label{eq:MI_Lemma4_problem}
\end{equation}
where
\begin{equation}
\mathcal{J}_{f}(T) = \mathbb{E}_{(\mathbf{x},\mathbf{y}) \sim p_{XY}(\mathbf{x},\mathbf{y})}\biggl[T\bigl(\mathbf{x},\mathbf{y}\bigr)\biggr] -\mathbb{E}_{(\mathbf{x},\mathbf{y}) \sim p_{X}(\mathbf{x})p_{Y}(\mathbf{y})}\biggl[f^*\biggl(T\bigl(\mathbf{x},\mathbf{y}\bigr)\biggr)\biggr],
\end{equation}
and the update rule based on the gradient descent method
\begin{equation}
T^{(n+1)} = T^{(n)} + \mu \nabla \mathcal{J}_{f}(T^{(n)}).
\end{equation}
If the gradient descent method converges to the global optimum $\hat{T}$, the mutual information estimator 
\begin{equation}
I(X;Y) = I_{fDIME}(X;Y) =  \mathbb{E}_{(\mathbf{x},\mathbf{y}) \sim p_{XY}(\mathbf{x},\mathbf{y})}\biggl[ \log \biggl(\bigl(f^{*}\bigr)^{\prime}\bigl(\hat{T}(\mathbf{x},\mathbf{y})\bigr) \biggr) \biggr].
\end{equation}
converges to the real value of the mutual information $I(X;Y)$.
\end{lemma}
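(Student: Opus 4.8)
The plan is to establish convergence of the $f$-DIME estimator by leveraging the consistency of gradient ascent on the value function together with the pointwise optimality characterization already proved in Theorem~\ref{theorem:MI_theorem1}. First I would observe that the statement is essentially a continuity-plus-consistency argument: Theorem~\ref{theorem:MI_theorem1} tells us that the \emph{unique} maximizer $\hat{T}$ of $\mathcal{J}_f(T)$ satisfies $\hat{T}(\mathbf{x},\mathbf{y}) = f^{\prime}\bigl(p_{XY}/(p_Xp_Y)\bigr)$, and that plugging this optimizer into the estimator formula recovers exactly $I(X;Y)$. So the burden of the lemma is not to re-derive the optimizer, but to justify that the \emph{iterative} procedure $T^{(n+1)} = T^{(n)} + \mu\,\nabla\mathcal{J}_f(T^{(n)})$ actually reaches $\hat{T}$, and that convergence of $T^{(n)}$ to $\hat{T}$ implies convergence of $I_{fDIME}$ evaluated at $T^{(n)}$ to $I_{fDIME}$ evaluated at $\hat{T}$.

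The key steps I would carry out, in order, are as follows. Step one: invoke the non-parametric (infinite-capacity) hypothesis to treat $\mathcal{J}_f(T)$ as a functional over an unconstrained function space, so that the only critical point is the global maximizer. Since $f$ is convex and lower semicontinuous, $f^*$ is convex, and the integrand of $\mathcal{J}_f$ is concave in the pointwise values of $T$; hence $\mathcal{J}_f$ is concave and its stationary point is the global optimum. Step two: appeal to the gradient-ascent convergence hypothesis stated in the lemma itself, namely that the update rule converges to $\hat{T}$. This is granted as an assumption, so I would simply note that concavity guarantees no spurious local maxima obstruct this convergence. Step three: define the estimator at iteration $n$ as
\begin{equation}
\hat{I}_n := \mathbb{E}_{(\mathbf{x},\mathbf{y}) \sim p_{XY}(\mathbf{x},\mathbf{y})}\biggl[ \log \biggl(\bigl(f^{*}\bigr)^{\prime}\bigl(T^{(n)}(\mathbf{x},\mathbf{y})\bigr) \biggr) \biggr],
\end{equation}
and show that the map $T \mapsto \log\bigl((f^*)^{\prime}(T)\bigr)$ is continuous on the relevant range, so that $T^{(n)} \to \hat{T}$ forces $\hat{I}_n \to I_{fDIME}(X;Y) = I(X;Y)$ by the continuous-mapping property under the expectation.

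The main obstacle I anticipate is Step three, specifically justifying the interchange of the limit in $n$ with the expectation operator. Pointwise convergence $T^{(n)}(\mathbf{x},\mathbf{y}) \to \hat{T}(\mathbf{x},\mathbf{y})$ does not by itself guarantee $\mathbb{E}_{p_{XY}}[\log((f^*)^{\prime}(T^{(n)}))] \to \mathbb{E}_{p_{XY}}[\log((f^*)^{\prime}(\hat{T}))]$; one needs either dominated convergence or a uniform integrability condition, together with continuity and appropriate monotonicity/boundedness of $(f^*)^{\prime}$ on the support of the density ratio. I would handle this by noting that at optimality $(f^*)^{\prime}(\hat{T}) = p_{XY}/(p_Xp_Y)$ is exactly the density ratio, whose logarithm is $p_{XY}$-integrable precisely because $I(X;Y) = \mathbb{E}_{p_{XY}}[\log(p_{XY}/(p_Xp_Y))]$ is assumed finite; a local Lipschitz bound on $\log\circ(f^*)^{\prime}$ near $\hat{T}$ then supplies the domination needed to pass the limit. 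I would remark that for the concrete generators considered (KL, GAN, Hellinger) this integrability holds automatically, so the interchange is rigorous in all cases of interest, and the estimator converges to the true mutual information as claimed.
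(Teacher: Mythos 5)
Your proposal is correct, but it proves the lemma by a different mechanism than the paper does. The paper's proof is a quantitative perturbation analysis: it defines the displacement $\delta^{(n)}=\hat{T}-T^{(n)}$, applies a first-order Taylor expansion of $\bigl(f^{*}\bigr)^{\prime}$ to relate the estimated ratio $R^{(n)}=\bigl(f^{*}\bigr)^{\prime}\bigl(T^{(n)}\bigr)$ to the true ratio $\hat{R}$, and derives the pointwise expansion
\begin{equation}
\hat{i}_{n,fDIME}(X;Y) \simeq i(X;Y) - \delta^{(n)} \cdot \frac{\mathrm{d}}{\mathrm{d}T} \log \bigl( \bigl(f^{*}\bigr)^{\prime}(T)\bigr) \biggr|_{T=\hat{T}},
\end{equation}
before letting $\delta^{(n)}\to 0$ and passing to expectations. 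Your route instead combines three qualitative ingredients: concavity of $\mathcal{J}_f$ (so the stationary point is the global maximum), the assumed convergence of gradient ascent, and continuity of $T \mapsto \log\bigl(\bigl(f^{*}\bigr)^{\prime}(T)\bigr)$ together with dominated convergence to move the limit inside $\mathbb{E}_{p_{XY}}$. What each buys: your argument is more honest about the one step the paper glosses over, namely the interchange of limit and expectation (the paper simply asserts that ``in the asymptotic limit it holds also for the expected values''), and your domination argument via integrability of $\log\bigl(p_{XY}/(p_Xp_Y)\bigr)$ is exactly the right patch; the paper's Taylor expansion, on the other hand, yields a quantitative first-order characterization of the estimation bias in terms of $\delta^{(n)}$ and the logarithmic derivative of $\bigl(f^{*}\bigr)^{\prime}$ at $\hat{T}$, which is the stated purpose of the lemma in the surrounding text (it grounds how different generators $f$ affect the numerical bias when training has not fully converged) and which your purely qualitative continuity argument does not recover. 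Your concavity observation in Step one is a sound addition not present in the paper, though strictly redundant given that convergence to the global optimum is already a hypothesis of the lemma.
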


\begin{proof}
For convenience of notation, let the instantaneous mutual information be the random variable defined as
\begin{equation}
i(X;Y) := \log\biggl(\frac{p_{XY}(\mathbf{x},\mathbf{y})}{p_X(\mathbf{x})p_Y(\mathbf{y})}\biggr).
\end{equation}
It is straightforward to notice that the MI corresponds to the expected value of $i(X;Y)$ over the joint distribution $p_{XY}$.
The solution to \eqref{eq:MI_Lemma4_problem} is given by \eqref{eq:MI_optimal_ratio_T} of Theorem \ref{theorem:MI_theorem1}. Let $\delta^{(n)}=\hat{T}-T^{(n)}$ be the displacement between the optimum discriminator $\hat{T}$ and the obtained one $T^{(n)}$ at the iteration $n$, then
\begin{equation}
\hat{i}_{n,fDIME}(X;Y)  = \log \biggl(\bigl(f^{*}\bigr)^{\prime}\bigl(T^{(n)}(\mathbf{x},\mathbf{y}) \bigr) \biggr) = \log \biggl(R^{(n)}(\mathbf{x},\mathbf{y}) \biggr),
\end{equation}
where $R^{(n)}(\mathbf{x},\mathbf{y})$ represents the estimated density ratio at the $n$-th iteration and it is related with the optimum ratio $\hat{R}(\mathbf{x},\mathbf{y})$ as follows
\begin{align}
\hat{R} - R^{(n)} & = \bigl(f^{*}\bigr)^{\prime}\bigl(\hat{T}\bigr) - \bigl(f^{*}\bigr)^{\prime}\bigl(T^{(n)}\bigr) \nonumber \\  
& = \bigl(f^{*}\bigr)^{\prime}\bigl(\hat{T}\bigr) - \bigl(f^{*}\bigr)^{\prime}\bigl(\hat{T}-\delta^{(n)}\bigr)  \nonumber \\
& \simeq  \delta^{(n)}\cdot\biggl[\bigl(f^{*}\bigr)^{\prime \prime}\bigl(\hat{T}-\delta^{(n)}\bigr)\biggr],
\end{align}
where the last step follows from a first order Taylor expansion in $\hat{T}-\delta^{(n)}$.
Therefore,
\begin{align}
& \hat{i}_{n,fDIME}(X;Y) = \log \bigl(R^{(n)}\bigr) \nonumber \\
 & =  \log \Biggl(\bigl(\hat{R}\bigr)\biggl(1-\delta^{(n)}\cdot \frac{\bigl(f^{*}\bigr)^{\prime \prime}\bigl(\hat{T}-\delta^{(n)}\bigr)}{\bigl(f^{*}\bigr)^{\prime}\bigl(\hat{T}\bigr)}  \biggr)\Biggr) \nonumber \\ 
& = i(X;Y) + \log \Biggl(1-\delta^{(n)}\cdot \frac{\bigl(f^{*}\bigr)^{\prime \prime}\bigl(\hat{T}-\delta^{(n)}\bigr)}{\bigl(f^{*}\bigr)^{\prime}\bigl(\hat{T}\bigr)}\Biggr).
\end{align}

If the gradient descent method converges towards the optimum solution $\hat{T}$, $\delta^{(n)} \rightarrow 0$ and 
\begin{align}
& \hat{i}_{n,fDIME}(X;Y) \simeq i(X;Y) - \delta^{(n)} \cdot \Biggl[\frac{\bigl(f^{*}\bigr)^{\prime \prime}\bigl(\hat{T}-\delta^{(n)}\bigr)}{\bigl(f^{*}\bigr)^{\prime}\bigl(\hat{T}\bigr)} \Biggr] \nonumber \\ 
& \simeq i(X;Y) - \delta^{(n)} \cdot \Biggl[\frac{\bigl(f^{*}\bigr)^{\prime \prime}\bigl(\hat{T}\bigr)}{\bigl(f^{*}\bigr)^{\prime}\bigl(\hat{T}\bigr)} \Biggr] \nonumber \\
& = i(X;Y) - \delta^{(n)} \cdot \Biggl[\frac{\mathrm{d}}{\mathrm{d}T} \log \bigl( \bigl(f^{*}\bigr)^{\prime}(T)\bigr) \biggr|_{T=\hat{T}} \Biggr],
\end{align}
where the RHS is itself a first order Taylor expansion of the instantaneous mutual information in $\hat{T}$. 
In the asymptotic limit ($n\rightarrow +\infty$), it holds also for the expected values that 
\begin{equation}
|I(X;Y)-\hat{I}_{n,fDIME}(X;Y)|\rightarrow 0.
\end{equation}
\end{proof}

\begin{lemma}[Related to Lemma~\ref{lemma:MI_Lemma2}]
Let $\hat{R} = p_{XY}(\mathbf{x},\mathbf{y})/ (p_{X}(\mathbf{x}) p_Y(\mathbf{y}))$ and assume $\text{Var}_{p_{XY}}[\log \hat{R}]$ exists. Let $p^M_{XY}$ be the empirical distribution of $M$ i.i.d. samples from $p_{XY}$ and let $\mathbb{E}_{p^M_{XY}}$ denote the sample average over $p^M_{XY}$. Then, it holds that

\begin{equation}
\text{Var}_{p_{XY}}\bigl[\mathbb{E}_{p^M_{XY}} [\log \hat{R}] \bigr] \leq \frac{ 4H^2(p_{XY},p_Xp_Y)\biggl|\biggl|\frac{p_{XY}}{p_Xp_Y}\biggr|\biggr|_{\infty}-I^2(X;Y)}{M}
\end{equation}
where $H^2$ is the Hellinger distance squared.
\end{lemma}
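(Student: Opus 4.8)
The plan is to exploit that $\mathbb{E}_{p^M_{XY}}[\log\hat{R}]$ is simply the empirical average $\tfrac{1}{M}\sum_{i=1}^{M}\log\hat{R}(\mathbf{x}_i,\mathbf{y}_i)$ of $M$ i.i.d. copies of the scalar random variable $\log\hat{R}(X,Y)$ with $(X,Y)\sim p_{XY}$. First I would invoke independence to reduce the variance of the average to a single-sample variance,
\begin{equation}
\text{Var}_{p_{XY}}\bigl[\mathbb{E}_{p^M_{XY}}[\log\hat{R}]\bigr]=\frac{1}{M}\,\text{Var}_{p_{XY}}\bigl[\log\hat{R}\bigr],
\end{equation}
which is well defined by the standing assumption that $\text{Var}_{p_{XY}}[\log\hat{R}]$ exists. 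Next I would note that, since $\hat{R}=p_{XY}/(p_Xp_Y)$, the first moment is exactly the mutual information, $\mathbb{E}_{p_{XY}}[\log\hat{R}]=D_{\text{KL}}(p_{XY}\|p_Xp_Y)=I(X;Y)$, so the single-sample variance splits as
\begin{equation}
\text{Var}_{p_{XY}}\bigl[\log\hat{R}\bigr]=\mathbb{E}_{p_{XY}}\bigl[(\log\hat{R})^2\bigr]-I^2(X;Y).
\end{equation}
The whole problem then reduces to bounding the second moment $\mathbb{E}_{p_{XY}}[(\log\hat{R})^2]$ by the Hellinger term, after which the $-I^2(X;Y)$ is carried through verbatim.

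The engine of the argument is an elementary pointwise inequality for the density ratio $r=\hat{R}\ge 0$. Setting $u=\sqrt{r}$ so that $\log r=2\log u$, I would first establish $u(\log u)^2\le(u-1)^2$ for all $u>0$. The difference $g(u)=(u-1)^2-u(\log u)^2$ satisfies $g(1)=g'(1)=g''(1)=0$, and since $g'''(u)=2\log u/u^2$ changes sign only at $u=1$, the function $g''$ attains its global minimum value $0$ there, forcing $g''\ge 0$, hence $g'$ nondecreasing with $g'(1)=0$, hence $g$ globally minimised at $u=1$, so $g\ge 0$. Rewritten in $r$ this reads $\sqrt{r}\,(\log r)^2\le 4(\sqrt{r}-1)^2$. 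Multiplying by $\sqrt{r}$ and dominating $\sqrt{r}$ by $\lVert r\rVert_\infty$ — legitimate because $\int r\,p_Xp_Y=\int p_{XY}=1$ forces $\lVert r\rVert_\infty\ge 1$ and therefore $\sqrt{r}\le\sqrt{\lVert r\rVert_\infty}\le\lVert r\rVert_\infty$ — gives the pointwise domination $r(\log r)^2\le 4\lVert r\rVert_\infty(\sqrt{r}-1)^2$. Integrating against $p_Xp_Y$ and recognising the residual integral as a Hellinger term, $\int p_Xp_Y(\sqrt{r}-1)^2=\int(\sqrt{p_{XY}}-\sqrt{p_Xp_Y})^2$, I would conclude
\begin{equation}
\mathbb{E}_{p_{XY}}\bigl[(\log\hat{R})^2\bigr]=\int p_Xp_Y\,r(\log r)^2\le 4\,\Bigl\lVert\tfrac{p_{XY}}{p_Xp_Y}\Bigr\rVert_\infty H^2(p_{XY},p_Xp_Y),
\end{equation}
up to the normalisation constant fixed in the definition \eqref{eq:HD-distance} of $H^2$. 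Substituting this second-moment estimate into the variance split and dividing by $M$ yields the claimed bound.

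The main obstacle I expect is precisely this second-moment step, for two reasons. First, the naive inequality $(\log r)^2\le C(\sqrt{r}-1)^2$ fails uniformly (it breaks as $r\to 0$, where $(\log r)^2$ blows up while $(\sqrt r-1)^2\to 1$), so retaining the extra factor $\sqrt{r}$ in $\sqrt{r}(\log r)^2\le 4(\sqrt{r}-1)^2$ is essential; it is then the uniform domination of $\sqrt r$ by $\lVert p_{XY}/(p_Xp_Y)\rVert_\infty$ that both converts the integrand into the Hellinger integrand and explains why the $L^\infty$ norm of the ratio must appear in the bound. Second, one must keep careful track of the Hellinger normalisation to land on the exact constant. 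Finiteness of the right-hand side, and hence of the variance, is guaranteed by the standing hypothesis; as a consistency check I would specialise to jointly Gaussian $X,Y$, where the companion Gaussian computation gives the exact finite value $\text{Var}_{p_{XY}}[\mathbb{E}_{p^M_{XY}}[\log\hat{R}]]=(1-e^{-2I(X;Y)})/M$, confirming that the estimator's variance stays bounded even as the true mutual information grows.
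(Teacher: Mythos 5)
Your proof is correct and follows the same skeleton as the paper's: (i) independence of the $M$ samples reduces the variance of the sample mean to $\text{Var}_{p_{XY}}[\log\hat{R}]/M$; (ii) the first moment of $\log\hat{R}$ under $p_{XY}$ is exactly $I(X;Y)$, so the single-sample variance splits into the second moment minus $I^2(X;Y)$; (iii) the second moment is bounded by the Hellinger-times-sup-norm term. The genuine difference lies in step (iii): the paper disposes of it by citing Lemma 8.3 of Ghosal et al., whereas you prove it from first principles via the pointwise inequality $u(\log u)^2\le (u-1)^2$ (your sign analysis of $g'''(u)=2\log u/u^2$ is sound), the substitution $u=\sqrt{r}$, and the domination $\sqrt{r}\le\|r\|_\infty$ justified by $\|r\|_\infty\ge 1$. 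This self-contained argument is a real addition, and it also surfaces a point the paper glosses over: your chain of inequalities yields $\mathbb{E}_{p_{XY}}[(\log\hat{R})^2]\le 4\,\|r\|_\infty\int(\sqrt{p_{XY}}-\sqrt{p_Xp_Y})^2$, i.e., the constant $4$ is correct when $H^2$ denotes the \emph{unnormalized} Hellinger distance squared (the convention of Ghosal et al.), whereas under the paper's own definition \eqref{eq:HD-distance}, which carries a factor $1/2$, the provable constant becomes $8$. Indeed, with that normalization the stated bound with constant $4$ is violated in the limit $p_{XY}\to p_Xp_Y$, where $\mathbb{E}_{p_{XY}}[(\log\hat{R})^2]\approx\int p_Xp_Y\,(r-1)^2$ while $4H^2\|r\|_\infty\approx\frac{1}{2}\int p_Xp_Y\,(r-1)^2$. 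So your normalization caveat is not cosmetic: the lemma as written is only consistent with the unnormalized convention, and your derivation makes that explicit where the paper's citation hides it.
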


\begin{proof}
Consider the variance of $\hat{R}(\mathbf{x},\mathbf{y})$ when $(\mathbf{x},\mathbf{y})\sim p_{XY}(\mathbf{x},\mathbf{y})$, then
\begin{equation}
\text{Var}_{p_{XY}}[\log \hat{R}] = \mathbb{E}_{p_{XY}}\biggl[\biggl(\log \frac{p_{XY}}{p_Xp_Y}\biggr)^2\biggr] - \biggl(\mathbb{E}_{p_{XY}}\biggl[\log \frac{p_{XY}}{p_Xp_Y}\biggr]\biggr)^2.
\end{equation}
The power of the log-density ratio is upper bounded as follows (see the approach of Lemma 8.3 in \cite{Ghosal2000})
\begin{equation}
\mathbb{E}_{p_{XY}}\biggl[\biggl(\log \frac{p_{XY}}{p_Xp_Y}\biggr)^2\biggr] \leq 4H^2(p_{XY},p_Xp_Y)\biggl|\biggl|\frac{p_{XY}}{p_Xp_Y}\biggr|\biggr|_{\infty},
\end{equation}
while the mean squared is the ground-truth mutual information squared, thus
\begin{equation}
\text{Var}_{p_{XY}}[\log \hat{R}] \leq 4H^2(p_{XY},p_Xp_Y)\biggl|\biggl|\frac{p_{XY}}{p_Xp_Y}\biggr|\biggr|_{\infty} - I^2(X;Y).
\end{equation}
Finally, the variance of the mean of $M$ i.i.d. random variables yields the thesis
\begin{align}
\text{Var}_{p_{XY}}\bigl[\mathbb{E}_{p^M_{XY}} [\log \hat{R}] \bigr]  = \frac{\text{Var}_{p_{XY}}[\log \hat{R}]}{M}  \leq \frac{ 4H^2(p_{XY},p_Xp_Y)\biggl|\biggl|\frac{p_{XY}}{p_Xp_Y}\biggr|\biggr|_{\infty}-I^2(X;Y)}{M}.
\end{align}
\end{proof}

\begin{lemma}[Related to Lemma~\ref{lemma:MI_lemma4}]
Let $(\mathbf{x}_i,\mathbf{y}_i)$, $\forall i\in \{1,\dots,N\}$, be $N$ data points. Let $\mathcal{J}_{f}(T)$ be the value function in \eqref{eq:MI_discriminator_function_f}. Let $\mathcal{J}_{f}^{\pi}(T)$ and $\mathcal{J}_{f}^{\sigma}(T)$ be numerical implementations of $\mathcal{J}_{f}(T)$ using a random permutation and a random derangement of $\mathbf{y}$, respectively. Denote with $K$ the number of points $\mathbf{y}_k$, with $k \in \{1,\dots, N\}$, in the same position after the permutation (i.e., the fixed points). Then
\begin{equation}
\mathcal{J}_{f}^{\pi}(T) \leq \frac{N-K}{N} \mathcal{J}_{f}^{\sigma}(T).
%\label{eq:MI_perm_vs_derang}
\end{equation}
\end{lemma}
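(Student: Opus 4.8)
The plan is to expand both value functions into their Monte Carlo (sample-average) forms, group the summands by index, and isolate the contribution of the fixed points of $\pi$, which I will show is non-positive by a Fenchel--Young argument.

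First I would write the empirical value function associated with an arbitrary permutation $\rho$ of the $N$ realizations of $\mathbf{y}$ as
\[
\mathcal{J}_f^{\rho}(T) = \frac{1}{N}\sum_{i=1}^N \Bigl[T(\mathbf{x}_i,\mathbf{y}_i) - f^*\bigl(T(\mathbf{x}_i,\mathbf{y}_{\rho(i)})\bigr)\Bigr],
\]
where the first term estimates $\mathbb{E}_{p_{XY}}[T]$ from the paired samples $(\mathbf{x}_i,\mathbf{y}_i)$ and the second estimates $\mathbb{E}_{p_Xp_Y}[f^*(T)]$ from the shuffled samples $(\mathbf{x}_i,\mathbf{y}_{\rho(i)})$. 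Letting $F$ be the set of the $K$ fixed points of $\pi$ (so $\pi(i)=i$) and $M=\{1,\dots,N\}\setminus F$ the remaining $N-K$ indices, the choice $\rho=\pi$ splits as
\[
\mathcal{J}_f^{\pi}(T) = \frac{1}{N}\sum_{i\in F}\Bigl[T(\mathbf{x}_i,\mathbf{y}_i)-f^*\bigl(T(\mathbf{x}_i,\mathbf{y}_i)\bigr)\Bigr] + \frac{1}{N}\sum_{i\in M}\Bigl[T(\mathbf{x}_i,\mathbf{y}_i)-f^*\bigl(T(\mathbf{x}_i,\mathbf{y}_{\pi(i)})\bigr)\Bigr],
\]
since on $F$ the permutation acts trivially and $\mathbf{y}_{\pi(i)}=\mathbf{y}_i$.

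The crucial observation is that every fixed-point summand is non-positive. Indeed, the Fenchel--Young inequality $f(u)+f^*(t)\geq ut$ evaluated at $u=1$, together with the normalization $f(1)=0$ assumed in Theorem~\ref{theorem:MI_theorem1}, gives $t\leq f^*(t)$ for all $t\in\mathbb{R}$; applying this with $t=T(\mathbf{x}_i,\mathbf{y}_i)$ shows $T(\mathbf{x}_i,\mathbf{y}_i)-f^*(T(\mathbf{x}_i,\mathbf{y}_i))\leq 0$. Hence the sum over $F$ is non-positive and can be dropped to obtain the upper bound
\[
\mathcal{J}_f^{\pi}(T) \leq \frac{N-K}{N}\cdot\frac{1}{N-K}\sum_{i\in M}\Bigl[T(\mathbf{x}_i,\mathbf{y}_i)-f^*\bigl(T(\mathbf{x}_i,\mathbf{y}_{\pi(i)})\bigr)\Bigr].
\]

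Finally I would identify the averaged sum over $M$ with $\mathcal{J}_f^{\sigma}(T)$. Because $\pi$ fixes $F$ pointwise and is a bijection, it maps $M$ onto $M$ with no fixed point, so $\pi|_M$ is a genuine derangement of the $N-K$ indices in $M$; the bracketed average is therefore exactly a derangement-based empirical evaluation of the value function \eqref{eq:MI_discriminator_function_f} on those samples, i.e.\ a realization of $\mathcal{J}_f^{\sigma}(T)$. Both the sub-sample average and the full derangement estimate are consistent (unbiased) estimators of the same quantity $\mathcal{J}_f(T)$, and identifying them yields $\mathcal{J}_f^{\pi}(T)\leq \tfrac{N-K}{N}\mathcal{J}_f^{\sigma}(T)$. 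The main obstacle I anticipate is precisely this last identification: for finite $N$ the average over $M$ and the estimate $\mathcal{J}_f^{\sigma}(T)$ are distinct random objects, so the equality must be read in the sense of their common expectation (the exact value function $\mathcal{J}_f(T)$), and I would state that interpretation explicitly rather than assert a pathwise equality.
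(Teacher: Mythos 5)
Your proof is correct and follows essentially the same route as the paper's: the same Monte Carlo expansion split into the $K$ fixed points and the $N-K$ deranged indices, the same Fenchel--Young inequality at $t=1$ (using $f(1)=0$) to show the fixed-point summands are non-positive, and the same identification of the remaining sum with $\tfrac{N-K}{N}\mathcal{J}_f^{\sigma}(T)$. If anything, you are more careful than the paper at the final step, since you explicitly flag that equating the sub-sample average over the non-fixed indices with $\mathcal{J}_f^{\sigma}(T)$ is an identification of two distinct empirical objects (read through their common expectation), whereas the paper simply asserts it.
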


\begin{proof}
Define $\mathcal{J}_{f}^{\pi}(T)$ as the Monte Carlo implementation of $\mathcal{J}_{f}(T)$ when using the permutation function $\pi(\cdot)$
\begin{equation}
    \mathcal{J}_{f}^{\pi}(T) = \frac{1}{N}\sum_{i=1}^{N}{T(\mathbf{x}_i,\mathbf{y}_i}) -  \frac{1}{N}\sum_{i=1}^{N}{f^{*}\bigl(T(\mathbf{x}_i,\mathbf{y}_j})\bigr),
\end{equation}
where the pair $(\mathbf{x}_i,\mathbf{y}_j)$ is obtained via a random permutation of the elements of $\mathbf{y}$ as $j=\pi(i)$, $\forall i \in \{1,\dots,N\}$.
Since $K$ is a non-negative integer representing the number of fixed points $i=\pi(i)$, the value function can be rewritten as
\begin{equation}
    \label{eq:MI_monte-carlo-1}
    \mathcal{J}_{f}^{\pi}(T) = \frac{1}{N}\sum_{i=1}^{N}{T(\mathbf{x}_i,\mathbf{y}_i)} -  \frac{1}{N}\sum_{i=1}^{K}{f^{*}\bigl(T(\mathbf{x}_i,\mathbf{y}_i)\bigr)} -
    \frac{1}{N}\sum_{i=1}^{N-K}{f^{*}\bigl(T(\mathbf{x}_i,\mathbf{y}_{j\neq i})\bigr)},
\end{equation}
which can also be expressed as
\begin{equation}
\label{eq:MI_51}
    \mathcal{J}_{f}^{\pi}(T) = \frac{1}{N}\sum_{i=1}^{K}{T(\mathbf{x}_i,\mathbf{y}_i)} + \frac{1}{N}\sum_{i=1}^{N-K}{T(\mathbf{x}_i,\mathbf{y}_i)} -  \frac{1}{N}\sum_{i=1}^{K}{f^{*}\bigl(T(\mathbf{x}_i,\mathbf{y}_i)\bigr)} -
    \frac{1}{N}\sum_{i=1}^{N-K}{f^{*}\bigl(T(\mathbf{x}_i,\mathbf{y}_{j\neq i})\bigr)}.
\end{equation}
In \eqref{eq:MI_51} it is possible to recognize that the second and last term of the RHS constitutes the numerical implementation of $\mathcal{J}_{f}(T)$ using a derangement strategy on $N-K$ elements, so that
\begin{equation}
    \mathcal{J}_{f}^{\pi}(T) = \frac{1}{N}\sum_{i=1}^{K}{T(\mathbf{x}_i,\mathbf{y}_i)} -\frac{1}{N}\sum_{i=1}^{K}{f^{*}\bigl(T(\mathbf{x}_i,\mathbf{y}_i)\bigr)} + \frac{N-K}{N} {J}_{f}^{\sigma}(T).
\end{equation}
However, by definition of Fenchel conjugate
\begin{equation}
    \frac{1}{N}\sum_{i=1}^{K}{T(\mathbf{x}_i,\mathbf{y}_i)-f^{*}\bigl(T(\mathbf{x}_i,\mathbf{y}_i)\bigr)} \leq 0,
\end{equation}
since for $t=1$
\begin{equation}
    u-f^*(u) \leq u - (ut -f(t)) = f(1) = 0.
\end{equation}
Hence, we can conclude that
\begin{equation}
    \mathcal{J}_{f}^{\pi}(T) \leq \frac{N-K}{N} {J}_{f}^{\sigma}(T).
\end{equation}
\end{proof}

\begin{lemma}[Related to Lemma~\ref{lemma:MI_finite_variance_gaus}]
Let $\hat{R}$ be the optimal density ratio and let  $X\sim \mathcal{N}(0,\sigma_X^2)$ and $N\sim \mathcal{N}(0,\sigma_N^2)$ be uncorrelated scalar Gaussian random variables such that $Y=X+N$. Assume $\text{Var}_{p_{XY}}[\log \hat{R}]$ exists. Let $p^M_{XY}$ be the empirical distribution of $M$ i.i.d. samples from $p_{XY}$ and let $\mathbb{E}_{p^M_{XY}}$ denote the sample average over $p^M_{XY}$. Then, it holds that

\begin{equation}
%\label{eq:MI_variance_f_dime}
\text{Var}_{p_{XY}}\bigl[\mathbb{E}_{p^M_{XY}} [\log \hat{R}] \bigr] = \frac{ 1-e^{-2I(X;Y)}}{M}.
\end{equation}
\end{lemma}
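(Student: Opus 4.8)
The plan is to exploit the fact that $(X,Y)$ is jointly Gaussian, so the optimal log-density ratio $\log\hat{R}$ admits a closed form whose distribution under $p_{XY}$ can be analyzed directly. First I would record the second-order statistics: since $X$ and $N$ are independent, $Y=X+N$ is zero-mean Gaussian with variance $\sigma_X^2+\sigma_N^2$, $\text{Cov}(X,Y)=\sigma_X^2$, and the correlation coefficient satisfies $\rho^2 = \sigma_X^2/(\sigma_X^2+\sigma_N^2)$. The classical Gaussian mutual information identity then gives $I(X;Y)=-\tfrac{1}{2}\log(1-\rho^2)$, equivalently $1-\rho^2 = e^{-2I(X;Y)}$, which is the link that will convert a variance expressed in $\rho$ into the claimed form.

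Next I would write the bivariate normal densities explicitly and, introducing the normalized variables $a=x/\sigma_X$ and $b=y/\sigma_Y$ (standard normal, correlation $\rho$), simplify the log-ratio. After combining the two quadratic forms, the log-ratio reduces to
\begin{equation}
\log\hat{R} = -\frac{1}{2}\log(1-\rho^2) + \frac{\rho}{2(1-\rho^2)}\Bigl(2ab-\rho(a^2+b^2)\Bigr).
\end{equation}
Denoting the bracketed quantity by $W$, a quick moment check ($\mathbb{E}[ab]=\rho$, $\mathbb{E}[a^2]=\mathbb{E}[b^2]=1$) shows $\mathbb{E}[W]=0$, so the mean of $\log\hat{R}$ is exactly $I(X;Y)$, as it must be, and the variance is governed entirely by $W$.

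The core computation is then $\text{Var}[\log\hat{R}] = \frac{\rho^2}{4(1-\rho^2)^2}\,\mathbb{E}[W^2]$. To evaluate $\mathbb{E}[W^2]=4\,\mathbb{E}[a^2b^2]-4\rho\,\mathbb{E}[ab(a^2+b^2)]+\rho^2\,\mathbb{E}[(a^2+b^2)^2]$, I would use the Gaussian conditioning $b=\rho a+\sqrt{1-\rho^2}\,z$ with $z\sim\mathcal{N}(0,1)$ independent of $a$ to obtain the required fourth-order moments, namely $\mathbb{E}[a^2b^2]=1+2\rho^2$, $\mathbb{E}[ab(a^2+b^2)]=6\rho$, and $\mathbb{E}[(a^2+b^2)^2]=8+4\rho^2$. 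Substituting yields $\mathbb{E}[W^2]=4(1-\rho^2)^2$, so the prefactor cancels perfectly and $\text{Var}_{p_{XY}}[\log\hat{R}]=\rho^2=1-e^{-2I(X;Y)}$.

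Finally, since $\mathbb{E}_{p^M_{XY}}[\log\hat{R}]$ is the average of $M$ i.i.d. copies of $\log\hat{R}$, its variance is $1/M$ times the single-sample variance, giving the claimed $\bigl(1-e^{-2I(X;Y)}\bigr)/M$. The main obstacle is purely computational: correctly assembling the fourth-order bivariate moments so that the $(1-\rho^2)^2$ factors cancel exactly; the conditioning representation makes these moments routine and removes any sign or coefficient ambiguity.
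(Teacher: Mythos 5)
Your proposal is correct and takes essentially the same approach as the paper: both write $\log\hat{R}$ in closed form as a constant plus a mean-zero Gaussian quadratic form, identify the mean with $I(X;Y)$, compute the single-sample variance $\rho^2 = 1-e^{-2I(X;Y)}$ via fourth-order Gaussian moments, and then divide by $M$ for the i.i.d.\ sample average. The only difference is cosmetic: the paper works with the conditional form $\hat{R}=p_N(y-x)/p_Y(y)$ and the normalized pair $(y-x)/\sigma_N$, $y/\sigma_Y$, invoking the kurtosis of the normal, whereas you work with $p_{XY}/(p_Xp_Y)$ in the variables $x/\sigma_X$, $y/\sigma_Y$ and evaluate the Isserlis-type moments by conditioning.
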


\begin{proof}
From the hypothesis, the density ratio can be rewritten as $\hat{R} = p_N(y-x)/ p_Y(y)$ and the output variance is clearly equal to $\sigma_Y^2 = \sigma_X^2 + \sigma_N^2$. Notice that this is equivalent of having correlated random variables $X$ and $Y$ with correlation coefficient $\rho$, since it is enough to study the case $\sigma_X = \rho$ and $\sigma_N=\sqrt{1-\rho^2}$.

It is easy to verify via simple calculations that
\begin{align}
I(X;Y) = & \; \mathbb{E}_{p_{XY}} [\log \hat{R}] \nonumber \\
= & \log \frac{\sigma_Y}{\sigma_N} + \mathbb{E}_{p_{XY}}\biggl[ \frac{y^2}{2\sigma_Y^2} - \frac{(y-x)^2}{2\sigma_N^2}\biggr] \nonumber \\
= & \dots = \log \frac{\sigma_Y}{\sigma_N} = \frac{1}{2}\log\biggl(1+\frac{\sigma_X^2}{\sigma_N^2}\biggr)=-\frac{1}{2}\log\bigl(1-\rho^2\bigr).
\label{eq:MI_gaussians}
\end{align}
Similarly,
\begin{align}
& \text{Var}_{p_{XY}}\bigl[\log \hat{R} \bigr] = \mathbb{E}_{p_{XY}}\biggl[\biggl(\log\biggl(\frac{\sigma_Y}{\sigma_N}\biggl) +  \frac{y^2}{2\sigma_Y^2} - \frac{(y-x)^2}{2\sigma_N^2}\biggr)^2\biggr] - I^2(X;Y) \nonumber \\
& = \frac{1}{4}\mathbb{E}_{p_{XY}}\biggl[ \biggl(\frac{y-x}{\sigma_N}\biggr)^4 + \biggl(\frac{y}{\sigma_Y}\biggr)^4 -2 \biggl(\frac{y}{\sigma_Y}\biggr)^2 \biggl(\frac{y-x}{\sigma_N}\biggr)^2 \biggr] \nonumber \\
& = \dots = \text{Kurt}[Z]\biggl(\frac{1}{2}-\frac{\sigma_N^2}{2\sigma_Y^2}\biggr) - \frac{\sigma_X^2}{2\sigma_Y^2} \nonumber \\
& = \frac{\sigma_X^2}{\sigma_Y^2} = 1-\frac{\sigma_N^2}{\sigma_Y^2} = 1-e^{-2I(X;Y)}=\rho^2,
\end{align}
where the last steps use the fact that the Kurtosis of a normal distribution is $3$ and that the mutual information can be expressed as in \eqref{eq:MI_gaussians}. 
Finally, the variance of the mean of $M$ i.i.d. random variables yields the thesis
\begin{equation}
\text{Var}_{p_{XY}}\bigl[\mathbb{E}_{p^M_{XY}} [\log \hat{R}] \bigr]  =\frac{\text{Var}_{p_{XY}}[\log \hat{R}]}{M}. 
\end{equation}
If $X$ and $N$ are multivariate Gaussians with diagonal covariance matrices $\rho^2\mathbb{I}_{d\times d}$ and $(1-\rho^2) \mathbb{I}_{d\times d}$, the results for both the MI and variance in the scalar case are simply multiplied by $d$.
\end{proof}

\begin{theorem}[Related to Theorem~\ref{theorem:MI_permutationsBound}]
Let the discriminator $D(\cdot)$ be with enough capacity. Let $N$ be the batch size and $f$ be the generator of the KL divergence. Let $\mathcal{J}_{KL}^{\pi}(D)$ be defined as
\begin{equation}
\mathcal{J}_{KL}^{\pi}(D) =  \mathbb{E}_{(\mathbf{x},\mathbf{y}) \sim p_{XY}(\mathbf{x},\mathbf{y})}\biggl[\log\biggl(D\bigl(\mathbf{x},\mathbf{y}\bigr)\biggr)-f^*\biggl(\log\biggl(D\bigl(\mathbf{x},\pi(\mathbf{y})\bigr)\biggr)\biggr)\biggr].
\end{equation}
Denote with $K$ the number of indices in the same position after the permutation (i.e., the fixed points), and with $R(\mathbf{x},\mathbf{y})$ the density ratio in \eqref{eq:MI_density_ratio_1}.
Then,
\begin{equation}
%\label{eq:MI_optimal_ratio_perm_KL}
\hat{D}(\mathbf{x},\mathbf{y}) =\arg \max_D \mathcal{J}_{KL}^{\pi}(D) = \frac{NR(\mathbf{x},\mathbf{y})}{KR(\mathbf{x},\mathbf{y})+N-K}.
\end{equation}
\end{theorem}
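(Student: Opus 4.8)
The plan is to reduce the permutation-based value function to a pointwise optimization problem, exactly as in the proof of Theorem~\ref{theorem:MI_theorem1}, but this time carefully tracking how the permutation $\pi(\cdot)$ distributes its action into fixed points and non-fixed points. First I would recall that for the KL divergence we use the generator $f(u)=u\log u$, whose Fenchel conjugate is $f^*(t)=\exp(t-1)$. With the change of variable $T(\mathbf{x},\mathbf{y})=\log(D(\mathbf{x},\mathbf{y}))$ already implicit in the statement, the term $f^*(\log D)=\exp(\log D - 1)=D/e$, so the second expectation becomes (up to the constant $1/e$) an expectation of $D$ itself over the \emph{permuted} pairs. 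The crux is to write out what distribution the permuted samples $(\mathbf{x},\pi(\mathbf{y}))$ actually follow on average.

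The key step is to model the average sampling distribution induced by a random permutation with exactly $K$ fixed points. With probability $K/N$ a given permuted pair is in fact a joint sample $(\mathbf{x},\mathbf{y})\sim p_{XY}$, and with probability $(N-K)/N$ it is a genuine marginal sample $(\mathbf{x},\mathbf{y})\sim p_X p_Y$. Thus the effective measure against which the second expectation is taken is the mixture
\begin{equation}
q_\pi(\mathbf{x},\mathbf{y}) = \frac{K}{N}\,p_{XY}(\mathbf{x},\mathbf{y}) + \frac{N-K}{N}\,p_X(\mathbf{x})p_Y(\mathbf{y}).
\end{equation}
Substituting this into $\mathcal{J}_{KL}^{\pi}(D)$, I would obtain a functional of the form
\begin{equation}
\mathcal{J}_{KL}^{\pi}(D) = \int p_{XY}\,\log D \,\diff\mathbf{x}\diff\mathbf{y} - \int q_\pi \, \frac{D}{e}\,\diff\mathbf{x}\diff\mathbf{y},
\end{equation}
which is now amenable to pointwise maximization.

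The routine computation is then to take the functional derivative of the integrand with respect to $D$ at each point $(\mathbf{x},\mathbf{y})$ and set it to zero. This yields $p_{XY}/D - q_\pi/e = 0$, hence $\hat{D} = e\,p_{XY}/q_\pi$. Writing $p_{XY}=R\,p_Xp_Y$ with $R$ the density ratio of \eqref{eq:MI_density_ratio_1}, and dividing numerator and denominator by $p_Xp_Y$, one gets
\begin{equation}
\hat{D}(\mathbf{x},\mathbf{y}) = \frac{e\,R}{\frac{K}{N}R + \frac{N-K}{N}} = \frac{e\,N R}{K R + N - K}.
\end{equation}
A harmless rescaling (absorbing the constant $e$ into the parametrization, consistent with how the $T\mapsto\log D$ substitution was fixed) gives the claimed $\hat{D} = NR/(KR+N-K)$; I would verify the second derivative is negative so this is indeed a maximum, as in Lemma~\ref{lemma:MIND_Lemma1}.

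The main obstacle I anticipate is making the mixture-measure argument rigorous: the statement ``a fraction $K/N$ of permuted pairs are joint samples'' is an averaging claim over the randomness of $\pi$, and one must justify interchanging this averaging with the expectation defining $\mathcal{J}_{KL}^{\pi}$. The clean way is to argue that, because each non-fixed index contributes a sample whose second coordinate is independent of its first (so it is distributed as $p_X p_Y$) while each fixed index contributes a genuine joint sample, linearity of expectation gives the mixture $q_\pi$ exactly, with no approximation needed. Once this identification is secured, the remainder is the same convex-duality pointwise optimization used throughout the excerpt, and the extension to a general $f$-divergence follows verbatim by replacing $f^*(t)=\exp(t-1)$ with the general conjugate and invoking Theorem~\ref{theorem:MI_theorem1}.
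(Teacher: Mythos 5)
Your proof is correct and follows essentially the same route as the paper: the paper likewise decomposes the permuted expectation into a $K/N$ fraction of joint samples and an $(N-K)/N$ fraction of product-of-marginals samples (via Monte Carlo sums converted back to Lebesgue integrals, which is exactly your mixture measure $q_\pi$), and then maximizes pointwise in $D$. The only cosmetic difference is that the paper avoids your factor-of-$e$ bookkeeping by adopting from the outset its KL-DIME convention $\mathcal{J}_{KL}(D)=\mathbb{E}_{p_{XY}}[\log D]-\mathbb{E}_{p_X p_Y}[D]+1$, which is precisely the rescaled parametrization you recover at the end.
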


\begin{proof}
The idea of the proof is to express $\mathcal{J}_{KL}^{\pi}(D)$ via Monte Carlo approximation, in order to rearrange fixed points, and then go back to Lebesgue integration.
The value function $\mathcal{J}_{KL}(D)$ can be written as
\begin{equation}
    \mathcal{J}_{KL}(D) =  \mathbb{E}_{(\mathbf{x},\mathbf{y}) \sim p_{XY}(\mathbf{x},\mathbf{y})}\biggl[\log\bigl(D(\mathbf{x},\mathbf{y})\bigr)\biggr]  -\mathbb{E}_{(\mathbf{x},\mathbf{y}) \sim p_{X}(\mathbf{x})p_{Y}(\mathbf{y})}\biggl[D\bigl(\mathbf{x},\mathbf{y}\bigr)\biggr]+1.
\end{equation}

Similarly to \eqref{eq:MI_monte-carlo-1}, we can express $\mathcal{J}_{KL}^{\pi}(D)$ as
\begin{equation}
\label{eq:MI_monte-carlo-KL}
    \mathcal{J}_{KL}^{\pi}(D) = \frac{1}{N}\sum_{i=1}^{N}{\log\bigl(D(\mathbf{x}_i,\mathbf{y}_i)\bigr)}-  \frac{1}{N}\sum_{i=1}^{K}{D(\mathbf{x}_i,\mathbf{y}_{i})} -  \frac{1}{N}\sum_{i=1}^{N-K}{D(\mathbf{x}_i,\mathbf{y}_{j\neq i})} +1,
\end{equation}
where $K$ is the number of fixed points of the permutation $j=\pi(i), \forall i \in \{1,\dots,N\}$.
However, when $N \to \infty$, we can use Lebesgue integration and rewrite \eqref{eq:MI_monte-carlo-KL} as
\begin{align}
    \mathcal{J}_{KL}^{\pi}(D) = &\int_{\mathbf{x}}\int_{\mathbf{y}}{\biggl(p_{XY}(\mathbf{x},\mathbf{y})\log\bigl(D(\mathbf{x},\mathbf{y})\bigr) -\frac{K}{N}p_{XY}(\mathbf{x},\mathbf{y})D(\mathbf{x},\mathbf{y})\biggr)\diff \mathbf{x} \diff \mathbf{y}} \nonumber \\ 
    & -\int_{\mathbf{x}}\int_{\mathbf{y}}{\frac{N-K}{N}p_{X}(\mathbf{x})p_Y(\mathbf{y})D(\mathbf{x},\mathbf{y})\diff \mathbf{x} \diff \mathbf{y}}+1.
\end{align}
To maximize $\mathcal{J}_{KL}^{\pi}(D)$, it is enough to take the derivative of the integrand with respect to $D$ and equate it to $0$, yielding the following equation in $D$
\begin{equation}
    \frac{p_{XY}(\mathbf{x},\mathbf{y})}{D(\mathbf{x},\mathbf{y})}-\frac{K}{N}p_{XY}(\mathbf{x},\mathbf{y}) -\frac{N-K}{N}p_{X}(\mathbf{x})p_Y(\mathbf{y}) =0.
\end{equation}
Solving for $D$ leads to the thesis 
\begin{equation}
\hat{D}(\mathbf{x},\mathbf{y}) =  \frac{NR(\mathbf{x},\mathbf{y})}{KR(\mathbf{x},\mathbf{y})+N-K},
\end{equation}
since $\mathcal{J}_{KL}^{\pi}(\hat{D})$ is a maximum being the second derivative w.r.t. $D$ a non-positive function.
\end{proof}

\begin{corollary}[Related to Corollary~\ref{corollary:MI_permutationsUpperBound}]
Let KL-DIME be the estimator obtained via iterative optimization of $\mathcal{J}_{KL}^{\pi}(D)$, using a batch of size $N$ every training step. Then,
\begin{equation}
    I_{KL-DIME}^{\pi} := \mathbb{E}_{(\mathbf{x},\mathbf{y}) \sim p_{XY}(\mathbf{x},\mathbf{y})}\biggl[ \log \biggl(\hat{D}(\mathbf{x},\mathbf{y})\biggr) \biggr] < \log(N).
\end{equation}
\end{corollary}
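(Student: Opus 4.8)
The plan is to combine the closed-form optimal discriminator from Theorem~\ref{theorem:MI_permutationsBound} with the average-fixed-point count of Lemma~\ref{lemma:MI_M=1}, and then reduce the claim to a pointwise bound on the discriminator output that survives the expectation. First I would recall that, when the network maximizes $\mathcal{J}_{KL}^{\pi}(D)$ over batches of size $N$, Theorem~\ref{theorem:MI_permutationsBound} gives the optimal output
\begin{equation}
\hat{D}(\mathbf{x},\mathbf{y}) = \frac{NR(\mathbf{x},\mathbf{y})}{KR(\mathbf{x},\mathbf{y})+N-K},
\end{equation}
where $R(\mathbf{x},\mathbf{y}) = p_{XY}(\mathbf{x},\mathbf{y})/(p_X(\mathbf{x})p_Y(\mathbf{y}))$ is the density ratio in \eqref{eq:MI_density_ratio_1}, strictly positive and finite $p_{XY}$-almost everywhere, and $K$ is the number of fixed points of the random permutation. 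Since stochastic training iterates over many independently drawn batches, I would invoke Lemma~\ref{lemma:MI_M=1} to replace $K$ by its expected value $K=1$, obtaining the effective optimal profile $\hat{D}(\mathbf{x},\mathbf{y}) = NR/(R+N-1)$.

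The key analytic step is then a pointwise bound. For every $(\mathbf{x},\mathbf{y})$ and every $N\geq 2$, the inequality $\hat{D}(\mathbf{x},\mathbf{y})<N$ is equivalent, after clearing the positive denominator, to $NR < N(R+N-1)$, i.e. to $0 < N(N-1)$, which holds trivially and independently of the actual value of $R$. In particular $\hat{D}$ is uniformly bounded above by $N$ even where $R$ is large, so the strict inequality $\hat{D}(\mathbf{x},\mathbf{y})<N$ holds on a set of full measure.

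Finally, monotonicity of the logarithm yields $\log \hat{D}(\mathbf{x},\mathbf{y}) < \log N$ almost everywhere, and taking the expectation under $p_{XY}$ preserves the strict inequality, giving
\begin{equation}
I_{KL-DIME}^{\pi} = \mathbb{E}_{(\mathbf{x},\mathbf{y})\sim p_{XY}(\mathbf{x},\mathbf{y})}\bigl[\log \hat{D}(\mathbf{x},\mathbf{y})\bigr] < \log(N),
\end{equation}
which is the claim. I expect the only genuine obstacle to be conceptual rather than computational: the substitution $K=1$ is an \emph{averaged} statement, so a fully rigorous argument must justify that the per-iteration discriminator, averaged over the batches seen during training, behaves like the $K=1$ profile. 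This can be made precise by repeating the clearing-denominators computation for arbitrary $K$: the bound $\hat D<N$ reads $R(1-K)<N-K$, which is automatic for every $K\geq 1$ (there $R(1-K)\leq 0\leq N-K$); the single exceptional case $K=0$ corresponds to a genuine derangement, recovering the exact density ratio, and is precisely what Lemma~\ref{lemma:MI_M=1} rules out on average. Integrability of $\log\hat{D}$ (a possible $-\infty$ contribution as $R\to 0$) is harmless, since we only seek an upper bound.
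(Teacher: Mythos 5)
Your proposal is correct and follows essentially the same route as the paper's proof: both start from the optimal discriminator $\hat{D}(\mathbf{x},\mathbf{y}) = NR/(KR+N-K)$ of Theorem~\ref{theorem:MI_permutationsBound}, invoke Lemma~\ref{lemma:MI_M=1} to set $K=1$ on average, deduce the pointwise bound $\hat{D}<N$, and conclude via monotonicity of the logarithm and the expectation. The only (minor) difference is that you establish $\hat{D}<N$ by directly clearing denominators for all finite $R$ and all $K\geq 1$, whereas the paper argues it from the limit $R\to\infty$ giving $N/K$; your version is slightly more explicit but not a different argument.
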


\begin{proof}
Theorem \ref{theorem:MI_permutationsBound} implies that, when the batch size is much larger than the density ratio ($N >> R$), then the discriminator's output converges to the density ratio. Indeed,
\begin{equation}
    \lim_{N\to \infty}{\hat{D}(\mathbf{x},\mathbf{y})} = \lim_{N\to \infty}{\frac{NR(\mathbf{x},\mathbf{y})}{KR(\mathbf{x},\mathbf{y})+N-K}} = R(\mathbf{x},\mathbf{y}).
\end{equation}

Instead, when the density ratio is much larger than the batch size ($R>>N$), then the discriminator's output converges to a constant, in particular 
\begin{equation}
    \lim_{R\to \infty}{\hat{D}(\mathbf{x},\mathbf{y})} = \lim_{R\to \infty}{\frac{NR(\mathbf{x},\mathbf{y})}{KR(\mathbf{x},\mathbf{y})+N-K}} = \frac{N}{K}.
\end{equation}
However, from Lemma \ref{lemma:MI_M=1}, it is true that $K=1$ on average. Therefore, an iterative optimization algorithm leads to an upper-bounded discriminator, since
\begin{equation}
    \hat{D}(\mathbf{x},\mathbf{y}) < N,
\end{equation}
which implies the thesis.
\end{proof}

\begin{lemma}[Related to Lemma~\ref{lemma:MI_M=1}]
The average number of fixed points in a random permutation $\pi(\cdot)$ is equal to 1.
\end{lemma}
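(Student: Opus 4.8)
The plan is to compute the expected number of fixed points of a uniformly random permutation $\pi$ of $\{1,\dots,N\}$ using the linearity of expectation, which sidesteps any need to enumerate the derangement structure directly. First I would define, for each index $i \in \{1,\dots,N\}$, an indicator random variable $\mathbbm{1}_i$ that equals $1$ if $\pi(i) = i$ (i.e., $i$ is a fixed point) and $0$ otherwise. The total number of fixed points is then the sum $K = \sum_{i=1}^{N} \mathbbm{1}_i$, so by linearity of expectation $\mathbb{E}[K] = \sum_{i=1}^{N} \mathbb{E}[\mathbbm{1}_i] = \sum_{i=1}^{N} P[\pi(i) = i]$.

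The key step is to evaluate the probability $P[\pi(i)=i]$ for a single index. Since $\pi$ is drawn uniformly at random from the $N!$ permutations of $N$ elements, the number of permutations fixing a given index $i$ is exactly the number of ways to permute the remaining $N-1$ elements freely, namely $(N-1)!$. Hence
\begin{equation}
P[\pi(i)=i] = \frac{(N-1)!}{N!} = \frac{1}{N},
\end{equation}
which holds identically for every $i$ by the symmetry of the uniform distribution over permutations. Substituting back gives
\begin{equation}
\mathbb{E}[K] = \sum_{i=1}^{N} \frac{1}{N} = N \cdot \frac{1}{N} = 1,
\end{equation}
establishing the claim.

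There is no genuine obstacle here, since this is a classical combinatorial identity; the only point requiring care is to emphasize that the indicators $\mathbbm{1}_i$ are \emph{not} independent (the events $\{\pi(i)=i\}$ are correlated), yet linearity of expectation applies regardless of dependence. This is precisely why the indicator method is preferable to attempting a direct count via the inclusion--exclusion formula for the distribution of $K$. I would briefly remark that, in the context of Lemma~\ref{lemma:MI_lemma4} and Corollary~\ref{corollary:MI_permutationsUpperBound}, this result justifies replacing $K$ by its average value $1$ across the many training mini-batches, so that the effective permutation bound on the estimator reduces to $\log(N)$ when a naive permutation is used in place of a derangement.
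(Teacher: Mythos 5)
Your proof is correct and follows essentially the same route as the paper's own argument: both define indicator variables for the events $\{\pi(i)=i\}$, apply linearity of expectation, and use $P[\pi(i)=i]=1/N$ to conclude $\mathbb{E}[K]=1$. Your additional counting justification $(N-1)!/N!$ and the remark that the indicators need not be independent are fine elaborations of the same idea.
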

\begin{proof}
Let $\pi(\cdot)$ be a random permutation on $\left\{ 1, \dotsc, N \right\}$. Let the random variable $X$ represent the number of fixed points (i.e., the number of cycles of length $1$) of $\pi(\cdot)$. We define $X = X_1 + X_2 + \cdots + X_N$, where $X_i = 1$ when $\pi(i)=i$, and $0$ otherwise. $\mathbb{E}[X]$ is computed by exploiting the linearity property of expectation. Trivially,
\begin{equation}
    \mathbb{E}[X_i] = \mathbb{P}[\pi(i)=i] = \frac{1}{N},
\end{equation}
which implies 
\begin{equation}
    \mathbb{E}[X] = \sum_{i=1}^N \frac{1}{N} = 1.
\end{equation}
\end{proof}

\subsection{Experimental details}
\label{subsec:mi_appendix_experiment_details}

\subsubsection{Multivariate linear and non-linear Gaussians experiments}
\label{subsec:mi_appendix_staircases}

The NN architectures implemented for the linear and cubic Gaussian experiments are: \textit{joint}, \textit{separable}, \textit{deranged}, and the architecture of NJEE, referred to as \textit{ad hoc}. 

\textbf{Joint architecture}. The \textit{joint} architecture is a feed-forward fully connected NN with an input size equal to twice the dimension of the samples distribution ($2d$), one output neuron, and two hidden layers of $256$ neurons each. The activation function utilized in each layer (except from the last one) is ReLU.
The number of realizations ($\mathbf{x},\mathbf{y}$) fed as input of the neural network for each training iteration is $N^2$, obtained as all the combinations of the samples $\mathbf{x}$ and $\mathbf{y}$ drawn from $p_{XY}(\mathbf{x}, \mathbf{y})$. 

\textbf{Separable architecture}. The \textit{separable} architecture comprises two feed-forward NNs, each one with an input size equal to $d$, output layer containing $32$ neurons and $2$ hidden layers with $256$ neurons each. The ReLU activation function is used in each layer (except from the last one).
The first network is fed in with $N$ realizations of $X$, while the second one with $N$ realizations of $Y$.

\textbf{Deranged architecture}. The \textit{deranged} architecture is a feed-forward fully connected NN with an input size equal to twice the dimension of the samples distribution ($2d$), one output neuron, and two hidden layers of $256$ neurons each. The activation function utilized in each layer (except from the last one) is ReLU. 
The number of realizations ($\mathbf{x},\mathbf{y}$) the neural network is fed with is $2N$ for each training iteration: $N$ realizations drawn from $p_{XY}(\mathbf{x}, \mathbf{y})$ and $N$ realizations drawn from $p_X(\mathbf{x})p_Y(\mathbf{y})$ using the derangement procedure described in Sec. \ref{subsec:mi_derangements}.

The architecture \textit{deranged} is not used for $I_{CPC}$ because in \eqref{eq:MI_NCE} the summation at the denominator of the argument of the logarithm would require neural network predictions corresponding to the inputs $(\mathbf{x}_i, \mathbf{y}_j), \>  \forall i,j \in \{1,\dots,N\}$ with $i \neq j$.

\textbf{Ad hoc architecture}. The \textit{NJEE} MI estimator comprises $2d-1$ feed-forward NNs. Each NN is composed by an input layer with size between $1$ and $2d-1$, an output layer containing $N-k$ neurons, with $k \in \mathbb{N}$ small, and 2 hidden layers with $256$ neurons each. The ReLU activation function is used in each layer (except from the last one).
We implemented a Pytorch \cite{Pytorch2016} version of the 
code produced by the authors of \cite{shalev2022neural}, to unify NJEE with all the other MI estimators. 

Each neural estimator is trained using Adam optimizer \cite{kingma2014adam}, with learning rate $5 \times 10^{-4}$, $\beta_1 = 0.9$, $\beta_2=0.999$. The batch size is initially set to $N=64$.\\
For the \textit{Gaussian} setting, we sample a $20$-dimensional Gaussian distribution to obtain $\mathbf{x}$ and $\mathbf{n}$ samples, independently. Then, we compute $\mathbf{y}$ as linear combination of $\mathbf{x}$ and $\mathbf{n}$: $\mathbf{y} = \rho \, \mathbf{x} + \sqrt{1-\rho^2} \, \mathbf{n}$, where $\rho$ is the correlation coefficient. For the \textit{cubic} setting, the nonlinear transformation $\mathbf{y} \mapsto \mathbf{y}^3$ is applied to the Gaussian samples.
a
During the training procedure, every $4k$ iterations, the target value of the MI is increased by $2\> nats$, for $5$ times, obtaining a target staircase with $5$ steps. The change in target MI is obtained by increasing $\rho$, that affects the true MI according to
\begin{equation}
    I(X;Y) = -\frac{d}{2}\log(1 - \rho^2).
\end{equation}

\subsubsection{Supplementary analysis of the MI estimators performance}
Additional plots reporting the MI estimates obtained from MINE, NWJ, and SMILE with $\tau = \infty$, are outlined in Fig. \ref{fig:MI_stairs_d20_bs64_MINE_NW_SMILE}. The variance attained by these algorithms exponentially increases as the true MI grows, as stated in \eqref{eq:MI_exponentially_increasing_variance}. 

 \begin{figure}
	\centering
	\includegraphics[scale=0.42]{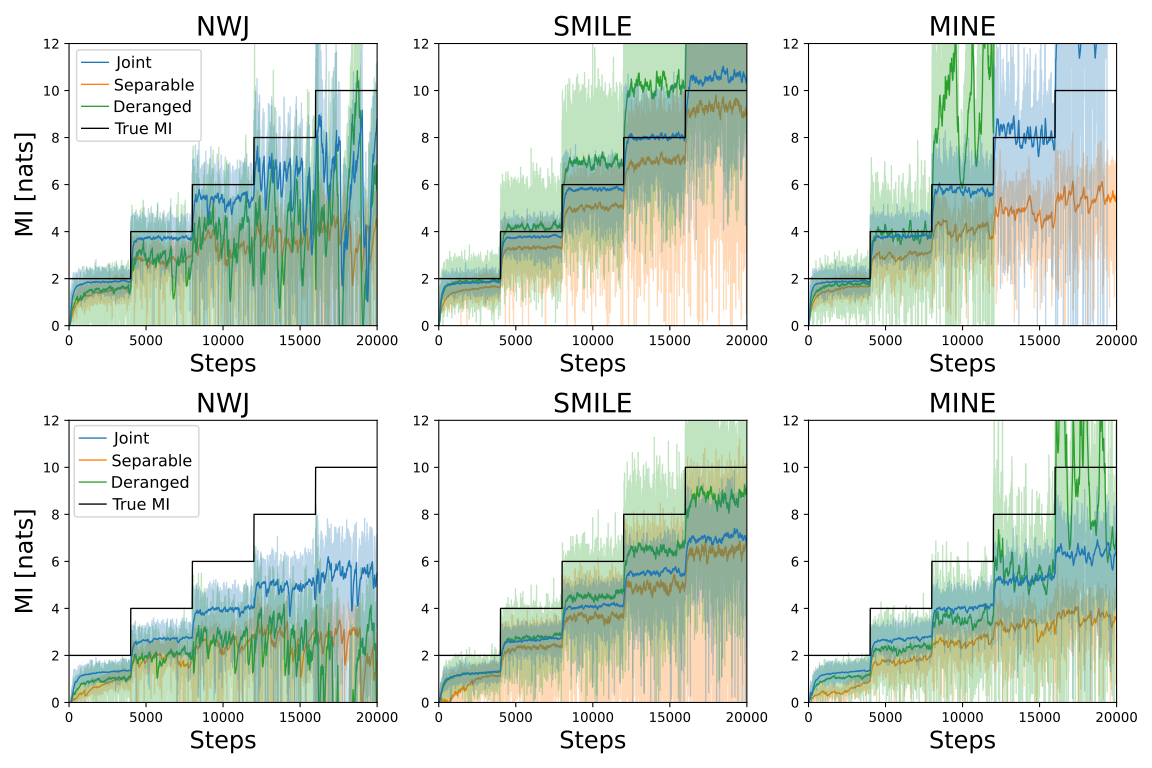}
	\caption{NWJ, SMILE ($\tau=\infty$), and MINE MI estimation comparison with $d=20$ and $N=64$. The \textit{Gaussian} setting is represented in the top row, while the \textit{cubic} setting is shown in the bottom row.}
	\label{fig:MI_stairs_d20_bs64_MINE_NW_SMILE}
\end{figure} 

We report in Fig. \ref{fig:MI_SMILE_no_trick} the behavior we obtained for $I_{SMILE}$ when the training of the neural network is performed by using the cost function in \eqref{eq:MI_SMILE}. The training diverges during the first steps when $\tau=1$ and $\tau=5$. Differently, when $\tau=\infty$, $I_{SMILE}$ corresponds to $I_{MINE}$ (without the moving average improvement), therefore the MI estimate does not diverge. Interestingly, by comparing $I_{SMILE}$ ($\tau=\infty$) trained with the JS divergence and with the MINE cost function (in Fig. \ref{fig:MI_stairs_d20_bs64_MINE_NW_SMILE} and Fig. \ref{fig:MI_SMILE_no_trick}, respectively), the variance of the latter case is significantly higher. Hence, the JS maximization trick seems to have an impact in lowering the estimator variance. 

\begin{figure}[htp]

\centering
\begin{subfigure}{.3\textwidth}
\includegraphics[width=1\textwidth]{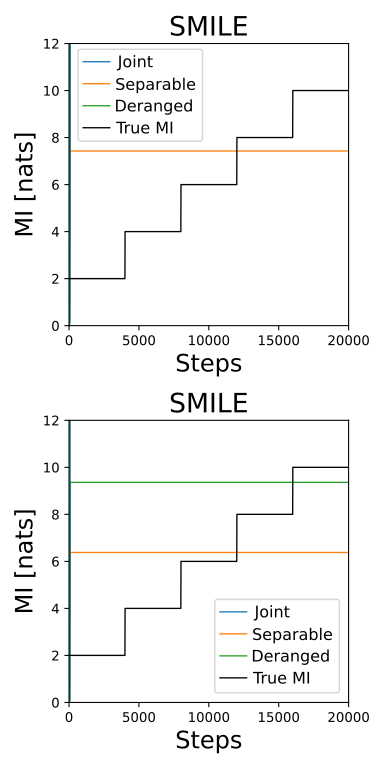}%\hfill
\caption{$\tau = 1$}
\end{subfigure}
\begin{subfigure}{.3\textwidth}
\includegraphics[width=1\textwidth]{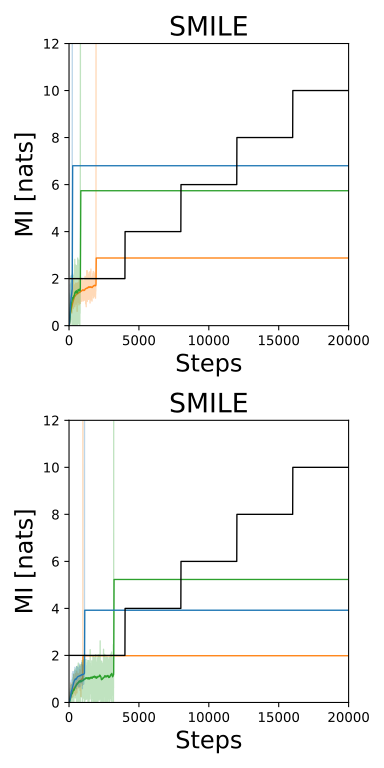}%\hfill
\caption{$\tau = 5$}
\end{subfigure}
\begin{subfigure}{.3\textwidth}
\includegraphics[width=1\textwidth]{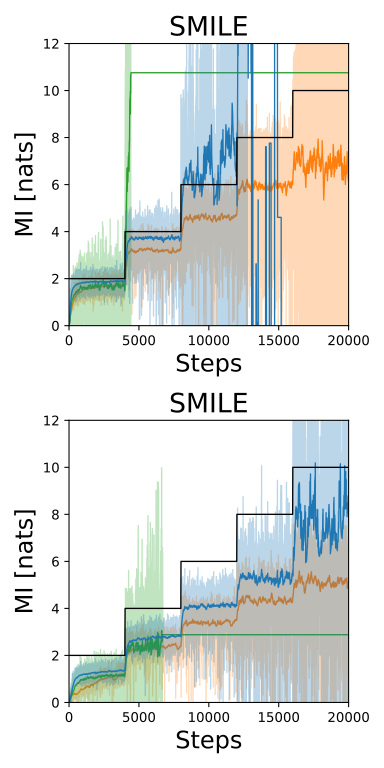}
\caption{$\tau = \infty$}
\end{subfigure}
\caption{$I_{SMILE}$ behavior for different values of $\tau$, when the JS divergence is not used to train the neural network. The \textit{Gaussian} case is reported in the top row, while the \textit{cubic} case is reported in the bottom row.}
\label{fig:MI_SMILE_no_trick}

\end{figure}

\subsubsection{Analysis for different values of $d$ and $N$}

The class of $f$-DIME estimators is robust to changes in $d$ and $N$, as the estimators' variance decreases (see \eqref{eq:MI_variance_f_dime} and Fig. \ref{fig:MI_VarianceVsBs}) when $N$ increases and their achieved bias is not significantly influenced by the choice of $d$. Differently, $I_{NJEE}$ and $I_{CPC}$ are highly affected by variations of those parameters, as described in Fig. \ref{fig:MI_stairs_d5_bs64} and Fig. \ref{fig:MI_stairs_d20_bs1024}. 
More precisely, $I_{CPC}$ is not strongly influenced by a change of $d$, but the bias significantly increases as the batch size diminishes, since the upper bound lowers.
$I_{NJEE}$ achieves a higher bias both when $d$ decreases and when $N$ increases w.r.t. the default values $d=20, N=64$. In addition, when $d$ is large, the training of $I_{NJEE}$ is not feasible, as it requires a lot of time (see Fig. \ref{fig:MI_computationalTimeAnalysisMain}) and memory (as a consequence of the large number of neural networks utilized) requirements.

We show the achieved bias, variance, and mean squared error (MSE) corresponding to the three settings reported in Fig. \ref{fig:MI_stairs}, \ref{fig:MI_stairs_d5_bs64}, and \ref{fig:MI_stairs_d20_bs1024} in Fig. \ref{fig:MI_bias_var_mse_gaussian_d20_N64}, \ref{fig:MI_bias_var_mse_gaussian_d5_N64}, and \ref{fig:MI_bias_var_mse_gaussian_d20_N1024}, respectively. The achieved variance is bounded when the estimator used is $I_{KL-DIME}$ or $I_{CPC}$. In particular, Figures \ref{fig:MI_bias_var_mse_gaussian_d20_N64}, \ref{fig:MI_bias_var_mse_gaussian_d5_N64}, \ref{fig:MI_bias_var_mse_gaussian_d20_N1024}, and \ref{fig:MI_VarianceVsBs} demonstrate that $I_{KL-DIME}$ satisfies Lemma \ref{lemma:MI_finite_variance_gaus}.

Additionally, we report the achieved bias, variance and MSE when $d=20$ and $N$ varies according to Tab. \ref{tab:MI_bias_var_mse_gauss_N_varying}. We use the notation $N = [512, 1024]$ to indicate that each cell of the table reports the values corresponding to $N=512$ and $N=1024$, with this specific order, inside the brackets.
Similarly, we show the attained bias, variance, and MSE for $d=[5, 10]$ and $N=64$ in Tab. \ref{tab:MI_bias_var_mse_gauss_d_varying}.
The achieved bias, variance and MSE shown in Tab. \ref{tab:MI_bias_var_mse_gauss_N_varying} and Tab. \ref{tab:MI_bias_var_mse_gauss_d_varying} motivate that the class of $f$-DIME estimators attains the best values for bias and MSE. Similarly, $I_{KL-DIME}$ obtains the lowest variance, when excluding $I_{CPC}$ from the estimators comparison ($I_{CPC}$ should not be desirable as it is upper bounded).
The illustrated results are obtained with the \textit{joint} architecture (except for NJEE) because, when the batch size is small, such an architecture achieves slightly better results than the \textit{deranged} one, as it approximates the expectation over the product of marginals with more samples. %$in Eq. \ref{eq:discriminator_function_f}.

The variance of the $f$-DIME estimators achieved in the Gaussian setting when $N$ ranges from $64$ to $1024$ is reported in Fig. \ref{fig:MI_VarianceVsBs}. The behavior shown in such a figure demonstrates what is stated in Lemma \ref{lemma:MI_Lemma2}, i.e., the variance of the $f$-DIME estimators varies as $\frac{1}{N}$. 

\begin{figure}
	\centering
	\includegraphics[scale=0.4]{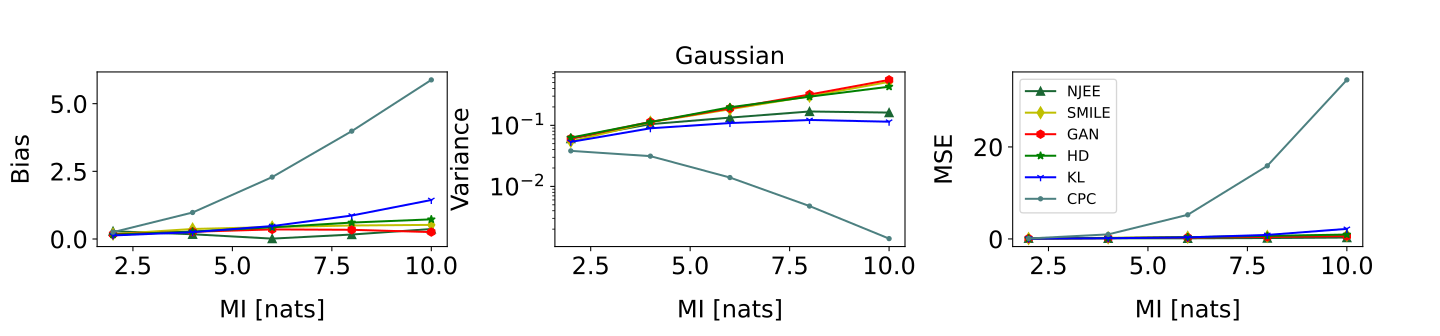}
	\caption{Bias, variance, and MSE comparison between estimators, using the joint architecture for the \textit{Gaussian} case with $d=20$ and $N=64$.}
	\label{fig:MI_bias_var_mse_gaussian_d20_N64}
\end{figure} 
\begin{figure}
	\centering
	\includegraphics[scale=0.4]{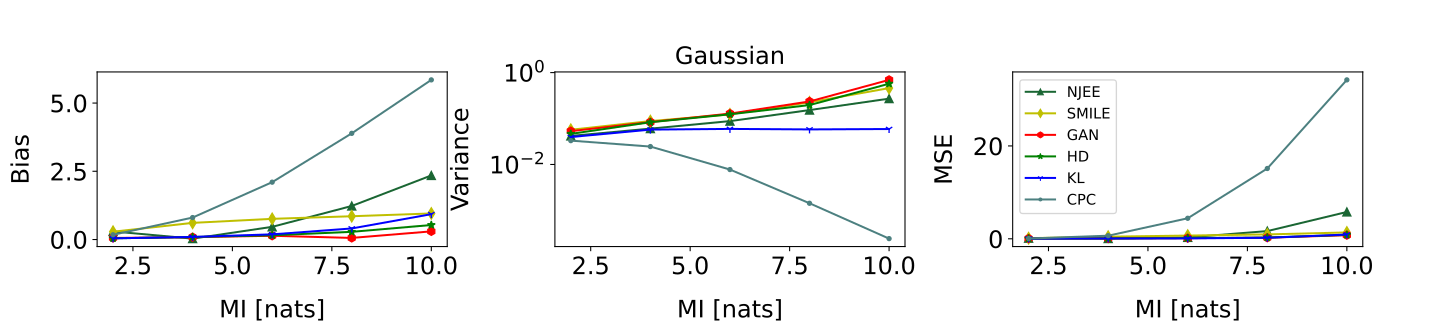}
	\caption{Bias, variance, and MSE comparison between estimators, using the joint architecture for the \textit{Gaussian} case with $d=5$ and $N=64$.}
	\label{fig:MI_bias_var_mse_gaussian_d5_N64}
\end{figure} 
\begin{figure}
	\centering
	\includegraphics[scale=0.4]{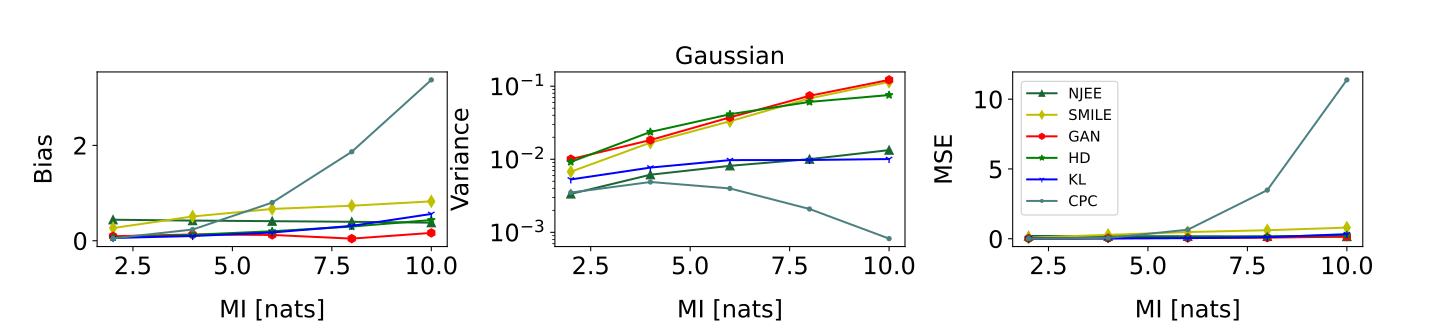}
	\caption{Bias, variance, and MSE comparison between estimators, using the joint architecture for the \textit{Gaussian} case with $d=20$ and $N=1024$.}
	\label{fig:MI_bias_var_mse_gaussian_d20_N1024}
\end{figure} 

\begin{figure}
	\centering
	\includegraphics[scale=0.29]{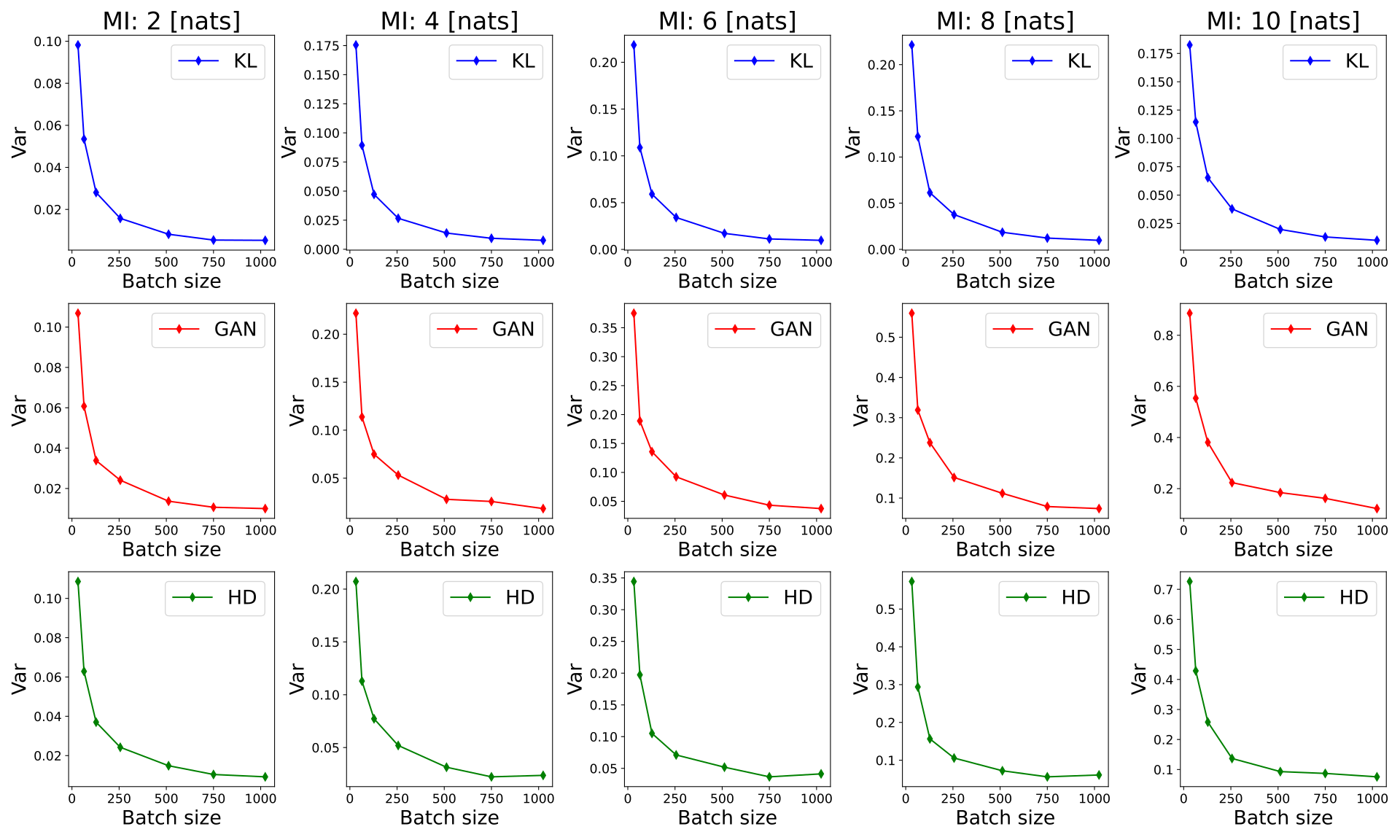}
	\caption{Variance of the $f$-DIME estimators corresponding to different values of batch size.}
	\label{fig:MI_VarianceVsBs}
\end{figure} 

\begin{table}
\caption{Bias (B), variance (V), and MSE (M) of the MI estimators using the joint architecture, when $d=20$ and $N=[512, 1024]$, for the Gaussian setting. Each $f$-DIME estimator is abbreviated to $f$-D.}
\setlength{\arrayrulewidth}{0.5mm}
\centering
    \begin{tabular}{ |c|c|c c c c c|c c c c c| } 
     \hline
     & & \multicolumn{5}{|c|}{Gaussian} \\
     \hline
     & MI & 2 & 4 & 6 & 8 & 10 \\
     \hline
      & NJEE & [0.42, 0.44] & [0.40, 0.42] & [0.37, 0.41] & [0.34, 0.40] & [0.32, 0.38] \\
      & SMILE & [0.25, 0.27] & [0.48, 0.51] & [0.64, 0.67] & [0.74, 0.73] & [0.86, 0.83]\\
      B & GAN-D & [0.11, 0.09] & [0.15, 0.13] & [\textbf{0.16}, \textbf{0.12}] & [\textbf{0.14}, \textbf{0.04}] & [\textbf{0.01}, \textbf{0.16}] \\
      & HD-D & [0.08, 0.07] & [0.15, 0.12] & [0.24, 0.20] & [0.37, 0.30] & [0.47, 0.43]  \\
      & KL-D & [\textbf{0.07}, 0.06] & [\textbf{0.12}, \textbf{0.10}] & [0.21, 0.17] & [0.38, 0.31] & [0.69, 0.56] \\
      & CPC & [0.08, \textbf{0.05}] & [0.34, 0.23] & [1.07, 0.80] & [2.32, 1.87] & [3.96, 3.37] \\
     \hline
     & NJEE & [0.01, 0.00] & [0.01, 0.01] & [0.02, 0.01] & [0.02, 0.01] & [0.02, 0.01] \\
      & SMILE & [0.01, 0.01] & [0.03, 0.02] & [0.06, 0.03] & [0.11, 0.07] & [0.17, 0.11]  \\
      V & GAN-D & [0.01, 0.01] & [0.03, 0.02] & [0.06, 0.04] & [0.11, 0.07] & [0.17, 0.12] \\
      & HD-D & [0.01, 0.01] & [0.03, 0.02] & [0.05, 0.04] & [0.07, 0.06] & [0.09, 0.08]  \\
      & KL-D& [0.01, 0.01] & [0.01, 0.01] & [0.02, 0.01] & [0.02, 0.01] & [0.02, 0.01] \\
      & CPC & [\textbf{0.01}, \textbf{0.00}] & [\textbf{0.01}, \textbf{0.00}] & [\textbf{0.01}, \textbf{0.00}] & [\textbf{0.00}, \textbf{0.00}] & [\textbf{0.00}, \textbf{0.00}]  \\
     \hline
     & NJEE & [0.18, 0.20] & [0.18, 0.18] & [0.16, 0.18] & [0.14, 0.17] & [\textbf{0.12}, 0.16]  \\
      & SMILE & [0.08, 0.08] & [0.26, 0.28] & [0.47, 0.48] & [0.66, 0.61] & [0.90, 0.80] \\
      M & GAN-D & [0.03, 0.02] & [0.05, 0.04] & [0.09, 0.05] & [\textbf{0.13}, \textbf{0.08}] & [0.18, \textbf{0.15}]  \\
      & HD-D & [0.02, 0.01] & [0.05, 0.04] & [0.11, 0.08] & [0.21, 0.15] & [0.31, 0.26] \\
      & KL-D & [\textbf{0.01}, \textbf{0.01}] & [\textbf{0.03}, \textbf{0.02}] & [\textbf{0.06}, \textbf{0.04}] & [0.17, 0.11] & [0.49, 0.33]  \\
      & CPC & [0.01, 0.01] & [0.13, 0.06] & [1.16, 0.64] & [5.38, 3.48] & [15.67, 11.38]  \\
     \hline
    \end{tabular}
    \label{tab:MI_bias_var_mse_gauss_N_varying}
\end{table}

\begin{table}
\caption{Bias (B), variance (V), and MSE (M) of the MI estimators using the joint architecture, when $d=[5, 10]$ and $N=[64]$, for the Gaussian setting. Each $f$-DIME estimator is abbreviated to $f$-D.} 
\setlength{\arrayrulewidth}{0.5mm}
\centering
    \begin{tabular}{ |c|c|c c c c c|c c c c c| } 
     \hline
     & & \multicolumn{5}{|c|}{Gaussian} \\
     \hline
     & MI & 2 & 4 & 6 & 8 & 10 \\
     \hline
      & NJEE & [0.30, 0.29] & [\textbf{0.03}, 0.13] & [0.46, \textbf{0.06}] & [1.23, 0.38] & [2.35, 0.80] \\
      & SMILE & [0.29, 0.24] & [0.61, 0.52] & [0.76, 0.68] & [0.85, 0.71] & [0.96, 0.68]\\
      B & GAN-D & [0.06, 0.12] & [0.09, 0.17] & [\textbf{0.14}, 0.17] & [\textbf{0.06}, \textbf{0.20}] & [\textbf{0.30}, \textbf{0.18}] \\
      & HD-D & [0.04, 0.09] & [0.09, 0.14] & [0.15, 0.22] & [0.28, 0.39] & [0.53, 0.40]  \\
      & KL-D & [\textbf{0.04}, \textbf{0.07}] & [0.09, \textbf{0.13}] & [0.19, 0.30] & [0.40, 0.58] & [0.93, 1.05]  \\
      & CPC & [0.17, 0.20] & [0.80, 0.89] & [2.10, 2.20] & [3.89, 3.93] & [5.85, 5.86]  \\
     \hline
     & NJEE & [0.04, 0.05] & [0.06, 0.08] & [0.09, 0.10] & [0.15, 0.13] & [0.27, 0.13] \\
      & SMILE & [0.06, 0.06] & [0.09, 0.13] & [0.12, 0.20] & [0.23, 0.32] & [0.46, 0.46]  \\
      V & GAN-D & [0.05, 0.06] & [0.08, 0.12] & [0.13, 0.19] & [0.24, 0.30] & [0.69, 0.52] \\
      & HD-D & [0.05, 0.06] & [0.08, 0.11] & [0.12, 0.16] & [0.20, 0.24] & [0.57, 0.49]  \\
      & KL-D & [0.04, 0.05] & [0.06, 0.08] & [0.06, 0.10] & [0.06, 0.10] & [0.06, 0.10] \\
      & CPC & [\textbf{0.03}, \textbf{0.04}] & [\textbf{0.02}, \textbf{0.03}] & [\textbf{0.01}, \textbf{0.01}] & [\textbf{0.00}, \textbf{0.00}] & [\textbf{0.00}, \textbf{0.00}]  \\
     \hline
     & NJEE & [0.13, 0.13] & [\textbf{0.06}, \textbf{0.09}] & [0.30, \textbf{0.10}] & [1.66, \textbf{0.28}] & [5.78, 0.76]  \\
      & SMILE & [0.14, 0.11] & [0.46, 0.40] & [0.70, 0.66] & [0.95, 0.83] & [1.37, 0.93] \\
      M & GAN-D & [0.06, 0.08] & [0.09, 0.15] & [0.15, 0.22] & [0.24, 0.34] & [\textbf{0.78}, \textbf{0.55}]  \\
      & HD-D & [0.05, 0.07] & [0.09, 0.13] & [0.15, 0.21] & [0.28, 0.40] & [0.86, 0.65]  \\
      & KL-D & [\textbf{0.04}, \textbf{0.06}] & [0.07, 0.10] & [\textbf{0.10}, 0.19] & [\textbf{0.22}, 0.44] & [0.92, 1.20]  \\
      & CPC & [0.06, 0.08] & [0.67, 0.83] & [4.42, 4.84] & [15.14, 15.45] & [34.22, 34.32]  \\
     \hline
    \end{tabular}
    \label{tab:MI_bias_var_mse_gauss_d_varying}
\end{table}

\begin{table}
\caption{Bias (B), variance (V), and MSE (M) of the MI estimators using the joint architecture, when $d=20$ and $N=64$, for the Gaussian setting. Each $f$-DIME estimator is abbreviated to $f$-D.} 
\setlength{\arrayrulewidth}{0.5mm}
\centering
    \begin{tabular}{ |c|c|c c c c c|c c c c c| } 
     \hline
     & & \multicolumn{5}{|c|}{Gaussian} \\
     \hline
     & MI & 2 & 4 & 6 & 8 & 10 \\
     \hline
      & NJEE & 0.29 & \textbf{0.18} & \textbf{0.01} & \textbf{0.17} & 0.37 \\
      & SMILE & 0.18 & 0.37 & 0.44 & 0.50 & 0.52\\
      B & GAN-D & 0.17 & 0.27 & 0.35 & 0.34 & \textbf{0.26} \\
      & HD-D & 0.16 & 0.28 & 0.43 & 0.61 & 0.73  \\
      & KL-D & \textbf{0.13} & 0.25 & 0.48 & 0.87 & 1.44  \\
      & CPC & 0.25 & 0.98 & 2.29 & 3.99 & 5.88  \\
     \hline
     & NJEE & 0.06 & 0.10 & 0.13 & 0.17 & 0.16 \\
      & SMILE & 0.05 & 0.11 & 0.18 & 0.30 & 0.51  \\
      V & GAN-D & 0.06 & 0.11 & 0.19 & 0.32 & 0.55 \\
      & HD-D & 0.06 & 0.11 & 0.20 & 0.29 & 0.43  \\
      & KL-D & 0.05 & 0.09 & 0.11 & 0.12 & 0.11 \\
      & CPC & \textbf{0.04} & \textbf{0.03} & \textbf{0.01} & \textbf{0.00} & \textbf{0.00}  \\
     \hline
     & NJEE & 0.14 & \textbf{0.14} & \textbf{0.13} & \textbf{0.20} & \textbf{0.30}  \\
      & SMILE & 0.09 & 0.25 & 0.38 & 0.55 & 0.79 \\
      M & GAN-D & 0.09 & 0.19 & 0.31 & 0.43 & 0.62  \\
      & HD-D & 0.09 & 0.19 & 0.39 & 0.66 & 0.96  \\
      & KL-D & \textbf{0.07} & 0.15 & 0.34 & 0.87 & 2.19  \\
      & CPC & 0.10 & 0.99& 5.25 & 15.89 & 34.57  \\
     \hline
    \end{tabular}
    \label{tab:MI_bias_var_mse_gauss_d_varying2}
\end{table}

The class $f$-DIME is able to estimate the MI for high-dimensional distributions, as shown in Fig. \ref{fig:MI_allStairs_d100_bs64}, where $d=100$. In that figure, the estimates behavior is obtained by using the simple architectures described in Sec. \ref{subsec:mi_appendix_staircases}. Thus, the input size of these NNs ($200$) is comparable with the number of neurons in the hidden layers ($256$).
Therefore, the estimates could be improved by increasing the number of hidden layers and neurons per layer. The graphs in Fig. \ref{fig:MI_allStairs_d100_bs64} illustrate the advantage of the architecture \textit{deranged} over the \textit{separable} one.

\subsubsection{Considerations on derangements}
\label{subsec:mi_derangement_considerations}
To facilitate the understanding of the role of derangements during training, we provide a practical example in the following.
 
Suppose for simplicity that $N=3$. Then, a random permutation of $\mathbf{y} = [y_1, y_2, y_3]$ can be $[y_2, y_3, y_1]$, where the number of fixed points is $K=0$ as no elements remain in the same position after the permutation.
However, another permutation of $\mathbf{y}$ is $[y_1, y_3, y_2]$. In this case, it is evident that $y_1$ remains in the same initial position, and the number of fixed points is $K=1$.
A random derangement of $\mathbf{y} = [y_1, y_2, y_3]$, instead, ensures by definition that no element of $\mathbf{y}$ ends up in the same initial position, contrarily from a naive random permutation. This idea is essential to avoid having shuffled marginal samples that actually are realizations of the joint distribution.
In fact, we proved that a random permutation strategy would lead to a biased estimator (see the permutation bound in Corollary \ref{corollary:MI_permutationsUpperBound}).
It is extremely important to remark that the derangement sampling strategy it is not only applicable to $f$-divergence based estimators, rather, any discriminative variational estimator should use it to avoid upper bound MI estimates, as it can be observed from Fig. \ref{fig:MI_derangementVsPermutation_beyond}
\begin{figure}
	\centering
	\includegraphics[scale=0.24]{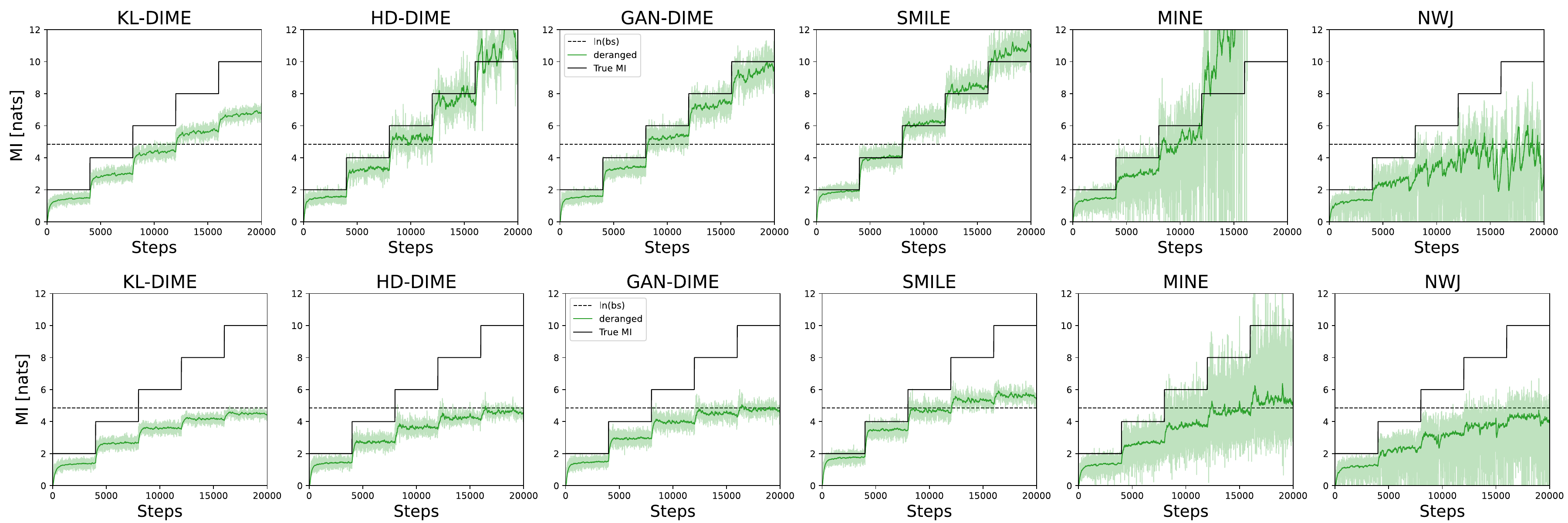}
	\caption{MI estimates when $d=20$ and $N=128$, top row: derangement strategy; bottom row: permutation strategy.}
	\label{fig:MI_derangementVsPermutation_beyond}
\end{figure} 
 
\begin{figure}
	\centering
	\includegraphics[scale=0.28]{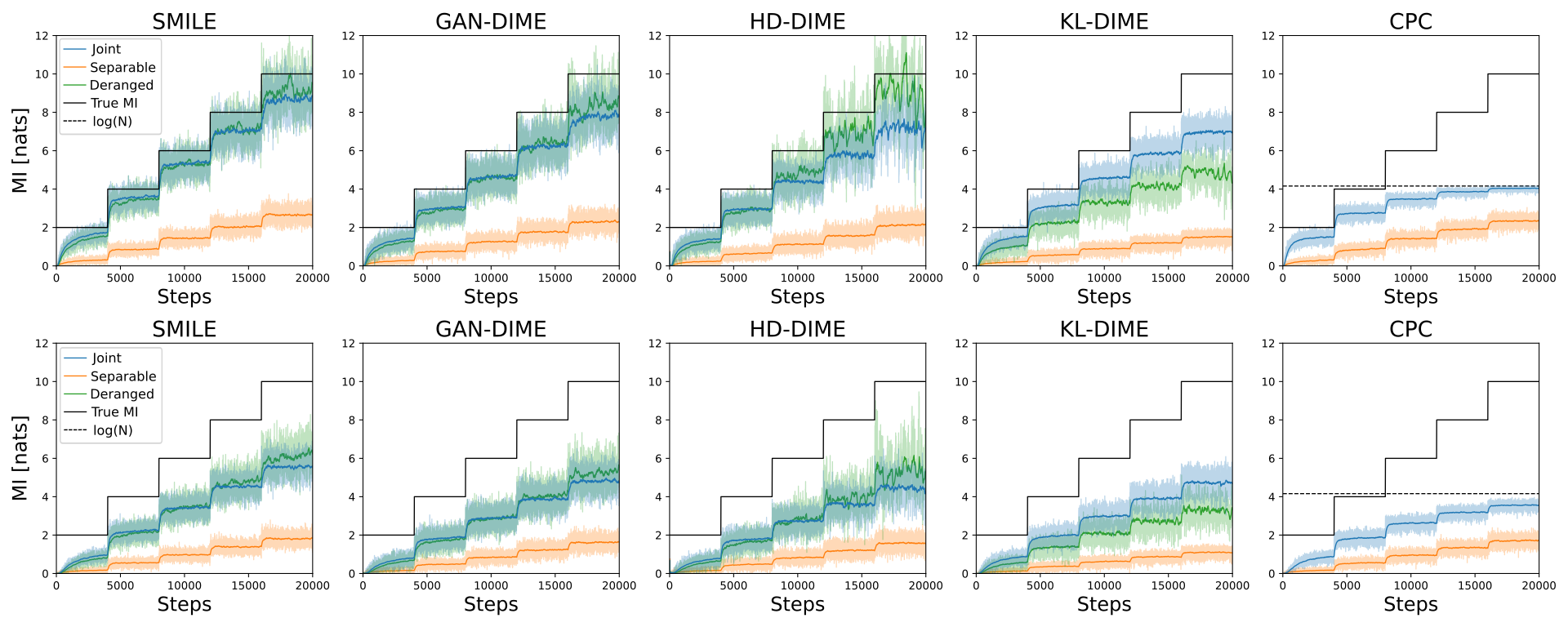}
	\caption{MI estimates when $d=100$ and $N=64$. The \textit{Gaussian} setting is represented in the top row, while the \textit{cubic} setting is shown in the bottom row.}
	\label{fig:MI_allStairs_d100_bs64}
\end{figure} 

\subsubsection{Time complexity analysis}
\label{subsubsec:mi_Appendix_time_analysis}

\begin{figure}
\centering
\begin{subfigure}{.5\textwidth}
 \centering
	\includegraphics[scale=0.4]{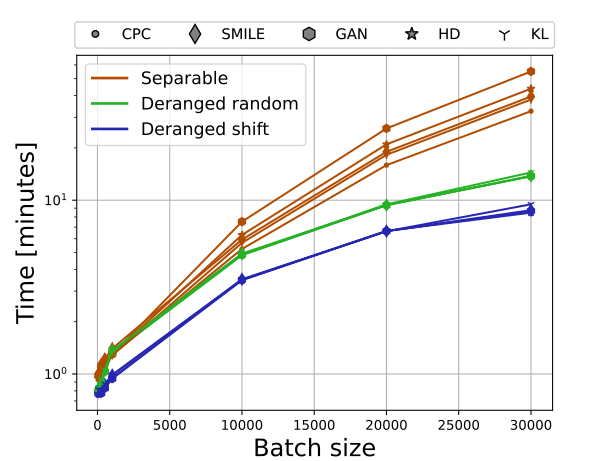}
	\caption{5-step staircases for the \textit{separable}, random-based \textit{deranged}, and shift-based \textit{deranged}.}
	\label{fig:MI_SeparableVsCombined_time_analysis}
\end{subfigure}%
\begin{subfigure}{.5\textwidth}
\centering
	\includegraphics[scale=0.4]{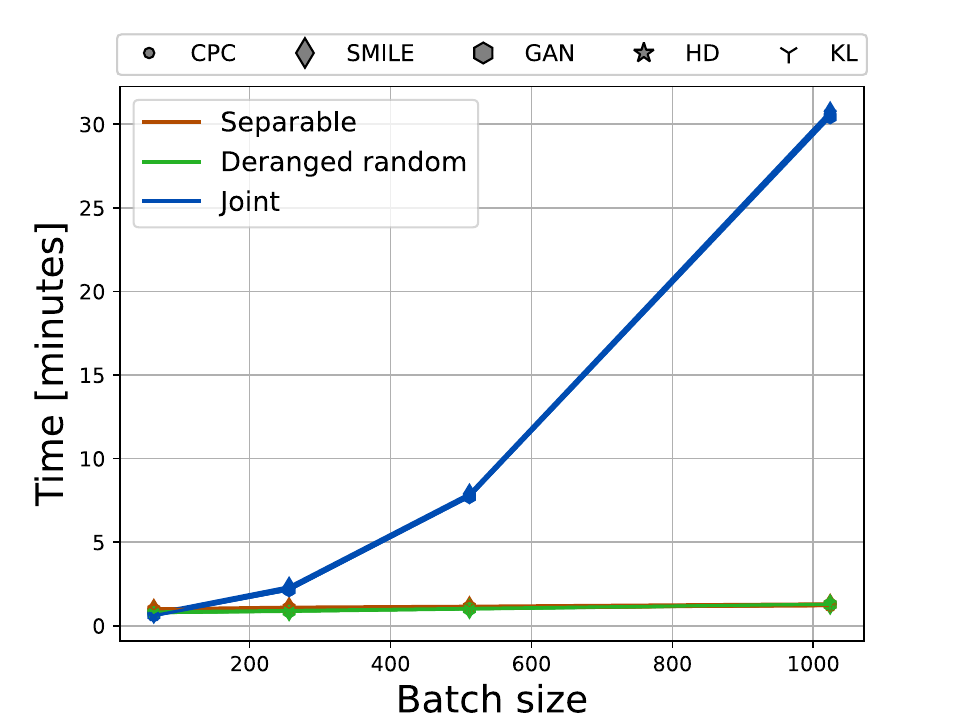}
	\caption{\textit{deranged}, \textit{separable} and \textit{joint} architectures complexity in linear scale, varying the batch size.}
	\label{fig:MI_computationalTimeAnalysisMain_beyond}
\end{subfigure}
\caption{Time comparison between sampling strategies.}
\label{fig:MI_extra_time_appendix}
\end{figure}

The computational time analysis is developed on a server with CPU "AMD Ryzen Threadripper 3960X 24-Core Processor" and GPU "MSI GeForce RTX 3090 Gaming X Trio 24G, 24GB GDDR6X".

Before analyzing the time requirements to complete the $5$-step MI staircases, we specify two different ways to implement the derangement of the $\mathbf{y}$ realizations in each batch:
\begin{itemize}
    \item \textbf{Random-based}. The trivial way to achieve the derangement is to randomly shuffle the elements of the batch until there are no fixed points (i.e., all the $\mathbf{y}$ realizations in the batch are assigned to a different position w.r.t. the starting location).
    \item \textbf{Shift-based}. Given $N$ realizations $(\mathbf{x}_i, \mathbf{y}_i)$ drawn from $p_{XY}(\mathbf{x},\mathbf{y})$, for $i \in \left\{ 1, \dotsc, N \right\}$, we obtain the deranged samples as $(\mathbf{x}_i, \mathbf{y}_{(i+1)\%N})$, where "\%" is the modulo operator. 
\end{itemize}
Although the MI estimates obtained by the two derangement methods are almost indistinguishable, all the results shown in the section are achieved by using the random-based method. We additionally demonstrate the time efficiency of the shift-based approach. 

We show in Fig. \ref{fig:MI_computationalTimeAnalysisMain} that the architectures \textit{deranged} and \textit{separable} are significantly faster w.r.t. \textit{joint} and NJEE ones, for a given batch size $N$ and input distribution size $d$.
However, Fig. \ref{fig:MI_computationalTimeAnalysisMain} exhibits no difference between the \textit{deranged} and \textit{separable} architectures. 
Fig. \ref{fig:MI_SeparableVsCombined_time_analysis} illustrates a detailed representation of the time requirements of these two architectures to complete the $5$-step stairs presented in Sec. \ref{sec:mi_results}.
As $N$ increases, the gap between the time needed by the architectures \textit{deranged} and \textit{separable} grows, demonstrating that the former is the fastest. For example, when $d=20$ and $N=30k$, $I_{GAN-DIME}$ needs about $55$ minutes when using the architecture \textit{separable}, but only $15$ minutes when using the \textit{deranged} one and less than $9$ minutes for the shift-based \textit{deranged} architecture.
In fact, if we plot the data in a linear scale (see Fig. \ref{fig:MI_computationalTimeAnalysisMain_beyond}), it is immediate to see that the computational requirements are $\Omega(N)$ for the deranged and separable architectures, and $\Omega(N^2)$ for the joint architecture. 

\subsubsection{Summary of the estimators}
\label{subsubsec:mi_appendix_final_considerations}

We give an insight on how to choose the best estimator in Tab. \ref{tab:MI_summary_estimators}, depending on the desired specifics. We assign qualitative grades to each estimator over different performance indicators.
All the indicators names are self-explanatory, except from \textit{scalability}, which describes the capability of the estimator to obtain precise estimates when $d$ and $N$ vary from the default values ($d=20$ and $N=64$). The grades ranking is, from highest to lowest: \cmark\cmark, \cmark, $\sim$, \xmark. When more than one architecture is available for a specific estimator, the grade is assigned by considering the best architecture within that particular case.
Even though the estimator choice could depend on the specific problem, we consider $I_{GAN-DIME}$ to be the best one. The rationale behind this decision is that $I_{GAN-DIME}$ achieves the best performance for almost all the indicators and lacks weaknesses. Differently, $I_{CPC}$ estimate is upper bounded, $I_{SMILE}$ achieves slightly higher bias, and $I_{NJEE}$ is strongly $d$ and $N$ dependent. However, if the considered problem requires the estimation of a low-valued MI, $I_{KL-DIME}$ is slightly more accurate than $I_{GAN-DIME}$.  

One limitation of this contribution is that the set of $f$-divergences analyzed is restricted to three elements. Thus, probably there exists a more effective $f$-divergence which is not analyzed here. 

\begin{table}
	\centering
	\caption{Summary of the MI estimators.}
	\begin{tabular}{c||c|c|c|c|c} 
        \toprule
		\multirow{2}{*}{\textbf{Estimator}} & \multicolumn{2}{c|}{\textbf{Low MI}} & \multicolumn{2}{c|}{\textbf{High MI}} & \multirow{2}{*}{\textbf{Scalability}} \\
		\cmidrule{2-5} 
		 & \textbf{Bias} 	& \textbf{Variance} & \textbf{Bias}	& \textbf{Variance} \\
		\midrule
		$I_{KL-DIME}$ & \cmark\cmark & \cmark\cmark & $\sim$ & \cmark\cmark & \cmark\cmark \\ 
        $I_{HD-DIME}$ & \cmark\cmark & \cmark\cmark & \cmark & \cmark & \cmark\cmark \\ 
        $I_{GAN-DIME}$ & \cmark\cmark & \cmark\cmark & \cmark\cmark & \cmark & \cmark\cmark \\ 
        \midrule
        $I_{SMILE} (\tau=1)$ & \cmark & \cmark\cmark & \cmark & \cmark & \cmark\cmark \\ 
        $I_{NJEE}$ & \cmark & \cmark\cmark & \cmark & \cmark\cmark & \xmark \\ 
        $I_{CPC}$ & $\sim$ & \cmark\cmark & \xmark& \cmark\cmark & \xmark \\ 
        \midrule
        $I_{SMILE} (\tau=\infty)$ & \cmark & $\sim$ & \cmark & \xmark & \cmark\cmark \\ 
        $I_{MINE}$ & \cmark & \xmark & \xmark& \xmark & \cmark\cmark \\ 
        $I_{NWJ}$ & \cmark & \xmark & \xmark& \xmark & \cmark\cmark \\ 
		\midrule
		%\bottomrule	
	\end{tabular}
	\label{tab:MI_summary_estimators}
\end{table}

\subsubsection{Self-consistency tests}
\label{subsec:mi_appendix_consistency_tests}
The benchmark considered for the self-consistency tests is similar to the one applied in prior work \cite{Song2020}. We use the images collected in MNIST \cite{deng2012mnist} and FashionMNIST \cite{xiao2017fashion} data sets. Here, we test three properties of MI estimators over images distributions, where the MI is not known, but the estimators consistency can be tested:
\begin{enumerate}
    \item \textbf{Baseline}. $X$ is an image, $Y$ is the same image masked in such a way to show only the top $t$ rows. The value of $\hat{I}(X;Y)$ should be non-decreasing in $t$, and for $t=0$ the estimate should be equal to 0, since $X$ and $Y$ would be independent. Thus, the ratio $\hat{I}(X;Y)/\hat{I}(X;X)$ should be monotonically increasing, starting from $0$ and converging to $1$.  
    \item \textbf{Data-processing}. $\bar{X}$ is a pair of identical images, $\bar{Y}$ is a pair containing the same images masked with two different values of $t$. We set $h(Y)$ to be an additional masking of $Y$ of $3$ rows. The estimated MI should satisfy $\hat{I}([X,X];[Y, h(Y)])/\hat{I}(X;Y) \approx 1$, since including further processing should not add information.  
    \item \textbf{Additivity}. $\bar{X}$ is a pair of two independent images, $\bar{Y}$ is a pair containing the masked versions (with equal $t$ values) of those images. The estimated MI should satisfy $\hat{I}([X_1,X_2];[Y_1, Y_2])/\hat{I}(X;Y) \approx 2$, since the realizations of the $X$ and $Y$ random variables are drawn independently. 
\end{enumerate}
These tests are developed for $I_{fDIME}$, $I_{CPC}$, and $I_{SMILE}$. Differently, $I_{NJEE}$ training is not feasible, since by construction $2d-1$ models should be created, with $d=784$ (the gray-scale image shape is $28 \times 28$ pixels).
The neural network architecture used for these tests is referred to as \textbf{conv}.

\textbf{Conv}. It is composed by two convolutional layers and one fully connected. The first convolutional layer has $64$ output channels and convolves the input images with $(5 \times 5)$ kernels, stride $2 \> px$ and padding $2 \> px$. The second convolutional layer has $128$ output channels, kernels of shape $(5 \times 5)$, stride $2 \> px$ and padding $2 \> px$. The fully connected layer contains $1024$ neurons. ReLU activation functions are used in each layer (except from the last one). The input data are concatenated along the channel dimension.  We set the batch size equal to $256$.

The comparison between the MI estimators for varying values of $t$ is reported in Fig. \ref{fig:MI_baselines}, \ref{fig:MI_data processings}, and \ref{fig:MI_additivities}. The behavior of all the estimators is evaluated for various random seeds. These results highlight that almost all the analyzed estimators satisfy the first two tests ($I_{HD-DIME}$ is slightly unstable), while none of them is capable of fulfilling the additivity criterion. Nevertheless, this does not exclude the existence of an $f$-divergence capable to satisfy all the tests. 
\begin{figure}
\centering
\begin{subfigure}{.5\textwidth}
  \centering
  \includegraphics[scale=0.4]{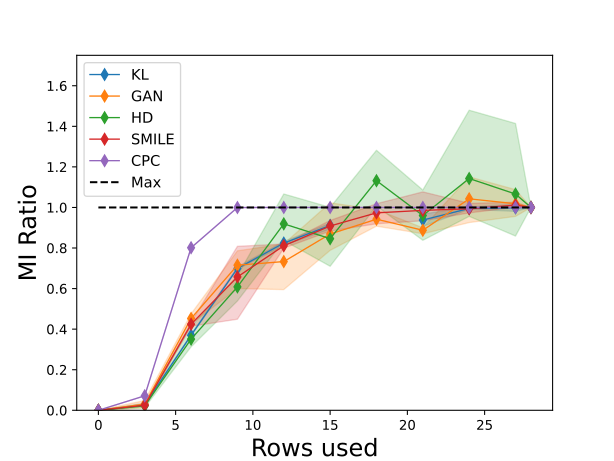}
  \caption{Baseline property, MNIST digits data set.}
  \label{fig:MI_baseline_digits}
\end{subfigure}%
\begin{subfigure}{.5\textwidth}
  \centering
  \includegraphics[scale=0.4]{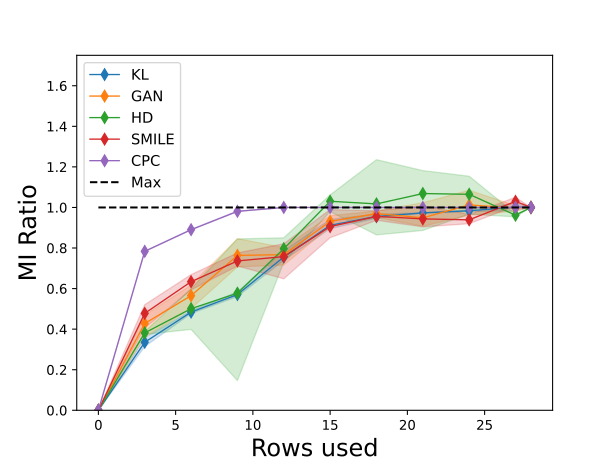}
  \caption{Baseline property, FashionMNIST data set.}
  \label{fig:MI_baseline_fashion}
\end{subfigure}
\caption{Comparison between different estimators for the baseline property, using MNIST data set on the left and FashionMNIST on the right.}
\label{fig:MI_baselines}
\end{figure}

\begin{figure}
\centering
\begin{subfigure}{.5\textwidth}
  \centering
  \includegraphics[scale=0.4]{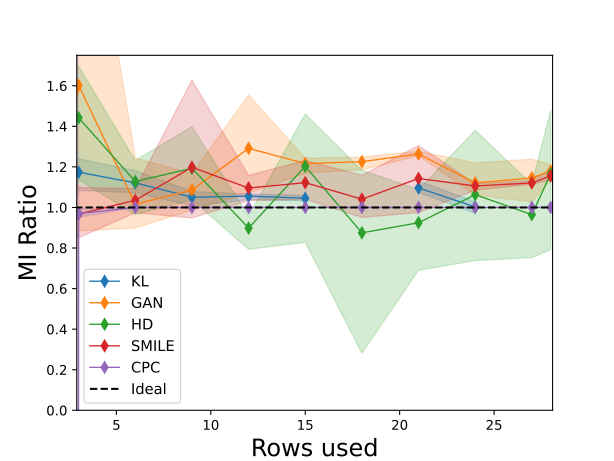}
  \caption{Data processing property, MNIST digits data set.}
  \label{fig:MI_data_processing_digits}
\end{subfigure}%
\begin{subfigure}{.5\textwidth}
  \centering
  \includegraphics[scale=0.4]{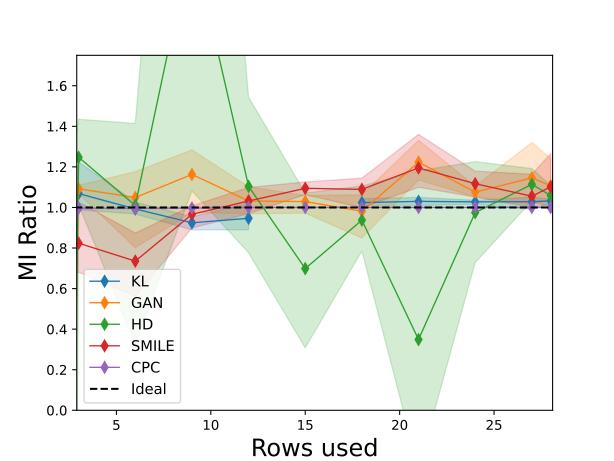}
  \caption{Data processing property, FashionMNIST data set.}
  \label{fig:MI_data_processing_fashion}
\end{subfigure}
\caption{Comparison between different estimators for the data processing property, using MNIST data set on the left and FashionMNIST on the right.}
\label{fig:MI_data processings}
\end{figure}

\begin{figure}
\centering
\begin{subfigure}{.5\textwidth}
  \centering
  \includegraphics[scale=0.4]{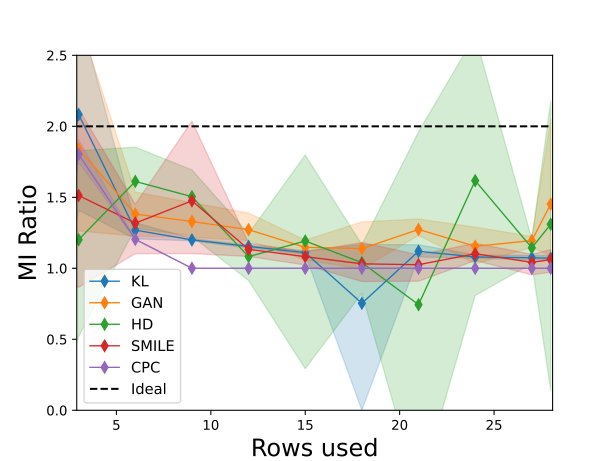}
  \caption{Additivity property, MNIST digits data set.}
  \label{fig:MI_additivity_digits}
\end{subfigure}%
\begin{subfigure}{.5\textwidth}
  \centering
  \includegraphics[scale=0.4]{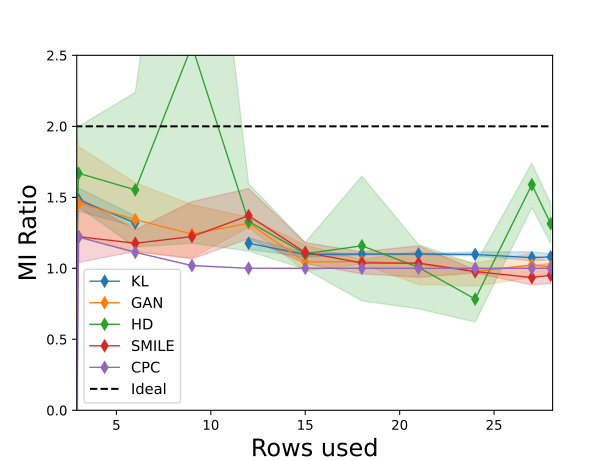}
  \caption{Additivity property, FashionMNIST data set.}
  \label{fig:MI_additivity_fashion}
\end{subfigure}
\caption{Comparison between different estimators for the additivity property, using MNIST data set on the left and FashionMNIST on the right.}
\label{fig:MI_additivities}
\end{figure}
\chapter{Cooperative Channel Capacity Learning} % Cortical
\chaptermark{Cooperative Channel Capacity Learning}
\label{sec:cortical}

In this chapter, the problem of determining the capacity of a communication channel is formulated as a cooperative game, between a generator and a discriminator, that is solved via deep learning techniques. The task of the generator is to produce channel input samples for which the discriminator ideally distinguishes conditional from unconditional channel output samples.

The learning approach, referred to as cooperative channel capacity learning (CORTICAL), provides both the optimal input signal distribution and the channel capacity estimate. Numerical results demonstrate that the proposed framework learns the capacity-achieving input distribution under challenging non-Shannon settings.

The results presented in this chapter are documented in \cite{LetiziaNIPS, CORTICAL}.

\section{Introduction}
\sectionmark{Introduction}
\label{sec:cortical_intro}
While the benefit of a DL approach \cite{Oshea2017} is evident for tasks like signal detection \cite{Ye2017}, decoding \cite{tonello2022mind} and channel estimation \cite{Ye2018,Soltani2019}, the fundamental problem of estimating the capacity of a channel remains elusive. Despite recent attempts via neural MI estimation \cite{Aharoni2020, Letizia2021,Farhad2022}, it is not clear yet whether DL can provide novel insights.

For a discrete-time continuous memoryless vector channel, the capacity is defined as
\begin{equation}
C = \max_{p_X(\mathbf{x})} I(X;Y),
\end{equation}
where $p_X(\mathbf{x})$ is the input signal PDF, $X$ and $Y$ are the channel input and output random vectors, respectively, and $I(X; Y)$ is the MI between $X$ and $Y$. 
The channel capacity problem involves both determining the capacity-achieving distribution and evaluating the maximum achievable rate. Only a few special cases, e.g., additive noise channels with specific noise distributions under input power constraints, have been solved so far. When the channel is not an additive noise channel, analytical approaches become mostly intractable leading to numerical solutions, relaxations \cite{CuttingPlane}, capacity lower and upper bounds \cite{McKellips2004}, and considerations on the support of the capacity-achieving distribution \cite{Smith1971, Dytso2020}. 
It is known that the capacity of a discrete memoryless channel can be computed using the Blahut-Arimoto (BA) algorithm \cite{BlahutArimoto}, whereas a particle-based BA method was proposed in \cite{Dauwels2005} to tackle the continuous case although it fails to scale to high-dimension vector channels.

In the following section, we show that a data-driven approach can be pursued to obtain the channel capacity. In particular, we propose to learn the capacity and the capacity-achieving distribution of any discrete-time continuous memoryless vector channel via a cooperative framework referred to as CORTICAL. The framework is inspired by GANs \cite{Goodfellow2014} but it can be interpreted as a dual version using an appropriately defined value function. In fact, CORTICAL comprises two blocks cooperating with each other: a generator that learns to sample from the capacity-achieving distribution, and a discriminator that learns to differentiate paired channel input-output samples from unpaired ones, i.e. it distinguishes the joint PDF $p_{XY}(\mathbf{x},\mathbf{y})$ from the product of marginal $p_{X}(\mathbf{x})p_{Y}(\mathbf{y})$, so as to estimate the MI.

\section{Cooperative principle for capacity learning}
\sectionmark{Cortical}
\label{sec:cortical_theory}
To understand the working principle of CORTICAL and why it can be interpreted as a cooperative approach, it is useful to briefly review how GANs operate. 

In the GAN framework, the adversarial training procedure for the generator $G$ consists of maximizing the probability of the discriminator $D$ making a mistake. If $\mathbf{x}\sim p_{\text{data}}(\mathbf{x})$ are the data samples and $\mathbf{z}\sim p_{Z}(\mathbf{z})$ are the latent samples, the Nash equilibrium is reached when the minimization over $G$ and the maximization over $D$ of the value function
\begin{equation}
\mathcal{V}(G,D) = \mathbb{E}_{\mathbf{x} \sim p_{\text{data}}(\mathbf{x})}\biggl[\log \bigl(D(\mathbf{x})\bigr)\biggr]  +\mathbb{E}_{\mathbf{z} \sim p_{Z}(\mathbf{z})}\biggl[\log\bigl(1-D\bigl(G(\mathbf{z})\bigr)\bigr)\biggr],
\label{eq:CORTICAL_GAN_value_function}
\end{equation}
is attained so that $G(\mathbf{z})\sim p_{\text{data}}(\mathbf{x})$. Concisely, the minimization over $G$ forces the generator to implicitly learn the distribution that minimizes the given statistical distance. 

\begin{figure}
	\centering
	\includegraphics[scale=0.45]{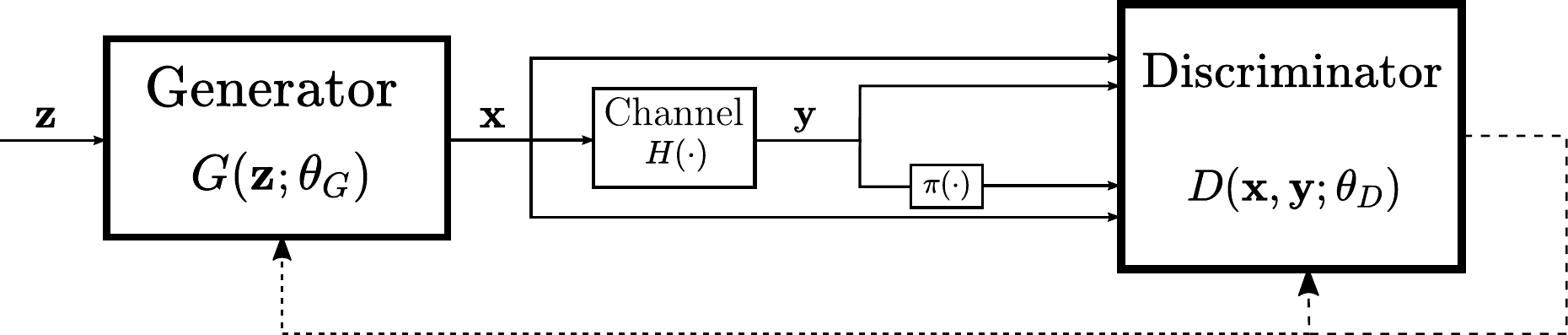}
	\caption{CORTICAL, Cooperative framework for capacity learning: a generator produces input samples with distribution $p_X(\mathbf{x})$ and a discriminator attempts to distinguish between paired and unpaired channel input-output samples.}
	\label{fig:CORTICAL_Cooperative_networks}
\end{figure}

Conversely to GANs, the channel capacity estimation problem requires the generator to learn the distribution maximizing the MI measure. 
Therefore, the generator and discriminator need to play a cooperative max-max game with the objective for $G$ to produce channel input samples for which $D$ exhibits the best performance in distinguishing (in the KL sense) paired and unpaired channel input-output samples. Thus, the discriminator of CORTICAL is fed with both the generator and channel's output samples, $\mathbf{x}$ and $\mathbf{y}$, respectively.
Fig. \ref{fig:CORTICAL_Cooperative_networks} illustrates the proposed cooperative framework that learns both for the cooperative training, as discussed next.

\begin{theorem}
\label{theorem:CORTICAL_Theorem1}
 Let $X\sim p_X(\mathbf{x})$ and $Y|X\sim p_{Y|X}(\mathbf{y}|\mathbf{x})$ be the vector channel input and conditional output, respectively. Let $Y = H(X)$ with $H(\cdot)$ being the stochastic channel model and let $\pi(\cdot)$ be a permutation function \footnote{The permutation takes place over the temporal realizations of the vector $Y$ so that $\pi(Y)$ and $X$ can be considered statistically independent vectors, for instance with the derangement strategy.} over the realizations of $Y$ such that $p_{\pi(Y)}(\pi(\mathbf{y})|\mathbf{x}) = p_{Y}(\mathbf{y})$. In addition, let $G$ and $D$ be two functions in the non-parametric limit such that $X = G(Z)$ with $Z\sim p_Z(\mathbf{z})$ being a latent random vector. If $\mathcal{J}_{\alpha}(G,D)$, $\alpha>0$, is the value function defined as 
\begin{equation}
\mathcal{J}_{\alpha}(G,D) = \alpha \cdot \mathbb{E}_{\mathbf{z} \sim p_{Z}(\mathbf{z})}\biggl[\log \biggl(D\biggl(G(\mathbf{z}),H(G(\mathbf{z}))\biggr)\biggr)\biggr]  -\mathbb{E}_{\mathbf{z} \sim p_{Z}(\mathbf{z})}\biggl[D\biggl(G(\mathbf{z}),\pi(H(G(\mathbf{z})))\biggr)\biggr],
\label{eq:CORTICAL_value_function}
\end{equation}
then the channel capacity $C$ is the solution of
\begin{equation}
C = \max_{G} \max_{D} \frac{\mathcal{J}_{\alpha}(G,D)}{\alpha} + 1- \log(\alpha),
\end{equation}
and $\mathbf{x}=G^*(\mathbf{z})$ are samples from the capacity-achieving distribution, where 
\begin{equation}
G^* = \argmax_G \max_D \mathcal{J}_{\alpha}(G,D).
\end{equation}
\end{theorem}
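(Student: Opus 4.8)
<br />

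The plan is to proceed in two stages, mirroring the inner maximization over $D$ and the outer maximization over $G$. First, I would fix an arbitrary generator $G$ and solve the inner problem $\max_D \mathcal{J}_{\alpha}(G,D)$. Writing the two expectations in \eqref{eq:CORTICAL_value_function} back in terms of the channel input distribution $p_X(\mathbf{x})$ induced by $X=G(Z)$, the first term becomes an expectation over the joint $p_{XY}(\mathbf{x},\mathbf{y})$ and the second (through the permutation/derangement hypothesis $p_{\pi(Y)}(\pi(\mathbf{y})|\mathbf{x})=p_Y(\mathbf{y})$) becomes an expectation over the product of marginals $p_X(\mathbf{x})p_Y(\mathbf{y})$. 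This casts the value function into the form
\begin{equation}
\mathcal{J}_{\alpha}(G,D) = \alpha\,\mathbb{E}_{p_{XY}}\bigl[\log D(\mathbf{x},\mathbf{y})\bigr] - \mathbb{E}_{p_X p_Y}\bigl[D(\mathbf{x},\mathbf{y})\bigr],
\end{equation}
which I recognize as exactly the scaled-KL value function from Lemma~\ref{lemma:MI_LemmaKL} (the $\gamma$-DIME objective) with $\gamma=1$, up to the scaling by $\alpha$. This identification is the conceptual bridge between this chapter and the $f$-DIME material, so I would make it explicit rather than rederiving from scratch.

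Second, I would carry out the pointwise optimization over $D$. Taking the functional derivative of the integrand with respect to $D(\mathbf{x},\mathbf{y})$ and setting it to zero gives $\alpha\, p_{XY}/D - p_X p_Y = 0$, hence the optimal discriminator is $\hat{D}(\mathbf{x},\mathbf{y}) = \alpha\, p_{XY}(\mathbf{x},\mathbf{y})/\bigl(p_X(\mathbf{x})p_Y(\mathbf{y})\bigr) = \alpha R(\mathbf{x},\mathbf{y})$, where $R$ is the density ratio. The second derivative $-\alpha p_{XY}/D^2$ is negative, confirming this is a maximum. Substituting $\hat{D}$ back into $\mathcal{J}_{\alpha}$, the second expectation collapses to $\alpha\,\mathbb{E}_{p_Xp_Y}[R]=\alpha$, and the first becomes $\alpha\bigl(\log\alpha + \mathbb{E}_{p_{XY}}[\log R]\bigr) = \alpha\bigl(\log\alpha + I(X;Y)\bigr)$. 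Collecting terms yields
\begin{equation}
\max_D \mathcal{J}_{\alpha}(G,D) = \alpha\bigl(I(X;Y) + \log\alpha\bigr) - \alpha,
\end{equation}
so that $\mathcal{J}_{\alpha}(G,\hat{D})/\alpha + 1 - \log\alpha = I(X;Y)$, where the mutual information is computed for the input distribution induced by this particular $G$.

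Third, I would perform the outer maximization. Since the inner maximum equals $I(X;Y)$ (up to the fixed affine correction in $\alpha$) and this quantity depends on $G$ only through the induced input distribution $p_X(\mathbf{x})$, maximizing over $G$ in the non-parametric limit is equivalent to maximizing $I(X;Y)$ over all realizable input distributions. By the definition of channel capacity $C=\max_{p_X} I(X;Y)$, this gives $C = \max_G\max_D \mathcal{J}_{\alpha}(G,D)/\alpha + 1 - \log\alpha$, and the maximizing $G^*$ produces samples from the capacity-achieving distribution. The reordering of the two suprema is justified because the inner value does not couple back into the constraint set for the outer variable.

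I expect the main obstacle to be rigor rather than mechanics: the non-parametric-limit assumption must be handled carefully so that ``maximizing over $G$'' genuinely ranges over all admissible input distributions (e.g.\ respecting any power or support constraints implicitly imposed on $X$), and one must verify that the generator class $X=G(Z)$ with a fixed latent $Z$ is rich enough to realize the capacity-achieving $p_X$. I would also need to confirm that the permutation hypothesis delivers true marginal samples, which Lemma~\ref{lemma:MI_lemma4} shows fails for a naive permutation with fixed points; invoking the derangement strategy of Section~\ref{subsec:mi_derangements} closes this gap and guarantees $\hat{D}$ recovers the exact density ratio rather than a biased, batch-size-dependent surrogate.
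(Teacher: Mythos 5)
Your proof is correct and follows essentially the same route as the paper's: rewrite the value function as $\alpha\,\mathbb{E}_{p_{XY}}[\log D] - \mathbb{E}_{p_X p_Y}[D]$, optimize pointwise over $D$ to get $D^*(\mathbf{x},\mathbf{y}) = \alpha\, p_{XY}(\mathbf{x},\mathbf{y})/(p_X(\mathbf{x})p_Y(\mathbf{y}))$ with the same second-derivative check, substitute back to obtain $\alpha\bigl(I(X;Y)+\log\alpha-1\bigr)$, and conclude by maximizing over $G$ to reach capacity. The only cosmetic difference is your framing of the inner problem via the $\gamma$-DIME lemma (note that $\alpha$ scales only the first expectation, so that identification is slightly loose), but this is not load-bearing since you carry out the pointwise optimization directly, exactly as the paper does.
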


\begin{proof}
The value function in (\ref{eq:CORTICAL_GAN_value_function}) can be written as
\begin{equation}
\mathcal{J}_{\alpha}(G,D) = \alpha \cdot \mathbb{E}_{(\mathbf{x},\mathbf{y}) \sim p_{XY}(\mathbf{x},\mathbf{y})}\biggl[\log \biggl(D(\mathbf{x},\mathbf{y})\biggr)\biggr] -\mathbb{E}_{(\mathbf{x},\mathbf{y}) \sim p_{X}(\mathbf{x}) p_Y(\mathbf{y})}\biggl[D(\mathbf{x},\mathbf{y})\biggr].
\label{eq:CORTICAL_value_function_p}
\end{equation}
Given a generator $G$ that maps the latent (noise) vector $Z$ into $X$, we need firstly to prove that $\mathcal{J}_{\alpha}(G,D)$ is maximized for 
\begin{equation}
D(\mathbf{x},\mathbf{y})=D^*(\mathbf{x},\mathbf{y})=\alpha  \frac{p_{XY}(\mathbf{x},\mathbf{y})}{p_{X}(\mathbf{x}) p_Y(\mathbf{y})}.
\end{equation}
Using the Lebesgue integral to compute the expectation
\begin{equation}
\mathcal{J}_{\alpha}(G, D) = \alpha  \int_{\mathbf{y}} \int_{\mathbf{x}}\biggl[p_{XY}(\mathbf{x},\mathbf{y}) \log \biggl(D(\mathbf{x},\mathbf{y})\biggr) - p_{X}(\mathbf{x}) p_Y(\mathbf{y}) \biggl(D(\mathbf{x},\mathbf{y})\biggr)\biggr] \diff \mathbf{x} \diff \mathbf{y},
\label{eq:CORTICAL_value_function_q}
\end{equation}
taking the derivative of the integrand with respect to $D$ and setting it to $0$, yields the following equation in $D$:
\begin{equation}
\alpha  \frac{p_{XY}(\mathbf{x},\mathbf{y})}{D(\mathbf{x},\mathbf{y})} - p_{X}(\mathbf{x}) p_Y(\mathbf{y})=0,
\end{equation}
whose solution is the optimum discriminator
\begin{equation}
D^*(\mathbf{x},\mathbf{y}) = \alpha  \frac{p_{XY}(\mathbf{x},\mathbf{y})}{p_{X}(\mathbf{x}) p_Y(\mathbf{y})}.
\end{equation}
In particular, $\mathcal{J}_{\alpha}(D^*)$ is a maximum since the second derivative of the integrand $-\alpha  \frac{p_{XY}(\mathbf{x},\mathbf{y})}{D^2(\mathbf{x},\mathbf{y})}$ is a non-positive function.
Therefore, substituting $D^*(\mathbf{x},\mathbf{y})$ in \eqref{eq:CORTICAL_value_function_q} yields
\begin{align}
\mathcal{J}_{\alpha}(G,D^*) = \; & \alpha \int_{\mathbf{y}} \int_{\mathbf{x}}{\biggl[p_{XY}(\mathbf{x},\mathbf{y}) \log \biggl(\alpha  \frac{p_{XY}(\mathbf{x},\mathbf{y})}{p_{X}(\mathbf{x}) p_Y(\mathbf{y})}} \biggr) \nonumber \\
& - p_{X}(\mathbf{x}) p_Y(\mathbf{y}) \biggl(\alpha \frac{p_{XY}(\mathbf{x},\mathbf{y})}{p_{X}(\mathbf{x}) p_Y(\mathbf{y})}\biggr)\biggr] \diff \mathbf{x} \diff \mathbf{y}.
\end{align}
Now, it is simple to recognize that the second term on the right hand side of the above equation is equal to $-\alpha$. Thus, 
\begin{align}
\mathcal{J}_{\alpha}(G,D^*) & = \; \alpha \cdot \mathbb{E}_{(\mathbf{x},\mathbf{y}) \sim p_{XY}(\mathbf{x},\mathbf{y})}\biggl[\log \biggl(\frac{p_{XY}(\mathbf{x},\mathbf{y})}{p_{X}(\mathbf{x}) p_Y(\mathbf{y})} \biggr)\biggr]  + \alpha \log(\alpha) - \alpha \nonumber \\
& = \alpha \bigl(I(X;Y)+\log(\alpha)-1\bigr).
\end{align}
Finally, the maximization over the generator $G$ results in the mutual information maximization since $\alpha$ is a positive constant. We therefore obtain,  
\begin{equation}
\max_{G} \mathcal{J}_{\alpha}(G,D^*)  + \alpha - \alpha \log(\alpha) = \alpha \max_{p_X(\mathbf{x})} {I}(X;Y).
\end{equation}
\end{proof}
Theorem \ref{theorem:CORTICAL_Theorem1} states that at the equilibrium, the generator of CORTICAL has implicitly learned the capacity-achieving distribution with the mapping $\mathbf{x}=G^*(\mathbf{z})$. 
In contrast to the BA algorithm, here the generator samples directly from the optimal input distribution $p_X(\mathbf{x})$ rather than explicitly modeling it. No assumptions on the input distribution's nature are made. Moreover, we have access to the channel capacity directly from the value function used for training as follows:
\begin{equation}
    \label{eq:CORTICAL_capacity}
    C = \frac{\mathcal{J}_{\alpha}(G^*,D^*)}{\alpha} + 1- \log(\alpha).
\end{equation}

In the following, we propose to parametrize $G$ and $D$ with NNs and we explain how to train CORTICAL.

\subsection{Parametric implementation}
\label{subsec:cortical_implementation}
It can be shown (see Sec.4.2 in \cite{Goodfellow2014}) that the alternating training strategy described in Alg. \ref{alg:CORTICAL_1}
converges to the optimal $G$ and $D$ under the assumption of having enough capacity and training time. Practically, instead of optimizing over the space of functions $G$ and $D$, it is reasonable to model both the generator and the discriminator with NNs $(G,D)=(G_{\theta_G},D_{\theta_D})$ and optimize over their parameters $\theta_G$ and $\theta_D$ (see Alg. \ref{alg:CORTICAL_1}).

We consider the distribution of the source $p_Z(\mathbf{z})$ to be a multivariate normal distribution with independent components. The function $\pi (\cdot)$ implements a random derangement of the batch $\mathbf{y}$ and it is used to obtain unpaired samples. We use simple NN architectures and we execute $K=10$ discriminator training steps every generator training step. Details of the architecture and implementation are reported in \cite{CORTICAL_github}, together with animated graphs.

It should be noted that in Theorem \ref{theorem:CORTICAL_Theorem1}, $p_X(\mathbf{x})$ can be subject to certain constraints, e.g., peak and/or average power. Such constraints are met by the design of $G$, for instance using a batch normalization layer. 
Alternatively, we can impose peak and/or average power constraints in the estimation of capacity by adding regularization terms (hinge loss) in the generator part of the value function \eqref{eq:CORTICAL_value_function}, as in constrained optimization problems, e.g., Sec. III of \cite{Faycal2001}. Specifically, the value function becomes
\begin{align}
\mathcal{J}_{\alpha}(G,D) = \; & \alpha \cdot \mathbb{E}_{\mathbf{z} \sim p_{Z}(\mathbf{z})}\biggl[\log \biggl(D\biggl(G(\mathbf{z}),H(G(\mathbf{z}))\biggr)\biggr)\biggr] -\mathbb{E}_{\mathbf{z} \sim p_{Z}(\mathbf{z})}\biggl[D\biggl(G(\mathbf{z}),\pi(H(G(\mathbf{z})))\biggr)\biggr] \nonumber \\ 
& - \lambda_A \max(||G(\mathbf{z})||^2_2-A^2,0) - \lambda_P \max(\mathbb{E}[||G(\mathbf{z})||^2_2]-P,0),
\label{eq:CORTICAL_value_function_lambda}
\end{align}
with $\lambda_A$ and $\lambda_P$ equal to $0$ or $1$.

\begin{algorithm}[t]
\caption{Cooperative Channel Capacity Learning}
\label{alg:CORTICAL_1}
\begin{algorithmic}[1]
\Inputs{$N$ training steps, $K$ discriminator steps, $\alpha$.}
\For{$n=1$ to $N$}
    \For{$k=1$ to $K$}
	\State \multiline{Sample batch of $m$ noise samples $\{\mathbf{z}^{(1)},\dots,\mathbf{z}^{(m)}\}$ from $p_Z(\mathbf{z})$;}
	\State \multiline{Produce batch of $m$ channel input/output paired samples $\{(\mathbf{x}^{(1)},\mathbf{y}^{(1)}),\dots,(\mathbf{x}^{(m)},\mathbf{y}^{(m)})\}$ using the generator $G_{\theta_G}$ and the channel model $H$;}
        \State \multiline{Shuffle (derangement) $\mathbf{y}$ and get input/output unpaired samples $\{(\mathbf{x}^{(1)},\mathbf{\tilde{y}}^{(1)}),\dots,(\mathbf{x}^{(m)},\mathbf{\tilde{y}}^{(m)})\}$;}
        \State \multiline{Update the discriminator by ascending its stochastic gradient:}
        \begin{equation*}
            \nabla_{\theta_D} \frac{1}{m} \sum_{i=1}^{m}{\alpha \log\bigl(D_{\theta_D}\bigl(\mathbf{x}^{(i)},\mathbf{y}^{(i)}\bigr)\bigr)-D_{\theta_D}}\bigl(\mathbf{x}^{(i)},\mathbf{\tilde{y}}^{(i)}\bigr).
        \end{equation*}
\EndFor
    \State \multiline{Sample batch of $m$ noise samples $\{\mathbf{z}^{(1)},\dots,\mathbf{z}^{(m)}\}$ from $p_Z(\mathbf{z})$;}
    \State \multiline{Update the generator by ascending its stochastic gradient:}
        \begin{align*}
            \nabla_{\theta_G} \frac{1}{m} & \sum_{i=1}^{m}{\alpha \log\biggl( D_{\theta_D}\biggl(G_{\theta_G}\bigl(\mathbf{z}^{(i)}\bigr),H\bigl(G_{\theta_G}\bigl(\mathbf{z}^{(i)}\bigr)\bigr)\biggr)\biggr)} \\
            & -D_{\theta_D}\biggl(G_{\theta_G}\bigl(\mathbf{z}^{(i)}\bigr),\pi \bigl(H\bigl(G_{\theta_G}\bigl(\mathbf{z}^{(i)}\bigr)\bigr)\bigr)\biggr).
        \end{align*}
\EndFor
\end{algorithmic}
\end{algorithm}

For an AWGN channel under an average power constraint the exact form of the capacity and the capacity-achieving input distribution are well known. Using CORTICAL to retrieve the optimal Gaussian (of variance $P$) input is not a sufficient proof of concept since one may argue that the neural generator implements a sort of central limit theorem approximation. Therefore, we decide to apply the cooperative framework in three, more representative, non-Shannon's setups.

\section{Applications to non-Shannon scenarios}
\sectionmark{non-Shannon scenarios}
\label{sec:cortical_results}
To demonstrate the ability of CORTICAL to learn the optimal channel input distribution, we evaluate its performance in three non-standard scenarios: 1) the AWGN channel subject to a peak-power constrained input; 2) an additive non-Gaussian noise channel subject to two different input power constraints; and 3) the Rayleigh fading channel known at both the transmitter and the receiver subject to an average power constraint. These are among the few scenarios for which analysis has been carried out in the literature and thus offer a baseline to benchmark CORTICAL. For both the first and third scenarios, it is known that the capacity-achieving distribution is discrete with a finite set of mass points \cite{Smith1971,Tchamkerten2004,Dytso2020}. For the second scenario the nature of the input distribution depends on the type of input power constraint \cite{Fahs2014}. Additional results including the study of the classical AWGN channel with average power constraint are reported in GitHub \cite{CORTICAL_github}.

\subsection{Peak power-limited Gaussian channels}
The capacity of the discrete-time memoryless vector Gaussian noise channel with unit-variance under peak-power constraints on the input is defined as \cite{Rassouli2016}
\begin{equation}
C(A) = \sup_{p_X(\mathbf{x}): ||X||_2\leq A} I(X;Y),
\end{equation}
where $p_X(\mathbf{x})$ is the channel input PDF and $A^2$ is the upper bound for the input signal peak power.
The AWGN Shannon capacity constitutes a trivial upper bound for $C(A)$,
\begin{equation}
C(A) \leq \frac{d}{2}\log_2\biggl(1+\frac{A^2}{d}\biggr),
\end{equation}
while a tighter upper bound for the scalar channel is provided in \cite{McKellips2004}
\begin{equation}
C(A) \leq \min\biggl\{\log_2\biggl(1+\frac{2A}{\sqrt{2\pi e}}\biggr), \frac{1}{2}\log_2\bigl(1+A^2\bigr) \biggr\}.
\end{equation}
\begin{figure}
	\centering
	\includegraphics[scale=0.35]{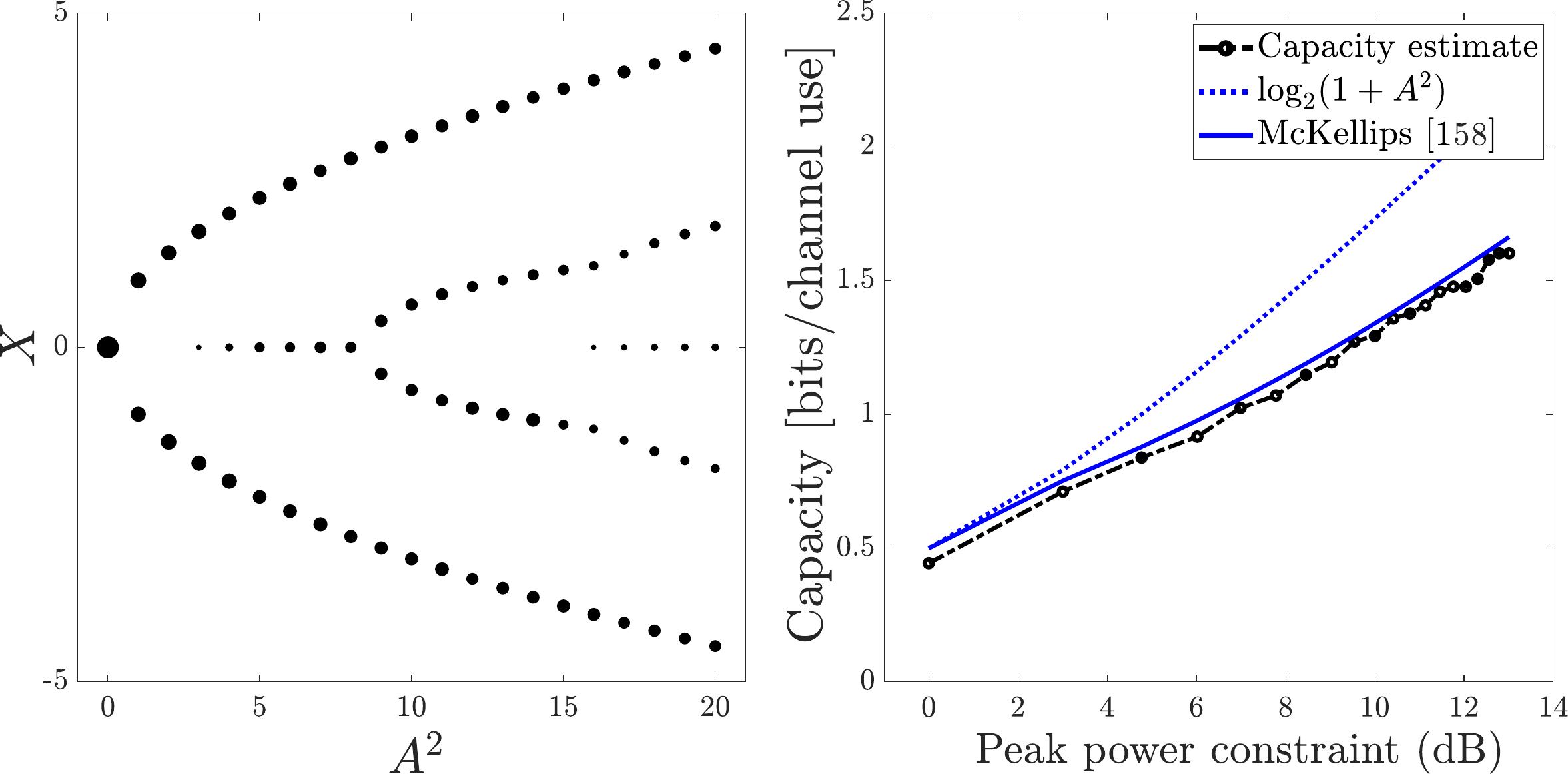}
	\caption{AWGN scalar peak-power constrained channel: a) capacity-achieving distribution learned by CORTICAL, the marker's radius is proportional to the PMF; b) capacity estimate and comparisons.}
	\label{fig:CORTICAL_bifurcation}
\end{figure} 
For convenience of comparison and as a proof of concept, we focus on the scalar $d=1$ and $d=2$ channels. 
The first findings on the capacity-achieving discrete distributions were reported in \cite{Smith1971}. For a scalar channel, it was shown in \cite{Sharma2010} that the input has alphabet $\{-A,A\}$ with equiprobable values if $0<A \lessapprox 1.6$, while it has ternary alphabet $\{-A,0,A\}$ if $1.6 \lessapprox A \lessapprox 2.8$.
Those results are confirmed by CORTICAL which is capable of both understanding the discrete nature of the input and learning the support and probability mass function (PMF) for any value of $A$. Notice that no hypothesis on the input distribution is provided during training. Fig. \ref{fig:CORTICAL_bifurcation}a reports the capacity-achieving input distribution as a function of the peak-power constraint $A^2$. The figure illustrates the typical bifurcation structure of the distribution. Fig. \ref{fig:CORTICAL_bifurcation}b shows the channel capacity estimated by CORTICAL with \eqref{eq:CORTICAL_capacity} and compares it with the upper bounds known in the literature.  

\begin{figure}
	\centering
	\includegraphics[scale=0.4]{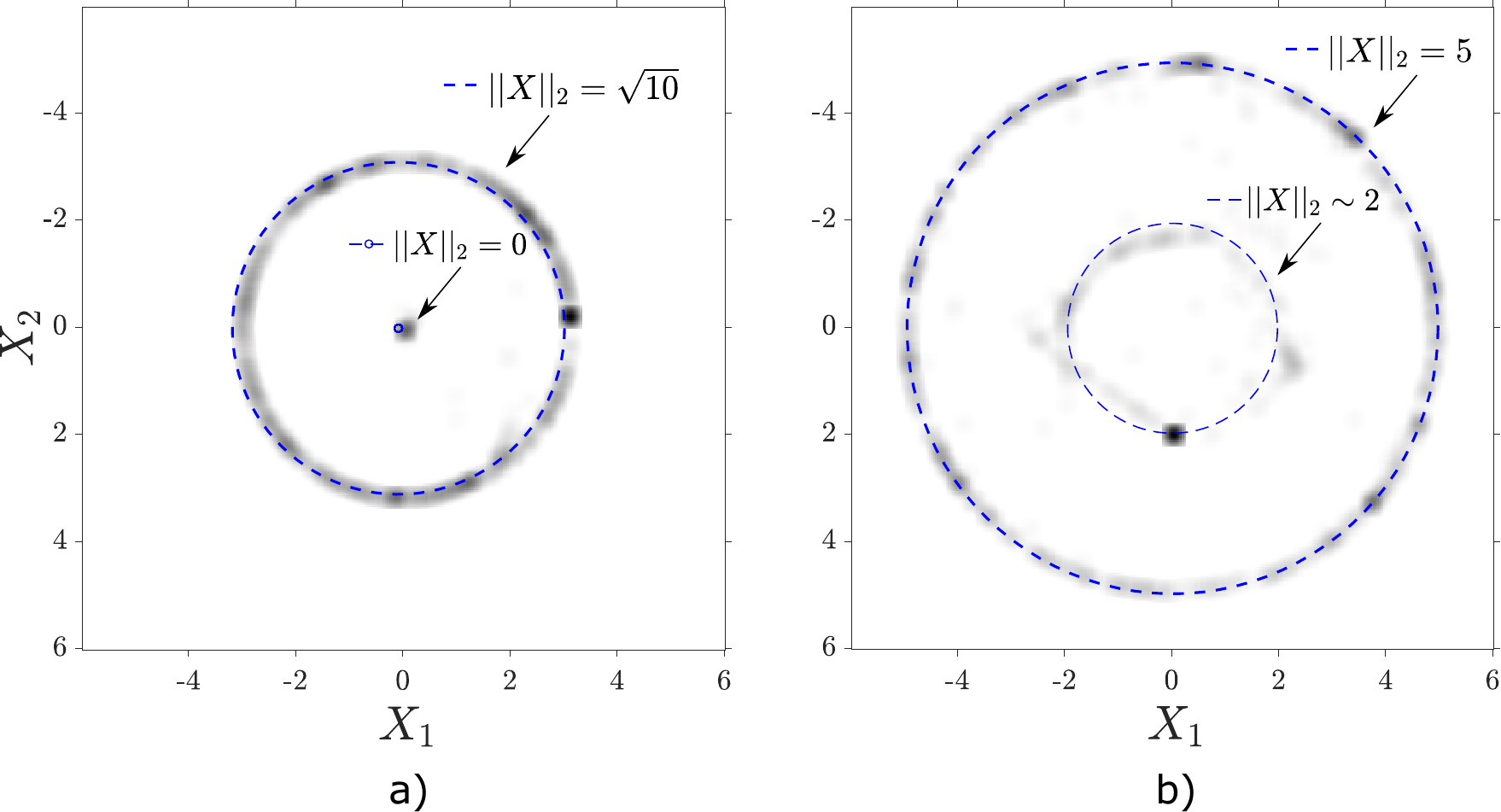}
	\caption{AWGN $d=2$ channel input distributions learned by CORTICAL under a peak-power constraint: a) $A=\sqrt{10}$; b) $A=5$.}
	\label{fig:CORTICAL_heatmaps}
\end{figure} 

\begin{figure}
	\centering
	\includegraphics[scale=0.4]{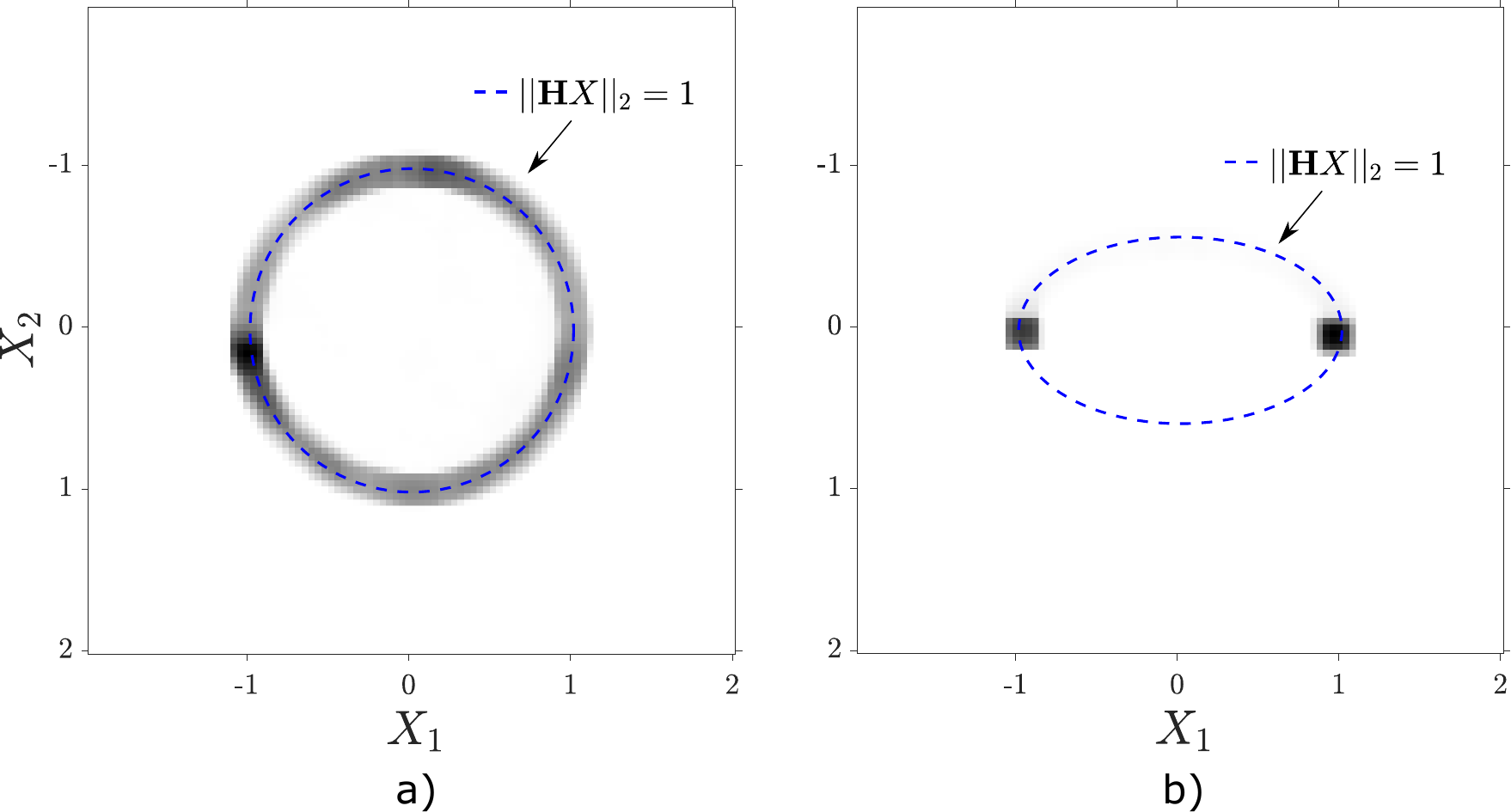}
	\caption{MIMO channel input distributions learned by CORTICAL for different channels $\mathbf{H}$: a) $r_2=1$; b) $r_2=3$. Locus of points satisfying $||\mathbf{H}X||_2= 1$ is also shown.}
	\label{fig:CORTICAL_MIMO}
\end{figure} 
Similar considerations can be made for the $d=2$ channel under peak-power constraints \cite{Rassouli2016}, and in general for the vector Gaussian channel of dimension $d$ \cite{Dytso2019}. 
The case $d=2$ is considered in Fig. \ref{fig:CORTICAL_heatmaps}a ($A=\sqrt{10}$) and Fig. \ref{fig:CORTICAL_heatmaps}b ($A=5$) where CORTICAL learns optimal bi-dimensional distributions, matching the results in \cite{Rassouli2016}. In this case the amplitude is discrete. 

We now consider the MIMO channel where the analytical characterization of the capacity-achieving distribution under a peak-power constraint remains mostly an open problem \cite{Dytso2019-MIMO}. We analyze the particular case of $d=2$: 
\begin{equation}
C(\mathbf{H},r) = \sup_{p_{X}(\mathbf{x}): ||\mathbf{H}X||_2\leq r} I(X;\mathbf{H}X+N),
\end{equation}
where $N \sim \mathcal{N}(0,\mathbf{I}_2)$ and $\mathbf{H} \in \mathbb{R}^{2\times 2}$ is a given MIMO channel matrix known to both transmitter and receiver. We impose $r=1$, and we also impose a diagonal structure on $\mathbf{H}=\text{diag}(1,r_2)$ without loss of generality since diagonalization of the system can be performed. We study two cases: $r_2 = 1$, which is equivalent to a $d=2$ Gaussian channel with unitary peak-power constraint; and $r_2=3$, which forces an elliptical peak-power constraint. The former set-up produces a discrete input distribution in the magnitude and a continuous uniform phase, as shown in Fig. \ref{fig:CORTICAL_MIMO}a. The latter case produces a binary distribution, as shown in Fig. \ref{fig:CORTICAL_MIMO}b. To the best of our knowledge, no analytical results are available for $1<r_2<2$ and thus CORTICAL offers a guiding tool for the identification of capacity-achieving distributions. 

\subsection{Additive non-Gaussian channels}
The nature of the capacity-achieving input distribution remains an open challenge for channels affected by additive non-Gaussian noise. In fact, it is known that under an average power constraint, the capacity-achieving input distribution is discrete and the AWGN channel is the only exception \cite{Fahs2012}. Results concerning the number of mass points of the discrete optimal input have been obtained in \cite{Tchamkerten2004,Das2000}. If the transmitter is subject to an average power constraint, the support is bounded or unbounded depending on the decay rate (slower or faster, respectively) of the noise PDF tail.

\begin{figure}
	\centering
	\includegraphics[scale=0.4]{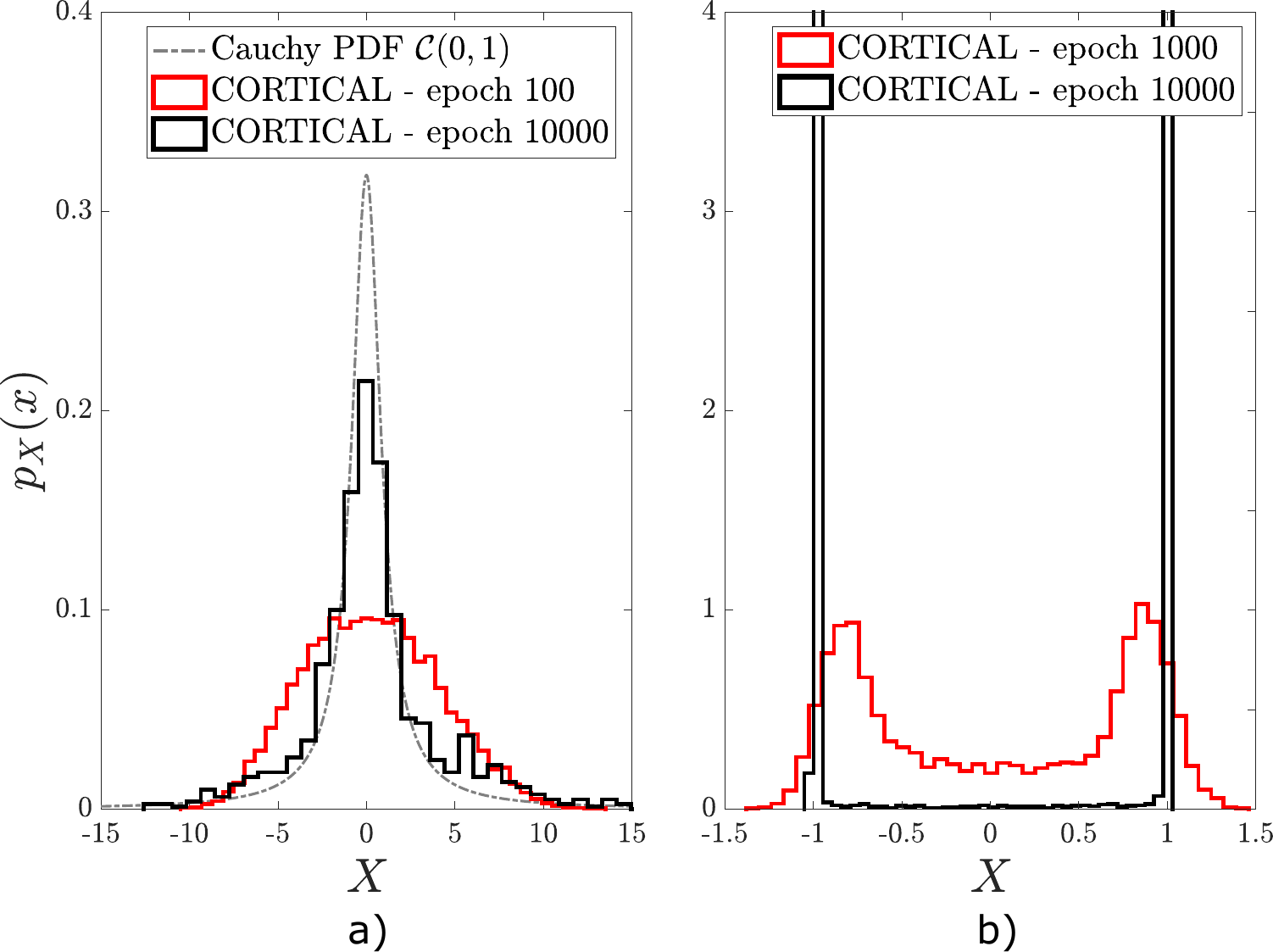}
	\caption{AICN channel input distributions learned by CORTICAL at different training steps under: a) logarithmic power constraint; b) peak-power constraint.}
	\label{fig:CORTICAL_cauchy_subplot}
\end{figure} 
In this section, we consider a scalar additive independent Cauchy noise (AICN) channel with scale factor $\gamma$, under a specific type of logarithmic power constraint \cite{Fahs2014}. In particular, given the Cauchy noise PDF $\mathcal{C}(0,\gamma)$
\begin{equation}
    p_N(n) = \frac{1}{\pi \gamma}\frac{1}{1+\bigl(\frac{n}{\gamma}\bigr)^2},
\end{equation}
we are interested in showing that CORTICAL learns the capacity-achieving distribution that solves
\begin{equation}
C(A,\gamma) = \sup_{p_X(\mathbf{x}): \mathbb{E}\bigl[\log\bigl(\bigl(\frac{A+\gamma}{A}\bigr)^2+\bigl(\frac{X}{A}\bigr)^2\bigr)\bigr]\leq \log(4)} I(X;Y),
\end{equation}
for a given $A\geq \gamma$. From \cite{Fahs2014}, it is known that under such power constraint the channel capacity is $C(A,\gamma)=\log(A/\gamma)$ and the optimal input distribution is continuous with Cauchy PDF $\mathcal{C}(0,A-\gamma)$. For illustration purposes, we study the case of $\gamma=1$ and $A=2$ and report in Fig. \ref{fig:CORTICAL_cauchy_subplot}a the input distribution obtained by CORTICAL after $100$ and $10000$ training steps. For the same channel, we also investigate the capacity-achieving distribution under a unitary peak-power constraint. Fig. \ref{fig:CORTICAL_cauchy_subplot}b shows that the capacity achieving distribution is binary.

\subsection{Fading channels}
As the last representative scenario, we study the Rayleigh fading channel subject to an average power constraint $P$ with input-output relation given by
\begin{equation}
    Y = \alpha X + N,
\end{equation}
where $\alpha$ and $N$ are independent circular complex Gaussian random variables, so that the amplitude of $\alpha$ is Rayleigh-distributed.
\begin{figure}
	\centering
	\includegraphics[scale=0.25]{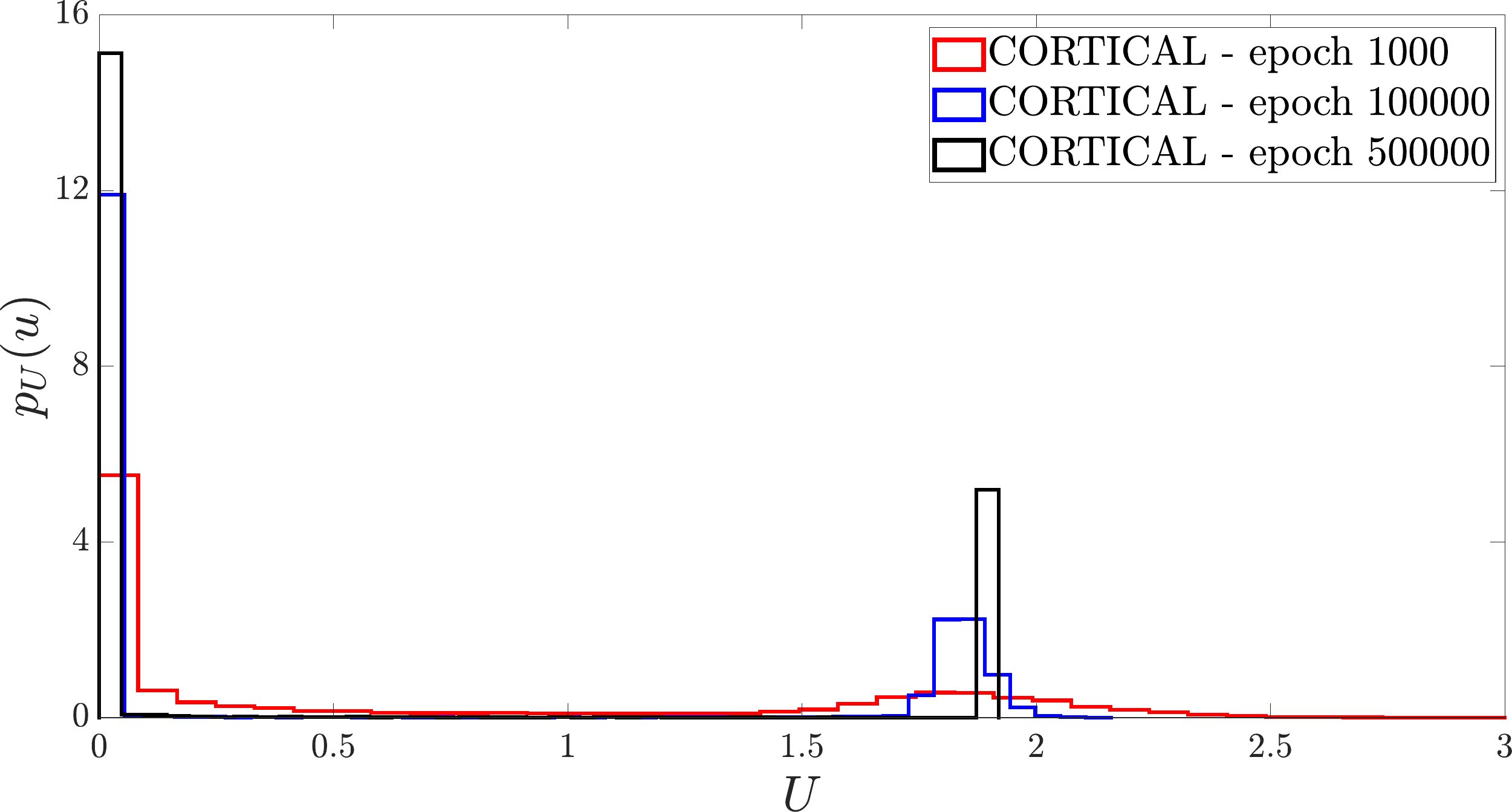}
	\caption{Optimal input $U$ learned by CORTICAL at different training steps.}
	\label{fig:CORTICAL_rayleigh}
\end{figure} 
It is easy to prove that an equivalent model for deriving the distribution of the amplitude of the signal achieving-capacity is obtained by defining $U=|X|\sigma_{\alpha}/\sigma_N$ and $V=|Y|^2/\sigma_N^2$, which are non-negative random variables such that $\mathbb{E}[U^2]\leq P\frac{\sigma_{\alpha}^2}{\sigma_N^2}\triangleq a$ and whose conditional distribution is
\begin{equation}
    p_{V|U}(v|u) = \frac{1}{1+u^2} \exp \biggl(-\frac{v}{1+u^2}\biggr).
\end{equation}
A simplified expression for the conditional PDF is 
\begin{equation}
\label{eq:CORTICAL_rayleigh_channel}
    p_{V|S}(v|s) = s \cdot \exp (-sv)
\end{equation}
where $S = 1/(1+U^2) \in (0,1]$ and $\mathbb{E}[1/S-1]\leq a$.
From \cite{Faycal2001}, it is known that the optimal input signal amplitude $S$ (and thus $U$) is discrete and possesses an accumulation point at $S=1$ ($U=0)$. We use CORTICAL to learn the capacity-achieving distribution and verify that it matches \eqref{eq:CORTICAL_rayleigh_channel} for the case $a=1$ reported in \cite{Faycal2001}. 

Fig. \ref{fig:CORTICAL_rayleigh} illustrates the evolution of the input distribution for different CORTICAL training steps. Also in this scenario, the generator learns the discrete nature of the input distribution and the values of the PMF. %as predicted in \cite{Faycal2001}. 

\section{Summary}
\label{sec:cortical_conclusions}
In this chapter, we have presented CORTICAL, a novel deep learning-based framework that learns the capacity-achieving distribution for any discrete-time continuous memoryless channel. CORTICAL's framework consists of a cooperative max-max game between two NNs. Non-Shannon channel settings have validated the algorithm. The proposed approach offers a novel tool for studying the channel capacity in non-trivial channels and paves the way for the holistic and data-driven analysis/design of communication systems. 

\part{Part 4}

\chapter{Applications to Power Line Communications} % Cortical
\chaptermark{Applications to PLC}
\label{sec:plc}

In this chapter, we discuss in detail the application of
ML and DL in the context of power line communication (PLC), a technology which exploits the existing power delivery infrastructure to convey information signals \cite{LampeTonelloSwart}.
We show that PLC represents an excellent case study due to the complex and intricate nature of the power line network. 
In fact, such network is not designed to carry
high frequency signals and multiple effects reduce the communication performances. High noise, electromagnetic interference, channel characterized by strong attenuation and frequency selectivity are the main challenges in PLC. Therefore, it is widely accepted in the community that to develop new efficient and reliable PLC technology it is fundamental to model the PLC physical layer \cite{LampeTonelloSwart}. 
Nevertheless, the complexity of the environment and the multi-parameter dependence render the development of mathematical models extremely challenging. For this reason, a phenomenological approach such as DL constitutes then a great opportunity. 
In the following sections we report several experiments to validate, in the context of PLC, the theoretical findings presented in this thesis. 

The results presented in this chapter are documented in \cite{ML_PLC, RighiniLetizia2019, Letizia2019a, TonelloImpedance, LetiziaIsplc2021, Letizia2019c}.

\section{PLC channel and noise generation}
\sectionmark{PLCs medium modeling}
\label{sec:plc_gan}

\subsection{Channel generation}
\label{subsec:plc_channel}

\subsubsection{Channel modeling methods}
The PLC CTF is usually modeled in time or frequency domain.
Time-domain models are generally described by multi-path effects, whereas frequency-domain models are characterized
by frequency notches. The latter model has been analyzed with three main approaches: bottom-up,
top-down and synthetic [2].

\begin{itemize}
    \item Bottom-up channel modeling refers to an approach where the channel impulse/frequency response is obtained via the application of transmission line theory to a specified network topology, cables and loads characteristics. Conventionally, this approach is applied to obtain a specific response and it is also referred to as deterministic model. Further, for complex networks the existing calculation methods are rather elaborated. 
    \item Top-down channel modeling refers to an approach where the channel impulse/frequency response comes from a parametric model with parameters obtained by fitting real data from measurements. The statistical top-down model allows generating channels with statistics close to those exhibited by real channels [8]. In particular, the model is simple, flexible, and it uses a small set of parameters. The parameters can be adjusted to generate channels according to a certain statistical class.
    \item The synthetic PLC channel model is a phenomenological method to emulate the statistics of the PLC channel. This method does not include the medium physical knowledge to define the model but only the CTFs distribution [3]. Hence, it is possible to develop a model purely based on the statistical description of the observed data, totally abstracting from the physical interpretation of the medium. The synthetic model can be obtained via generative models such as GANs, as described in Sec. \ref{sec:gan_ch_synthesis}.
\end{itemize}

\subsubsection{Metrics for model validation}
The dataset used for the implementation consists of $1312$
measured single-input-single-output CTFs with bandwidth of
$100$ MHz obtained from measurements in the in-home environment
\cite{tonello2014inhome}. The frequency resolution was set to $62.5$
KHz which leads to $1601$ frequency samples. To overcome
lack of channel realizations and complexity problems we
decided to downsample by a factor of $32$. To work with
complex numbers, a vector of $100$ samples has been defined,
whose first $50$ entries correspond to the real part of $H(f)$ and the last flipped $50$ to the imaginary one. This set-up constitutes our input data $\mathbf{x}$: $1312$ realizations of a random variable of dimension $100$.

We use the following metrics to assess the quality of the generated CTFs:
\begin{itemize}
    \item The average channel gain (ACG) corresponds to the mean value of the magnitude of CTF in dB scale
\begin{align}
ACG=\frac{1}{N} \sum\limits_{k=0}^{N-1} |H_k|^2 = T_s^2 \sum\limits_{k=0}^{N-1} |h_k|^2 
\label{eq:plc_acg}
\end{align}
where $|H_k|$ is the CTF sampled with frequency resolution $1/(NT_s)$ and obtained as the $N$-points DFT of the impulse response $h(n
T_s)$. The ACG in dB scale is defined as $ACG|_{dB}=10\log_{10}(ACG)$;

\begin{figure}
\centering
\begin{subfigure}{0.5\textwidth}
  \centering
 \includegraphics[scale=0.4]{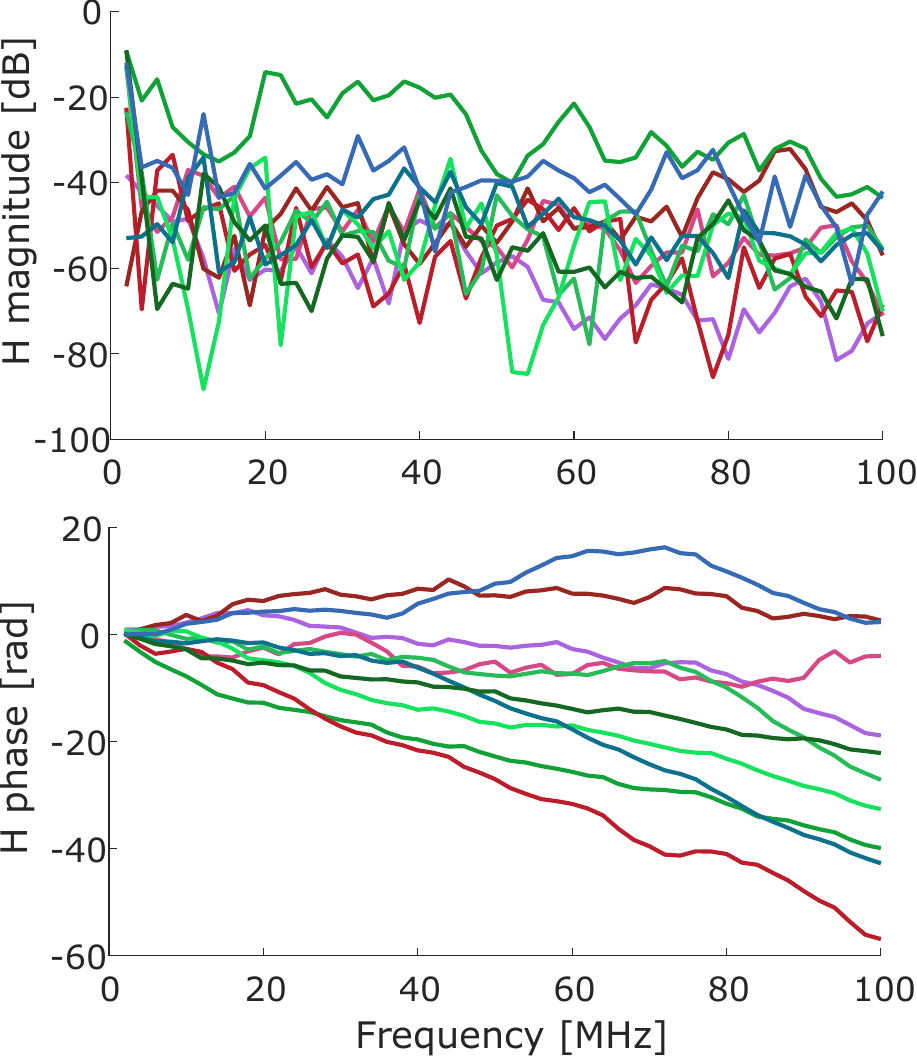}
	\caption{database of measurements.}
	\label{fig:plc_H_measured}
\end{subfigure}%
\begin{subfigure}{0.5\textwidth}
  \centering
 \includegraphics[scale=0.4]{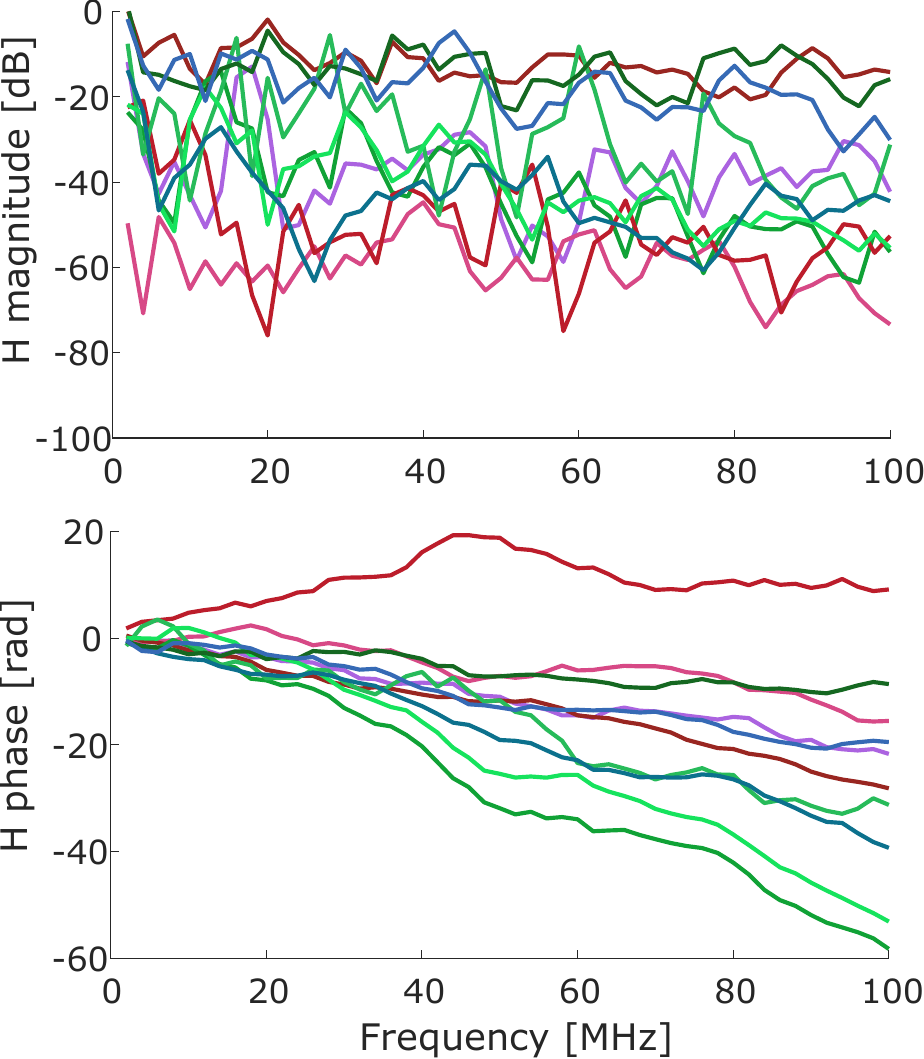}
	\caption{generated with AE \& GAN.}
	\label{fig:plc_H_generated}
\end{subfigure}
\caption{10 randomly picked CTFs. Magnitude and phase.}
\label{fig:plc_CTFs}
\end{figure}

\item The root mean square delay Spread (RMS-DS) is the square root of the second central moment of the power-delay profile, defined as
\begin{align}
\sigma_\tau = \sqrt{\mu_\tau^{''} - \mu_\tau^2}
\label{eq:plc_mu0}
\end{align}
with 
\begin{align}
\mu_\tau &= \frac{ \sum\limits_{n=0}^{N-1} nT_s|h(nT_s)|^2}{\sum\limits_{n=0}^{N-1} |h(nT_s)|^2}  \\
\quad
\mu_\tau^{''}
&= \frac{ \sum\limits_{n=0}^{N-1} (nT_s)^2 |h(nT_s)|^2}{\sum\limits_{n=0}^{N-1} |h(nT_s)|^2},
\label{eq:plc_mu1}
\end{align}
it is used to measure the multi-path spread and characterize the effect of channel dispersion on a transceiver. 

\item The coherence bandwidth (CB) measures the frequency selective behavior of the channel. This can be analyzed in terms of the autocorrelation function of the frequency response, defined as 
\begin{align}
R(\Delta f)=\int_{B}^{} H(f)H^*(f+\Delta f)df
\label{eq:plc_Rdf}
\end{align}
where $^*$ denotes the complex conjugate, $\Delta f$ is the frequency shift and $B$ is the channel band.
\end{itemize}

The analysis is enhanced with the mean and the statistical moments (standard deviation (STD), skewness (SKW) and kurtosis (KURT)) computed on the CTF magnitude.

\subsubsection{Results}
This paragraph presents the list of results achieved with the generative system described in Sec. \ref{sec:gan_ch_synthesis}. 
The hidden representation has been extracted according to the parameters listed in Tab. \ref{tab:autoencoder_nn}. 

Figures \ref{fig:plc_H_measured} and \ref{fig:plc_H_generated} show the measured and generated CTFs, respectively. The generated trends look very consistent with the measured ones. The magnitude spans the large dynamic range of attenuation from $-10$ to $-100$ dB and the characteristic frequency notches are clearly visible. The unwrapped phase exhibits similar trends along frequency for both generated and measured CTFs. 

\begin{figure}[t]
	\centering
	\includegraphics[width = 8cm, height = 5cm]{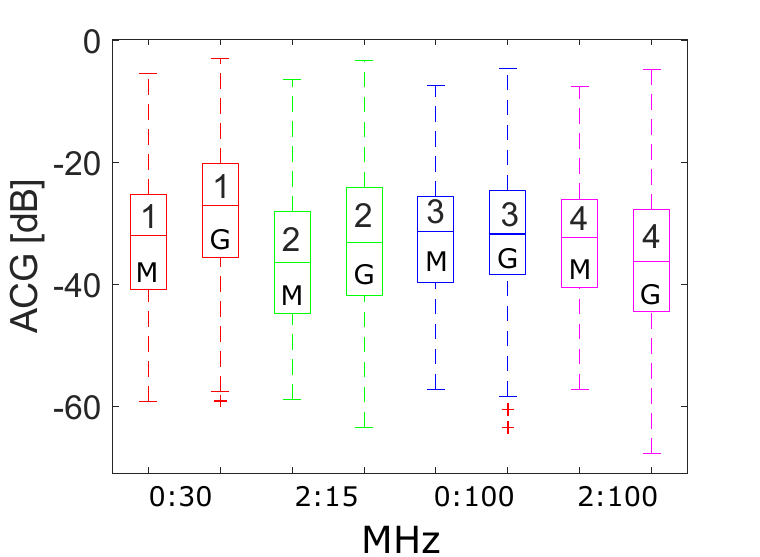}
	\caption{ACG boxplot comparison.}
	\label{fig:plc_acg}
\end{figure}

The boxplot in Fig. \ref{fig:plc_acg} shows the comparison of ACG for four different frequency ranges. Respectively 1, 2, 3, 4 refer to 0:30 MHz, 2:15 MHz, 0:100 MHz and 2:100 MHz. It allows for an agile representation of the minimum, maximum, 25th and 75th percentiles, mean and outliers. The graph shows that the metrics are almost matched for all frequency bands. The highest discrepancies are visible at the lowest frequencies.
Tab. \ref{tab:plc_metrics_comparison} provides the full list of metrics considered in the assessment of the model. The 25th and 75th percentiles and mean are shown for both measured and generated data in the PLC broad band spectrum (up to 100 MHz). The values share similar ranges for both measured and generated data. The computed CB on the data is very high due to the low resolution ($2$ MHz) which is the only drawback of adopting this methodology. Thus, in this case, it should not be considered a useful metric to characterize the CTF. Nevertheless, both generated and measured CB values are similar. 

\renewcommand{\arraystretch}{1.7} % allarga l'altezza delle righe
\begin{table}[t]
	\scriptsize % text dimension
	\centering
	\caption{Metrics comparison between measured and generated data. The table shows the metrics in range 0:100 MHz.}
	\begin{tabular}{c|*3c|*3c} 
		\toprule
		Metrics
		          & \multicolumn{3}{c}{Measured}  & \multicolumn{3}{c}{Generated} \\
                        & 25p   & mean & 75p        & 25p   & mean & 75p      \\
		\midrule
		ACG [dB]        & -39,6 &	-31,9&	-25,6   & -38,3&	-31,7&	-24,5 \\ \hline
		RMS-DS [$\mu$s] & 0.126 &	0.136&	0.152   & 0.082&	0.105&	0.129 \\ \hline
        CB [MHz]        & --    &   4.0  & --       &  --  &  4.2    &    --  \\ \hline
        STD             & 0.006 & 0.029  &	0.032   & 0.007 &	0.037 &	0.041 \\ \hline
        SKW             & -1.79 &	0.083&	1.80    & -0.030&	1.988 &	4.45  \\ \hline
        KURT            & 7.04  &	18.68&	27.84   & 7.15  &	20.33 &	31.54 \\ 
		\bottomrule	
	\end{tabular}	
	\label{tab:plc_metrics_comparison}
\end{table}

\subsection{Noise generation}
\label{subsec:plc_noise}

\subsubsection{Traditional noise modeling methods}
The PLC noise signal is generated by different sources and
possesses peculiar properties. Power line networks are
usually made by unshielded cables that tend to couple with
both radiated and conducted noise signals that are traveling
in the surroundings of the network. Moreover, the appliances
connected to the power line generate and inject noise components that
may have a cyclostationary behavior with period related to the
mains frequency of $50$/$60$ Hz.
In literature, the PLC noise is described as a sum of
five components: colored background noise (CBGN), narrow-band noise (NBN), periodic impulsive noise synchronous
to mains (PINS), periodic impulsive noise asynchronous
to mains (PINAS) and aperiodic impulsive noise (APIN) \cite{Zimmermann2002}:
\begin{itemize}

\item The CBGN class \cite{6288568} represents the noise as a process obtainable as follows: $n_{CBGN}(t) = h_{CBGN}* (n_W(t) \cdot \sigma)$;
where $h_{CBGN}$ is a LTI filter, $n_W$(t) is a white Gaussian noise, $\sigma$ is the constant standard deviation. In the literature, it is reported that this type of noise exhibits a stationary behavior \cite{1650328}. Two types of PDFs are associated to the CBGN noise: Normal and Alpha stable. 

\item The NBN class models the noise as a sum of different independent harmonic interferences \cite{5570980,7248409}. 
\begin{align}
n_{NBN}(t) = \sum_{j=1}^{N_{NBN}} A_j(t) \sin(2\pi f_it+\phi_i(t)),  
\label{eq:plc_nbn}
\end{align}
where $N_{NBN}$ is the number of independent interferences, $A_i(t)$, $f_i$ and $\phi_i$ are respectively the time envelope, the central frequency and the phase of the $j^{th}$ interference. 
Typically, it is reported that this type of noise exhibits two different time behaviors: stationary and cyclostationary. 

\item The PINS and the PINAS are described by a summation of different independent Gaussian components with a time varying power \cite{1650328,HanStoicaKaiser2016_1000056745}. Usually, the inter-arrival time between pulses, amplitude and width are the main parameters to characterize this type of noise. Generalized extreme value PDF are associated to PINS noise.

\item The APIN exhibits no deterministic behavior and it is usually described with a Middleton class A model \cite{4091283,6547827}. 
\begin{align}
p_{APIN}(n) &= \sum_{j=0}^{\infty} \frac{e^{-A}A^j}{j!} \frac{1}{\sqrt{2\pi \sigma^2_j}}\exp \left(-\frac{n^2}{2\sigma^2_j}\right),
\label{eq:plc_apin}
\end{align}
where $p_{APIN}(n)$ is the PDF of the noise amplitude. $A$ is the impulsive index, $\sigma_j$ are the standard deviations of each Gaussian PDF.
\end{itemize}

The noise in multi-conductor power lines keeps the aforementioned classes and assumes interesting joint characteristics between different conductor pairs. An example is the high spatial correlation between the cyclostationary noise signals on different phases. This is usually caused by network-based switched power supplies \cite{8245785}. Moreover, measurements demonstrate that multi-conductor noise is not only correlated, but sometimes reveals deterministic behaviors \cite{8360239}. 

We use the GAN-based methodology described in Sec. \ref{sec:medium_channelsynthesis} to model the noise.

\subsubsection{Model validation and results}
The dataset consists of PLC noise measurements in the narrow-band spectrum (from $3$ kHz to $500$ kHz) \cite{8360239}. The dataset contains two traces of multi-conductor
noise sampled at $1$ MSa/s. The multi-conductor noise was
acquired between live and protective-earth and between neutral
and protective earth pairs. The noise traces were cut in time
windows of $1024$ samples, to obtain a new dataset of $12540$
noise frames.

\begin{table}
	\scriptsize % text dimension
	\centering
	\caption{Features statistics.}
	\begin{tabular}{c|cc|cc|cc} 
		\toprule
		\textbf{features}  & \multicolumn{2}{c|}{\textbf{mean}} & \multicolumn{2}{c}{\textbf{std}} & \multicolumn{2}{|c}{\textbf{median}} \\
		& generated &	real&	generated&	real&	generated&	real \\
		\midrule
		max [V]& 0.06&	0.08&	0.06&	0.09&	0.04&	0.05 \\
		mean [mV]& -6.2&	0.46&	10.01&	24.58&	-4.39&	0.280\\
		energy [$\mu J$]& 0.11&	0.26&	0.54&	1.10&	0.027&	0.039\\
		std [V] & 0.03&	0.03&	0.03&	0.05&	0.02&	0.02\\
		skewness& -0.18&	-0.01&	0.42&	0.44&	-0.19&	-0.01\\
		kurtosis& 2.29&	2.36&	0.55&	0.65&	2.2&	2.24\\
		entropy& 1.48&	1.52&	0.49&	0.56&	1.45&	1.52\\
		peaks $>$ 0.05 V & 2.56&	5.25&	5.62&	9.61&	0&	0\\
		skew. auto.& 0.1&	0.13&	0.52&	0.5&	0.07&	0.13\\
		kurt. auto.& 3.03&	2.66&	0.65&	0.76&	2.95&	2.62\\		
	\end{tabular}
	\label{tab:plc_features_stat}
\end{table}

\begin{figure}
	\centering
	\includegraphics[scale=0.6]{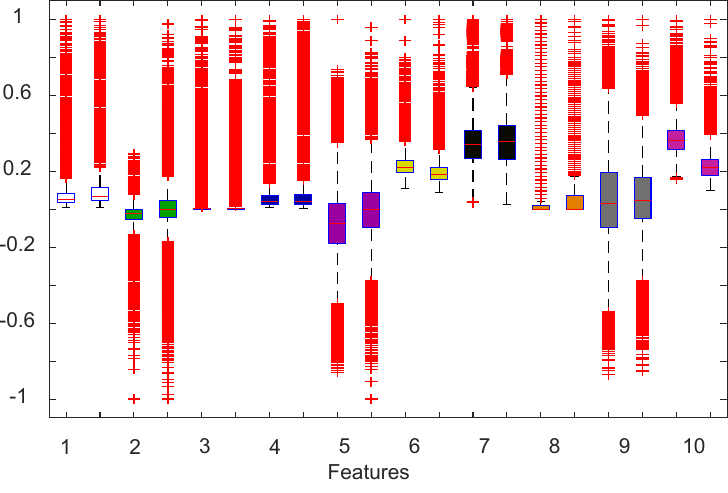}
	\caption{Normalized box-plot of several features for both the generated and real noise traces.}
	\label{fig:plc_Boxplot}
\end{figure}

\begin{figure}
	\centering
	\includegraphics[scale=0.6]{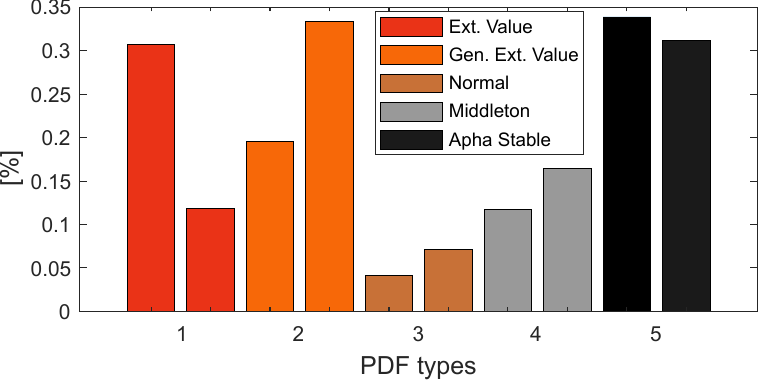}
	\caption{Bar plot of PDF classification for the generated and real noise traces.}
	\label{fig:plc_Barplot}
\end{figure}

Considering the physical nature of the noise measurements,
we rely on statistical physical properties of the noise
to assess the goodness of the generation method. However,
qualitative metrics based on human visual perception are
useful during the training process in order to understand the
status of the network. Furthermore, images provide a quick
visual understanding of the features similarities which in
general is hard to formulate as a mathematical problem. As
an example, the top row of Fig. \ref{fig:plc_Spectrogram} shows two spectrograms,
a real and a generated one, close to each other. The structure
similarity in the spectrograms is depicted also in the noise
traces. The same type of consideration can be made for multiconductor noise generation in Fig. \ref{fig:medium_Block-Spectrogram}. The synthetic block-spectrogram has not only a similar structure compared to the real one, but it also provides a double internal structure which is reflected into a sort of negative correlation between the two noise traces, as in the real scenario.

\begin{figure}[t]
	\centering
	\includegraphics[scale=0.45]{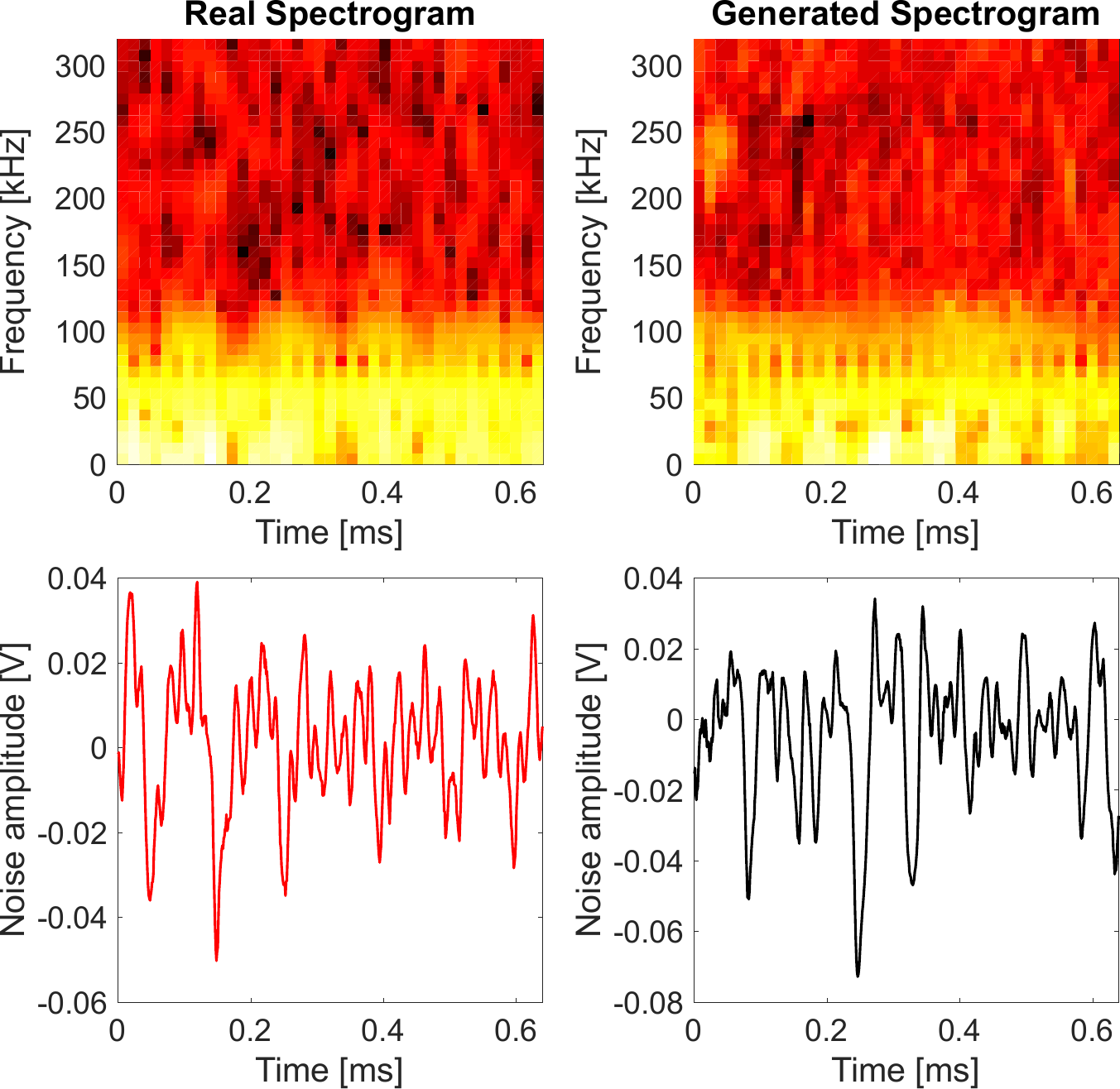}
	\caption{Spectrogram representations of randomly picked real noise measurement and generated noise trace.}
	\label{fig:plc_Spectrogram}
\end{figure}

We consider several quantitative parameters to statistically compare the generated and real noise traces. During such analysis, we applied the method explained in \cite{righiniAutomaticClustering} to both the generated and real noise traces. Practically, the quantitative evaluation procedure is split into three phases: a first step to extract a dataset of physical features calculated for the noise traces, a second part in which the self-organizing map (SOM) clustering algorithm is applied to isolate different noise classes, and a third last phase where a labeling system associates PDFs of a certain type to the identified clusters. The considered PDFs are (1) = Extreme value, (2) = Generalized extreme value, (3) = Normal, (4) = Middleton class A, (5) = Alpha stable. 

A first glance of the normalized features is shown in Fig. \ref{fig:plc_Boxplot}. In details, the shown features are: (1) maximum of the magnitude, (2) mean, (3) energy, (4) standard deviation, (5) skewness, (6) kurtosis, (7) estimated entropy, (8) peaks over 0.05 V, (9) skewness of the autocorrelation and (10) kurtosis of the autocorrelation. The boxplot illustrates a correspondence between the fake and real noise statistical features. A quantitative analysis is also reported in Tab. \ref{tab:plc_features_stat}. In particular, the values corresponding to the generated traces are consistent with the statistical analysis conducted on the real noise measurements.
As an example, features such max, energy, standard deviation, kurtosis, entropy and skewness autocorrelation are matched in mean, standard deviation and median. Other features instead, such as the mean and peaks over 0.05 V, show different statistical moments, sign of possible improvements during the generation training process. Another important physical properties of the noise come from its spectrum analysis, specifically from the analysis of either the Fourier transform or the power spectral density (PSD). Fig.~\ref{fig:plc_FFT} illustrates the average value of the Fourier Transform computed for each fake and real noise trace. The higher discrepancies between the two types of noise are indeed detectable in the frequency domain where the fake noise has not completely learned yet how to replicate the frequency information. 

The last clustering phase deals with a comparison of PDFs. A certain PDF is associated for each noise cluster, and the difference between generated and real data can be formulated as the probability to identify a certain PDF. Fig.~\ref{fig:plc_Barplot} shows that similar percentages are obtained for normal, Middleton class A and Alpha stable PDFs. On the other hand, the percentages of extreme value and generalized extreme values PDFs are slightly different. An immediate consequence of this consideration is the fact that the generation method is not accurate in fully replicating the impulsive noise, leaving space to future improvements, for instance via StyleGANs.

\begin{figure}[t]
	\centering
	\includegraphics[scale=0.5]{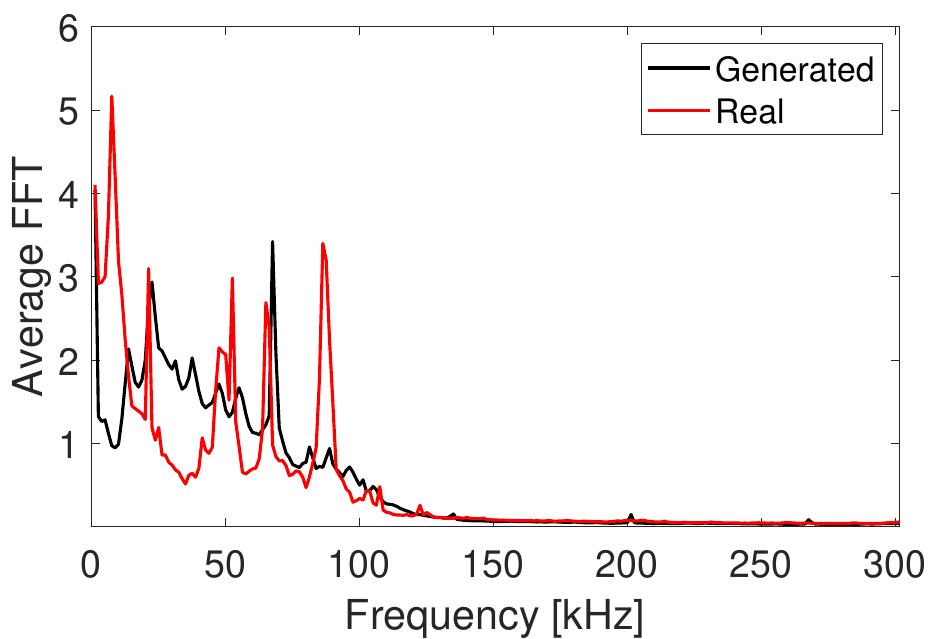}
	\caption{Average Fourier transform magnitude for both the real and generated noise traces.}
	\label{fig:plc_FFT}
\end{figure}

\section{Learning impedance entanglements}
\sectionmark{PLC impedance entanglement}
\label{sec:plc_entanglement}
In wireline communication networks, a line impedance entanglement (IE) exists since changes of the line impedance at one network port cause a change of the line impedance at the other port. This physical phenomenon can be constructively exploited to realize a form of digital modulation that is referred to as impedance modulation (IM). IM is an alternative method to more conventional voltage modulation (VM). The IE can be studied in ML terms, enabling the implementation of a ML based receiver. Numerical results are obtained in a dataset of measured power line communication channels, which is among the most challenging environments for such a modulation approach. The resulting system can have practical implementation, for instance in a smart building automation network where monitoring-control of sensors and devices enables the efficient energy management.

Comparisons with the optimal MaxL receiver that perfectly knows the IE transfer function are made. It is found that the ML based receiver performs close to the optimal genie receiver.

\subsection{The impedance entanglement}
In recent work \cite{TonelloIsplc2020}, a fundamental question has been posed: \emph{is it possible to encode information and physically transmit it over a wireline medium in a different way other than a VM?} The answer has been found by the analysis of the PLC medium, where it has been shown that a change of the impedance at one channel port induces a change of the impedance on the other port \cite{tcas}, \cite{tee}. This can be proved with transmission line theory arguments and it can be referred to as IE. It is important to observe that the intensity of the IE depends on the physical medium characteristics (electrical characteristics of the wires, network topology, and loads). In many applications and scenarios of practical relevance the exploitation of the IE enables another mode of transmission. In addition, it can be jointly used with VM, which generates two parallel (although coupled) communication channels.

We assume a wireline system, e.g., a PLC system, where physical wires offer a medium for data transmission. Impedance modulation is exploited to encode and transmit information between two network ports \cite{TonelloIsplc2020}. The transmitter is solely composed by a set of impedances, which are selected through mapping of a sequence of data symbols. Each impedance encodes a symbol to be transmitted. The receiver goal is to decode the impedance coded at the transmitter side exploiting the IE. This is accomplished by using a shunt based receiver \cite{TonelloIsplc2020}. It consists of a signal generator and a measurement of the currents and voltages at the receiver port.
Assuming to represent voltages and currents with phasors, the wired channel acts as a complex and non-linear function of the transmitter impedance. In fact, from the microwave circuit model where the ABCD matrix \cite{pozar}
is deployed, the relationships among the injected currents and the applied voltages are described by
\begin{equation}
	\begin{cases}
		V_{r}(f) = A(f)V_{tx}(f) + B(f)I_{tx}(f) \\
		I_{r}(f) = C(f)V_{tx}(f) + D(f)I_{tx}(f)
		\end{cases}.
	\label{eq:plc_VIinVIout}
\end{equation}
$V_{tx}$ and $I_{tx}$ are the voltage and the current at the transmitter side, respectively. It is easy to understand that the ratio between these two quantities gives the transmitter impedance $Z_{tx} = \frac{V_{tx}}{I_{tx}}$. The system model and related quantities is reported in Fig. \ref{fig:plc_abcd}.
\begin{figure}
	\includegraphics[width=\columnwidth]{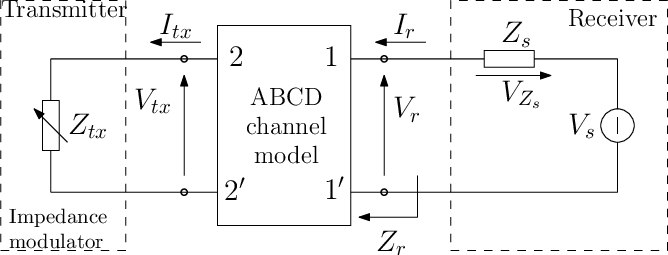}
	\caption{Impedance modulation scheme with shunt based receiver.}
	\label{fig:plc_abcd}
\end{figure}
The impedance seen at the receiver port $Z_{r}$ (i.e., the line impedance at the port $1-1'$) depends on the physical wireline network and on the impedance of the transmitter $Z_{tx}$. The relation can be obtained from (\ref{eq:plc_VIinVIout}), and it reads
\begin{equation}
	Z_{r} = \frac{AV_{tx}+BI_{tx}}{CV_{tx}+DI_{tx}} =  \frac{AZ_{tx}+B}{CZ_{tx}+D}.
	\label{eq:plc_Zr}
\end{equation}
As it becomes clear from (\ref{eq:plc_Zr}), the impedance $Z_{r}$ is a function $\Omega({\bold{T},Z_{tx},f})$ of the ABCD channel matrix $\bold{T}$, the transmitter impedance $Z_{tx}$ and the frequency $f$. The receiver is designed to measure the impedance $Z_{r}$ with the deployment of a known shunt impedance $Z_{s}$ and a voltage generator $V_{s}$. In fact, $Z_{r}$ can be retrieved from the measurement of the shunt voltage $V_{Z_s}$ since
\begin{equation}
	Z_{r} = \left(\frac{V_s}{V_{Z_s}}-1\right)Z_s.
	\label{eq:plc_ZrV}
\end{equation}
The performance of the modulation scheme depends on the IE and it is additionally influenced by the presence of noise which adds onto the measured shunt voltage drop.  In accordance with (\ref{eq:plc_Zr}), the function $\Omega(\bold{T},Z_{tx})$ exhibits a non-linear behaviour strongly influenced by the parameter $C$. The parameter $C$ of the ABCD matrix $\bold{T}$ represents the transadmittance of the modeled channel. The higher the transadmittance is, the higher the fraction of the injected current $I_r$ lost in the channel is. In turn, this is reflected into smaller sensitivity to the variations of the measured voltage $V_{Z_s}$ induced by the variations of $Z_{tx}$, i.e., lower IE.

The effect of the non-linearity of $\Omega$ is exemplified in Fig. \ref{fig:plc_nonlineare} which shows the distortion of the transmitted constellation (comprising four different impedance values) introduced by the channel. Higher values of $C$ reduce the distance among the constellation points challenging further the symbol detection.
  \begin{figure}
  	\includegraphics[width=\columnwidth]{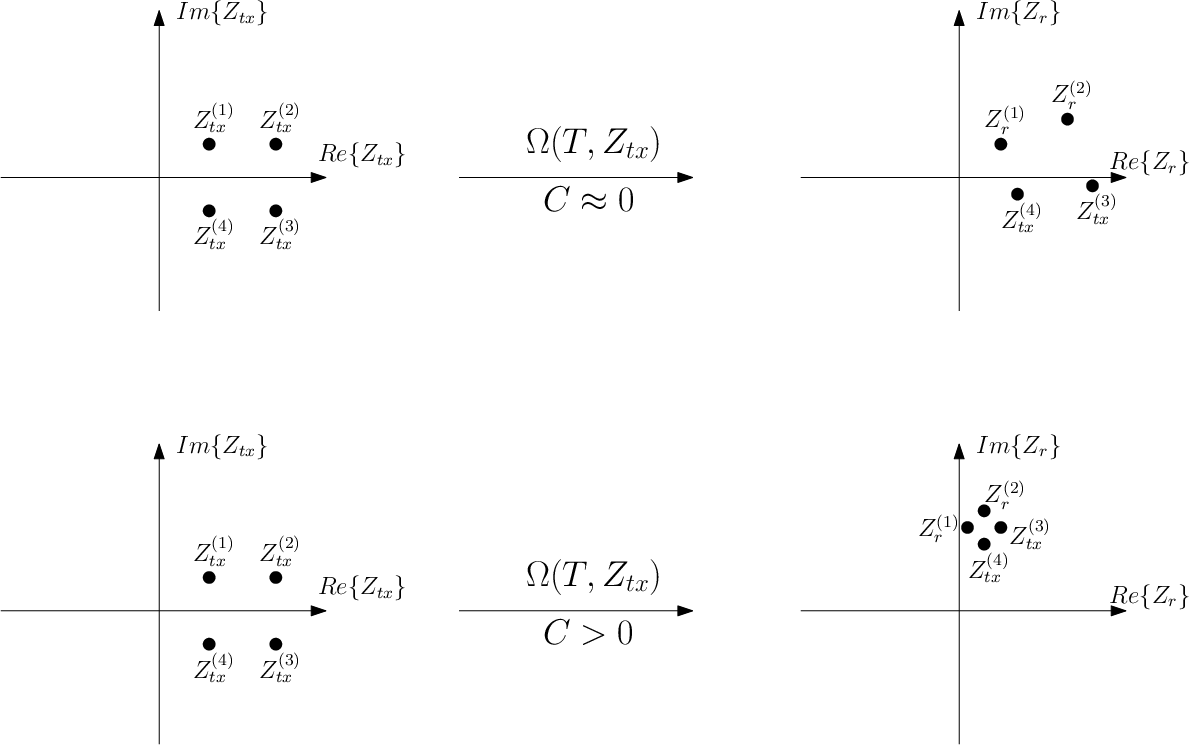}
  	\caption{Impedance entanglement dependence of channel transadmittance $C$.}
  	\label{fig:plc_nonlineare}
  \end{figure}

In this section, a different and simpler approach from the one and more general followed in \cite{TonelloIsplc2020} is presented.
It is assumed to start by having two possible resistive states at the transmitter side, i.e., a short circuit resistance $Z_{tx}^{0}=0 \Omega$ and an open circuit resistance $Z_{tx}^{\infty}$ (with a resistance set to 1 G$\Omega$). These two extreme impedance values induce at the receiver side two distinct impedance profiles. For instance, considering (\ref{eq:plc_Zr}), and substituting the aforementioned transmitter impedance states, two different values of impedance $Z_r$ are obtainable, i.e. $Z_r^{0} = \frac{B}{D}$ and $Z_r^{\infty} = \frac{A}{C}$.
Qualitatively, it is easy to understand that the greater the difference between the two impedance profiles in a certain frequency band, the higher the probability to correctly discriminate the correct value at the receiver side is.

Since what is directly measurable is the shunt voltage drop, it is useful to look at it. In fact, the impedance state at the transmitter induces a certain shunt voltage response. In particular, such a response is informative if evaluated in a large spectrum, e.g., in the bandwidth $BW={1-100} MHz$ spectrum. Such a voltage response can be used to define a measure of the IE. The discriminative measure that we use herein is based on the average distance of the two voltage signals entangled at the receiver. Taking into account (\ref{eq:plc_ZrV}), and assuming a constant value for both the shunt impedance $Z_s$ and the shunt voltage generator $V_s$, the entanglement measure $E_V$ is defined as
\begin{equation}
	E_V = \frac{|Z_s|}{|V_s|} \int_{BW} |V_d(f)|\,df,
	\label{eq:plc_wV}
\end{equation}
where $V_d(f)$ is the voltage defined as the difference $V_s^{0}(f)-V_s^{\infty}(f)$, in which $V_s^{0}(f)$ and $V_s^{\infty}(f)$ are the two measured voltages referred to the two impedances $Z_r^{0}(f)$ and $Z_r^{\infty}(f)$, respectively.
High values of $E_V$ correspond to have noticeable differences between the two measured voltages, thus to more detectable impedances and higher entanglement.

\subsubsection{Representative channels: the in-building PLC case}
A set of measured broad-band PLC channels has been exploited for practically assessing the theoretical discussion of previous sections. In particular, about 1200 PLC channel measurements are taken into account, and for each of them the entanglement measure (\ref{eq:plc_wV}) is calculated. A subset of three representative channels are chosen considering the 0.1, 0.5 and 0.9 quantiles of (\ref{eq:plc_wV}). Fig. \ref{fig:plc_wV} reports the voltage difference $|V_s^{0}(f)-V_s^{\infty}(f)|$ expressed in dB$\mu$V, evaluated on a 50 $\Omega$ shunt. The test voltage $V_s$ has been set to an amplitude of 0.0224 V at each frequency, which corresponds to a dissipated power of -50 dBm/Hz.
  \begin{figure}
  \centering
	\includegraphics[scale=0.75]{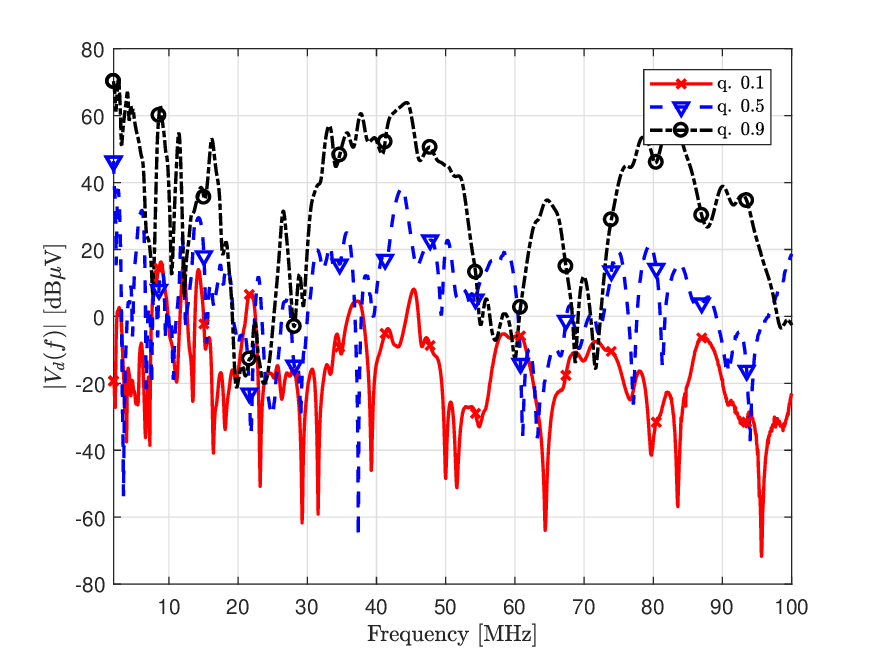}
	\caption{Voltage difference magnitude measured with the shunt-based receiver when the transmitter impedance swaps from $Z_{tx}^{0}$ to $Z_{tx}^{\infty}$. Different colours refer to the different quantile values.}
	\label{fig:plc_wV}
\end{figure}

We now discuss the MaxL based receiver and then we present a ML based approach to learn the IE and realize a practical receiver.

\subsection{Maximum-likelihood based receiver}
To measure the impedance $Z_{r}$ at the receiver, multiple circuit architectures can be used \cite{Passerini2017, Hallak2018}. We consider a shunt based impedance meter as shown in Fig. \ref{fig:plc_abcd} that is capable of measuring $V_s$. The measurement is done on a wide spectrum. To derive an optimal receiver metric, and not to neglect the presence of noise (denoted with $V_n$), we can rearrange the above relations to obtain
\begin{equation}
V_{Z_s,n}= V_s Z_s\frac{A + CZ_{tx}}{(A+CZ_{tx})Z_s +B+DZ_{tx}} + V_n.
\label{eq:plc_Vzs}
\end{equation}
If we assume to map data symbols into a discrete set of values of $Z_{tx}$, and to consider the relation (\ref{eq:plc_Vzs}), then a MaxL estimate is obtained as follows

\begin{equation}
Z_{tx}=\arg\min_{Z}\left\{\left|V_{Z_s} - V_sZ_s\frac{A+CZ}{(A+CZ)Z_s + B + DZ}\right|\right\}.
\label{eq:plc_zgarg}
\end{equation}

It should be noted that the MaxL receiver requires the knowledge of the entanglement transfer function, therefore of the ABCD parameters. In the numerical results, we assume a genie receiver that perfectly knows them. This is used as a baseline situation to make a comparison with the practical ML receiver proposed. It should also be noted that IM does not require any voltage source at the transmitter side, therefore it can be considered a sort of passive modulator since it requires only switching among a finite set of impedances at the transmitter node. It can be realized with a bank of impedances and controlled radio frequency switches. 

\subsection{Machine learning-based receiver}
\subsubsection{Training set and parameters}
The shunt voltage including noise is described as
\begin{equation}
V_{Z_s,n} = V_{Z_s}+V_n,
\end{equation}
where $V_n \sim \mathcal{CN}(0, \sigma_n^2)$. The measured voltage is a non-linear function of the impedance state $Z_{tx}$ at the transmitter side as shown in (\ref{eq:plc_Vzs}). Hence, if we assume that $Z_{tx}$ possesses only two states, it is in principle possible to detect these states by training a binary classifier with the pair of samples $(V_{Z_s,n},Z_{tx})$.

Each channel differently distorts the transmitted impedance as in (\ref{eq:plc_Zr}). Therefore, as a proof of concept, we consider two scenarios to train the symbol detector/classifier: a first scenario where only one channel realization correspondent to the $0.9$ quantile is used to obtain the shunt voltage $V_{Z_s,n}$ over a wide spectrum; a second scenario where $15$ channel realizations in the neighbourhood of the $0.9$ quantile are used to obtain $V_{Z_s,n}$. With the former, we want to prove that a machine learning-based detector does exist and achieves performance comparable to the optimal MaxL receiver. The latter, instead, extends the classification under a variety of channels with a common discriminative feature, that is, high value of $E_V$. For both scenarios, the training phase was conducted using $50\%$ of the data, whereas the testing phase was performed using the other half. For each channel realization, we consider $1000$ noisy voltages $V_{Z_s,n}$ obtained by transmitting the short and open resistor states, denoted with the class $0$ and $1$, respectively. The voltage $V_{Z_s,n}$ is split in real and imaginary parts, and they both possess a total of $1569$ frequency points from $1$ to $100$ MHz which are concatenated in a unique input vector. As common practice, the input realization has been normalized to values inside the interval $[-1,1]$, to avoid ill-conditioned networks, as follows
\begin{equation}
\label{eq:plc_normalization}
\hat{x}_i = 2\frac{x_i-\min(x_i)}{\max(x_i)-\min(x_i)}-1.
\end{equation}
We have implemented a simple shallow NN with one hidden layer of only $100$ neurons and sigmoidal activation function. The training set has been further divided into two subsets, a pure training set used to update the parameters of the artificial neural network and a validation set used to evaluate the performance of the network at each iteration. To avoid overfitting, we early stopped the training process after $10$ consecutive iterations where the error in the validation set increases. We have repeated the same experiment $10$ times in order to reduce the bias coming from the parameters initialization.

\subsubsection{Individual channel realization training}
As a first analysis, we consider a given channel realization, referred to as individual channel (IC), and train a classifier via cross-entropy to estimate the impedance state $Z_{tx}$ from the measured noisy sample $V_{Z_s,n}$, with different values of the noise variance $\sigma_n^2$. Once the training process is concluded, the testing phase kicks in and we evaluate the network performance by measuring the BER varying the SNR ratio. Given the network architecture described before, we follow two distinct training approaches. In the first approach, referred to as multi-network, we train several different NNs for every possible SNR value of interest and for each of them we compute the BER using the test set. In the second naive approach instead, referred to as single-network, we train only one classifier assuming an SNR value of $20$ dB and generalize it to unseen SNRs. Surprisingly, the BER obtained with the network trained with only one value of SNR is always lower than the BER experienced from individual networks trained for each specific value of SNR. This counter-intuitive result is a direct consequence of the overfitting issue. Indeed, when training networks with low SNR values, the network attempts to fit the noisy measurements rather than understanding the noise distribution. Fig. \ref{fig:plc_ber1} compares the single and multi-network machine learning-based approaches with the optimal MaxL receiver. In particular, it illustrates the accuracy of the impedance state prediction in terms of BER for different values of the SNR.
\begin{figure}
\centering
  	\includegraphics[scale=0.35]{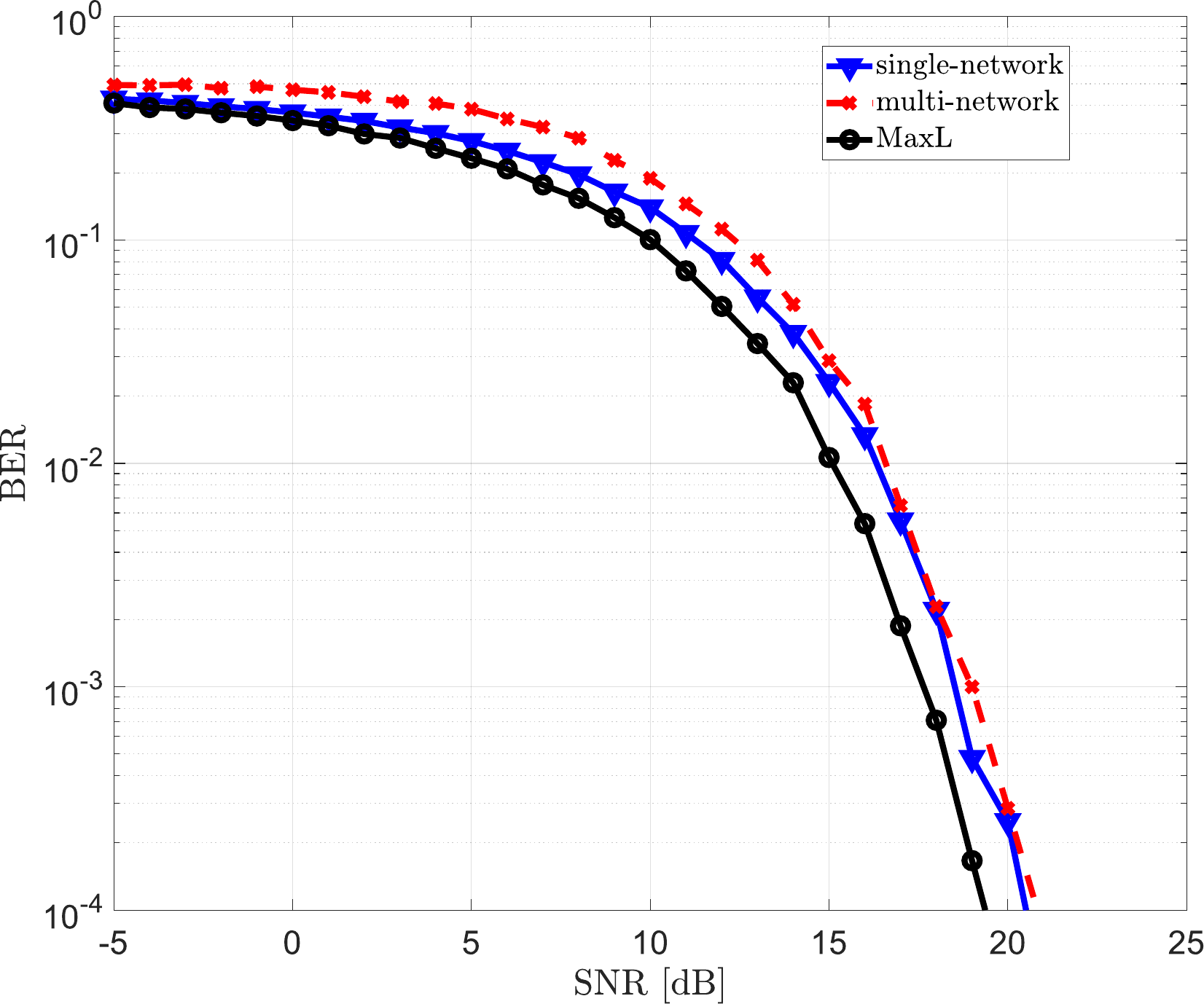}
  	\caption{BER of single and multi-network approaches compared to the optimal MaxL receiver.}
  	\label{fig:plc_ber1}
\end{figure}

It is interesting to notice also the following. The ML approach provides worse performance of less than $1$ dB compared to the genie MaxL receiver with the AWGN model. The approach can be generalized to non-Gaussian noise for which no optimal receivers are known. In addition, the single-network approach only requires few data and few iterations to achieve competitive performance. Conversely, the genie MaxL receiver requires the knowledge of the IE transfer function which implies that estimation algorithms have to be realized.

\begin{figure}
\centering
  	\includegraphics[scale=0.35]{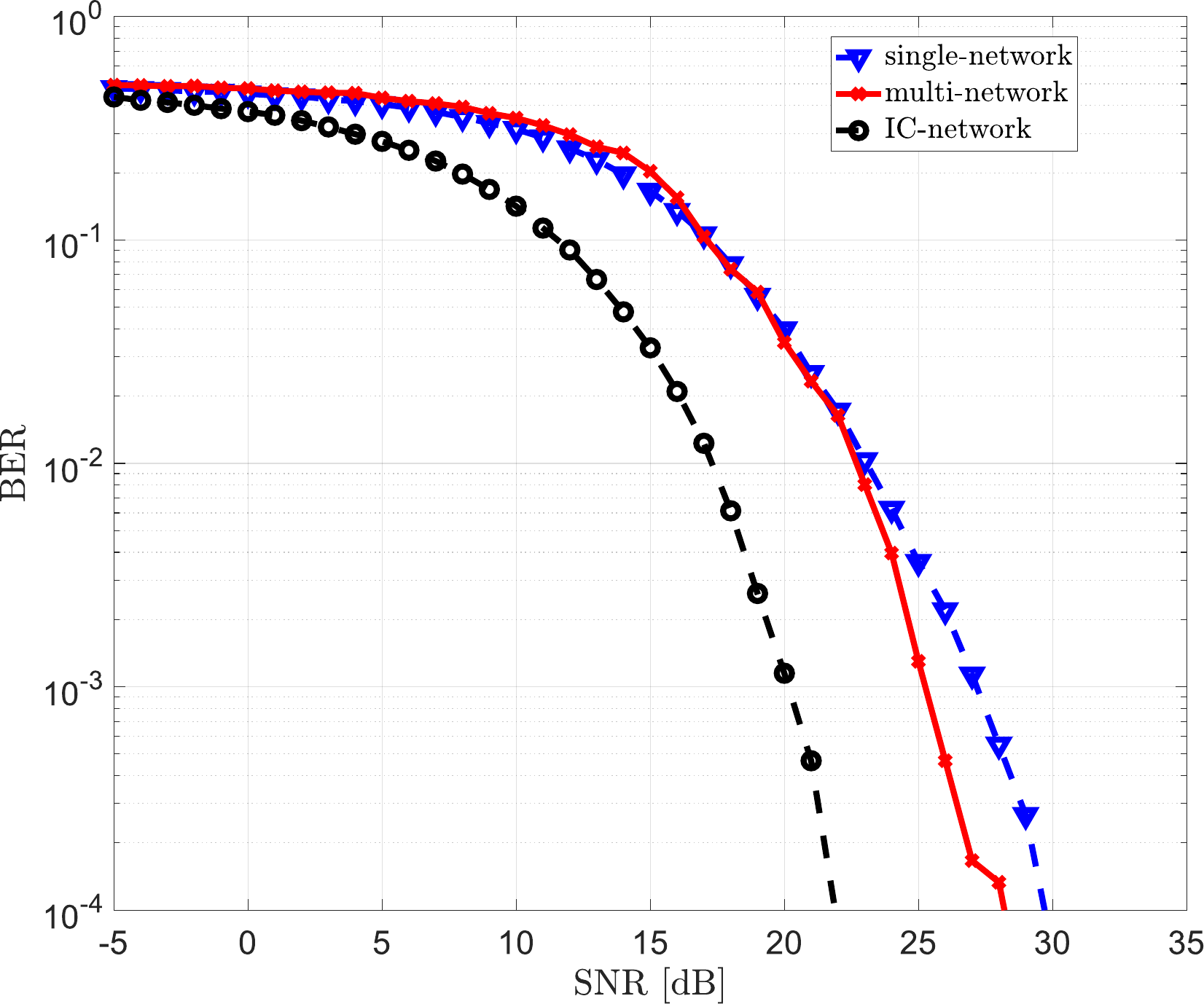}
  	\caption{BER of single and multi-network approaches obtained with channel diversity compared to the average BER obtained with single-network training on individual channels.}
  	\label{fig:plc_ber2}
\end{figure}

\subsubsection{Multiple channel realizations training}
Perhaps more fascinating is the idea of a general classifier able to account for the channel diversity, thus, trained on multiple channel realizations. It is clear, however, that the multiple channel realizations shall share a common discriminative feature, for example the discrimination quantity $E_V$ defined in (\ref{eq:plc_wV}). To prove that it is indeed possible to build a unique NN that detects the input resistive state in the presence of multiple channel realizations, we consider a set of $15$ channel realizations as described above. We repeat the training approach used for the individual training, in particular, we train a single-network classifier at $20$ dB and multi-networks classifiers each for every SNR value. Similarly to the single channel realization case, the best performance in terms of BER is achieved for the single-network approach where the network learns to generalize for unseen low values of SNR (lower than the $20$ dB of training). Conversely to the individual channel case, the performance of the multi-network training is better compared to the single-network one for higher SNR values.
Fig.\ref{fig:plc_ber2} reports the average BER obtained with the single and multi-network approaches in the presence of multiple channel realizations (unique network for all channel realizations at a given SNR, and unique network for all channel realizations and SNRs). The average BER obtained over the $15$ channel realizations when training an individual network for each given channel realization and SNR (IC-network) is also shown. The performance loss of using a unique neural network for all channel realizations w.r.t. to an individual neural network for each channel realization is up to $7$ dB.

\subsection{Summary}
A ML approach can be followed where joint IE learning and impedance detection are obtained by training a NN classifier. Numerical results have been obtained using a set of measured power line channels in the 1-100 MHz spectrum.
The results show the great potential that ML and DL algorithms have when applied to detection tasks. Various training approaches have been followed and they enable performance close to the optimal genie MaxL receiver. They stimulate further studies to include higher order impedance modulation and different neural architectures for improved performance, as well as the possibility to improve the bandwidth efficiency by considering narrower frequency intervals where the IM is implemented. In addition, an intriguing idea consists of an end-to-end learning scheme where the transmitter maps data symbols in optimal impedance constellation points so that the receiver exhibits the best detection performance and automatically learns the decoding strategy, for instance using CORTICAL (see Sec. \ref{sec:cortical_theory}).
  
\section{Capacity learning for additive channels under Nagakami-$m$ noise}
\sectionmark{PLC capacity learning}
\label{sec:plc_nakagami}
The development of PLC systems and algorithms is significantly challenged by the presence of unconventional noise. The analytic description of the PLC noise has always represented a formidable task and less or nothing is known about optimal channel coding/decoding schemes for systems affected by such type of noise. 
In this section, we present a statistical learning framework to estimate the capacity of additive noise channels, for which no closed form or numerical expressions are available. In particular, we study the capacity of a PLC medium under Nakagami-$m$ noise and determine the optimal symbol distribution that approaches it. We provide insights on how to extend the framework to any real PLC system for which a noise measurement campaign has been conducted. Numerical results demonstrate the potentiality of the proposed methods.

\subsection{Capacity learning under Nakagami-$m$ noise}
\label{subsec:plc_capacity_nakagami}
The amplitude of the PLC background noise in the broadband frequency range $1-30$ MHz is statistically modeled by the Nakagami-$m$ distribution with $m<1$ \cite{Meng2005}. Hence, in a quadrature modulation scheme, noise $\mathbf{n}$ can be represented as
\begin{equation}
\mathbf{n} = w \exp(j\theta) = n_r + jn_i,
\end{equation}
where $j=\sqrt{-1}$ and the amplitude $w$ is distributed as
\begin{equation}
p_W(w) = 2\frac{(m/\sigma_n^2)^m}{\Gamma(m)}w^{2m-1}\exp\biggl(-\frac{m w^2}{\sigma_n^2}\biggr).
\end{equation}
However, the in-phase and quadrature components of the noise have not equal variances as show in \cite{Mallik2010}. Therein, it has also been observed that in the interval $1/2\leq m <1$, the characteristic function of the real part $n_r$ of the noise resembles the one of a zero-mean Gaussian distribution of variance $\sigma_n^2(1+b)/2$. Similarly, the characteristic function of the imaginary part $n_i$ approximates the one of a zero-mean Gaussian distribution of variance $\sigma_n^2(1-b)/2$, with real and imaginary part being independent and
\begin{equation}
b = \sqrt{\frac{1}{m} -1}.
\end{equation}
Notice that the average SNR per symbol of the constellation modeled by the generator of the cooperative capacity framework is equal to $P/\sigma_n^2$. Moreover, if $b=0$, the additive noise is white Gaussian.

The design of optimal receivers and the PLC system error analysis affected by this type of noise have been discussed with binary phase-shift keying (BPSK) signaling in \cite{Dash2016} and \cite{Mathur2014}. Recently, an optimal Quadrature Phase Shift Keying (QPSK) constellation for this noise model has been presented in \cite{Reddy2020}, albeit a capacity characterization was not  treated.

We exploit the aforementioned DL approach to present results concerning the optimal quadrature amplitude modulation (QAM) constellation design under an average power constraint. We also report MI boundaries. Optimal neural network-based receivers are left for future studies, although a trivial extension using the autoencoders for communications concept is expected to follow some of the ideas in \cite{Oshea2017,Letizia2021}. 

\subsection{Results}
\label{subsec:plc_results}
In this section, we present the results for the two proposed scenarios: additive Nakagami-$m$ noise, which typically describes the background noise in the broadband spectrum, and additive noise traces obtained via a measurement campaign in the narrowband spectrum. For the former case, we exploit the mutual information estimator (DIME) and the cooperative capacity learning framework (CORTICAL), presented in Sec. \ref{sec:mi_f-DIME} and \ref{sec:cortical_theory}, to estimate the channel capacity and to build the optimal channel input distribution, under an average power constraint. In the latter scenario, we estimate the MI between the Gaussian distributed channel input and the channel output, affected by additive measured narrowband noise. For both cases, we show that the achieved information rate is higher than the AWGN channel capacity.

\begin{figure}
\begin{subfigure}{0.5\textwidth}
	\centering
	\includegraphics[scale=0.25]{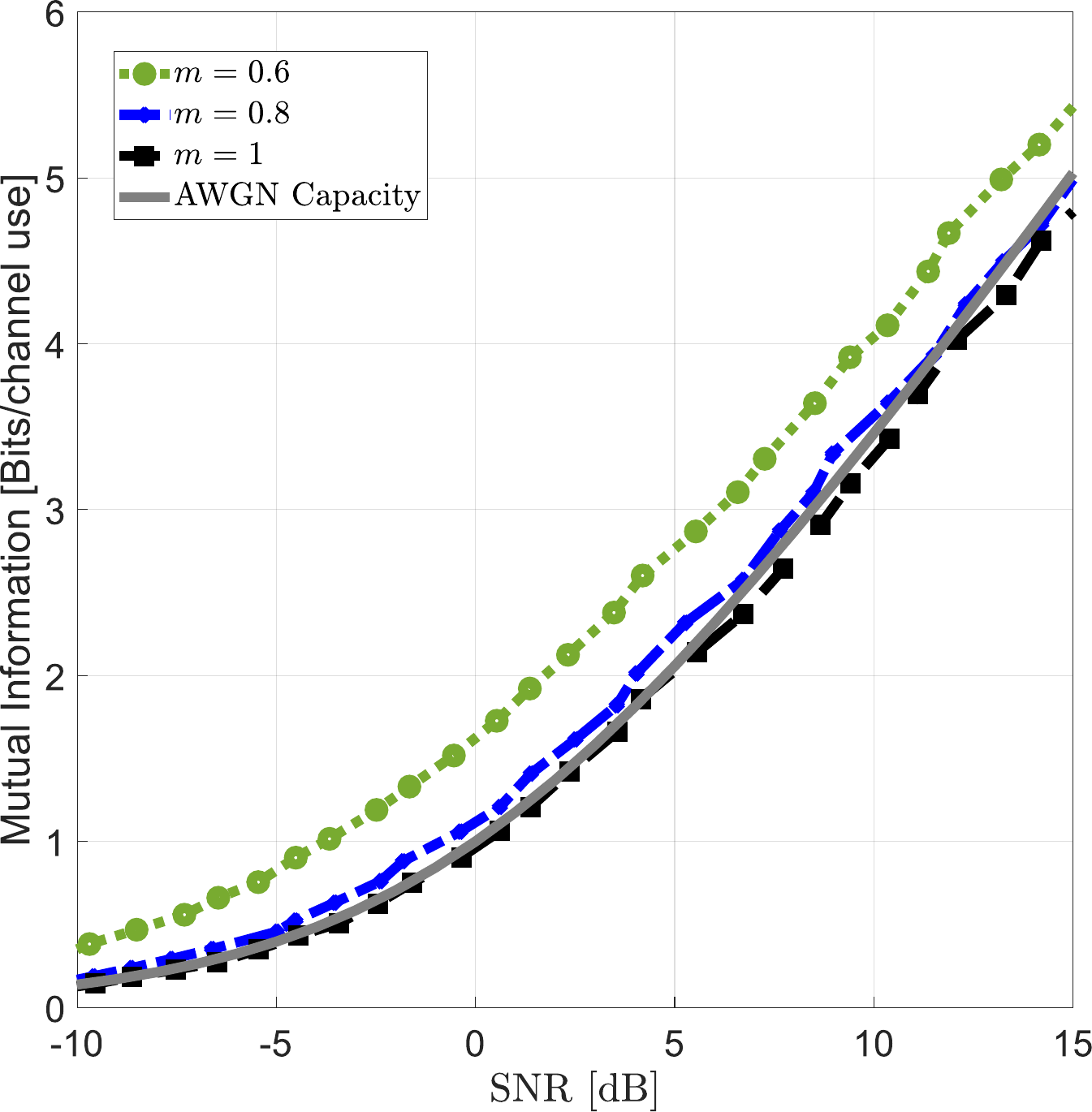}
	\caption{Continuous channel input distribution.}
	\label{fig:plc_capacity_nakagami}
\end{subfigure}
\begin{subfigure}{0.5\textwidth}
	\centering
	\includegraphics[scale=0.25]{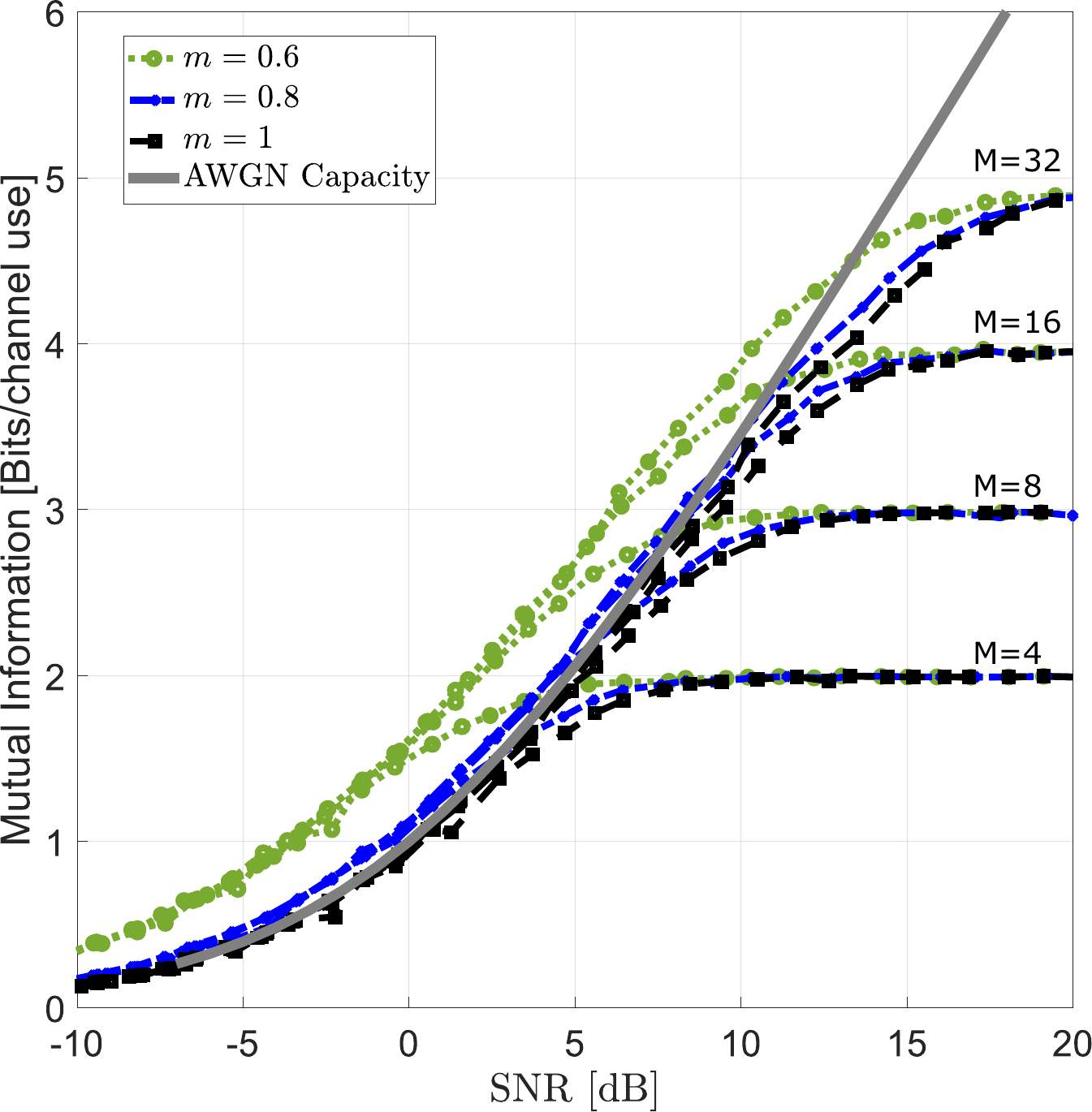}
	\caption{Discrete channel input distribution.}
	\label{fig:plc_capacity_nakagami_discrete}
\end{subfigure}
\caption{Maximal mutual information estimation between channel input and output using DIME in a channel corrupted by additive Nakagami-$m$ noise under an average power constraint.}
%\label{fig:plc_capacity_nakagami}
\end{figure} 

\subsubsection{Optimal constellation under Nakagami-$m$ noise}
\label{subsec:plc_optimal_constellation}
We parametrize both the generator $G$ and the discriminator $D$ of the CORTICAL framework with NNs and alternately update their parameters $\theta_G$ and $\theta_D$. Given the parametric limit, we study the achievable rate. Details on the networks architecture and on the training parameters are reported in Table~\ref{tab:plc_parameters}.

We consider an analog and digital transmission scheme. The former is realized by taking as input of the generator $G$ a vector $\mathbf{z}$ sampled from a $30$-dimensional Gaussian distribution and by mapping it to a complex symbol $\mathbf{x}$ with continuous distribution $p_{X}(\mathbf{x})$. The latter, instead, considers an input vector $\mathbf{z}$ sampled from a multivariate Bernoulli $k$-dimensional distribution with probability $p=0.5$, where $k=\log_2(M)$ and $M$ is the dimension of the alphabet. For the discrete case, the coding rate is defined as $R = 2k / d$, where $d$ is the dimension of the channel input vector $\mathbf{x}$. For the purposes of this section, we consider only the case $d=2$, corresponding to a geometric constellation shaping. However, an extension to probabilistic shaping is straightforward.

Fig.~\ref{fig:plc_capacity_nakagami} illustrates the estimated maximal MI over a channel affected by the Nakagami-$m$ noise under an average power constraint $P=1$, for three values of the parameter $m$, $m=0.6$, $m=0.8$ and $m=1$. It is rather interesting to notice that the estimated MI, for the case of different noise power on the two components but mostly for $m=0.6$, is higher than the AWGN capacity. Indeed, this is consistent with the fact that the AWGN capacity is the lowest among all additive noise channels. Moreover, the case $m=1$ that corresponds to the classic AWGN channel is accurately approximated by the proposed capacity estimator in the SNR range $-10$ to $15$ dB.

Similarly, when we limit the input distribution to a discrete one, the generator automatically learns how to mitigate the impact of the noise on the MI performance by optimally placing the $M$ messages into a bi-dimensional constellation. In particular, we compare the maximal MI when transmitting $4,8,16,32$ messages for the three values of the parameter $m$. Fig.~\ref{fig:plc_capacity_nakagami_discrete} shows the achieved information rate in all the studied cases and again highlights the possibility to achieve higher rates than the Shannon's AWGN limit for low SNR values. For high SNR values, the MI saturates to the coding rate $R$. An example of optimal constellation for a SNR of $7$ dB, $M=16$ and $m=1$ is depicted in Fig.~\ref{fig:plc_features}a where it is curious to comment on the geometric structure of the constellation. Indeed, the constellation is more densely packed along the $Y_2$ component, the one less affected by the noise, and more widely spaced along the $Y_1$ component, corrupted by the real part of the Nakagami distribution. The extreme case $m=0.5$ provides a deterministic imaginary part. As a consequence, the network learns a sort of amplitude modulation along that components, leading to a MI saturation to the coding rate $R$ for every SNR. 

\begin{figure}
	\centering
	\includegraphics[scale=0.25]{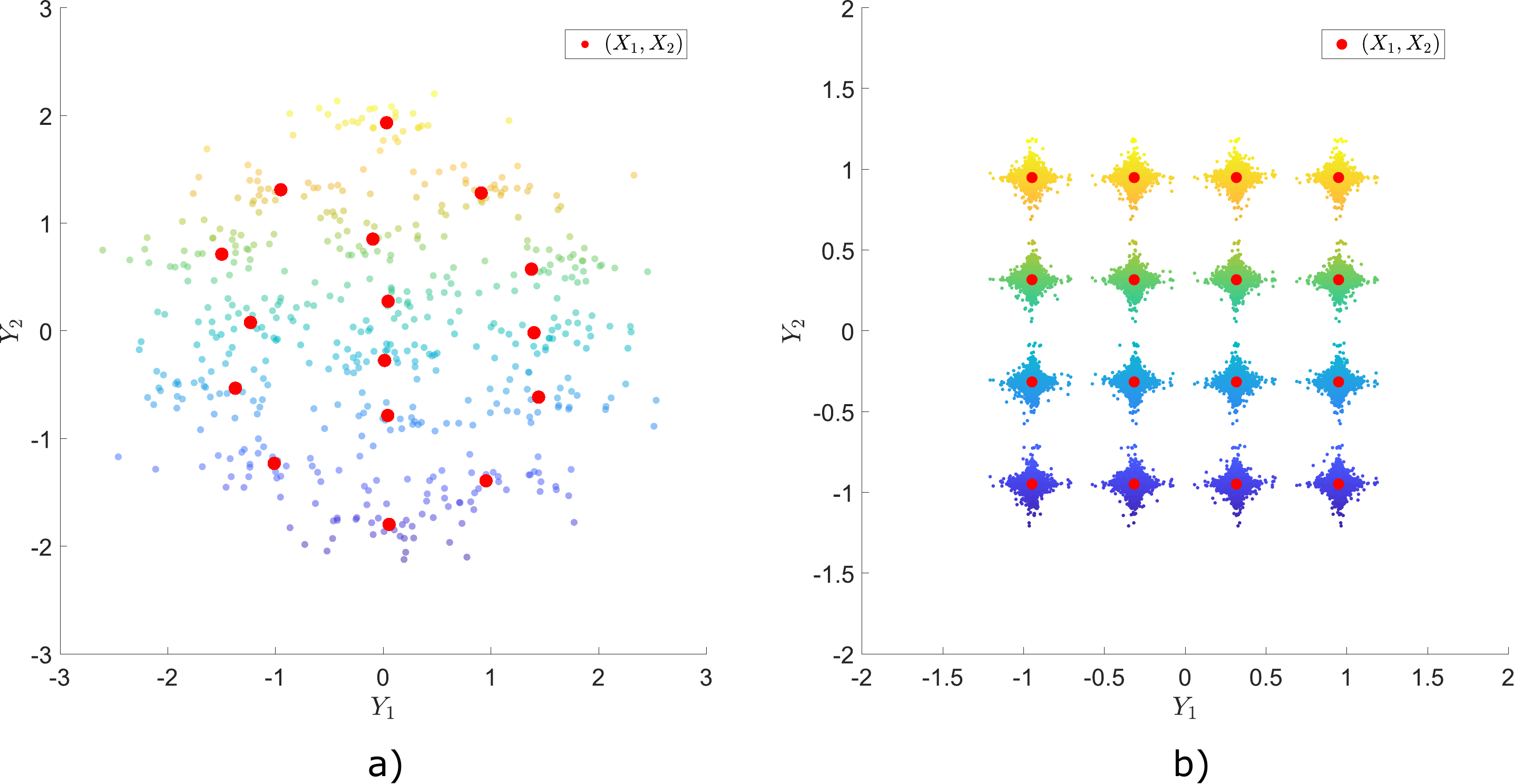}
	\caption{a) Optimal constellation scheme for a channel corrupted by additive Nakagami-$m$ noise under an average power constraint with an SNR of $7$ dB and $m=0.6$. b) $16$-QAM constellation scheme for a channel corrupted by additive measured noise with an SNR of $30$ dB.}
	\label{fig:plc_features}
\end{figure}

\subsubsection{Mutual information estimation under narrowband measured noise}
\label{subsec:plc_real_constellation}
We consider the transmission of Gaussian distributed channel input $\mathbf{x}$ to provide an upper bound of the achievable rate when transmitting $M$ messages. In particular, we set the symbol period to be $T_{\text{sym}} = 50\mu$s, typical of an OFDM modulation. Noise is sampled at $1$ MSa/s, hence, the noise traces are cut in time windows of $2000$ samples each and averaged. Lastly, we randomly pick  samples from the noise empirical distribution for both the real and imaginary part, rendering them independent. It is important to remark that such choice does not represent a performance limitation since the framework is capable to implicitly model any joint distribution, therefore any sort of dependence between real and imaginary part. Moreover, when extending to longer codes ($d>2$), it is also capable to model time dependence.
An example of a received constellation when using a $16$-QAM over the measured additive noise channel with a SNR of $30$ dB is illustrated in Fig.~\ref{fig:plc_features}b.

\begin{table}
	\centering
	\caption{CORTICAL generator architecture and training parameters.}
	\begin{tabular}{ p{4cm}|p{3cm}|p{3cm}} 
		\toprule
		\textbf{Layer} & \textbf{Output dimension } 		& \textbf{Activation function} \\
		\midrule
		\textbf{CORTICAL generator} & &\\
		Input $\mathbf{z}$ & 30 (continuous) / $k$ (discrete) & \\ 
		Fully connected & 100 & ReLU \\ 
		Fully connected &100& ReLU  \\ 
		Fully connected &100& ReLU  \\ 
		Fully connected & 2 &   \\  		\midrule
		
		\textbf{CORTICAL discriminator (DIME)} & &\\
		Input $[\mathbf{x},\mathbf{y}]$ & 4 & \\ 
		Fully connected & 100 & ReLU \\ 
		Dropout  & 0.3 &   \\ 
		Fully connected &100& ReLU  \\ 
		Fully connected & 1 & Softplus  \\ 
		Batch normalization  & & \\  \midrule

		Batch size &  \multicolumn{2}{c}{512}  \\ 
		Training iterations &  \multicolumn{2}{c}{500 generator / 5000 discriminator}  \\ 
		Learning rate &  \multicolumn{2}{c}{0.0002}   \\ 
		Optimizer &  \multicolumn{2}{c}{Adam ($\beta_1$ = 0.5, $\beta_2$ = 0.999)}  \\ 		\midrule

		%\bottomrule	
	\end{tabular}
	\label{tab:plc_parameters}
\end{table}

Fig.~\ref{fig:plc_capacity_real} shows the estimated MI between $\mathbf{x}$ and $\mathbf{y}$ using DIME. As expected also in this case, the value of the MI at low SNRs is greater than the AWGN capacity. Indeed, the narrowband power line statistical noise model is a mixture of different components. An interesting future experiment concerning the measured noise scenario shall consider modeling the channel input distribution with the generator of the CORTICAL framework and modeling the channel itself with the generator of a GAN.

\begin{figure}
	\centering
	\includegraphics[scale=0.3]{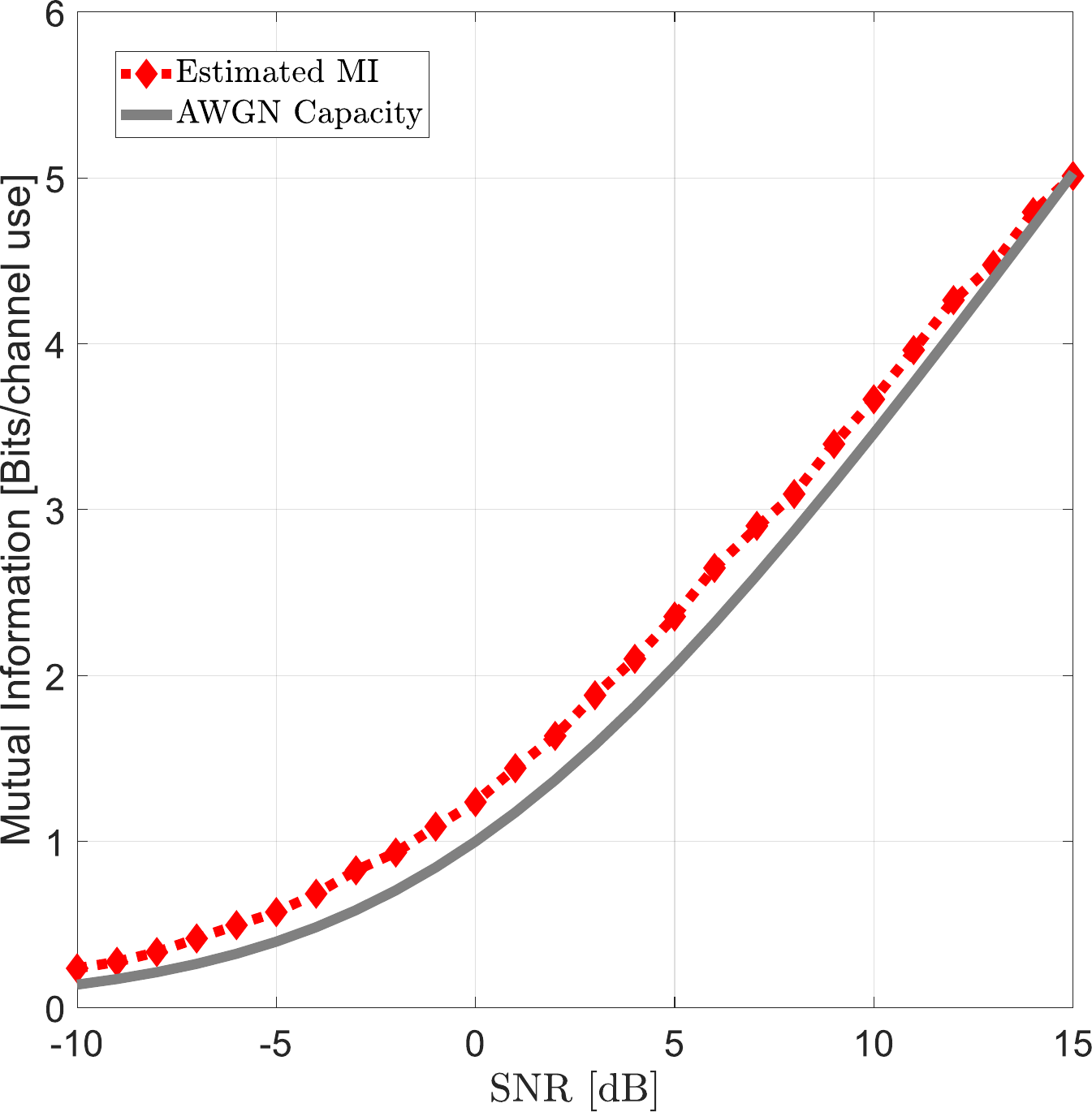}
	\caption{Mutual information estimation between channel input and output using DIME in a channel corrupted by measured narrowband noise.}
	\label{fig:plc_capacity_real}
\end{figure}

\subsection{Summary}
\label{subsec:isplc_conclusions}
In this section, we exploited CORTICAL to estimate the capacity of channels corrupted by additive noise. In particular, the PLC noise represents an important and challenging application scenario, thus, two noise classes have been analyzed. In the former case, we designed the optimal signal constellations for an additive channel in presence of Nakagami-$m$ noise and we provided an estimate of its channel capacity. In the latter case, we exploited noise traces obtained via a measurement campaign to estimate the MI between the channel input and the additive channel output, being the input Gaussian distributed. Numerical results illustrate the capability of the learning approach and how the estimated MI is greater than the AWGN channel capacity in a wide range of SNRs. The presented data-driven approach offers a first important step towards optimal channel coding design and capacity estimation in real scenarios, typically formidable unsolved tasks when the noise is not AWGN.

\section{Supervised fault detection}
\sectionmark{PLC fault detection}
\label{sec:plc_detection}
Power line modems (PLMs) act as communication devices inside a PLN. However, they can be exploited also as active sensors to monitor the status of the electric power distribution grid. Indeed, PLC signals carry information about the topological structure of the network, internal electrical phenomena, the surrounding environment and possible anomalies in the grid. An accurate and efficient identification of the types of anomaly through direct sensing measurements can enable grid operators to both prevent malfunctions and effectively intervene when faults occur. In this section, we discuss how to use supervised ML techniques to extract anomalies information from high frequency measurement of electrical quantities, namely the line impedance, the reflection coefficient and the CTF, in the PLC signal band. Simulation results confirm the potentiality of the NN method, outperforming existing model-based approaches in the field without any hyperparameter tuning. 

\subsection{Anomaly detection}
\label{subsec:plc_anomaly_detection}
Topology reconstruction \cite{lampe2013tomography,7797477} and anomalies detection \cite{8641473,7897106} are some of the applications enabled by PLC for grid sensing. The former allows the grid operators to extend the knowledge of the grid configuration, switches and feeders status. Furthermore, the knowledge of the network topology is per se relevant for networking purposes, i.e., geo-routing algorithms. The latter, instead, helps grid operators to better monitor the network status, malfunctions etc. which in turn enables its predictive maintenance granting a more efficient energy delivery service.

A number of contributions on topology reconstruction have been made \cite{6525848,6507589}, while less work has been carried on anomaly detection using PLMs. One of the main issues resides on the difficulty to formulate analytic representative models. Some recent results have shown how to relate the state of the grid, cables and loads to several electrical quantities that can be measured to provide anomalies information \cite{8653266}.
Nonetheless, diagnosis of malfunctions ends up into a classification problem \cite{8641473}. 

Faults are generally classified as low impedance faults (LIFs) and high impedance faults (HIFs). LIFs are usually bolted faults, thus, short circuits which an upstream switch or fuse can sense. HIF is, instead, a consequence of an unwanted electrical contact with a low conductive object, resulting in a low fault current that cannot be detected by conventional protection systems. HIF may cause service failures in the long run or even public hazards in the worst case. 

Researchers have documented several HIF detection techniques, spanning from classical approaches involving the analysis of the magnitude and phase of the fault current, its Fourier or wavelet transform, to heuristical approaches that exploit fuzzy logic and NNs \cite{HIF_review}.
These methodologies rely on the transmission line theory, carry-back equation and time-frequency metrics, making complicated the development of a full-representative model. NNs avoid the bottom-up approach and make use of \textit{a-posteriori} observations of natural phenomena, the experience, to create an implicit empirical top-down model \cite{ML_PLC}. 

\subsection{Results}
\label{subsec:plc_anomaly_results}
We now discuss the structure of the dataset, the training parameters and the architecture of the NN. Finally, we present the results obtained using the supervised methodology and we compare them with the model-based approach in \cite{8641473}.

\subsubsection{Materials}
\label{subsec:plc_materials}
We made use of a multi-conductor transmission line PLN simulator with the anomalies models discussed in \cite{8653266}. The simulator randomly displaces $20$ nodes on a single topology realization with average node distance of $700$ m, which mimics the average displacement in a low-voltage distribution network.
As for the cable parameters, we use the same as those presented in \cite{versolatto2011an}.
Furthermore, both the case of constant (at $2$ k$\Omega$) and randomly variable load impedances are considered.
Anomalies occurring on the network have been artificially added
to obtain $10000$ realizations of a perturbed grid \cite{8653266}. 

The training process was conducted using $50\%$ of the data, while testing was done using the other unbiased half part. 
Input signals are the ratios between the measurement of the electrical parameters $\mathbf{Y_{\text{in}}}, \mathbf{\rho_{\text{in}}}$ and $\mathbf{H}$ for a given realization, and the reference value of the respective signal in the band $4.3-500$ kHz with $4.3$ kHz sampling. The reference values were obtained when there was no anomaly. To avoid ill-conditioned networks and have a stable convergence of its weights and biases, we normalized each input realization in the interval $[-1,1]$ by using the linear transformation in \eqref{eq:plc_normalization}.

The output signals from the classifier are $4$ different class indicators according to the type of anomaly. In particular, $4$ different classes of anomalies in the network can be detected:
\begin{enumerate}
\item Unperturbed;
\item Load impedances changes;
\item Concentrated faults;
\item Distributed faults.
\end{enumerate}
To work with the cross-entropy cost function, we converted each categorical target output using a classical one-hot encoding.

\begin{figure}
	\centering
	\includegraphics[scale=0.5]{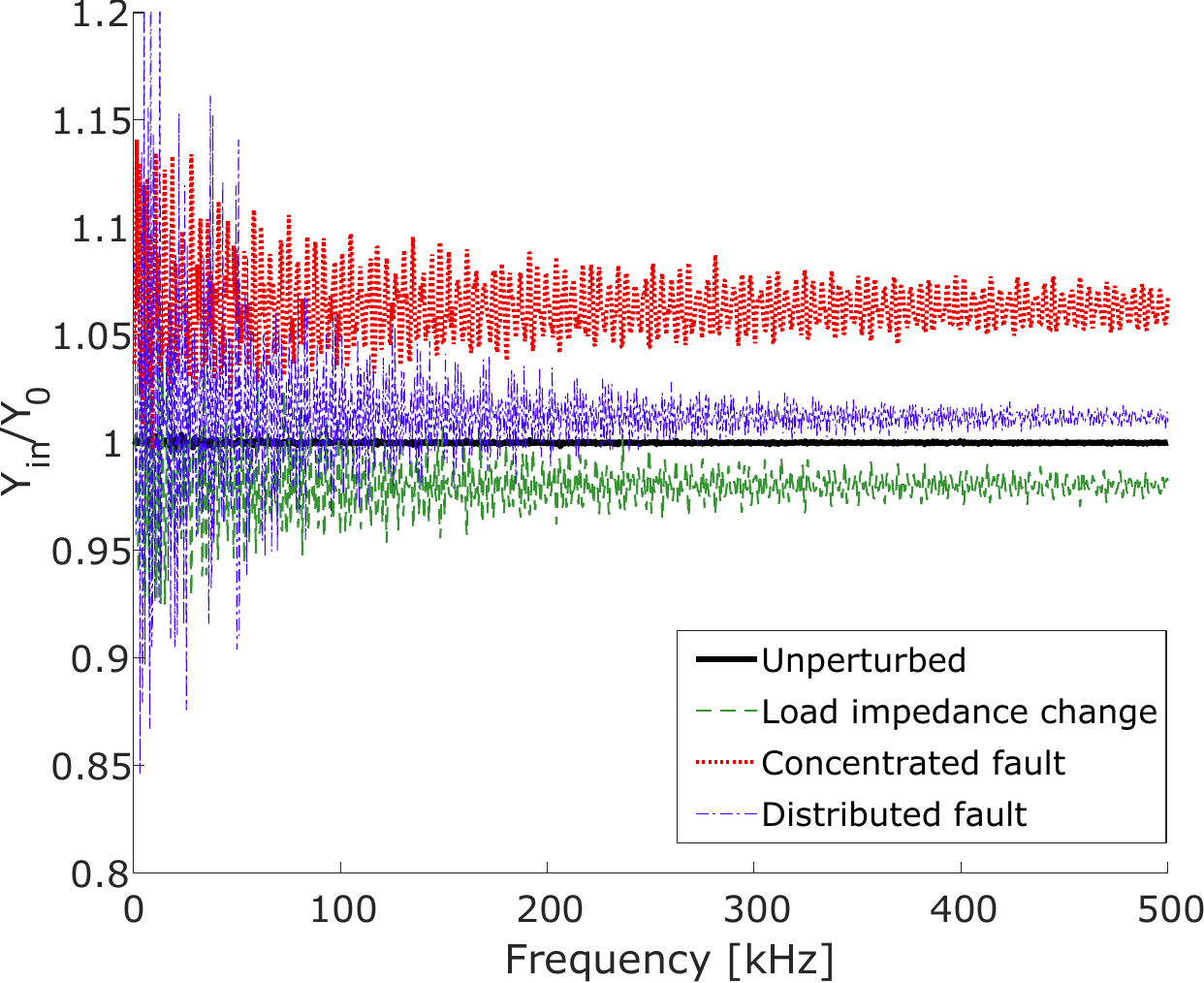}
	\caption{4 randomly picked realizations of admittance variation over the frequency for each type of anomaly.}
	\label{fig:plc_Yin}
\end{figure}

When training a neural network, overfitting is the main issue to prevent. For this reason, a standard way to proceed is to split the initial training dataset into two subsets, a training and a validation set. The former is  responsible for the update of the network parameters (weights and biases). The latter is, instead, used to check the current performance of the network and understand how it performs with previously unseen data. For the purpose of this section, we decided to split the collected dataset in $40\%$ training set, $10\%$ validation set and $50\%$ test set.

To show the potentiality and applicability of the ML approach, we chose to implement a shallow neural network, thus, a neural network with only one hidden layer consisting of $25$ neurons whose activation function $\sigma(\cdot)$ is the sigmoid.

We stopped the training process when the error on the validation set started increasing for more than $20$ consecutive iterations, sign of an overfitted model. 

\subsubsection{Performance evaluation}
\label{subsec:plc_performance}
To assess the performance of the neural network classifier, we performed a set of testing experiments judging the ability to detect and classify the type of anomaly compared to the framework in \cite{8641473}. We evaluated the models by comparing the accuracy obtained for the training and testing set. The accuracy quantitatively expresses the probability of correctly identifying the belonging class. 

We distinguished $3$ different classifiers: a binary model that is able to detect a perturbed situation from an unperturbed one; a ternary model that discriminates the unperturbed, the impedances changes and the fault cases; a quaternary model that detects and identifies all the $4$ classes previously discussed.

The first set of experiments considers the scenario of constant load impedances for each realization, while the second set takes into account the random variability of the load impedances for each realization.

$3$ classifiers for $3$ types of input signal ($\mathbf{Y_{\text{in}}}, \mathbf{\rho_{\text{in}}}$ and $\mathbf{H}$) in $2$ loads setup yield to $18$ different neural network models, whose performances are reported in Tab.~\ref{tab:plc_performances1}, Tab.~\ref{tab:plc_performances2} and Tab.~\ref{tab:plc_performances3}.

\begin{table}[b]
\begin{center}
\resizebox{\columnwidth}{!}{%
{\renewcommand{\arraystretch}{1.2}
\begin{tabular}{l|c|c|l|l}
\toprule
\multicolumn{2}{c|}{\pbox{15cm}{\textbf{Accuracy} \\ Training/\textbf{Testing}}} & \multicolumn{3}{c}{\textbf{Models for $\mathbf{Y_{\text{in}}}$}}                                            \\ \cline{3-5}
\multicolumn{2}{c|}{}                                   & \multicolumn{1}{c|}{Binary} & \multicolumn{1}{c|}{Ternary} & \multicolumn{1}{c}{Quaternary} \\ 			
\midrule

\multirow{2}{*}{\textbf{\rotatebox[origin=c]{90}{Load} \textbf{\rotatebox[origin=c]{90}{Imp.}}}}    & Constant      & 88.2\%                          /    \textbf{86.1}\%                               & 100\%                          /    \textbf{100\%}                    & 88.8\%                          /    \textbf{87.4\%}       \\ \cline{2-5}
                                            & Variable   & 99.9\%                          /    \textbf{99.7\%}  & 99.4\%                          /    \textbf{96.9\%                  }  & 97.6\%                          /    \textbf{93.0\%}       \\ \hline
\end{tabular}}%
}
\caption{Detection accuracy for $\mathbf{Y_{\text{in}}}$ with constant and variable load impedances.}
\label{tab:plc_performances1}
\end{center}
\end{table}

\begin{table}[h]
\begin{center}
\resizebox{\columnwidth}{!}{%
{\renewcommand{\arraystretch}{1.2}
\begin{tabular}{l|c|c|l|l}
\toprule
\multicolumn{2}{c|}{\pbox{15cm}{\textbf{Accuracy} \\ Training/\textbf{Testing}}} & \multicolumn{3}{c}{\textbf{Models for $\mathbf{\rho_{\text{in}}}$}}                                            \\ \cline{3-5}
\multicolumn{2}{c|}{}                                   & \multicolumn{1}{c|}{Binary} & \multicolumn{1}{c|}{Ternary} & \multicolumn{1}{c}{Quaternary} \\ 			
\midrule

\multirow{2}{*}{\textbf{\rotatebox[origin=c]{90}{Load} \textbf{\rotatebox[origin=c]{90}{Imp.}}}}    & Constant      & 86.3\%                          /    \textbf{84.3}\%                               & 100\%                          /    \textbf{100\%}                    & 88.0\%                          /    \textbf{86.8\%}       \\ \cline{2-5}
                                            & Variable   & 97.9\%                          /    \textbf{96.0\%}  & 90.2\%                          /    \textbf{86.4\%                  }  & 84.3\%                          /    \textbf{80.3\%}       \\ \hline
\end{tabular}}%
}
\caption{Detection accuracy for $\mathbf{\rho_{\text{in}}}$ with constant and variable load impedances.}
\label{tab:plc_performances2}
\end{center}
\end{table}

\begin{table}[h]
\begin{center}
\resizebox{\columnwidth}{!}{%
{\renewcommand{\arraystretch}{1.2}
\begin{tabular}{l|c|c|l|l}
\toprule
\multicolumn{2}{c|}{\pbox{15cm}{\textbf{Accuracy} \\ Training/\textbf{Testing}}} & \multicolumn{3}{c}{\textbf{Models for $\mathbf{H}$}}                                            \\ \cline{3-5}
\multicolumn{2}{c|}{}                                   & \multicolumn{1}{c|}{Binary} & \multicolumn{1}{c|}{Ternary} & \multicolumn{1}{c}{Quaternary} \\ 			
\midrule

\multirow{2}{*}{\textbf{\rotatebox[origin=c]{90}{Load} \textbf{\rotatebox[origin=c]{90}{Imp.}}}}    & Constant      & 87.3\%                          /    \textbf{85.1}\%                               & 94.8\%                          /    \textbf{94.6\%}                    & 86.8\%                          /    \textbf{85.7\%}       \\ \cline{2-5}
                                            & Variable   & 99.7\%                          /    \textbf{98.9\%}  & 99.2\%                          /    \textbf{98.0\%                  }  & 95.4\%                          /    \textbf{93.8\%}       \\ \hline
\end{tabular}}%
}
\caption{Detection accuracy for $\mathbf{H}$ with constant and variable load impedances.}
\label{tab:plc_performances3}
\end{center}
\end{table}

\subsubsection{Comments}
\label{subsec:plc_comments}
Several considerations can be made by analyzing the accuracy tables. As the intuition suggests, it is immediate to notice that the accuracy values for the training set are always higher than the one for the testing set. However, the relevance resides in the small difference between the pair of values which highlights the ability of the network to generalize the model to unseen samples.
Another remark follows from noticing that the models that accept as input the admittance $\mathbf{Y_{\text{in}}}$ perform globally better in terms of accuracy. A logic reason that motivates such result comes from the fact that the anomalies are artificially generated by imposing an abnormal impedance value in a certain node. Therefore, the primary physical quantity affected by such irregularity is the impedance (admittance) itself. Fig.~\ref{fig:plc_Yin} presents $4$ randomly picked realizations of admittance in frequency, one for each anomaly class.

\begin{figure}
\centering
\begin{subfigure}{0.5\textwidth}
  \centering
 \includegraphics[scale=0.5]{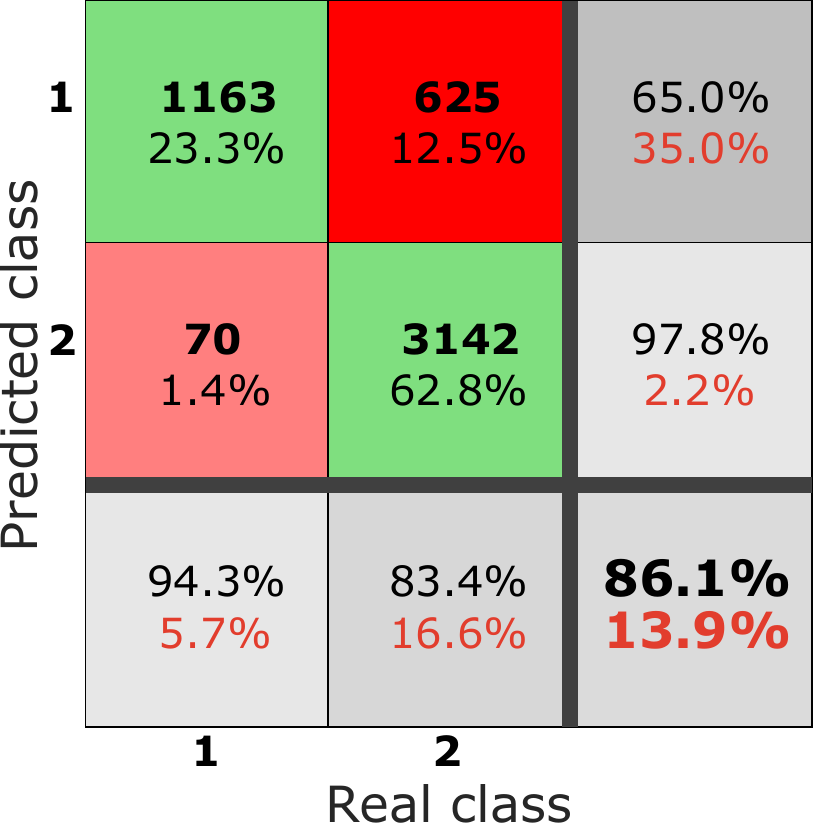}
	\caption{Binary classifier}
	\label{fig:plc_Confusion2}
\end{subfigure}%
\begin{subfigure}{0.5\textwidth}
 \centering
	\includegraphics[scale=0.31]{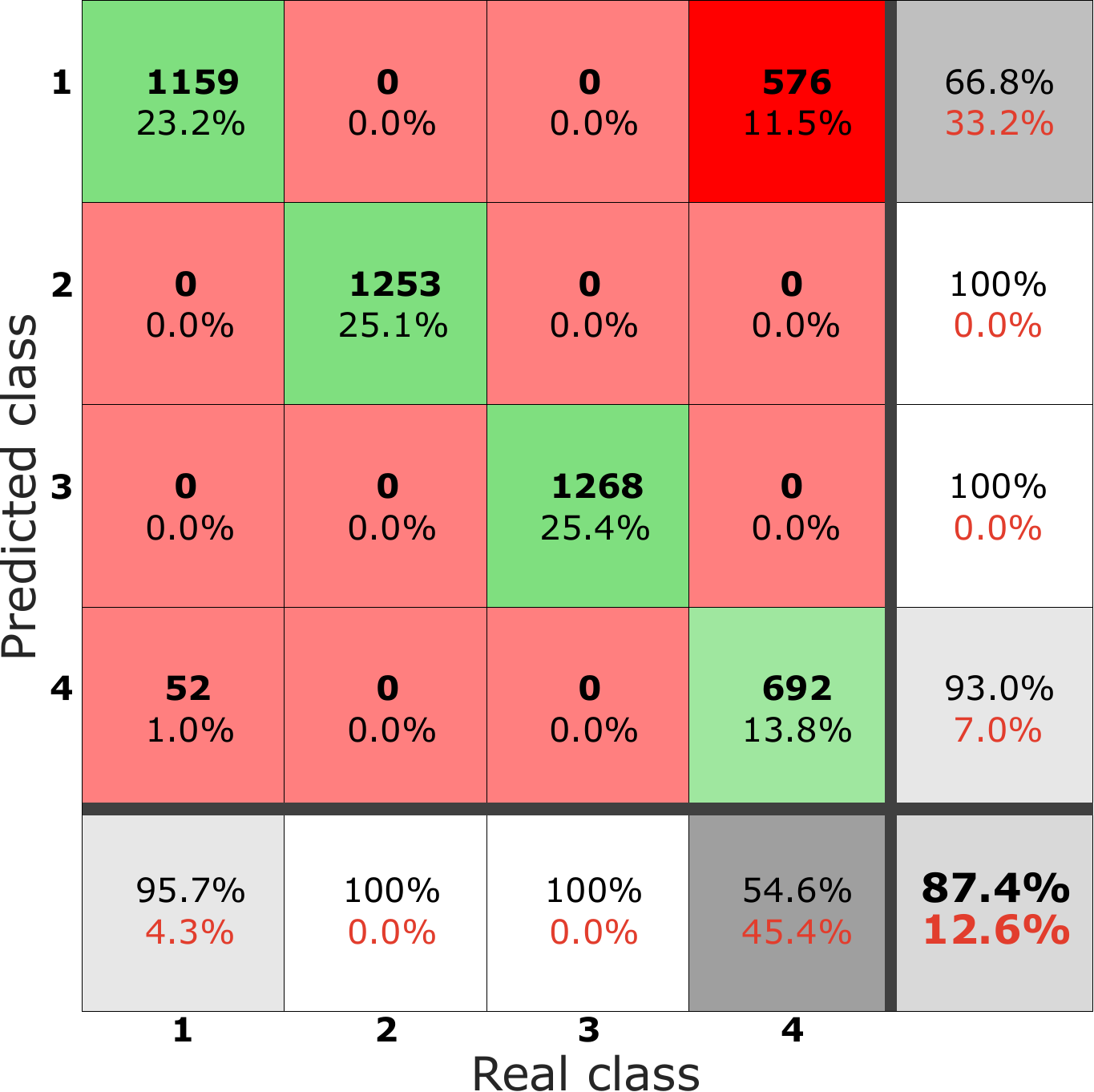}
	\caption{4-classes classifier}
	\label{fig:plc_Confusion4}
 \end{subfigure}%
 \caption{Confusion matrix (accuracy of the prediction) for anomaly detection using the admittance as the main input
signal.}
\label{fig:plc_Confusion}
\end{figure}

By looking at the binary models, the variability introduced in the load impedances improves the accuracy. A random source in the network and in the parameter configuration often brings a desired unbiasing effect that reduces overfitting issues, as the dropout layer does. Such consideration can, partially, be extended to the ternary and quaternary models.
The binary models detect the presence of a generic anomaly or not. Fig.~\ref{fig:plc_Confusion2} shows the confusion matrix for the binary model with constant load impedance when the input is the the admittance $\mathbf{Y_{\text{in}}}$. The work in \cite{8641473} proposed a bottom-up methodology with a probability of correctly detecting an anomaly equal to $0.8$, in a network of $20$ nodes. Our supervised approach reached a probability of $0.83$, improving the previous result with a minimal hyper-parameter tuning. However, the greater improvement comes from the analysis of the admittance quaternary model and its comparison with the results in \cite{8641473}. Indeed, from Fig.~\ref{fig:plc_Confusion4}, it is clear that the model perfectly classifies the load impedance change and concentrated fault anomalies, contrary to the physical model. The same confusion matrix shows that the network struggles in distinguishing between the unperturbed situation (class 1) and the distributed
fault event (class 4), as the benchmark model does in identifying distributed faults. This suggests to include the last type of anomaly in the third one, and study the ability of the neural network to classify the remaining 3 classes, in a ternary model. Such division motivates the high accuracy values obtained by the ternary model, as presented in all the tables. 

In conclusion, the developed supervised models that rely on a simple neural network structure provide excellent classification results. In particular, the ternary models have shown the best performance, discriminating an unperturbed network from a network affected by a sudden load impedance change and by a fault. As expected, it is possible to identify such events analyzing either the variation of the input admittance $\mathbf{Y_{\text{in}}}$, the variation of the reflection coefficient $\mathbf{\rho_{\text{in}}}$, or the variation of the channel transfer function $\mathbf{H}$. The detection methodology does not require an a priori knowledge of the physical model, but only a simple neural network structure that learns from the measurements. Moreover, the single hidden layer and the reduced number of neurons allow an easy and fast implementation in a non-complex PLM device. 

\subsection{Summary}
\label{subsec:plc_anomaly_conclusions}
In this section, we discussed a supervised NN approach to detect anomalies inside a PLN, exploiting the high frequency PLC signals sensed by power line modems.
To serve the scope, we proposed a shallow NN, consisting of only one hidden layer with $25$ neurons. Accuracy results proved the efficacy of the learned models, which are able to both detect and classify the type of anomaly with $3$ different input signal: $\mathbf{Y_{\text{in}}}, \mathbf{\rho_{\text{in}}}$ and $\mathbf{H}$. The performance of the presented ML approach together with the simplicity of the methodology encourage further endeavors in the area of anomaly detection with power line technology. 

\part{Part 5}

\chapter{Recursive Data Interpolation Techniques} % Neural decoder
\chaptermark{Interpolation}
\label{sec:data interpolation}

Ch.~\ref{sec:copulas} described how, from $N$ data points $(x_1,\dots, x_N)$, it is possible to leverage NNs to interpolate a function representing the underlying PDF $p_X(x)$ that generated the observed data. In all the previous contexts, interpolation effectively meant estimating the distribution of the given samples in order to understand the \textit{stochastic} nature of the data. We typically exploit $p_X(x)$ to produce new realizations $(\hat{x}_1,\dots, \hat{x}_N)$.

In this chapter, instead, we focus on capturing the \textit{deterministic} relationship between inputs and outputs, determining a function that passes through specific points. 
When we approach interpolation from the perspective of finding a fitting function, we aim to create a mathematical model that accurately represents the relationship between the given data points, abstracting from how these points have been obtained. 
This fitting function allows us to predict intermediate values within the range of the data, enabling smooth transitions between observed points.

Fitting functions are extremely useful for trajectory generation in various fields such as physics, robotics and aerospace engineering. When designing trajectories for moving objects or systems, it is essential to create smooth and continuous paths that meet specific constraints and objectives.
In the following, we study the trajectory generation problem and propose a novel iterative interpolation technique.

The results presented in this chapter are documented in \cite{Letizia2020_rst, LetiziaRobotics, 9525383}.

\label{sec:rst}
\section{Introduction}
\label{subsec:rst_introduction}
Recent advances in transportation have resulted in an increased adoption of unmanned vehicles, i.e., underwater unmanned vehicles, unmanned ground vehicles and unmanned aerial vehicles (UAVs), for a wide range of applications. The autopilot that controls the vehicle is typically guided by the inertial navigation system. In the particular case of aerial navigation, current guidance and control systems deployed in UAVs provide real-time control laws to track desired trajectories set by the embedded flight management system (E-FMS) \cite{spitzer2000digital}. The E-FMS intrinsically has the advantage that it can be segmented into four blocks and each of them individually studied and optimized. Its synoptic realization, that is embedded in the UAV platform, is shown in Fig.~\ref{fig:rst_E-FMS}. The policy planner is used to define the waypoints and the situation of a mission. The path planner is responsible for constructing the feasible trajectory according to tasks specified by the policy planner and it is a core block for physical task realizations. The duty of this block becomes extremely challenging because a feasible trajectory must consider and satisfy: 1) the non-linearity of the system model; 2) the kinematic constraints; and 3) smoothness. The agent block takes complexity into account and intelligently computes the control laws to track the desired trajectories. Clearly, the control laws affect the UAV flight dynamics. \par
In this chapter, we focus on aerial vehicles and on the the path planning function of the E-FMS.
The path planner is an upstream block, thus, it allows generating desired trajectories for the considered aircraft, enabling the selection of a flight control law by the guidance function.
\begin{figure}[t]
\includegraphics[scale=0.46]{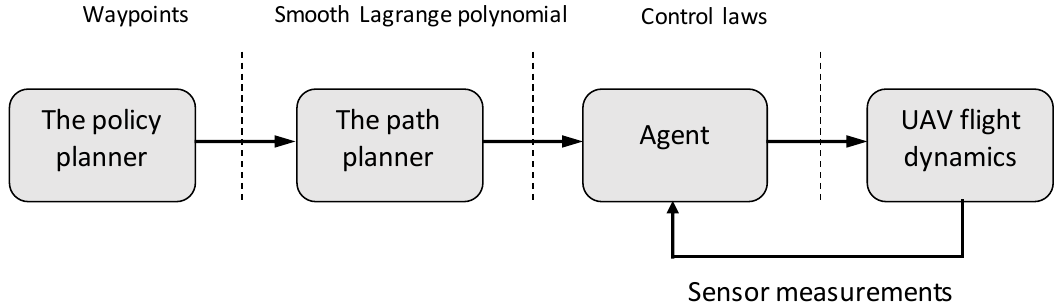}
\centering
\caption{The synoptic realization of an E-FMS.}
\label{fig:rst_E-FMS}
\end{figure}

A large effort has been dedicated to implement the path planner block with acceptable computational complexity and several solutions have been developed. A possible categorization of the methodologies is given as follows. The first class of algorithms designs trajectories based on statistical learning. Some examples are the $A^{\ast}$ algorithm \cite{DeFilippis2012}, the graph methodology \cite{1206461}, evolutionary algorithms \cite{richter2016polynomial}, the rapidly-exploring random tree (RRT) \cite{5175292}, and mixed-integer linear programming \cite{5446292}. $A^{\ast}$ and RRT generate an optimal collision-free piecewise linear path but they do not consider the dynamics constraints of the system. In this sense, evolutionary algorithms are able to generate an optimal feasible trajectory that takes into account the dynamics constraints at the expense of a high computational cost.

The second class of algorithms can be considered as a parametric function of the time, which provides at each instant the corresponding desired position. Path primitives such as lines \cite{HoffmannQuadrotor}, polynomials \cite{7068316}, and splines \cite{4602025}, have been deployed to build such algorithms. This class relies on polynomial interpolation techniques to obtain a trajectory that passes through multiple waypoints. In fact, the differentiability of polynomials makes them a suitable interpolation choice for considering the vehicle dynamics. However, it is necessary to solve an inverse problem to find a unique polynomial trajectory that fulfills the constraints. Therefore, the interpolation problem is often split into splines, i.e., piecewise polynomial trajectories. Splines are easy to be constructed and provide bounded trajectories although the continuity of the derivatives (smoothness) at the waypoints is only guaranteed up to a certain order.

Finally, the third class of algorithms generates trajectories exploiting the differential flatness property of the system. The flatness property provides an analytical mapping from a path and its derivatives to the states and control inputs of the UAV flight dynamics required to track that path accurately. Furthermore, it is particularly advantageous for solving 3D trajectory planning and waypoint following control. In \cite{richter2016polynomial} the flatness-based property of a UAV is exploited to convert predefined waypoints into polynomial trajectories using quadratic programming. In \cite{4602025} the trajectory planning is extended to a constrained optimization problem and the parametrization of the trajectory is modeled as a composition of a parametric function $P(\lambda)$ defining the path, and a monotonically increasing function $\lambda(t)$ specifying the motion on this path. $P(\lambda)$ and $\lambda(t)$ are modeled using B-spline functions. The tuning parameters of $P(\lambda)$ and $\lambda(t)$ are obtained using the sequential quadratic programming technique. The B\'ezier polynomial function has been used to solve the constrained optimization problem in  \cite{6324664}. However, all the previous works focused on a piecewise parametrization of the flat outputs for the system, which may cause discontinuities in the control inputs as a direct consequence of non sufficiently smooth generated trajectories. 
We solve the former issue by proposing a recursive paradigm to generate smooth trajectories. The main ideas and the preliminary numerical results showing the advantage of the recursive formulation of the trajectory generation algorithm were presented in \cite{Letizia2020_rst}. This section offers the mathematical foundation and derivation of the method. In addition, this chapter considers the statistical perturbation of the constraints, it considers the trajectory generation in an optimization framework, and it reports the complexity analysis results of the recursive algorithm, which were missing aspects in \cite{Letizia2020_rst}.

In the following, we provide theoretical foundations for the recursive smooth trajectory generation algorithm (RST) that allows the design of a smooth ($\mathcal{C}^{\infty}$) polynomial path. It uses the concept of waypoints to find a closed form trajectory that satisfies any arbitrary dynamic limitations mapped into kinematic constraints. It is fundamentally different from existing approaches \cite{6376099} in the following aspects:
\begin{itemize}
\item The trajectory is defined by a unique polynomial which fulfills all the kinematic constraints;
\item The trajectory is recursively built exploiting the concept of $m$-partial trajectory (See Def. \ref{def:rst_def1}), thus, the generation method avoids matrix inversion;
\item The RST algorithm enables a new piecewise (PRST) and blockwise (BRST) interpolation approach to tackle numerical and oscillation problems;
\item Uncertainties in kinematic constraints are simply translated into uncertainties in the polynomial coefficients. In this way, all the feasible trajectories are obtained from a deterministic polynomial plus a stochastic perturbation described by a multivariate Gaussian distribution;
\item Lastly, RST provides an immediate formulation and solution to trajectory optimization problems.
\end{itemize}

\subsection*{Notation}
\label{subsec:rst_notation}
Vectors and matrices are denoted with bold letters. Unless stated otherwise, all vectors in this section are column vectors.
$t_j$ denotes a point in time while $x(t_j)$ and $\mathbf{x}(t_j)$ denote a point in the $1D$ space and a vector with components in the multi-dimensional space, respectively. $\frac{d^i\mathbf{x}}{dt^i}\bigr|_{t=t_j}$ denotes the $i$-th derivative of $\mathbf{x}(t)$ evaluated in $t_j$. The factorial of $n$ is denoted by $n!$. A stochastic perturbation of the deterministic variable $y$ will be described by $\tilde{y}$. $||.||^2$ denotes the norm squared.
The state variables of the dynamical system are a function of time, e.g., $\bm{x} = \bm{x}(t)$. The first, second and $n$-th derivative with respect to time of a state space variable $\bm{x}$ are denoted respectively with $\bm{\dot{x}}$, $\bm{\ddot{x}}$ and $\bm{x}^{(n)}$. 
\section{Problem statement}
\label{subsec:rst_problem}
Consider the problem of navigating a moving body, e.g., an unmanned vehicle, through $N+1$ waypoints at specific time stamps. Assume that any arbitrary dynamic limitation of the  body translates into kinematic constraints (e.g. position, velocity, acceleration, etc.). Such kinematic constraints are useful to impose, for instance, that the body starts from the rest at the beginning of the trajectory and reaches each waypoint with specific kinematics (e.g. aggressive maneuvering).

The task of path planning consists of finding a trajectory $\mathbf{x}(t)$ to move a given body from an initial point $\mathbf{x}(t_0)$ to a final point $\mathbf{x}(t_N)$, minimizing a certain cost function $J(\cdot)$ (time, energy, jerk, etc.), under given geometrical and kinematic constraints $\sigma(\cdot)$.

Let $t_0<t_1<\dots<t_N$ be $N+1$ time stamps for which all the corresponding positions $\mathbf{x}(t_0),\mathbf{x}(t_1), \dots, \mathbf{x}(t_N)$ and the respective $k$ derivatives $\frac{d^i\mathbf{x}}{dt^i}\bigr|_{t=t_0}, \frac{d^i\mathbf{x}}{dt^i}\bigr|_{t=t_1}, \dots, \frac{d^i\mathbf{x}}{dt^i}\bigr|_{t=t_N}$ are a-priori assigned, for $i = 1,\dots, k$. For notation convenience, the trajectory for which the first $k$ derivatives are defined, is denoted with $\mathbf{x}_k(t)$.

The objective is to design a feasible trajectory $\mathbf{x}_k(t)$ which satisfies the equality kinematic constraints $\sigma_{i,j}$ expressed by the first $k$ derivatives $\frac{d^i\mathbf{x}}{dt^i}\bigr|_{t=t_j} = \sigma_{i,j} $, for $i=0,1,\dots,k$ and $j=0,1,\dots,N$. 

To do so we propose to use polynomials as the set of our feasible trajectories. In particular, we build the polynomial trajectory as a linear combination of $k+1$ polynomial basis
\begin{equation}
\label{eq:rst_trajectory}
\mathbf{x}_k(t) = \sum_{i=0}^{k}{\mathbf{p}_i(t)},
\end{equation}
where $\mathbf{p}_i(t)$ is the polynomial (for each component of the $3D$ space) \textit{responsible} for the fulfillment of the $i$-th derivative constraint and $k$ is the number of considered derivatives. In other words, each polynomial component $\mathbf{p}_i(t)$ is designed as an additive term which iteratively fulfills the $i$-th kinematic constraint.
Furthermore, we exploit Lagrange interpolating polynomials to compute the polynomial basis $\mathbf{p}_i(t)$.

To the best of authors' knowledge, this is the first work that recursively builds a set of polynomials for trajectory generation under kinematic constraints.

\section{Theoretical formulation of RST}
\label{subsec:rst_results}
This section presents the mathematical foundations behind the choice of the polynomial representation in \eqref{eq:rst_trajectory}. We show that such choice produces a closed form continuous polynomial trajectory and enables an iterative algorithm for its generation. In particular, Lagrange polynomials are exploited. For ease of notation, in the following we consider the trajectory in \eqref{eq:rst_trajectory} as unidimensional, assuming that the $3D$ extension is obtained by working component-wise.%
\subsection{Preliminaries on Lagrange polynomials}
\label{subsec:rst_Lagrange}
Lagrange polynomials offer a technique to interpolation problems. In particular, given a set of $N+1$ $1D$ control points (waypoints) $(t_0,x(t_0)), (t_1,x(t_1)), \dots, (t_N,x(t_N))$, the interpolation polynomial in the Lagrange form is defined as
\begin{equation}
\label{eq:rst_lagrange}
L(t) := \sum_{j=0}^{N}{x(t_j) \ell_j(t)}
\end{equation}
where $\ell_j(t)$ is the Lagrange polynomial basis whose form reads as follows
\begin{equation}
\ell_j(t):= \prod_{\substack{m=0 \\ m\neq j}}^{N}{\frac{t-t_m}{t_j-t_m}},
\end{equation}
with $0\leq j\leq N$. The main idea behind this definition is that, by construction, the $N+1$ basis functions are such that $\ell_j(t_i)\equiv 0$ in $i=0,\dots,N \; \wedge \; i\neq j$. So for each waypoint, only one single basis function contributes to the sum in \eqref{eq:rst_lagrange}.

\subsection{The recursive formula}
To discover the recursive property behind the formulation in \eqref{eq:rst_trajectory}, we start the mathematical derivation introducing two lemmas regarding a particular polynomial choice and its derivatives. Such type of choice enables a remarkable recursive property which is proved in Theorem \ref{theorem:rst_theorem1}, the main result of this chapter. The corollary \ref{corollary:rst_corollary1}, instead, establishes an upper bound for the minimum degree of the polynomial trajectory generated via the recursive formulation.

\begin{lemma}
\label{lemma:rst_Lemma1}
Let
\begin{equation}
a(t) = \prod_{n=0}^{N}{(t-t_n)},
\label{eq:rst_a_t}
\end{equation}
then
\begin{equation}
\frac{d}{dt}a(t) = a(t)\cdot \sum_{n=0}^{N}{\frac{1}{t-t_n}}
\label{eq:rst_da_t}
\end{equation}
and
\begin{equation}
\frac{d}{dt}a(t)\biggr|_{t=t_j} = \prod_{\substack{n=0\\ n\neq j}}^{N}{(t_j-t_n)}.
\label{eq:rst_da_tj}
\end{equation}
\end{lemma}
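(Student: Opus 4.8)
The plan is to prove each of the three claims in Lemma~\ref{lemma:rst_Lemma1} in turn, as they build naturally on one another. The function $a(t)=\prod_{n=0}^{N}(t-t_n)$ is a product of $N+1$ linear factors, so the entire lemma is really about differentiating a product. The first claim~\eqref{eq:rst_da_t} is the logarithmic-derivative identity, the second~\eqref{eq:rst_da_tj} is the evaluation of the derivative at a node $t_j$, and the natural route is to obtain~\eqref{eq:rst_da_tj} as a consequence of~\eqref{eq:rst_da_t}.

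First I would establish~\eqref{eq:rst_da_t}. The cleanest approach is to take the logarithmic derivative: away from the roots $t_n$, we have $\log|a(t)| = \sum_{n=0}^{N}\log|t-t_n|$, and differentiating both sides gives $\frac{a'(t)}{a(t)} = \sum_{n=0}^{N}\frac{1}{t-t_n}$, which rearranges to~\eqref{eq:rst_da_t}. Alternatively, and perhaps more rigorously since it avoids excluding the roots, I would apply the product rule directly: since $a(t)=\prod_{n=0}^{N}(t-t_n)$, the derivative is $a'(t)=\sum_{m=0}^{N}\prod_{n\neq m}(t-t_n)$, and each term $\prod_{n\neq m}(t-t_n)$ equals $a(t)/(t-t_m)$ whenever $t\neq t_m$, yielding~\eqref{eq:rst_da_t} on the domain where no factor vanishes. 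I would favor the product-rule form since it makes the transition to the next step transparent.

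Next I would prove~\eqref{eq:rst_da_tj} by evaluating the product-rule expansion $a'(t)=\sum_{m=0}^{N}\prod_{n\neq m}(t-t_n)$ at $t=t_j$. The key observation is that every summand with $m\neq j$ contains the factor $(t-t_j)$ in its product $\prod_{n\neq m}(t-t_n)$ (since $j\neq m$ means $j$ is among the indices $n\neq m$), and this factor vanishes at $t=t_j$. Hence only the single summand $m=j$ survives, and it equals $\prod_{n\neq j}(t_j-t_n)$, which is exactly~\eqref{eq:rst_da_tj}. This sifting argument is the heart of the evaluation and mirrors the defining property of the Lagrange basis recalled in Sec.~\ref{subsec:rst_Lagrange}.

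I do not anticipate a serious obstacle here, as the lemma is an elementary differentiation exercise; the only point requiring mild care is the domain issue in~\eqref{eq:rst_da_t}, namely that the identity $a'(t)=a(t)\sum_n 1/(t-t_n)$ holds only away from the nodes where the right-hand side is defined, whereas~\eqref{eq:rst_da_tj} is a statement about the polynomial $a'(t)$ evaluated \emph{at} a node. The clean way to reconcile these is to derive~\eqref{eq:rst_da_tj} from the polynomial identity $a'(t)=\sum_m\prod_{n\neq m}(t-t_n)$ rather than from~\eqref{eq:rst_da_t} directly, so that no division by zero is ever invoked at $t=t_j$. Thus I would present~\eqref{eq:rst_da_t} via the logarithmic derivative for intuition but anchor the rigorous evaluation~\eqref{eq:rst_da_tj} on the product-rule expansion.
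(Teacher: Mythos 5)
Your proposal is correct and follows essentially the same route as the paper: expand $a'(t)$ by the product rule, obtain \eqref{eq:rst_da_t} by multiplying and dividing by $a(t)$, and obtain \eqref{eq:rst_da_tj} by noting that only the $j$-th summand survives at $t=t_j$. Your added care about the domain issue (anchoring the evaluation at a node on the polynomial product-rule identity rather than on the division formula) is a worthwhile refinement the paper leaves implicit, but it does not change the argument.
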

\begin{proof}
Using product rule
\begin{equation}
\frac{d}{dt}a(t) = \sum_{j=0}^{N}{\prod_{\substack{n=0\\ n\neq j}}^{N}{(t-t_n)}}.
\label{eq:rst_Lemma1}
\end{equation}
Dividing and multiplying by $a(t)$ gives \eqref{eq:rst_da_t}. For $t=t_j$ only one term of the summation in \eqref{eq:rst_Lemma1} contributes leading to \eqref{eq:rst_da_tj}. \qedhere  
\end{proof}

\begin{lemma}
\label{lemma:rst_Lemma2}
Let
\begin{equation*}
a(t) = \prod_{n=0}^{N}{(t-t_n)},
\end{equation*}
and let $s(t)$ be a polynomial. Then
\begin{equation}
\frac{d^h}{dt^h}\biggl[\frac{a^k(t)}{k!}\cdot s(t)\biggr]_{t=t_j} \equiv 0 \; \forall k>h
\label{eq:rst_diff_ak}
\end{equation}
and
\begin{equation}
\frac{d^h}{dt^h}\biggl[\frac{a^h(t)}{h!}\cdot s(t)\biggr]_{t=t_j} = \biggl(\frac{d}{dt}a(t)\biggr)^h\ \biggr|_{t=t_j} \cdot s(t_j).
\label{eq:rst_diff_ah}
\end{equation}
\end{lemma}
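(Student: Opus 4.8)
The plan is to prove both identities by analyzing the structure of the $h$-th derivative of a product involving $a^k(t)$, where $a(t)=\prod_{n=0}^{N}(t-t_n)$ has a simple zero at each $t_j$. The essential observation is that $a(t_j)=0$, so any term in a derivative expansion that retains at least one undifferentiated factor of $a(t)$ vanishes when evaluated at $t=t_j$. I would carry out the argument using the general Leibniz rule applied to $a^k(t)$ and $s(t)$, tracking how many times the factor $a^k(t)$ is differentiated.

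First I would establish \eqref{eq:rst_diff_ak}. Writing $g(t)=a^k(t)/k!$, the general Leibniz rule gives $\frac{d^h}{dt^h}[g(t)s(t)] = \sum_{r=0}^{h}\binom{h}{r} g^{(r)}(t)\,s^{(h-r)}(t)$. The key step is to show that every derivative $g^{(r)}(t)$ with $r\le h < k$ still contains a positive power of $a(t)$ as a factor. Indeed, each differentiation of $a^k(t)$ lowers the exponent of the retained $a$-factor by at most one (by the product/chain rule, $\frac{d}{dt}a^m = m\,a^{m-1}\,a'$), so after $r\le h$ differentiations the lowest surviving power of $a$ is $a^{k-r}$ with $k-r\ge k-h\ge 1$. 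Hence every summand carries a factor $a(t_j)=0$, and the whole expression vanishes at $t=t_j$. I would phrase this cleanly by an induction on $h$ or by directly arguing that $\frac{d^h}{dt^h}a^k(t)$ is divisible by $a^{k-h}(t)$ as a polynomial whenever $k\ge h$.

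Next I would prove \eqref{eq:rst_diff_ah}, the critical-exponent case $k=h$. Here the same Leibniz expansion applies, but now only the single term in which $a^h(t)$ is differentiated exactly $h$ times (i.e., $r=h$, $s$ undifferentiated) can survive at $t=t_j$; all terms with $r<h$ retain a positive power of $a$ and vanish as above. For the surviving term I need the leading coefficient of $\frac{d^h}{dt^h}a^h(t)$ evaluated at $t_j$. Since $a(t_j)=0$, the only contribution to $\frac{d^h}{dt^h}a^h(t)\big|_{t=t_j}$ comes from differentiating each of the $h$ factors of $a$ exactly once, which by the multinomial structure yields $h!\,\bigl(a'(t_j)\bigr)^h$. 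Combined with the $1/h!$ prefactor this gives precisely $\bigl(\tfrac{d}{dt}a(t)\big|_{t=t_j}\bigr)^h\cdot s(t_j)$, using the explicit value of $a'(t_j)=\prod_{n\ne j}(t_j-t_n)$ from Lemma~\ref{lemma:rst_Lemma1}.

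The main obstacle I anticipate is making the combinatorial bookkeeping of the surviving term in the $k=h$ case fully rigorous: I must justify that among all ways of distributing $h$ derivatives across the $h$ identical factors of $a^h(t)$, only the "one derivative per factor" allocation avoids leaving behind a vanishing factor $a(t_j)$, and that this allocation contributes the symmetry factor $h!$. I would handle this by writing $a^h=a\cdot a\cdots a$ and applying the general Leibniz rule for a product of $h$ functions, observing that any allocation assigning two or more derivatives to some factor necessarily leaves another factor undifferentiated (hence equal to $a(t_j)=0$ at the evaluation point), so the only nonzero allocation is the diagonal one counted $h!$ times. Everything else reduces to straightforward polynomial differentiation that I would not expand in full.
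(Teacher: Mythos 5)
Your proof is correct, but it takes a genuinely different route from the paper's, particularly for \eqref{eq:rst_diff_ah}. The paper handles \eqref{eq:rst_diff_ak} in one line (the zero at $t_j$ has multiplicity $k>h$, so a factor $(t-t_j)$ survives in every term of the $h$-th derivative), and then proves \eqref{eq:rst_diff_ah} by a recursion: it applies the product rule once, writing $\frac{d}{dt}\bigl[\tfrac{a^h}{h!}s\bigr] = \tfrac{a^{h-1}}{(h-1)!}\,a'\,s + \tfrac{a^h}{h!}\,s'$, kills the second term under the remaining $h-1$ derivatives by invoking \eqref{eq:rst_diff_ak}, and iterates; each step trades one power of $a$ for one factor of $a'$, so after $h$ steps one lands directly on $\bigl(a'(t_j)\bigr)^h s(t_j)$ with no combinatorial coefficients ever appearing. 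You instead do everything in one shot with two applications of the Leibniz rule: first isolating the $r=h$ term of $\sum_r \binom{h}{r}g^{(r)}s^{(h-r)}$ (all other terms vanish via your divisibility claim that $\tfrac{d^r}{dt^r}a^k$ is a polynomial multiple of $a^{k-r}$), and then evaluating $\tfrac{d^h}{dt^h}a^h\big|_{t_j}$ by the multi-factor Leibniz formula, where a pigeonhole argument shows only the one-derivative-per-factor allocation survives, contributing the symmetry factor $h!$ that cancels the $1/h!$ prefactor. The paper's recursion is shorter, avoids multinomial bookkeeping entirely, and mirrors the recursive structure of the RST construction itself; your expansion is non-recursive, makes the origin and cancellation of the $h!$ explicit, and the intermediate divisibility statement you prove for \eqref{eq:rst_diff_ak} is a slightly stronger, reusable fact than the paper's multiplicity remark. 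Both arguments are sound; make sure in the final write-up you do carry out the induction behind the divisibility claim, since that is the step your version of \eqref{eq:rst_diff_ak} rests on.
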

\begin{proof}
Since $t=t_j$ is a zero of $a^k(t)$ with multiplicity $k>h$, the factor $(t-t_j)$ appears in every term of the $h$-th derivative. If $k=h$, then
\begin{equation}
\frac{d^h}{dt^h}\biggl[\frac{a^h(t)}{h!}\cdot s(t)\biggr]_{t=t_j} = \frac{d^{h-1}}{dt^{h-1}}\biggl[\frac{d}{dt} \biggl(\frac{a^h(t)}{h!}\cdot s(t)\biggr)\biggr]_{t=t_j}.
\end{equation}
By the linearity of the differential operator and product rule, RHS can be rewritten as
\begin{equation}
\frac{d^{h-1}}{dt^{h-1}}\biggl[\frac{a^{h-1}(t)}{{h-1}!}\cdot \frac{d a(t)}{dt}\cdot s(t)\biggr]_{t=t_j} + \cancel{\frac{d^{h-1}}{dt^{h-1}}\biggl[\frac{a^h(t)}{h!}\cdot \frac{d}{dt}s(t)}\biggr]_{t=t_j}
\end{equation}
where the second term in the above expression vanishes because of the first part of the lemma. Proceeding in the same way until the $h$-th derivative leads to
\begin{equation}
\frac{d^h}{dt^h}\biggl[\frac{a^h(t)}{h!}\cdot s(t)\biggr]_{t=t_j} = \biggl(\frac{d}{dt}a(t)\biggr)^h\ \biggr|_{t=t_j} \cdot s(t_j)
%\label{eq:rst_diff_ah}
\end{equation}
which concludes the proof. \qedhere  
\end{proof}

To understand the terms involved while building the final trajectory $x_k(t)$, the concept of $m$-partial trajectory is introduced.
\begin{defn}
\label{def:rst_def1}
Let $\frac{d^i}{dt^i}x_k(t)\bigr|_{t=t_j}$ be given kinematic constraints, for $i=0,1,\dots, k$. An $m$-partial trajectory is a feasible trajectory which only fulfills the first $m$ kinematic constraints, i.e. 
\begin{equation}
x_m(t) = \sum_{i=0}^{m}{p_i(t)}, \; \text{with }\; m\leq k.
\end{equation} 
\end{defn}
As an example, if the kinematic constraints are set up to the acceleration ($k=2$), a $1$-partial trajectory is a polynomial trajectory that passes through the $N+1$ waypoints with the given velocities.

\begin{theorem}
\label{theorem:rst_theorem1}
Let $t_j$ be a point in time, for $j=0,1,\dots, N$, such that $\frac{d^i}{dt^i}x_k(t)\bigr|_{t=t_j}=\sigma_{i,j}$ is the associated given kinematic constraint, for $i=0,1,\dots, k$. Let $x_k(t)$ be a feasible polynomial trajectory defined as
\begin{equation}
x_k(t) = \sum_{i=0}^{k}{p_i(t)}.
\end{equation}
If
\begin{equation}
p_i(t)=\dfrac{1}{i!}\biggl(\prod_{n=0}^{N}{(t-t_n)}\biggr)^i\cdot s_i(t)
\end{equation}
then the $i$-partial trajectory $x_i(t)$ depends recursively on $x_{i-1}(t)$. In particular,
\begin{equation}
s_i(t_j)=\dfrac{\dfrac{d^i}{dt^i}x_k(t)\biggr|_{t=t_j} -\dfrac{d^i}{dt^i}x_{i-1}(t)\biggr|_{t=t_j}}{\displaystyle \Biggl(\prod_{\substack{n=0\\ n\neq j}}^{N}{(t_j-t_n)}\Biggr)^i}.
\label{eq:rst_recursive}
\end{equation}
\end{theorem}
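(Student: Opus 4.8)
The plan is to exploit the additive decomposition $x_i(t) = x_{i-1}(t) + p_i(t)$ and to show that the single new term $p_i(t)$ simultaneously (a) leaves untouched every lower-order kinematic constraint already enforced by $x_{i-1}(t)$, and (b) supplies exactly the amount needed to meet the $i$-th derivative constraint at each node $t_j$. Writing $a(t) = \prod_{n=0}^{N}(t-t_n)$ so that $p_i(t) = \frac{1}{i!}a^i(t)\,s_i(t)$, the whole argument reduces to evaluating derivatives of $p_i$ at the nodes, a task for which Lemma~\ref{lemma:rst_Lemma1} and Lemma~\ref{lemma:rst_Lemma2} are tailor-made.

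First I would establish the non-interference property. For each derivative order $h \in \{0,1,\dots,i-1\}$ I would apply \eqref{eq:rst_diff_ak} with $k=i>h$, giving $\frac{d^h}{dt^h}p_i(t)\big|_{t=t_j}\equiv 0$. This is the heart of the recursion: since $t_j$ is a zero of $a^i(t)$ of multiplicity $i$, differentiating fewer than $i$ times always leaves a surviving factor $(t-t_j)$, so every constraint of order below $i$ is inherited unchanged from $x_{i-1}(t)$. Hence $\frac{d^h}{dt^h}x_i(t)\big|_{t=t_j}=\frac{d^h}{dt^h}x_{i-1}(t)\big|_{t=t_j}$ for all $h<i$, which certifies that $x_i(t)$ is a genuine $i$-partial trajectory in the sense of Definition~\ref{def:rst_def1}.

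Next I would compute the $i$-th derivative of $p_i$ at the nodes. Applying \eqref{eq:rst_diff_ah} with $h=i$ gives $\frac{d^i}{dt^i}p_i(t)\big|_{t=t_j}=\big(\frac{d}{dt}a(t)\big|_{t=t_j}\big)^i s_i(t_j)$, and substituting the closed form \eqref{eq:rst_da_tj} from Lemma~\ref{lemma:rst_Lemma1} turns this into $\frac{d^i}{dt^i}p_i(t)\big|_{t=t_j}=\big(\prod_{n\neq j}(t_j-t_n)\big)^i s_i(t_j)$. Imposing the node constraint $\frac{d^i}{dt^i}x_i(t)\big|_{t=t_j}=\frac{d^i}{dt^i}x_k(t)\big|_{t=t_j}$ together with the split $\frac{d^i}{dt^i}x_i=\frac{d^i}{dt^i}x_{i-1}+\frac{d^i}{dt^i}p_i$, I would solve for $s_i(t_j)$ to recover exactly \eqref{eq:rst_recursive}. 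The recursion then closes because $s_i(t)$, being prescribed only at the $N+1$ nodes, is reconstructed by Lagrange interpolation (Section~\ref{subsec:rst_Lagrange}) from the values $s_i(t_j)$; the base case $i=0$ collapses to $p_0(t)=s_0(t)$ with $s_0(t_j)=\sigma_{0,j}$, i.e. the plain Lagrange interpolant of the positions, taking $x_{-1}\equiv 0$.

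The step I expect to be the main obstacle is not the final algebra but the verification of internal consistency: one must ensure that the apparent circularity of $p_i$ depending on all previous $p_\ell$, which in turn feed back through $x_{i-1}$, is genuinely well-posed. The resolution is the triangular action of differentiation on the powers $a^\ell$ exposed by Lemma~\ref{lemma:rst_Lemma2}: the term $p_i$ perturbs only derivative orders $\geq i$ at the nodes, while it is itself determined at order $i$ solely through orders $<i$ of $x_{i-1}$. Making this triangularity explicit is the conceptual crux that converts the nested definition into a strictly sequential, non-circular recursion, and it is also what guarantees that the numerator of \eqref{eq:rst_recursive} is already fully determined by the time $s_i$ is computed.
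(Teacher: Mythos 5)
Your proposal is correct and follows essentially the same route as the paper's proof: both rely on Lemma~\ref{lemma:rst_Lemma1} and Lemma~\ref{lemma:rst_Lemma2} to show that at each node only the $a^i$ term contributes at derivative order $i$ (all terms $a^\ell$ with $\ell>i$ vanishing there), then solve for $s_i(t_j)$ and reconstruct $s_i(t)$ by Lagrange interpolation, with the base case $i=0$ handled identically. The only difference is presentational—you phrase it as an induction stressing that $p_i$ does not disturb orders below $i$, while the paper differentiates the full expansion of $x_k(t)$ and lets the higher terms drop out—but the underlying triangularity argument and algebra are the same.
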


\begin{proof}
For $i=0$, $x_{-1}(t):=0$ and the kinematic constraint is $x_k(t_j)$ which brings to the first polynomial $s_0(t)=p_0(t)=x_0(t)$ obtained using an interpolation technique, e.g. the Lagrange polynomials as explained in Sec. \ref{subsec:rst_Lagrange}. No other contributions $p_i(t)$ are taken into account since $p_i(t_j)=0 \; \forall i>0$. For brevity of notation it is now convenient to define a polynomial $a(t)$ as $a(t) = \prod_{n=0}^{N}{(t-t_n)}$.

For $i=1$ and constraint $\frac{d}{dt}x_k(t)\bigr|_{t=t_j}$, 
\begin{equation}
x_k(t) = x_0(t) + a(t)\cdot s_1(t) + \sum_{i=2}^{k}{\dfrac{a^i(t)}{i!}s_i(t)},
\label{eq:rst_i1}
\end{equation}
taking the derivative in $t_j$ in both sides of \eqref{eq:rst_i1} yields to
\begin{equation}
\frac{d}{dt}x_k(t)\biggr|_{t=t_j} = \frac{d}{dt}x_0(t)\biggr|_{t=t_j} + \frac{d}{dt}a(t)\biggr|_{t=t_j}\cdot s_1(t_j)
\end{equation}
where, using Lemma \ref{lemma:rst_Lemma2}, no contribution in $t_j$ comes from $i>1$. Rearranging,
\begin{equation}
s_1(t_j)=\dfrac{\dfrac{d}{dt}x_k(t)\biggr|_{t=t_j} -\dfrac{d}{dt}x_0(t)\biggr|_{t=t_j}}{\displaystyle \prod_{\substack{n=0\\ n\neq j}}^{N}{(t_j-t_n)}}
\end{equation}
where the denominator is a consequence of Lemma \ref{lemma:rst_Lemma1}.
To compute $s_1(t)$, it is convenient to interpolate the points $s_1(t_j)$ again with Lagrange polynomials. The $1$-partial trajectory has expression $x_1(t)=x_0(t)+a(t)\cdot s_1(t)$.

In general, for $i=h$ the constraint to be fulfilled is $\frac{d^h}{dt^h}x_k(t)\bigr|_{t=t_j}$. Thus
\begin{equation}
x_k(t) = x_{h-1}(t) + \dfrac{a^h(t)}{h!}\cdot s_h(t) + \sum_{i=h+1}^{k}{\dfrac{a^i(t)}{i!}\cdot s_i(t)},
\label{eq:rst_ih}
\end{equation}
where $x_{h-1}(t)$ is the $(h-1)$-partial trajectory. Taking the $h$-th derivative in $t_j$ in both sides of \eqref{eq:rst_ih} and using again Lemma \ref{lemma:rst_Lemma2} for the second and third term, yields to
\begin{equation}
\frac{d^h}{dt^h}x_k(t)\biggr|_{t=t_j} = \frac{d^h}{dt^h}x_{h-1}(t)\biggr|_{t=t_j} + \biggl(\frac{d}{dt}a(t)\biggr)^h\ \biggr|_{t=t_j} \cdot s_h(t_j).
\end{equation}
Finally, rearranging
\begin{equation}
s_h(t_j)=\dfrac{\dfrac{d^h}{dt^h}x_k(t)\biggr|_{t=t_j} -\dfrac{d^h}{dt^h}x_{h-1}(t)\biggr|_{t=t_j}}{\displaystyle \Biggl(\prod_{\substack{n=0\\ n\neq j}}^{N}{(t_j-t_n)}\Biggr)^h}
\end{equation}
concludes the proof. \\ \qedhere  
\end{proof}

Theorem \ref{theorem:rst_theorem1} provides a recursive formula to evaluate the points $s_i(t_j)$. Hence, it is sufficient to interpolate them, for example with Lagrange polynomials, in order to get $s_i(t)$ at each iteration. As a remark, the interpolation of $s_i(t_j)$ can be carried out also by adopting other polynomial basis functions, e.g., Newton polynomials, B-Spline, etc. Nevertheless, the polynomial assumption comes from Lemma \ref{lemma:rst_Lemma2}. Indeed, polynomials, when differentiated, do not introduce extra poles which could cancel with $a(t)$. On the contrary, other basis functions do not guarantee this property.

The following corollary provides an information on the minimum degree of the polynomial trajectory $x_k(t)$ defined in \eqref{eq:rst_trajectory}.

\begin{corollary}
\label{corollary:rst_corollary1}
Let $t_j$ be a point in time, for $j=0,1,\dots, N$, such that $\frac{d^i}{dt^i}x_k(t)\bigr|_{t=t_j}$ is the associated given kinematic constraint, for $i=0,1,\dots, k$. If $x_k(t)$ is a feasible polynomial trajectory defined as
\begin{equation}
x_k(t) = \sum_{i=0}^{k}{\dfrac{1}{i!}\biggl(\prod_{n=0}^{N}{(t-t_n)}\biggr)^i\cdot s_i(t)},
\end{equation}
then the minimum degree of $x_k(t)$ is less or equal to $(k+1)(N+1)-1$.
\end{corollary}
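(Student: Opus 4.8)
The plan is to exhibit that the polynomial $x_k(t)$ produced by the recursive construction of Theorem~\ref{theorem:rst_theorem1} has degree at most $(k+1)(N+1)-1$, which then bounds the minimal degree of any feasible trajectory meeting the $(k+1)(N+1)$ assigned constraints. Since $x_k(t)$ is already written as the explicit sum $\sum_{i=0}^{k} p_i(t)$ with $p_i(t)=\frac{1}{i!}\,a^i(t)\,s_i(t)$ and $a(t)=\prod_{n=0}^{N}(t-t_n)$, the whole argument reduces to a degree count on the individual summands. First I would fix the degree of each building block: $a(t)$ has degree exactly $N+1$, so $a^i(t)$ has degree $i(N+1)$; and each $s_i(t)$ is, by construction, the Lagrange interpolant through the $N+1$ nodes $\{(t_j,s_i(t_j))\}_{j=0}^{N}$ determined by the recursion \eqref{eq:rst_recursive}, hence $\deg s_i \le N$.

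Next I would combine these bounds multiplicatively and additively. For each $i\in\{0,\dots,k\}$,
\begin{equation}
\deg p_i = \deg\!\left(\tfrac{1}{i!}\,a^i(t)\,s_i(t)\right) \le i(N+1) + N .
\end{equation}
Because the degree of a finite sum of polynomials cannot exceed the largest degree among the summands, it follows that
\begin{equation}
\deg x_k \le \max_{0\le i\le k}\bigl(i(N+1)+N\bigr) = k(N+1)+N ,
\end{equation}
the maximum being attained at $i=k$ since $i\mapsto i(N+1)+N$ is increasing. A short arithmetic simplification then gives
\begin{equation}
k(N+1)+N = kN + k + N = (k+1)(N+1) - 1 ,
\end{equation}
which is exactly the claimed bound.

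I would close by noting that this upper bound is also the relevant bound on the \emph{minimal} degree needed: the construction prescribes $(k+1)$ derivative orders at each of the $N+1$ waypoints, i.e. $(k+1)(N+1)$ scalar conditions, and a polynomial of degree $(k+1)(N+1)-1$ carries exactly that many free coefficients, so $x_k(t)$ realizes a feasible trajectory of minimal (or near-minimal) degree. The step requiring the most care is the one claiming $\deg s_i \le N$: this relies on interpreting the recursion of Theorem~\ref{theorem:rst_theorem1} as producing values to be interpolated by a polynomial through $N+1$ nodes, and it is precisely why the polynomial interpolation hypothesis of Lemma~\ref{lemma:rst_Lemma2} is essential—any interpolation scheme that introduced additional poles would break the clean degree estimate on $p_i$. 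Everything else is a routine bound, and since only an upper estimate is sought, possible cancellations of leading terms across the $p_i$ need not be tracked.
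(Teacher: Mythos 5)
Your proof is correct and follows essentially the same route as the paper's own argument: bound $\deg s_i \le N$ via the Lagrange interpolation through the $N+1$ nodes, observe that the dominant summand is $i=k$ with $\deg\bigl(a^k(t)\,s_k(t)\bigr)\le k(N+1)+N$, and simplify to $(k+1)(N+1)-1$, closing with the same consistency remark about the $(k+1)(N+1)$ scalar constraints. Your version is merely a more explicit write-up of the identical degree count.
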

\begin{proof}
Each $s_i(t)$ is a Lagrange polynomial that passes through $N+1$ points, therefore its minimum degree is $N$. The highest contribution in terms of degree to $x_k(t)$ comes when $i=k$, so that $\prod_{n=0}^{N}{(t-t_n)^k}$ is a polynomial of degree equal to $(N+1)k$. Thus, the product of $\prod_{n=0}^{N}{(t-t_n)^k}$ and $s_k(t)$ gives a polynomial whose minimum degree is at most $(N+1)k+N = (k+1)(N+1)-1$.
This is somehow consistent with the idea that imposing $k+1$ constraints for each of the points in time $t_j$ gives $(k+1)(N+1)$ constraints and the minimum degree of a polynomial that satisfies them is $(k+1)(N+1)-1$. \qedhere
\end{proof}

\subsection{Recursive smooth trajectory generation}
\label{subsec:rst_code}

\begin{figure}[t]
\includegraphics[scale=0.40]{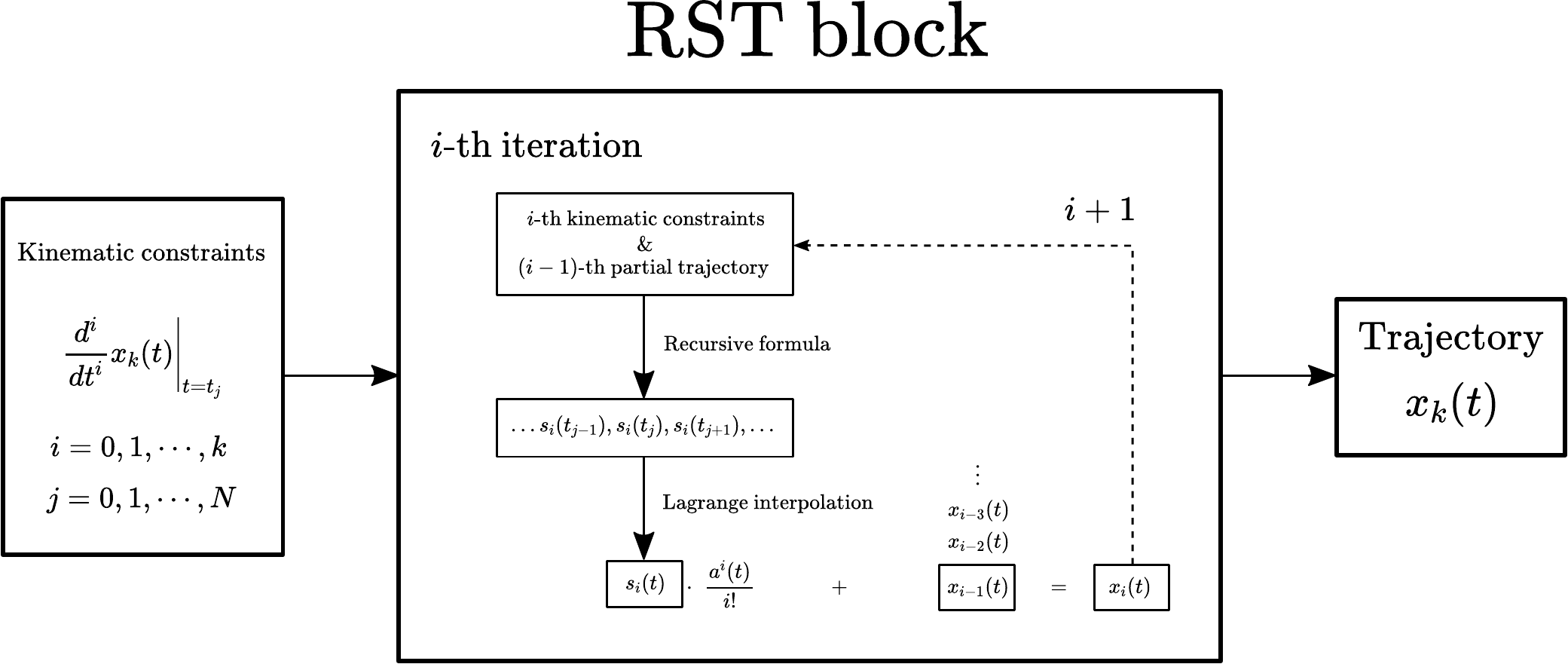}
\centering
\caption{RST block diagram.}
\label{fig:rst_RST}
\end{figure}
In the following, we will denote our approach as recursive smooth trajectory (RST) generation. The idea behind it, is that each component $p_i(t)$ in \eqref{eq:rst_trajectory} guarantees that $\dfrac{d^i}{dt^i}x_k(t)\bigr|_{t=t_j}=\sigma_{i,j}$ is fulfilled. Starting from the waypoints constraint which can be easily calculated through Lagrange polynomials, all the following higher-order differential kinematic constraints depend recursively on previous ones, according to \eqref{eq:rst_recursive}. 

The pseudocode in Alg.~\ref{alg:RST} provides a practical idea on how to iteratively design the trajectory $x_k(t)$ under the conditions aforementioned. The scheme in Fig.~\ref{fig:rst_RST} illustrates how RST operates.

We now discuss the influence of the distribution of the time instants $t_j$ for $j=0,\dots,N$ over the trajectory on oscillations and numerical limitations by proposing a known solution (Chebyshev nodes) and a novel hybrid solution, denoted as blockwise recursive smooth trajectory (BRST) that tackle said practical issues.

\begin{algorithm}
\caption{Recursive smooth trajectory (RST) generation}
\label{alg:RST}
\begin{algorithmic}[1]
\Inputs{$N+1$ points in time $t_0<t_1<\dots<t_N$; \\ Number of derivatives $k$ to fulfill; \\ Kin. constr. $\frac{d^i}{dt^i}x_k(t)\bigr|_{t=t_0}, \dots, \frac{d^i}{dt^i}x_k(t)\bigr|_{t=t_N}$.}
\Initialize{$a(t)=(t-t_0)\cdots (t-t_N)$ polynomial; \\ $x_{-1}(t) \equiv 0$.}
\For{$i=0$ to $k$}
	\For{$j=0$ to $N$}
		\State $s_i(t_j)=\dfrac{\dfrac{d^i}{dt^i}x_k(t)\biggr|_{t=t_j} -\dfrac{d^i}{dt^i}x_{i-1}(t)\biggr|_{t=t_j}}{\displaystyle \Biggl(\dfrac{d}{dt}a(t)\biggr|_{t=t_j}\Biggr)^i}$;
	\EndFor
	\State Interpolate $s_i(t_j)$ with Lagrange polynomial $s_i(t)$;
	\State $x_i(t) = x_{i-1}(t)+\dfrac{a^i(t)}{i!}\cdot s_i(t)$;
\EndFor
\end{algorithmic}
\end{algorithm}

\subsection{Remarks and insights on blockwise approach}
In the previous sections, we presented the formal generation approach (RST) of $x_k(t)$ for points in time $t_0<t_1< \dots <t_N$ with no constraints on $N$ and on the length of the interval $I_j = t_{j+1}-t_j$ for $j = 0,\dots, N-1$. 
Unfortunately, when the number of points $N+1$ is large and in particular when points in time $t_j$ are equally spaced ($I_j$ is constant), the Runge's phenomenon may occur \cite{Runge}. The Runge's phenomenon is an oscillation problem near the endpoints of the polynomial interpolation function, as illustrated in Fig. \ref{fig:rst_Runge}. To overcome it, one could either move to spline interpolation as mentioned in Sec. \ref{subsec:rst_introduction} or change the distribution of the nodes $t_j$ more densely towards the edges of the interval $[t_0, t_N]$ as proposed in \cite{Berrut}.
In the latter case, a standard choice considers the set of points in time as the set of Chebyshev nodes. In particular, for $N+1$ points in the interval $[t_0, t_N]$, nodes are transformed into
\begin{equation}
\hat{t}_j = \frac{1}{2}(t_0+t_N) + \frac{1}{2}(t_N-t_0)\cos\biggl[\dfrac{2j+1}{2(N+1)}\pi\biggr], \; j = 0,\dots, N.
\label{Cheby}
\end{equation}

Moreover, when the number of points $N+1$ increases, we encounter computational limitations in evaluating the powers $t^{(k+1)(N+1)-1}$. In such a case, a possible new blockwise approach that we name blockwise RST (BRST) concatenates intervals 
\begin{equation}
    [t_{0,1},t_{N,1}], [t_{0,2},t_{N,2}], \dots, [t_{0,M},t_{N,M}]
\end{equation} 
in $M$ blocks. For each block, the associated trajectory is separately calculated as described in Sec. \ref{subsec:rst_code}. Since kinematic constraints are intrinsically considered in the formulation of the trajectory, interfaces are already jointly matched (up to $k$-th derivative) without any need of optimization steps. Finally, when $N=1$, we bring blockwise back to piecewise polynomial trajectories under the recursive framework and we will refer to it as piecewise RST (PRST) algorithm. It is important to notice that the PRST approach provides a piecewise trajectory that is exactly the same as the one generated using the classic spline interpolation method. However, the two methods are intrinsically different: PRST finds the piecewise trajectory recursively by building $k+1$ partial trajectories, while the spline interpolation technique solves a system of linear equations, thus it needs a matrix inversion. A quick comparison of the computational cost is presented in Sec. \ref{subsec:rst_complexity}, while a more detailed study is left for future research, since the most efficient implementation of the RST needs to be studied.
\section{Trajectory perturbation}
\label{subsec:rst_perturbation}
In this section, we present the formal procedure in order to deal with uncertainties in the kinematic constraints. For notation convenience, we will denote $\frac{d^i}{dt^i}\tilde{x}_k(t)\bigr|_{t=t_j}$ as the perturbed constraint. 

\subsection{Uncertainty model}
We model the uncertainty in the constraints as an additive contribution $\varepsilon_i$ to the fixed deterministic part, in particular
\begin{equation}
\label{eq:rst_perturbation}
\frac{d^i}{dt^i}\tilde{x}_k(t)\bigr|_{t=t_j} = \underbrace{\frac{d^i}{dt^i}x_k(t)\bigr|_{t=t_j}}_{\text{deterministic}}+\underbrace{\varepsilon_i(t_j)}_{\text{stochastic}},
\end{equation}
where $\varepsilon_i\sim \mathcal{N}(0,\sigma^2_{i}(t_j))$ is a Gaussian random variable with zero mean and variance $\sigma^2_{i}(t_j)$. The uncertainty models the deviation from the expected value of the kinematic constraint.
As an example, \eqref{eq:rst_perturbation} can be used to analyze how the noise in the E-FMS system affects the kinematic constraints, thus the polynomial trajectory.
Due to the intrinsic linearity of RST and the perturbation model, the following theorem proves that it is possible to translate the uncertainties in the constraints into uncertainties in the polynomial coefficients.

\begin{theorem}
\label{lemma:rst_Lemma6}
Let $\frac{d^i}{dt^i}\tilde{x}_k(t)\bigr|_{t=t_j}$ be the perturbed kinematic constraints for $i=0,1,\dots, k$ and let $x_k(t)$ be a feasible polynomial trajectory computed with RST. Then $\tilde{x}_k(t)$ represents the perturbed polynomial trajectory and it can be written as
\begin{equation}
\tilde{x}_k(t) = x_k(t)+r_k(t),
\end{equation} 
where $r_k(t)$ is a random polynomial whose coefficients belong to a multivariate Gaussian distribution $\mathcal{N(\mathbf{\mu,\Sigma})}$.
\end{theorem}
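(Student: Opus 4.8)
The plan is to reduce the statement to the single structural fact that the entire RST construction is a \emph{linear} map from the vector of kinematic constraints to the vector of polynomial coefficients, and then to invoke the standard fact that an affine image of a Gaussian vector is Gaussian. First I would collect the constraints into a single vector $\boldsymbol{\sigma} = \bigl(\sigma_{i,j}\bigr)_{i=0,\dots,k;\, j=0,\dots,N} \in \mathbb{R}^{(k+1)(N+1)}$ and, using Corollary \ref{corollary:rst_corollary1}, regard $x_k(t)$ as an element of the fixed finite-dimensional space of polynomials of degree at most $(k+1)(N+1)-1$, identified with its coefficient vector $\mathbf{c} \in \mathbb{R}^{(k+1)(N+1)}$. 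The goal of the first part is to prove that the RST map $L:\boldsymbol{\sigma}\mapsto \mathbf{c}$ is linear, i.e. $\mathbf{c} = \mathbf{A}\boldsymbol{\sigma}$ for a fixed matrix $\mathbf{A}$ that depends only on the nodes $t_0,\dots,t_N$.

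I would establish linearity by induction on the recursion index $i$ in Theorem \ref{theorem:rst_theorem1}, tracking how the $i$-partial trajectory $x_i(t)$ depends on $\boldsymbol{\sigma}$. The base case $i=0$ is linear because $x_0(t)$ is the Lagrange interpolant of the waypoint values $\sigma_{0,j}$, and Lagrange interpolation is a linear map from sampled values to coefficients. For the inductive step, note that in \eqref{eq:rst_recursive} the denominator $\bigl(\prod_{n\neq j}(t_j-t_n)\bigr)^i$ and the factor $1/i!$ are constants fixed by the nodes, while the numerator is $\sigma_{i,j}$ minus $\tfrac{d^i}{dt^i}x_{i-1}(t)\big|_{t=t_j}$; since evaluation of a derivative at a point is a linear functional of the coefficients and $x_{i-1}(t)$ depends linearly on $\boldsymbol{\sigma}$ by the induction hypothesis, each $s_i(t_j)$ is linear in $\boldsymbol{\sigma}$. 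Interpolating the values $s_i(t_j)$ (again linear), multiplying by the fixed polynomial $a^i(t)/i!$ (a linear convolution on coefficient vectors), and adding the result to $x_{i-1}(t)$ all preserve linearity, so $x_i(t)$ — and in particular $x_k(t)$ — is a linear image of $\boldsymbol{\sigma}$.

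With linearity in hand, I would apply the map to the perturbed constraints. By the uncertainty model \eqref{eq:rst_perturbation} we have $\tilde{\boldsymbol{\sigma}} = \boldsymbol{\sigma} + \boldsymbol{\varepsilon}$, where $\boldsymbol{\varepsilon} = \bigl(\varepsilon_i(t_j)\bigr)$ is a jointly Gaussian vector (each component is Gaussian and, under the independence assumption, the covariance $\boldsymbol{\Sigma}_\varepsilon$ is diagonal with entries $\sigma_i^2(t_j)$). Linearity then gives $\tilde{\mathbf{c}} = \mathbf{A}\tilde{\boldsymbol{\sigma}} = \mathbf{A}\boldsymbol{\sigma} + \mathbf{A}\boldsymbol{\varepsilon} = \mathbf{c} + \mathbf{A}\boldsymbol{\varepsilon}$, which is exactly the decomposition $\tilde{x}_k(t) = x_k(t) + r_k(t)$ with $r_k(t)$ the polynomial whose coefficient vector is $\mathbf{A}\boldsymbol{\varepsilon}$. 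Since an affine transformation of a Gaussian vector is Gaussian, the coefficients of $r_k(t)$ follow $\mathcal{N}\bigl(\mathbf{A}\,\mathbb{E}[\boldsymbol{\varepsilon}],\, \mathbf{A}\boldsymbol{\Sigma}_\varepsilon \mathbf{A}^\top\bigr)$, i.e. $\mathcal{N}(\boldsymbol{\mu},\boldsymbol{\Sigma})$ with $\boldsymbol{\mu}=\mathbf{0}$ under the zero-mean model and $\boldsymbol{\Sigma}=\mathbf{A}\boldsymbol{\Sigma}_\varepsilon\mathbf{A}^\top$.

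I expect the main obstacle to be the bookkeeping in the first part rather than any deep difficulty: one must verify carefully that the recursive dependence of $x_{i-1}(t)$ on the earlier constraints does not hide any nonlinearity, and that every ingredient — the recursion \eqref{eq:rst_recursive}, Lagrange interpolation, and the derivative-evaluation functionals used implicitly through Lemmas \ref{lemma:rst_Lemma1} and \ref{lemma:rst_Lemma2} — can be expressed as matrix actions on a \emph{common} fixed-dimensional coefficient space. Once the composite map is shown to be a genuine linear map $\mathbf{A}$ independent of $\boldsymbol{\sigma}$, the Gaussian conclusion is immediate.
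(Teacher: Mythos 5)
Your proposal is correct, and its core engine is the same as the paper's: an induction on the recursion index $i$ that exploits the linearity of every ingredient of RST (Lagrange interpolation, the recursion \eqref{eq:rst_recursive} with node-dependent constant denominators, multiplication by the fixed polynomial $a^i(t)/i!$). The difference is in packaging, and it is not cosmetic. The paper's proof carries the perturbation \emph{through} the recursion, splitting $\tilde{s}_i(t_j)=s_i(t_j)+s_i^{\varepsilon}(t_j)$ at each step and concluding only that the coefficients of $r_k(t)$ are some weighted sums of the $\varepsilon_i(t_j)$, hence jointly Gaussian with an unspecified covariance $\mathbf{\Sigma}$; the paper then asserts that no closed form for $\mathbf{\Sigma}$ is available and estimates it by Monte Carlo (running RST on $P$ sampled perturbations and forming the sample covariance). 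You instead prove once that the whole RST pipeline is a fixed linear map $\mathbf{c}=\mathbf{A}\boldsymbol{\sigma}$ with $\mathbf{A}$ depending only on the nodes, and then read off the decomposition and the distribution from the affine-image-of-a-Gaussian fact, obtaining $\boldsymbol{\Sigma}=\mathbf{A}\boldsymbol{\Sigma}_{\varepsilon}\mathbf{A}^{\top}$ explicitly. This buys something the paper's route does not: a constructive expression for the covariance, since $\mathbf{A}$ can be assembled column by column by running RST on the unit constraint vectors, which makes the paper's claim that ``no closed form expression is available'' too strong and would let you replace the sampling-based estimation of $\hat{\mathbf{\Sigma}}$ with an exact computation. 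The paper's formulation, on the other hand, stays closer to the recursive structure actually implemented in Alg.~\ref{alg:RSRT} and avoids having to argue that all operations act on a common fixed-dimensional coefficient space, which is the one bookkeeping point your argument must (and does, via Corollary \ref{corollary:rst_corollary1}) pin down.
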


\begin{proof}
To prove the theorem we will proceed by induction on the ($i-1$)-partial trajectory. Consider the base case when $i=0$, i.e., $\tilde{x}_0(t)$. From Sec. \ref{subsec:rst_Lagrange}, 
\begin{equation}
\label{eq:rst_lagrange_perturbated}
\tilde{x}_0(t) = \sum_{j=0}^{N}{\tilde{x}(t_j) \ell_j(t)}
\end{equation}
where $\ell_j(t)$ is the Lagrange basis. By substituting the perturbation expression for the constraints \eqref{eq:rst_perturbation} in \eqref{eq:rst_lagrange_perturbated}, it follows that 
\begin{align}
\tilde{x}_0(t) & = \sum_{j=0}^{N}{x(t_j) \ell_j(t)}+\sum_{j=0}^{N}{\varepsilon_0(t_j) \ell_j(t)} \nonumber \\ 
& = x_0(t)+r_0(t).
\end{align}

Suppose that the statement of the theorem is true for the $(i-1)$-partial trajectory, which means that
\begin{equation}
\tilde{x}_{i-1}(t)= x_{i-1}(t)+r_{i-1}(t).
\end{equation}
Then, it is true also for the $i$-partial trajectory. Indeed, from the RST algorithm derived in Theorem \ref{theorem:rst_theorem1},
\begin{equation}
\tilde{s}_i(t_j)=\dfrac{\dfrac{d^i}{dt^i}\tilde{x}_k(t)\biggr|_{t=t_j} -\dfrac{d^i}{dt^i}\tilde{x}_{i-1}(t)\biggr|_{t=t_j}}{\displaystyle \Biggl(\prod_{\substack{n=0\\ n\neq j}}^{N}{(t_j-t_n)}\Biggr)^i},
\label{eq:rst_perturbed_recursive}
\end{equation}
and using the induction hypothesis and the linearity of the differential operator,

\begin{align}
\tilde{s}_i(t_j)& = \dfrac{\Biggl(\dfrac{d^i}{dt^i}x_k(t)\biggr|_{t=t_j}+\varepsilon_i(t_j)\Biggr) -\Biggl(\dfrac{d^i}{dt^i}x_{i-1}(t)\biggr|_{t=t_j}+\dfrac{d^i}{dt^i}r_{i-1}(t)\biggr|_{t=t_j}\Biggr)}{\displaystyle \Biggl(\prod_{\substack{n=0\\ n\neq j}}^{N}{(t_j-t_n)}\Biggr)^i} \nonumber \\
&= \dfrac{\dfrac{d^i}{dt^i}x_k(t)\biggr|_{t=t_j}-\dfrac{d^i}{dt^i}x_{i-1}(t)\biggr|_{t=t_j}}{\displaystyle \Biggl(\prod_{\substack{n=0\\ n\neq j}}^{N}{(t_j-t_n)}\Biggr)^i}+\dfrac{\varepsilon_i(t_j)-\dfrac{d^i}{dt^i}r_{i-1}(t)\biggr|_{t=t_j}}{\displaystyle \Biggl(\prod_{\substack{n=0\\ n\neq j}}^{N}{(t_j-t_n)}\Biggr)^i} \nonumber \\
&= s_i(t_j)+s_i^{\varepsilon}(t_j).
\end{align}

This result is important because it separates $\tilde{s}_i(t_j)$ in two components. Using again Lagrange interpolation as done for $i=0$ yields to $\tilde{s}_i(t) = s_i(t)+s_i^{\varepsilon}(t)$. By the RST properties and definition of $i$-partial trajectory 
\begin{align}
\tilde{x}_i(t) &= \tilde{x}_{i-1}(t)+\frac{a^i(t)}{i!}\cdot \tilde{s}_i(t) \nonumber \\
&= x_{i-1}(t)+r_{i-1}(t)+ \frac{a^i(t)}{i!}\cdot (s_i(t)+s_i^{\varepsilon}(t)) \nonumber \\
&= x_{i-1}(t)+\frac{a^i(t)}{i!}\cdot s_i(t) + r_{i-1}(t)+ \frac{a^i(t)}{i!}\cdot s_i^{\varepsilon}(t) \nonumber \\
&= x_i(t)+r_i(t).
\label{eq:rst_derivation_perturbation}
\end{align}
Calculating \eqref{eq:rst_derivation_perturbation} in $i=k$ concludes the proof because $r_k(t)$ is a polynomial whose coefficients are a weighted sum of the uncertainties $\varepsilon_i$ in the constraints, for $i=0,1,\dots,k$. The way in which the random polynomial coefficients depend one to each other is described by the covariance matrix $\mathbf{\Sigma}$ of a multivariate Gaussian distribution, which has to be estimated.
\end{proof}
To characterize and generate new perturbed trajectories, an estimation of the multivariate Gaussian distribution described by the random coefficients in $r_k(t)$ has to be carried out.

\subsection{Coefficients estimation}
The coefficients of the random polynomial $r_k(t)$ incorporate the stochastic information of the constraints. 
Under the Gaussian hypothesis, it is straightforward to state that the coefficients are themselves Gaussian random variables since they are the outcome of linear combinations of $\varepsilon_i(t_j)$. However, the way in which the random variables $\varepsilon_i(t_j)$ interact and correlate one to each other strongly depends on the points in time $t_j$, in particular on the differences $t_j-t_n$, for $j=0,\dots,N$ and $n=0,\dots,N$ with $j\neq n$. Therefore, no closed form expression is available to describe the covariance matrix $\mathbf{\Sigma}$. Another approach consists of estimating $\mathbf{\Sigma}$, and we will refer to $\mathbf{\hat{\Sigma}}$ as the estimated version.

To compute $\mathbf{\hat{\Sigma}}$, the procedure involves the following steps:
\begin{itemize}
\item take $P$ realizations of the uncertainty in the constraints $\varepsilon_i(t_j)\sim \mathcal{N}(0,\sigma^2_{i}(t_j))$;
\item for each realization, evaluate the coefficients of the trajectory $r_{k}(t)$ generated via RST;
\item evaluate the sample covariance matrix $\mathbf{\hat{\Sigma}}$ of the stored coefficients.
\end{itemize}
The sample covariance matrix $\mathbf{\hat{\Sigma}}$ is the unbiased estimator of the covariance matrix $\mathbf{\Sigma}$.

\subsection{Random trajectory generation}
The multivariate Gaussian distribution is characterized by the mean (in this case $\mathbf{\mu} = \mathbf{0}$) and the covariance matrix $\mathbf{\Sigma}$. To generate new polynomial coefficients, thus new feasible trajectories, it is enough to sample from the multivariate distribution. In particular, consider a vector $\mathbf{z}$ of uncorrelated normal random variables. If the matrix $\mathbf{C}$ is a square root of $\mathbf{\Sigma}$, such as $\mathbf{C}\cdot \mathbf{C}^T = \Sigma$ (for example using the Cholesky decomposition), it follows that $\mathbf{y}=\mathbf{\mu} + \mathbf{C}\cdot \mathbf{z}$ is a vector of Gaussian random variables representing the coefficients of the random polynomial $r_k(t)$.

\begin{algorithm}
\caption{Recursive smooth random trajectory (RSRT) generation}
\label{alg:RSRT}
\begin{algorithmic}[1]
\Inputs{$N+1$ points in time $t_0<t_1<\dots<t_N$; \\ Kin. constr. $\frac{d^i}{dt^i}x_k(t)\bigr|_{t=t_0}, \dots, \frac{d^i}{dt^i}x_k(t)\bigr|_{t=t_N}$; \\ Uncertainties $\varepsilon_i(t_j)$.}
\Initialize{$a(t)=(t-t_0)\cdots (t-t_N)$ polynomial; \\ $P$ number of realizations; \\ $x_{-1}(t) \equiv 0$ \\ $r_{-1}(t) \equiv 0$.}
\State Compute $x_k(t)$ with RST;
\For{$p=1$ to $P$}
		\State Get realizations of $\varepsilon_i(t_j)$;
		\State Compute $r_{k,p}(t)$ with RST and constraints $\varepsilon_i(t_j)$;
	\EndFor
	\State Estimate $\mathbf{\hat{\Sigma}}$ of the coefficients of $r_k(t)$;
	\State Generate a set of coefficients of $r_k(t)$;
		\State Generate a perturbed trajectory $\tilde{x}_k(t)=x_k(t)+r_k(t)$;
\end{algorithmic}
\end{algorithm}

This process offers a fast methodology for generating perturbed trajectories $\tilde{x}_k(t)=x_k(t)+r_k(t)$ from the estimated coefficients in $r_k(t)$. We will refer to it as recursive smooth random trajectory (RSRT) generation. The algorithm is described in Alg.~\ref{alg:RSRT}

So far we have only considered a multivariate Gaussian distribution for $\varepsilon_i(t_j)$. Nevertheless as a consequence of the central limit theorem, whenever the uncertainties have different distribution, correlated Gaussian random variables can approximate the statistics of the polynomial coefficients of $r_k(t)$.
\section{RST in an optimization framework}
\label{subsec:rst_optimization}
Most of trajectory generation and path planning research concentrates in finding an optimal trajectory that minimizes a cost function $J(\cdot)$ under given constraints. Nevertheless, trajectory generation does not necessarily require optimality in the solution. In this section, we present an example of optimization framework built around the RST algorithm and we prove that when the number of waypoints $N+1$ is equal to $2$, RST (or PRST) directly provides the optimal solution in terms of minimum integral of the $p$-th derivative of the position squared, matching the trajectory generated by minimum-snap algorithm \cite{5980409} without any use of quadratic programming.

Sec.~\ref{subsec:rst_results} illustrated the RST algorithm, which is able to generate a polynomial trajectory $x_k(t)$ with minimum degree that satisfies the constraints. However, it is easy to notice that the general set of feasible trajectories is induced by $q(t)$ as follows
\begin{equation}
x_{\text{ext}}(t) = x(t)+\frac{a^{k+1}(t)}{(k+1)!}\cdot q(t),
\end{equation}
where $q(t)$ is a polynomial which introduces extra degrees of freedom needed for an optimization phase and $x(t)=x_k(t)$ is the trajectory generated via RST. 

As an example of an optimization problem, let $p = k+1$ be the order of the derivative of $x_{\text{ext}}(t)$ whose energy has to be minimized. A possible approach finds the solution to
\begin{equation}
\min_{q(t)}{\int_{t_0}^{t_N}{\biggl|\biggl|\frac{d^p}{dt^p}\biggl(x(t)+\frac{a^p(t)}{p!}\cdot q(t)\biggr)\biggr|\biggr|^2 dt}}
\end{equation}
with $q(t)$ polynomial, providing the optimal trajectory as
\begin{equation}
x_{\text{opt}}(t)=x(t)+\frac{a^p(t)}{p!}\cdot q_{\text{opt}}(t).
\end{equation}
Since all the functions inside the functional are polynomials, the coefficients of $q(t)$ can be in principle expressed analytically by integrating polynomials and by solving a system of linear equations. The convexity of the norm squared function guarantees a global minimum. 

When the number of waypoints is equal to $2$, that is $N=1$, the following Lemma asserts the optimality (in terms of energy) of the trajectory $x_k(t)$ generated with RST.

\begin{lemma}
\label{lemma:rst_Lemma5}
Let $x_k(t)$ be the trajectory generated with RST which satisfies the given kinematic constraints $\frac{d^i}{dt^i}x_k(t)\bigr|_{t=t_j}$ for $i=0,1,\dots, k$ and $j=0,1,\dots, N$. If $N=1$ and $p=k+1$, then the solution to
\begin{equation}
\label{prob:rst_functional}
\min_{q(t)}{\int_{t_0}^{t_1}{\biggl|\biggl|\frac{d^p}{dt^p}\biggl(x_{p-1}(t)+\frac{a^p(t)}{p!}\cdot q(t)\biggr)\biggr|\biggr|^2 dt}}
\end{equation}
is $q_{\text{opt}}(t)=0$, therefore the trajectory generated with RST is already the optimal one.
\end{lemma}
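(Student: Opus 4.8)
The plan is to recast \eqref{prob:rst_functional} as an orthogonal–projection (least–squares) problem in $L^2(t_0,t_1)$ and to show that the correction term produced by any nonzero $q$ is $L^2$–orthogonal to $x_k^{(p)}$, so that it can only increase the objective. First I would record two structural facts that make the case $N=1$ special. Since $p=k+1$, the term $x_{p-1}(t)$ in the functional is precisely the full RST trajectory $x_k(t)$, so every feasible candidate has the form $x_{\mathrm{ext}}(t)=x_k(t)+g(t)$ with $g(t):=a^p(t)\,q(t)/p!$. Moreover, with $N=1$ we have $a(t)=(t-t_0)(t-t_1)$, hence $a^p(t)$ vanishes to order exactly $p$ at each endpoint; consequently $g,g',\dots,g^{(p-1)}$ all vanish at both $t_0$ and $t_1$. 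Finally, Corollary~\ref{corollary:rst_corollary1} with $N=1$ gives $\deg x_k\le (k+1)(N+1)-1=2p-1$, so $x_k^{(2p)}\equiv 0$.

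Next I would expand the integrand componentwise. Writing the objective as
\[
J(q)=\int_{t_0}^{t_1}\bigl\|x_k^{(p)}+g^{(p)}\bigr\|^2\,dt = J(0) + 2\int_{t_0}^{t_1}\bigl\langle x_k^{(p)},g^{(p)}\bigr\rangle\,dt + \int_{t_0}^{t_1}\bigl\|g^{(p)}\bigr\|^2\,dt,
\]
the entire argument reduces to proving that the cross term vanishes for every polynomial $q$. The key step is to integrate the cross term by parts $p$ times. At the $j$-th step the boundary contribution carries a factor $g^{(p-j)}$ evaluated at $t_0$ and $t_1$, for $j=1,\dots,p$; each of these is $g^{(p-1)},g^{(p-2)},\dots,g^{(0)}$ at the endpoints and therefore vanishes by the multiplicity-$p$ zero established above. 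After the $p$-th integration by parts the cross term collapses to $(-1)^{p}\int_{t_0}^{t_1}\langle x_k^{(2p)},g\rangle\,dt$, which is identically zero because $x_k^{(2p)}\equiv 0$.

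With the cross term eliminated, $J(q)=J(0)+\int_{t_0}^{t_1}\|g^{(p)}\|^2\,dt\ge J(0)$, with equality if and only if $g^{(p)}\equiv 0$. To finish I would observe that $q\not\equiv0$ forces $\deg g=2p+\deg q\ge 2p>p-1$, so $g^{(p)}\not\equiv 0$ and the inequality is strict; hence the unique minimizer is $q_{\mathrm{opt}}(t)=0$, i.e.\ the RST polynomial is already optimal. I expect the principal obstacle to be the careful bookkeeping of the repeated integration by parts — verifying that at every one of the $p$ stages the surviving boundary term genuinely involves a derivative of $g$ of order at most $p-1$, and handling the vector-valued sum over the three spatial components cleanly. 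It is also worth emphasizing conceptually that this is exactly the Euler--Lagrange/Hermite phenomenon $f^{(2p)}=0$ specialized to two endpoints, which transparently explains why the optimality guarantee is confined to $N=1$ and is generally lost for $N>1$.
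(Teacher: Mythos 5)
Your proof is correct, but it takes a genuinely different route from the paper's. The paper argues via the calculus of variations: it forms the Lagrangian $\mathcal{L} = \bigl(x_{p-1}^{(p)}(t)+f^{(p)}(t)\bigr)^2$ with $f = a^p q/p!$, writes down the order-$2p$ Euler--Lagrange equation, which collapses to $\frac{d^{2p}}{dt^{2p}}\bigl(x_{p-1}(t)+f(t)\bigr)=0$, and then counts degrees: for $N=1$ one has $\deg x_{p-1}\le 2p-1$ while $\deg f = 2p+\deg q$ whenever $q\not\equiv 0$, so stationarity forces $q\equiv 0$ (global optimality then rests on the convexity of the squared norm, invoked in the surrounding text). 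You instead prove global optimality directly by an orthogonal-decomposition argument: expand $J(q)=J(0)+2\int_{t_0}^{t_1}\langle x_k^{(p)},g^{(p)}\rangle\,dt+\int_{t_0}^{t_1}\|g^{(p)}\|^2\,dt$, kill the cross term by $p$ integrations by parts --- legitimate because $g=a^pq/p!$ vanishes to order $p$ at both endpoints and $x_k^{(2p)}\equiv 0$ by Corollary~\ref{corollary:rst_corollary1} --- and conclude $J(q)\ge J(0)$ with strict inequality for $q\not\equiv 0$, since then $\deg g\ge 2p>p-1$. Both proofs rest on the same two structural facts (the degree bound on $x_k$ and the multiplicity-$p$ zeros of $a^p$ at $t_0,t_1$; the latter enters the paper's proof only implicitly, through the boundary terms that must vanish for the Euler--Lagrange equation to hold for the constrained variations $\delta f = a^p\,\delta q/p!$). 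What your version buys is self-containedness --- no Euler--Lagrange formalism and no separate convexity appeal --- plus strictness and uniqueness of the minimizer in one step; what the paper's version buys is brevity given the variational machinery and an explicit stationarity condition that connects naturally to its general-$N$ optimization framework (RST$_{\text{opt}}$).
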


\begin{proof}
The proof uses some concepts of calculus of variations. In particular, let $\mathcal{L}$ be a Lagrangian function defined as 
\begin{equation}
\label{eq:rst_Lagrangian}
\mathcal{L} = \biggl(\frac{d^p x_{p-1}(t)}{dt^p}+\frac{d^p f(t)}{dt^p}\biggr)^2,
\end{equation}
with 
\begin{equation}
f(t) = \frac{a^p(t)}{p!}\cdot q(t).
\end{equation}
From calculus of variations theory, solving problem \eqref{prob:rst_functional} is equal to solving the Euler-Lagrange equation
\begin{equation}
\small
\label{eq:rst_EL}
\frac{\partial \mathcal{L}}{\partial f} - \frac{d}{dt}\biggl(\frac{\partial \mathcal{L}}{\partial \dot{f}}\biggr)+ \frac{d^2}{dt^2}\biggl(\frac{\partial \mathcal{L}}{\partial \ddot{f}}\biggr)-\dots+(-1)^{p}\frac{d^{p}}{dt^{p}}\biggl(\frac{\partial \mathcal{L}}{\partial f^{(p)}}\biggr)=0
\end{equation}
and by substituting the Lagrangian defined in~\eqref{eq:rst_Lagrangian} into~\eqref{eq:rst_EL}
it follows that 
\begin{equation}
\frac{d^{2p}}{dt^{2p}}\biggl(x_{p-1}(t)+f(t)\biggr)=0.
\end{equation}
From the considerations in Corollary \ref{corollary:rst_corollary1}, 
\begin{align}
\text{deg}(x_{p-1}(t))&=2p-1, \nonumber \\
\text{deg}(f(t))&= \text{deg}(a(t))+\text{deg}(q(t)) = 2p+Q,
\end{align}
with $Q\geq 0$. But, since $x$ and $f$ are polynomials, each differentiation reduces the degree by one and
\begin{equation}
\text{deg}\Bigg(\frac{d^{2p}}{dt^{2p}}\biggl(x_{p-1}(t)+f(t)\biggr)\Biggr)=Q=0,
\end{equation}
therefore $\text{deg}(q(t))=Q=0$ and in particular $q(t)\equiv 0$. \qedhere
\end{proof}
When the number of blocks $M$ increases, the overall optimal trajectory is obtained by optimizing the trajectories in each block.
When the number of waypoints in a single block is greater than $2$, the intrinsic optimality of the trajectory generated with RST is not guaranteed anymore and the optimization process provides $q(t)\neq 0$.  Next section illustrates trajectories generated via RST and via optimization of the integral of the $p$-th derivative of the position squared, denoted with RST$_{\text{opt}}$. 

\section{Applications}
\label{subsec:rst_examples}
In this section, we firstly discuss two different case-studies based on the number of waypoints $N+1$. For both of them we present the trajectory generated with RST, we numerically describe the influence of uncertainties in the kinematic constraints and we lastly compare the RST with the optimized extension introduced in Sec.~\ref{subsec:rst_optimization} (RST$_{\text{opt}}$), and with the minimum-snap piecewise polynomial trajectory obtained as proposed in \cite{5980409}.
In the last part, we consider a 2D planner quadrotor with highly input coupling   to assess the tracking performance capability and highlight the advantage of the RST algorithm over a piecewise polynomial method such as the spline interpolation technique or the minimum-snap approach.

\subsection{Scenario 1}
As a first example, we consider a rest-to-rest maneuver. Such trajectory satisfies the following deterministic kinematic constraints in a time-span of $10$ seconds
\begin{equation*}
  \dfrac{d^i}{d t^i}x_k(t)=
  \begin{blockarray}{*{3}{c} l}
    \begin{block}{*{3}{>{$\footnotesize}c<{$}} l}
      $t_0$ & $t_1$ & $t_2$ & \\
    \end{block}
    \begin{block}{[*{3}{c}]>{$\footnotesize}l<{$}}
      0 & 2 & 0 \bigstrut[t]& space \\
       0 & 0 & 0 & velocity \\
       0 & 0 & 0 & acceleration \\
       0 & 0 & 0 & jerk \\
    \end{block}
  \end{blockarray}
\end{equation*}
and can be determined using the RST algorithm, which generates a polynomial trajectory $x_k(t)$ of degree $(k+1)(N+1)-1=4\cdot 3-1=11$. 
A perturbation is later applied in some of the constraints, in particular
\begin{equation*}
  \varepsilon_i(t)=
  \begin{blockarray}{*{3}{c} l}
    \begin{block}{*{3}{>{$\footnotesize}c<{$}} l}
      $t_0$ & $t_1$ & $t_2$ & \\
    \end{block}
    \begin{block}{[*{3}{c}]>{$\footnotesize}l<{$}}
      0 & \varepsilon_0 & 0 \bigstrut[t]& space \\
       0 & \varepsilon_1 & 0 & velocity \\
       0 & 0 & 0 & acceleration \\
       0 & 0 & 0 & jerk \\
    \end{block}
  \end{blockarray}
\end{equation*}

\begin{figure}
\includegraphics[scale=0.82]{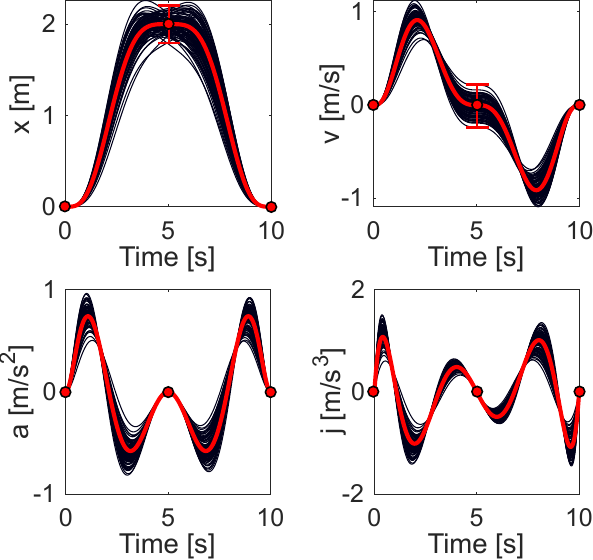}
\centering
\caption{Deterministic trajectory (in red) and $100$ realizations of perturbed trajectories (in black) for the case of $3$ waypoints equally spaced in time.}
\label{fig:rst_RSRT_1}
\end{figure}

\begin{figure}
\includegraphics[scale=0.58]{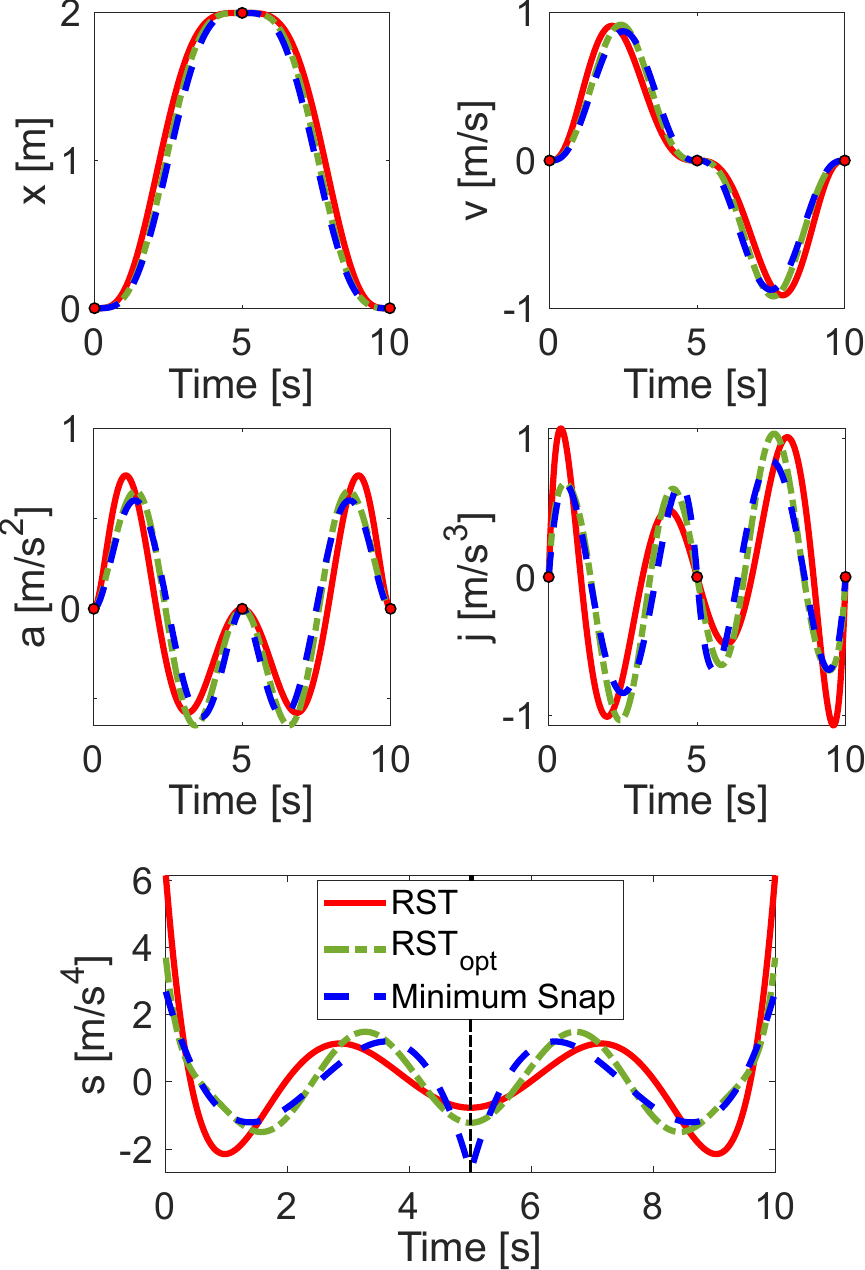}
\centering
\caption{Comparison of trajectories and derivatives generated with RST, RST optimized and minimum-snap algorithm for the case of $3$ waypoints equally spaced in time.}
\label{fig:rst_RST_1}
\end{figure}

with $\varepsilon_0 \sim \mathcal{N}(0,0.1^2)$ and $\varepsilon_1 \sim \mathcal{N}(0,0.1^2)$. From Alg.~\ref{alg:RSRT}, the estimation and generation of the covariance matrix $\hat{\Sigma}$ describing the joint statistics of the coefficients of $r_k(t)$ provides several perturbed trajectories $\tilde{x}_k(t)$ that are reported in Fig.~\ref{fig:rst_RSRT_1}, top left corner. In particular, the red line represents the fixed trajectory $x_k(t)$ obtained via RST, while the thin black lines are $100$ stochastic realizations of the estimated perturbed version $\tilde{x}_k(t)$. To show the consistency of the RSRT method, velocity, acceleration and jerk trends are also illustrated.

When the number of waypoints is greater than $2$, RST does not provide directly the optimal solution in terms of minimum integral of the snap squared. Therefore, an optimization step looks for the best polynomial $q(t)$ (in this case of degree $Q = 2$) which minimizes the snap, without interfering with the kinematic constraints thanks to the pre-multiplicative factor $a^p(t)$. 

Fig.~\ref{fig:rst_RST_1} compares the trajectories and the respective derivatives trends obtained using three different approaches, RST, RST$_{\text{opt}}$ and minimum-snap piecewise. Despite the fact that up to the jerk the three curves look similar, the evolution of the snap over time provides insightful observations. The snap obtained through RST is continuous by construction but has elongations in the edges which are more pronounced when the number of waypoints increases, as the consequence of Runge's phenomenon. The snap obtained using piecewise polynomials is the minimum in terms of integral but reveals a discontinuity jump which can produce undesired vibrations causing aging and damages in the UAVs structure \cite{Letizia2020_rst}. However, the snap that comes from the optimization of RST is continuous, optimal and in most of cases can be interpreted as a continuous approximation of the piecewise minimum snap, resulting in a good trade-off between complexity and continuity.

\subsection{Scenario 2}
As a second case-study, we consider a trajectory passing through $6$ waypoints. In order to avoid Runge's phenomenon, we choose the points in time (from $0$ to $10$ seconds) according to \eqref{Cheby} and in each of them we impose the following deterministic kinematic constraints

\begin{equation*}
  \dfrac{d^i}{d t^i}x_k(t)=
  \begin{blockarray}{*{6}{c} l}
    \begin{block}{*{6}{>{$\footnotesize}c<{$}} l}
      $t_0$ & $t_1$ & $t_2$ & $t_3$ & $t_4$ & $t_5$ \\
    \end{block}
    \begin{block}{[*{6}{c}]>{$\footnotesize}l<{$}}
      0 & 1 & -1 & 1 & -1 & 0 \bigstrut[t]& space \\
       0 & 0 & 0 & 0 & 0 & 0 & velocity \\
       0 & 0 & 0 & 0 & 0 & 0 & acceleration \\
       0 & 0 & 0 & 0 & 0 & 0 & jerk \\
    \end{block}
  \end{blockarray}
\end{equation*}

A perturbation is later applied in some of the constraints as follows
\begin{equation*}
  \varepsilon_i(t)=
  \begin{blockarray}{*{6}{c} l}
    \begin{block}{*{6}{>{$\footnotesize}c<{$}} l}
      $t_0$ & $t_1$ & $t_2$ & $t_3$ & $t_4$ & $t_5$ \\
    \end{block}
    \begin{block}{[*{6}{c}]>{$\footnotesize}l<{$}}
      0 & \varepsilon_0 & \varepsilon_0 & \varepsilon_0 & \varepsilon_0 & 0 \bigstrut[t]& space \\
       0 & \varepsilon_1 & \varepsilon_1 & \varepsilon_1 & \varepsilon_1 & 0 & velocity \\
       0 & 0 & 0 & 0 & 0 & 0 & acceleration \\
       0 & 0 & 0 & 0 & 0 & 0 & jerk \\
    \end{block}
  \end{blockarray}
\end{equation*}
with $\varepsilon_0 \sim \mathcal{N}(0,0.1^2)$ and $\varepsilon_1 \sim \mathcal{N}(0,0.1^2)$.
\begin{figure}
\includegraphics[scale=0.82]{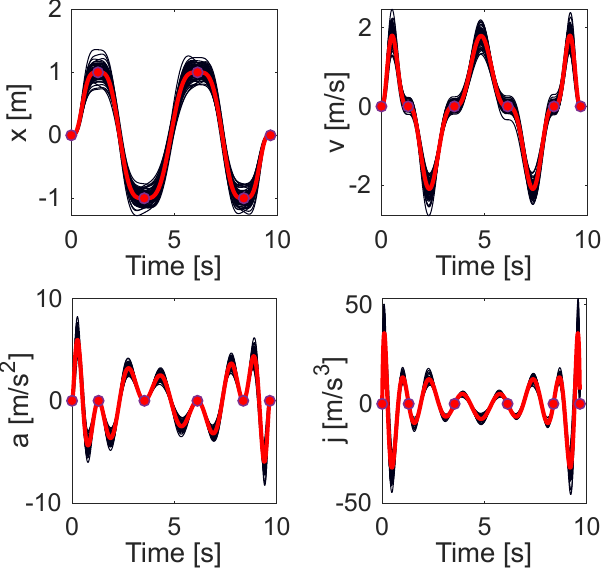}
\centering
\caption{Deterministic trajectory (in red) and $100$ realizations of perturbed trajectories (in black) with $6$ waypoints and Chebyshev nodes in time.}
\label{fig:rst_RSRT_2}
\end{figure}
\begin{figure}
\includegraphics[scale=0.58]{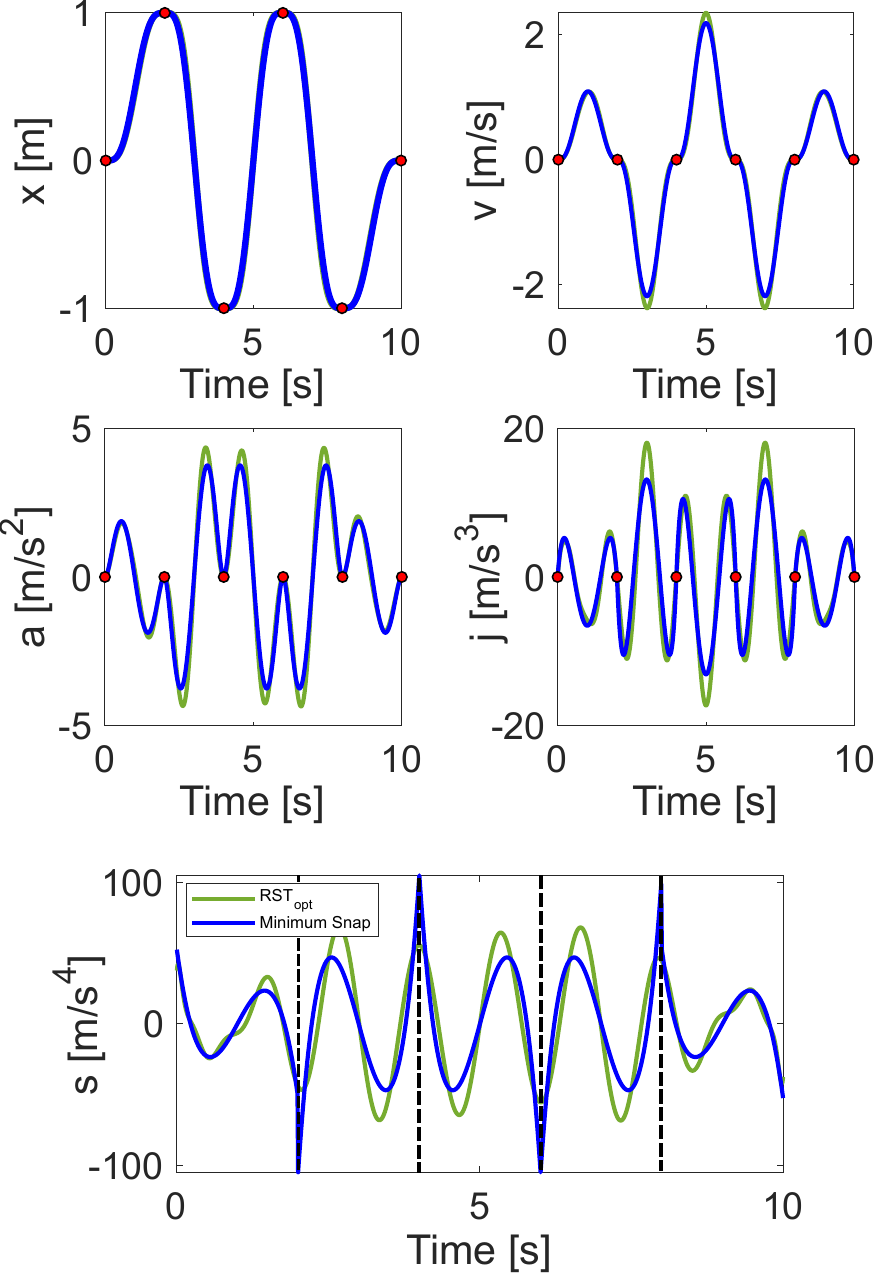}
\centering
\caption{Comparison of trajectories and derivatives generated with RST$_{\text{opt}}$ and minimum-snap algorithm for the case of $6$ waypoints equally spaced in time.}
\label{fig:rst_RST_2}
\end{figure}
Proceeding in the same way aforementioned for the first scenario, brings to a set of feasible trajectories described in Fig.~\ref{fig:rst_RSRT_2}. It is evident that the trajectories accept uncertainties only in the kinematic constraints involving space and velocity, as the initial hypothesis.

To show the generality of the optimization step for the snap's analysis, we decided to consider points in time equally spaced. In such situation, the RST gives an overshoot trajectory which the optimization will try to compensate, so we only compare RST$_{\text{opt}}$ with minimum-snap piecewise. 

As in the previous example, the trajectories and the derivatives up to jerk appear to be similar. Nevertheless, when looking to the snap trends, the snap generated via optimized RST results smooth while the snap generated via minimum-snap piecewise results discontinuous. Smoothness in the trajectories is often a wanted property, especially from a control perspective since real-time control laws to track desired trajectories need to be given by the E-FMS. 

\subsection{RST tracking performance}
In the following, we extend the 1D recursive trajectory generation method to the 3D case. To achieve that, we exploit the flatness property \cite{6324664} of UAVs.
In particular, we conduct a close-to-real simulation of a highly non-linear dynamical system model of a 3D quadrotor \cite{1570447} to assess the capability and effectiveness of the RST. 
A natural choice of the flat outputs is $\gamma(t) = [x(t)\; y(t)\; z(t)\; \psi(t)]^T$. The components $x,y,z$ represent the position of the center of mass of the quadrotor in the inertial reference frame, while $\psi$ is the yaw angle. The trajectory is defined as a smooth curve in the space of flat outputs $\gamma(t): [t_0,t_N] \to \mathbb{R}^3 \times SO(2)$. 

\subsection{Control inputs smoothness}
A simple rest to rest maneuvering with the following kinematic constraints on $x,y$ and $z$ is considered as the flying scenario
\begin{equation*}
  \dfrac{d^i}{d t^i}x_k(t)=
  \begin{blockarray}{*{3}{c} l}
    \begin{block}{*{3}{>{$\footnotesize}c<{$}} l}
      $t_0$ & $t_1$ & $t_2$ & \\
    \end{block}
    \begin{block}{[*{3}{c}]>{$\footnotesize}l<{$}}
      0 & 2 & 4 \bigstrut[t]& space \\
       0 & 0.5 & 0 & velocity \\
       0 & 0 & 0 & acceleration \\
       0 & 0 & 0 & jerk \\
    \end{block}
  \end{blockarray}
\end{equation*}
\begin{equation*}
  \dfrac{d^i}{d t^i}y_k(t)=
  \begin{blockarray}{*{3}{c} l}
    \begin{block}{*{3}{>{$\footnotesize}c<{$}} l}
      $t_0$ & $t_1$ & $t_2$ & \\
    \end{block}
    \begin{block}{[*{3}{c}]>{$\footnotesize}l<{$}}
      0 & 2 & 0 \bigstrut[t]& space \\
       0 & -0.5 & 0 & velocity \\
       0 & 0 & 0 & acceleration \\
       0 & 0 & 0 & jerk \\
    \end{block}
  \end{blockarray}
\end{equation*}
and
\begin{equation*}
  \dfrac{d^i}{d t^i}z_k(t)=
  \begin{blockarray}{*{3}{c} l}
    \begin{block}{*{3}{>{$\footnotesize}c<{$}} l}
      $t_0$ & $t_1$ & $t_2$ & \\
    \end{block}
    \begin{block}{[*{3}{c}]>{$\footnotesize}l<{$}}
      0 & 5 & 0 \bigstrut[t]& space \\
       0 & 0 & 0 & velocity \\
       0 & 0 & 0 & acceleration \\
       0 & 0 & 0 & jerk \\
    \end{block}
  \end{blockarray}
\end{equation*}

\begin{figure}
\includegraphics[scale=0.32]{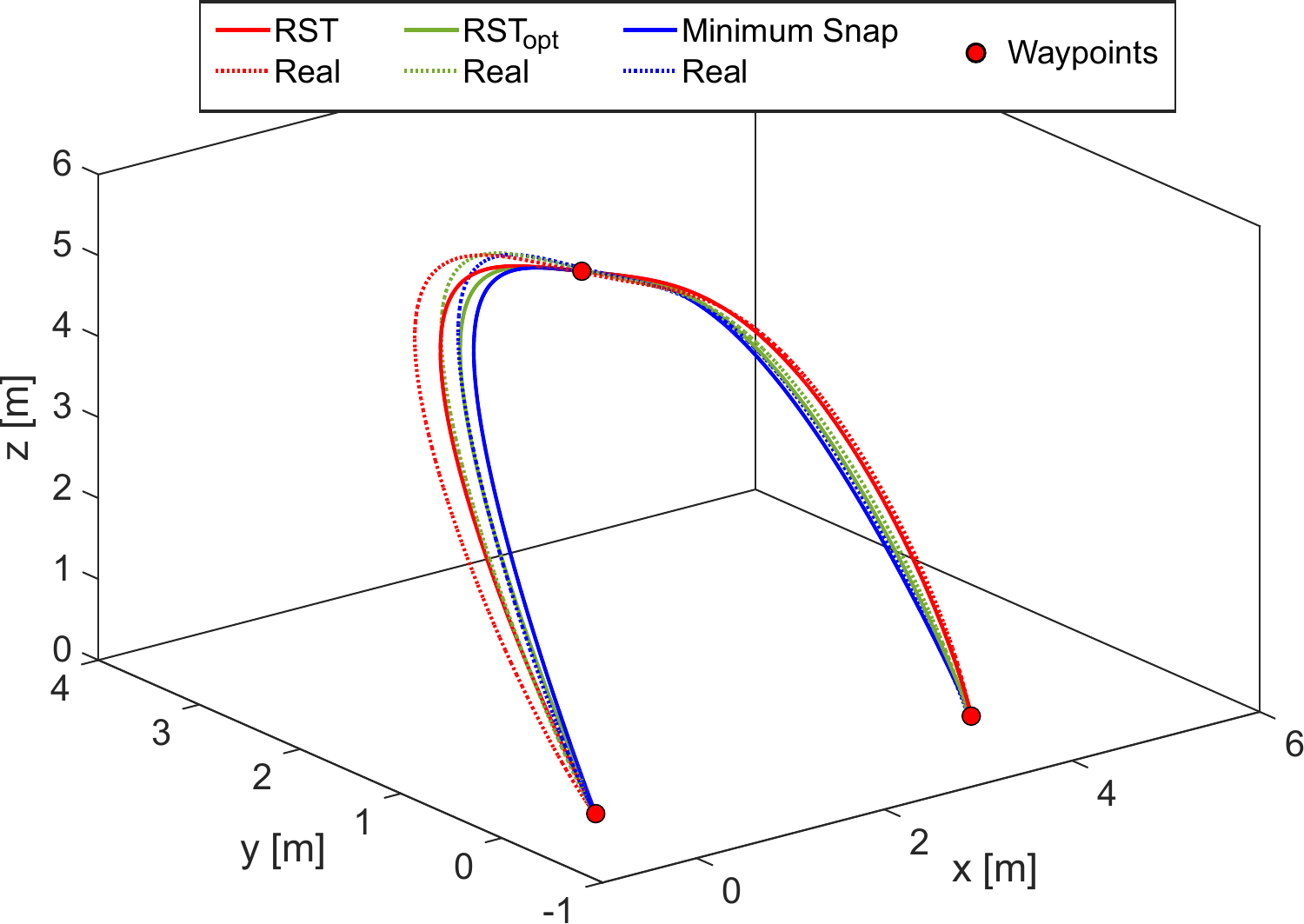}
\centering
\caption{Quadrotor tracking (dashed line) of 3D trajectories (solid line) generated by RST (red), RST$_{\text{opt}}$ (green) and minimum-snap (blue) approaches.}
\label{fig:rst_quadrator_curve}
\end{figure} 

with $\psi(t)\equiv 0$ in $[t_0,t_N]$. 
The sampling time is set to $0.01$ s and the physical parameters of the quadrotor helicopter are taken according to \cite{1570447}. It should be noted that the non-linear dynamical system model of the quadrotor allows studying and testing the robustness of the methodologies applied. In particular, Fig.~\ref{fig:rst_quadrator_curve} shows the tracking of the 3D desired trajectories using the proposed approaches: RST, RST$_{\text{opt}}$ and minimum-snap.

\begin{figure}
\includegraphics[scale=0.35]{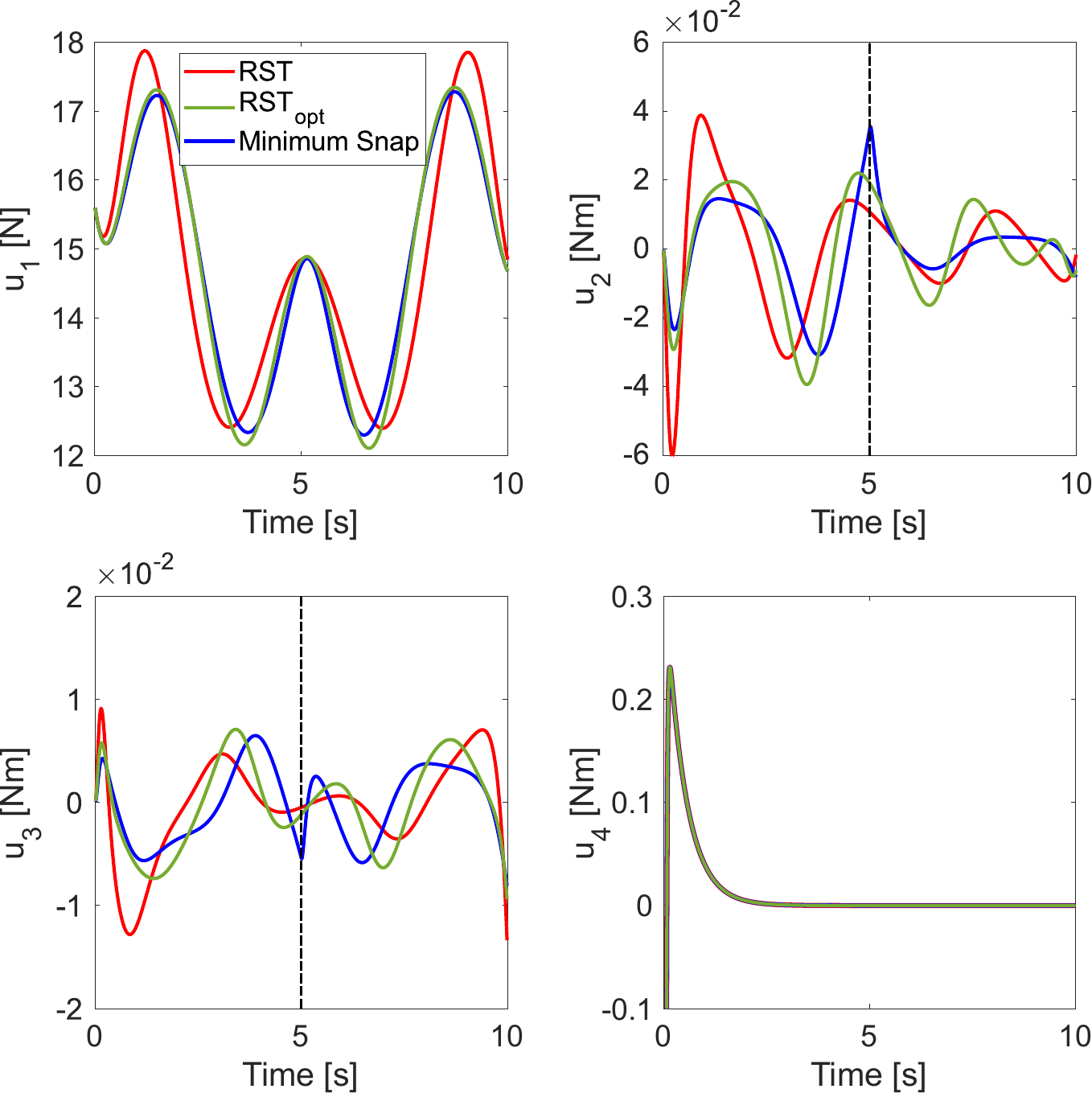}
\centering
\caption{Evolution of control inputs of the system $u_1, u_2, u_3$ and $u_4$ over time for three different approach: RST, RST$_{\text{opt}}$ and minimum-snap.}
\label{fig:rst_control_inputs}
\end{figure} 

In all the cases studied, the controller succeeds in following the desired path. However, the main advantage of adopting a smooth path as RST is depicted in Fig.~\ref{fig:rst_control_inputs}. The RST and RST$_{\text{opt}}$ approaches perform better than the minimum-snap one in terms of smoothness of the control signals. This is a direct consequence of the fact that the control inputs ($u_2$ and $u_3$) responsible for the roll and pitch angle, depend on the $4$-th derivative of the position, namely the snap. Therefore, peaks and discontinuities in high order derivatives can have an impact on the continuity of the control inputs, causing aging and damaging of the quadrotor mechanical structure. 

The following section discusses this problem in detail and provides a possible solution as a trade-off between complexity and discontinuities. 
\section{Comments and further discussion}
\label{subsec:rst_comment}
\subsection{Accuracy vs discontinuities}
\begin{figure}
\includegraphics[scale=0.50]{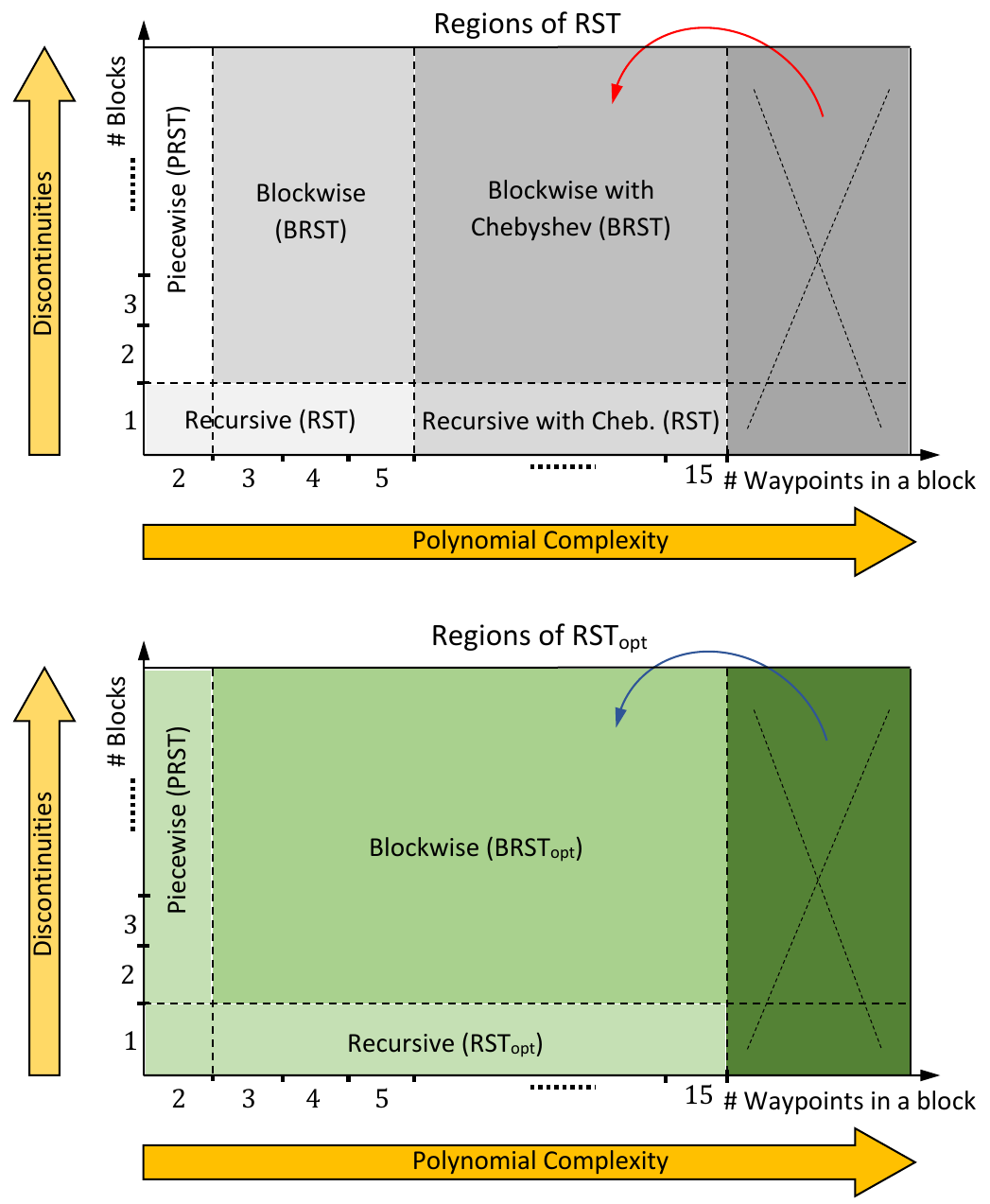}
\centering
\vspace{-0.1em}
\caption{Complexity and smoothness trade-off regions of RST algorithm depending on number of blocks and waypoints in each block.}
\vspace{-1.0em}
\label{fig:rst_regions}
\end{figure}
In this section, we present a useful rule of thumb for applying the RST and the RST$_{\text{opt}}$ algorithm according to the number of blocks the user considers and the number of waypoints inside each of these blocks. 
Indeed, Fig.~\ref{fig:rst_regions} and Fig.~\ref{fig:rst_example1} identify regions, suggesting which algorithm to apply according to the number of waypoints and blocks.
So far, most of path planning research has been focused on piecewise polynomial approaches, sticking to the first left strip in Fig.~\ref{fig:rst_regions}. There are, indeed, cases where numerical polynomial complexity is the main concern and in such situations a standard piecewise polynomial approach is sufficient. Nevertheless, we propose to use the PRST algorithm to compute the piecewise polynomial trajectory (e.g. RST with $2$ points in each block). The reason for this choice comes from the intrinsic optimality of RST when the number of waypoints in each block is $2$, as proved in Lemma \ref{lemma:rst_Lemma5}.

An unwanted side effect of the piecewise choice is the presence of discontinuities in the $p$-th derivative's interface. To reduce and eventually remove them, a good compromise is the BRST algorithm which balances polynomial complexity and discontinuity issues. For instance, instead of using $15$ piecewise polynomials for a trajectory consisting of $16$ points, one could choose to split the trajectory generation in $5$ blocks of $4$ points each, leading to a reduction of discontinuities, from $14$ to $4$ matching interfaces, while at the same time keeping a low level of complexity in polynomials. See Fig.~\ref{fig:rst_example1} for an example of such blockwise trajectory.
Whenever the complexity is not the issue to consider at first, one could build a smooth trajectory which passes through all the waypoints. In such a case RST is one possible way to proceed if no optimal condition is required, otherwise RST$_{\text{opt}}$ provides the optimality at expenses of a higher computational cost. 
Moreover, we empirically found that smooth trajectories with more than $15$ waypoints in the same block are unstable, therefore we suggest to always split them into at least $2$ consecutive blocks.
Fig.~\ref{fig:rst_regions} and \ref{fig:rst_example1} illustrate these concepts and guide the user to select the proper algorithm according to given specifics.

\begin{figure}
\includegraphics[scale=0.37]{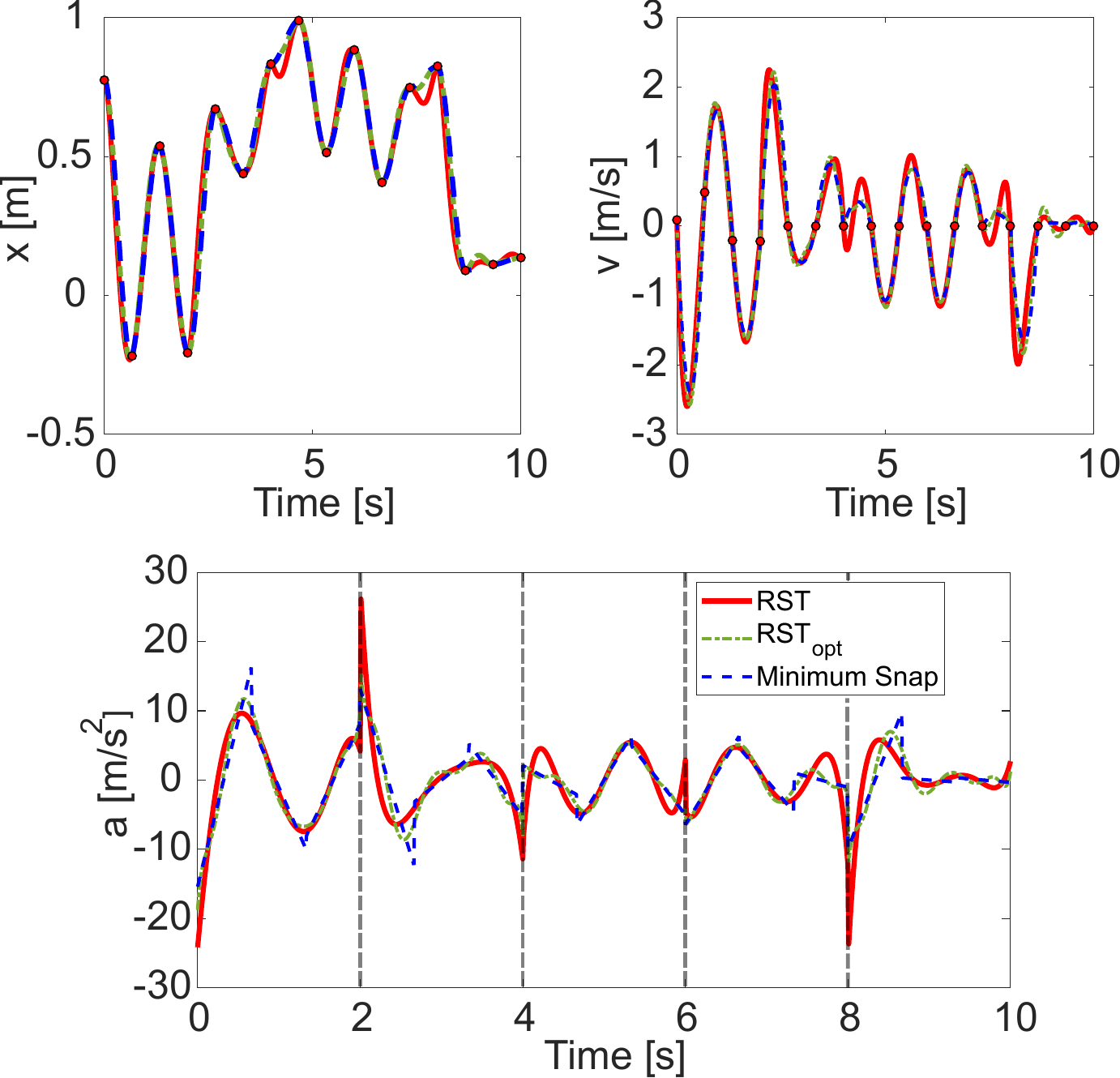}
\centering
\vspace{-0.1em}
\caption{Example of blockwise trajectory (BRST, BRST$_{\text{opt}}$ and minimum-snap) that minimizes the integral of the acceleration squared.}
\vspace{-1.0em}
\label{fig:rst_example1}
\end{figure}

\subsection{Memory requirements}
\label{subsec:rst_time}
The number of coefficients of the RST polynomial trajectory to store is $(k+1)(N+1)$ (See Corollary \ref{corollary:rst_corollary1}) where $N+1$ is the number of waypoints and $k$ is the last kinematic constraint. Extending the results also to BRST leads to $M$ polynomials of degree $(k+1)(\frac{N+M}{M})-1$. Thus, the number of coefficients to store is equal to $(k+1)(N+M)$. The ratio between the number of coefficients to store for BRST and RST is equal to $1+\frac{M-1}{N+1}$ which means that BRST requires $100\cdot \frac{M-1}{N+1}$ percent more memory than RST. As an extreme case, PRST is the technique which requires the highest memory requirements since it stores $2N(k+1)$ coefficients, almost twice the memory required by RST.

\subsection{Computational complexity}
\label{subsec:rst_complexity}
To evaluate the computational complexity, it is convenient to segment the RST algorithm (See Alg. \ref{alg:RST}) in $3$ parts: the recursive formula, which computes the control points $s_i(t_j)$ as in \eqref{eq:rst_recursive} (line 5 of Alg. \ref{alg:RST}), the interpolation phase with Lagrange polynomials (line 7 of Alg. \ref{alg:RST}), and the generation of the $i$-partial trajectory (line 8 of Alg. \ref{alg:RST}).
The recursive formula provides $N+1$ control points $s_i(t_j)$ at the $i$-th iteration, with $i=0,1,\dots,k$ and $j=0,1,\dots,N$. In particular, to compute a single value $s_i(t_j)$ it needs a number of operations that goes as $\mathcal{O}(i\cdot \text{deg}(x_{i-1}(t))) \sim \mathcal{O}(i^2 \cdot N)$, where the first $i$ contribution comes from the $i$ derivatives of the $(i-1)$-partial trajectory. Hence, for $N+1$ points the complexity of the recursive formula is $N\mathcal{O}(i^2 \cdot N)$. The complexity of the Lagrange interpolation technique is $\mathcal{O}((N+1)^2) \sim \mathcal{O}(N^2)$. Lastly, the generation of the $i$-partial trajectory involves the product between $a^i(t)\cdot s_i(t)$, which requires a number of operations that grow as $\mathcal{O}((N+1)\cdot i \cdot (N+1)) \sim \mathcal{O}(i \cdot N^2)$. Since the number of iterations are $k+1$, the overall time complexity $T(k,N)$ reads as follows
\begin{align}
T(k,N) &= \sum_{i=0}^{k}{N\mathcal{O}(i^2 \cdot N)+\mathcal{O}(N^2)+\mathcal{O}(i \cdot N^2)} \nonumber \\
& \sim \mathcal{O}(k^3 \cdot N^2)+\mathcal{O}(k\cdot N^2)+\mathcal{O}(k^2 \cdot N^2) \nonumber \\
& \sim \mathcal{O}(k^3 \cdot N^2).
\end{align}
Although the estimated complexity is a rough approximation, it is interesting to highlight the following fact: the minimum degree polynomial trajectory $x_k(t)$ could have been derived in the classical approach just by evaluating the polynomial and its derivatives in the time stamps and by solving a system of linear equations. Alternative in a matrix form, $b = \mathbf{X}\cdot a$ where $\mathbf{X}$ is a $(k+1)(N+1)\times (k+1)(N+1)$ square matrix, badly conditioned from a numerical point of view, $a$ is the unknown vector of the polynomial coefficients and $b$ the vector of the kinematic constraints. To find the polynomial trajectory, thus, the coefficients, the matrix $\mathbf{X}$ needs to be inverted (when numerically possible). However, the matrix inversion operation involves a complexity of order $\mathcal{O}(k^3 \cdot N^3)$, that is higher than the RST complexity. Therefore, RST is not only numerically stable since no matrix inversion is required, but it is also faster than the classical interpolation approach (INV). Fig. \ref{fig:rst_time_complexity} illustrates the computational complexity advantage of RST over the classic interpolation method.

\begin{figure}
\includegraphics[scale=0.205]{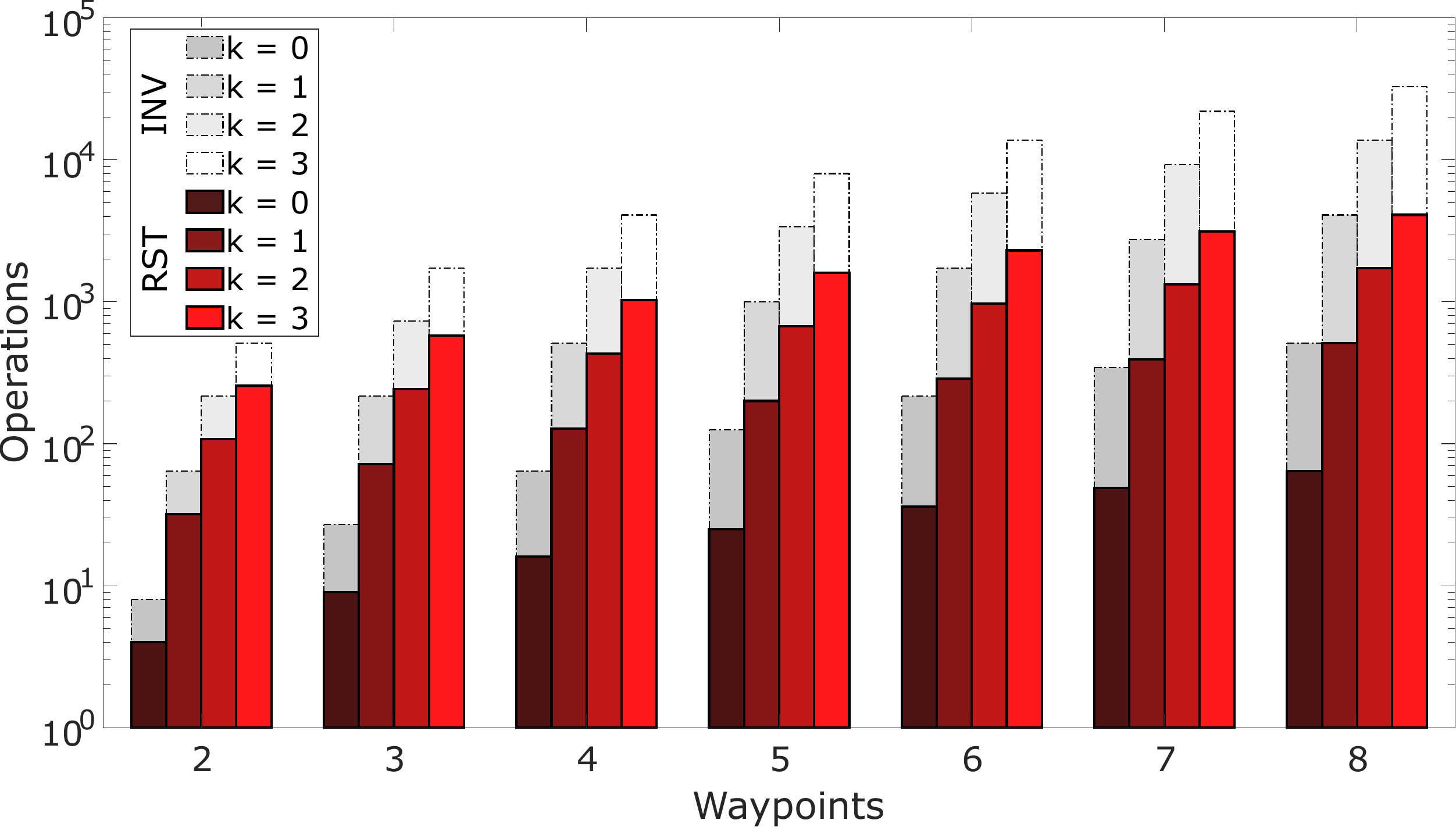}
\centering
\vspace{-0.1em}
\caption{Computational complexity comparison between RST and the classic interpolation approach through matrix inversion (INV).}
\vspace{-1.0em}
\label{fig:rst_time_complexity}
\end{figure}

\subsection{Extension of the proposed framework}
We presented the RST algorithm and extensions to block (BRST) and piecewise (PRST) approaches. Initial assumptions always considered time intervals with the same length or points in time following \eqref{Cheby}. The case that considers random initially located points in time can been studied under the optimization framework RST$_{\text{opt}}$. To tackle the oscillation problem (see Fig. \ref{fig:rst_Runge}), typical of high-order polynomial interpolation, a possible solution without involving the optimization step could either pass through spline interpolation or different interpolating polynomials such as barycentric Lagrange polynomials \cite{Berrut} or Newton ones.

Lastly and perhaps more fascinating, is the idea of mixing and eventually replacing polynomial trajectories with other basis functions. All the mathematical formulation and most of the derivation actually transcend the polynomial assumption. The only point in which this hypothesis plays a role is in the $h$-th derivative step (See Lemma \ref{lemma:rst_Lemma2}). The outcome of a further investigation is discussed in Sec. \ref{sec:rst_rrst}

\begin{figure}
\includegraphics[scale=0.50]{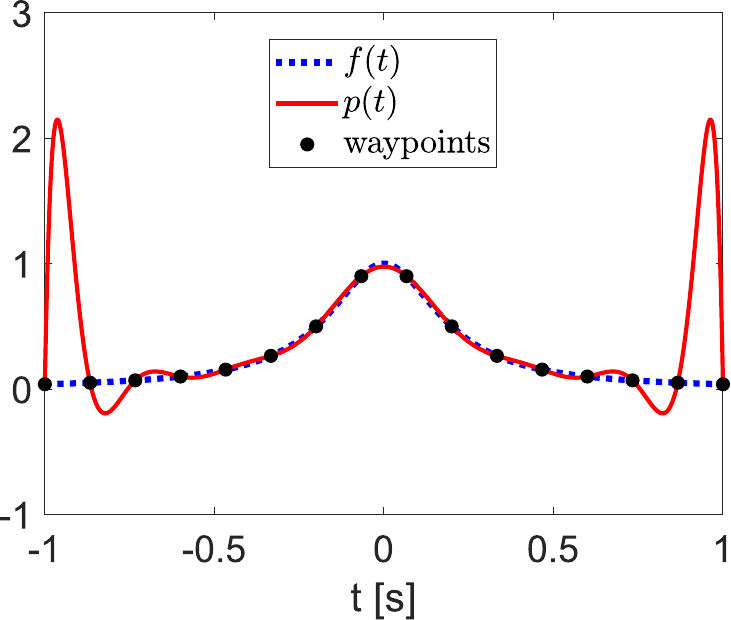}
\centering
\caption{Illustration of Runge's phenomenon: the Runge function (blue dashed line) is approximated with a $15$-th order polynomial (red solid line) that interpolates $16$ equally spaced nodes.}
\label{fig:rst_Runge}
\end{figure}

\section{Rational interpolation} 
\sectionmark{RRST}
\label{sec:rst_rrst}
Polynomial interpolation is in general a simple and fast process to implement. Nevertheless, when the degree of the interpolant function is high, oscillation at the edges may occur as mentioned before. For this reason, we consider a different basis function which may take advantage of the simplicity of polynomials but also provide more flexibility and degrees of freedom to tackle Runge's phenomenon.

\subsection{Rational recursive smooth trajectory}
We identify and propose a new basis as the rational basis function
\begin{equation}
R_{n,d}(t) = \frac{N(t)}{D(t)},
\end{equation}
where $N(t)$ is the numerator, a polynomial of degree $n$, and $D(t)$ is the denominator, a polynomial of degree $d$.
Such choice allows us to exploit some of the polynomial properties for both numerator and denominator but most importantly, enables the development of a new algorithm, referred to as rational recursive smooth trajectory (RRST). To find the coefficients of both numerator and denominator, the idea is to pick the denominator $D(t)$ and use RST to find the coefficients of the numerator $N(t)$. Intuitively, the new kinematic constraints for building $N(t)$ are a weighted sum of the kinematic constraints $\frac{d^i}{dt^i}f_k(t)\biggr|_{t=t_j}$ (given) and the kinematic constraints $\frac{d^i}{dt^i}D(t)\biggr|_{t=t_j}$ (designed as input). The following Lemma provides the mathematical formulation for the RRST.

\begin{lemma}
\label{lemma:rrst_Lemma1}
Let $t_j$ be a point in time, for $j=0,1,\dots, N$, such that $\frac{d^i}{dt^i}f_k(t)\bigr|_{t=t_j}$ is the associated given kinematic constraint, for $i=0,1,\dots, k$. Let $N(t)$ and $D(t)$ be polynomials with $D$ given of degree $d$. If $f_k(t)$ is a rational function defined as
\begin{equation}
f_k(t) = R_{n,d}(t) = \frac{N(t)}{D(t)},
\end{equation}
with $n=(k+1)(N+1)-1$, then the coefficients of $N(t)$ can be obtained with RST, in particular its associated kinematic constraint has expression
\begin{equation}
\frac{d^i}{dt^i}N(t)\biggr|_{t=t_j} = \sum_{l=0}^{i}{\binom{i}{l}\biggr(\frac{d^l}{dt^l}f_k(t)\bigr|_{t=t_j}\biggr) \cdot \biggr(\frac{d^{i-l}}{dt^{i-l}}D(t)\bigr|_{t=t_j}}\biggr).
\end{equation}
\end{lemma}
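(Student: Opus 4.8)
The plan is to prove that if $f_k(t) = N(t)/D(t)$ with $D$ a fixed polynomial of degree $d$, then the kinematic constraints on $N(t)$ are obtained by the Leibniz rule applied to the identity $N(t) = f_k(t) \cdot D(t)$. The essential observation is that $N$ is simply the product of two functions whose derivatives at the nodes $t_j$ are known: the values $\frac{d^l}{dt^l}f_k(t)\bigr|_{t=t_j}$ are the prescribed constraints (the data we want $f_k$ to satisfy), and the values $\frac{d^{i-l}}{dt^{i-l}}D(t)\bigr|_{t=t_j}$ are computable directly since $D$ is given by the user.

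First I would write $N(t) = f_k(t)\, D(t)$ and apply the general Leibniz rule for the $i$-th derivative of a product,
\begin{equation}
\frac{d^i}{dt^i}\bigl(f_k(t) D(t)\bigr) = \sum_{l=0}^{i}{\binom{i}{l}\frac{d^l}{dt^l}f_k(t) \cdot \frac{d^{i-l}}{dt^{i-l}}D(t)}.
\end{equation}
Evaluating both sides at $t = t_j$ gives exactly the claimed expression for $\frac{d^i}{dt^i}N(t)\bigr|_{t=t_j}$. This establishes that each of the $k+1$ derivative constraints on $N$ at each of the $N+1$ nodes is a finite, explicitly computable number: a binomial-weighted sum of the prescribed $f_k$-constraints and the known $D$-derivatives.

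Next I would argue that with these $(k+1)(N+1)$ constraints now fixed for the polynomial $N(t)$, Theorem~\ref{theorem:rst_theorem1} applies verbatim: it recursively produces a polynomial $N(t)$ fulfilling all of them, and by Corollary~\ref{corollary:rst_corollary1} the minimal degree is $n = (k+1)(N+1)-1$, consistent with the hypothesis. The only point requiring care is to confirm that $D(t_j) \neq 0$ at every node, so that the constraints $\frac{d^l}{dt^l}f_k(t)\bigr|_{t=t_j}$ are well defined and the rational function $f_k = N/D$ is nonsingular on the data; I would state this as a standing assumption (the denominator is chosen to have no zeros at the interpolation nodes).

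I expect the main obstacle to be conceptual rather than computational: the derivation itself is a one-line application of Leibniz's rule, so the work lies in cleanly justifying \emph{why} this reduction is legitimate — namely that prescribing constraints on $N$ is equivalent to prescribing constraints on $f_k$ once $D$ is chosen, and that no consistency or well-posedness issue is hidden in the inversion back from $N$ to $f_k = N/D$. I would make explicit that the RRST procedure treats $D$ as a free design choice (adding degrees of freedom to mitigate Runge's phenomenon) and that, given $D$ with $D(t_j)\neq 0$, the map between the $f_k$-constraints and the $N$-constraints is the invertible linear (in fact triangular, via the Leibniz weights) transformation above, so RST on $N$ reconstructs a rational interpolant with the originally desired kinematic behaviour.
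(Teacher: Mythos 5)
Your proposal is correct and follows essentially the same route as the paper: the paper's proof is nothing more than an inline derivation of the general Leibniz rule for $N(t)=R(t)\,D(t)$ by induction on $i$ (using Pascal's identity), whereas you simply cite the Leibniz rule and evaluate at $t=t_j$, which is the same mathematical content. Your additional remarks — requiring $D(t_j)\neq 0$ at the nodes and noting that the Leibniz map between the $f_k$-constraints and the $N$-constraints is triangular with diagonal entries $D(t_j)$, hence invertible — are sound and in fact make explicit a well-posedness point that the paper's proof leaves implicit (deferring it to an informal discussion of pole placement after the lemma).
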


\begin{proof}
For simplicity of notation, the rational function $R_{n,d}(t)$ will be denoted with $R(t)$.
We proceed by induction on the kinematic constraint. Consider the case when $i=0$, then
\begin{equation}
N(t_j) = R(t_j)\cdot D(t_j)
\end{equation}
represents the value that $N(t)$ needs to assume at the time $t_j$. For the case $i=1$
\begin{equation}
\frac{d}{dt}N(t)\biggr|_{t=t_j} = \frac{d}{dt} \biggl(R(t)\cdot D(t)\biggr)\biggr|_{t=t_j}
\end{equation}
which is equal to
\begin{equation}
\frac{d}{dt}N(t)\biggr|_{t=t_j} = \binom{1}{0}R(t_j) \cdot \biggr( \frac{d}{dt}D(t)\bigr|_{t=t_j}\biggr) + \binom{1}{1} \biggr( \frac{d}{dt}R(t)\bigr|_{t=t_j}\biggr) \cdot D(t_j) .
\end{equation}
Suppose that the statement of the lemma is true for the case $i$, which means that
\begin{equation}
\frac{d^i}{dt^i}N(t)\biggr|_{t=t_j} = \sum_{l=0}^{i}{\binom{i}{l}\biggr(\frac{d^l}{dt^l}R(t)\bigr|_{t=t_j}\biggr) \cdot \biggr(\frac{d^{i-l}}{dt^{i-l}}D(t)\bigr|_{t=t_j}}\biggr).
\end{equation}
Then, it is true also for the case $i+1$. Indeed
\begin{align}
\frac{d^{i+1}}{dt^{i+1}}N(t)\biggr|_{t=t_j} = &\; 
\frac{d}{dt}\sum_{l=0}^{i}{ \binom{i}{l}\biggr(\frac{d^l}{dt^l}R(t)\bigr|_{t=t_j}\biggr) \cdot \biggr(\frac{d^{i-l}}{dt^{i-1}}D(t)\bigr|_{t=t_j}\biggr)} \nonumber \\
= &\; \sum_{l=0}^{i}{\binom{i}{l} \frac{d}{dt}\Biggl[ \biggr(\frac{d^l}{dt^l}R(t)\bigr|_{t=t_j}\biggr) \cdot \biggr(\frac{d^{i-l}}{dt^{i-1}}D(t)\bigr|_{t=t_j}\biggr)\Biggr] }  \nonumber \\ 
= &\; \sum_{l=0}^{i}{\binom{i}{l} \biggr(\frac{d^{l+1}}{dt^{l+1}}R(t)\bigr|_{t=t_j}\biggr) \cdot \biggr(\frac{d^{i-l}}{dt^{i-1}}D(t)\bigr|_{t=t_j}}\biggr) \nonumber \\
+ &\; \sum_{l=0}^{i}{\binom{i}{l} \biggr(\frac{d^{l}}{dt^{l}}R(t)\bigr|_{t=t_j}\biggr) \cdot \biggr(\frac{d^{i+1-l}}{dt^{i+1-l}}D(t)\bigr|_{t=t_j}\biggr)}
\end{align}
where we used the linearity of the differential operator and the product rule. With a change of variable in the first term of the RHS, $h=l+1$, it follows that
\begin{align}
\frac{d^{i+1}}{dt^{i+1}}N(t)\biggr|_{t=t_j} = &\; 
   \sum_{h=1}^{i+1}{\binom{i}{h-1} \biggr(\frac{d^{h}}{dt^{h}}R(t)\bigr|_{t=t_j}\biggr) \cdot \biggr(\frac{d^{i+1-h}}{dt^{i+1-h}}D(t)\bigr|_{t=t_j}}\biggr) \nonumber \\
+ &\; \sum_{l=0}^{i}{\binom{i}{l} \biggr(\frac{d^{l}}{dt^{l}}R(t)\bigr|_{t=t_j}\biggr) \cdot \biggr(\frac{d^{i+1-l}}{dt^{i+1-l}}D(t)\bigr|_{t=t_j}}\biggr) \nonumber \\ 
= &\; \sum_{l=0}^{i+1}{\binom{i+1}{l}\biggr(\frac{d^l}{dt^l}R(t)\bigr|_{t=t_j}\biggr) \cdot \biggr(\frac{d^{i+1-l}}{dt^{i+1-l}}D(t)\bigr|_{t=t_j}}\biggr)
\end{align}
where we used the Pascal's identity
\begin{equation}
\binom{i+1}{l} = \binom{i}{l-1} + \binom{i}{l}.
\end{equation}
Hence the result is true for $i+1$ and by induction is true for all positive integers. From Corollary \ref{corollary:rst_corollary1}, the minimum degree $n$ of $N(t)$ is $(k+1)(N+1)-1$.
\qedhere
\end{proof}

\begin{figure}[b]
\includegraphics[scale=0.25]{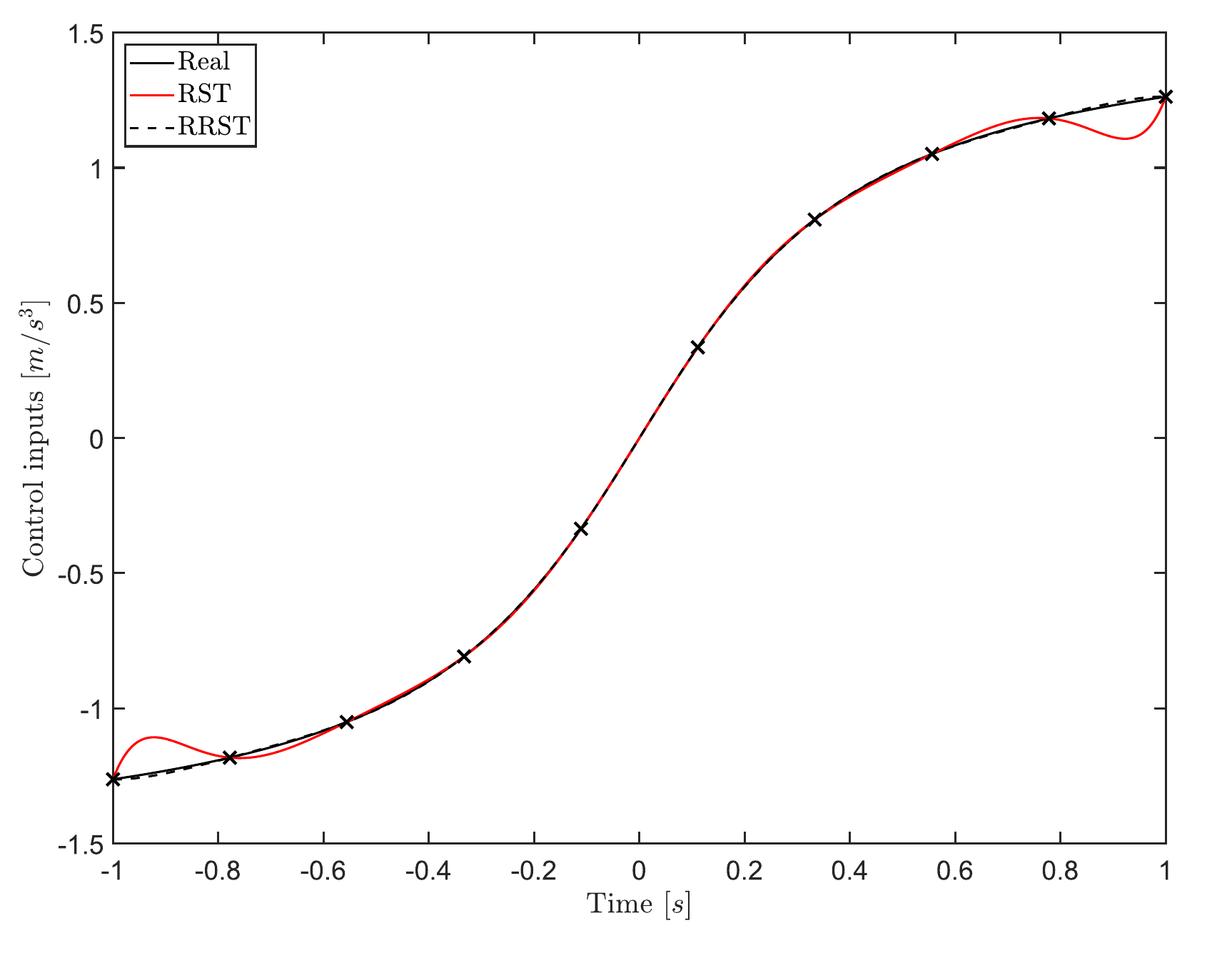}
      \centering
      \caption{Comparison between polynomial (RST) and rational (RRST) interpolation of $10$ waypoints, obtained as samples of the analytic control input $\text{arctan}(\pi t)$.}
      \label{fig:rst_arctan}
\end{figure}

\begin{algorithm}
\caption{Rational recursive smooth trajectory (RRST)}
\label{alg:rst_RRST}
\begin{algorithmic}[1]
\Inputs{$N+1$ points in time $t_0<t_1<\dots<t_N$; \\ Number of derivatives $k$ to fulfill; \\ Kin. constr.
 $\frac{d^i}{dt^i}f_k(t)\bigr|_{t=t_0}, \dots, \frac{d^i}{dt^i}f_k(t)\bigr|_{t=t_N}$; \\ Denominator $D(t)$ of degree $d$. \\}
\Initialize{Kin. constr. $\frac{d^i}{dt^i}D(t)\bigr|_{t=t_0}, \dots, \frac{d^i}{dt^i}D(t)\bigr|_{t=t_N}$;}
\For{$i=0$ to $k$}
	\For{$j=0$ to $N$}
		\State $\frac{d^i}{dt^i}N(t)\biggr|_{t=t_j} =$
          \State $\sum_{l=0}^{i}{\binom{i}{l}\biggr(\frac{d^l}{dt^l}f_k(t)\bigr|_{t=t_j}\biggr) \cdot \biggr(\frac{d^{i-l}}{dt^{i-l}}D(t)\bigr|_{t=t_j}}\biggr)$;
	\EndFor
\EndFor
\State Get $N(t)$ with RST given the kinematic constraints $\frac{d^i}{dt^i}N(t)\biggr|_{t=t_j}$ as input;
\State $f_k(t)=\frac{N(t)}{D(t)}$.
\end{algorithmic}
\end{algorithm}

Lemma \ref{lemma:rrst_Lemma1} provides the general expression of the kinematic constraints associated to $N(t)$, however it assumes that the denominator $D(t)$ is given. The choice of the denominator remains an open question although some considerations can be made. The denominator represents a whole set of degree of freedoms and therefore the choice of the coefficients should in principle consider some strategies. For example, a fundamental aspect is the position of the roots inside the interval $[t_0, \hspace{0.2em} t_N]$. Indeed, if one real pole (denominator root) falls inside the desired interval, it may cause discontinuities in the rational interpolant. To avoid this, a possible strategy relies on the selection of multiple complex conjugate roots. Further studies have to be made in the roots locus analysis for such rational function but they go out of the scope of this section therefore we postpone these questions to future work. Finally, it is interesting to notice that if the denominator $D(t)$ is constant, we lead back to the classical polynomial interpolation via RST, therefore we can tract RRST as a rational basis extension of the RST algorithm. The implementation of the RRST algorithm is we reported in the pseudo code of Alg. \ref{alg:rst_RRST}.

To show how the RRST tackles the oscillation problem, we report in Fig.~\ref{fig:rst_arctan} an example of function approximation with polynomials (RST) and rational functions (RRST). In particular, we select as function to interpolate $f_k(t)= \text{arctan}(\pi t)$, with $t\in [-1,1]$. Fig.~\ref{fig:rst_arctan} illustrates the resulting interpolants when the number of waypoints is set to $10$ and no kinematic constraints (from velocity on) are imposed. The denominator of the rational function is set to $D(t)=t^2+0.1$ and for such choice, RRST shows to perform better than the polynomial interpolant at the edges.

In conclusion, we extended the RST algorithm to rational functions. The algorithm can effectively generate an analytic expression that approximates control inputs, for which no closed-form solutions are in general attainable. More details are offered in \cite{9525383}.

\section{Summary}
\label{subsec:rst_con}
This chapter investigated the problem of trajectory generation for unmanned vehicles, and in particular for UAVs. The RST algorithm has been mathematically derived and implemented with the objective to find a path satisfying arbitrary conditions, specified in terms of position, velocity, acceleration, etc. An analysis of the perturbation in the kinematic constraints has been carried out with the result that the uncertainty in the constraint can be translated into uncertainty in the polynomial coefficients. An optimization framework has been developed in order to improve the trajectory generated with RST, which in general is not optimal by itself. 

Two examples have been presented with the RST methodology and compared to a common benchmark in the field such as the minimum-snap piecewise polynomial trajectory algorithm. It should be noted that in the latter approach, a joint optimization process (quadratic programming) is needed to fulfill the kinematic constraints. Our RST approach eliminates the optimization step and directly provides an analytic smooth trajectory. Furthermore, whenever the number of waypoints increases, we extended our methodology to a blockwise framework (BRST) where interfaces are intrinsically jointly matched. Finally, when the number of waypoints in a single block is equal to $2$, BRST converges to the piecewise one (PRST) which generates optimal trajectories in terms of minimum-snap.  

Results about the application to a UAV structure proved the capability and effectiveness of RST (and BRST, PRST extensions) to outperform the piecewise approach in terms of smoothness. The results show that RST offers a new direction in the domain of trajectory generation and path planning.

\chapter{Conclusion}
\label{sec:conclusion}
This dissertation proposes novel ML and DL techniques to advance the field of physical layer communications. The research contributions encompass generative models, coding and decoding neural architectures, mutual information estimation and capacity learning frameworks.
In the following we examine the key conclusions and future implications of this work.

A fundamental theme centers on the segmentation of complex statistical problems into more tractable, understandable components using mathematical and information theoretical tools.
Ch.~\ref{sec:copulas} proposed a solution with a segmented GAN architecture, leveraging copula theory to separate dependence modeling from marginal distribution estimation. This fosters explainable ML and effective generation of data with complex statistical relationships. It also offers a simple framework to estimate any data distribution with NNs.

Ch.~\ref{sec:medium} presented a powerful DL methodology to model and produce synthetic channel and noise traces for any communication medium. This is achieved via pre-processing steps (with latent space representations or spectrograms), GAN-based generation, and suitable post-processing.

To effectively rethink building blocks of a communication chain as DL components, MI emerges as a cornerstone for several contributions in this thesis:
\begin{itemize}

\item Ch.~\ref{sec:decoder} introduced MIND, a MI-based decoder that attains performance of the genie MAP decoder, even without precise knowledge of the channel. The envisioned neural decoder also returns precise estimates of the a-posteriori PDFs;

\item Ch.~\ref{sec:autoencoders} discussed the limitations in traditional cross-entropy loss functions used with AE-based communication systems. We leveraged MI as a fundamental regularization term for training AEs, leading to optimal coding schemes and to the introduction of algorithms and architectures for capacity-achieving codes;

\item Ch.~\ref{sec:mi_estimators} introduced $f$-DIME, a novel class of discriminative MI estimators based on the variational representation of $f$-divergence. We proved that the variance of the estimators is the lowest in literature, and we proposed a derangement training strategy for efficient sampling. We finally evaluated $f$-DIME performance and demonstrated its excellent bias/variance trade-off. 

\item Ch.~\ref{sec:cortical} unveiled CORTICAL, a novel framework for learning capacity-achieving distributions in arbitrary discrete-time continuous memoryless channels. It employs a two-network cooperative game, validating its efficacy in non-Shannon channel settings. CORTICAL represents a powerful new DL tool for intricate channel capacity analysis. 
\end{itemize}

Ch.~\ref{sec:plc} demonstrated the value of DL in PLC systems:
Sec.~\ref{sec:plc_entanglement} and ~\ref{sec:plc_nakagami} showcase the potential for intelligent impedance detection and capacity estimation in PLC channels with non-AWGN noise (leveraging the CORTICAL framework).
Sec.~\ref{sec:plc_detection} highlights the successful application of NNs for accurate anomaly detection within PLNs.

Ch.~\ref{sec:rst} presented data interpolation techniques under given constraints exploiting polynomials and rational functions as basis. It reinforces the notion that a robust understanding of both theoretical principles and practical tools is essential for driving innovation.

\section*{Limitations}
DL contributions can be commonly grouped into three macro-areas such as: a) data collection and preparation; b) loss function design; c) neural architecture design.
Although the contribution of this thesis mostly centered around b), we discussed and provided insights for the other two areas as well. For instance, in Sec.~\ref{sec:medium_channelsynthesis} we discussed pre-processing and data transformation techniques to prepare generative models and in Sec.~\ref{subsec:mi_derangements} we showed how  training improves if we use a derangement sampling strategy. We also designed the deployed NN architectures and made several observations on the type of activation functions to be used based on the desired output. However, several limitations remain:
\begin{itemize}
    \item availability and quality of data, as our generative models are built upon datasets collected during measurement campaigns where we assumed time-invariance and stationarity as hypothesis; 
    \item during our research, we neither focused on studying architectural improvements nor on adapting state-of-the-art models for our purpose. Thus, we expect that many of our results can be improved over the next couple of years just by utilizing more advances neural architectures;
    \item for tasks such as MI estimation and capacity learning, it is difficult to investigate training converge properties, especially at high SNRs. Therefore, our theoretical derivations always assume the existence of an optimal solution, which may be different from the one found during training;
    \item we often neglected time-dependence on the input-output channel pairs. Nevertheless, the proposed formulation could be adapted and extended to such case with different architectures;
    \item as a more general limitation, while DL holds promise for the future of communications engineering, its widespread integration is not imminent. To achieve implementation in industry standards, DL models must consistently outperform existing solutions in terms of accuracy, efficiency, and robustness.
\end{itemize}

\section*{Future directions}
This thesis opens multiple avenues for future research, such as: 
\begin{itemize}
    \item an holistic system design, since the developed techniques could lead to a completely data-driven approach for communication system design, where channel modeling, coding/decoding, and capacity learning all synergistically utilize novel DL methods;
    \item data modeling and generation pipelines where probability estimation and distribution sampling tools are integrated with theoretical information theory principles to derive new bounds, performance limits, and insights into the fundamental nature of communication systems;
    \item exploration of model-based strategies to mitigate the computational expense of training, particularly focusing on methods for efficient offline training and transfer learning. Additionally, investigations into model compression and quantization techniques could enable the deployment of complex NNs on resource-constrained online devices;
    \item apply CORTICAL to tackle open problems such as the two-user Gaussian interference channel in order to get the achievable rates and the achievable distribution, which could inspire and offer theoretical insights on new coding schemes;
    \item beyond physical layer communications, since, for instance, the proposed capacity learning framework can be applied in biology, to analyze the genes’ activation, in genetics, to study the capacity of DNA storage systems, for privacy-preserving data release mechanism, secrecy capacity, or more in general for representation learning. 
\end{itemize}

In summary, this dissertation establishes innovative DL methodologies that significantly contribute to the understanding and optimization of communication systems. It lays a foundation for a new paradigm where data-driven approaches, explainability and the ability to bridge theoretical insights with practical applications pave the way towards the next generation of intelligent and efficient communication infrastructure.

\newpage
\bibliographystyle{unsrt}
\addcontentsline{toc}{chapter}{Bibliography} 
\bibliography{IEEEabrv,biblio}

\newcommand{\noop}[1]{}
\begin{thebibliography}{100}

\bibitem{Shannon1948}
Claude~Elwood Shannon.
\newblock A mathematical theory of communication.
\newblock {\em The Bell System Technical Journal}, 27(3):379--423, 7 1948.

\bibitem{Kant}
Immanuel Kant.
\newblock {\em Critique of Pure Reason}.
\newblock The Cambridge Edition of the Works of Immanuel Kant. Cambridge
  University Press, New York, NY, 1998.
\newblock Translated by Paul Guyer and Allen W. Wood.

\bibitem{Bishop2006}
Christopher~M. Bishop.
\newblock {\em Pattern Recognition and Machine Learning (Information Science
  and Statistics)}.
\newblock Springer-Verlag, Berlin, Heidelberg, 2006.

\bibitem{DLBook}
Yann LeCun, Yoshua Bengio, and Geoffrey Hinton.
\newblock Deep learning.
\newblock {\em Nature}, 521(7553):436--444, 5 2015.

\bibitem{Simeone2018}
Osvaldo Simeone.
\newblock A very brief introduction to machine learning with applications to
  communication systems.
\newblock {\em IEEE Transactions on Cognitive Communications and Networking},
  4(4):648--664, 2018.

\bibitem{8461983}
Nariman Farsad, Milind Rao, and Andrea Goldsmith.
\newblock Deep learning for joint source-channel coding of text.
\newblock In {\em Proceedings of the IEEE International Conference on
  Acoustics, Speech and Signal Processing (ICASSP)}, pages 2326--2330, 2018.

\bibitem{Dorner2018}
S.~{Dorner}, S.~{Cammerer}, J.~{Hoydis}, and S.~t.~{Brink}.
\newblock Deep learning based communication over the air.
\newblock {\em IEEE Journal of Selected Topics in Signal Processing},
  12(1):132--143, Feb 2018.

\bibitem{8807322}
Shen Gao, Peihao Dong, Zhiwen Pan, and Geoffrey~Ye Li.
\newblock Deep learning based channel estimation for massive mimo with
  mixed-resolution adcs.
\newblock {\em IEEE Communications Letters}, 23(11):1989--1993, 2019.

\bibitem{Nachmani2018}
E.~{Nachmani}, E.~{Marciano}, L.~{Lugosch}, W.~J. {Gross}, D.~{Burshtein}, and
  Y.~{Beery}.
\newblock Deep learning methods for improved decoding of linear codes.
\newblock {\em IEEE Journal of Selected Topics in Signal Processing},
  12(1):119--131, Feb 2018.

\bibitem{8353153}
Hengtao He, Chao-Kai Wen, Shi Jin, and Geoffrey~Ye Li.
\newblock Deep learning-based channel estimation for beamspace mmwave massive
  mimo systems.
\newblock {\em IEEE Wireless Communications Letters}, 7(5):852--855, 2018.

\bibitem{Oshea2017}
T.~{O'Shea} and J.~{Hoydis}.
\newblock An introduction to deep learning for the physical layer.
\newblock {\em IEEE Transactions on Cognitive Communications and Networking},
  3(4):563--575, Dec 2017.

\bibitem{8482358}
Tianqi Wang, Chao-Kai Wen, Shi Jin, and Geoffrey~Ye Li.
\newblock Deep learning-based csi feedback approach for time-varying massive
  mimo channels.
\newblock {\em IEEE Wireless Communications Letters}, 8(2):416--419, 2019.

\bibitem{9802083}
Taiwo Oyedare, Vijay~K. Shah, Daniel~J. Jakubisin, and Jeffrey~H. Reed.
\newblock Interference suppression using deep learning: Current approaches and
  open challenges.
\newblock {\em IEEE Access}, 10:66238--66266, 2022.

\bibitem{8644250}
Hao Ye, Geoffrey~Ye Li, Biing-Hwang~Fred Juang, and Kathiravetpillai Sivanesan.
\newblock Channel agnostic end-to-end learning based communication systems with
  conditional gan.
\newblock In {\em 2018 IEEE Globecom Workshops}, pages 1--5, 2018.

\bibitem{8792076}
Fayçal~Ait Aoudia and Jakob Hoydis.
\newblock Model-free training of end-to-end communication systems.
\newblock {\em IEEE Journal on Selected Areas in Communications},
  37(11):2503--2516, 2019.

\bibitem{Letizia2020}
Nunzio~A. Letizia and Andrea~M. Tonello.
\newblock Segmented generative networks: Data generation in the uniform
  probability space.
\newblock {\em IEEE Transactions on Neural Networks and Learning Systems},
  pages 1--10, 2020.

\bibitem{letizia2022copula}
Nunzio~A Letizia and Andrea~M Tonello.
\newblock Copula density neural estimation.
\newblock {\em arXiv preprint arXiv:2211.15353}, 2022.

\bibitem{ML_PLC}
A.~M. {Tonello}, N.~A. {Letizia}, D.~{Righini}, and F.~{Marcuzzi}.
\newblock Machine learning tips and tricks for power line communications.
\newblock {\em IEEE Access}, 7:82434--82452, 2019.

\bibitem{Letizia2019a}
N.~A. {Letizia}, A.~M. {Tonello}, and D.~{Righini}.
\newblock Learning to synthesize noise: The multiple conductor power line case.
\newblock In {\em Proceedings of the IEEE International Symposium on Power Line
  Communications and its Applications (ISPLC)}, pages 1--6, May 2020.

\bibitem{tonello2022mind}
Andrea~M Tonello and Nunzio~A Letizia.
\newblock \uppercase{MIND}: Maximum mutual information based neural decoder.
\newblock {\em IEEE Communications Letters}, 26(12):2954--2958, 2022.

\bibitem{Letizia2021}
Nunzio~A. Letizia and Andrea~M. Tonello.
\newblock Capacity-driven autoencoders for communications.
\newblock {\em IEEE Open Journal of the Communications Society}, 2:1366--1378,
  2021.

\bibitem{f-DIME}
Nunzio~Alexandro Letizia, Nicola Novello, and Andrea~M Tonello.
\newblock Mutual information estimation via f-divergence and data derangements.
\newblock In {\em Advances in Neural Information Processing Systems},
  volume~37, pages 105114--105150, 2024.

\bibitem{CORTICAL}
Nunzio~A. Letizia, Andrea~M. Tonello, and H.~Vincent Poor.
\newblock Cooperative channel capacity learning.
\newblock {\em IEEE Communications Letters}, 27(8):1984--1988, 2023.

\bibitem{CORTICAL_github}
Nunzio~A. Letizia and Andrea~M. Tonello.
\newblock \uppercase{CORTICAL} cooperative channel capacity learning.
\newblock \url{https://github.com/tonellolab/CORTICAL}, 2022.

\bibitem{RighiniLetizia2019}
D.~{Righini}, N.~A. {Letizia}, and A.~M. {Tonello}.
\newblock Synthetic power line communications channel generation with
  autoencoders and gans.
\newblock In {\em Proceedings of the IEEE International Conference on
  Communications, Control, and Computing Technologies for Smart Grids
  (SmartGridComm)}, pages 1--6, 2019.

\bibitem{LetiziaIsplc2021}
Nunzio~A. Letizia and Andrea~M. Tonello.
\newblock Capacity learning for communication systems over power lines.
\newblock In {\em Proceedings of the IEEE International Symposium on Power Line
  Communications and its Applications (ISPLC)}, pages 55--60, 2021.

\bibitem{LetiziaRobotics}
Nunzio~A. Letizia, Babak Salamat, and Andrea~M. Tonello.
\newblock A novel recursive smooth trajectory generation method for unmanned
  vehicles.
\newblock {\em IEEE Transactions on Robotics}, 37(5):1792--1805, 2021.

\bibitem{9525383}
Babak Salamat, Nunzio~A. Letizia, and Andrea~M. Tonello.
\newblock Control based motion planning exploiting calculus of variations and
  rational functions: A formal approach.
\newblock {\em IEEE Access}, 9:121716--121727, 2021.

\bibitem{Mitchell1997}
Thomas~M. Mitchell.
\newblock {\em Machine Learning}.
\newblock McGraw-Hill, Inc., New York, NY, USA, 1 edition, 1997.

\bibitem{SuttonRL}
Richard~S. Sutton and Andrew~G. Barto.
\newblock {\em Introduction to Reinforcement Learning}.
\newblock MIT Press, Cambridge, MA, USA, 1st edition, 1998.

\bibitem{ArulkumaranDRL}
K.~{Arulkumaran}, M.~P. {Deisenroth}, M.~{Brundage}, and A.~A. {Bharath}.
\newblock Deep reinforcement learning: A brief survey.
\newblock {\em IEEE Signal Processing Magazine}, 34(6):26--38, Nov 2017.

\bibitem{Eldar2022}
Yonica~C. Eldar, Andrea Goldsmith, D.~Gunduz, and H.~Vincent Poor.
\newblock {\em Machine Learning and Wireless Communications}.
\newblock Cambridge University Press, 1st edition, 2022.

\bibitem{Goodfellow2016}
Ian Goodfellow, Yoshua Bengio, and Aaron Courville.
\newblock {\em Deep Learning}.
\newblock The MIT Press, 2016.

\bibitem{Proakis2001}
J.~Proakis.
\newblock Digital communications.
\newblock {\em 4th ed. New York, NY: McGraw-Hill, Inc.}, 2001.

\bibitem{KL1951}
S.~Kullback and R.~A. Leibler.
\newblock On information and sufficiency.
\newblock {\em The Annals of Mathematical Statistics}, 22(1):79--86, 1951.

\bibitem{Hornik1989}
K.~Hornik, M.~Stinchcombe, and H.~White.
\newblock Multilayer feedforward networks are universal approximators.
\newblock {\em Neural Networks}, 2(5):359--366, July 1989.

\bibitem{Rumelhart1985}
D.~E. Rumelhart, G.~E. Hinton, and R.~J. Williams.
\newblock Parallel distributed processing: Explorations in the microstructure
  of cognition, vol. 1.
\newblock chapter Learning Internal Representations by Error Propagation, pages
  318--362. MIT Press, Cambridge, MA, USA, 1986.

\bibitem{AdamKingma}
Diederik~P. Kingma and Jimmy Ba.
\newblock Adam: {A} method for stochastic optimization.
\newblock {\em CoRR}, abs/1412.6980, 2014.

\bibitem{LeCun1998}
Y.~Lecun, L.~Bottou, Y.~Bengio, and P.~Haffner.
\newblock Gradient-based learning applied to document recognition.
\newblock {\em Proceedings of the IEEE}, 86(11):2278--2324, 1998.

\bibitem{Li2014}
Q.~{Li}, W.~{Cai}, X.~{Wang}, Y.~{Zhou}, D.~D. {Feng}, and M.~{Chen}.
\newblock Medical image classification with convolutional neural network.
\newblock In {\em Proceedings of the International Conference on Control
  Automation Robotics Vision (ICARCV)}, pages 844--848, Dec 2014.

\bibitem{Collobert2008}
Ronan Collobert and Jason Weston.
\newblock A unified architecture for natural language processing: Deep neural
  networks with multitask learning.
\newblock In {\em Proceedings of the 25th International Conference on Machine
  Learning}, ICML '08, pages 160--167, New York, NY, USA, 2008. ACM.

\bibitem{Vincent2008}
Pascal Vincent, Hugo Larochelle, Yoshua Bengio, and Pierre-Antoine Manzagol.
\newblock Extracting and composing robust features with denoising autoencoders.
\newblock In {\em Proceedings of the 25th International Conference on Machine
  Learning}, ICML '08, pages 1096--1103, New York, NY, USA, 2008. ACM.

\bibitem{Alain2014}
Guillaume Alain, Yoshua Bengio, and Salah Rifai.
\newblock Regularized auto-encoders estimate local statistics.
\newblock {\em CoRR}, abs/1211.4246, 2012.

\bibitem{Rifai2011}
Salah Rifai, Pascal Vincent, Xavier Muller, Xavier Glorot, and Yoshua Bengio.
\newblock Contracting auto-encoders: Explicit invariance during feature
  extraction.
\newblock In {\em Proceedings of the 28th International Conference on Machine
  Learning}, ICML '11, 2011.

\bibitem{Kingma2013}
Diederik~P. Kingma and Max Welling.
\newblock Auto-encoding variational bayes.
\newblock In {\em 2nd International Conference on Learning Representations,
  {ICLR}}, 2014.

\bibitem{Ho2020}
Jonathan Ho, Ajay Jain, and Pieter Abbeel.
\newblock Denoising diffusion probabilistic models.
\newblock In {\em Advances in Neural Information Processing Systems 33: Annual
  Conference on Neural Information Processing Systems}, 2020.

\bibitem{CalvinLuo2022}
Calvin Luo.
\newblock Understanding diffusion models: A unified perspective, 2022.

\bibitem{Dinh2014}
Laurent Dinh, David Krueger, and Yoshua Bengio.
\newblock {NICE:} non-linear independent components estimation.
\newblock {\em CoRR}, abs/1410.8516, 2014.

\bibitem{Kingma2018}
Diederik~P. Kingma and Prafulla Dhariwal.
\newblock Glow: Generative flow with invertible 1x1 convolutions.
\newblock In {\em Advances in Neural Information Processing Systems 31: Annual
  Conference on Neural Information Processing Systems}, pages 10236--10245,
  2018.

\bibitem{GPLVM2005}
Neil Lawrence.
\newblock Probabilistic non-linear principal component analysis with gaussian
  process latent variable models.
\newblock {\em Journal of Machine Learning Research}, 6:1783--1816, 11 2005.

\bibitem{GPLVM2010}
Michalis Titsias and Neil~D. Lawrence.
\newblock Bayesian gaussian process latent variable model.
\newblock {\em Proceedings of the Thirteenth International Conference on
  Artificial Intelligence and Statistics}, 9:844--851, 13--15 May 2010.

\bibitem{Frey1995}
Brendan~J. Frey, Geoffrey~E. Hinton, and Peter Dayan.
\newblock Does the wake-sleep algorithm produce good density estimators?
\newblock In {\em Advances in Neural Information Processing Systems, NeurIPS},
  pages 661--667, 1995.

\bibitem{Frey1998}
Frank~R. Kschischang and Brendan~J. Frey.
\newblock Iterative decoding of compound codes by probability propagation in
  graphical models.
\newblock {\em {IEEE} Journal on Selected Areas in Communications},
  16(2):219--230, 1998.

\bibitem{Gregor2014}
Karol Gregor, Ivo Danihelka, Andriy Mnih, Charles Blundell, and Daan Wierstra.
\newblock Deep autoregressive networks.
\newblock In {\em Proceedings of the 31th International Conference on Machine
  Learning, {ICML}}, pages 1242--1250, 2014.

\bibitem{Oord2016}
A{\"{a}}ron van~den Oord, Nal Kalchbrenner, Oriol Vinyals, Lasse Espeholt, Alex
  Graves, and Koray Kavukcuoglu.
\newblock Conditional image generation with pixelcnn decoders.
\newblock {\em CoRR}, abs/1606.05328, 2016.

\bibitem{Vaswani2017}
Ashish Vaswani, Noam Shazeer, Niki Parmar, Jakob Uszkoreit, Llion Jones,
  Aidan~N Gomez, {\L}ukasz Kaiser, and Illia Polosukhin.
\newblock Attention is all you need.
\newblock In {\em Advances in neural information processing systems}, pages
  5998--6008, 2017.

\bibitem{Hinton06afast}
Geoffrey~E. Hinton and Simon Osindero.
\newblock A fast learning algorithm for deep belief nets.
\newblock {\em Neural Computation}, 18:2006, 2006.

\bibitem{AlainB2015}
G.~{Alain}, Y.~{Bengio}, L.~{Yao}, J.~{Yosinski}, E.~{Thibodeau-Laufer},
  S.~{Zhang}, and P.~{Vincent}.
\newblock Gsns: generative stochastic networks.
\newblock {\em Information and Inference: A Journal of the IMA}, 5(2):210--249,
  2016.

\bibitem{Goodfellow2014}
Ian~J. Goodfellow, Jean Pouget{-}Abadie, Mehdi Mirza, Bing Xu, David
  Warde{-}Farley, Sherjil Ozair, Aaron~C. Courville, and Yoshua Bengio.
\newblock Generative adversarial nets.
\newblock In {\em Advances in Neural Information Processing Systems 27: Annual
  Conference on Neural Information Processing Systems}, pages 2672--2680, 2014.

\bibitem{Radford2016}
Alec Radford, Luke Metz, and Soumith Chintala.
\newblock Unsupervised representation learning with deep convolutional
  generative adversarial networks.
\newblock In {\em 4th International Conference on Learning Representations},
  2016.

\bibitem{Karras2019}
Tero Karras, Samuli Laine, and Timo Aila.
\newblock A style-based generator architecture for generative adversarial
  networks.
\newblock In {\em Proceedings of the IEEE/CVF Conference on Computer Vision and
  Pattern Recognition (CVPR)}, June 2019.

\bibitem{Nowozin2016}
Sebastian Nowozin, Botond Cseke, and Ryota Tomioka.
\newblock f-gan: Training generative neural samplers using variational
  divergence minimization.
\newblock In D.~Lee, M.~Sugiyama, U.~Luxburg, I.~Guyon, and R.~Garnett,
  editors, {\em Advances in Neural Information Processing Systems}, volume~29.
  Curran Associates, Inc., 2016.

\bibitem{NORTA}
Marne~C. Cario and Barry~L. Nelson.
\newblock Modeling and generating random vectors with arbitrary marginal
  distributions and correlation matrix.
\newblock Technical report, 1997.

\bibitem{Falk1999}
Michael Falk.
\newblock A simple approach to the generation of uniformly distributed random
  variables with prescribed correlations.
\newblock {\em Communications in Statistics - Simulation and Computation},
  28(3):785--791, 1999.

\bibitem{Nelsen2006}
R.~B. Nelsen.
\newblock {\em An Introduction to Copulas (Springer Series in Statistics)}.
\newblock Springer-Verlag, Berlin, Heidelberg, 2006.

\bibitem{Sklar}
Abe Sklar.
\newblock Fonctions de r\'epartition \`a n dimensions et leurs marges.
\newblock {\em Publications de l'Institut de Statistique de l'Universit\'e de
  Paris}, 8:229--231, 1959.

\bibitem{Bedford2016}
Tim Bedford, Alireza Daneshkhah, and Kevin~J Wilson.
\newblock Approximate uncertainty modeling in risk analysis with vine copulas.
\newblock {\em Risk analysis : an official publication of the Society for Risk
  Analysis}, 36(4):792—815, April 2016.

\bibitem{t-copula}
Stefano Demarta and {Alexander J.} McNeil.
\newblock The t copula and related copulas.
\newblock {\em International Statistical Review}, 73(1):111--129, 4 2005.

\bibitem{VineCopula}
Tim Bedford and Roger Cooke.
\newblock Probability density decomposition for conditionally dependent random
  variables modeled by vines.
\newblock {\em Ann. Math. Artif. Intell.}, 32:245--268, 08 2001.

\bibitem{VCAE}
Natasa Tagasovska, Damien Ackerer, and Thibault Vatter.
\newblock Copulas as high-dimensional generative models: Vine copula
  autoencoders.
\newblock {\em ArXiv}, abs/1906.05423, 2019.

\bibitem{FortetMMD}
Robert Fortet and Edith Mourier.
\newblock Convergence de la r\'epartition empirique vers la r\'epartition
  th\'eorique.
\newblock {\em Annales scientifiques de l'\'Ecole Normale Sup\'erieure}, 3e
  s{\'e}rie, 70(3):267--285, 1953.

\bibitem{Gretton2012}
Arthur Gretton, Karsten~M. Borgwardt, Malte~J. Rasch, Bernhard Sch\"{o}lkopf,
  and Alexander Smola.
\newblock A kernel two-sample test.
\newblock {\em Journal of Machine Learning Research}, 13(1):723--773, March
  2012.

\bibitem{MMDnets}
Gintare~Karolina Dziugaite, Daniel~M. Roy, and Zoubin Ghahramani.
\newblock Training generative neural networks via maximum mean discrepancy
  optimization.
\newblock In {\em Proceedings of the Thirty-First Conference on Uncertainty in
  Artificial Intelligence, {UAI}}, pages 258--267, 2015.

\bibitem{MNIST}
Yann LeCun and Corinna Cortes.
\newblock {MNIST} handwritten digit database.
\newblock 2010.

\bibitem{CelebA}
Ziwei Liu, Ping Luo, Xiaogang Wang, and Xiaoou Tang.
\newblock Deep learning face attributes in the wild.
\newblock In {\em Proceedings of International Conference on Computer Vision
  (ICCV)}, December 2015.

\bibitem{TensorFlow}
Mart{\'{\i}}n Abadi et~al.
\newblock Tensorflow: Large-scale machine learning on heterogeneous distributed
  systems.
\newblock {\em CoRR}, abs/1603.04467, 2016.

\bibitem{ClaytonCop}
D.~G. Clayton.
\newblock {A model for association in bivariate life tables and its application
  in epidemiological studies of familial tendency in chronic disease
  incidence}.
\newblock {\em Biometrika}, 65(1):141--151, 04 1978.

\bibitem{ProsCons}
Ali Borji.
\newblock Pros and cons of {GAN} evaluation measures.
\newblock {\em CoRR}, abs/1802.03446, 2018.

\bibitem{InceptionScore}
Tim Salimans, Ian~J. Goodfellow, Wojciech Zaremba, Vicki Cheung, Alec Radford,
  and Xi~Chen.
\newblock Improved techniques for training gans.
\newblock {\em CoRR}, abs/1606.03498, 2016.

\bibitem{FID}
Martin Heusel, Hubert Ramsauer, Thomas Unterthiner, Bernhard Nessler, and Sepp
  Hochreiter.
\newblock Gans trained by a two time-scale update rule converge to a local nash
  equilibrium.
\newblock In {\em Advances in Neural Information Processing Systems 30}, pages
  6626--6637. Curran Associates, Inc., 2017.

\bibitem{KID}
Mikołaj Bińkowski, Dougal~J. Sutherland, Michael Arbel, and Arthur Gretton.
\newblock Demystifying {MMD} {GAN}s.
\newblock In {\em International Conference on Learning Representations}, 2018.

\bibitem{InceptionV3}
Christian Szegedy, Vincent Vanhoucke, Sergey Ioffe, Jonathon Shlens, and
  Zbigniew Wojna.
\newblock Rethinking the inception architecture for computer vision.
\newblock {\em CoRR}, abs/1512.00567, 2015.

\bibitem{Silverman86}
B.~W. Silverman.
\newblock {\em Density Estimation for Statistics and Data Analysis}.
\newblock Chapman \& Hall, London, 1986.

\bibitem{PixelRNN}
A\"{a}ron Van Den~Oord, Nal Kalchbrenner, and Koray Kavukcuoglu.
\newblock Pixel recurrent neural networks.
\newblock In {\em International Conference on Machine Learning}, page
  1747–1756, 2016.

\bibitem{dinh2017density}
Laurent Dinh, Jascha Sohl-Dickstein, and Samy Bengio.
\newblock Density estimation using real {NVP}.
\newblock In {\em International Conference on Learning Representations}, 2017.

\bibitem{Nguyen2010}
XuanLong Nguyen, Martin~J. Wainwright, and Michael~I. Jordan.
\newblock Estimating divergence functionals and the likelihood ratio by convex
  risk minimization.
\newblock {\em IEEE Transactions on Information Theory}, 56(11):5847--5861,
  2010.

\bibitem{CODINE_github}
N.~A. Letizia and A.~M. Tonello.
\newblock \uppercase{CODINE} copula density neural estimator.
\newblock \url{https://github.com/tonellolab/CODINE-copula-estimator}, 2022.

\bibitem{Poczos2012}
Barnabas Poczos, Zoubin Ghahramani, and Jeff Schneider.
\newblock Copula-based kernel dependency measures.
\newblock {\em Proceedings of the 29th International Conference on Machine
  Learning, ICML 2012}, 1, 06 2012.

\bibitem{Papamakarios2015}
George Papamakarios and Iain Murray.
\newblock Distilling intractable generative models.
\newblock In {\em Probabilistic Integration Workshop at the Neural Information
  Processing Systems Conference, 2015}, August 2015.

\bibitem{Rezende2015}
Danilo Rezende and Shakir Mohamed.
\newblock Variational inference with normalizing flows.
\newblock In {\em Proceedings of the 32nd International Conference on Machine
  Learning}, volume~37, pages 1530--1538, Lille, France, 07--09 Jul 2015. PMLR.

\bibitem{WGAN}
Martin Arjovsky, Soumith Chintala, and L{\'e}on Bottou.
\newblock Wasserstein generative adversarial networks.
\newblock In {\em International conference on machine learning}, pages
  214--223. PMLR, 2017.

\bibitem{GMMs}
Yujia Li, Kevin Swersky, and Richard Zemel.
\newblock Generative moment matching networks.
\newblock In {\em Proceedings of the 32nd International Conference on
  International Conference on Machine Learning - Volume 37}, ICML'15, page
  1718–1727, 2015.

\bibitem{OsheaGAN}
Timothy~J. O'Shea, Tamoghna Roy, Nathan West, and Benjamin~C. Hilburn.
\newblock Physical layer communications system design over-the-air using
  adversarial networks.
\newblock {\em Proceedings of the 26th European Signal Processing Conference
  (EUSIPCO)}, pages 529--532, 2018.

\bibitem{Ye2018}
H.~{Ye}, G.~Y. {Li}, and B.~{Juang}.
\newblock Power of deep learning for channel estimation and signal detection in
  ofdm systems.
\newblock {\em IEEE Wireless Communications Letters}, 7(1):114--117, Feb 2018.

\bibitem{MGAN}
Quan Hoang, Tu~Dinh Nguyen, Trung Le, and Dinh~Q. Phung.
\newblock {MGAN:} training generative adversarial nets with multiple
  generators.
\newblock In {\em 6th International Conference on Learning Representations,
  {ICLR}}, 2018.

\bibitem{SD2021}
Robin Rombach, Andreas Blattmann, Dominik Lorenz, Patrick Esser, and Bj\"orn
  Ommer.
\newblock High-resolution image synthesis with latent diffusion models.
\newblock In {\em Proceedings of the IEEE/CVF Conference on Computer Vision and
  Pattern Recognition (CVPR)}, pages 10684--10695, June 2022.

\bibitem{donahue2018adversarial}
Chris Donahue, Julian McAuley, and Miller Puckette.
\newblock Adversarial audio synthesis.
\newblock In {\em International Conference on Learning Representations}, 2019.

\bibitem{GriffinLim}
D.~{Griffin} and {Jae Lim}.
\newblock Signal estimation from modified short-time fourier transform.
\newblock {\em IEEE Transactions on Acoustics, Speech, and Signal Processing},
  32(2):236--243, April 1984.

\bibitem{karras2018progressive}
Tero Karras, Timo Aila, Samuli Laine, and Jaakko Lehtinen.
\newblock Progressive growing of {GAN}s for improved quality, stability, and
  variation.
\newblock In {\em International Conference on Learning Representations}, 2018.

\bibitem{Bahl1974}
L.~Bahl, J.~Cocke, F.~Jelinek, and J.~Raviv.
\newblock Optimal decoding of linear codes for minimizing symbol error rate
  (corresp.).
\newblock {\em IEEE Transactions on Information Theory}, 20(2):284--287, 1974.

\bibitem{Mine2018}
Mohamed~Ishmael Belghazi, Aristide Baratin, Sai Rajeshwar, Sherjil Ozair,
  Yoshua Bengio, Aaron Courville, and Devon Hjelm.
\newblock Mutual information neural estimation.
\newblock In {\em Proceedings of the 35th International Conference on Machine
  Learning}, volume~80 of {\em Proceedings of Machine Learning Research}, pages
  531--540, Stockholmsmässan, Stockholm Sweden, 10--15 Jul 2018. PMLR.

\bibitem{Wunder2019}
R.~{Fritschek}, R.~F. {Schaefer}, and G.~{Wunder}.
\newblock Deep learning for channel coding via neural mutual information
  estimation.
\newblock In {\em Proceedings of the IEEE 20th International Workshop on Signal
  Processing Advances in Wireless Communications (SPAWC)}, pages 1--5, 2019.

\bibitem{Song2020}
Jiaming Song and Stefano Ermon.
\newblock Understanding the limitations of variational mutual information
  estimators.
\newblock In {\em 8th International Conference on Learning Representations,
  {ICLR}}, 2020.

\bibitem{pmlr-v235-novello24a}
Nicola Novello and Andrea~M Tonello.
\newblock $f$-divergence based classification: Beyond the use of cross-entropy.
\newblock In {\em Proceedings of the 41st International Conference on Machine
  Learning}, pages 38448--38473, 2024.

\bibitem{novello2024label}
Nicola Novello and Andrea~M Tonello.
\newblock Label noise resistant f-divergence receiver for power line
  communications.
\newblock In {\em 2024 IEEE International Conference on Communications,
  Control, and Computing Technologies for Smart Grids (SmartGridComm)}, pages
  517--522. IEEE, 2024.

\bibitem{Gallager1968}
R.~G. Gallager.
\newblock Information theory and reliable communication.
\newblock {\em Wiley}, 1968.

\bibitem{MIND_github}
A.~M. Tonello and N.~A. Letizia.
\newblock \uppercase{MIND} neural decoder.
\newblock \url{https://github.com/tonellolab/MIND-neural-decoder}, 2022.

\bibitem{Middleton1977}
David Middleton.
\newblock Statistical-physical models of electromagnetic interference.
\newblock {\em IEEE Transactions on Electromagnetic Compatibility},
  EMC-19(3):106--127, 1977.

\bibitem{TurboAE}
Yihan Jiang, Hyeji Kim, Himanshu Asnani, Sreeram Kannan, Sewoong Oh, and Pramod
  Viswanath.
\newblock Turbo autoencoder: Deep learning based channel codes for
  point-to-point communication channels.
\newblock In {\em Advances in Neural Information Processing Systems 32}, pages
  2758--2768. Curran Associates, Inc., 2019.

\bibitem{Alberge2019}
F.~{Alberge}.
\newblock Deep learning constellation design for the awgn channel with additive
  radar interference.
\newblock {\em IEEE Transactions on Communications}, 67(2):1413--1423, 2019.

\bibitem{OFDM_AE}
A.~{Felix}, S.~{Cammerer}, S.~{Dörner}, J.~{Hoydis}, and S.~{Ten Brink}.
\newblock Ofdm-autoencoder for end-to-end learning of communications systems.
\newblock In {\em Proceedings of the IEEE 19th International Workshop on Signal
  Processing Advances in Wireless Communications (SPAWC)}, pages 1--5, 2018.

\bibitem{Hoydis2019}
M.~{Stark}, F.~{Ait Aoudia}, and J.~{Hoydis}.
\newblock Joint learning of geometric and probabilistic constellation shaping.
\newblock In {\em Proceedings of the IEEE Globecom Workshops}, pages 1--6,
  2019.

\bibitem{Cammerer2020}
S.~{Cammerer}, F.~A. {Aoudia}, S.~{Dörner}, M.~{Stark}, J.~{Hoydis}, and
  S.~{ten Brink}.
\newblock Trainable communication systems: Concepts and prototype.
\newblock {\em IEEE Transactions on Communications}, 68(9):5489--5503, 2020.

\bibitem{Kramer1991}
Mark~A. Kramer.
\newblock Nonlinear principal component analysis using autoassociative neural
  networks.
\newblock {\em AIChE Journal}, 37(2):233--243, 1991.

\bibitem{Makhzani2013}
Alireza Makhzani and Brendan~J. Frey.
\newblock k-sparse autoencoders.
\newblock In {\em 2nd International Conference on Learning Representations,
  {ICLR}}, 2014.

\bibitem{Tishby1999}
Naftali Tishby, Fernando~C. Pereira, and William Bialek.
\newblock The information bottleneck method.
\newblock In {\em Proc. of the 37-th Annual Allerton Conference on
  Communication, Control and Computing}, pages 368--377, 1999.

\bibitem{Alemi2017}
Alexander~A. Alemi, Ian Fischer, Joshua~V. Dillon, and Kevin Murphy.
\newblock Deep variational information bottleneck.
\newblock In {\em 5th International Conference on Learning Representations,
  {ICLR}}, 2017.

\bibitem{Soatto2018}
Alessandro Achille and Stefano Soatto.
\newblock Emergence of invariance and disentanglement in deep representations.
\newblock {\em Journal of Machine Learning Research}, 19:50:1--50:34, 2018.

\bibitem{Tishby2017}
Ravid Shwartz{-}Ziv and Naftali Tishby.
\newblock Opening the black box of deep neural networks via information.
\newblock {\em CoRR}, abs/1703.00810, 2017.

\bibitem{279181}
Y.~{Bengio}, P.~{Simard}, and P.~{Frasconi}.
\newblock Learning long-term dependencies with gradient descent is difficult.
\newblock {\em IEEE Transactions on Neural Networks}, 5(2):157--166, March
  1994.

\bibitem{Moon1995}
Young-Il Moon, Balaji Rajagopalan, and Upmanu Lall.
\newblock Estimation of mutual information using kernel density estimators.
\newblock {\em Phys. Rev. E}, 52:2318--2321, Sep 1995.

\bibitem{Kraskov2004}
Alexander Kraskov, Harald St\"ogbauer, and Peter Grassberger.
\newblock Estimating mutual information.
\newblock {\em Phys. Rev. E}, 69:066138, Jun 2004.

\bibitem{Poole2019}
Ben Poole, Sherjil Ozair, A{\"{a}}ron van~den Oord, Alex Alemi, and George
  Tucker.
\newblock On variational bounds of mutual information.
\newblock In {\em Proceedings of the 36th International Conference on Machine
  Learning, {ICML}}, volume~97 of {\em Proceedings of Machine Learning
  Research}, pages 5171--5180. {PMLR}, 2019.

\bibitem{Donsker1983}
M.~D. Donsker and S.~R.~S. Varadhan.
\newblock Asymptotic evaluation of certain markov process expectations for
  large time. iv.
\newblock {\em Communications on Pure and Applied Mathematics}, 36(2):183--212,
  1983.

\bibitem{Zhang2017}
Chiyuan Zhang, Samy Bengio, Moritz Hardt, Benjamin Recht, and Oriol Vinyals.
\newblock Understanding deep learning requires rethinking generalization.
\newblock In {\em 5th International Conference on Learning Representations,
  {ICLR}}, 2017.

\bibitem{Pereyra2017}
Gabriel Pereyra, George Tucker, Jan Chorowski, Lukasz Kaiser, and Geoffrey~E.
  Hinton.
\newblock Regularizing neural networks by penalizing confident output
  distributions.
\newblock In {\em 5th International Conference on Learning Representations,
  {ICLR}}, 2017.

\bibitem{Szegedy2015b}
C.~{Szegedy}, V.~{Vanhoucke}, S.~{Ioffe}, J.~{Shlens}, and Z.~{Wojna}.
\newblock Rethinking the inception architecture for computer vision.
\newblock In {\em 2016 IEEE Conference on Computer Vision and Pattern
  Recognition (CVPR)}, pages 2818--2826, 2016.

\bibitem{CACAO_github}
N.~A. Letizia and A.~M. Tonello.
\newblock Capacity-driven autoencoders for communications.
\newblock
  \url{https://github.com/tonellolab/capacity-approaching-autoencoders}, 2021.

\bibitem{Muller2019}
Rafael M\"{u}ller, Simon Kornblith, and Geoffrey~E Hinton.
\newblock When does label smoothing help?
\newblock In {\em Advances in Neural Information Processing Systems},
  volume~32. Curran Associates, Inc., 2019.

\bibitem{vandermaaten2008visualizing}
Laurens van~der Maaten and Geoffrey Hinton.
\newblock Visualizing data using {t-SNE}.
\newblock {\em Journal of Machine Learning Research}, 9:2579--2605, 2008.

\bibitem{LetiziaNIPS}
Nunzio~A. Letizia and Andrea~M. Tonello.
\newblock Discriminative mutual information estimators for channel capacity
  learning.
\newblock {\em CoRR}, abs/2107.03084, 2021.

\bibitem{letizia2022Balkan}
Nunzio~A. Letizia and Andrea~M. Tonello.
\newblock Discriminative mutual information estimation for the design of
  channel capacity driven autoencoders.
\newblock In {\em Proceedings of the International Balkan Conference on
  Communications and Networking (BalkanCom)}, pages 41--45, 2022.

\bibitem{goldfeld2021sliced}
Ziv Goldfeld and Kristjan Greenewald.
\newblock Sliced mutual information: A scalable measure of statistical
  dependence.
\newblock {\em Advances in Neural Information Processing Systems},
  34:17567--17578, 2021.

\bibitem{tschannen2019mutual}
Michael Tschannen, Josip Djolonga, Paul~K. Rubenstein, Sylvain Gelly, and Mario
  Lucic.
\newblock On mutual information maximization for representation learning.
\newblock In {\em 8th International Conference on Learning Representations,
  {ICLR}}, 2020.

\bibitem{guo2005mutual}
Dongning Guo, Shlomo Shamai, and Sergio Verd{\'u}.
\newblock Mutual information and minimum mean-square error in gaussian
  channels.
\newblock {\em IEEE transactions on information theory}, 51(4):1261--1282,
  2005.

\bibitem{pluim2003mutual}
Josien~PW Pluim, JB~Antoine Maintz, and Max~A Viergever.
\newblock Mutual-information-based registration of medical images: a survey.
\newblock {\em IEEE transactions on medical imaging}, 22(8):986--1004, 2003.

\bibitem{NCE2018}
Aaron van~den Oord, Yazhe Li, and Oriol Vinyals.
\newblock Representation learning with contrastive predictive coding.
\newblock {\em arXiv preprint arXiv:1807.03748}, 2018.

\bibitem{RenyiCL}
Kyungmin Lee and Jinwoo Shin.
\newblock R\'{e}nyicl: Contrastive representation learning with skew r\'{e}nyi
  divergence.
\newblock In {\em Advances in Neural Information Processing Systems},
  volume~35, pages 6463--6477. Curran Associates, Inc., 2022.

\bibitem{shalev2022neural}
Yuval Shalev, Amichai Painsky, and Irad Ben-Gal.
\newblock Neural joint entropy estimation.
\newblock {\em IEEE Transactions on Neural Networks and Learning Systems},
  2022.

\bibitem{raina2003classification}
Rajat Raina, Yirong Shen, Andrew Mccallum, and Andrew Ng.
\newblock Classification with hybrid generative/discriminative models.
\newblock {\em Advances in neural information processing systems}, 16, 2003.

\bibitem{Barber2003}
David Barber and Felix~V. Agakov.
\newblock The im algorithm: A variational approach to information maximization.
\newblock In {\em NIPS}, pages 201--208, 2003.

\bibitem{song2019generative}
Yang Song and Stefano Ermon.
\newblock Generative modeling by estimating gradients of the data distribution.
\newblock {\em Advances in neural information processing systems}, 32, 2019.

\bibitem{sason2016f}
Igal Sason and Sergio Verd{\'u}.
\newblock $ f $-divergence inequalities.
\newblock {\em IEEE Transactions on Information Theory}, 62(11):5973--6006,
  2016.

\bibitem{McAllester2019}
David McAllester and Karl Stratos.
\newblock Formal limitations on the measurement of mutual information.
\newblock 108:875--884, 26--28 Aug 2020.

\bibitem{alon2016probabilistic}
Noga Alon and Joel~H Spencer.
\newblock {\em The probabilistic method}.
\newblock John Wiley \& Sons, 2016.

\bibitem{oord2018representation}
Aaron van~den Oord, Yazhe Li, and Oriol Vinyals.
\newblock Representation learning with contrastive predictive coding.
\newblock {\em arXiv preprint arXiv:1807.03748}, 2018.

\bibitem{SMILE_github}
Jiaming Song and Stefano Ermon.
\newblock Understanding the limitations of variational mutual information
  estimators.
\newblock \url{https://github.com/ermongroup/smile-mi-estimator}, 2020.

\bibitem{czyz2023beyond}
Pawe{\l} Czy{\.z}, Frederic Grabowski, Julia~E Vogt, Niko Beerenwinkel, and
  Alexander Marx.
\newblock Beyond normal: On the evaluation of mutual information estimators.
\newblock In {\em Thirty-seventh Conference on Neural Information Processing
  Systems}, 2023.

\bibitem{LSGAN}
Xudong Mao, Qing Li, Haoran Xie, Raymond Y.~K. Lau, and Zhen Wang.
\newblock Multi-class generative adversarial networks with the {L2} loss
  function.
\newblock {\em CoRR}, abs/1611.04076, 2016.

\bibitem{Ghosal2000}
Subhashis Ghosal, Jayanta~K. Ghosh, and Aad~W. van~der Vaart.
\newblock {Convergence rates of posterior distributions}.
\newblock {\em The Annals of Statistics}, 28(2):500 -- 531, 2000.

\bibitem{Pytorch2016}
Adam Paszke, Sam Gross, Soumith Chintala, and Gregory Chanan.
\newblock Pytorch.
\newblock \url{https://github.com/pytorch}, 2016.

\bibitem{kingma2014adam}
Diederik~P Kingma and Jimmy Ba.
\newblock Adam: A method for stochastic optimization.
\newblock {\em arXiv preprint arXiv:1412.6980}, 2014.

\bibitem{deng2012mnist}
Li~Deng.
\newblock The mnist database of handwritten digit images for machine learning
  research.
\newblock {\em IEEE Signal Processing Magazine}, 29(6):141--142, 2012.

\bibitem{xiao2017fashion}
Han Xiao, Kashif Rasul, and Roland Vollgraf.
\newblock Fashion-mnist: a novel image dataset for benchmarking machine
  learning algorithms.
\newblock {\em arXiv preprint arXiv:1708.07747}, 2017.

\bibitem{Ye2017}
Hao Ye, Geoffrey~Ye Li, and Biing-Hwang Juang.
\newblock Power of deep learning for channel estimation and signal detection in
  {OFDM} systems.
\newblock {\em IEEE Wireless Communications Letters}, 7(1):114--117, 2018.

\bibitem{Soltani2019}
Mehran Soltani, Vahid Pourahmadi, Ali Mirzaei, and Hamid Sheikhzadeh.
\newblock Deep learning-based channel estimation.
\newblock {\em IEEE Communications Letters}, 23(4):652--655, 2019.

\bibitem{Aharoni2020}
Ziv Aharoni, Dor Tsur, Ziv Goldfeld, and Haim~H. Permuter.
\newblock Capacity of continuous channels with memory via directed information
  neural estimator.
\newblock In {\em Proceedings of the IEEE International Symposium on
  Information Theory (ISIT)}, pages 2014--2019, 2020.

\bibitem{Farhad2022}
Farhad Mirkarimi, Stefano Rini, and Nariman Farsad.
\newblock Neural capacity estimators: How reliable are they?
\newblock In {\em Proceedings of the IEEE International Conference on
  Communications (ICC)}, pages 3868--3873, 2022.

\bibitem{CuttingPlane}
Jianyi Huang and S.P. Meyn.
\newblock Characterization and computation of optimal distributions for channel
  coding.
\newblock {\em IEEE Transactions on Information Theory}, 51(7):2336--2351,
  2005.

\bibitem{McKellips2004}
A.L. McKellips.
\newblock Simple tight bounds on capacity for the peak-limited discrete-time
  channel.
\newblock In {\em Proceedings of the IEEE International Symposium on
  Information Theory}, pages 348--348, 2004.

\bibitem{Smith1971}
Joel~G. Smith.
\newblock The information capacity of amplitude- and variance-constrained
  scalar {G}aussian channels.
\newblock {\em Inf. Control.}, 18:203--219, 1971.

\bibitem{Dytso2020}
Alex Dytso, Semih Yagli, H.~Vincent Poor, and Shlomo Shamai~Shitz.
\newblock The capacity achieving distribution for the amplitude constrained
  additive {G}aussian channel: An upper bound on the number of mass points.
\newblock {\em IEEE Transactions on Information Theory}, 66(4):2006--2022,
  2020.

\bibitem{BlahutArimoto}
R.~Blahut.
\newblock Computation of channel capacity and rate-distortion functions.
\newblock {\em IEEE Transactions on Information Theory}, 18(4):460--473, 1972.

\bibitem{Dauwels2005}
Justin Dauwels.
\newblock Numerical computation of the capacity of continuous memoryless
  channels.
\newblock {\em Proceedings of the Symposium on Information Theory in the
  Benelux}, 2005.

\bibitem{Faycal2001}
I.C. Abou-Faycal, M.D. Trott, and S.~Shamai.
\newblock The capacity of discrete-time memoryless {R}ayleigh-fading channels.
\newblock {\em IEEE Trans. on Inf. Theory}, 47(4):1290--1301, 2001.

\bibitem{Tchamkerten2004}
A.~Tchamkerten.
\newblock On the discreteness of capacity-achieving distributions.
\newblock {\em IEEE Transactions on Information Theory}, 50(11):2773--8, 2004.

\bibitem{Fahs2014}
Jihad Fahs and Ibrahim Abou-Faycal.
\newblock A {C}auchy input achieves the capacity of a {C}auchy channel under a
  logarithmic constraint.
\newblock In {\em Proceedings of the IEEE International Symposium on
  Information Theory}, pages 3077--3081, 2014.

\bibitem{Rassouli2016}
Borzoo Rassouli and Bruno Clerckx.
\newblock On the capacity of vector {G}aussian channels with bounded inputs.
\newblock {\em IEEE Transactions on Information Theory}, 62(12):6884--6903,
  2016.

\bibitem{Sharma2010}
Naresh Sharma and Shlomo Shamai.
\newblock Transition points in the capacity-achieving distribution for
  free-space optical intensity channels.
\newblock {\em Proceedings of the IEEE Information Theory Workshop}, pages
  1--5, 2010.

\bibitem{Dytso2019}
Alex Dytso, Mert Al, H.~Vincent Poor, and Shlomo Shamai~Shitz.
\newblock On the capacity of the peak power constrained vector {G}aussian
  channel: An estimation theoretic perspective.
\newblock {\em IEEE Trans. on Inf. Theory}, 65(6):3907--3921, 2019.

\bibitem{Dytso2019-MIMO}
Alex Dytso, Mario Goldenbaum, H.~Vincent Poor, and Shlomo Shamai~(Shitz).
\newblock Amplitude constrained \uppercase{MIMO} channels: Properties of
  optimal input distributions and bounds on the capacity.
\newblock {\em Entropy}, 21(2), 2019.

\bibitem{Fahs2012}
Jihad Fahs, Nizar Ajeeb, and Ibrahim Abou-Faycal.
\newblock The capacity of average power constrained additive non-{G}aussian
  noise channels.
\newblock In {\em Proceedings of the 19th International Conference on
  Telecommunications}, pages 1--6, 2012.

\bibitem{Das2000}
A.~Das.
\newblock Capacity-achieving distributions for non-{G}aussian additive noise
  channels.
\newblock {\em Proceedings of the IEEE International Symposium on Information
  Theory}, page 432, 2000.

\bibitem{LampeTonelloSwart}
L.~Lampe, A.~M. Tonello, and T.~G. Swart, editors.
\newblock {\em Power Line Communications: Principles, Standards and
  Applications from Multimedia to Smart Grid}.
\newblock Wiley, 2016.

\bibitem{TonelloImpedance}
Andrea~M. Tonello, Nunzio~A. Letizia, and Marco De~Piante.
\newblock Learning the impedance entanglement for wireline data communication.
\newblock In {\em Proceedings of the International Balkan Conference on
  Communications and Networking (BalkanCom)}, pages 96--100, 2021.

\bibitem{Letizia2019c}
Nunzio~A. Letizia and Andrea~M. Tonello.
\newblock Supervised fault detection in energy grids measuring electrical
  quantities in the plc band.
\newblock In {\em 2020 IEEE International Symposium on Power Line
  Communications and its Applications (ISPLC)}, pages 1--5, 2020.

\bibitem{tonello2014inhome}
A.~Tonello, F.~Versolatto, and A.~Pittolo.
\newblock In-home power line communication channel: Statistical
  characterization.
\newblock {\em IEEE Transactions on Communications}, 62(6):2096--2106, 2014.

\bibitem{Zimmermann2002}
M.~Zimmermann and K.~Dostert.
\newblock Analysis and modeling of impulsive noise in broad-band powerline
  communications.
\newblock {\em IEEE Transactions on Electromagnetic Compatibility},
  44(1):249--258, Feb 2002.

\bibitem{6288568}
M.~Nassar, A.~Dabak, I.~H. Kim, T.~Pande, and B.~L. Evans.
\newblock Cyclostationary noise modeling in narrowband powerline communication
  for smart grid applications.
\newblock In {\em Proceedings of the IEEE International Conference on
  Acoustics, Speech and Signal Processing (ICASSP)}, March 2012.

\bibitem{1650328}
M.~Katayama, T.~Yamazato, and H.~Okada.
\newblock A mathematical model of noise in narrowband power line communication
  systems.
\newblock {\em IEEE Journal on Selected Areas in Communications},
  24(7):1267--1276, July 2006.

\bibitem{5570980}
J.~A. Cortes, L.~Diez, F.~J. Canete, and J.~J. Sanchez-Martinez.
\newblock Analysis of the indoor broadband power-line noise scenario.
\newblock {\em IEEE Transactions on Electromagnetic Compatibility},
  52(4):849--858, Nov 2010.

\bibitem{7248409}
B.~Han, C.~Kaiser, and K.~Dostert.
\newblock A novel approach of canceling cyclostationary noise in low-voltage
  power line communications.
\newblock In {\em Proceedings of the IEEE International Conference on
  Communications (ICC)}, pages 734--739, June 2015.

\bibitem{HanStoicaKaiser2016_1000056745}
Bin Han, Vlad Stoica, Cornelius Kaiser, Nico Otterbach, and Klaus Dostert.
\newblock Noise characterization and emulation for low-voltage power line
  channels between 150 {kHz} and 10 {MHz}.
\newblock Technical report, Karlsruher Institut f{\"{u}}r Technologie (KIT),
  2016.

\bibitem{4091283}
D.~Middleton.
\newblock Procedures for determining the parameters of the first-order
  canonical models of class a and class b electromagnetic interference.
\newblock {\em IEEE Transactions on Electromagnetic Compatibility},
  EMC-21(3):190--208, Aug 1979.

\bibitem{6547827}
J.~Lin, M.~Nassar, and B.~L. Evans.
\newblock Impulsive noise mitigation in powerline communications using sparse
  bayesian learning.
\newblock {\em IEEE Journal on Selected Areas in Communications},
  31(7):1172--1183, July 2013.

\bibitem{8245785}
M.~{Elgenedy}, M.~{Sayed}, N.~{Al-Dhahir}, and R.~C. {Chabaan}.
\newblock Cyclostationary noise mitigation for simo powerline communications.
\newblock {\em IEEE Access}, 6:5460--5484, 2018.

\bibitem{8360239}
D.~Righini and A.~M. Tonello.
\newblock Noise determinism in multi-conductor narrow band {PLC} channels.
\newblock In {\em Proceedings of the IEEE International Symposium on Power Line
  Communications and its Applications (ISPLC)}, April 2018.

\bibitem{righiniAutomaticClustering}
D.~Righini and A.~M. Tonello.
\newblock Automatic clustering of noise in multi-conductor narrow band {PLC}
  channels.
\newblock In {\em Proceedings of the IEEE International Symposium on Power Line
  Communications and its Applications (ISPLC)}, April 2019.

\bibitem{TonelloIsplc2020}
{A}ndrea~{M}. {T}onello and {M}arco {D}e {P}iante.
\newblock {E}xploring {J}oint {V}oltage and {I}mpedance {M}odulation in {W}ired
  {N}etworks.
\newblock In {\em Proceedings of the IEEE {I}nternational {S}ymposium on
  {P}ower {L}ine {C}ommunications and its {A}pplications (ISPLC)}, pages 1--6,
  2020.

\bibitem{tcas}
{M}arco {D}e {P}iante and {A}ndrea~{M}. {T}onello.
\newblock {O}n {I}mpedance {M}atching in a {P}ower-{L}ine-{C}ommunication
  {S}ystem.
\newblock {\em {IEEE} {T}ransactions on {C}ircuits and {S}ystems {I}{I}:
  {E}xpress {B}riefs}, 63(7):653--657, 2016.

\bibitem{tee}
{M}arco {D}e {P}iante and {A}ndrea~{M}. {T}onello.
\newblock {I}mpedance {M}atching and {C}hannel {C}apacity in {P}ower {L}ine
  {C}ommunication {S}ystems.
\newblock In {\em Proceedings of the {IEEE} {I}nternational {S}ymposium on
  {P}ower {L}ine {C}ommunications and its {A}pplications (ISPLC)}, 2017.

\bibitem{pozar}
{D.M.} {P}ozar.
\newblock {\em {M}icrowave {E}ngineering, 4th Edition}.
\newblock Wiley, 2011.

\bibitem{Passerini2017}
{F}ederico {P}asserini and {A}ndrea~{M}. {T}onello.
\newblock {A}nalysis of {H}igh-{F}requency {I}mpedance {M}easurement
  {T}echniques for {P}ower {L}ine {N}etwork {S}ensing.
\newblock {\em {IEEE} {S}ensors {J}ournal}, 17(23):7630--7640, 2017.

\bibitem{Hallak2018}
{G}eorge {H}allak, {C}hristoph {N}ieß, and {G}erd {B}umiller.
\newblock {A}ccurate {L}ow {A}ccess {I}mpedance {M}easurements {W}ith
  {S}eparated {L}oad {I}mpedance {M}easurements on the {P}ower-{L}ine
  {N}etwork.
\newblock {\em {IEEE} {T}ransactions on {I}nstrumentation and {M}easurement},
  67(10):2282--2293, 2018.

\bibitem{Meng2005}
H.~Meng, Y.L. Guan, and S.~Chen.
\newblock Modeling and analysis of noise effects on broadband power-line
  communications.
\newblock {\em IEEE Transactions on Power Delivery}, 20(2):630--637, 2005.

\bibitem{Mallik2010}
Ranjan~K. Mallik.
\newblock A new statistical model of the complex nakagami-m fading gain.
\newblock {\em IEEE Transactions on Communications}, 58(9):2611--2620, 2010.

\bibitem{Dash2016}
Soumya~Prakash Dash, Ranjan~K. Mallik, and Saif~Khan Mohammed.
\newblock Coherent detection in a receive diversity plc system under nakagami-m
  noise environment.
\newblock In {\em Proceedings of the IEEE 27th Annual International Symposium
  on Personal, Indoor, and Mobile Radio Communications (PIMRC)}, pages 1--6,
  2016.

\bibitem{Mathur2014}
Aashish Mathur and Manav~R. Bhatnagar.
\newblock Plc performance analysis assuming bpsk modulation over nakagami- $m$
  additive noise.
\newblock {\em IEEE Communications Letters}, 18(6):909--912, 2014.

\bibitem{Reddy2020}
Badri~Ramanjaneya Reddy and Soumya~P. Dash.
\newblock Optimal qpsk constellation for a plc system in nakagami-m noise
  environment.
\newblock {\em IEEE Communications Letters}, 24(6):1206--1210, 2020.

\bibitem{lampe2013tomography}
M.~O. Ahmed and L.~Lampe.
\newblock Power line communications for low-voltage power grid tomography.
\newblock {\em IEEE Transactions on Communications}, 61(12):5163--5175,
  December 2013.

\bibitem{7797477}
F.~{Passerini} and A.~M. {Tonello}.
\newblock On the exploitation of admittance measurements for wired network
  topology derivation.
\newblock {\em IEEE Transactions on Instrumentation and Measurement},
  66(3):374--382, March 2017.

\bibitem{8641473}
F.~{Passerini} and A.~M. {Tonello}.
\newblock Smart grid monitoring using power line modems: Anomaly detection and
  localization.
\newblock {\em IEEE Transactions on Smart Grid}, pages 1--1, 2019.

\bibitem{7897106}
L.~{Forstel} and L.~{Lampe}.
\newblock Grid diagnostics: Monitoring cable aging using power line
  transmission.
\newblock In {\em Proceedings of the IEEE International Symposium on Power Line
  Communications and its Applications (ISPLC)}, pages 1--6, April 2017.

\bibitem{6525848}
F.~{Yang}, W.~{Ding}, and J.~{Song}.
\newblock Non-intrusive power line quality monitoring based on power line
  communications.
\newblock In {\em Proceedings of the IEEE 17th International Symposium on Power
  Line Communications and Its Applications}, pages 191--196, March 2013.

\bibitem{6507589}
T.~{Erseghe}, S.~{Tomasin}, and A.~{Vigato}.
\newblock Topology estimation for smart micro grids via powerline
  communications.
\newblock {\em IEEE Transactions on Signal Processing}, 61(13):3368--3377, July
  2013.

\bibitem{8653266}
F.~{Passerini} and A.~M. {Tonello}.
\newblock Smart grid monitoring using power line modems: Effect of anomalies on
  signal propagation.
\newblock {\em IEEE Access}, 7:27302--27312, 2019.

\bibitem{HIF_review}
M.~Sedighizadeh, A.~Rezazadeh, and {Nagy I.} Elkalashy.
\newblock Approaches in high impedance fault detection a chronological review.
\newblock {\em Advances in Electrical and Computer Engineering}, 10:114--128, 8
  2010.

\bibitem{versolatto2011an}
F.~Versolatto and A.~Tonello.
\newblock An {MTL} theory approach for the simulation of {MIMO} power-line
  communication channels.
\newblock {\em IEEE Transactions on Power Delivery}, 26(3):1710--1717, 2011.

\bibitem{Letizia2020_rst}
N.~A. {Letizia}, B.~{Salamat}, and A.~M. {Tonello}.
\newblock A new recursive framework for trajectory generation of {UAV}s.
\newblock In {\em Proceedings of the IEEE Aerospace Conference}, pages 1--6,
  March 2020.

\bibitem{spitzer2000digital}
C.R. Spitzer and C.~Spitzer.
\newblock {\em Digital Avionics Handbook}.
\newblock Electrical engineering handbook series. CRC Press, 2000.

\bibitem{DeFilippis2012}
Luca De~Filippis, Giorgio Guglieri, and Fulvia Quagliotti.
\newblock Path planning strategies for {UAV}s in 3d environments.
\newblock {\em Journal of Intelligent {\&} Robotic Systems}, 65(1):247--264,
  Jan 2012.

\bibitem{1206461}
{Joo Young Hwang}, {Jun Song Kim}, {Sang Seok Lim}, and {Kyu Ho Park}.
\newblock A fast path planning by path graph optimization.
\newblock {\em IEEE Transactions on Systems, Man, and Cybernetics - Part A:
  Systems and Humans}, 33(1):121--129, Jan 2003.

\bibitem{richter2016polynomial}
Charles Richter, Adam Bry, and Nicholas Roy.
\newblock Polynomial trajectory planning for aggressive quadrotor flight in
  dense indoor environments.
\newblock In {\em Robotics Research}, pages 649--666. Springer, 2016.

\bibitem{5175292}
Y.~{Kuwata}, J.~{Teo}, G.~{Fiore}, S.~{Karaman}, E.~{Frazzoli}, and J.~P.
  {How}.
\newblock Real-time motion planning with applications to autonomous urban
  driving.
\newblock {\em IEEE Transactions on Control Systems Technology},
  17(5):1105--1118, Sep. 2009.

\bibitem{5446292}
Y.~{Kuwata} and J.~P. {How}.
\newblock Cooperative distributed robust trajectory optimization using receding
  horizon milp.
\newblock {\em IEEE Transactions on Control Systems Technology},
  19(2):423--431, March 2011.

\bibitem{HoffmannQuadrotor}
Gabriel Hoffmann, Steven Waslander, and Claire Tomlin.
\newblock Quadrotor helicopter trajectory tracking control.
\newblock {\em Proceedings of the AIAA Guidance, Navigation, and Control
  Conference}, 08 2008.

\bibitem{7068316}
I.~D. {Cowling}, O.~A. {Yakimenko}, J.~F. {Whidborne}, and A.~K. {Cooke}.
\newblock A prototype of an autonomous controller for a quadrotor {UAV}.
\newblock In {\em Proceedings of the European Control Conference (ECC)}, pages
  4001--4008, 2007.

\bibitem{4602025}
Y.~{Bouktir}, M.~{Haddad}, and T.~{Chettibi}.
\newblock Trajectory planning for a quadrotor helicopter.
\newblock In {\em Proceedings of the 16th Mediterranean Conference on Control
  and Automation}, pages 1258--1263, June 2008.

\bibitem{6324664}
A.~{Chamseddine}, Y.~{Zhang}, C.~A. {Rabbath}, C.~{Join}, and D.~{Theilliol}.
\newblock Flatness-based trajectory planning/replanning for a quadrotor
  unmanned aerial vehicle.
\newblock {\em IEEE Transactions on Aerospace and Electronic Systems},
  48(4):2832--2848, October 2012.

\bibitem{6376099}
N.~{Boizot} and J.~{Gauthier}.
\newblock Motion planning for kinematic systems.
\newblock {\em IEEE Transactions on Automatic Control}, 58(6):1430--1442, June
  2013.

\bibitem{Runge}
Carl Runge.
\newblock Über empirische funktionen und die interpolation zwischen
  äquidistanten ordinaten.
\newblock {\em Zeitschrift für Mathematik und Physik}, 46:224–243, 1901.

\bibitem{Berrut}
Jean-Paul Berrut and Lloyd~N. Trefethen.
\newblock Barycentric lagrange interpolation.
\newblock {\em SIAM Rev}, 46:501--517.

\bibitem{5980409}
D.~{Mellinger} and V.~{Kumar}.
\newblock Minimum snap trajectory generation and control for quadrotors.
\newblock In {\em Proceedings of the IEEE International Conference on Robotics
  and Automation}, pages 2520--2525, May 2011.

\bibitem{1570447}
S.~{Bouabdallah} and R.~{Siegwart}.
\newblock Backstepping and sliding-mode techniques applied to an indoor micro
  quadrotor.
\newblock In {\em Proceedings of the 2005 IEEE International Conference on
  Robotics and Automation}, pages 2247--2252, April 2005.

\end{thebibliography}
\end{document}